\theoremstyle{plain}
\newtheorem{theorem}{Theorem}[section]
\newtheorem{lemma}[theorem]{Lemma}
\theoremstyle{definition}
\newtheorem{definition}[theorem]{Definition}
\newtheorem{assumption}[theorem]{Assumption}
\theoremstyle{remark}
\newtheorem{remark}[theorem]{Remark}
\newtheorem{exm}[theorem]{Example}
\newcommand{\mc}{\mathcal}
\newcommand{\m}[1]{{\bf{#1}}}
\renewcommand{\mc}[1]{\ensuremath{\mathcal{#1}}} 	
\newcommand{\mb}[1]{{\mathbb{#1}}}
\newcommand{\field}[1]{\mathbb{#1}}
\newcommand{\R}{\field{R}}
\newcommand{\norm}[1]{\left\lVert#1\right\rVert}
 \newcommand{\argmin}{\text{argmin}}
 \newcommand{\argmax}{\text{argmax}}
\newcommand{\mini}{\text{min}}
\newlength{\Oldarrayrulewidth}
\newcommand{\cmark}{\textcolor{green}{\bf{\checkmark}}} 
\newcommand{\xmark}{\textcolor{red}{\bf{\ding{55}}}}    
\definecolor{lightblue}{RGB}{220, 240, 255}
\definecolor{pastellavender}{rgb}{0.9, 0.8, 1.0} 
\definecolor{pastelmint}{rgb}{0.7, 1.0, 0.8}    
\definecolor{pastelpeach}{rgb}{1.0, 0.9, 0.8}   
\definecolor{darkred}{RGB}{150,0,0}
\definecolor{darkgreen}{RGB}{0,150,0}
\definecolor{darkblue}{RGB}{0,0,200}
\title{Stochastic Regret Guarantees for Online Zeroth- and First-Order Bilevel Optimization}
\author{%
  Parvin~Nazari \\
  Amirkabir University of Technology\\
 \texttt{p.nazari17@gmail.com} \\
   \And
   Bojian Hou \\
   University of Pennsylvania \\
  \texttt{bojianh@upenn.edu} \\
   \AND
   Davoud Ataee Tarzanagh \thanks{Corresponding author}
   \\
Samsung SDS Research America \\
   \texttt{d.tarzanagh@samsung.com} \\
   \And
   Li Shen \\
   University of Pennsylvania \\
   \texttt{li.shen@pennmedicine.upenn.edu}
   \And
   George Michailidis \\
  University of California, Los Angeles \\
   \texttt{gmichail@ucla.edu} \\
}
\begin{document}

\maketitle

\begin{abstract}
Online bilevel optimization (OBO) is a powerful framework for machine learning problems where both outer and inner objectives evolve over time, requiring dynamic updates. Current OBO approaches rely on deterministic \textit{window-smoothed} regret minimization, which may not accurately reflect system performance when functions change rapidly. In this work, we introduce a novel search direction and show that both first- and zeroth-order (ZO) stochastic OBO algorithms leveraging this direction achieve sublinear {stochastic bilevel regret without window smoothing}. Beyond these guarantees, our framework enhances efficiency by: (i) reducing oracle dependence in hypergradient estimation, (ii) updating inner and outer variables alongside the linear system solution, and (iii) employing ZO-based estimation of Hessians, Jacobians, and gradients. Experiments on online parametric loss tuning and black-box adversarial attacks validate our approach.
\end{abstract}
\section{Introduction}\label{sec:intro}
Bilevel optimization (BO) minimizes an outer objective dependent on an inner problem's solution. Originating in game theory~\cite{stackelberg1952theory} and formalized in mathematical optimization~\cite{bracken1973mathematical}, BO finds applications in operations research, engineering, economics~\cite{dempe2002foundations}, and image processing~\cite{crockett2022bilevel}. Recently, BO has gained traction in machine learning, including hyperparameter optimization~\cite{franceschi2018bilevel}, meta-learning~\cite{finn2017model},  reinforcement learning~\cite{stadie2020learning}, and neural architecture search~\cite{liu2018darts}.  

In the \textit{offline} setting, BO solves the following problem:
\begin{equation}\label{eqn:bo}
\m{x}^* \in \argmin_{\m{x} \in \mathcal{X}} f(\m{x}, \m{y}^*(\m{x})) 
\quad \textnormal{subj.~to} \quad 
\m{y}^*(\m{x}) = \argmin_{\m{y} \in \mathbb{R}^{d_2}} g(\m{x}, \m{y}),
\tag{BO}
\end{equation}
where \(f\) and \(g\) are the outer and inner objectives, with \(\mathbf{x}\) and \(\mathbf{y}\) as their respective variables.

OBO \cite{tarzanagh2024online} addresses dynamic scenarios where objectives evolve over time, requiring the agent to update the outer decision in response to the optimal inner decision. Similar to online single-level optimization (OSO) \cite{zinkevich2003online}, OBO involves iterative decision-making without prior knowledge of outcomes~\cite{tarzanagh2024online,lin2024non,bohne2024online}.  Let \( T \) be the total number of rounds. Define \( \mathbf{x}_t \in \mathcal{X} \subset \mathbb{R}^{d_1} \) as the decision variable and \( f_t : \mathcal{X} \times \mathbb{R}^{d_2} \rightarrow \mathbb{R} \) as the outer function, where $\mathcal{X}$ is a known convex set. Similarly, define \( \mathbf{y}_t \in \mathbb{R}^{d_2} \) and \( g_t : \mathcal{X} \times \mathbb{R}^{d_2} \rightarrow \mathbb{R} \) for the inner problem, where  
$\mathbf{y}_t^*(\mathbf{x}) = \argmin_{\m{y} \in \mathbb{R}^{d_2}} g_t(\mathbf{x}, \mathbf{y}).$
OBO can be seen as a \textit{single-player} problem, where the player selects \( \mathbf{x}_t \) without knowing \( \mathbf{y}_t^*(\mathbf{x}) \), using \( \mathbf{y}_t \) as an estimate based on \( g_t \). Alternatively, it can be framed as a \textit{two-player} game~\cite{stackelberg1952theory}, where the leader (\( \mathbf{x}_t \)) competes with the follower (\( \mathbf{y}_t \)), who selects \( \mathbf{y}_t^*(\mathbf{x}) \) based on limited knowledge of \( g_t \).  This framework includes online and adversarial variants of \eqref{eqn:bo}, such as online actor-critic algorithms~\cite{zhou2020online}, online meta-learning~\cite{finn2019online}, and online hyperparameter optimization~\cite{lin2024non}. The inner and outer functions may be time-varying, adversarial, unavailable \textit{a priori}, and require \textit{nonstationary} optimization.  

\renewcommand{\arraystretch}{1.5} 
\begin{table*}[t]
\centering
\scalebox{0.78}{
\begin{tabular}{|c|c|c|c|c|c|c|}
\hline
\textbf{OBO } & \textbf{Window Size} & \textbf{System } & \textbf{Stochastic} & \textbf{Const.} & \textbf{Only Func.} & \textbf{Local} \\
\textbf{Method} & \textbf{in Regret} $(w)$ & \textbf{Iters.} &  \textbf{Regret}& \textbf{Regret Min. } & \textbf{Feedback} & \textbf{Regret Bound} \\
\hline
\hline
\small{OAGD} \cite{tarzanagh2024online} & $o(T)$ & N.A. (Exact) & \xmark &\xmark  & \xmark  & $ \frac{T}{w} + H_{1,T} + H_{2,T}$ \\ \hline
\small{SOBOW} \cite{lin2024non} & $o(T)$ & $\mc{O}(\kappa_g \log \kappa_g)$ & \xmark  & \xmark  & \xmark  & $\frac{T}{w}  + V_{T} + H_{2,T}$ \\ \hline
\small{SOBBO} \cite{bohne2024online}& $o(T)$ & $\mc{O}(\kappa_g \log \kappa_g)$ & \cmark  & \cmark  & \xmark  & $\frac{T}{w} \sigma^2  + V_{T} + H_{2,T}$ \\ \hline
\cellcolor{lightblue} \small{SOGD}  & \cellcolor{lightblue} $1$ & \cellcolor{lightblue} $1$ & \cellcolor{lightblue} \cmark & \cellcolor{lightblue}\cmark & \cellcolor{lightblue} \xmark  &  \cellcolor{lightblue}$ T^{\frac{1}{3}}(\sigma^2 + \Delta_{T} ) +T^{\frac{2}{3}} \Psi_{T} $
\\
\hline
\cellcolor{lightblue}\small{ZO-SOGD} &\cellcolor{lightblue} $1$ & \cellcolor{lightblue} $1$ & \cellcolor{lightblue} \cmark &\cellcolor{lightblue}\cmark &\cellcolor{lightblue} \cmark  &\cellcolor{lightblue} $(d_1+d_2)^{\frac{3}{4}}T^{\frac{1}{3}}
(\hat{\sigma}^2+\hat{\Delta}_{T})$
\\ 
\cellcolor{lightblue}  &\cellcolor{lightblue} & \cellcolor{lightblue}  & \cellcolor{lightblue}  &\cellcolor{lightblue} &\cellcolor{lightblue}  & \cellcolor{lightblue} 
$+(d_1+d_2)^{\frac{3}{2}}T^{\frac{2}{3}} \hat{\Psi}_{T}$ \\ 
\hline
\end{tabular}
}
\vspace{-.1cm}
\caption{Comparison of OBO algorithms based on regret window $w$, solver iterations, stochastic/constrained regrets, feedback type, and local bounds. $\kappa_g$ denotes the condition number of the inner objective $g_t$. $V_T$, $H_{p,T}$, $\Delta_T$, $\Psi_T$, $\hat{\Delta}_T$, $\hat{\Psi}_T$, $\sigma$, and $\hat{\sigma}$ are defined in \eqref{reg:hpt}, \eqref{del12}, \eqref{del22}, \eqref{si1}, and \eqref{reg:hpt3}, respectively.}
\label{tab:compare:obo}
\end{table*}

\noindent\textbf{Our Contributions.} This paper addresses stochastic OBO, introducing novel first- and zeroth-order methods to minimize stochastic bilevel regret. Key contributions are summarized below.

 $\bullet$ \textbf{Stochastic regret minimization without window-smoothing.}
Existing OBO methods~\cite{tarzanagh2024online, lin2024non, huang2023online, bohne2024online} rely on deterministic \textit{window-smoothed} regret minimization, which may not accurately reflect system performance when functions change rapidly. We address these limitations by introducing a novel search direction (Section~\ref{sec:obo:newsearch}) and proving that both first-order and ZO methods achieve sublinear \textit{stochastic bilevel regret without window-smoothing ($w=1$)}; see Theorems~\ref{thm:dynamic:nonc2:cons} and \ref{thm:nonc2:cons:zeroth} and Table~\ref{tab:compare:obo}.

 $\bullet$ \textbf{OBO with function value oracle feedback.}  
In large-scale and black-box settings~\cite{chen2017zoo, nesterov2005smooth}, first- and second-order information is often unavailable or costly. Constructing accurate (hyper)-gradient estimators using only function value oracles is particularly challenging due to BO’s nested structure. Existing methods rely on gradient, Hessian, and Jacobian oracles, limiting scalability~\cite{franceschi2017forward, ghadimi2018approximation}. We propose Algorithm~\ref{alg:obgd:zeroth}, which estimates Hessians, Jacobians, and gradients using function value oracles, achieving sublinear local regret (Theorem~\ref{thm:nonc2:cons:zeroth}).  

\noindent $\bullet$ \textbf{OBO with one subproblem solver iteration.}  
A major challenge in BO is solving implicit systems to approximate the hypergradient~\cite{ji2021bilevel, chen2021closing}. While efficient offline BO methods exist~\cite{ji2021bilevel, dagreou2022framework}, extending them to OBO is difficult due to time-varying objectives. SOBOW~\cite{lin2024non} partially addresses this using a conjugate gradient (CG) algorithm with increasing iterations (Table~\ref{tab:compare:obo}). We improve upon SOBOW by introducing Algorithms~\ref{alg:first:order} and \ref{alg:obgd:zeroth}, which require only a \textit{single} subproblem solver iteration.

\section{Stochastic OBO with Access to First- and Inner Second-Order Oracles}\label{sec:obo:newsearch}

\noindent\textbf{Notation.} $\mb{R}^d$ is the $d$-dimensional real space; $\mb{R}^d_+$ and $\mb{R}^d_{++}$ denote its nonnegative and positive orthants. Bold lowercase letters (e.g., $\m{x}, \m{y}$) represent vectors, $\langle \m{x}, \m{y} \rangle$ is the inner product, and $\norm{\cdot}$ is the Euclidean norm. $\nabla_{\m{x}}$ denotes the gradient, and $\nabla^2_{\m{x}\m{y}} = \nabla_{\m{x}}\nabla_{\m{y}}$. A function is $L$-smooth if its gradient is $L$-Lipschitz. The projection onto a convex set $\mc{X}$ is $\Pi_{\mc{X}} (\m{z}) = \argmin_{\m{x} \in \mc{X}} \frac{1}{2}\|\m{x}-\m{z}\|^2$. We use $[T]$ for $\{1,\ldots,T\}$, $\mb{E}[\cdot]$ for expectation, and $\mc{O}(\cdot)$ to hide problem-independent constants.

\textbf{Stochastic OBO Setting}. Let \(T\) be the total rounds \cite{tarzanagh2024online}. Define \(\mathbf{x}_t \in \mathcal{X} \subset \mathbb{R}^{d_1}\) as the decision variable and \(f_t : \mathcal{X} \times \mathbb{R}^{d_2}\) as the outer objective. The inner decision variable and objective are \(\mathbf{y}_t \in \mathbb{R}^{d_2}\) and \(g_t : \mathcal{X} \times \mathbb{R}^{d_2}\), where the optimal inner decision is:
\begin{equation}\label{eqn:ystar}
\m{y}^*_t(\m{x}) \in \underset{\m{y} \in  \mathbb{R}^{d_2}}{\argmin} \left\{ g_t(\m{x}, \m{y}) := \underset{\zeta_t\sim\mathcal{D}_{g,t}}{\mathbb{E}} \left[ g_t(\m{x}, \m{y}; \zeta_t) \right] \right\}.
\end{equation}
Further, we have 
$$ f_t(\m{x}, \mathbf{y}_t^*(\mathbf{x})) := \mathbb{E}_{\xi_t\sim\mathcal{D}_{f,t}} \left[ f_t(\m{x}, \mathbf{y}_t^*(\mathbf{x}); \xi_t) \right].$$  
Here, \( (\mathcal{D}_{f,t}, \mathcal{D}_{g,t}) \) denote data distributions at time $t$. Our setting is stochastic, with only noisy evaluations of functions, gradients, and Hessians. Unlike OSO \cite{zinkevich2003online}, where true losses are revealed, in OBO the outer function \( f_t(\mathbf{x}, \mathbf{y}_t^*(\mathbf{x})) \) is inaccessible for updating \( \mathbf{x}_t \) and is generally non-convex in \( \mathbf{x} \), making standard regret notions from online convex optimization \cite{hazan2016introduction} unsuitable.

Given a sequence  $\{\alpha_t \in \mb{R}_{++}\}_{t=1}^T$, we define the following notion of \textit{bilevel local regret}:
\begin{subequations}\label{reg:non:bilevel:dynamic:cons}
    \begin{align}
 \textnormal{BL-Reg}_{T} &:=
\sum_{t=1}^T \mb{E}\left[ \big\| \mc{P}_{\mc{X},\alpha_t}\left(\m{x}_t;\nabla f_t(\m{x}_{t}, \m{y}_{t}^*(\m{x}_{t}))\right)\big\|^2\right], \quad \textnormal{with}\qquad\\
\mc{P}_{\mc{X},\alpha_t}\left(\m{x}_t;\nabla f_{t} (\m{x}_{t}, \m{y}_{t}^*(\m{x}_{t}))\right)&=\frac{1}{\alpha_t}\Big(\m{x}_t-\Pi_{\mc{X}} \big[
   \m{x}_t -  \alpha_t \nabla f_{t} (\m{x}_{t}, \m{y}_{t}^*(\m{x}_{t})) \big]\Big).
   \end{align}
\end{subequations}
The local regret \eqref{reg:non:bilevel:dynamic:cons} compares the leader's decision \(\m{x}_{t}\) to the stationary points \(\m{x}_t^*\) satisfying  
$
\mc{P}_{\mc{X},\alpha_t} \left(\m{x}_t^*; \nabla f_{t} (\m{x}_{t}^*, \m{y}_{t}^*(\m{x}_{t}^*)) \right) = 0.
$  
This can also be viewed as dynamic local regret, as the baseline corresponds to a stationary point of the leader's objective \( f_t \).

\begin{wrapfigure}{r}{0.45\textwidth}
  \centering
  \vspace{-0.5cm}
  \includegraphics[width=0.44\textwidth]{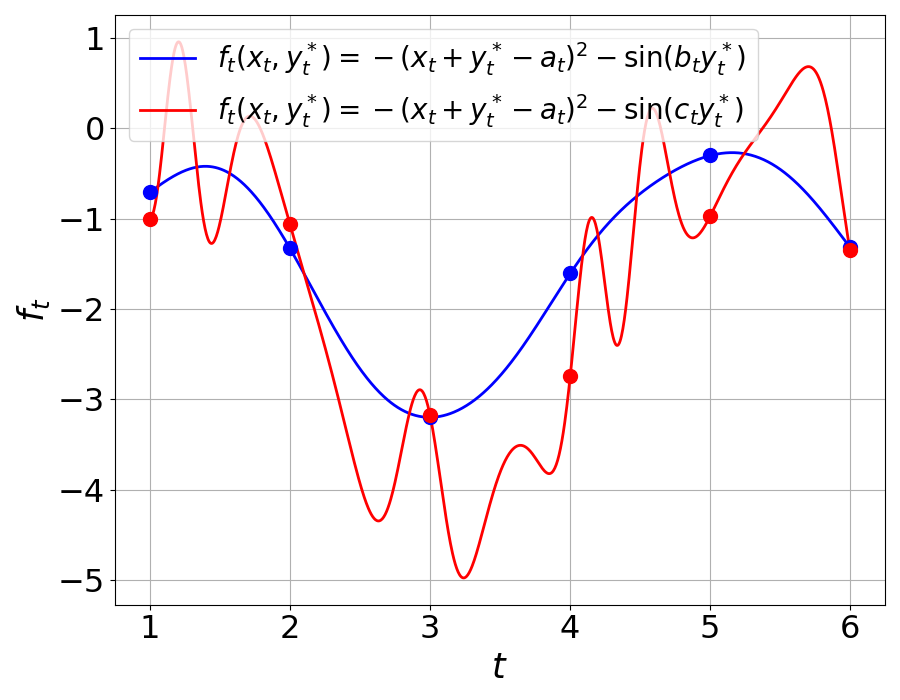}
  \vspace{-0.3cm}
  \caption{{\color{blue}Smoothly} and {\color{red}rapidly} changing $f_t$ in OBO with $g_t(x_t, y_t) = (y_t - \cos(x_t))^2$, $a_t = 1 + 0.5 \sin(t)$, $b_t = 1 + \sin(0.5t)$, and $c_t = 10 b_t$.}
  \label{fig:rapvssmooth}
  \vspace{-0.2cm}
\end{wrapfigure}

Previous work on (nonconvex) OBO examined unconstrained local regret using window-smoothed objectives:
$
F_{t,w}(\m{x}, \m{y})= ({1}/{w}) \sum_{i=0}^{w-1} f_{t-i}(\m{x},\m{y}).
$
%
Works such as \cite{tarzanagh2024online,lin2024non} showed that choosing \( w=o(T) \) ensures sublinear regret under slow variations in \(\{F_{t,w}\}_{t=1}^T\), whereas rapid changes can lead to significant deviations. However, smoothing may misrepresent the regret behavior (see Figure~\ref{fig:rapvssmooth}). This paper introduces a new projection-based local regret notion \eqref{reg:non:bilevel:dynamic:cons} without smoothing, and establishes sublinear regret for constrained OBO.

\noindent \textbf{Online Gradient Descent (OGD)}.
One of the most widely used algorithms for online (single-level) optimization is OGD~\cite{zinkevich2003online}. The procedure for OGD is as follows: For each \( t \in [T] \), the algorithm selects \( \m{x}_{t} \in \mc{X} \), observes the function \( f_t : \mc{X} \subset \mb{R}^d \rightarrow \mb{R}\), and updates according to
\begin{equation}\label{eqn:ogd}
\m{x}_{t+1} = \Pi_{\mc{X}}\big(\m{x}_t - \alpha_t \nabla f_t(\m{x}_t)\big), \qquad \alpha_t>0. \tag{OGD}
\end{equation}
In the following, we adapt \ref{eqn:ogd} to OBO and introduce a novel framework that requires limited feedback and can utilize ZO updates within a single-loop structure.


To adapt \ref{eqn:ogd} to OBO, \cite{tarzanagh2024online, lin2024non, bohne2024online} developed a variant alternating between inner and outer \ref{eqn:ogd}, achieving sublinear bilevel regret bounds. We introduce a new search direction that enables sublinear bilevel regret without window smoothing. To compute the hypergradient $\nabla f_t(\m{x}, \m{y}^*_t(\m{x}))$ where $\m{y}^*_t(\m{x})$ is defined in \eqref{eqn:ystar}, 
since $\nabla_\m{y} g_t(\m{x}, \m{y}^*_t(\m{x})) =0$, using the implicit function theorem, yields
\begin{align}\label{eqn:hypergrad}
\nabla f_t(\m{x}, \m{y}_t^*(\m{x})) &=  \nabla_\m{x} f_t \left(\m{x}, \m{y}^*_t (\m{x}) \right) +\nabla^2_{\m{x}\m{y}} g_t \left(\m{x},\m{y}^*_t (\m{x}) \right)\m{v}^*_t(\m{x}),   
\end{align}
where $\m{v}^*_t(\m{x}) \in \mathbb{R}^{d_2}$ is the solution to the following linear system:
\begin{equation}\label{eq:vss}
    \nabla^2_{\m{y}}g_t\left(\m{x},\m{y}^*_t(\m{x}) \right)\m{v}^*_t(\m{x}) + \nabla_\m{y} f_t\left(\m{x},\m{y}^*_t(\m{x}) \right)=0. 
\end{equation} 

As the exact $\m{y}^*_t(\m{x})$ is not available, we estimate the hypergradient of $f_t$ at $(\m{x},\m{y})$ and introduce
an auxiliary variable $\m{v}:=\m{v}(\m{x},\m{y})$ to effectively decouple
the nonlinear structure in $\nabla f_t(\m{x}, \m{y}_t^*(\m{x}))$, i.e.
\begin{subequations}
\begin{align}\label{eq:bar:gradient}
 \tilde{\nabla}f_t(\m{x},\m{y}) &:= \nabla_\m{x} f_t(\m{x},\m{y}) +\nabla^2_{\m{x}\m{y}} g_{t} \left(\m{x}, \m{y}\right) \m{v}_t,
 \end{align}
where $\m{v}_t$ serves as an inexact solution to the
linear system
 \begin{align}\label{vtt}
&\nabla^2_{\m{y}}g_{t}\left(\m{x}, \m{y}\right)\m{v}_t +\nabla_\m{y} f_t(\m{x},\m{y})=0.
 \end{align}
\end{subequations}
An accurate solution of \eqref{vtt} is crucial for tight regret bounds. \cite{tarzanagh2024online} assumes an exact solution, which is restrictive in large-scale settings. To address this, \cite{lin2024non} proposed an efficient OBO algorithm with window averaging, using CG methods to solve \eqref{vtt}, which is equivalent to:
\begin{align}\label{eqn:opti:vtt}
\vspace{-.2cm}
\mini_{\m{v}_t \in \mb{R}^{d_2}}~\frac{1}{2}\left\|\nabla^2_{\m{y}}g_t\left(\m{x}, \m{y}\right)\m{v}_t+\nabla_\m{y} f_t(\m{x},\m{y})\right\|^2.
\vspace{-.2cm}
\end{align}
Next, we introduce a novel search direction that enables both first- and ZO stochastic OBO algorithms to achieve sublinear bilevel regret without smoothing. We first state the following lemma:
\begin{lemma}\label{lem:equ:grad}
Let $w = t$, $W=1/\eta$ and $\nu=1-\eta$ for $\eta\in (0,1)$ in the window-smoothed gradient 
${\nabla} L_{t,\nu}(\m{x}_t, \m{y}_t;\mathcal{B}_t) = \frac{1}{W} \sum_{i=0}^{w-1} \nu^i {\nabla} l_{t-i}(\m{x}_{t-i}, \m{y}_{t-i};\mathcal{B}_{t-i})$, where $\mathcal{B}_t := \{\xi_{t,1}, \ldots, \xi_{t,b}\}$ is drawn i.i.d. from $\mathcal{D}_{l,t}$. Then, ${\nabla} L_{t,\nu}(\m{x}_t, \m{y}_t;\mathcal{B}_t) = \sum_{j=1}^{t} \eta (1-\eta)^{t-j} {\nabla} l_j(\m{x}_j, \m{y}_j;\mathcal{B}_j)$, and we have ${\nabla} L_{t,\nu}(\m{x}_t, \m{y}_t;\mathcal{B}_t) = {\m{d}}_t$ with ${\m{d}}_t = \eta {\nabla} l_{t}(\m{x}_t, \m{y}_t;\mathcal{B}_t) + (1-\eta){\m{d}}_{t-1}$, and ${\m{d}}_1 = \frac{1}{W} {\nabla} l_1(\m{x}_1, \m{y}_1;\mathcal{B}_1)$ for all $t \geq 2$.
\end{lemma}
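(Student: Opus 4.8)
The plan is to treat this as a purely algebraic identity establishing that an exponentially weighted moving average admits both a closed-form and a one-step recursive representation; the stochastic labels $\mathcal{B}_t$ play no role and simply ride along, so I would fix the realized gradients ${\nabla} f_j(\m{x}_j,\m{y}_j;\mathcal{B}_j)$ and argue deterministically. First I would substitute the prescribed values $w=t$, $W=1/\eta$, and $\nu=1-\eta$ directly into the definition ${\nabla} F_{t,\nu}(\m{x}_t,\m{y}_t;\mathcal{B}_t)=\frac{1}{W}\sum_{i=0}^{w-1}\nu^i{\nabla} f_{t-i}(\m{x}_{t-i},\m{y}_{t-i};\mathcal{B}_{t-i})$, turning the prefactor into $\eta$ and the weights into $(1-\eta)^i$. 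Then the first claimed identity follows from the change of summation index $j=t-i$ (so $i=t-j$, with $i=0$ giving $j=t$ and $i=t-1$ giving $j=1$), which converts $\eta\sum_{i=0}^{t-1}(1-\eta)^i{\nabla} f_{t-i}$ into $\sum_{j=1}^{t}\eta(1-\eta)^{t-j}{\nabla} f_j$. This is a one-line reindexing and requires no approximation.

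For the second claim, I would prove that the recursion ${\m{d}}^{\m{x}}_t=\eta{\nabla} f_t(\m{x}_t,\m{y}_t;\mathcal{B}_t)+(1-\eta){\m{d}}^{\m{x}}_{t-1}$ reproduces the closed form $\sum_{j=1}^{t}\eta(1-\eta)^{t-j}{\nabla} f_j$ by induction on $t$. For the base case $t=1$, the closed form reduces to $\eta(1-\eta)^0{\nabla} f_1=\eta{\nabla} f_1$, which matches the stated initialization ${\m{d}}^{\m{x}}_1=\frac{1}{W}{\nabla} f_1=\eta{\nabla} f_1$ since $W=1/\eta$. For the inductive step, assuming ${\m{d}}^{\m{x}}_{t-1}=\sum_{j=1}^{t-1}\eta(1-\eta)^{(t-1)-j}{\nabla} f_j$, I would plug this into the recursion: the factor $(1-\eta)$ raises every exponent by one, yielding $\sum_{j=1}^{t-1}\eta(1-\eta)^{t-j}{\nabla} f_j$, and the new term $\eta{\nabla} f_t=\eta(1-\eta)^{t-t}{\nabla} f_t$ supplies exactly the missing $j=t$ summand, recovering $\sum_{j=1}^{t}\eta(1-\eta)^{t-j}{\nabla} f_j$. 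Combining with the first part gives ${\nabla} F_{t,\nu}(\m{x}_t,\m{y}_t;\mathcal{B}_t)={\m{d}}^{\m{x}}_t$.

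There is no genuine obstacle here; the only thing demanding care is exponent bookkeeping and ensuring the base case is consistent, namely that the normalization $\frac{1}{W}=\eta$ makes ${\m{d}}^{\m{x}}_1$ agree with the $t=1$ closed form rather than being off by a factor of $(1-\eta)$ or $\eta$. The reason the lemma is worth stating explicitly is practical rather than technical: it shows that the window-smoothed gradient with geometrically decaying weights—whose naive evaluation would require storing and reprocessing the entire history $\{{\nabla} f_j\}_{j=1}^{t}$—can instead be maintained by a single constant-memory update, which is precisely what enables the single-loop, $w=1$ search direction used in the subsequent algorithms and regret analysis.
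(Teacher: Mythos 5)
Your proof is correct, and it takes essentially the same route as the paper: both arguments come down to the reindexing $j=t-i$ of the geometric sum and the observation that the weights telescope across consecutive time steps. The only structural difference is one of direction: you substitute $w=t$, $W=1/\eta$ at the outset and then verify by induction that the recursion with initialization ${\m{d}}^{\m{x}}_1=\eta\,{\nabla} f_1(\m{x}_1,\m{y}_1;\mathcal{B}_1)$ reproduces the closed form, whereas the paper keeps $w$ and $W$ general, derives the exact relation ${\m{d}}^{\m{x}}_t=\frac{1}{W}{\nabla} f_t(\m{x}_t,\m{y}_t;\mathcal{B}_t)+(1-\eta){\m{d}}^{\m{x}}_{t-1}-\frac{(1-\eta)^w}{W}{\nabla} f_{t-w}(\m{x}_{t-w},\m{y}_{t-w};\mathcal{B}_{t-w})$ under the convention $f_i\equiv 0$ for $i\le 0$, and only then specializes to $w=t$, $W=1/\eta$ so the boundary term vanishes. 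The paper's version makes explicit why the choice $w=t$ is exactly what kills the tail correction and yields a clean momentum recursion; your induction is a somewhat more self-contained verification of the stated identity, and both are complete.
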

Proof is given in Appendix~\ref{sec:app:lem:moment}.
As shown in Lemma~\ref{lem:equ:grad}, for a specific choice of \(w\) and \(W\), the time-smoothed gradient forms a recursive momentum-type search direction. However, achieving sublinear regret in stochastic OBO requires large-window smoothing (\(w = o(T)\)) \cite{tarzanagh2024online,lin2024non,bohne2024online}. To address this, we propose the following search direction:
\begin{align}\label{eqn:d:terms}
& \m{d}_t = \eta {\nabla} l_{t}(\m{x}_t, \m{y}_t;\mathcal{B}_t) + (1-\eta)\m{d}_{t-1}+ (1-\eta)({\nabla} l_{t}(\m{x}_t, \m{y}_t;\mathcal{B}_t) - {\nabla} l_{t}(\m{x}_{t-1}, \m{y}_{t-1};\mathcal{B}_t)). 
\end{align}
This direction is used for updating $\m{y}$, with similar updates for $\m{x}$ and $\m{v}$, as discussed below and detailed in Algorithm~\ref{alg:first:order}.  The quadratic formulation of \eqref{vtt} in \eqref{eqn:opti:vtt} motivates single-loop methods such as \cite{dagreou2022framework}. Building on this, we propose Simultaneous Online Gradient Descent (SOGD) for constrained OBO, presented in Algorithm~\ref{alg:first:order}. At each step, SOGD jointly updates the follower variable $\m{y}_t$, auxiliary variable $\m{v}_t$, and leader variable $\m{x}_t$ using batches $\mathcal{B}_t=\{\xi_{t,1},\ldots,\xi_{t,b}\}$ and $\bar{\mathcal{B}}_t:=\{\zeta_{t,1},\ldots,\zeta_{t,\bar{b}}\}$ sampled i.i.d. from $\mathcal{D}_{f,t}$ and $\mathcal{D}_{g,t}$. Step~\ref{item:s1} only requires computing Hessian-vector products, avoiding explicit computation of $\nabla^2_{\m{y}}g_t$ or $\nabla^2_{\m{x}\m{y}}g_t$. Step~\ref{item:s22} uses the projection:
\begin{align}\label{eqn:zc:set}
\nonumber
{\Pi}_{\mc{Z}_p}(\mathbf{v})&= \argmin_{\m{z} \in  \mc{Z}_p} \frac{1}{2}\|\m{v}-\m{z}\|^2= \min\left\{1,\frac{p}{\|\mathbf{v}\|}\right\}\mathbf{v}, \quad \textnormal{where} \quad\\
\mc{Z}_p &:= \left\{ \mathbf{v} \in \mathbb{R}^{d_2} \mid \|\mathbf{v}\| \leq p\right\}.
\end{align}
\\
Unlike OAGD \cite{tarzanagh2024online} with alternating loops, and SOBOW \cite{lin2024non} using CG, SOGD performs a single OGD step for all variables.
\begin{algorithm}[t]
\caption{SOGD}
\label{alg:first:order}
\begin{algorithmic}[1]
\REQUIRE{$(\m{x}_{1},\m{y}_1,\m{v}_1) \in \mathcal{X} \times \mb{R}^{d_2} \times \mc{Z}_p$; 
 $p\in \R_{++}$;$T \in \mb{N}$;
stepsizes $\{(\alpha_t, \beta_t,\delta_t) \in \mb{R}_{++}^3\}_{t=1}^T$;
parameters $\{(\gamma_t,\lambda_t,\eta_t)\}_{t=1}^T\in (0,1)$;
$\m{z}_t:=(\m{x}_{t},\m{y}_{t}).$}
\\
\hspace{-14pt}\textbf{For} $t=1$ \textbf{to} $T$ \textbf{do}:
\begin{enumerate}[label={\textnormal{{\textbf{S\arabic*.}}}}, wide, labelindent=-5pt,itemsep=0pt]
\item \label{item:s1} 
Draw samples $\mathcal{B}_t$ and $\bar{\mathcal{B}}_t$ with batch sizes $b$ and $\bar{b}$. Get search directions $\m{d}^{\m{y}}_t, \m{d}^{\m{v}}_t$, and $\m{d}^{\m{x}}_t$:
\begin{subequations}\label{eqn:system}
\begin{align}
\vspace{-6pt}
\label{eqn:system:y} &\m{d}^{\m{y}\m{y}}_t(\m{z}_t;\bar{\mathcal{B}}_t) = \nabla_{\m{y}}g_{t}(\m{z}_t;\bar{\mathcal{B}}_t), 
\\
&\m{d}^{\m{y}}_t = \m{d}^{\m{y}\m{y}}_t(\m{z}_t;\bar{\mathcal{B}}_t)+(1-\gamma_t)(\m{d}^{\m{y}}_{t-1}-\m{d}^{\m{y}\m{y}}_t(\m{z}_{t-1};\bar{\mathcal{B}}_t)), \nonumber  \\
 &\m{d}^{\m{vv}}_t\left(\m{z}_t;\mathcal{B}_t\right) = \nabla_{\m{y}} f_t(\m{z}_t;\mathcal{B}_t) + \nabla_{\m{y}}^2 g_t\left(\m{z}_t ;\bar{\mathcal{B}}_t\right) \m{v}_t,  \label{eqn:system:v}\\
&\m{d}^{\m{v}}_t = \m{d}^{\m{v}\m{v}}_t\left(\m{z}_t;\mathcal{B}_t\right)+(1-\lambda_t)(\m{d}^{\m{v}}_{t-1}-\m{d}^{\m{v}\m{v}}_t(\m{z}_{t-1};\mathcal{B}_t)), \nonumber   \\
&\m{d}^{\m{xx}}_t\left(\m{z}_t;\mathcal{B}_t\right) = \nabla_{\m{x}} f_t(\m{z}_t;\mathcal{B}_t) + \nabla_{\m{x}\m{y}}^2 g_t\left(\m{z}_t ;\bar{\mathcal{B}}_t\right) \m{v}_t, \label{eqn:system:x}\\
&\m{d}^{\m{x}}_t = \m{d}^{\m{x}\m{x}}_t\left(\m{z}_t;\mathcal{B}_t\right)+(1-\eta_t)(\m{d}^{\m{x}}_{t-1}-\m{d}^{\m{x}\m{x}}_t(\m{z}_{t-1};\mathcal{B}_t)). \nonumber 
\end{align}
\end{subequations}
\item \label{item:s2} Update inner, system, and outer solutions:
\begin{align*}
\vspace{-10pt}
\m{y}_{t+1} &= \m{y}_{t} - \beta_t \m{d}_{t}^{\m{y}},~~
\m{v}_{t+1} = {\Pi}_{\mc{Z}_p}\big[\m{v}_{t} - \delta_t \m{d}_{t}^{\m{v}}\big], \quad \m{x}_{t+1} = \Pi_{\mathcal{X}} [\m{x}_t - \alpha_t \m{d}_t^{\m{x}}].
\vspace{-15pt}
\end{align*}
\vspace{-15pt}
\end{enumerate}
\end{algorithmic}
\end{algorithm}

\begin{assumption}\label{assu:g}
$g_t(\m{x}, \m{y})$ is  twice continuously differentiable and $\mu_g$-strongly convex in $\m{y}$ for all $\m{x} \in \mc{X}, t \in [T]$.  
\end{assumption}
\begin{assumption}\label{assu:f}
Let $\m{z} = [\m{x}; \m{y}] $ and $\m{z}' = [\m{x}'; \m{y}']$, where $\m{x},\m{x}' \in \mc{X}$ and $\m{y}, \m{y}' \in \mb{R}^{d_2}$. For any $\m{z}$, $\m{z}'$, and $t \in [T]$:
\vspace{-.2cm}
\begin{small}
\begin{enumerate}[label={\textnormal{B\arabic*.}}, wide, labelindent=0pt, itemsep=0pt, parsep=0pt]
\item\label{assu:f:a1}  $ \exists~\ell_{f,0}\in\mb{R}_{+}$ s.t. $\|f_t(\m{z};\xi_t) - f_t(\m{z}';\xi_t)\| \leq \ell_{f,0} \|\m{z} - \m{z}'\| $;
\item\label{assu:f:a2}  $ \exists~\ell_{f,1}\in\mb{R}_{+}$ s.t. $ \|\nabla f_t(\m{z};\xi_t) - \nabla f_t(\m{z}';\xi_t)\| \leq \ell_{f,1} \|\m{z} - \m{z}'\|$;
\item\label{assu:f:a3}  $ \exists~\ell_{g,1}\in\mb{R}_{+}$ s.t. $\|\nabla g_t(\m{z};\zeta_t) - \nabla g_t(\m{z}';\zeta_t)\| \leq \ell_{g,1} \|\m{z} - \m{z}'\|$;
\item\label{assu:f:a4} $ \exists~\ell_{g,2}\in\mb{R}_{+}$ s.t. $ \|\nabla^2 g_t(\m{z};\zeta_t) - \nabla^2 g_t(\m{z}';\zeta_t)\| \leq \ell_{g,2} \|\m{z} - \m{z}'\|$.
\end{enumerate}    
\end{small}
\end{assumption}
\begin{assumption}\label{assu:f:b}
For any $t \in [T]$, $|f_t(\m{x}, \m{y}^*_t(\m{x}) )| \leq M$ for some $M\in\mb{R}_{++}$ and any $\m{x}\in \mathcal{X}$.
\end{assumption}
\begin{assumption}\label{assu:g21}
There exist constants $\sigma_{g_{\m{y}}},\sigma_{g_{\m{y}\m{y}}},\sigma_{g_{\m{x}\m{y}}},\sigma_{f_{\m{y}}},\sigma_{f_{\m{x}}}$ such that, for all $\m{z}=[\m{x},\m{y}]$:
\vspace{-.2cm}
\begin{multicols}{2}
\begin{enumerate}[label={\textnormal{C\arabic*.}}, wide, labelindent=0pt, itemsep=0pt, parsep=0pt]
  \item \label{b:1} $\mb{E}\|{\nabla}_{\m{y}} g_t(\m{z};\zeta_t)-\nabla_{\m{y}} g_{t}(\m{z}) \|^2\leq \sigma_{g_{\m{y}}}^2$;
  \item \label{b:4} $\mb{E}\|{\nabla}_{\m{y}}^2 g_t(\m{z};\zeta_t)-\nabla_{\m{y}}^2 g_{t}(\m{z}) \|^2\leq \sigma_{g_{\m{y}\m{y}}}^2$;
  \item \label{b:5} $\mb{E}\|{\nabla}_{\m{x}\m{y}}^2 g_t(\m{z};\zeta_t)-\nabla_{\m{x}\m{y}}^2 g_{t}(\m{z}) \|^2\leq \sigma_{g_{\m{x}\m{y}}}^2$;
  \item \label{b:6} $\mb{E}\|{\nabla}_{\m{y}} f_t(\m{z};\xi_t)-\nabla_{\m{y}} f_{t}(\m{z}) \|^2\leq \sigma_{f_{\m{y}}}^2$;
  \item \label{b:7} $\mb{E}\|{\nabla}_{\m{x}} f_t(\m{z};\xi_t)-\nabla_{\m{x}} f_{t}(\m{z}) \|^2\leq \sigma_{f_{\m{x}}}^2$.
\end{enumerate}
\end{multicols}
\end{assumption}
Throughout this paper, we define
\begin{equation}\label{si1}
    \sigma^2:=\sigma_{g_{\m{y}}}^2+\sigma_{g_{\m{y}\m{y}}}^2
+\sigma_{f_{\m{y}}}^2+\sigma_{g_{\m{x}\m{y}}}^2
  +\sigma_{f_{\m{x}}}^2.
\end{equation}
Assumptions~\ref{assu:g} and \ref{assu:f} are standard in BO~\cite{chen2021closing,ji2021bilevel} and OBO~\cite{tarzanagh2024online}, and hold for many bilevel ML problems~\cite{franceschi2018bilevel}. Assumption~\ref{assu:f:b} is typical in non-convex OSO~\cite{hazan2017efficient,lin2024non}, while Assumption~\ref{assu:g21} assumes unbiased stochastic gradient, Hessian, and Jacobian estimators with bounded variance~\cite{chen2021closing}.

Achieving sublinear dynamic regret is generally infeasible under arbitrary time variations~\cite{besbes2015non}. Prior analyses \cite{tarzanagh2024online,lin2024non} bound regret by enforcing regularity on the comparator sequence. To attain sublinear regret, \cite{tarzanagh2024online} introduces the following regularity metrics for bilevel sequences:
\begin{small}
\begin{equation}\label{reg:hpt}
\begin{split}
    H_{p,T} &:= \sum_{t=2}^{T} \sup_{\m{x}\in \mc{X}} \|\m{y}^*_{t-1}(\m{x}) - \m{y}^*_{t}(\m{x})\|^p, 
 \quad  \quad   V_{T} := \sum_{t=2}^{T} \sup_{\m{x} \in \mc{X}} \left|f_{t-1}(\m{x}, \m{y}^*_{t-1}(\m{x})) - f_t(\m{x}, \m{y}^*_{t}(\m{x}))\right|.
\end{split}
\end{equation}
\end{small}
Path-length $H_{p,T}$ measures changes in the follower's costs, while $V_T$ captures the leader's objective smoothness. We use path-length for the follower and function variation for the leader due to the follower's strong convexity (Assumption~\ref{assu:g}) versus the leader's nonconvexity. Another regularity is the sequential gradient difference of the outer objective:
\begin{subequations}\label{mt}
\begin{align}
    D_{\m{x},T} &:= \sum_{t=2}^{T} \sup_{\m{x},\m{y}} \left\|\nabla_{\m{x}} f_{t-1}(\m{x}, \m{y}) - \nabla_{\m{x}} f_t(\m{x}, \m{y})\right\|^2, \\
    D_{\m{y},T} &:= \sum_{t=2}^{T} \sup_{\m{x},\m{y}} \left\|\nabla_{\m{y}} f_{t-1}(\m{x}, \m{y}) - \nabla_{\m{y}} f_t(\m{x}, \m{y})\right\|^2.
\end{align}
\end{subequations}
As in \cite{huangonline,hallak2021regret}, $D_{\m{x},T}$ and $D_{\m{y},T}$ measure the gradient drift of $f_t$ relative to $f_{t-1}$ for $\m{x}$ and $\m{y}$, respectively. We define deviations in the gradient, Hessian, and Jacobian of the inner objective as:
\begin{small}
\begin{align}\label{gk}
\nonumber
& G_{\m{y},T} := \sum_{t=2}^{T} \|\nabla_{\m{y}} g_{t-1}(\m{x}_t, \m{y}_t) - \nabla_{\m{y}} g_{t}(\m{x}_t, \m{y}_t)\|^2, \quad
G_{\m{y}\m{y},T} := \sum_{t=2}^{T} \|\nabla_{\m{y}}^2 g_{t-1}(\m{x}_t, \m{y}_t) - \nabla_{\m{y}}^2 g_{t}(\m{x}_t, \m{y}_t)\|^2, \\
& G_{\m{x}\m{y},T} := \sum_{t=2}^{T} \|\nabla_{\m{x}\m{y}}^2 g_{t-1}(\m{x}_t, \m{y}_t) - \nabla_{\m{x}\m{y}}^2 g_{t}(\m{x}_t, \m{y}_t)\|^2.
\end{align}
\end{small}
We introduce the following notations for simplicity:
\begin{equation}\label{del12}
\begin{split}
    \Delta_T &:= E_1 + V_T, ~~~~~~~~ \Psi_{T} := H_{2,T} + G_T + D_T,
\end{split}
\end{equation}
where $(V_T,H_{p,T})$ are defined in \eqref{reg:hpt}, and
\begin{equation}\label{eq:gd}
\begin{split}  
E_1 &:= \|\m{y}_1 - \m{y}^*_{1}(\m{x}_1)\|^2 + \|\m{v}_1 - \m{v}^*_{1}(\m{x}_1)\|^2, \quad
G_T := G_{\m{y},T} + G_{\m{y}\m{y},T} + G_{\m{x}\m{y},T}, \\
D_T &:= D_{\m{x},T} + D_{\m{y},T}.
\end{split}
\end{equation}
By accounting for both $D_T$ and $G_T$, we can represent the variations in the environments of OBO.
\begin{theorem}\label{thm:dynamic:nonc2:cons}
Let $\{(f_t,g_t)\}_{t=1}^T$ be the sequence of functions presented to Algorithm~\ref{alg:first:order}, satisfying Assumptions~\ref{assu:g}-\ref{assu:g21}. 
For all $t \in [T]$, let 
\begin{align}\label{eqn:abc:1}
    &
    \nonumber 
    \alpha_t=\frac{1}{(c+t)^{1/3}},\quad  \beta_t=c_{\beta}\alpha_t, \quad  \delta_t=c_{\delta}\alpha_t,\quad b=\bar{b}=1,
    \\&\gamma_{t+1}=c_{\gamma}\alpha^2_t, \quad \eta_{t+1}=c_{\eta}\alpha^2_t, \quad \lambda_{t+1}=c_{\lambda}\alpha^2_t.    
\end{align}
Here, $c$, $c_{\beta}$, $c_{\delta}$, $c_{\gamma}$, $c_{\eta}$, and $c_{\lambda}$ are specified in \eqref{eqn:tt2}.  Algorithm~\ref{alg:first:order} guarantees:
\begin{align}\label{eqn:0}
\textnormal{BL-Reg}_{T} \leq \mc{O}\left(T^{\frac{1}{3}}(\sigma^2+ \Delta_T)
+T^{\frac{2}{3}}\Psi_{T}\right),
\end{align}
where $\sigma$ and $(\Delta_T,\Psi_{T})$ are defined in \eqref{si1} and \eqref{del12}. 
\end{theorem}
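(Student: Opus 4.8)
The plan is to control the bilevel local regret \eqref{reg:non:bilevel:dynamic:cons} through a single Lyapunov potential that jointly tracks the outer objective, the inner- and linear-system approximation errors, and the three recursive-momentum errors of the STORM-type directions \eqref{eqn:d:terms}--\eqref{eqn:system}, and then to accumulate them with a sharp geometric-summation lemma that routes stochastic noise ($\sigma^2$), leader variation ($\Delta_T$), and environment drift ($\Psi_T$) to their respective rates. First I would convert Assumptions~\ref{assu:g}--\ref{assu:f} into the usual bilevel Lipschitz toolkit: $\mu_g$-strong convexity of $g_t$ makes $\m{y}_t^*(\m{x})$ Lipschitz in $\m{x}$; the clipping onto $\mc{Z}_p$ in \eqref{eqn:zc:set} keeps $\m{v}_t$ bounded, so $\m{v}_t^*(\m{x})$ is bounded and Lipschitz; and the true hypergradient $\bar{\nabla} f_t:=\nabla f_t(\m{x}_t,\m{y}_t^*(\m{x}_t))$ from \eqref{eqn:hypergrad} is $L_f$-smooth in $\m{x}$. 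The key estimate is that the surrogate \eqref{eq:bar:gradient} obeys $\norm{\tilde{\nabla} f_t(\m{x},\m{y})-\bar{\nabla} f_t}\le C_1\norm{\m{y}-\m{y}_t^*(\m{x})}+C_2\norm{\m{v}_t-\m{v}_t^*(\m{x})}$, so the hypergradient error is dominated by the two tracking errors.

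Next I would establish the outer descent. Using $L_f$-smoothness together with the projected update $\m{x}_{t+1}=\Pi_{\mc{X}}[\m{x}_t-\alpha_t\m{d}_t^{\m{x}}]$ and the variational inequality for the projection, I would obtain the gradient-mapping descent $f_t(\m{x}_{t+1})\le f_t(\m{x}_t)-\tfrac{\alpha_t}{2}\norm{\mc{G}_t}^2+\alpha_t\norm{\m{d}_t^{\m{x}}-\bar{\nabla} f_t}^2$, where $\mc{G}_t=\alpha_t^{-1}(\m{x}_t-\m{x}_{t+1})$, along with the comparison $\norm{\mc{P}_{\mc{X},\alpha_t}(\m{x}_t;\bar{\nabla} f_t)}^2\le 2\norm{\mc{G}_t}^2+2\norm{\m{d}_t^{\m{x}}-\bar{\nabla} f_t}^2$ (the gradient mapping is $1$-Lipschitz in its gradient argument by nonexpansiveness of $\Pi_{\mc{X}}$). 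Replacing $f_t(\m{x}_{t+1})$ by $f_{t+1}(\m{x}_{t+1})$ at the cost of one $V_T$-summand makes $\sum_t\norm{\mc{G}_t}^2$ telescope after dividing by $\alpha_t$; with $\alpha_t^{-1}=\mc{O}(t^{1/3})$ and the boundedness $|f_t|\le M$ from Assumption~\ref{assu:f:b}, an Abel summation already gives $\sum_t\norm{\mc{G}_t}^2=\mc{O}(T^{1/3}(M+V_T))$, which is the $T^{1/3}\Delta_T$ contribution.

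It then remains to bound $\sum_t\norm{\m{d}_t^{\m{x}}-\bar{\nabla} f_t}^2$, which splits into the momentum error $\norm{\epsilon_t^{\m{x}}}^2$ and the tracking errors $E_t^{\m{y}}:=\norm{\m{y}_t-\m{y}_t^*(\m{x}_t)}^2$, $E_t^{\m{v}}:=\norm{\m{v}_t-\m{v}_t^*(\m{x}_t)}^2$. Since \eqref{eqn:system} is a STORM estimator, for each of $\m{d}^{\m{x}},\m{d}^{\m{y}},\m{d}^{\m{v}}$ the deviation $\epsilon_t^{\bullet}$ from its conditional mean satisfies $\mb{E}\norm{\epsilon_t^{\bullet}}^2\le(1-\eta_t^{\bullet})\mb{E}\norm{\epsilon_{t-1}^{\bullet}}^2+2(\eta_t^{\bullet})^2\sigma^2+C\,\mb{E}\norm{\m{z}_t-\m{z}_{t-1}}^2+(\text{drift})$, where the mean-zero noise yields the $\sigma^2$ term via Assumption~\ref{assu:g21}, Lipschitzness yields the increment term, and the fact that $\m{d}_{t-1}^{\bullet}$ tracked $f_{t-1},g_{t-1}$ while the correction uses $f_t,g_t$ produces the drift summands of $D_T$ and $G_T$ from \eqref{mt}--\eqref{gk}. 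In parallel, strong convexity of $g_t$ in $\m{y}$ (and positive-definiteness of $\nabla_{\m{y}}^2 g_t$, which makes \eqref{eqn:opti:vtt} strongly convex in $\m{v}$) gives contraction-with-drift recursions for $E_t^{\m{y}}$ and $E_t^{\m{v}}$ with factors $1-\Theta(\beta_t)$, $1-\Theta(\delta_t)$ and drift $\tfrac{1}{\beta_t}\big(\norm{\m{y}_{t+1}^*-\m{y}_t^*}^2+\norm{\m{x}_{t+1}-\m{x}_t}^2\big)$, the follower path-length contributing $H_{2,T}$; the $\m{v}$-recursion carries an additional $C\,E_t^{\m{y}}$ coupling because $\m{v}_t^*$ depends on $\m{y}_t^*$.

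The central technical point is a sharp accumulation lemma: if $S_t\le(1-\rho_t)S_{t-1}+p_t$ with $\rho_t=\Theta(t^{-a})$, $a\in(0,1)$, then unrolling and using $\prod_{s\le t}(1-\rho_s)\le e^{-c\,t^{1-a}}$ (summable) together with $\sum_{t\ge j}\prod_{j<s\le t}(1-\rho_s)=\mc{O}(1/\rho_j)$ gives $\sum_t S_t=\mc{O}(S_0)+\mc{O}\big(\sum_t p_t/\rho_t\big)$. This is strictly sharper than the naive telescoping $\sum_t\rho_t S_{t-1}\le S_0+\sum_t p_t$ followed by division by $\rho_T$, which would lose a factor $T^{1-a}$; it is exactly what keeps the noise at $\sum_t\eta_t\sigma^2=\mc{O}(T^{1/3}\sigma^2)$ (applied to $\epsilon_t^{\bullet}$ with $\rho_t=\Theta(t^{-2/3})$) rather than $T^{2/3}\sigma^2$, while sending the drift to $\sum_t(\text{drift})/\eta_t=\mc{O}(T^{2/3}\Psi_T)$ and (applied to $E_t^{\m{y}},E_t^{\m{v}}$ with $\rho_t=\Theta(t^{-1/3})$) the path-length to $\sum_t\norm{\m{y}_{t+1}^*-\m{y}_t^*}^2/\beta_t^2=\mc{O}(T^{2/3}H_{2,T})$. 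Folding $f_t(\m{x}_t)$, the tracking errors, and the momentum errors into one potential $\Phi_t$ and choosing the constants $c,c_\beta,c_\delta,c_\gamma,c_\eta,c_\lambda$ of \eqref{eqn:tt2} small enough that the feedback terms $\norm{\m{z}_t-\m{z}_{t-1}}^2$, $\norm{\m{x}_{t+1}-\m{x}_t}^2$ and the $E^{\m{y}}\!\to\!E^{\m{v}}$ coupling are absorbed into $-\tfrac{\alpha_t}{2}\norm{\mc{G}_t}^2$, I would combine everything with the outer descent to conclude $\textnormal{BL-Reg}_T=\mc{O}\big(T^{1/3}(\sigma^2+\Delta_T)+T^{2/3}\Psi_T\big)$. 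I expect the main obstacle to be precisely this final bookkeeping: simultaneously keeping all feedback and coupling terms absorbable under a single admissible constant regime, and routing each perturbation to the correct rate, which hinges on the mismatch between the $\Theta(t^{-2/3})$ momentum rate and the $\Theta(t^{-1/3})$ stepsize and on invoking the sharp accumulation lemma rather than the lossy telescoping.
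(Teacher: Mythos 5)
Your proposal is correct and follows the same overall architecture as the paper's proof: the same Lipschitz toolkit (the paper's Lemma~\ref{lem:lips}), the same three STORM-type error recursions for $\m{d}^{\m{x}},\m{d}^{\m{y}},\m{d}^{\m{v}}$ (Lemmas~\ref{lm:eg1}, \ref{lm:vv}, \ref{lm:xx}), the same contraction-with-drift recursions for the tracking errors $\theta^{\m{y}}_t,\theta^{\m{v}}_t$ (Lemmas~\ref{lm:nn22}, \ref{lm:vt}), the same outer descent through the projected-gradient mapping with the $2M+V_T$ telescoping (Lemmas~\ref{lem:capdiffd5}, \ref{lm:projec:z}), and the same closing of the feedback loop by a constant regime as in \eqref{eqn:tt2}. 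The one genuine difference is the accumulation machinery: the paper telescopes weighted potentials of the form $\|e_t\|^2/\alpha_{t-1}$, which hinges on the stepsize-ratio estimate $1/\alpha_t-1/\alpha_{t-1}\le\alpha_t/(6L_f)$ (Eq.~\eqref{eqn:jk}) and on choosing $c_\gamma,c_\eta,c_\lambda$ so that each per-step coefficient is negative, yielding bounds on the $\alpha_t$-weighted sums which are only converted to unweighted sums by a single division by $\alpha_T$ at the very end; you instead unroll each recursion directly, using $\prod_{s\le t}(1-\rho_s)\le e^{-ct^{1-a}}$ and $\sum_{t\ge j}\prod_{j<s\le t}(1-\rho_s)=\mc{O}(1/\rho_j)$ to bound the unweighted sums $\sum_t S_t=\mc{O}(S_0)+\mc{O}(\sum_t p_t/\rho_t)$. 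Both routes deliver the claimed rates, and yours is in fact marginally sharper on the noise term: your route gives $\sum_t\eta_t\sigma^2=\mc{O}(T^{1/3}\sigma^2)$, whereas the paper's weighted telescoping produces $\frac{1}{\alpha_T}\sigma^2\sum_t\alpha_t^3=\mc{O}(T^{1/3}\sigma^2\log(T+1))$, a log factor the paper's final statement silently absorbs. Two small imprecisions worth fixing: the feedback absorption requires the momentum constants $c_\gamma,c_\eta,c_\lambda$ to be chosen \emph{large} relative to the coupling constants (as in \eqref{eqn:tt2}, e.g.\ $c_\eta=\tfrac{1}{6L_f}+5\Omega$), not ``small enough''; and the $\m{v}$-update is projected gradient descent on the quadratic form $\tfrac12\m{v}^{\top}\nabla^2_{\m{y}}g_t\,\m{v}+\m{v}^{\top}\nabla_{\m{y}}f_t$ (strongly convex with modulus $\mu_g$), not on the least-squares objective \eqref{eqn:opti:vtt} (whose modulus would be $\mu_g^2$) — the contraction conclusion is unchanged, but the constant bookkeeping differs. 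Also note that your increment term $C\,\mb{E}\norm{\m{z}_t-\m{z}_{t-1}}^2$ must include the $\m{v}$-increment $\delta_{t-1}^2\norm{\m{d}^{\m{v}}_{t-1}}^2$ for the $\m{d}^{\m{v}}$ and $\m{d}^{\m{x}}$ recursions, which re-injects $\|e^{\m{v}}_t\|^2$ and $\theta^{\m{v}}_t$ into the system exactly as in the paper's Lemmas~\ref{lm:vv} and \ref{lm:xx}; this is covered by your feedback-absorption step but should be stated explicitly.
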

\begin{remark}[\textbf{Stochastic Regret Guarantee for OBO and OSO with $w=1$}] 
Theorem~\ref{thm:dynamic:nonc2:cons} bounds the regret of Algorithm~\ref{alg:first:order} without window-smoothing, based on the regularities in~\eqref{del12}. We note that the average dynamic regret 
$\textnormal{BL-Reg}_{T}/T \leq \mathcal{O}(T^{-2/3}(\sigma^2 + \Delta_{T}) + T^{-1/3}\Psi_{T})$
remains sublinear under suitable conditions on $\Delta_{T}$, $\Psi_{T}$, and $\sigma$. Specifically, if $\Delta_T = o(T^{2/3})$, $\Psi_T = o(T^{1/3})$, and $\sigma = o(T^{1/3})$, then the dynamic regret grows sublinearly, i.e., BL-Reg$_T = o(T)$; see Appendix~\ref{example:v} for further examples and discussion. This result also yields a sharper $T^{-2/3}\sigma^2$ regret—improving over the $T^{-1/2}\sigma^2$ bound for stochastic OBO~\cite{bohne2024online}—and removes the need for window-smoothing~\cite{bohne2024online, tarzanagh2024online, lin2024non, huang2023online}. For OSO, this result surpasses the $T^{-1/2}\sigma^2$ rate in~\cite{hallak2021regret}.
\end{remark}

\section{Stochastic OBO with Zeroth-Order Oracles}
Black-box optimization  arises when gradients are unavailable \cite{chen2017zoo}. We study ZO-OBO methods with limited access to leader and follower objectives. Let $\m{s} \in \mathbb{R}^{d_1}$ and $\m{r} \in \mathbb{R}^{d_2}$ be vectors uniformly sampled from unit balls $B_1$ and $B_2$. Given smoothing parameters $\boldsymbol{\rho} = (\rho_{\m{s}}, \rho_{\m{r}})$, we define Gaussian-smoothed objectives using \cite{nesterov2017random}:
\begin{align}\label{eqn:oboz}
\hspace{-.2cm}f_{t,\boldsymbol{\rho}}\left(\mathbf{x},  \hat{\mathbf{y}}^*_t(\mathbf{x})\right) &= \underset{(\m{s},\m{r},\xi_t)}{\mathbb{E}}\left[f_t(\m{x}+\rho_{\m{s}}\m{s},\hat{\mathbf{y}}^*_t(\mathbf{x})+\rho_{\m{r}}\m{r};\xi_t)\right],  \quad \textnormal{where}\\
\label{eqn:yz}
  \hat{\mathbf{y}}^*_t(\mathbf{x}) &\in \underset{\mathbf{y} \in \mathbb{R}^{d_2}}{\arg\min} \big\{g_{t,\boldsymbol{\rho}}(\mathbf{x},\mathbf{y}) := \underset{(\mathbf{s},\mathbf{r},\zeta_t)}{\mathbb{E}}\left[g_t(\mathbf{x}+\rho_{\mathbf{s}}\mathbf{s},\mathbf{y}+\rho_{\mathbf{r}}\mathbf{r};\zeta_t)\right]\big\}.
\end{align}
To solve stochastic OBO with \eqref{eqn:oboz}, we need to obtain the hyper-gradient of $f_{t,\boldsymbol{\rho}}$ in \eqref{eqn:oboz} at $(\m{x},\m{y})$ as
\begin{align}
\nonumber&  \nabla f_{t,\boldsymbol{\rho}}(\m{x},\hat{\mathbf{y}}^*_t(\mathbf{x}))
 := \nabla_\m{x} f_{t,\boldsymbol{\rho}}(\m{x},\hat{\mathbf{y}}^*_t(\mathbf{x}))
+\nabla^2_{\m{x}\m{y}} g_{t,\boldsymbol{\rho}} \left(\m{x}, \hat{\mathbf{y}}^*_t(\mathbf{x})\right) \hat{\m{v}}^*_t(\m{x}),\quad \text{where}  \\\label{eq:vhz}
  &\hat{\m{v}}^*_t(\m{x}) \text{ is the solution to } \nabla^2_{\m{y}}g_{t,\boldsymbol{\rho}}\left(\m{x}, \hat{\mathbf{y}}^*_t(\mathbf{x})\right) \hat{\m{v}}^*_t(\m{x}) + \nabla_\m{y} f_{t,\boldsymbol{\rho}}(\m{x},\hat{\mathbf{y}}^*_t(\mathbf{x}))=0.
 \end{align}    
Obtaining $\hat{\mathbf{y}}^*_t(\mathbf{x})$ in closed-form is usually a challenging task, so it is natural to use the following gradient
surrogate. At any $(\m{x},\m{y})$, we introduce an auxiliary variable $\m{v}=\m{v}(\m{x},\m{y})$ and define:
\begin{subequations}
\begin{align}\label{eq:bar:gradient:v}
 &  \tilde{\nabla}f_{t,\boldsymbol{\rho}}(\m{x},\m{y})
 := \nabla_\m{x} f_{t,\boldsymbol{\rho}}(\m{x},\m{y})
+\nabla^2_{\m{x}\m{y}} g_{t,\boldsymbol{\rho}} \left(\m{x}, \m{y}\right) {\m{v}},\quad \text{where}  \\
  &{\m{v}} \text{ is the solution to } \nabla^2_{\m{y}}g_{t,\boldsymbol{\rho}}\left(\m{x}, \m{y}\right) {\m{v}} + \nabla_\m{y} f_{t,\boldsymbol{\rho}}(\m{x},\m{y})=0.\label{vttv}
 \end{align}
\end{subequations}
To do so, we also introduce  $\m{d}_{t,\boldsymbol{\rho}}^\m{y}$, $\m{d}_{t,\boldsymbol{\rho}}^\m{v}$ and $\m{d}_{t,\boldsymbol{\rho}}^{\m{x}}$ as follows:
 \begin{subequations}\label{direction2}
\begin{align}\label{eqn:systemm}
&\m{d}_{t,\boldsymbol{\rho}}^\m{y}(\m{x}, \m{y}) =\nabla_{\m{y}}g_{t,\boldsymbol{\rho}}(\m{x}, \m{y}),
\\
\label{eqn:system:y2}
&\m{d}_{t,\boldsymbol{\rho}}^\m{v}(\m{x}, \m{y},\m{v}) =\nabla_\m{y} f_{t,\boldsymbol{\rho}}(\m{x},\m{y})+ \nabla_\m{y}^2 g_{t,\boldsymbol{\rho}}\left(\m{x},\m{y} \right)\m{v},
\\
\label{eqn:system:v2}
& \m{d}_{t,\boldsymbol{\rho}}^{\m{x}}(\m{x}, \m{y},\m{v})=\nabla_\m{x} f_{t,\boldsymbol{\rho}}(\m{x},\m{y} )+ \nabla_{\m{x}\m{y}}^2 g_{t,\boldsymbol{\rho}}\left(\m{x},\m{y} \right)\m{v}.
\end{align}
\end{subequations}
Next, we approximate these directions using stochastic zeroth-order oracles (\textit{SZO}), which produce the quantities $\hat{\nabla}_{\m{y}} f_t(\m{x},\m{y};\xi_t)$, $\hat{\nabla}_{\m{y}} g_t(\m{x},\m{y};\zeta_t)$, $\hat{\nabla}_{\m{x}} f_t(\m{x},\m{y};\xi_t)$, and $\hat{\nabla}_{\m{x}} g_t(\m{x},\m{y};\zeta_t)$. These are unbiased estimators of the true gradients $\nabla_{\m{y}} f_{t,\boldsymbol{\rho}}(\m{x},\m{y})$, $\nabla_{\m{y}}g_{t,\boldsymbol{\rho}}(\m{x}, \m{y})$, $\nabla_{\m{x}} f_{t,\boldsymbol{\rho}}(\m{x},\m{y})$, and $\nabla_{\m{x}}g_{t,\boldsymbol{\rho}}(\m{x}, \m{y})$, respectively, as shown in \cite{flaxman2004online}, such that the following assumption holds:
\begin{align}\label{eq:nfg}
\nonumber &\underset{(\m{r},\xi_t)}{\mathbb{E}}\left[\hat{\nabla}_{\m{y}} f_t(\m{x},\m{y};\xi_t) \right]
 ={\nabla}_{\m{y}} f_{t,\boldsymbol{\rho}}(\m{x},\m{y})
,
\quad\underset{(\m{s},\xi_t)}{\mathbb{E}}\left[\hat{\nabla}_{\m{x}} f_t(\m{x},\m{y};\xi_t) \right]={\nabla}_{\m{x}} f_{t,\boldsymbol{\rho}}(\m{x},\m{y}),
\\& \underset{(\m{r},\zeta_t)}{\mathbb{E}}\left[\hat{\nabla}_{\m{y}} g_t(\m{x},\m{y};\zeta_t) \right] ={\nabla}_{\m{y}} g_{t,\boldsymbol{\rho}}(\m{x},\m{y}),
\quad\underset{(\m{s},\zeta_t)}{\mathbb{E}}\left[\hat{\nabla}_{\m{x}} g_t(\m{x},\m{y};\zeta_t) \right]={\nabla}_{\m{x}} g_{t,\boldsymbol{\rho}}(\m{x},\m{y}).
\end{align}

Specifically, following \cite{shamir2017optimal}, we estimate the gradient of a function $h : \mb{R}^d \rightarrow \mb{R}$, querying at $\m{x} - \lambda \m{s}$ and $\m{x} + \lambda \m{s}$, yielding an estimator
$(d/2\lambda )\left( h(\m{x} + \lambda \m{s}) - h(\m{x} - \lambda\m{s}) \right) \m{s}. $
Using this strategy, the finite-difference estimation of $\nabla g_{t,\boldsymbol{\rho}}(\m{x},\m{y})$, denoted by $\hat{\nabla} g_t(\m{x},\m{y})$, is constructed for given smoothing parameters $\boldsymbol{\rho} = (\rho_{\m{s}}, \rho_{\m{r}})$, and  a set $\bar{\mathcal{B}}_t=\{\zeta_{t,1},\ldots,\zeta_{t,\bar{b}} \}$ drawn i.i.d. from $\mathcal{D}_{g,t}$, as follows:
\begin{subequations}\label{eqn:nab:x:y:approx}
\begin{align}
\label{eqn:gy:estimate}
\hat{\nabla}_{\m{y}} g_t(\m{x},\m{y};\bar{\mathcal{B}}_t) 
&:= \frac{d_2}{2\bar{b}\rho_{\m{r}}}
\sum_{i=1}^{\bar{b}}\left(g_t(\m{x}, \m{y} + \rho_{\m{r}} \m{r}_i;\zeta_{t,i}) 
- g_t(\m{x}, \m{y} - \rho_{\m{r}} \m{r}_i;\zeta_{t,i})\right)\m{r}_i, 
\\
\label{eqn:g:estimate}
\hat{\nabla}_{\m{x}} g_t(\m{x},\m{y};\bar{\mathcal{B}}_t) 
&:= \frac{d_1}{2\bar{b}\rho_{\m{s}}} 
\sum_{i=1}^{\bar{b}}\left(g_t(\m{x} + \rho_{\m{s}} \m{s}_i, \m{y};\zeta_{t,i}) 
- g_t(\m{x} - \rho_{\m{s}} \m{s}_i, \m{y};\zeta_{t,i})\right) \m{s}_i.
\end{align}
\end{subequations}
Similarly, we estimate ${\nabla}_{\m{y}} f_{t,\boldsymbol{\rho}}(\m{x},\m{y};\mathcal{B}_t)$ and ${\nabla}_{\m{x}} f_{t,\boldsymbol{\rho}}(\m{x},\m{y};\mathcal{B}_t) $, respectively, using a batch $\mathcal{B}_t=\{\xi_{t,1},\ldots,\xi_{t,b} \}$  drawn i.i.d. from $\mathcal{D}_{f,t}$, by 
\begin{subequations}\label{uo}
\begin{align}
 \label{eqn:gy:estimatef}\hat{\nabla}_{\m{y}}  f_t(\m{x},\m{y};\mathcal{B}_t)   & :=\frac{d_2}{2b\rho_{\m{r}}}\sum_{i=1}^{b}(f_t(\m{x},\m{y}+\rho_{\m{r}}\m{r}_i;\xi_{t,i})-f_t(\m{x},\m{y}-\rho_{\m{r}}\m{r}_i;\xi_{t,i}))\m{r}_i,
\\
 \label{eqn:g:estimatef}
 \hat{\nabla}_{\m{x}} f_t(\m{x},\m{y};\mathcal{B}_t)   & :=\frac{d_1}{2b\rho_{\m{s}}}\sum_{i=1}^{b}(f_t(\m{x}+\rho_{\m{s}}\m{s}_i,\m{y};\xi_{t,i})-f_t(\m{x}
-\rho_{\m{s}}\m{s}_i,\m{y};\xi_{t,i})\m{s}_i.
 \end{align}
\end{subequations}
Furthermore, given a smoothing parameter $\rho_{\m{v}} > 0$, we approximate the Hessian-vector product $\nabla_\m{y}^2 g_{t,\boldsymbol{\rho}}(\m{x},\m{y})\m{v}$ and the Jacobian-vector product $\nabla_{\m{x}\m{y}}^2 g_{t,\boldsymbol{\rho}}(\m{x},\m{y})\m{v}$ as the finite difference between two gradients, respectively, as
\vspace{-.2cm}
\begin{subequations}\label{eqn:hess:jac:approx}
\begin{align}
\label{app:h}
 \hat{\nabla}_{\m{y}}^2 g_t (\m{x},\m{y};\bar{\mathcal{B}}_t) 
&:=\frac{1}{2\bar{b}\rho_{\m{v}}}\sum_{i=1}^{\bar{b}}(\hat{\nabla}_{\m{y}}g_{t}(\m{x},\m{y}+\rho_{\m{v}}\m{v};\zeta_{t,i})- \hat{\nabla}_{\m{y}}g_t(\m{x},\m{y}-\rho_{\m{v}}\m{v};\zeta_{t,i})),
\\\label{app:g} 
 \hat{\nabla}_{\m{x}\m{y}}^2 g_t (\m{x},\m{y};\bar{\mathcal{B}}_t) &:=\frac{1}{2\bar{b}\rho_{\m{v}}}\sum_{i=1}^{\bar{b}}(\hat{\nabla}_{\m{x}}g_t(\m{x},\m{y}+\rho_{\m{v}}\m{v};\zeta_{t,i})-\hat{\nabla}_{\m{x}} g_t(\m{x},\m{y}-\rho_{\m{v}} \m{v};\zeta_{t,i})).
\end{align}
\end{subequations}

\begin{algorithm}[t]
\caption{ZO-SOGD}\label{alg:obgd:zeroth}
\begin{algorithmic}[1]
\REQUIRE{In addition to parameters in SOGD, choose $ \rho_{\m{v}}, \rho_{\m{r}}, \rho_{\m{s}}  \in \R_{++}$.}
\\
\hspace{-14pt}\textbf{For} $t=1$ \textbf{to} $T$ \textbf{do}:
\begin{enumerate}[label={\textnormal{{\textbf{S\arabic*.}}}}, wide, labelindent=-5pt,itemsep=0pt]
\item \label{item:s21} 
Draw samples $\mathcal{B}_t$ and $\bar{\mathcal{B}}_t$ with batch sizes $b$ and $\bar{b}$.
Using \eqref{eqn:nab:x:y:approx}--\eqref{eqn:hess:jac:approx}, get:
\begin{subequations}\label{eqn:zo:system}
\begin{align}
\vspace{-6pt}
 &\m{d}_{t}^{\m{y}} \left(\m{z}_t;\bar{\mathcal{B}}_t\right)= \hat{\nabla}_{\m{y}} g_t(\m{z}_t;\bar{\mathcal{B}}_t),\\\nonumber
&\hat{\m{d}}_{t}^{\m{y}} = \m{d}_{t}^{\m{y}}(\m{z}_t;\bar{\mathcal{B}}_t)+(1-\gamma_t)(\hat{\m{d}}_{t-1}^{\m{y}}-\m{d}_{t}^{\m{y}}(\m{z}_{t-1};\bar{\mathcal{B}}_t)),\\
&\m{d}_{t}^{\m{v}\m{v}} \left(\m{z}_t;\mathcal{B}_t\right)=\hat{\nabla}_\m{y} f_t\left(\m{z}_t;\mathcal{B}_t \right)+ \hat{\nabla}_{\m{y}}^2 g_t \left(\m{z}_t;\bar{\mathcal{B}}_t\right),\\\nonumber
&\hat{\m{d}}_{t}^{\m{v}} =\m{d}_{t}^{\m{v}\m{v}}(\m{z}_t;\mathcal{B}_t)+(1-{\lambda}_t)(\hat{\m{d}}_{t-1}^{\m{v}}-\m{d}_{t}^{\m{v}\m{v}}(\m{z}_{t-1};\mathcal{B}_t)),\\
 & \m{d}_{t}^{\m{x}\m{y}} \left(\m{z}_t;\mathcal{B}_t\right)=\hat{\nabla}_\m{x} f_t\left(\m{z}_t;\mathcal{B}_t\right) + \hat{\nabla}_{\m{x}\m{y}}^2 g_t\left(\m{z}_t;\bar{\mathcal{B}}_t\right),\\\nonumber
 &\hat{\m{d}}_{t}^{\m{x}} = \m{d}_{t}^{\m{x}\m{y}}(\m{z}_t;\mathcal{B}_t)+(1-{\eta}_t)(\hat{\m{d}}_{t-1}^{\m{x}}-\m{d}_{t}^{\m{x}\m{y}}(\m{z}_{t-1};\mathcal{B}_t)),
\end{align}
\end{subequations}
\item \label{item:s22} Update inner, system, and outer solutions:
\begin{align*}
\vspace{-10pt}
\m{y}_{t+1} &= \m{y}_{t} - {\beta_t} \hat{\m{d}}_{t}^{\m{y}},~~~
\m{v}_{t+1} = {\Pi}_{\mc{Z}_p}\big[\m{v}_{t} - \delta_t \hat{\m{d}}_{t}^{\m{v}} \big], \quad \m{x}_{t+1} = \Pi_{\mc{X}}\big[\m{x}_t - \alpha_t \hat{\m{d}}_{t}^{\m{x}}\big].
\vspace{-15pt}
\end{align*}
\vspace{-15pt}
\end{enumerate}
\end{algorithmic}
\end{algorithm}

Using \eqref{eqn:nab:x:y:approx}--\eqref{eqn:hess:jac:approx}, the first-order terms in \eqref{eqn:system} are approximated by $\hat{\m{d}}_{t}^{\m{y}}$, $\hat{\m{d}}_{t}^{\m{v}}$, and $\hat{\m{d}}_{t}^{\m{x}}$ in \eqref{eqn:zo:system}. The approximations in \eqref{app:h} and \eqref{app:g} introduce errors in the hypergradient, which must be controlled. \eqref{eqn:hess:jac:approx} depends on the dimension of $\m{y}$, as in ZO optimization \cite{nesterov2017random,shamir2017optimal}. The projection ${\Pi}_{\mc{Z}_p}$ in \eqref{eqn:zc:set} bounds $\m{v}$, controlling variance in $\m{v}$ and $\m{x}$ updates for convergence.

 \begin{assumption}\label{assu:g2}
There exist constants $\hat{\sigma}_{g_{\m{y}}},\hat{\sigma}_{g_{\m{x}}},
\hat{\sigma}_{f_{\m{y}}},\hat{\sigma}_{f_{\m{x}}}$ such that, for all $\m{z}=[\m{x},\m{y}]$:
\vspace{-.1cm}
\begin{multicols}{2}
\begin{enumerate}[label={\textnormal{D\arabic*.}}, wide, labelindent=0pt, itemsep=2pt, parsep=2pt]
\item  \label{a:1}
$\mb{E}\|\hat{\nabla}_{\m{y}} g_t(\m{z};\zeta_t)-\nabla_{\m{y}} g_{t,\boldsymbol{\rho}}(\m{z}) \|^2\leq \hat{\sigma}_{g_{\m{y}}}^2,$
\item  \label{a:2}$\mb{E}\|\hat{\nabla}_{\m{x}} g_t(\m{z};\zeta_t)-\nabla_{\m{x}} g_{t,\boldsymbol{\rho}}(\m{z}) \|^2\leq \hat{\sigma}_{g_{\m{x}}}^2,$
\item \label{a:3} $\mb{E}\|\hat{\nabla}_{\m{y}} f_t(\m{z};\xi_t)-\nabla_{\m{y}} f_{t,\boldsymbol{\rho}}(\m{z}) \|^2\leq \hat{\sigma}_{f_{\m{y}}}^2,$
\item \label{a:4}$\mb{E}\|\hat{\nabla}_{\m{x}} f_t(\m{z};\xi_t)-\nabla_{\m{x}} f_{t,\boldsymbol{\rho}}(\m{z}) \|^2\leq \hat{\sigma}_{f_{\m{x}}}^2.$
\end{enumerate}
\end{multicols}
\end{assumption} 
Assumption \ref{assu:g2} is analogous to the upper bound on the variance of stochastic partial gradients discussed in \cite{luo2020stochastic,wang2020zeroth}.
We simplify the notation by introducing the following shorthand.
\begin{equation}\label{reg:hpt3}
    \begin{split}  \hat{\sigma}^2:=\hat{\sigma}_{g_{\m{y}}}^2+
    \hat{\sigma}_{g_{\m{x}}}^2
+\hat{\sigma}_{f_{\m{y}}}^2
  +\hat{\sigma}_{f_{\m{x}}}^2.      
    \end{split}
\end{equation}
Next, we establish a regret bound for ZO-SOGD. Similar to the previous results, we introduce regularity conditions for the smoothed functions defined in \eqref{eqn:oboz} and \eqref{eqn:yz}. 

\noindent \textbf{Inner Gradient Variations}: In ZO setting, we use a set of gradient variations at the perturbed point as follows:
\begin{equation}\label{G:path} 
G_{\m{v},T} := \sum_{t=2}^{T} (\chi_{1t} + \chi_{2t}), \qquad \quad G_{\m{x},T} := \sum_{t=2}^{T} (\chi_{3t} + \chi_{4t}),
\end{equation}
\text{where}  $\m{z}_{t}^+:= (\m{x}_{t-1}, \m{y}_{t-1}+\rho_{\m{v}}\m{v}_{t-1})$, $
\m{z}_{t}^- := (\m{x}_{t-1}, \m{y}_{t-1}-\rho_{\m{v}}\m{v}_{t-1})$, and
\begin{align*}
\chi_{1t} &:= \|\nabla_{\m{y}} g_{t}(\m{z}_{t}^+) - \nabla_{\m{y}} g_{t-1}(\m{z}_{t}^+)\|^2, \quad
\chi_{2t} := \|\nabla_{\m{y}} g_{t}(\m{z}_{t}^-) - \nabla_{\m{y}} g_{t-1}(\m{z}_{t}^-)\|^2, \\
\chi_{3t} &:= \|\nabla_{\m{x}} g_{t}(\m{z}_{t}^+) - \nabla_{\m{x}} g_{t-1}(\m{z}_{t}^+)\|^2, \quad
\chi_{4t} := \|\nabla_{\m{x}} g_{t}(\m{z}_{t}^-) - \nabla_{\m{x}} g_{t-1}(\m{z}_{t}^-)\|^2.
\end{align*}
Further, for simplicity of notation, we define
\begin{equation}\label{del22}
\begin{split}
 \hat\Delta_T:=E_1+V_{T}+D_T+G_{\m{y},T}, \quad \quad  \hat\Psi_{T}:=H_{2,T}+ G_{\m{v},T}+ G_{\m{x},T},
 \end{split}
\end{equation}
where $(V_T,H_{p,T})$ and $(E_1,D_T)$ are defined in \eqref{reg:hpt}, and \eqref{eq:gd}, respectively. Moreover, $G_{\m{y},T}$ and $(G_{\m{v},T},G_{\m{x},T})$ are defined in \eqref{gk} and \eqref{G:path}, respectively.
\begin{theorem}\label{thm:nonc2:cons:zeroth}
Let $\{(f_t,g_t)\}_{t=1}^T$ be the sequence of functions presented to Algorithm \ref{alg:obgd:zeroth}, satisfying Assumptions~\ref{assu:g}-\ref{assu:f:b} and \ref{assu:g2}. For all $t \in [T]$, let
\vspace{-.1cm}
\begin{align}\label{eqn:abc}
\nonumber 
& \alpha_t=\frac{1}{(d_1+d_2)^{3/4}(c+t)^{1/3}},\quad  
\beta_t=c_{\beta}\alpha_t, \quad  
\delta_t=c_{\delta}\alpha_t, \quad  
\gamma_{t+1}=c_{\gamma}\alpha_t, \\
\nonumber 
& \eta_{t+1}=c_{\eta}\alpha_t, \quad  
\lambda_{t+1}=c_{\lambda}\alpha_t, \quad  
\rho_{\m{v}}^2=c_{\m{v}}\alpha_t, \quad  
\rho_{\m{r}}^2=\frac{1}{d_2^2T}, \quad  
\rho_{\m{s}}^2=\frac{1}{d_1^2T}, \\
& b=\frac{T^{1/3}}{(d_1+d_2)^{3/2}}, \quad  
\bar{b}=\frac{T^{2/3}}{(d_1+d_2)^{3/4}},
\end{align}
where $c$, $c_{\beta}$, $c_{\delta}$, $c_{\gamma}$, $c_{\eta}$, $c_{\m{v}}$, and $c_{\lambda}$ are specified in \eqref{eqn:tt4}. Let $p= \ell_{f,0} / \mu_{g}$ for the set $ \mc{Z}_p $ defined in \eqref{eqn:zc:set}. Then, Algorithm \ref{alg:obgd:zeroth} guarantees:
\begin{align*}
 \textnormal{BL-Reg}_{T}   
&\leq \mathcal{O}\left((d_1+d_2)^{\frac{3}{4}}T^{\frac{1}{3}}
\left(\hat{\sigma}^2+\hat\Delta_T\right)
\right. \left. +(d_1+d_2)^{\frac{3}{2}}T^{\frac{2}{3}}\hat\Psi_{T} \right).
\end{align*}
where $\hat{\sigma}^2$ and $(\hat\Delta_T,\hat\Psi_{T})$ are defined in \eqref{reg:hpt3} and \eqref{del22}. 
\end{theorem}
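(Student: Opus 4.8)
The argument parallels that of Theorem~\ref{thm:dynamic:nonc2:cons}, so the plan is to reuse the single-loop potential-function framework developed there and to isolate the three additional error sources introduced by the zeroth-order oracle: (i) the smoothing bias separating $\nabla f_{t,\boldsymbol{\rho}}$ from $\nabla f_t$ (and likewise for $g$), (ii) the dimension-dependent variance of the finite-difference gradient estimators in \eqref{eqn:nab:x:y:approx}--\eqref{uo}, and (iii) the compounded error of the nested Hessian- and Jacobian-vector estimators in \eqref{eqn:hess:jac:approx}. Because the bilevel local regret is measured through the true hypergradient $\nabla f_t(\m{x}_t,\m{y}_t^*(\m{x}_t))$ whereas the algorithm operates on the smoothed surrogate $\tilde{\nabla} f_{t,\boldsymbol{\rho}}$ of \eqref{eq:bar:gradient:v}, the first step is to bound $\mb{E}\|\mc{P}_{\mc{X},\alpha_t}(\m{x}_t;\nabla f_t)\|^2$ by the same quantity defined through $\nabla f_{t,\boldsymbol{\rho}}$ plus a bias term. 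Using the Lipschitz bounds of Assumption~\ref{assu:f} together with the smoothing definitions \eqref{eqn:oboz}--\eqref{eqn:yz}, this bias is $\mc{O}(\rho_{\m{s}}^2+\rho_{\m{r}}^2)$ per round, which with $\rho_{\m{s}}^2=1/(d_1^2T)$ and $\rho_{\m{r}}^2=1/(d_2^2T)$ contributes only a lower-order $\mc{O}(1)$ term to the total regret.

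\textbf{Tracking the smoothed optima.} Next I would establish the two tracking recursions. Since $g_{t,\boldsymbol{\rho}}$ inherits $\mu_g$-strong convexity from Assumption~\ref{assu:g}, the inner update contracts $\mb{E}\|\m{y}_t-\hat{\m{y}}_t^*(\m{x}_t)\|^2$ with a factor controlled by $\beta_t$, while the drift of the moving target $\hat{\m{y}}_t^*(\m{x}_t)$ injects the leader step $\alpha_t\|\hat{\m{d}}_t^{\m{x}}\|$ and the path-length increment feeding $H_{2,T}$ and $G_{\m{y},T}$. An analogous recursion for $\mb{E}\|\m{v}_t-\hat{\m{v}}_t^*(\m{x}_t)\|^2$ follows from the strongly monotone linear system \eqref{vttv}, where the projection onto $\mc{Z}_p$ with $p=\ell_{f,0}/\mu_g$ keeps $\m{v}_t$ inside the ball containing $\hat{\m{v}}_t^*$ (as $\|\hat{\m{v}}_t^*\|\le\ell_{f,0}/\mu_g$), ensuring the Hessian- and Jacobian-vector estimators have bounded second moment. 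These two recursions contribute the $E_1$, $V_T$, $H_{2,T}$, $D_T$, and $G_{\m{y},T}$ terms collected in $\hat\Delta_T$ and $\hat\Psi_T$ of \eqref{del22}.

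\textbf{Variance-reduced directions and descent.} I would then bound the momentum-corrected search directions $\hat{\m{d}}_t^{\m{y}},\hat{\m{d}}_t^{\m{v}},\hat{\m{d}}_t^{\m{x}}$ in \eqref{eqn:zo:system} against the smoothed targets \eqref{direction2} using the STORM-type identity underlying the correction term in \eqref{eqn:d:terms}. The resulting recursion has the form $\mb{E}\|\hat{\m{d}}_t^{\m{x}}-\m{d}_{t,\boldsymbol{\rho}}^{\m{x}}\|^2\le(1-\eta_t)\mb{E}\|\hat{\m{d}}_{t-1}^{\m{x}}-\m{d}_{t-1,\boldsymbol{\rho}}^{\m{x}}\|^2$ plus a martingale-noise term of order $\eta_t^2(d_1+d_2)\hat\sigma^2/b$, a smoothness term proportional to $\|\m{z}_t-\m{z}_{t-1}\|^2$, and a gradient-variation increment feeding $D_T$ and $G_{\m{x},T}$; here the explicit $(d_1+d_2)$ factor is precisely the second-moment inflation of the two-point estimator. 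Combining with a projected-gradient descent lemma for $\m{x}$ — which relates $\mb{E}\|\mc{P}_{\mc{X},\alpha_t}(\m{x}_t;\nabla f_{t,\boldsymbol{\rho}})\|^2$ to the one-step decrease of $f_t(\m{x}_t,\m{y}_t^*(\m{x}_t))$ up to the direction error and the function variation $V_T$ — I would form a single Lyapunov potential summing the hyperobjective, the two tracking errors, and the three direction errors, sum the per-round inequalities, and substitute the schedule \eqref{eqn:abc}. With $\sum_t\alpha_t=\Theta((d_1+d_2)^{-3/4}T^{2/3})$ and $\sum_t\alpha_t^2=\Theta((d_1+d_2)^{-3/2}T^{1/3})$, and the batch sizes $b=T^{1/3}/(d_1+d_2)^{3/2}$, $\bar{b}=T^{2/3}/(d_1+d_2)^{3/4}$ chosen to equalize the noise and drift contributions, the telescoped bound collapses to the stated $(d_1+d_2)^{3/4}T^{1/3}(\hat\sigma^2+\hat\Delta_T)+(d_1+d_2)^{3/2}T^{2/3}\hat\Psi_T$.

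\textbf{Main obstacle.} The hardest part is controlling the nested estimators \eqref{app:h}--\eqref{app:g}: each is an outer finite difference of step $\rho_{\m{v}}$ along $\m{v}_t$ applied to the inner two-point gradient estimators \eqref{eqn:gy:estimate},\eqref{eqn:g:estimate}, so its bias is $\mc{O}(\rho_{\m{v}}\ell_{g,2}\|\m{v}_t\|^2)$ while its variance inflates by $1/\rho_{\m{v}}^2$ relative to the already $\mc{O}(d_2)$-scaled gradient noise. The coupling $\rho_{\m{v}}^2=c_{\m{v}}\alpha_t$ is what balances this bias against the variance, and the boundedness of $\m{v}_t$ secured by the $\mc{Z}_p$ projection is essential for both terms to stay finite; verifying that their combined contribution is absorbed into the $(d_1+d_2)^{3/4}$ and $(d_1+d_2)^{3/2}$ prefactors rather than degrading the $T$-exponents is the delicate bookkeeping step, and it is what dictates the precise constants $c,c_\beta,c_\delta,c_\gamma,c_\eta,c_{\m{v}},c_\lambda$ referenced in \eqref{eqn:tt4}.
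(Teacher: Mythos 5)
Your proposal is correct and follows essentially the same route as the paper's proof: the paper likewise runs the entire analysis against the smoothed objectives and optima (tracking recursions for $\|\m{y}_t-\hat{\m{y}}^*_t(\m{x}_t)\|^2$ and $\|\m{v}_t-\hat{\m{v}}^*_t(\m{x}_t)\|^2$ in Lemmas~\ref{lm:nn2} and~\ref{lm:18}), establishes STORM-type error recursions for $\hat{\m{d}}_t^{\m{y}},\hat{\m{d}}_t^{\m{v}},\hat{\m{d}}_t^{\m{x}}$ (Lemmas~\ref{lm:g1}, \ref{lm:g1q}, \ref{lm:s}), combines them in a weighted Lyapunov sum, and bridges the smoothed and true hypergradients via the perturbation bounds, with the $\mc{Z}_p$ projection ($p=\ell_{f,0}/\mu_g$) and the coupling $\rho_{\m{v}}^2=c_{\m{v}}\alpha_t$ taming the nested estimators exactly as you describe. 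The only imprecision is that the hypergradient smoothing bias per round is $\mc{O}\bigl(d_1^2\rho_{\m{s}}^2+d_2^2\rho_{\m{r}}^2\bigr)$ (Lemma~\ref{lm:gg1}) rather than $\mc{O}(\rho_{\m{s}}^2+\rho_{\m{r}}^2)$, but since $\rho_{\m{s}}^2=1/(d_1^2T)$ and $\rho_{\m{r}}^2=1/(d_2^2T)$ both quantities are $\mc{O}(1/T)$, so your conclusion stands unchanged.
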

Theorem \ref{thm:nonc2:cons:zeroth} bounds the regret of Algorithm \ref{alg:obgd:zeroth} without window-smoothing, based on the regularities in \eqref{del22}. We note that the average dynamic regret   $\textnormal{BL-Reg}_{T}/T \leq \mathcal{O}((d_1+d_2)^{3/4}T^{-2/3}
 \left(\hat{\sigma}^2+\hat\Delta_T\right)
+(d_1+d_2)^{3/2}T^{-1/3}\hat\Psi_{T})$
remains sublinear under suitable conditions on \( \hat{\Delta}_{T} \), \( \hat{\Psi}_{T} \), and $\hat{\sigma}$.
\begin{remark}[\textbf{Regret Guarantee for Zeroth Order  OBO}]
Theorem \ref{thm:nonc2:cons:zeroth} provides the first regret guarantee for OBO with access only to noisy function evaluations of the leader and follower. The dimensional dependence \(\mathcal{O}(d_1+d_2)\) in Theorem \ref{thm:nonc2:cons:zeroth} aligns with optimal results for simpler offline min-max problems \cite{huang2022accelerated}. The bound also depends on the sample sizes \(b, \bar{b}\) and smoothing parameters \(\rho_{\m{v}}, \rho_{\m{r}}, \rho_{\m{s}}\) at each iteration.     
\end{remark}
\begin{remark}[\textbf{Improved Regret for OSO}]
Our dynamic regret for single-level non-stationary optimization is $\mathcal{O}((d_1+d_2)^{3/4}T^{-2/3}(\hat{\sigma}^2 + E_1 + V_T + D_T))$, improving the result in \cite{roy2022stochastic}, which is $\mathcal{O}(T^{-1/2}\sigma^2 \sqrt{d})$. \cite{roy2022stochastic} proposed a zeroth-order stochastic gradient descent algorithm for unconstrained, non-convex, time-varying objective functions, achieving a regret bound of $\mathcal{O}(T^{-1/2}\sigma^2 \sqrt{dW_T})$ using a two-point gradient estimator, where $W_T$ bounds the nonstationarity. Additionally, \cite{guan2023hardness} showed that the local regret for standard online stochastic gradient descent with the standard two-point gradient estimator \cite{agarwal2010optimal} is $\mathcal{O}(T^{-1/2}d\sqrt{V_T})$.
\end{remark}




%
\section{Experimental Results}\label{sec:appl}
In this section, we present experimental results for two applications: online black-box attacks on deep neural networks and parametric loss tuning for imbalanced data. Code is available at~\href{https://github.com/Tarzanagh/SOGD}{{\large\faGithub}}. Additional experiments and details on hyperparameter tuning are provided in Appendix~\ref{sec:EE}.

\paragraph{Bilevel Optimization for Black-Box Adversarial Attacks (BBAA)}\label{sec:exp:BBAA}%
Deep neural networks are vulnerable to adversarial examples—inputs subtly perturbed to mislead classifiers. These examples can fool models without access to their internals, as in \cite{chen2017zoo,liu2018zeroth,chen2019zo}. We first review the ZO single-level formulation for BBAA \cite{chen2017zoo}. Let \((\mathbf{a}, b)\) be a clean image \(\mathbf{a} \in \mathbb{R}^d\) with label \(b \in \{1, \ldots, J\}\), and define \(\mathbf{a}' = \mathbf{a} + \mathbf{y}\), where \(\mathbf{y}\) is the adversarial perturbation. Let \(\mathcal{Y} := [-5, 5]^d\), and \(\ell: \mathbb{R}^d \rightarrow \mathbb{R}\) be the black-box attack loss. For a given hyperparameter \(\lambda > 0\), the BBAA problem is:
\begin{equation}\label{eq:original attack}
    \min_{\mathbf{y} \in \mathcal{Y}} \; \frac{1}{m} \sum_{i=1}^m \ell(\mathbf{a}_i + \mathbf{y}) + \lambda \|\mathbf{y}\|^2.
\end{equation}
To adapt \eqref{eq:original attack} to our OBO, consider OBO for supervised learning: at each timestep $t$, new samples $(\m{a}_t, b_t) \in \mc{D}_t := \{\mc{D}^{\text{val}}_t, \mc{D}^{\text{tr}}_t\}$ are received, where $\m{a}_t \in \mathbb{R}^{d_2}$ is the feature vector (image) and $b_t \in \mathbb{R}$ is the corresponding target. Note that the correct decision can change abruptly. We consider an $S$-stage scenario where $(\m{x}_s^*, \m{y}_s^*(\m{x}_s^*))$ represents the best decisions for the $s$-th stage, for all $s \in [S]$:
\vspace{-.2cm}
\begin{small}
\begin{align}\label{eqn:onl:hyper}
\m{x}^*_s \in \underset{\m{x} \in\mc{X}}{\textnormal{argmin}} 
 \sum_{t=1}^{T_s} f\left(\m{y}^*_s(\m{x}); \mc{D}^{\text{val}}_t \right) \quad \text{s.t.} \quad \m{y}^*_s(\m{x}) \in \underset{ \m{y}\in \mc{Y}}{\textnormal{argmin}}~\sum_{t=1}^{T_s} g\left(\m{x},\m{y};  \mc{D}^{\text{tr}}_t\right)
\end{align}
\end{small}
\vspace{-.4cm}
\begin{small}
\begin{subequations}\label{eq:zo}
\begin{align}
\nonumber g(\m{x}_t,\m{y}_t;\mathcal{D}_t^\text{tr})&=\frac{1}{|\mathcal{D}_t^\text{tr}|}  \sum_{i \in \mathcal{D}_t^\text{tr}}\ell(\m{a}^{(i)}_t+\m{y}_t)+\frac{1}{2}\sum_{\iota=1}^p e^{[\m{x}_t]_{\iota}} [\m{y}_t]_{\iota}^2,  \\
f(\m{y}_t(\m{x}_t);\mathcal{D}_t^\text{val})&= \frac{1}{|\mathcal{D}_t^\text{val}|}  \sum_{i \in \mathcal{D}_t^\text{val}}\ell(\m{a}^{(i)}_t+\m{y}_t).
\end{align}
\end{subequations}
\end{small}
Here, $\{\m{a}^{(i)}_t\}_{i \in \mathcal{D}_t^\text{tr}}$ and $\{\m{a}^{(i)}_t\}_{i \in \mathcal{D}_t^\text{val} }$ are batches  of training and validation samples at timestep $t$;  $\m{a}^{(i)}_t$ is the $i$th sample in that batch; and $[\m{x}_t]_{\iota}$ and  $[\m{y}_t]_{\iota}$ denote the $\iota$th component of $\m{x}_t$ and $\m{y}_t$, respectively.

We normalize the pixel values to $\mathcal{Y}$. For an untargeted attack, the loss in \eqref{eq:zo} is $\ell(\m{a}'_t) = \max\{Z(\m{a}'_t)_{b_t} - \max_{j \neq b_t} Z(\m{a}'_t)_j, -\kappa\}$, where $Z(\m{a}'_t)_j$ is the prediction score for class $j$ given input $\m{a}'_t = \m{a}_t + \m{y}_t$, and $\kappa > 0$ controls the confidence gap. In our experiments, we set $\kappa = 0$. Eq. \eqref{eqn:onl:hyper} introduces the first OBO formulation of BBAA. Using a vector $\mathbf{x} \in \mathbb{R}^d_{+}$ for hyperparameters instead of $\lambda \in \mathbb{R}_{++}$ in \eqref{eq:original attack} enables finer control over model components, enhancing performance for complex models and heterogeneous data \cite{lorraine2020optimizing}. For a fair comparison with single-level BBAA, we replace $\lambda$ with a fixed vector multiplied by each component of $\mathbf{y}$ in \eqref{eq:original attack}. We compare our ZO-SOGD and ZO-SOGD (Adam) with the following competing methods in the online setting: \textbf{ZO-O-GD}, a single-level method that updates $\m{y}_t$ with a fixed $\m{x}$ at each timestep using ZO gradient descent \cite{nesterov2017random}; \textbf{ZO-O-Adam}, a single-level method that updates $\m{y}_t$ with a fixed $\m{x}$ at each timestep using ZO Adam \cite{kingma2014adam,chen2019zo}; \textbf{ZO-O-SignSGD}, a single-level method that updates $\m{y}_t$ with a fixed $\m{x}$ at each timestep using ZO SignSGD \cite{bernstein2018signsgd}; and \textbf{ZO-O-ConservSGD}, a single-level method that updates $\m{y}_t$ with a fixed $\m{x}$ at each timestep using ZO Conservative SGD \cite{kim2021curvature}. Note that ZO-SOGD (ours, Adam) is a variant of our algorithm with an adaptive stepsize, similar to that of \cite{kingma2014adam}.

\begin{figure*}
\begin{center}
\includegraphics[width=0.32\textwidth]{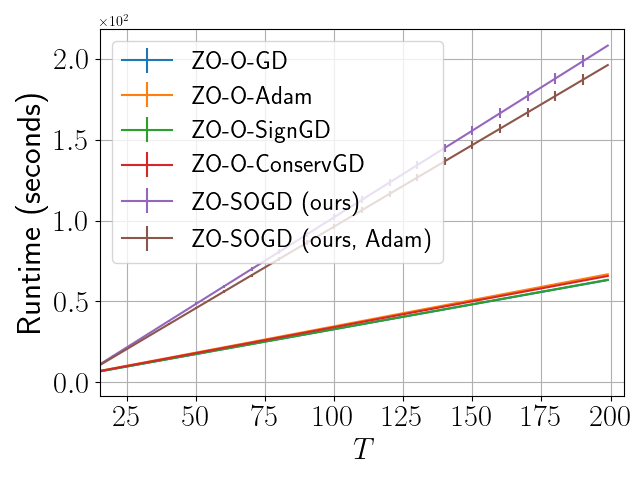} 
\includegraphics[width=0.32\textwidth]{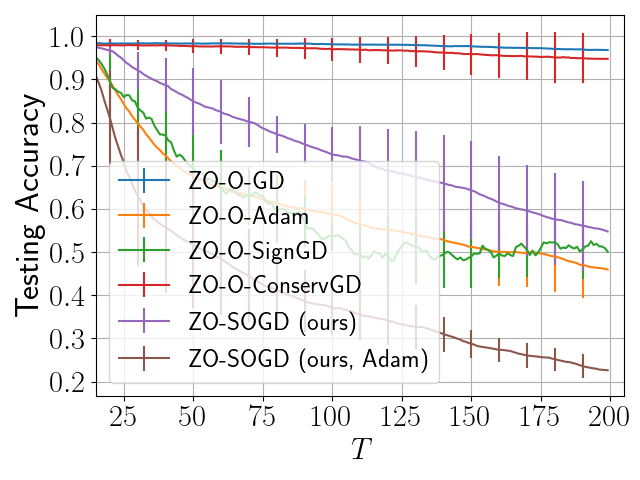}
\includegraphics[width=0.32\textwidth]{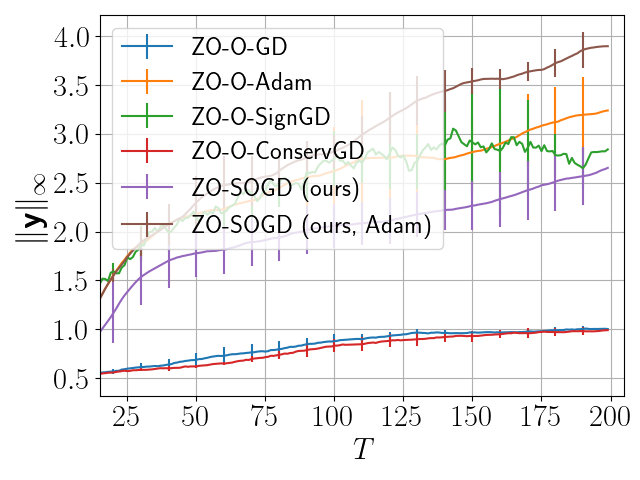}
\vspace{-.25cm}
\caption{\label{fig:zero_attack_online}Performance comparison (mean$\pm$std) of optimizers including ZO-O-GD, ZO-O-Adam, ZO-O-SignSGD, ZO-O-ConservSGD, ZO-SOGD, and ZO-SOGD (Adam) on \textbf{online adversarial attack} for {MNIST data} across five runs.}
\vspace{-.25cm}
\end{center}
\vspace{-8pt}
\end{figure*}
\begin{figure*}
\begin{center}
\includegraphics[width=0.326\textwidth]{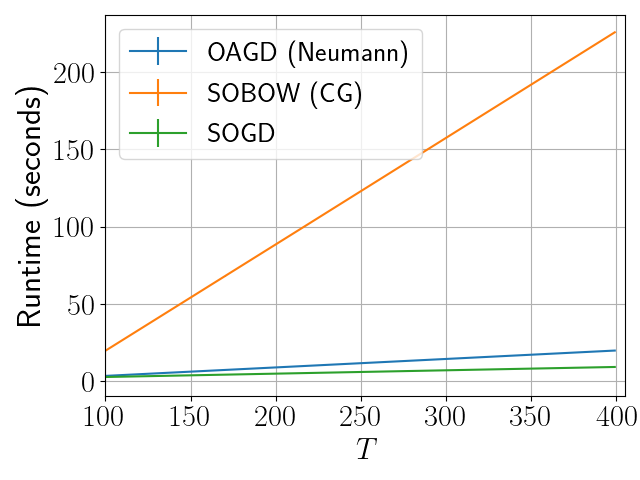}
\includegraphics[width=0.326\textwidth]{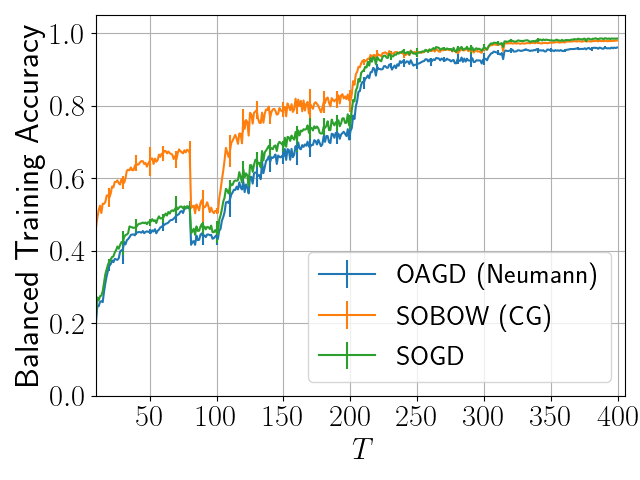} 
\includegraphics[width=0.326\textwidth]{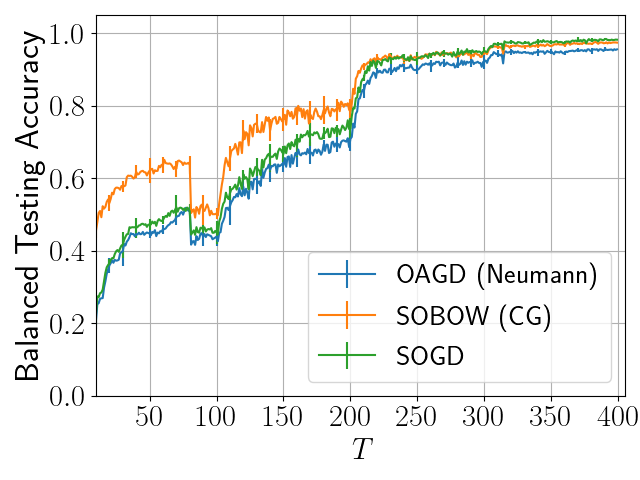}
\vspace{-.25cm}
\caption{\label{fig:gradual imbalance} Performance (mean$\pm$std) on \textbf{online parametric loss tuning} with distribution shift on MNIST across five runs, comparing OGD~\cite{zinkevich2003online}, OAGD~\cite{tarzanagh2024online}, SOBOW~\cite{lin2024non}, and our SOGD.}
\vspace{-.25cm}
\end{center}
\vspace{-8pt}
\end{figure*}

We evaluated the proposed algorithms based on runtime, test accuracy on perturbed samples, and the infinity norm of \(\m{y}_t\). Figure~\ref{fig:zero_attack_online} compares the methods. The left panel shows that ZO-SOGD has a slower runtime than single-level baselines due to outer-level optimization on \(\m{x}\). The middle panel illustrates that accuracy decreases as the adversarial attack \(\m{y}\) strengthens, with ZO-SOGD outperforming ZO-O-GD and ZO-O-ConservGD, while ZO-SOGD (Adam) surpasses ZO-O-Adam and all baselines. The right panel indicates that the infinity norm of \(\m{y}_t\) increases over time for all methods, reducing accuracy. However, perturbations remain minor, with \(\max \m{y}_t\) not exceeding 4, demonstrating that ZO-SOGD achieves effective attacks with superior performance.

\paragraph{Parametric Loss Tuning for
Imbalanced Data}  Imbalanced datasets are common in modern machine learning, causing challenges in generalization and fairness due to underrepresented classes and sensitive attributes. Deep NNs often overfit, seeming accurate and fair during training but performing poorly during testing. A common solution is designing a parametric training loss that balances accuracy and fairness while preventing overfitting~\cite{li2021autobalance}. We consider an optimization problem similar to that in \eqref{eqn:onl:hyper}. For a new sample \((\mathbf{a}_t, b_t)\), the follower and leader incur a parametric and balanced cross-entropy loss, respectively:
\begin{align}\label{auto:main:obj}
 &g(\m{x}_{t},\m{y}_t; \mc{D}^{\text{tr}}_t)=-\log  \frac{e^{\gamma_{b_t} [\m{y}_t(\m{a}_t)]_{b_t} + \Delta_{b_t}}}{\sum_{j = 1}^{J}e^{\gamma_j [\m{y}_t(\m{a}_t)]_j + \Delta_j}}, ~~ f(\m{y}_t(\m{x}_t); \mc{D}^{\text{val}}_t)=-u_{b_t}\log  \frac{e^{[\m{y}_t(\m{a}_t)]_{b_t}}}{\sum_{j = 1}^{J}   e^{[\m{y}_t(\m{a}_t)]_{j}}}.  
\end{align}
Here, $\m{x}_{t}:=(\Delta_j,\gamma_j)_{j=1}^J$ represents the logit adjustments, with $j$ indexing the $J$ classes, and $u_{j}$ is the reciprocal of the proportion of samples from the $j$-th class to the total number of samples \cite{li2021autobalance}.

In \eqref{auto:main:obj}, $\m{y}_t(\m{x}_t)$ is the follower conditioned on the leader, and $[\m{y}_t(\m{a}_t)]_{b_t}$ is the logit for class $b_t$ on sample $\m{a}_t$. The follower $\m{y}_t$ uses a 4-layer CNN, inducing a nonconvex bilevel objective. We compare SOGD with \textbf{OAGD}~\cite{tarzanagh2024online}, a static method using the Neumann series, and \textbf{SOBOW}~\cite{lin2024non}, a dynamic method using conjugate gradients (CG). Experiments were conducted on MNIST~\cite{lecun2010mnist} with batch size 64. We evaluated cumulative runtime, test accuracy, and balanced accuracy, defined as $\frac{1}{J} \sum_{j=1}^{J} \mathbb{P}_{\m{a}_t \sim \mathcal{D}_j} \left[\argmax_i([\m{y}_t(\m{a}_t)]_i) = j \right]$, where $\mathcal{D}_j$ is the class-$j$ sample distribution~\cite{li2021autobalance}. Learning rates were tuned as $\beta_t = \delta_t = \beta \in \{0.001, 0.005, 0.01, 0.05, 0.1\}$, $\alpha_t = \alpha \in \{0.0001, 0.0005, 0.001, 0.005, 0.01\}$, and $\gamma_t = \lambda_t = \eta_t = \gamma \in \{0.9, 0.99, 0.999\}$. Both OAGD and SOBOW used 5 iterations for their respective system solvers.

We evaluated performance over 400 timesteps in four 100-timestep phases, transitioning from an imbalanced (\(0.4^i\)) to a balanced (\(0.8^i\)) distribution for each class (\(i=0,1,\ldots,9\)). Figure~\ref{fig:gradual imbalance} (left) shows SOBOW's longer runtime due to CG complexity, while SOGD is the fastest with simultaneous updates. Figures~\ref{fig:gradual imbalance} (middle, right) show accuracy gains as balance increases, with SOGD achieving competitive accuracy.

\section{Conclusion}
This work introduced a novel online bilevel optimization framework that overcomes the limitations of existing algorithms, which often depend on extensive oracle information and incur high computational costs. Our method leverages limited feedback and zeroth-order updates for efficient hypergradient estimation and simultaneous updates of decision variables, achieving \textit{sublinear} bilevel regret without window smoothing. Experiments on online parametric loss tuning and black-box adversarial attacks validate its effectiveness. A limitation of this study is that the results focus on nonconvex regret bounds, without extending guarantees to convex settings.

\begin{ack}
We thank the reviewers for their valuable comments.  
The work of DAT was supported by Samsung SDS Research America, Mountain View.  
The work of GM was supported in part by NSF grants DMS--2348640 and DMS--2319552.
\end{ack}
\bibliography{refs}
\bibliographystyle{plain}  
\newpage
\appendix
\addtocontents{toc}{\protect\setcounter{tocdepth}{3}}
\tableofcontents
\newpage

\section{Related Work}\label{sec:related-work}

\noindent\textbf{BO} was introduced in game theory by \cite{stackelberg1952theory} and modeled mathematically in \cite{bracken1973mathematical}. Initial works \cite{hansen1992new,lv2007penalty} reduced it to single-level optimization. Recently, gradient-based approaches have gained popularity for their simplicity and efficacy \cite{franceschi2017forward,ghadimi2018approximation,ji2021bilevel,chen2021closing,nazari2025penalty, chen2021closing,ataee2023federated, shen2023penalty}, though they assume offline objectives.

\noindent\textbf{OBO} was initiated by \cite{tarzanagh2024online}, proposing the OAGD method with regret bounds. \cite{huang2023online} developed algorithms for online minimax optimization, special cases of OBO with local regret guarantees. \cite{lin2024non} introduced SOBOW, a single-loop optimizer using window-smoothed functions and multiple CGs for nonconvex-strongly-convex cases. Unlike these works, we propose using \textit{projected gradient} as a more general performance measure for constrained objectives, focusing on the original functions and their regret; See Table~\ref{tab:compare:obo} for a comparison.

\noindent\textbf{Single-Level Regret Minimization.} 
Single-level online optimization predominantly focuses on convex problems, either with static or dynamic convex regret minimization \cite{zinkevich2003online, Hazan16a, shalev2011online}. Non-convex online optimization \cite{hazan2017efficient, guan2023online, guan2023hardness} poses greater challenges than its convex counterparts \cite{shalev2011online, zinkevich2003online, hazan2007logarithmic, besbes2015non}. Notable contributions in this field include adversarial multi-armed bandit algorithms \cite{bubeck2008online, heliou2020online, heliou2021zeroth, krichene2015hedge} and the Follow-the-Perturbed-Leader approach \cite{agarwal2019learning, kleinberg2008multi, suggala2020online}. Hazan et al. \cite{hazan2017efficient} introduced window-smoothed local regret for gradient averaging in non-convex models, which Hallak et al. \cite{hallak2021regret} extended to non-smooth, non-convex problems. Inspired by their work, we employ local regret for OBO without window-smoothing.
 
\noindent\textbf{Zeroth-Order Optimization.}
Single-Level ZO Optimization has been widely studied in both offline \cite{ghadimi2013stochastic, duchi2015optimal, agarwal2010optimal, nesterov2017random,nazari2020adaptive} and online settings \cite{liu2018zeroth, guan2023hardness, guan2023online, zhang2020boosting, bach2016highly}. We next review closely related work. Liu et al. \cite{liu2018zeroth} proposed ZOO-ADMM, a gradient-free online optimization algorithm utilizing ADMM. Guan et al. \cite{guan2023online} studied online non-convex optimization with limited oracle feedback. Research on online non-convex optimization with bandit feedback includes work by Heliou et al. \cite{heliou2020online}, which established bounds on global static and dynamic regret using dual averaging, further refined in \cite{heliou2021zeroth}. Gao et al. \cite{gao2018online} extended these ideas to ZO algorithms. Flaxman et al. \cite{flaxman2004online} provided algorithms for bandit online optimization of convex functions using ZO gradient approximation. Our work closely relates to \cite{sow2022convergence}, which proposes a Hessian-free method approximating the Jacobian matrix using a ZO method based on finite differences of gradients. In contrast, our method uses function oracles to approximate both the Hessian and gradients and is derivative-free. We also point out the recent
work \cite{aghasi2024fully} on ZO stochastic algorithms for solving bilevel problems when neither the upper/lower objective values nor their unbiased gradient estimates are available. Their approach, limited to the \textit{offline} setting, does not include numerical results, thus leaving its practical efficiency unclear.

\section{Additional Preliminaries and Notations}\label{sec:appendix-tech-lemma}

\subsection{Preliminary Lemmas}
We first provide several useful lemmas for the main proofs.
\begin{definition}[\textnormal{\textbf{Projected gradient} \cite{ghadimi2016mini}}]\label{proj}
Let $\mc{X} \subset \mb{R}^{d_1}$ be a closed convex set. Then, the projected gradient for any $\alpha_t>0$ and $\m{p}\in \mb{R}^{d_1}$ is defined as
\begin{equation*}
\mc{P}_{\mc{X},\alpha_t}\left(\m{x};\m{p}\right):=\frac{1}{\alpha_t}\left(\m{x}-\m{x}^+\right),
\end{equation*}
where
\begin{align}\label{xplus}
\m{x}^+=\Pi_{\mc{X}} \left(\m{x} -  \alpha_t \m{p} \right),
\end{align}
and $\Pi_{\mc{X}} \left[\cdot\right]$ denotes the orthogonal projection operator onto set $\mc{X}$.

\end{definition}
\begin{lemma}\textnormal{\cite[Lemma 13]{goel2019beyond}}\label{lm:strongly}
If $f:\mathcal{X}\rightarrow \mb{R}$ is a $\mu_f$-strongly convex function with respect to some norm $\|\cdot\|$, and $\m{x}^*$ is the minimizer of $f$ (i.e. $\m{x}^*=\arg\min_{\m{x}\in \mathcal{X}}f(\m{x})$), then we have $\forall \ \m{x}\in \mathcal{X}$,
\begin{align*}
\frac{\mu_f}{2}\| \m{x}-\m{x}^*\|^2\leq f(\m{x})-f(\m{x}^*) \leq \frac{1}{2\mu_f}\|\nabla f(\m{x})\|^2.
\end{align*}
\end{lemma}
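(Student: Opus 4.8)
The plan is to establish the two inequalities separately, both as direct consequences of the defining inequality of $\mu_f$-strong convexity: for all $\m{x}, \m{z} \in \mc{X}$,
\begin{equation}\label{eq:sc-def}
f(\m{z}) \geq f(\m{x}) + \langle \nabla f(\m{x}), \m{z} - \m{x} \rangle + \frac{\mu_f}{2}\|\m{z} - \m{x}\|^2.
\end{equation}
Everything reduces to two instantiations of \eqref{eq:sc-def}, one for each bound.

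For the lower bound, I would instantiate \eqref{eq:sc-def} with the anchor point taken to be the minimizer $\m{x}^*$ and the test point an arbitrary $\m{x} \in \mc{X}$, giving
\begin{equation*}
f(\m{x}) \geq f(\m{x}^*) + \langle \nabla f(\m{x}^*), \m{x} - \m{x}^* \rangle + \frac{\mu_f}{2}\|\m{x} - \m{x}^*\|^2.
\end{equation*}
The linear term is nonnegative by the first-order optimality condition at $\m{x}^*$ (it vanishes entirely when $\mc{X} = \mb{R}^{d}$, since then $\nabla f(\m{x}^*) = 0$), so dropping it yields $f(\m{x}) - f(\m{x}^*) \geq \frac{\mu_f}{2}\|\m{x} - \m{x}^*\|^2$.

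For the upper bound, I would fix $\m{x}$ and read the right-hand side of \eqref{eq:sc-def} as a quadratic surrogate $q(\m{z})$ that minorizes $f$ at every $\m{z}$. Minimizing $q$ in closed form over $\m{z} \in \mb{R}^{d}$ gives the minimizer $\m{z}^\star = \m{x} - \frac{1}{\mu_f}\nabla f(\m{x})$ and the minimal value $q(\m{z}^\star) = f(\m{x}) - \frac{1}{2\mu_f}\|\nabla f(\m{x})\|^2$. Since $f(\m{z}) \geq q(\m{z}) \geq q(\m{z}^\star)$ for every $\m{z} \in \mc{X}$, taking the infimum over $\m{z} \in \mc{X}$ on the left preserves the inequality, so $f(\m{x}^*) \geq q(\m{z}^\star)$, and rearranging produces $f(\m{x}) - f(\m{x}^*) \leq \frac{1}{2\mu_f}\|\nabla f(\m{x})\|^2$.

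The argument is elementary, and the one point requiring care is the direction of the minimization step in the upper bound: one minimizes the quadratic surrogate over all of $\mb{R}^{d}$ (not merely over $\mc{X}$) and then uses that this surrogate lies below $f$ pointwise, so its unconstrained minimum still lower-bounds the constrained optimal value $f(\m{x}^*)$. I would also note that both bounds are applied in the paper with $\mc{X} = \mb{R}^{d_2}$ for the strongly convex inner objective $g_t$, in which case $\nabla f(\m{x}^*) = 0$ and the optimality term in the lower bound simply drops out.
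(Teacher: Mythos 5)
Your proof is correct and is the standard argument for both inequalities; note that the paper itself gives no proof of this lemma but imports it verbatim from \cite[Lemma 13]{goel2019beyond}, so there is no in-paper derivation to diverge from, and your two instantiations of the strong-convexity inequality (anchored at $\m{x}^*$ for the lower bound, and minimizing the quadratic minorant at $\m{x}$ for the upper bound, with correct handling of the constrained optimality condition and of the fact that $\m{z}^\star$ need not lie in $\mc{X}$) are exactly how the cited result is established. One small caveat: your closed-form minimizer $\m{z}^\star=\m{x}-\frac{1}{\mu_f}\nabla f(\m{x})$ and the value $-\frac{1}{2\mu_f}\|\nabla f(\m{x})\|^2$ silently use the Euclidean structure, whereas the lemma is stated for ``some norm''; for a general norm the upper bound should carry the dual norm of the gradient, but since every invocation in the paper (e.g., Lemma~\ref{yyy} for the strongly convex inner objective) uses the Euclidean norm, which is self-dual, your argument covers all uses.
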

\begin{lemma}\label{sm}
Suppose $f(\m{x})$ is $L$-smooth, and $\m{x}^*\in \argmin_{\m{x}\in \mathcal{X}}f(\m{x})$. Then, we can upper bound the magnitude of the gradient at any given point $\m{x}\in \mb{R}^d$ in terms of the objective sub optimality at $\m{x}$, as follows:
\begin{align}
\frac{1}{2L}\|\nabla f(\m{x})\|^2\leq f(\m{x})-f(\m{x}^*)\leq \frac{L}{2}\|\m{x}-\m{x}^* \|^2.
\end{align}
\end{lemma}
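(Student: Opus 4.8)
The plan is to derive both inequalities from a single tool: the quadratic upper bound (descent lemma) implied by $L$-smoothness, namely that for all $\m{u},\m{w}\in\R^d$ one has $f(\m{w})\le f(\m{u})+\langle \nabla f(\m{u}),\m{w}-\m{u}\rangle+\tfrac{L}{2}\norm{\m{w}-\m{u}}^2$. First I would establish this descent inequality by applying the fundamental theorem of calculus to $t\mapsto f(\m{u}+t(\m{w}-\m{u}))$ and then using the Lipschitz-gradient hypothesis $\norm{\nabla f(\m{w})-\nabla f(\m{u})}\le L\norm{\m{w}-\m{u}}$ to bound the integrand. This is the one routine computation underlying everything, and both halves of the claim are immediate consequences of instantiating it at well-chosen points.

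For the right-hand inequality $f(\m{x})-f(\m{x}^*)\le \tfrac{L}{2}\norm{\m{x}-\m{x}^*}^2$, I would apply the descent lemma with base point $\m{u}=\m{x}^*$ and $\m{w}=\m{x}$. Since $\m{x}^*$ minimizes $f$, first-order optimality gives $\nabla f(\m{x}^*)=0$, so the inner-product term vanishes and the bound collapses to exactly $f(\m{x})\le f(\m{x}^*)+\tfrac{L}{2}\norm{\m{x}-\m{x}^*}^2$, which is the desired upper bound on the suboptimality.

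For the left-hand inequality $\tfrac{1}{2L}\norm{\nabla f(\m{x})}^2\le f(\m{x})-f(\m{x}^*)$, I would instead keep the base point at the given $\m{x}$ and minimize the quadratic upper bound over $\m{w}$. The right side is a strictly convex quadratic in $\m{w}$, minimized at the gradient step $\m{w}=\m{x}-\tfrac{1}{L}\nabla f(\m{x})$, which yields $f\big(\m{x}-\tfrac{1}{L}\nabla f(\m{x})\big)\le f(\m{x})-\tfrac{1}{2L}\norm{\nabla f(\m{x})}^2$. Because $\m{x}^*$ is a global minimizer, $f(\m{x}^*)\le f\big(\m{x}-\tfrac{1}{L}\nabla f(\m{x})\big)$; chaining these two inequalities and rearranging gives exactly the claimed bound.

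The main obstacle is not computational but a matter of hypotheses: the left inequality (and the vanishing-gradient step in the right inequality) silently requires that the comparison point $\m{x}-\tfrac{1}{L}\nabla f(\m{x})$ be admissible and that $\nabla f(\m{x}^*)=0$, i.e.\ that the minimization is effectively unconstrained ($\mathcal{X}=\R^d$, or $\m{x}^*$ interior). I would therefore state and prove the result in the unconstrained regime, consistent with the ``$\m{x}\in\R^d$'' phrasing, and remark that for a proper constrained set $\mathcal{X}$ the projected-gradient quantity of Definition~\ref{proj} plays the role of $\norm{\nabla f(\m{x})}$.
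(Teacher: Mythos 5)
Your proof is correct and is exactly the standard argument this lemma rests on; the paper itself states Lemma~\ref{sm} without proof, treating it as a textbook fact, so there is no alternative route in the paper to compare against. Both steps check out: the descent lemma at base point $\m{x}^*$ with $\nabla f(\m{x}^*)=0$ gives the right-hand inequality, and minimizing the quadratic upper bound at base point $\m{x}$ over $\m{w}$ (attained at $\m{w}=\m{x}-\tfrac{1}{L}\nabla f(\m{x})$), combined with global optimality of $\m{x}^*$, gives the left-hand one. Your closing caveat is also well taken and sharper than the paper's own statement: as written, the lemma mixes $\m{x}^*\in\argmin_{\m{x}\in\mathcal{X}}f(\m{x})$ with $\m{x}\in\mb{R}^d$, and both the vanishing of $\nabla f(\m{x}^*)$ and the admissibility of the gradient step $\m{x}-\tfrac{1}{L}\nabla f(\m{x})$ genuinely require the unconstrained regime (indeed, for a constrained minimizer the variational inequality gives $\langle\nabla f(\m{x}^*),\m{x}-\m{x}^*\rangle\ge 0$ for $\m{x}\in\mathcal{X}$, the wrong sign for the right-hand bound), which is consistent with how the paper uses the lemma and with its use of the projected-gradient mapping of Definition~\ref{proj} in the constrained analysis.
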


\begin{lemma}\label{lem:trig}
For any $\m{x},\m{y}\in\mathbb{R}^d$, the following holds for any $c >0$:
\begin{align*}
    {\Vert \m{x}+\m{y} \Vert}^2 &\leq (1+c){\Vert \m{x} \Vert}^2 + \left(1+\frac{1}{c}\right){\Vert \m{y} \Vert}^2.
\end{align*}
\end{lemma}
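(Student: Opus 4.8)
The plan is to expand the squared norm using the inner-product structure of Euclidean space and then control the single cross term that arises. First I would write
\[
\norm{\m{x}+\m{y}}^2 = \norm{\m{x}}^2 + 2\langle \m{x},\m{y}\rangle + \norm{\m{y}}^2,
\]
which is an identity from the bilinearity of the inner product. The only quantity that needs bounding is the cross term $2\langle \m{x},\m{y}\rangle$; the two squared-norm terms will be matched against the target coefficients $(1+c)$ and $(1+\tfrac{1}{c})$.

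Next I would apply the Cauchy--Schwarz inequality to obtain $\langle \m{x},\m{y}\rangle \le \norm{\m{x}}\,\norm{\m{y}}$, reducing the task to bounding $2\norm{\m{x}}\,\norm{\m{y}}$. Here I would invoke the weighted arithmetic--geometric mean (Young) inequality: for any $c>0$, rewrite $2\norm{\m{x}}\,\norm{\m{y}} = 2\big(\sqrt{c}\,\norm{\m{x}}\big)\big(\tfrac{1}{\sqrt{c}}\,\norm{\m{y}}\big)$ and apply the scalar bound $2ab \le a^2 + b^2$ with $a = \sqrt{c}\,\norm{\m{x}}$ and $b = \tfrac{1}{\sqrt{c}}\,\norm{\m{y}}$, which gives $2\norm{\m{x}}\,\norm{\m{y}} \le c\,\norm{\m{x}}^2 + \tfrac{1}{c}\,\norm{\m{y}}^2$.

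Substituting this bound back into the expansion yields
\[
\norm{\m{x}+\m{y}}^2 \le \norm{\m{x}}^2 + c\,\norm{\m{x}}^2 + \tfrac{1}{c}\,\norm{\m{y}}^2 + \norm{\m{y}}^2 = (1+c)\norm{\m{x}}^2 + \big(1+\tfrac{1}{c}\big)\norm{\m{y}}^2,
\]
which is exactly the claimed inequality, completing the proof.

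There is no genuine obstacle in this argument: the statement is the standard ``Peter--Paul'' refinement of the triangle inequality, and every step is either an equality or a classical scalar inequality. The only place where the hypothesis $c>0$ is used is the division by $\sqrt{c}$ in the AM--GM step. As a sanity check on the constants, equality holds precisely when $\m{y} = c\,\m{x}$, confirming that the coefficients $(1+c)$ and $(1+\tfrac{1}{c})$ cannot be improved for a fixed $c$.
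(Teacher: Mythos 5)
Your proof is correct and complete. The paper states Lemma~\ref{lem:trig} without any proof (treating it as a standard fact), so there is nothing to compare against; your argument --- expanding $\norm{\m{x}+\m{y}}^2$, applying Cauchy--Schwarz to the cross term, and then the weighted AM--GM bound $2\norm{\m{x}}\norm{\m{y}} \leq c\norm{\m{x}}^2 + \tfrac{1}{c}\norm{\m{y}}^2$ --- is the canonical derivation, and your observation that equality holds exactly when $\m{y} = c\,\m{x}$ is also accurate.
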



We also utilize a basic yet important property of the projected-gradient mapping.

\begin{lemma}\textnormal{\cite[Proposition 1]{ghadimi2016mini}}\label{lm:ph}
Let $\mc{P}_{\mc{X},\alpha_t}(\m{x};\m{p})$ denote the projected gradient as defined in Definition \ref{proj}. For any $\m{x},\m{p}_1,\m{p}_2\in\mathbb{R}^d$ and $\alpha_t>0$, it holds that
\begin{equation*}
 \norm{\mc{P}_{\mc{X},\alpha_t}(\m{x};\m{p}_1)-\mc{P}_{\mc{X},\alpha_t}(\m{x};\m{p}_2)}\leq \norm{\m{p}_1-\m{p}_2}.
\end{equation*}
\end{lemma}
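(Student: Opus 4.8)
The plan is to reduce the stated inequality directly to the nonexpansiveness of the Euclidean projection $\Pi_{\mc{X}}$ onto the closed convex set $\mc{X}$. First I would unfold Definition~\ref{proj} at the common point $\m{x}$ for the two search directions: setting $\m{x}^+_i := \Pi_{\mc{X}}(\m{x} - \alpha_t \m{p}_i)$ for $i \in \{1,2\}$, the definition gives $\mc{P}_{\mc{X},\alpha_t}(\m{x};\m{p}_i) = \tfrac{1}{\alpha_t}(\m{x} - \m{x}^+_i)$. Subtracting the two expressions, the shared leading term $\m{x}$ cancels, yielding
\[
\mc{P}_{\mc{X},\alpha_t}(\m{x};\m{p}_1) - \mc{P}_{\mc{X},\alpha_t}(\m{x};\m{p}_2) = \frac{1}{\alpha_t}\big(\m{x}^+_2 - \m{x}^+_1\big).
\]

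Next, I would take norms and apply the fact that $\Pi_{\mc{X}}$ is $1$-Lipschitz, i.e., $\norm{\Pi_{\mc{X}}(\m{a}) - \Pi_{\mc{X}}(\m{b})} \le \norm{\m{a} - \m{b}}$ for all $\m{a}, \m{b} \in \mb{R}^{d_1}$. Applying this with $\m{a} = \m{x} - \alpha_t \m{p}_2$ and $\m{b} = \m{x} - \alpha_t \m{p}_1$, and observing that $\m{a} - \m{b} = \alpha_t(\m{p}_1 - \m{p}_2)$, gives
\[
\norm{\mc{P}_{\mc{X},\alpha_t}(\m{x};\m{p}_1) - \mc{P}_{\mc{X},\alpha_t}(\m{x};\m{p}_2)} = \frac{1}{\alpha_t}\norm{\Pi_{\mc{X}}(\m{x}-\alpha_t\m{p}_2) - \Pi_{\mc{X}}(\m{x}-\alpha_t\m{p}_1)} \le \frac{1}{\alpha_t}\,\alpha_t\norm{\m{p}_1 - \m{p}_2} = \norm{\m{p}_1 - \m{p}_2},
\]
which is exactly the claim; the factors $1/\alpha_t$ and $\alpha_t$ cancel, so the bound is in fact independent of the stepsize.

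Since the argument is short, there is no substantive obstacle; the only nontrivial ingredient is the nonexpansiveness of $\Pi_{\mc{X}}$. If a self-contained derivation is desired rather than quoting it, I would establish this from the variational characterization of the projection: for any $\m{z}$, the point $\m{u} = \Pi_{\mc{X}}(\m{z})$ is characterized by $\langle \m{z} - \m{u}, \m{w} - \m{u}\rangle \le 0$ for all $\m{w} \in \mc{X}$. Writing this inequality for the two projections $\m{x}^+_1$ and $\m{x}^+_2$, each tested against the other as the feasible point $\m{w}$, then summing and rearranging, yields the firm-nonexpansiveness estimate $\norm{\m{x}^+_1 - \m{x}^+_2}^2 \le \langle (\m{x}-\alpha_t\m{p}_1) - (\m{x}-\alpha_t\m{p}_2),\, \m{x}^+_1 - \m{x}^+_2\rangle$; a single application of Cauchy--Schwarz then gives the $1$-Lipschitz property. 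This is the one place where convexity of $\mc{X}$ is genuinely used, and it is the crux of the whole statement.
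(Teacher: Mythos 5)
Your proof is correct. Note that the paper itself gives no proof of Lemma~\ref{lm:ph}: it imports the statement by citation to \cite[Proposition~1]{ghadimi2016mini}, so your argument serves as a valid self-contained replacement. Your route — unfolding Definition~\ref{proj} at the common base point so that $\m{x}$ cancels, then invoking $1$-Lipschitzness of $\Pi_{\mc{X}}$ with the stepsize factors $\alpha_t$ and $1/\alpha_t$ cancelling — is the cleanest argument available for the pure-projection case. It is in fact slightly simpler than the cited source's proof: Ghadimi--Lan--Zhang prove the analogous bound for the more general composite prox-gradient mapping (with a nonsmooth term $h$ in the subproblem), which forces them to argue from the optimality conditions of the prox subproblem rather than from nonexpansiveness directly; your optional appendix, deriving firm nonexpansiveness $\norm{\m{x}^+_1-\m{x}^+_2}^2 \le \langle \m{z}_1-\m{z}_2, \m{x}^+_1-\m{x}^+_2\rangle$ from the variational characterization of the projection and finishing with Cauchy--Schwarz, is precisely the specialization of that argument to $h \equiv 0$, and you correctly identify it as the only place convexity of $\mc{X}$ enters. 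One cosmetic point: the lemma quantifies over $\m{x}\in\mb{R}^d$ while you state nonexpansiveness for points in $\mb{R}^{d_1}$; the dimensions should match, but this is immaterial to the argument, which nowhere requires $\m{x}\in\mc{X}$.
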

\begin{lemma}\textnormal{\cite[Proposition 2.4]{hazan2017efficient}}\label{lm:hazan}
Let $\mc{P}_{\mc{X},\alpha_t}(\m{x};\m{p})$ denote the projected gradient as defined in Definition \ref{proj}. For any $\m{x},\m{p}_1,\m{p}_2\in\mathbb{R}^d$ and $\alpha_t>0$, it holds that
\begin{align*}
\norm{\mc{P}_{\mc{X},\alpha_t}(\m{x};\m{p}_1+\m{p}_2)}\leq \norm{\mc{P}_{\mc{X},\alpha_t}(\m{x};\m{p}_1)}+\norm{\m{p}_2}.
\end{align*}
\end{lemma}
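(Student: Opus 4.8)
The plan is to obtain this estimate as an immediate consequence of the triangle inequality combined with the non-expansiveness property of the projected-gradient mapping already recorded in Lemma~\ref{lm:ph}. First I would insert and remove $\mc{P}_{\mc{X},\alpha_t}(\m{x};\m{p}_1)$ inside the norm on the left-hand side and apply the triangle inequality for $\norm{\cdot}$:
\begin{align*}
\norm{\mc{P}_{\mc{X},\alpha_t}(\m{x};\m{p}_1+\m{p}_2)}
&\leq \norm{\mc{P}_{\mc{X},\alpha_t}(\m{x};\m{p}_1)}
+\norm{\mc{P}_{\mc{X},\alpha_t}(\m{x};\m{p}_1+\m{p}_2)-\mc{P}_{\mc{X},\alpha_t}(\m{x};\m{p}_1)}.
\end{align*}
The two projected-gradient mappings in the last term share the same base point $\m{x}$ and step $\alpha_t$, differing only through their direction arguments $\m{p}_1+\m{p}_2$ and $\m{p}_1$. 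Hence Lemma~\ref{lm:ph} applies and gives
$\norm{\mc{P}_{\mc{X},\alpha_t}(\m{x};\m{p}_1+\m{p}_2)-\mc{P}_{\mc{X},\alpha_t}(\m{x};\m{p}_1)}\leq \norm{(\m{p}_1+\m{p}_2)-\m{p}_1}=\norm{\m{p}_2}$. Substituting this bound into the display yields the claimed inequality.

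Should one prefer a self-contained argument not relying on Lemma~\ref{lm:ph}, I would instead unfold Definition~\ref{proj}. Writing $\m{x}^+_{12}=\Pi_{\mc{X}}(\m{x}-\alpha_t(\m{p}_1+\m{p}_2))$ and $\m{x}^+_{1}=\Pi_{\mc{X}}(\m{x}-\alpha_t\m{p}_1)$, the difference term above equals $\alpha_t^{-1}\norm{\m{x}^+_1-\m{x}^+_{12}}$. The Euclidean projection $\Pi_{\mc{X}}$ onto the closed convex set $\mc{X}$ is $1$-Lipschitz (non-expansive), so $\norm{\m{x}^+_1-\m{x}^+_{12}}\leq\norm{(\m{x}-\alpha_t\m{p}_1)-(\m{x}-\alpha_t(\m{p}_1+\m{p}_2))}=\alpha_t\norm{\m{p}_2}$, whence $\alpha_t^{-1}\norm{\m{x}^+_1-\m{x}^+_{12}}\leq\norm{\m{p}_2}$, recovering the same conclusion.

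There is no substantive obstacle in this argument. The only point requiring minor care is correctly identifying that the gap between the two direction arguments is exactly $\m{p}_2$, so that the Lipschitz estimate of Lemma~\ref{lm:ph} (equivalently, the non-expansiveness of $\Pi_{\mc{X}}$, where the factor $\alpha_t$ introduced by the step cancels against the $\alpha_t^{-1}$ in the definition of $\mc{P}_{\mc{X},\alpha_t}$) produces precisely $\norm{\m{p}_2}$ rather than a scaled version of it.
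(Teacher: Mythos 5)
Your proposal is correct, and it is essentially the argument behind this lemma: the paper itself gives no proof (it simply cites \cite[Proposition 2.4]{hazan2017efficient}), and the standard proof of that cited result is exactly your route — add and subtract $\mc{P}_{\mc{X},\alpha_t}(\m{x};\m{p}_1)$, apply the triangle inequality, and bound the difference term by $\norm{\m{p}_2}$ via the $1$-Lipschitzness of $\m{p}\mapsto\mc{P}_{\mc{X},\alpha_t}(\m{x};\m{p})$ (Lemma~\ref{lm:ph}, itself a direct consequence of the non-expansiveness of $\Pi_{\mc{X}}$, as your self-contained variant makes explicit). Both versions of your argument are sound, and the cancellation of $\alpha_t$ against $\alpha_t^{-1}$ that you flag is handled correctly.
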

\begin{lemma}\label{lmp}
Let $\mc{P}_{\mc{X},\alpha_t}(\m{x};\m{p})$ be as given in Definition \ref{proj}.
Then, for any $\m{p} \in \mb{R}^d$ and $\alpha_t>0$, we have
\begin{align*}
  \left\langle \m{p},\mc{P}_{\mc{X},\alpha_t}(\m{x};\m{p}) \right\rangle \geq \norm{\mc{P}_{\mc{X},\alpha_t}(\m{x};\m{p})}^2.
\end{align*}
\end{lemma}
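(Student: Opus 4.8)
The plan is to derive the claim directly from the first-order optimality (variational) inequality that characterizes the Euclidean projection onto the closed convex set $\mc{X}$. Writing $\m{G} := \mc{P}_{\mc{X},\alpha_t}(\m{x};\m{p})$ and $\m{x}^+ = \Pi_{\mc{X}}(\m{x} - \alpha_t \m{p})$ exactly as in Definition~\ref{proj}, I would first record the two algebraic identities $\m{x} - \m{x}^+ = \alpha_t \m{G}$ and $(\m{x} - \alpha_t\m{p}) - \m{x}^+ = \alpha_t(\m{G} - \m{p})$. Both are immediate from the definition $\m{G} = \tfrac{1}{\alpha_t}(\m{x} - \m{x}^+)$: the first is a rearrangement, and the second follows by subtracting $\alpha_t \m{p}$ from the first.

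Next, I would invoke the standard projection inequality. Since $\m{x}^+$ is the Euclidean projection of $\m{z} := \m{x} - \alpha_t\m{p}$ onto the convex set $\mc{X}$, it satisfies $\langle \m{z} - \m{x}^+,\, \m{u} - \m{x}^+\rangle \leq 0$ for every $\m{u} \in \mc{X}$. I would choose the admissible test point $\m{u} = \m{x}$, which is feasible because the mapping $\mc{P}_{\mc{X},\alpha_t}$ is always evaluated at feasible iterates $\m{x} \in \mc{X}$ in our analysis. Substituting the two identities above into the projection inequality then yields $\langle \alpha_t(\m{G} - \m{p}),\, \alpha_t \m{G}\rangle \leq 0$.

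Finally, dividing by $\alpha_t^2 > 0$ and expanding the inner product gives $\norm{\m{G}}^2 - \langle \m{p}, \m{G}\rangle \leq 0$, which is precisely the asserted inequality $\langle \m{p},\, \mc{P}_{\mc{X},\alpha_t}(\m{x};\m{p})\rangle \geq \norm{\mc{P}_{\mc{X},\alpha_t}(\m{x};\m{p})}^2$.

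I do not expect a substantial obstacle: the lemma is an elementary consequence of the convexity of $\mc{X}$ and the obtuse-angle property of projections. The only point requiring genuine care is the admissibility of the test point $\m{u} = \m{x}$, which relies on $\m{x} \in \mc{X}$; this hypothesis holds throughout the paper because $\mc{P}_{\mc{X},\alpha_t}$ is applied to the projected iterates $\m{x}_t \in \mc{X}$. As a sanity check, when $\mc{X} = \mb{R}^{d_1}$ the projection reduces to the identity, so $\m{G} = \m{p}$ and the inequality holds with equality.
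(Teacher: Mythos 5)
Your proof is correct and is essentially the paper's own argument: the optimality condition $\left\langle \m{p}+\frac{1}{\alpha_t}(\m{x}^+-\m{x}),\,\m{z}-\m{x}^+\right\rangle \geq 0$ used in the paper is exactly the projection variational inequality you invoke (rescaled by $-1/\alpha_t$), and both proofs then take the test point $\m{z}=\m{x}$ and rearrange. Your explicit remark that admissibility of the test point requires $\m{x}\in\mc{X}$ correctly identifies the hypothesis the paper uses implicitly.
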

\begin{proof}
  By the definition of $\m{x}^+$, the optimality condition of \eqref{xplus} is
  \begin{align*}
\left\langle \m{p}+\frac{1}{\alpha_t}(\m{x}^+-\m{x}),\m{z}- \m{x}^+\right\rangle \geq 0, \quad \forall \m{z}\in \mathcal{X}.
  \end{align*}
Letting $\m{z}=\m{x}$, we obtain
  \begin{align*}
\left\langle \m{p},\m{x}- \m{x}^+\right\rangle \geq \frac{1}{\alpha_t}\left\langle \m{x}-\m{x}^+,\m{x}- \m{x}^+\right\rangle,
  \end{align*}
  which can be rearranged to
  \begin{align*}
  \left\langle \m{p},\mc{P}_{\mc{X},\alpha_t}(\m{x};\m{p}) \right\rangle
  &= \frac{1}{\alpha_t}\left\langle \m{p},\m{x}- \m{x}^+\right\rangle
 \geq \frac{1}{\alpha_t^2}\left\langle \m{x}-\m{x}^+,\m{x}- \m{x}^+\right\rangle\\
&=\norm{\mc{P}_{\mc{X},\alpha_t}(\m{x};\m{p})}^2.
  \end{align*}
\end{proof}

\subsection{Examples Illustrating Regularity Conditions}\label{example:v}
Theorem \ref{thm:dynamic:nonc2:cons} achieves sublinear bilevel regret when the variations $V_T$ and $H_{2,T}$ are  $o(T^{2/3})$ and $o(T^{1/3})$, respectively. Below, we provide some examples of online optimization in both single-level and bilevel settings to illustrate when this occurs.
\begin{exm}\label{exm1}
Consider function $f_t(\m{x})=\|\m{A}_t\m{x}-\m{b}_t \|^2$, where $\m{A}_t=[1 , 0 ; 0 , 1+\frac{1}{t}]$, $\m{b}_t=(1,1)$. It follows from \eqref{reg:hpt} that $V_T=\sum_{t=2}^T\max_{\m{x}}| f_t(\m{x})- f_{t-1}(\m{x})|=\sum_{t=2}^T |\left(\frac{1}{t}\right)^2-\left(\frac{1}{t-1}\right)^2 |$, and
\begin{align*}
    V_T&=\sum_{t=2}^T| \left(\frac{1}{t}-\frac{1}{t-1}\right)-\left(\frac{1}{t}+\frac{1}{t-1}\right)|
    \\&=\sum_{t=2}^T| \left(\frac{t-1-t}{t(t-1)}\right)-\left(\frac{1}{t}+\frac{1}{t-1}\right)|
    \\&=\sum_{t=2}^T| \left(-\frac{1}{t(t-1)}\right)-\left(\frac{1}{t}+\frac{1}{t-1}\right)|
     \\&=\sum_{t=2}^T| \frac{1}{t(t-1)}||\frac{t-1+t}{t(t-1)}|
    \\&=\sum_{t=2}^T|\frac{2}{t(t-1)^2} |.
\end{align*}
Then,
$V_T\leq \sum_{t=2}^T\frac{2}{t^3}\approx \int_{2}^T\frac{2}{t^3}dt=\frac{1}{4}-\frac{1}{T^2}$. As $T\rightarrow \infty$, $V_T$ becomes bounded and approaches a constant value, indicating that $V_T$ grows slower than $T$ itself.

\end{exm}

\begin{exm}\label{exm2}
Let 
$$f_t(\mathbf{x}) = \begin{cases}
\left(-\frac{1}{T}, 0\right) & \text{if } t \text{ is even}; \\
\left(0, -\frac{1}{T}\right) & \text{if } t \text{ is odd}.
\end{cases}
$$

Then, $V_T = \sum_{t=2}^T \max_{\mathbf{x}} |f_t(\mathbf{x}) - f_{t-1}(\mathbf{x})| = \mathcal{O}(1)$.
\end{exm}
\begin{exm}\label{exm3}
Let $x\in \mathcal{X} = \left[-1,1\right] \subset\mathbb{R}$, $y\in \mathbb{R}$, and consider a sequence of quadratic cost functions
\begin{align*}
 f_{t}(x,y) &=  \frac{1}{2} \left(x+2{a}_t^{(1)}\right)^2  + \frac{1}{2} \left({y}-a_t^{(2)}\right)^2, \\
 g_{t}({x},{y})&=\frac{1}{2} {y}^2- \left({x}-a_t^{(2)}\right) {y},
\end{align*}
where $a_t^{(1)}=1/t$ and $a_t^{(2)}=1/\sqrt{t}$ for all $t \in [T]$.

We have
\begin{align*}
{y}^*_t({x})={x}-a_t^{(2)},
\end{align*}
and
\begin{align*}
   &\quad f_t(x,y^*_t(x))- f_{t-1}(x,y^*_{t-1}(x))
   \\&=\frac{1}{2}\left[\left(x+2{a}_t^{(1)}\right)^2-\left(x+2{a}_{t-1}^{(1)}\right)^2\right]
   +\frac{1}{2}\left[\left({y}^*_t({x})-a_t^{(2)}\right)^2-\left({y}^*_{t-1}({x})-a_{t-1}^{(2)}\right)^2\right]
   \\&=\frac{1}{2}\left[\left(x^2+4x{a}_t^{(1)}+4({a}_t^{(1)})^2\right)-\left(x^2+4x{a}_{t-1}^{(1)}+4({a}_{t-1}^{(1)})^2\right)\right]
   \\&+\frac{1}{2}\left[\left(({x}-a_t^{(2)})^2-2({x}-a_t^{(2)})a_t^{(2)}+(a_t^{(2)})^2\right)
   -\left(({x}-a_{t-1}^{(2)})^2-2({x}-a_{t-1}^{(2)})a_{t-1}^{(2)}+(a_{t-1}^{(2)})^2\right)\right]
   \\&=2x\left({a}_t^{(1)}-{a}_{t-1}^{(1)}-{a}_t^{(2)}+{a}_{t-1}^{(2)}\right)+2\left(({a}_t^{(1)})^2-({a}_{t-1}^{(1)})^2+({a}_t^{(2)})^2-({a}_{t-1}^{(2)})^2\right).
\end{align*}
Taking the maximum over $x$ and using $x\in [-1,1]:$
\begin{align*}
 \sup_{x}| f_t(x,y^*_t(x))- f_{t-1}(x,y^*_{t-1}(x))|&=2\left|{a}_t^{(1)}-{a}_{t-1}^{(1)}\right|+2\left|-{a}_t^{(2)}+{a}_{t-1}^{(2)}\right|
 \\&+2\left|({a}_t^{(1)})^2-({a}_{t-1}^{(1)})^2\right|+2\left|({a}_t^{(2)})^2-({a}_{t-1}^{(2)})^2\right|.
\end{align*}
Since $a_t^{(1)}=1/t$ and $a_t^{(2)}=1/\sqrt{t}$ for all $t \in [T]$, then we have
\begin{align*}
&|{a}_t^{(1)}-{a}_{t-1}^{(1)} |\approx \frac{1}{t^2},\quad |{a}_t^{(2)}-{a}_{t-1}^{(2)} |\approx \frac{1}{2t^{3/2}},
\\&|({a}_t^{(1)})^2-({a}_{t-1}^{(1)})^2|\approx \frac{1}{t^3},\quad |({a}_t^{(2)})^2-({a}_{t-1}^{(2)})^2 |\approx \frac{1}{t^2}.
\end{align*}
Then, we get
\begin{align*}
V_T=\sum_{t=2}^T\sup_{x}| f_t(x,y^*_t(x))- f_{t-1}(x,y^*_{t-1}(x))|=\sum_{t=2}^T\left(\frac{2}{t^2}+\frac{1}{2t^{3/2}}+\frac{1}{t^3} \right).
\end{align*}
The series  $\sum_{t=2}^T\left(\frac{2}{t^2}+\frac{1}{2t^{3/2}}+\frac{1}{t^3} \right)$ converges, implying $V_T =\mathcal{O}(1)$.
Moreover, we have
\begin{align*}
H_{2,T}&=\sum_{t=2}^T\sup_{x}\|y^*_t(x)-y^*_{t-1}(x) \|^2=\sum_{t=2}^T\sup_{x}\|{x}-a_t^{(2)}-{x}+a_{t-1}^{(2)} \|^2
\\&=\sum_{t=2}^T|-a_t^{(2)}+a_{t-1}^{(2)} |^2=\sum_{t=2}^T|{a}_t^{(2)}-{a}_{t-1}^{(2)} |^2\approx \sum_{t=2}^T\frac{1}{4t^3},
\end{align*}
which implies $H_{2,T} =\mathcal{O}(1)$.
\end{exm}
To achieve $V_T=o(T^{2/3})$ and $H_{2,T}=o(T^{1/3})$, the changes in the cost functions $ f_t(\m{x},\m{y}^*_t(\m{x}))$ and $\m{y}^*_t(\m{x})$ should decay to zero faster than $\mathcal{O}(1/t^a)$ with $a>1/3$. For example, if the coefficients in the functions change as $\mathcal{O}(1/t^a)$ with $a>1/3$, then the cumulative sum over $T$ will be  $o(T^{2/3})$. 
 When $ f_t(\m{x},\m{y}^*_t(\m{x}))$ and $\m{y}^*_t(\m{x})$ decay as $\mathcal{O}(1/\sqrt{t})$, then the total variation grows at most as $\mathcal{O}(\sqrt{T})$.

\section{Proof of Regret Bounds for Simultaneous Online Gradient Descent (SOGD)}\label{sec:proof:sogd}

\textbf{Proof Roadmap}.  
We introduce Lemma \ref{lm:eg1}, which
quantifies the error between the approximated direction of the momentum-based gradient estimator, ${\m{d}}_{t}^{\m{y}} $, and the true direction, ${\nabla}_{\m{y}} g_{t}(\m{x}_t,\m{y}_t) $, at each
iteration. 
To bound the error of the lower-level variable, we provide Lemma \ref{lm:nn22}, which captures the gap \(\|\m{y}_{t+1} - {\m{y}}_{t}^*(\m{x}_{t})\|^2\) and incorporates the error introduced in Lemma \ref{lm:eg1}.
Moreover, we provide Lemma \ref{lm:vv}, which quantifies the error between the approximated
direction of the momentum-based gradient estimator, 
$\m{d}_{t}^\m{v}$, and the true direction, $\nabla^2_{\m{y}}g_t\left(\m{z}_t\right)\m{v}_t+\nabla_\m{y} f_t(\m{z}_t)$, at each iteration.
To bound the error of the system solution, we provide Lemma \ref{lm:vt}, which captures the gap \(\norm{\m{v}_{t+1} - {\m{v}}^*_{t}(\m{x}_{t})}^2\) and incorporates the error introduced
in Lemma \ref{lm:vv}.
Moreover, we provide Lemma \ref{lm:xx}, which quantifies the error between the approximated
direction of the momentum-based hypergradient estimator, 
$\m{d}_{t}^\m{x}$, and the true direction, $\nabla_{\m{x}} f_t(\m{z}_t) + \nabla_{\m{x}\m{y}}^2 g_t\left(\m{z}_t \right) \m{v}_t$, at each iteration.
We also present Lemma \ref{lm:projec:z}, which provides an upper bound for the projection mapping and relates to the three errors discussed in Lemmas \ref{lm:nn22}, \ref{lm:vt}, and \ref{lm:xx}. Finally, by combining these lemmas and appropriately setting the parameters, we achieve the desired result. 
\subsection{Proof of Lemma \ref{lem:equ:grad}}\label{sec:app:lem:moment}

\begin{proof}
By letting $\nu=1-\eta$ for $\eta\in (0,1)$, the window-smoothed
gradient
\begin{align*}
 {\nabla} L_{t,\nu}(\m{x}_t, \m{y}_t;\mathcal{B}_t)   = \frac{1}{W} \sum_{i=0}^{w-1} \nu ^{i} {\nabla}  l_{t-i}(\m{x}_{t-i}, \m{y}_{t-i};\mathcal{B}_{t-i}),
 \end{align*}
is equivalent to
\begin{align}\label{5t}
{\nabla} L_{t,\nu}(\m{x}_t, \m{y}_t;\mathcal{B}_t) = \frac{1}{W} \sum_{j=t-w+1}^{t} (1-\eta)^{t-j} {\nabla}  l_{j}(\m{x}_{j}, \m{y}_{j};\mathcal{B}_{j}).
\end{align}
Let ${\m{d}}_t={\nabla} L_{t,\nu}(\m{x}_t, \m{y}_t;\mathcal{B}_t) $. Then \eqref{5t} is equivalent to
\begin{align*}
 {\m{d}}_t&=\frac{1}{W}  {\nabla}  l_{t}(\m{x}_{t}, \m{y}_{t};\mathcal{B}_{t})+\frac{1}{W} \sum_{j=t-w+1}^{t-1} (1-\eta)^{t-j} {\nabla}  l_{j}(\m{x}_{j}, \m{y}_{j};\mathcal{B}_{j}).
\end{align*}
Since 
\begin{align*}
    (1-\eta){\m{d}}_{t-1}&=
 \frac{(1-\eta)}{W} \sum_{j=t-w}^{t-1} (1-\eta)^{t-1-j} {\nabla}  l_{j}(\m{x}_{j}, \m{y}_{j};\mathcal{B}_{j}),
\end{align*}
we have
\begin{align*}
 {\m{d}}_t&=\frac{1}{W}  {\nabla}  l_{t}(\m{x}_{t}, \m{y}_{t};\mathcal{B}_{t})+(1-\eta){\m{d}}_{t-1}
-\frac{(1-\eta)^w}{W}{\nabla} l_{t-w}(\m{x}_{t-w}, \m{y}_{t-w};\mathcal{B}_{t-w}),
\end{align*}
with $l_i(\cdot)=0$ for all $i\leq 0$.

If
$w=t$ and $W=\frac{1}{\eta}$ then, we have
\begin{align*}
     {\m{d}}_t&=\eta  {\nabla}  l_{t}(\m{x}_{t}, \m{y}_{t};\mathcal{B}_{t})+(1-\eta){\m{d}}_{t-1}.
\end{align*}
\end{proof}

\subsection{Bounds on the Inner Decision Variable}

In the following, inspired by offline BO~\cite{yang2023achieving,dagreou2022framework} and OBO~\cite{tarzanagh2024online,lin2024non}, we provide a set of lemmas for the analysis of SOGD. We first present a lemma that characterizes the Lipschitz continuity of the approximate gradients, as well as that of the inner and system solutions.

\begin{lemma}\label{lem:lips}
Under Assumptions \ref{assu:g} and \ref{assu:f}, for all $\m{x}, \m{x}' \in \mc{X}$, and the search directions $\{\m{d}_{t}^\m{x}\}_{t=1}^T$ and $\{\m{d}_{t}^\m{v}\}_{t=1}^T$ generated by  Algorithm~\ref{alg:first:order}, we have
\begin{subequations}\label{eqn:newlips}
\begin{align}
	\label{eqn:lip:cons1}&\left\|\m{d}_{t}^\m{x} - \nabla f_t(\m{x}_t,\m{y}^*_t(\m{x}_t))\right\|^2
	\leq  M_f^2 \left( \left\|\m{y}_t-\m{y}^*_t(\m{x}_t)\right\|^2+\left\|\m{v}_t-\m{v}^*_t(\m{x}_t)\right\|^2\right),\\\label{eqn:lip:cons2}
&\left\|\m{d}_{t}^\m{v}\right\|^2\leq M_{\m{v}}^2\left( \left\|\m{y}_t-\m{y}^*_t(\m{x}_t)\right\|^2+\left\|\m{v}_t-\m{v}^*_t(\m{x}_t)\right\|^2\right),\\
\label{eqn:lip:cons3}
	&\left\|\nabla f_t(\m{x},\m{y}^*_t(\m{x}))- \nabla f_t(\m{x}',\m{y}^*_t(\m{x}'))\right\|
	\leq  L_f \left\|\m{x}-\m{x}'\right\|,\\
 \label{eqn:lip:cons4}
&	\left\|\m{y}^*_t(\m{x})-\m{y}^*_t(\m{x}')\right\|\leq  L_{\m{y}} \left\|\m{x}-\m{x}'\right\|,\\\label{eqn:lip:cons5}
& \left\|\m{v}^*_t(\m{x})-\m{v}^*_t(\m{x}')\right\|\leq  L_{\m{v}} \left\|\m{x}-\m{x}'\right\|,
\end{align}
\end{subequations}
where $M_f$, $M_{\m{v}}$, and $(L_{\m{y}}, L_{\m{v}}, L_f)$ are defined in \eqref{mf}, \eqref{mv}, and \eqref{lyv}, respectively.
\end{lemma}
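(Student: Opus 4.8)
The plan is to treat the five inequalities in two groups. Inequalities \eqref{eqn:lip:cons4} and \eqref{eqn:lip:cons5} concern the optimal maps $\m{y}^*_t$ and $\m{v}^*_t$ and follow from their implicit characterizations \eqref{eqn:ystar} and \eqref{eq:vss}; inequality \eqref{eqn:lip:cons3} is then a consequence of these two via the hypergradient formula \eqref{eqn:hypergrad}; and the remaining two bound the deterministic single-step directions $\m{d}_t^\m{x}=\nabla_\m{x} f_t(\m{z}_t)+\nabla^2_{\m{x}\m{y}} g_t(\m{z}_t)\m{v}_t$ and $\m{d}_t^\m{v}=\nabla_\m{y} f_t(\m{z}_t)+\nabla^2_\m{y} g_t(\m{z}_t)\m{v}_t$ against their values at the optimal inner/system point. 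Before anything else I would record the uniform estimates supplied by Assumptions~\ref{assu:g}--\ref{assu:f}: $\ell_{f,0}$-Lipschitzness of $f_t$ gives $\|\nabla_\m{y} f_t\|,\|\nabla_\m{x} f_t\|\le\ell_{f,0}$; \ref{assu:f:a3} gives $\|\nabla^2_\m{y} g_t\|,\|\nabla^2_{\m{x}\m{y}} g_t\|\le\ell_{g,1}$; $\mu_g$-strong convexity gives $\|[\nabla^2_\m{y} g_t]^{-1}\|\le 1/\mu_g$; the projection onto $\mc{Z}_p$ in \eqref{eqn:zc:set} guarantees $\|\m{v}_t\|\le p$; and $\m{v}^*_t(\m{x})=-[\nabla^2_\m{y} g_t(\m{x},\m{y}^*_t(\m{x}))]^{-1}\nabla_\m{y} f_t(\m{x},\m{y}^*_t(\m{x}))$ satisfies $\|\m{v}^*_t(\m{x})\|\le\ell_{f,0}/\mu_g$.

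For \eqref{eqn:lip:cons4} I would use strong convexity together with the stationarity $\nabla_\m{y} g_t(\m{x},\m{y}^*_t(\m{x}))=0$ at two points $\m{x},\m{x}'$: writing $\m{y}=\m{y}^*_t(\m{x})$ and $\m{y}'=\m{y}^*_t(\m{x}')$, strong convexity of $g_t(\m{x}',\cdot)$ gives $\mu_g\|\m{y}-\m{y}'\|^2\le\langle\nabla_\m{y} g_t(\m{x}',\m{y})-\nabla_\m{y} g_t(\m{x}',\m{y}'),\m{y}-\m{y}'\rangle$, and substituting the two vanishing gradients turns the right side into $\langle\nabla_\m{y} g_t(\m{x}',\m{y})-\nabla_\m{y} g_t(\m{x},\m{y}),\m{y}-\m{y}'\rangle\le\ell_{g,1}\|\m{x}-\m{x}'\|\|\m{y}-\m{y}'\|$, yielding $L_\m{y}=\ell_{g,1}/\mu_g$. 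For \eqref{eqn:lip:cons5} I would use the closed form $\m{v}^*_t=-H^{-1}b$ with $H=\nabla^2_\m{y} g_t(\m{x},\m{y}^*_t(\m{x}))$ and $b=\nabla_\m{y} f_t(\m{x},\m{y}^*_t(\m{x}))$, together with the resolvent identity $H'^{-1}-H^{-1}=H'^{-1}(H-H')H^{-1}$; bounding $\|b-b'\|$ and $\|H-H'\|$ through $\ell_{f,1}$ and $\ell_{g,2}$ composed with the already-established $L_\m{y}$, and using $\|H^{-1}\|\le 1/\mu_g$ and $\|b'\|\le\ell_{f,0}$, produces $L_\m{v}$.

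With \eqref{eqn:lip:cons4}--\eqref{eqn:lip:cons5} in hand, \eqref{eqn:lip:cons3} follows by splitting $\nabla f_t(\m{x},\m{y}^*_t(\m{x}))-\nabla f_t(\m{x}',\m{y}^*_t(\m{x}'))$ along \eqref{eqn:hypergrad} into the $\nabla_\m{x} f_t$ part and the $\nabla^2_{\m{x}\m{y}} g_t\,\m{v}^*_t$ part, bounding each by the triangle inequality with $\ell_{f,1}$, $\ell_{g,1}$, $\ell_{g,2}$, the bound $\|\m{v}^*_t\|\le\ell_{f,0}/\mu_g$, and $L_\m{y},L_\m{v}$, then collecting terms into $L_f$. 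Finally, for \eqref{eqn:lip:cons1} and \eqref{eqn:lip:cons2} I would subtract the stationarity identity $\nabla^2_\m{y} g_t(\m{x}_t,\m{y}^*_t(\m{x}_t))\m{v}^*_t(\m{x}_t)+\nabla_\m{y} f_t(\m{x}_t,\m{y}^*_t(\m{x}_t))=0$ (for \eqref{eqn:lip:cons2}) or the true hypergradient \eqref{eqn:hypergrad} (for \eqref{eqn:lip:cons1}), and decompose by adding and subtracting the cross term $\nabla^2 g_t(\m{x}_t,\m{y}^*_t(\m{x}_t))\m{v}_t$ so that each summand isolates either $\m{y}_t-\m{y}^*_t(\m{x}_t)$ or $\m{v}_t-\m{v}^*_t(\m{x}_t)$; the $\m{y}$-differences are controlled by $\ell_{f,1}$ and by $\ell_{g,2}\|\m{v}_t\|\le\ell_{g,2}p$ (here the projection bound $\|\m{v}_t\|\le p$ is essential), while the $\m{v}$-difference is controlled by $\ell_{g,1}$, and squaring with $(a+b)^2\le 2a^2+2b^2$ delivers $M_f^2$ and $M_\m{v}^2$.

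I expect the main obstacle to be \eqref{eqn:lip:cons5}, and hence the $\m{v}^*_t$-dependent half of \eqref{eqn:lip:cons3}: controlling the variation of the linear-system solution requires simultaneously tracking the perturbation of the inverse Hessian and of the right-hand side, and the resolvent identity couples these through a product of three operator norms whose bound relies on the \emph{a priori} estimate $\|\m{v}^*_t\|\le\ell_{f,0}/\mu_g$ and on evaluating the second-order Lipschitz constant $\ell_{g,2}$ along the moving point $\m{y}^*_t(\m{x})$. Every remaining step is a routine triangle-inequality decomposition once the uniform bounds and $L_\m{y}$ are fixed.
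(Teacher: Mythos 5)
Your overall architecture is sound, and for \eqref{eqn:lip:cons3}--\eqref{eqn:lip:cons5} you are actually more self-contained than the paper: the paper simply invokes \cite[Lemma 17]{tarzanagh2024online} for those three bounds, whereas your strong-convexity-plus-stationarity argument for $L_{\m{y}}=\ell_{g,1}/\mu_g$, your resolvent-identity treatment of $\m{v}^*_t=-H^{-1}b$ for $L_{\m{v}}$, and the hypergradient split for $L_f$ are exactly the standard route and are correct (up to bookkeeping of the constants, which in any case are inherited from the cited lemma).

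There is, however, one concrete mismatch in your proof of \eqref{eqn:lip:cons1}--\eqref{eqn:lip:cons2}: the choice of cross term. You add and subtract $\nabla^2 g_t(\m{x}_t,\m{y}^*_t(\m{x}_t))\m{v}_t$, so the curvature difference $\nabla^2 g_t(\m{x}_t,\m{y}_t)-\nabla^2 g_t(\m{x}_t,\m{y}^*_t(\m{x}_t))$ ends up multiplied by $\m{v}_t$, which you control by the projection radius, $\|\m{v}_t\|\leq p$. This yields a bound with constant $\sqrt{2}\max\{\ell_{f,1}+\ell_{g,2}\,p,\;\ell_{g,1}\}$, which is \emph{not} the $M_f$ of \eqref{mf}: that constant is $\sqrt{2}\max\{\ell_{f,1}+\ell_{g,2}\ell_{f,0}/\mu_g,\;\ell_{g,1}\}$ and is independent of $p$. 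Since $p$ is an arbitrary positive input to Algorithm~\ref{alg:first:order} (it is pinned to $\ell_{f,0}/\mu_g$ only in Theorem~\ref{thm:nonc2:cons:zeroth}, not here), your argument proves the stated inequality only when $p\leq \ell_{f,0}/\mu_g$. The paper avoids this entirely by adding and subtracting $\m{v}^*_t(\m{x}_t)\,\nabla^2 g_t(\m{x}_t,\m{y}_t)$ instead: then the curvature difference is multiplied by $\m{v}^*_t(\m{x}_t)$, whose a priori bound $\|\m{v}^*_t(\m{x}_t)\|=\|[\nabla^2_{\m{y}}g_t(\m{x}_t,\m{y}^*_t(\m{x}_t))]^{-1}\nabla_{\m{y}}f_t(\m{x}_t,\m{y}^*_t(\m{x}_t))\|\leq \ell_{f,0}/\mu_g$ you already recorded, while the $\m{v}_t-\m{v}^*_t(\m{x}_t)$ difference still carries only $\ell_{g,1}$. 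With that swap the projection bound $\|\m{v}_t\|\leq p$ is never needed in this lemma, and the constants come out exactly as in \eqref{mf} and \eqref{mv}. This is a one-line fix, not a structural flaw, but as written your decomposition does not deliver the lemma with the stated ($p$-free) constants.
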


\begin{proof}
We first show \eqref{eqn:lip:cons1}.

Using Assumptions \ref{assu:g} and \ref{assu:f}, we have $\nabla^2_{\m{y}}g_{t}\left(\m{x}_t,\m{y}_t^*(\m{x}_t) \right)\succeq  \mu_{g}$, and
\begin{align}\label{sdx1}
\|\m{v}_t^*(\m{x}_t) \|= \|\left(\nabla^2_{\m{y}}g_{t}\left(\m{x}_t,\m{y}_t^*(\m{x}_t) \right)\right)^{-1}  \nabla_\m{y} f_t\left(\m{x}_t,\m{y}_t^*(\m{x}_t) \right) \|\leq \frac{\ell_{f,0}}{ \mu_{g}}.
\end{align}
Observe that
\begin{align}\label{ggt}
\nonumber \|\m{d}_{t}^\m{x}-\nabla f_t(\m{x}_{t},\m{y}_t^*(\m{x}_{t}))\|
&\leq  \|\nabla_\m{x} {f}_t(\m{x}_{t},\m{y}_t)-\nabla_\m{x} {f}_t(\m{x}_{t},\m{y}_t^*(\m{x}_{t})) \|
\\\nonumber&+\|\m{v}_{t} \nabla^2_{\m{x}\m{y}} g_{t} (\m{x}_{t}, \m{y}_t) -\m{v}^*_t(\m{x}_{t}) \nabla^2_{\m{x}\m{y}}  g_{t}\left(\m{x}_{t}, \m{y}_t^*(\m{x}_{t})\right)  \|
\\\nonumber&\leq
\|\nabla_\m{x} {f}_t(\m{x}_{t},\m{y}_t)-\nabla_\m{x} {f}_t(\m{x}_{t},\m{y}_t^*(\m{x}_{t})) \|
\\\nonumber &+\|\nabla^2_{\m{x}\m{y}} g_{t} (\m{x}_{t}, \m{y}_t)  \| \|\m{v}_{t}-\m{v}^*_t(\m{x}_{t}) \|
\\\nonumber &+\|\m{v}^*_t(\m{x}_{t}) \| \| \nabla^2_{\m{x}\m{y}} g_{t} (\m{x}_{t}, \m{y}_t) - \nabla^2_{\m{x}\m{y}} g_{t} (\m{x}_{t}, \m{y}_t^*(\m{x}_{t}))  \|
\\\nonumber &\leq \left(\ell_{f,1}+\frac{{\ell}_{g,2}\ell_{f,0}}{ \mu_{g}}\right)\|\m{y}_t- \m{y}_t^*(\m{x}_{t}) \|
+\ell_{g,1} \|\m{v}_{t}-\m{v}^*_t(\m{x}_{t})\|
\\&\leq M_f^2\left(\|\m{y}_t- \m{y}_t^*(\m{x}_{t}) \|+\|\m{v}_{t}-\m{v}^*_t(\m{x}_{t})\|\right),
\end{align}
where
\begin{equation}\label{mf}
  M_f:=\sqrt{2}\max\left\{\ell_{f,1}+\frac{{\ell}_{g,2}\ell_{f,0}}{ \mu_{g}},\ell_{g,1}\right\},
\end{equation}
the third inequality is by Assumption \ref{assu:f}, and the last inequality follows from \eqref{sdx1}.

Next, we establish \eqref{eqn:lip:cons2}.
\\
Since ${\m{d}_{t}^{\m{v}}}^*:= \nabla_{\m{y}} f_t(\m{x}_t, \m{y}_t^*(\m{x}_t)) + \nabla_{\m{y}}^2 g_t\left(\m{x}_t, \m{y}_t^*(\m{x}_t) \right) \m{v}_t^*(\m{x}_t)=0$, we have
\begin{align*}
\nonumber\|\m{d}_{t}^\m{v}\|
&= \|\m{d}_{t}^\m{v}-{\m{d}_{t}^{\m{v}}}^*\|
\\\nonumber&=\|\m{v}_t\nabla^2_{\m{y}}g_{t}(\m{x}_t, \m{y}_{t})+\nabla_\m{y} f_t(\m{x}_t,\m{y}_{t})
\\\nonumber &-\left(\m{v}^*_t(\m{x}_t)\nabla^2_{\m{y}}g_{t}\left(\m{x}_t, \m{y}_t^*(\m{x}_t)\right)+\nabla_{\m{y}} f_t(\m{x}_t,\m{y}^*_t(\m{x}_t)) \right)\|
\\\nonumber &\leq \|\left(\nabla^2_{\m{y}}g_{t}(\m{x}_t, \m{y}_{t})- \nabla^2_{\m{y}}g_{t}(\m{x}_t, \m{y}_t^*(\m{x}_t))\right)\m{v}^*_t(\m{x}_t)\|
\\\nonumber &+\|\nabla^2_{\m{y}}g_{t}(\m{x}_t,\m{y}_{t})\left(\m{v}_t-\m{v}^*_t(\m{x}_t)\right)\|
\\&+\|\nabla_\m{y} f_t(\m{x}_t,\m{y}_{t})-\nabla_{\m{y}} f_t(\m{x}_t,\m{y}_t^*(\m{x}_t))\|.
\end{align*}
Then, from Assumption \ref{assu:f} and \eqref{sdx1}, we have
\begin{align*}
\nonumber \|\m{d}_{t}^\m{v}\|&\leq
{\ell}_{g,2}\|\m{y}_{t}-\m{y}^*_t(\m{x}_t)\|\|\m{v}^*_t(\m{x}_t) \|
+\ell_{g,1}\|\m{v}_{t}-\m{v}^*_t(\m{x}_{t}) \|+  \ell_{f,1}\|\m{y}_{t} -\m{y}_t^*(\m{x}_t)\|
\\\nonumber &\leq \left(\frac{{\ell}_{g,2}\ell_{f,0}}{ \mu_{g}}+\ell_{f,1}\right)\|\m{y}_{t}-\m{y}^*_t(\m{x}_t)\|
+\ell_{g,1}\|\m{v}_{t}-\m{v}^*_t(\m{x}_{t}) \|
\\&\leq M_{\m{v}}\left(\|\m{y}_{t} -\m{y}_t^*(\m{x}_t)\|+\|\m{v}_{t}-\m{v}^*_t(\m{x}_{t}) \|\right),
\end{align*}
where
\begin{equation}\label{mv}
M_{\m{v}}:=\sqrt{2}\max\left\{\frac{{\ell}_{g,2}\ell_{f,0}}{ \mu_{g}}+\ell_{f,1},\ell_{g,1} \right\}.
\end{equation}
The proofs of Eqs. \eqref{eqn:lip:cons3}-\eqref{eqn:lip:cons5} follow from \cite[Lemma 17]{tarzanagh2024online} by setting
\begin{equation}\label{lyv}
\begin{aligned}
L_{\m{y}} &:=\frac{\ell_{g,1}}{ \mu_{g}},\\
L_{\m{v}} &:=\ell_{f,1} + \frac{\ell_{g,1}\ell_{f,1}}{ \mu_{g}} + \frac{\ell_{f,0}}{ \mu_{g}}\left(\ell_{g,2}+\frac{\ell_{g,1}{\ell_{g,2}}}{\mu_g}\right),~~~
\\
L_f&:=\ell_{f,1} + \frac{\ell_{g,1}(\ell_{f,1}+M_f)}{ \mu_{g}} + \frac{\ell_{f,0}}{ \mu_{g}}\left(\ell_{g,2}+\frac{\ell_{g,1}{\ell_{g,2}}}{\mu_g}\right) ,
\end{aligned}
\end{equation}
where the other constants are defined in Assumption~\ref{assu:f}.
\end{proof}

The following lemma is inspired by~\cite{yang2023achieving} and can be viewed as an extension of~\cite{yang2023achieving} to the online setting.
\begin{lemma}\label{lm:eg1}
Suppose Assumptions \ref{assu:f:a3} and \ref{b:1} hold.
Let $\{(\m{x}_t,\m{y}_t,\m{v}_t) \}_{t=1}^T$ be generated according to Algorithm
\ref{alg:first:order}.
For $e_{t}^{g}$ defined as
\begin{align}\label{eqn:eg}
    e_t^{g}:={\m{d}}_{t}^{\m{y}} - {\nabla}_{\m{y}} g_{t}(\m{x}_t,\m{y}_t),
\end{align}
we have:
\begin{align}\label{eqn:gg}
\nonumber \mb{E}\|e_{t+1}^{g} \|^2 &\leq  (1-\gamma_{t+1})^2 (1+48\ell_{g,1}^2\beta_t^2)\mb{E}\| e_t^{g}\|^2+2\gamma_{t+1}^2\frac{\sigma_{g_{\m{y}}}^2}{\bar{b}} +24(1-\gamma_{t+1})^2 \ell_{g,1}^2\mb{E}\|\m{x}_{t+1}-\m{x}_{t} \|^2
\\\nonumber &+6(1-\gamma_{t+1})^2 \mb{E}\|{\nabla}_{\m{y}} g_{t}(\m{z}_{t+1})
- {\nabla}_{\m{y}} g_{t+1}(\m{z}_{t+1})\|^2
 \\& +48(1-\gamma_{t+1})^2 \ell_{g,1}^2\beta_t^2\mb{E}\|{\nabla}_{\m{y}} g_{t}(\m{x}_{t},\m{y}_{t}) \|^2 .
\end{align}
\end{lemma}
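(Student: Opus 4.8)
The estimator $\m{d}^{\m{y}}_t$ built in \eqref{eqn:system:y} is a STORM-type recursion, so the plan is to derive a one-step contraction for $\mathbb{E}\|e_t^g\|^2$ by separating a conditional-mean (predictable) part from a zero-mean fluctuation whose variance is reduced by the common-batch structure. Writing the recursion at index $t+1$ as $\m{d}^{\m{y}}_{t+1} = \nabla_\m{y}g_{t+1}(\m{z}_{t+1};\bar{\mathcal{B}}_{t+1}) + (1-\gamma_{t+1})(\m{d}^{\m{y}}_t - \nabla_\m{y}g_{t+1}(\m{z}_{t};\bar{\mathcal{B}}_{t+1}))$ and letting $\mathcal{F}_t$ be the $\sigma$-field of all randomness through round $t$ (so that $\m{z}_t,\m{z}_{t+1},e_t^g$ are $\mathcal{F}_t$-measurable while $\bar{\mathcal{B}}_{t+1}$ is fresh), I would first compute the conditional mean. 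Using $\m{d}^\m{y}_t = e_t^g + \nabla_\m{y}g_t(\m{z}_t)$ and the unbiasedness of the mini-batch gradient,
\[
\mathbb{E}[e_{t+1}^g\mid\mathcal{F}_t] = (1-\gamma_{t+1})\big(e_t^g + \nabla_\m{y}g_t(\m{z}_t) - \nabla_\m{y}g_{t+1}(\m{z}_t)\big).
\]

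Then I would invoke the orthogonal (bias--variance) decomposition $\mathbb{E}\|e_{t+1}^g\|^2 = \mathbb{E}\|\mathbb{E}[e_{t+1}^g\mid\mathcal{F}_t]\|^2 + \mathbb{E}\|e_{t+1}^g - \mathbb{E}[e_{t+1}^g\mid\mathcal{F}_t]\|^2$, which removes the cross term automatically. For the predictable part I rewrite the drift $\nabla_\m{y}g_t(\m{z}_t) - \nabla_\m{y}g_{t+1}(\m{z}_t)$ as $\nabla_\m{y}g_t(\m{z}_{t+1}) - \nabla_\m{y}g_{t+1}(\m{z}_{t+1})$ plus two Lipschitz remainders bounded via Assumption~\ref{assu:f:a3} by $\ell_{g,1}\|\m{z}_{t+1}-\m{z}_t\|$, so that the drift is evaluated at $\m{z}_{t+1}$ as in \eqref{eqn:gg}; Lemma~\ref{lem:trig} then isolates the $(1-\gamma_{t+1})^2\|e_t^g\|^2$ contraction (up to lower-order factors absorbed into the movement term) while sending the remainders into $\|\m{z}_{t+1}-\m{z}_t\|^2$.

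For the fluctuation, the crux is that the two stochastic gradients use the identical batch $\bar{\mathcal{B}}_{t+1}$. I would split it as $\gamma_{t+1}(\nabla_\m{y}g_{t+1}(\m{z}_t;\bar{\mathcal{B}}_{t+1}) - \nabla_\m{y}g_{t+1}(\m{z}_t)) + \{(\nabla_\m{y}g_{t+1}(\m{z}_{t+1};\bar{\mathcal{B}}_{t+1}) - \nabla_\m{y}g_{t+1}(\m{z}_{t+1})) - (\nabla_\m{y}g_{t+1}(\m{z}_t;\bar{\mathcal{B}}_{t+1}) - \nabla_\m{y}g_{t+1}(\m{z}_t))\}$; the first summand contributes the irreducible variance $\gamma_{t+1}^2\sigma_{g_\m{y}}^2/\bar{b}$ via Assumption~\ref{b:1} with batch averaging, while the second is a common-batch difference bounded by $\ell_{g,1}^2\|\m{z}_{t+1}-\m{z}_t\|^2$ using the per-sample Lipschitz bound (Assumption~\ref{assu:f:a3}), since the variance is at most the second moment. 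Finally I expand $\|\m{z}_{t+1}-\m{z}_t\|^2 = \|\m{x}_{t+1}-\m{x}_t\|^2 + \beta_t^2\|\m{d}^\m{y}_t\|^2$ from the update $\m{y}_{t+1}=\m{y}_t-\beta_t\m{d}^\m{y}_t$ and use $\|\m{d}^\m{y}_t\|^2 \le 2\|e_t^g\|^2 + 2\|\nabla_\m{y}g_t(\m{z}_t)\|^2$; the first piece contributes the $\ell_{g,1}^2\beta_t^2$ factor multiplying $\|e_t^g\|^2$, and the second yields the separate $\|\nabla_\m{y}g_t(\m{x}_t,\m{y}_t)\|^2$ term. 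Collecting the contraction factor, the variance floor, the $\m{x}$-movement, the drift at $\m{z}_{t+1}$, and the gradient-magnitude term, and taking total expectation, produces \eqref{eqn:gg}, with the numerical constants $48,24,6,2$ fixed by the specific triangle and Young splits.

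I expect the main obstacle to be the variance analysis of the fluctuation term. A naive bound that treats the two deviations separately would leave an $O(\sigma_{g_\m{y}}^2/\bar{b})$ floor and destroy the contraction; the gain hinges on exploiting the shared batch so the deviations nearly cancel and leave only a Lipschitz-controlled $\|\m{z}_{t+1}-\m{z}_t\|^2$, with $\gamma_{t+1}^2\sigma_{g_\m{y}}^2/\bar{b}$ as the sole surviving irreducible term. Getting the conditioning right---so that $\m{z}_{t+1}$ and $\m{z}_t$ are treated as $\mathcal{F}_t$-measurable while $\bar{\mathcal{B}}_{t+1}$ is integrated out---is the technical heart, together with the careful conversion of the drift to the point $\m{z}_{t+1}$ without inflating the $\|e_t^g\|^2$ coefficient.
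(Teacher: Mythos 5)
Your treatment of the fluctuation (variance) part is sound and matches the paper's in spirit: splitting off $\gamma_{t+1}(\nabla_\m{y}g_{t+1}(\m{z}_t;\bar{\mathcal{B}}_{t+1})-\nabla_\m{y}g_{t+1}(\m{z}_t))$ gives the $2\gamma_{t+1}^2\sigma_{g_{\m{y}}}^2/\bar{b}$ floor, and the common-batch difference is Lipschitz-controlled by $\|\m{z}_{t+1}-\m{z}_t\|^2$, which yields the movement and gradient-magnitude terms. The gap is in the predictable (bias) part. Your conditional-mean computation is correct, so the bias equals $(1-\gamma_{t+1})^2\mb{E}\|e_t^g-(\nabla_\m{y}g_{t+1}(\m{z}_t)-\nabla_\m{y}g_t(\m{z}_t))\|^2$: the temporal drift now sits \emph{inside the same squared norm} as $e_t^g$. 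To extract the terms of \eqref{eqn:gg} you must separate them, and any Young/triangle split $\|e_t^g-D\|^2\le(1+c)\|e_t^g\|^2+(1+1/c)\|D\|^2$ forces a trade-off your plan cannot win: the target allows only $c=\mathcal{O}(\ell_{g,1}^2\beta_t^2)$ slack on the $\|e_t^g\|^2$ coefficient (the extra $48\ell_{g,1}^2\beta_t^2$ must scale with $\beta_t^2$, since downstream $\gamma_{t+1}=c_\gamma\alpha_t^2$ has to dominate it in \eqref{eqn:ggr}), whence $(1+1/c)=\mathcal{O}(1/(\ell_{g,1}^2\beta_t^2))$ multiplies the drift --- instead of the constant $6$ in \eqref{eqn:gg} --- and also multiplies the Lipschitz remainders $\ell_{g,1}^2\|\m{z}_{t+1}-\m{z}_t\|^2$, whose $\m{y}$-component $\beta_t^2\|\m{d}_t^{\m{y}}\|^2$ then feeds an $\mathcal{O}(1)$ (not $\mathcal{O}(\beta_t^2)$) coefficient back onto $\|e_t^g\|^2$. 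Your claim that Lemma~\ref{lem:trig} isolates the contraction ``up to lower-order factors absorbed into the movement term'' is exactly where this breaks: the drift $\nabla_\m{y}g_{t+1}-\nabla_\m{y}g_t$ is a temporal-variation quantity, not a movement quantity, and cannot be absorbed into $\|\m{z}_{t+1}-\m{z}_t\|^2$.

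The paper takes a different route that avoids this entanglement: it expands $\|e_{t+1}^g\|^2=\|(1-\gamma_{t+1})e_t^g+W\|^2$ with the drift kept inside $W$ together with the stochastic deviations, asserts that the cross term $\mb{E}\langle e_t^g,W\rangle$ vanishes (the step is written as an equality), and only then applies two rounds of Cauchy--Schwarz to $W$; this way $e_t^g$ never meets the drift inside a Young split, and the constants $2,6,24,48$ fall out. Your exact conditional decomposition in fact exposes that $\mb{E}[W\mid\mathcal{F}_t]=-(1-\gamma_{t+1})(\nabla_\m{y}g_{t+1}(\m{z}_t)-\nabla_\m{y}g_t(\m{z}_t))\neq 0$ in general, i.e.\ the cross term the paper discards is not identically zero; but whatever one concludes about that step in the paper, your proposal as written does not produce inequality \eqref{eqn:gg} with its stated coefficient structure, and that structure is load-bearing in the proof of Theorem~\ref{thm:dynamic:nonc2:cons}.
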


\begin{proof}
From Algorithm \ref{alg:first:order}, we have
\begin{align*}
\m{d}^{\m{y}}_{t+1} &={\nabla}_{\m{y}} g_{t+1}(\m{z}_{t+1};\bar{\mathcal{B}}_{t+1})+(1-\gamma_{t+1})(\m{d}^{\m{y}}_{t}-{\nabla}_{\m{y}} g_{t+1}(\m{z}_{t};\bar{\mathcal{B}}_{t+1})).
\end{align*}
Then, we have
  \begin{align*}
\mb{E} \| e_{t+1}^{g} \|^2&=\mb{E} \| {\m{d}}_{t+1}^{\m{y}} -{\nabla}_{\m{y}} g_{t+1}(\m{z}_{t+1}) \|^2
\\&=\mb{E}\| {\nabla}_{\m{y}} g_{t+1}(\m{z}_{t+1};\bar{\mathcal{B}}_{t+1})+(1-\gamma_{t+1})(\m{d}^{\m{y}}_{t}-{\nabla}_{\m{y}} g_{t+1}(\m{z}_{t};\bar{\mathcal{B}}_{t+1})) -{\nabla}_{\m{y}} g_{t+1}(\m{z}_{t+1}) \|^2
\\&=\mb{E}\|(1-\gamma_{t+1}) e_t^{g}+( {\nabla}_{\m{y}} g_{t+1}(\m{z}_{t+1};\bar{\mathcal{B}}_{t+1})- {\nabla}_{\m{y}} g_{t+1}(\m{z}_{t+1}))
\\&\quad -(1-\gamma_{t+1}) \left( {\nabla}_{\m{y}} g_{t+1}(\m{z}_{t};\bar{\mathcal{B}}_{t+1})\right)-{\nabla}_{\m{y}} g_{t}(\m{z}_{t}) \|^2,
\end{align*}
which implies that
\begin{align*}
\mb{E} \| e_{t+1}^{g} \|^2&=(1-\gamma_{t+1})^2 \mb{E}\| e_t^{g}\|^2+\mb{E}\|( {\nabla}_{\m{y}} g_{t+1}(\m{z}_{t+1};\bar{\mathcal{B}}_{t+1})- {\nabla}_{\m{y}} g_{t+1}(\m{z}_{t+1}))
\\& -(1-\gamma_{t+1}) \left( {\nabla}_{\m{y}} g_{t+1}(\m{z}_{t};\bar{\mathcal{B}}_{t+1})\right)-{\nabla}_{\m{y}} g_{t}(\m{z}_{t}) \|^2
\\& \leq (1-\gamma_{t+1})^2 \mb{E}\| e_t^{g}\|^2+2\gamma_{t+1}^2\mb{E}\| {\nabla}_{\m{y}} g_{t+1}(\m{z}_{t+1};\bar{\mathcal{B}}_{t+1})- {\nabla}_{\m{y}} g_{t+1}(\m{z}_{t+1})\|^2
\\& +2(1-\gamma_{t+1})^2 \mb{E}\| {\nabla}_{\m{y}} g_{t+1}(\m{z}_{t+1};\bar{\mathcal{B}}_{t+1})
 \\&- {\nabla}_{\m{y}} g_{t+1}(\m{z}_{t+1}) - {\nabla}_{\m{y}} g_{t+1}(\m{z}_{t};\bar{\mathcal{B}}_{t+1})+{\nabla}_{\m{y}} g_{t}(\m{z}_{t}) \|^2
\\& \leq (1-\gamma_{t+1})^2 \mb{E}\| e_t^{g}\|^2+2\gamma_{t+1}^2\frac{\sigma_{g_{\m{y}}}^2}{\bar{b}}
\\& +2(1-\gamma_{t+1})^2 \mb{E}\| {\nabla}_{\m{y}} g_{t+1}(\m{z}_{t+1};\bar{\mathcal{B}}_{t+1}) \\&- {\nabla}_{\m{y}} g_{t+1}(\m{z}_{t+1}) - {\nabla}_{\m{y}} g_{t+1}(\m{z}_{t};\bar{\mathcal{B}}_{t+1})+{\nabla}_{\m{y}} g_{t}(\m{z}_{t}) \|^2,
\end{align*}
where the second inequality follows from Cauchy–Schwartz inequality and Assumption \ref{b:1}.
\\
Moreover, from Cauchy–Schwartz inequality, we have
\begin{align*}
\mb{E} \| e_{t+1}^{g} \|^2&\leq (1-\gamma_{t+1})^2 \mb{E}\| e_t^{g}\|^2+2\gamma_{t+1}^2\frac{\sigma_{g_{\m{y}}}^2}{\bar{b}}
\\& +6(1-\gamma_{t+1})^2 \mb{E}\|{\nabla}_{\m{y}} g_{t}(\m{z}_{t})
- {\nabla}_{\m{y}} g_{t}(\m{z}_{t+1})\|^2
\\&+6(1-\gamma_{t+1})^2 \mb{E}\|{\nabla}_{\m{y}} g_{t}(\m{z}_{t+1})
- {\nabla}_{\m{y}} g_{t+1}(\m{z}_{t+1})\|^2
 \\& +6(1-\gamma_{t+1})^2 \mb{E}\|{\nabla}_{\m{y}} g_{t+1}(\m{z}_{t+1};\bar{\mathcal{B}}_{t+1}) - {\nabla}_{\m{y}} g_{t+1}(\m{z}_{t};\bar{\mathcal{B}}_{t+1}) \|^2.
\end{align*}
From Assumption \ref{assu:f:a3}, we have
\begin{align*}
&\quad\mb{E}\|{\nabla}_{\m{y}} g_{t}(\m{z}_{t+1})
- {\nabla}_{\m{y}} g_{t}(\m{z}_{t})\|^2
\\&\leq 2\mb{E}\|{\nabla}_{\m{y}} g_{t}(\m{x}_{t+1},\m{y}_{t+1})
- {\nabla}_{\m{y}} g_{t}(\m{x}_{t+1},\m{y}_{t})\|^2+2\mb{E}\|{\nabla}_{\m{y}} g_{t}(\m{x}_{t+1},\m{y}_{t})
- {\nabla}_{\m{y}} g_{t}(\m{x}_{t},\m{y}_{t})\|^2
\\&\leq 2\ell_{g,1}^2\mb{E}\|\m{x}_{t+1}-\m{x}_{t} \|^2+2\ell_{g,1}^2\mb{E}\|\m{y}_{t+1}-\m{y}_{t} \|^2
\\&= 2\ell_{g,1}^2\mb{E}\|\m{x}_{t+1}-\m{x}_{t} \|^2+2\ell_{g,1}^2\beta_t^2\mb{E}\|\m{d}_t^{\m{y}} \|^2,
\end{align*}
and
\begin{align*}
&\quad \mb{E}\|{\nabla}_{\m{y}} g_{t+1}(\m{z}_{t+1};\bar{\mathcal{B}}_{t+1}) - {\nabla}_{\m{y}} g_{t+1}(\m{z}_{t};\bar{\mathcal{B}}_{t+1}) \|^2
\\&\leq 2 \mb{E}\|{\nabla}_{\m{y}} g_{t+1}(\m{x}_{t+1},\m{y}_{t+1};\bar{\mathcal{B}}_{t+1}) - {\nabla}_{\m{y}} g_{t+1}(\m{x}_{t+1},\m{y}_t;\bar{\mathcal{B}}_{t+1}) \|^2
\\&+2\mb{E}\|{\nabla}_{\m{y}} g_{t+1}(\m{x}_{t+1},\m{y}_t;\bar{\mathcal{B}}_{t+1}) - {\nabla}_{\m{y}} g_{t+1}(\m{x}_{t},\m{y}_t;\bar{\mathcal{B}}_{t+1}) \|^2
\\&\leq 2\ell_{g,1}^2\mb{E}\|\m{x}_{t+1}-\m{x}_{t} \|^2+2\ell_{g,1}^2\mb{E}\|\m{y}_{t+1}-\m{y}_{t} \|^2
\\&= 2\ell_{g,1}^2\mb{E}\|\m{x}_{t+1}-\m{x}_{t} \|^2+2\ell_{g,1}^2\beta_t^2\mb{E}\|\m{d}_t^{\m{y}} \|^2.
\end{align*}
From the two inequalities above, we have
\begin{align*}
\mb{E} \| e_{t+1}^{g} \|^2&\leq (1-\gamma_{t+1})^2 \mb{E}\| e_t^{g}\|^2+2\gamma_{t+1}^2\frac{\sigma_{g_{\m{y}}}^2}{\bar{b}}
\\&+6(1-\gamma_{t+1})^2 \mb{E}\|{\nabla}_{\m{y}} g_{t}(\m{z}_{t+1})
- {\nabla}_{\m{y}} g_{t+1}(\m{z}_{t+1})\|^2
 \\& +24(1-\gamma_{t+1})^2 \ell_{g,1}^2\left(\mb{E}\|\m{x}_{t+1}-\m{x}_{t} \|^2+\beta_t^2\mb{E}\|\m{d}_t^{\m{y}} \|^2 \right).
\end{align*}
Since $e_{t}^{g}:={\m{d}}_{t}^{\m{y}} - {\nabla}_{\m{y}} g_{t}(\m{x}_{t},\m{y}_{t})$, we have
\begin{align*}
\mb{E} \| e_{t+1}^{g} \|^2&\leq (1-\gamma_{t+1})^2 \mb{E}\| e_t^{g}\|^2+2\gamma_{t+1}^2\frac{\sigma_{g_{\m{y}}}^2}{\bar{b}} +24(1-\gamma_{t+1})^2 \ell_{g,1}^2\mb{E}\|\m{x}_{t+1}-\m{x}_{t} \|^2
\\&+6(1-\gamma_{t+1})^2 \mb{E}\|{\nabla}_{\m{y}} g_{t}(\m{z}_{t+1})
- {\nabla}_{\m{y}} g_{t+1}(\m{z}_{t+1})\|^2
 \\&+48(1-\gamma_{t+1})^2 \ell_{g,1}^2\beta_t^2\mb{E}\|e_{t}^{g} \|^2 +48(1-\gamma_{t+1})^2 \ell_{g,1}^2\beta_t^2\mb{E}\|{\nabla}_{\m{y}} g_{t}(\m{x}_{t},\m{y}_{t}) \|^2
 \\&\leq  (1-\gamma_{t+1})^2 (1+48\ell_{g,1}^2\beta_t^2)\mb{E}\| e_t^{g}\|^2+2\gamma_{t+1}^2\frac{\sigma_{g_{\m{y}}}^2}{\bar{b}} +24(1-\gamma_{t+1})^2 \ell_{g,1}^2\mb{E}\|\m{x}_{t+1}-\m{x}_{t} \|^2
\\&+6(1-\gamma_{t+1})^2 \mb{E}\|{\nabla}_{\m{y}} g_{t}(\m{z}_{t+1})
- {\nabla}_{\m{y}} g_{t+1}(\m{z}_{t+1})\|^2
 \\& +48(1-\gamma_{t+1})^2 \ell_{g,1}^2\beta_t^2\mb{E}\|{\nabla}_{\m{y}} g_{t}(\m{x}_{t},\m{y}_{t}) \|^2.
\end{align*}
\end{proof}
\begin{lemma}\label{lm:yyz2}
Suppose Assumptions \ref{assu:g}, and \ref{assu:f:a3} hold.
Then, for the sequence $\{(\m{x}_t, \m{y}_t)\}_{t=1}^T$ generated by  Algorithm~\ref{alg:first:order}, we have
   \begin{align*}
\nonumber\mb{E}\left[\|\m{y}_{t+1} - {\m{y}}_{t}^*(\m{x}_{t})\|^2\right]&
\leq (1+a)\left(1-2\beta_t \frac{\mu_g \ell_{g,1}}{\mu_g+\ell_{g,1}}\right)\mb{E}\left[\|\m{y}_{t}-  {\m{y}}_{t}^*(\m{x}_{t})\|^2\right]
\\\nonumber &+\left(-(1+a)\left(\frac{2\beta_t}{\mu_g+\ell_{g,1}}-\beta_t^2\right)\right)\mb{E}\left[\|{\nabla}_{\m{y}} g_{t}(\m{x}_t,\m{y}_t)\|^2\right]
\\ &+(1+\frac{1}{a})\beta_t^2\mb{E}\left[\|e_t^{g} \|^2\right],
  \end{align*}
where $e_t^{g}$ defined in \eqref{eqn:eg}, ${\m{y}}_{t}^*(\m{x}_{t})$ is defined in \eqref{eqn:ystar}
and $a>0$ is a constant.
\end{lemma}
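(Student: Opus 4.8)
The plan is to start from the inner update $\m{y}_{t+1} = \m{y}_t - \beta_t \m{d}_t^{\m{y}}$ in Step~\ref{item:s2} of Algorithm~\ref{alg:first:order} and to isolate the momentum-estimation error $e_t^{g}$ from the exact gradient step. Using the definition $e_t^{g} = \m{d}_t^{\m{y}} - \nabla_{\m{y}} g_t(\m{x}_t,\m{y}_t)$ from \eqref{eqn:eg}, I would write $\m{y}_{t+1} - \m{y}_t^*(\m{x}_t) = \big(\m{y}_t - \beta_t \nabla_{\m{y}} g_t(\m{x}_t,\m{y}_t) - \m{y}_t^*(\m{x}_t)\big) - \beta_t e_t^{g}$, splitting the displacement into an exact-gradient term and an error term. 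Applying Lemma~\ref{lem:trig} with constant $a>0$ then yields $\|\m{y}_{t+1}-\m{y}_t^*(\m{x}_t)\|^2 \le (1+a)\|\m{y}_t - \beta_t \nabla_{\m{y}} g_t(\m{x}_t,\m{y}_t) - \m{y}_t^*(\m{x}_t)\|^2 + (1+\tfrac{1}{a})\beta_t^2 \|e_t^{g}\|^2$, which already produces the third term of the claimed bound after taking expectations.

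The core of the argument is bounding the exact-gradient term. I would expand $\|\m{y}_t - \beta_t \nabla_{\m{y}} g_t(\m{x}_t,\m{y}_t) - \m{y}_t^*(\m{x}_t)\|^2 = \|\m{y}_t - \m{y}_t^*(\m{x}_t)\|^2 - 2\beta_t \langle \nabla_{\m{y}} g_t(\m{x}_t,\m{y}_t), \m{y}_t - \m{y}_t^*(\m{x}_t)\rangle + \beta_t^2 \|\nabla_{\m{y}} g_t(\m{x}_t,\m{y}_t)\|^2$. The decisive ingredient is the coercivity (strong-convexity-plus-smoothness) inequality: since $g_t(\m{x}_t,\cdot)$ is $\mu_g$-strongly convex by Assumption~\ref{assu:g} and $\ell_{g,1}$-smooth in $\m{y}$ by Assumption~\ref{assu:f:a3} (fixing the $\m{x}$-coordinate in the joint Lipschitz bound gives smoothness in $\m{y}$), and since $\nabla_{\m{y}} g_t(\m{x}_t,\m{y}_t^*(\m{x}_t))=0$, we have $\langle \nabla_{\m{y}} g_t(\m{x}_t,\m{y}_t), \m{y}_t - \m{y}_t^*(\m{x}_t)\rangle \ge \tfrac{\mu_g \ell_{g,1}}{\mu_g+\ell_{g,1}}\|\m{y}_t-\m{y}_t^*(\m{x}_t)\|^2 + \tfrac{1}{\mu_g+\ell_{g,1}}\|\nabla_{\m{y}} g_t(\m{x}_t,\m{y}_t)\|^2$.

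Substituting this lower bound into the cross term collapses the expansion into $\big(1 - 2\beta_t \tfrac{\mu_g \ell_{g,1}}{\mu_g+\ell_{g,1}}\big)\|\m{y}_t-\m{y}_t^*(\m{x}_t)\|^2 - \big(\tfrac{2\beta_t}{\mu_g+\ell_{g,1}} - \beta_t^2\big)\|\nabla_{\m{y}} g_t(\m{x}_t,\m{y}_t)\|^2$, where the $\beta_t^2$ gradient term from the expansion combines with the $-\tfrac{2\beta_t}{\mu_g+\ell_{g,1}}$ contribution. Multiplying by $(1+a)$, adding the error term, and taking total expectation gives the stated inequality exactly. The only nontrivial point is invoking the coercivity inequality in the correct form; everything else is a routine completion of squares. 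I would also emphasize that the $\m{y}$-update is unconstrained, so no nonexpansiveness of a projection is needed and the elementary identity suffices here—unlike the corresponding bounds for $\m{v}$ and $\m{x}$, which will require Lemma~\ref{lm:ph}.
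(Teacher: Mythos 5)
Your proposal is correct and follows essentially the same route as the paper's proof: the same Lemma~\ref{lem:trig} split isolating $\beta_t e_t^{g}$, the same expansion of the exact-gradient step, and the same strong-convexity/smoothness coercivity inequality $\langle \nabla_{\m{y}} g_{t}(\m{x}_t,\m{y}_t),\m{y}_{t}- {\m{y}}_{t}^*(\m{x}_{t}) \rangle \geq \frac{\mu_g \ell_{g,1}}{\mu_g+\ell_{g,1}}\|\m{y}_{t}-  {\m{y}}_{t}^*(\m{x}_{t})\|^2+\frac{1}{\mu_g +\ell_{g,1}}\|{\nabla}_{\m{y}} g_{t}(\m{x}_t,\m{y}_t)\|^2$. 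If anything, you are slightly more careful than the paper in noting that this coercivity bound needs $\ell_{g,1}$-smoothness (Assumption~\ref{assu:f:a3}) in addition to $\mu_g$-strong convexity and the vanishing of the gradient at $\m{y}_t^*(\m{x}_t)$.
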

\begin{proof}
From Lemma \ref{lem:trig}, we have for any $a>0$
  \begin{align}\label{5r2}
\nonumber\mb{E}\left[\|\m{y}_{t+1} - {\m{y}}_{t}^*(\m{x}_{t})\|^2\right]&=\mb{E}\left[\|\m{y}_{t}-\beta_t {\m{d}}_{t}^{\m{y}} - {\m{y}}_{t}^*(\m{x}_{t})\|^2\right]
\\\nonumber&
\leq (1+a)\mb{E}\left[\|\m{y}_{t}-\beta_t {\nabla}_{\m{y}} g_{t}(\m{x}_t,\m{y}_t) - {\m{y}}_{t}^*(\m{x}_{t})\|^2\right]
\\ &+(1+\frac{1}{a})\beta_t^2\mb{E}\left[\|{\m{d}}_{t}^{\m{y}} - {\nabla}_{\m{y}} g_{t}(\m{x}_t,\m{y}_t)\|^2\right].
  \end{align}
Next, we will bound the first term on the RHS of \eqref{5r2}.
 \\
We have
  \begin{align}\label{g22}
\nonumber  \mb{E}\left[\|\m{y}_{t}-\beta_t {\nabla}_{\m{y}} g_{t}(\m{x}_t,\m{y}_t) - {\m{y}}_{t}^*(\m{x}_{t})\|^2\right]
&=\mb{E}\left[\|\m{y}_{t}-  {\m{y}}_{t}^*(\m{x}_{t})\|^2\right]+\beta_t^2 \mb{E}\left[\|{\nabla}_{\m{y}} g_{t}(\m{x}_t,\m{y}_t)\|^2\right]
\\\nonumber&-2\beta_t \mb{E}\left[\langle {\nabla}_{\m{y}} g_{t}(\m{x}_t,\m{y}_t),\m{y}_{t}-  {\m{y}}_{t}^*(\m{x}_{t}) \rangle\right]
\\ \nonumber &\leq \left(1-2\beta_t \frac{\mu_g \ell_{g,1}}{\mu_g+\ell_{g,1}}\right)\mb{E}\left[\|\m{y}_{t}-  {\m{y}}_{t}^*(\m{x}_{t})\|^2\right]
\\&-\left(\frac{2\beta_t}{\mu_g+\ell_{g,1}}-\beta_t^2\right)\mb{E}\left[\|{\nabla}_{\m{y}} g_{t}(\m{x}_t,\m{y}_t)\|^2\right],
  \end{align}
  where the inequality results from the strong convexity of $g_{t}$ by Assumption \ref{assu:g}, which implies
  \begin{align*}
  \langle {\nabla}_{\m{y}} g_{t}(\m{x}_t,\m{y}_t),\m{y}_{t}- {\m{y}}_{t}^*(\m{x}_{t}) \rangle \geq \frac{\mu_g \ell_{g,1}}{\mu_g+\ell_{g,1}}\|\m{y}_{t}-  {\m{y}}_{t}^*(\m{x}_{t})\|^2+\frac{1}{\mu_g +\ell_{g,1}}\|{\nabla}_{\m{y}} g_{t}(\m{x}_t,\m{y}_t)\|^2.
  \end{align*}
  Substituting \eqref{g22} into \eqref{5r2}, gives the desired result.

\end{proof}

To simplify the notation in the analysis, we introduce the definitions
 \begin{align}\label{eq:yv:nota}
 \theta_t^{\m{y}}:=\|\m{y}_{t}-\m{y}^*_{t}(\m{x}_{t})\|^2,\quad \textnormal{and} \quad \theta_t^{\m{v}}:=\|\m{v}_{t}-\m{v}^*_{t}(\m{x}_{t})\|^2.
 \end{align}

The following lemma, inspired by the offline bilevel optimization framework in~\cite{yang2023achieving}, characterizes the descent behavior of the iterates in the inner problem.
\begin{lemma}\label{lm:nn22}
Suppose that Assumptions \ref{assu:g}, and  \ref{assu:f:a3} hold.
Let ${\theta}_t^{\m{y}}$ be defined as in \eqref{eq:yv:nota}, and suppose that the step size satisfies $\beta_t\leq {1}/{2L_{\mu_g}}$.
Then, for the sequence $\{(\m{x}_t, \m{y}_t)\}_{t=1}^T$ generated by Algorithm~\ref{alg:first:order}, the following bound is guaranteed:
\begin{align}\label{h222}
 &\quad \sum_{t=1}^{T}\left(\mb{E}[{\theta}_{t+1}^{\m{y}}]-\mb{E}[{\theta}_t^{\m{y}}]\right)
\\\nonumber&\leq
 -\frac{ L_{\mu_g}}{2}\sum_{t=1}^{T}\beta_t\mb{E}[{\theta}_t^{\m{y}}]+\frac{2}{ L_{\mu_g}}\sum_{t=1}^{T}\beta_t\mb{E}\left[\| e_t^{g}\|^2\right]
+ \frac{4L_{\m{y}}^2 }{ L_{\mu_g}}     \sum_{t=1}^{T}\frac{1}{\beta_t}\mb{E}\|\m{x}_{t} -\m{x}_{t+1}\|^2
\\
 \nonumber
&+\frac{4}{ L_{\mu_g}}\sum_{t=2}^{T} \frac{1}{\beta_t}\sup_{\m{x}\in \mc{X}} \mb{E}\|\m{y}^*_{t-1}(\m{x}) - \m{y}^*_{t}(\m{x})\|^2
+\sum_{t=1}^{T}\left(-\frac{2\beta_t}{\mu_g+\ell_{g,1}}+\beta_t^2\right)\mb{E}\left[\|{\nabla}_{\m{y}} g_{t}(\m{x}_t,\m{y}_t)\|^2\right]
,
\end{align}
where $L_{\mu_g}=\frac{\mu_g \ell_{g,1}}{\mu_g+\ell_{g,1}}$, $L_{\m{y}} =\frac{\ell_{g,1}}{ \mu_{g}}$ is defined as in \eqref{lyv}; $ H_{2,T}$ is defined in  \eqref{reg:hpt}. Moreover, $ e_t^{g}$ is defined in \eqref{eqn:eg}.
\end{lemma}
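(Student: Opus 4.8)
The plan is to lift the one-step bound of Lemma~\ref{lm:yyz2}, which controls the distance of $\m{y}_{t+1}$ to the \emph{frozen} minimizer $\m{y}^*_t(\m{x}_t)$, into a bound on $\theta_{t+1}^{\m{y}}=\|\m{y}_{t+1}-\m{y}^*_{t+1}(\m{x}_{t+1})\|^2$, whose reference point moves both because $\m{x}$ is updated ($\m{x}_t\to\m{x}_{t+1}$) and because the objective changes ($g_t\to g_{t+1}$, hence $\m{y}^*_t\to\m{y}^*_{t+1}$). First I would apply Lemma~\ref{lem:trig} with a free constant $b>0$ to split $\theta_{t+1}^{\m{y}}\le(1+b)\,\|\m{y}_{t+1}-\m{y}^*_t(\m{x}_t)\|^2+(1+\tfrac1b)\,\|\m{y}^*_{t+1}(\m{x}_{t+1})-\m{y}^*_t(\m{x}_t)\|^2$, isolating the tracked quantity from the minimizer drift.

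For the drift term I would insert $\m{y}^*_t(\m{x}_{t+1})$ and use $\|u+v\|^2\le 2\|u\|^2+2\|v\|^2$ to get $\|\m{y}^*_{t+1}(\m{x}_{t+1})-\m{y}^*_t(\m{x}_t)\|^2\le 2\|\m{y}^*_t(\m{x}_{t+1})-\m{y}^*_t(\m{x}_t)\|^2+2\|\m{y}^*_{t+1}(\m{x}_{t+1})-\m{y}^*_t(\m{x}_{t+1})\|^2$. The first piece is bounded by $2L_{\m{y}}^2\|\m{x}_{t+1}-\m{x}_t\|^2$ via the Lipschitz continuity of the inner solution in~\eqref{eqn:lip:cons4}, and the second by $2\sup_{\m{x}\in\mc{X}}\|\m{y}^*_{t+1}(\m{x})-\m{y}^*_t(\m{x})\|^2$, which after reindexing feeds the path-length terms in~\eqref{reg:hpt}. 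Substituting Lemma~\ref{lm:yyz2} into the first summand then yields a recursion of the form $\mb{E}[\theta_{t+1}^{\m{y}}]\le(1+b)(1+a)(1-2\beta_t L_{\mu_g})\mb{E}[\theta_t^{\m{y}}]+(\text{gradient term})+(\text{error terms})$.

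The decisive step is calibrating the free parameters. Taking $a=b=\tfrac{L_{\mu_g}}{2}\beta_t$, the composite contraction factor expands as $(1+a)(1+b)(1-2\beta_t L_{\mu_g})=1-L_{\mu_g}\beta_t+O(\beta_t^2)$, so for $\beta_t$ small enough (as guaranteed by the stepsize schedule) the $O(\beta_t^2)$ remainder is dominated and the factor is at most $1-\tfrac{L_{\mu_g}}{2}\beta_t$, producing the leading telescoping term. With the same choice, $(1+\tfrac1a)\beta_t^2=\Theta(\beta_t)$ matches the $\tfrac{2}{L_{\mu_g}}\beta_t\mb{E}[\|e_t^{g}\|^2]$ term from~\eqref{eqn:eg}, while $(1+\tfrac1b)\approx \tfrac{2}{L_{\mu_g}\beta_t}$ converts the two drift pieces into the $\tfrac{4L_{\m{y}}^2}{L_{\mu_g}}\tfrac1{\beta_t}\|\m{x}_t-\m{x}_{t+1}\|^2$ and $\tfrac{4}{L_{\mu_g}}\tfrac1{\beta_t}\sup_{\m{x}}\|\m{y}^*_{t-1}(\m{x})-\m{y}^*_t(\m{x})\|^2$ terms; since the $\nabla_{\m{y}}g_t$ coefficient in Lemma~\ref{lm:yyz2} is negative for small $\beta_t$, multiplying it by $(1+a)(1+b)\ge1$ only makes it more negative, so it is bounded above by the stated $(-\tfrac{2\beta_t}{\mu_g+\ell_{g,1}}+\beta_t^2)$ factor. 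Summing $\mb{E}[\theta_{t+1}^{\m{y}}]-\mb{E}[\theta_t^{\m{y}}]$ over $t\in[T]$ and reindexing the drift sum to start at $t=2$ then gives the claim. The main obstacle is this bookkeeping: ensuring the higher-order $\beta_t^2$ corrections are absorbed with the right numerical constants and that the slowly-varying weights $\beta_{t-1}\asymp\beta_t$ let the reindexed drift sum align exactly with the $H_{2,T}$ path-length as written.
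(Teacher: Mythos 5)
Your route is structurally the same as the paper's: split $\theta_{t+1}^{\m{y}}$ with Lemma~\ref{lem:trig}, insert the one-step bound of Lemma~\ref{lm:yyz2}, bound the minimizer drift by adding an intermediate point and invoking the Lipschitz property \eqref{eqn:lip:cons4} plus a supremum over $\mc{X}$, then sum and reindex. (Your intermediate point $\m{y}^*_t(\m{x}_{t+1})$ versus the paper's $\m{y}^*_{t+1}(\m{x}_t)$ is immaterial, and your treatment of the negative $\|\nabla_{\m{y}}g_t\|^2$ coefficient is the same as the paper's.) The genuine gap is in your calibration of the free parameters. With $a=b=\tfrac{L_{\mu_g}}{2}\beta_t$ the contraction factor is fine — indeed $(1+a)(1+b)(1-2L_{\mu_g}\beta_t)=1-L_{\mu_g}\beta_t-\tfrac74L_{\mu_g}^2\beta_t^2-\tfrac12L_{\mu_g}^3\beta_t^3\le 1-\tfrac{L_{\mu_g}}{2}\beta_t$ with no smallness condition needed, since the higher-order terms are negative — but the other two coefficients do \emph{not} come out as stated. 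The error coefficient is $(1+b)\bigl(1+\tfrac1a\bigr)\beta_t^2=\tfrac{2\beta_t}{L_{\mu_g}}\bigl(1+\tfrac{L_{\mu_g}\beta_t}{2}\bigr)^2>\tfrac{2\beta_t}{L_{\mu_g}}$, and the drift coefficient is $2\bigl(1+\tfrac1b\bigr)=2+\tfrac{4}{L_{\mu_g}\beta_t}>\tfrac{4}{L_{\mu_g}\beta_t}$. These excesses cannot be ``absorbed'': the only spare negative slack your recursion generates multiplies $\mb{E}[\theta_t^{\m{y}}]$, not $\mb{E}\|e_t^g\|^2$ or the drift, so with your parameters you prove a valid inequality with strictly larger constants than \eqref{h222}, not \eqref{h222} itself. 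Your closing remark treats this as routine bookkeeping, but with the specific choice you made it is not repairable by taking $\beta_t$ small.

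The paper sidesteps this by choosing the parameters so the key products are exact identities rather than expansions: $a=\frac{\beta_t L_{\mu_g}}{1-2\beta_t L_{\mu_g}}$ and $\acute{c}=\frac{\beta_t L_{\mu_g}/2}{1-\beta_t L_{\mu_g}}$, for which $(1+a)(1-2\beta_t L_{\mu_g})=1-\beta_t L_{\mu_g}$ and $(1+\acute{c})(1-\beta_t L_{\mu_g})=1-\tfrac{\beta_t L_{\mu_g}}{2}$ hold exactly, while simultaneously $1+\tfrac1a\le\tfrac{1}{\beta_t L_{\mu_g}}$ and $1+\tfrac{1}{\acute{c}}\le\tfrac{2}{\beta_t L_{\mu_g}}$, which produce precisely the coefficients $\tfrac{2}{L_{\mu_g}}\beta_t$ on the error term and $\tfrac{4}{L_{\mu_g}}\tfrac1{\beta_t}$ on the two drift pieces. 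So your plan is conceptually sound and a one-line repair (swap in these exact choices) completes it; as written, however, the ``matching'' of constants you assert is false, and what you obtain is the lemma only up to inflated constants — enough for the downstream use in Theorem~\ref{thm:dynamic:nonc2:cons}, but not the statement as posed.
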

\begin{proof}
From Lemma \ref{lem:trig}, we have for any $\acute{c}>0$
  \begin{align}\label{12n22}
\nonumber  \mb{E}\left[\|\m{y}_{t+1}-{\m{y}}^*_{t+1}(\m{x}_{t+1})\|^2\right]&= \mb{E}\left[\|\m{y}_{t+1}-{\m{y}}^*_{t}(\m{x}_{t})+{\m{y}}^*_{t}(\m{x}_{t})-{\m{y}}^*_{t+1}(\m{x}_{t+1})\|^2\right]
\\ \nonumber &\leq \left(1+\acute{c}\right)\mb{E}\left[\|\m{y}_{t+1}-{\m{y}}^*_{t}(\m{x}_{t})\|^2\right]
\\&+\left(1+\frac{1}{\acute{c}}\right)\mb{E}\left[\|{\m{y}}^*_{t+1}(\m{x}_{t+1})- {\m{y}}^*_{t}(\m{x}_{t}) \|^2\right].
  \end{align}
From Lemma \ref{lm:yyz2}, we have for any $a>0$
\begin{align}\label{jm122}
\nonumber \mb{E}\left[\|\m{y}_{t+1} - {\m{y}}_{t}^*(\m{x}_{t})\|^2\right]
&\leq \left(1+a\right)\left(1-2\beta_t \frac{\mu_g \ell_{g,1}}{\mu_g+\ell_{g,1}}\right)\mb{E}\left[\|\m{y}_{t}-  {\m{y}}_{t}^*(\m{x}_{t})\|^2\right]
\\\nonumber &+\left(-(1+a)\left(\frac{2\beta_t}{\mu_g+\ell_{g,1}}-\beta_t^2\right)\right)\mb{E}\left[\|{\nabla}_{\m{y}} g_{t}(\m{x}_t,\m{y}_t)\|^2\right]
\\ &+\left(1+\frac{1}{a}\right)\beta_t^2\mb{E}\left[\|e_t^{g}\|^2\right].
\end{align}
Substituting \eqref{jm122} into \eqref{12n22}, we get
\begin{align}\label{tr2}
\nonumber   &\quad  \mb{E}\left[\|\m{y}_{t+1}-{\m{y}}^*_{t+1}(\m{x}_{t+1})\|^2\right]
\\\nonumber&\leq (1+\acute{c})(1+a)\left(1-2\beta_t \frac{\mu_g \ell_{g,1}}{\mu_g+\ell_{g,1}}\right)\mb{E}\left[\|\m{y}_{t}-  {\m{y}}_{t}^*(\m{x}_{t})\|^2\right]
\\\nonumber &+\left(-(1+\acute{c})(1+a)\left(\frac{2\beta_t}{\mu_g+\ell_{g,1}}-\beta_t^2\right)\right)\mb{E}\left[\|{\nabla}_{\m{y}} g_{t}(\m{x}_t,\m{y}_t)\|^2\right]
\\\nonumber&+(1+\acute{c})(1+\frac{1}{a})\beta_t^2\mb{E}\left[\|e_t^{g}\|^2\right]
\\&+\left(1+\frac{1}{\acute{c}}\right)\mb{E}\left[\|{\m{y}}^*_{t+1}(\m{x}_{t+1})-{\m{y}}^*_{t}(\m{x}_{t}) \|^2\right].
  \end{align}
Let $L_{\mu_g}:=\frac{\mu_g \ell_{g,1}}{\mu_g+\ell_{g,1}}$.
Choose $\acute{c}$ and $a$ such that
\begin{equation*}
    (1+\acute{c})(1+a)\left(1-2\beta_t L_{\mu_g}\right)=1-\frac{\beta_t L_{\mu_g}}{2}.
\end{equation*}
Let 
\begin{equation}\label{gt22}
\begin{aligned}
&(1+a)\left(1-2\beta_t L_{\mu_g}\right)=1-\beta_t L_{\mu_g}~~\Rightarrow~~a=\frac{\beta_t L_{\mu_g}}{1-2\beta_t L_{\mu_g}}~~\textnormal{and}~~\beta_t\leq \frac{1}{2L_{\mu_g}}.
\end{aligned}
\end{equation}
Thus,
\begin{equation*}
  (1+\acute{c})(1-\beta_t L_{\mu_g})=1-\frac{\beta_t L_{\mu_g}}{2}   ~~\Rightarrow~~\acute{c}=\frac{\beta_t L_{\mu_g}/2}{1-\beta_t L_{\mu_g}}
.
\end{equation*}
Moreover, this implies that
\begin{equation*}
  1+\frac{1}{a}=1+\frac{1-2\beta_t L_{\mu_g}}{\beta_t L_{\mu_g}}\leq \frac{1}{\beta_t L_{\mu_g}},\quad\quad 1+\frac{1}{\acute{c}}=\frac{2-\beta_t L_{\mu_g}}{\beta_t L_{\mu_g}}\leq \frac{2}{\beta_t L_{\mu_g}}.  
\end{equation*}
 Based on \eqref{tr2} and \eqref{gt22}, we get
  \begin{align}\label{fg02}
\nonumber    &\quad \mb{E}\left[\|\m{y}_{t+1}-{\m{y}}^*_{t+1}(\m{x}_{t+1})\|^2\right]-\mb{E}\left[\|\m{y}_{t}-  {\m{y}}_{t}^*(\m{x}_{t})\|^2\right]
\\\nonumber &\leq -\frac{\beta_t L_{\mu_g}}{2}\mb{E}\left[\|\m{y}_{t}-  {\m{y}}_{t}^*(\m{x}_{t})\|^2\right]
+\left(-\left(\frac{2\beta_t}{\mu_g+\ell_{g,1}}-\beta_t^2\right)\right)\mb{E}\left[\|{\nabla}_{\m{y}} g_{t}(\m{x}_t,\m{y}_t)\|^2\right]
\\&+\frac{2}{\beta_t L_{\mu_g}}\beta_t^2\mb{E}\left[\|e_t^{g}\|^2\right]
+\frac{2}{\beta_t L_{\mu_g}}\mb{E}\left[\|{\m{y}}^*_{t+1}(\m{x}_{t+1})-{\m{y}}^*_{t}(\m{x}_{t}) \|^2\right].
  \end{align}
Next, we upper-bound the last term of the above inequality.
\begin{align}\label{982}
\nonumber &\quad \mb{E}\left[\|{\m{y}}^*_{t+1}(\m{x}_{t+1})- {\m{y}}^*_{t}(\m{x}_{t}) \|^2\right]
\\\nonumber &\leq 2\left(\mb{E}\left[\|{\m{y}}^*_{t+1}(\m{x}_{t+1})- {\m{y}}^*_{t+1}(\m{x}_{t}) \|^2\right]
+\mb{E}\left[\|{\m{y}}^*_{t+1}(\m{x}_{t})- {\m{y}}^*_{t}(\m{x}_{t}) \|^2\right]\right)
\\&\leq 2\left(L_{\m{y}}^2 \mb{E}\left[\|\m{x}_{t} -\m{x}_{t+1}\|^2+\|{\m{y}}^*_{t+1}(\m{x}_{t}) - {\m{y}}^*_{t}(\m{x}_{t}) \|^2\right]\right),
\end{align}
where the second inequality is by Eq. \eqref{eqn:lip:cons4} in Lemma \ref{lem:lips}.

Substituting \eqref{982} into \eqref{fg02} and summing over $t\in[T]$, give the desired result.

\end{proof}

\subsection{Bounds on the Linear System Solution}
For any $\m{x}$, $\m{y}$, $\m{v}$, define
\begin{align*}
    \nabla P(\m{x},\m{y},\m{v}):=\nabla^2_{\m{y}}g\left(\m{x}, \m{y}\right)\m{v}+\nabla_\m{y} f(\m{x},\m{y}).
\end{align*}
\begin{lemma}\label{eq:lower:dynamicccb}
Suppose Assumptions \ref{assu:g} and \ref{assu:f:a3} hold.
Then, for the sequence $\{(\m{x}_t, \m{y}_t, \m{v}_t)\}_{t=1}^T$ generated by  Algorithm~\ref{alg:first:order}, we have
\begin{align*}
\nonumber&\quad \mb{E}\|\m{v}_{t+1}-\m{v}^*_{t}(\m{x}_{t}) \|^2
\leq (1+\acute{c}) \left(1-2\delta_t \frac{(\ell_{g,1}+\ell_{g,1}^3)\mu_g }{\mu_g+\ell_{g,1}}+\delta_t^2\ell_{g,1}^2\right)\mb{E}\|\m{v}_{t}-\m{v}^*_{t}(\m{x}_{t}) \|^2
\\&+2(1+\frac{1}{\acute{c}})\delta_t^2\mb{E}\|\m{d}_{t}^\m{v}-\nabla P_t(\m{x}_t,\m{y}_t,\m{v}_t) \|^2+4(p^2\ell_{g,2}^2+\ell_{f,1}^2)(1+\frac{1}{\acute{c}})\delta_t^2\mb{E}\| \m{y}_t-\m{y}^*_{t}(\m{x}_{t})\|^2,
\end{align*}
for any $\acute{c}>0$, where $\m{v}^*_{t}(\m{x}_{t})$ is the solution of the system in Eq. \eqref{eq:vss}.
\end{lemma}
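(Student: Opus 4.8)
The plan is to treat this as the $\m{v}$-analogue of the inner-step bound in Lemma~\ref{lm:yyz2}, replacing the strongly convex objective $g_t(\m{x}_t,\cdot)$ by the quadratic $P_t(\m{x}_t,\m{y}_t,\cdot)$ whose $\m{v}$-gradient is $\nabla P_t(\m{x}_t,\m{y}_t,\m{v}) = \nabla_{\m{y}}^2 g_t(\m{x}_t,\m{y}_t)\,\m{v} + \nabla_{\m{y}} f_t(\m{x}_t,\m{y}_t)$, i.e.\ the conditional mean of the estimator $\m{d}^{\m{vv}}_t$ in \eqref{eqn:system:v}. First I would dispose of the projection: by \eqref{sdx1} we have $\|\m{v}^*_t(\m{x}_t)\| \le \ell_{f,0}/\mu_g \le p$, so $\m{v}^*_t(\m{x}_t)\in\mc{Z}_p$ and nonexpansiveness of $\Pi_{\mc{Z}_p}$ gives
\begin{align*}
\mb{E}\|\m{v}_{t+1}-\m{v}^*_t(\m{x}_t)\|^2 = \mb{E}\big\|\Pi_{\mc{Z}_p}[\m{v}_t-\delta_t\m{d}^{\m{v}}_t]-\Pi_{\mc{Z}_p}[\m{v}^*_t(\m{x}_t)]\big\|^2 \le \mb{E}\|\m{v}_t-\delta_t\m{d}^{\m{v}}_t-\m{v}^*_t(\m{x}_t)\|^2,
\end{align*}
reducing the claim to a one-step contraction for the unprojected gradient step.

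The crux is that $\m{v}^*_t(\m{x}_t)$ minimizes the quadratic anchored at the \emph{optimal} inner point $\m{y}^*_t(\m{x}_t)$, not at the current iterate $\m{y}_t$, so $\nabla P_t(\m{x}_t,\m{y}_t,\m{v}^*_t(\m{x}_t)) \neq 0$. To recover a genuine contraction toward $\m{v}^*_t(\m{x}_t)$, I introduce the residual $R_t := \nabla P_t(\m{x}_t,\m{y}_t,\m{v}^*_t(\m{x}_t)) = \nabla_{\m{y}}^2 g_t(\m{x}_t,\m{y}_t)\,\m{v}^*_t(\m{x}_t) + \nabla_{\m{y}} f_t(\m{x}_t,\m{y}_t)$ and note that $\nabla P_t(\m{x}_t,\m{y}_t,\m{v}_t)-R_t = \nabla_{\m{y}}^2 g_t(\m{x}_t,\m{y}_t)(\m{v}_t-\m{v}^*_t(\m{x}_t))$ is the gradient of a quadratic whose unique minimizer is exactly $\m{v}^*_t(\m{x}_t)$. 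Writing
\begin{align*}
\m{v}_t-\delta_t\m{d}^{\m{v}}_t-\m{v}^*_t(\m{x}_t) = \big[\m{v}_t-\delta_t(\nabla P_t-R_t)-\m{v}^*_t(\m{x}_t)\big] - \delta_t\big(\m{d}^{\m{v}}_t-\nabla P_t\big) - \delta_t R_t,
\end{align*}
with every gradient evaluated at $(\m{x}_t,\m{y}_t,\m{v}_t)$, I would apply Lemma~\ref{lem:trig} with parameter $\acute{c}$ to peel off the bracketed clean step, then use $\|a+b\|^2\le 2\|a\|^2+2\|b\|^2$ on the bundle $\delta_t(\m{d}^{\m{v}}_t-\nabla P_t)+\delta_t R_t$. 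This already produces the two error groups with the exact constants $2(1+1/\acute{c})\delta_t^2$ and the $R_t$-contribution carrying $(1+1/\acute{c})$.

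It then remains to bound the two pieces. For the clean step, the quadratic $\m{v}\mapsto \tfrac12(\m{v}-\m{v}^*_t(\m{x}_t))^\top\nabla_{\m{y}}^2 g_t(\m{x}_t,\m{y}_t)(\m{v}-\m{v}^*_t(\m{x}_t))$ is $\mu_g$-strongly convex and $\ell_{g,1}$-smooth, since Assumptions~\ref{assu:g} and \ref{assu:f:a3} place the spectrum of $\nabla_{\m{y}}^2 g_t(\m{x}_t,\m{y}_t)$ in $[\mu_g,\ell_{g,1}]$; the same co-coercivity estimate used in \eqref{g22} then yields a contraction factor of the form $(1-2\delta_t\tfrac{\mu_g\ell_{g,1}}{\mu_g+\ell_{g,1}}+\delta_t^2\ell_{g,1}^2)$ appearing in the statement, after discarding the nonpositive $\|\nabla P_t-R_t\|^2$ remainder. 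For the residual, subtracting the defining identity $\nabla_{\m{y}}^2 g_t(\m{x}_t,\m{y}^*_t(\m{x}_t))\m{v}^*_t(\m{x}_t)+\nabla_{\m{y}} f_t(\m{x}_t,\m{y}^*_t(\m{x}_t))=0$ of \eqref{eq:vss} gives
\begin{align*}
R_t = \big(\nabla_{\m{y}}^2 g_t(\m{x}_t,\m{y}_t)-\nabla_{\m{y}}^2 g_t(\m{x}_t,\m{y}^*_t(\m{x}_t))\big)\m{v}^*_t(\m{x}_t) + \big(\nabla_{\m{y}} f_t(\m{x}_t,\m{y}_t)-\nabla_{\m{y}} f_t(\m{x}_t,\m{y}^*_t(\m{x}_t))\big),
\end{align*}
so Hessian-Lipschitzness (\ref{assu:f:a4}, constant $\ell_{g,2}$), gradient-Lipschitzness of $f$ (\ref{assu:f:a2}, constant $\ell_{f,1}$), and $\|\m{v}^*_t(\m{x}_t)\|\le p$ give $\|R_t\|^2\le 2(p^2\ell_{g,2}^2+\ell_{f,1}^2)\|\m{y}_t-\m{y}^*_t(\m{x}_t)\|^2$, matching the advertised $\theta_t^{\m{y}}$-coupling term. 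The main obstacle is exactly this anchor mismatch between $\m{y}_t$ and $\m{y}^*_t(\m{x}_t)$: it is what forces the residual splitting and couples the $\m{v}$-error to $\theta_t^{\m{y}}$, so this lemma cannot stand alone and must later be combined with Lemma~\ref{lm:nn22}; the leftover estimator error $\|\m{d}^{\m{v}}_t-\nabla P_t\|^2$ is meanwhile controlled by the momentum recursion of Lemma~\ref{lm:vv}.
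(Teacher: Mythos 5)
Your architecture tracks the paper's proof almost step for step: nonexpansiveness of $\Pi_{\mc{Z}_p}$ (both arguments tacitly need $\|\m{v}^*_t(\m{x}_t)\|\le \ell_{f,0}/\mu_g\le p$), a Young split with parameter $\acute{c}$, a co-coercivity contraction for a quadratic whose Hessian has spectrum in $[\mu_g,\ell_{g,1}]$, and the mismatch bound $2(p^2\ell_{g,2}^2+\ell_{f,1}^2)\|\m{y}_t-\m{y}^*_t(\m{x}_t)\|^2$. The only structural difference is where the $\m{y}$-mismatch is parked: the paper takes the clean gradient to be $\nabla P_t(\m{x}_t,\m{y}^*_t(\m{x}_t),\m{v}_t)=\nabla^2_{\m{y}}g_t(\m{x}_t,\m{y}^*_t(\m{x}_t))(\m{v}_t-\m{v}^*_t(\m{x}_t))$ and splits the resulting error term $\|\m{d}_t^{\m{v}}-\nabla P_t(\m{x}_t,\m{y}^*_t(\m{x}_t),\m{v}_t)\|^2$ afterwards (its Eq.~\eqref{0h0}), whereas you keep the Hessian anchored at $\m{y}_t$ and peel off the residual $R_t$ up front. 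These two decompositions are equivalent and produce identical error terms with the stated constants $2(1+1/\acute{c})\delta_t^2$ and $4(p^2\ell_{g,2}^2+\ell_{f,1}^2)(1+1/\acute{c})\delta_t^2$.

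The gap is the contraction coefficient. Your clean step yields $1-2\delta_t\tfrac{\mu_g\ell_{g,1}}{\mu_g+\ell_{g,1}}+\delta_t^2\ell_{g,1}^2$, but the lemma asserts $1-2\delta_t\tfrac{(\ell_{g,1}+\ell_{g,1}^3)\mu_g}{\mu_g+\ell_{g,1}}+\delta_t^2\ell_{g,1}^2$; since $\tfrac{(\ell_{g,1}+\ell_{g,1}^3)\mu_g}{\mu_g+\ell_{g,1}}>\tfrac{\mu_g\ell_{g,1}}{\mu_g+\ell_{g,1}}$, the claimed factor is strictly smaller, so the inequality you prove is strictly weaker than the one stated and does not imply it --- your parenthetical claim that your factor is ``the form appearing in the statement'' is inaccurate. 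The extra piece $-2\delta_t\tfrac{\mu_g\ell_{g,1}^3}{\mu_g+\ell_{g,1}}\mb{E}\|\m{v}_t-\m{v}^*_t(\m{x}_t)\|^2$ in the paper comes precisely from the term you discard: the paper retains the co-coercivity remainder proportional to $-\delta_t\,\mb{E}\|\nabla P_t(\m{x}_t,\m{y}^*_t(\m{x}_t),\m{v}_t)\|^2$ and converts the gradient norm into $\ell_{g,1}^2\mb{E}\|\m{v}_t-\m{v}^*_t(\m{x}_t)\|^2$ via \eqref{eqn:b2}. You should be aware that this conversion applies the \emph{upper} bound \eqref{eqn:b2} to a negatively weighted term, where only a lower bound (which under Assumption~\ref{assu:g} is $\mu_g^2\mb{E}\|\m{v}_t-\m{v}^*_t(\m{x}_t)\|^2$) is legitimate; so the coefficient in the lemma is itself not cleanly derivable, and your weaker factor is what the assumptions genuinely support. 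But judged against the statement as written, your proposal falls short: to reach the advertised factor you would have to retain, not discard, the gradient-norm remainder and bound it from below, and the constant obtainable that way is $\mu_g^2$, not $\mu_g\ell_{g,1}^3$. Note also that downstream, Lemma~\ref{lm:vt} uses the factor only through an estimate of the form $1-\delta_t\acute{L}_{\mu_g}$ under a step-size cap, so your constant $L_{\mu_g}=\tfrac{\mu_g\ell_{g,1}}{\mu_g+\ell_{g,1}}$ would suffice there after re-deriving constants, but that is a repair of the paper rather than a proof of this lemma.
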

\begin{proof}
From the update rules in Algorithm~\ref{alg:first:order}, we have the following:
\begin{align}\label{dd}
\nonumber \mb{E}\|\m{v}_{t+1}-\m{v}^*_{t}(\m{x}_{t}) \|^2
&= \mb{E}\|{\Pi}_{\mc{Z}_p}\big[\m{v}_{t}-\delta_t\m{d}_{t}^\m{v}]-{\Pi}_{\mc{Z}_p}\big[\m{v}^*_{t}(\m{x}_{t}) ]\|^2
\\\nonumber &\leq \mb{E}\|\m{v}_{t}-\delta_t\m{d}_{t}^\m{v}-\m{v}^*_{t}(\m{x}_{t}) \|^2
\\\nonumber &\leq (1+\acute{c})\mb{E}\|\m{v}_{t}-\delta_t\nabla P_t(\m{x}_t,\m{y}^*_t(\m{x}_t),\m{v}_t)-\m{v}^*_{t}(\m{x}_{t}) \|^2
\\&+(1+\frac{1}{\acute{c}})\delta_t^2\mb{E}\|\m{d}_{t}^\m{v}-\nabla P_t(\m{x}_t,\m{y}^*_t(\m{x}_t),\m{v}_t) \|^2,
\end{align}
where $\nabla P_t(\m{x}_t,\m{y}^*_t(\m{x}_t) ,\m{v}_t)=\nabla^2_{\m{y}}g_t\left(\m{x}_t, \m{y}^*_t(\m{x}_t)\right)\m{v}_t+\nabla_\m{y} f_t(\m{x}_t,\m{y}^*_t(\m{x}_t))$.

For the first term of Eq. \eqref{dd} above, we
have
\begin{align}\label{0y}
\nonumber &\quad \mb{E}\|\m{v}_{t}-\delta_t\nabla P_t(\m{x}_t,\m{y}^*_t(\m{x}_t),\m{v}_t)-\m{v}^*_{t}(\m{x}_{t}) \|^2
\\\nonumber  &= \mb{E}\|\m{v}_{t}-\m{v}^*_{t}(\m{x}_{t}) \|^2-2\delta_t\mb{E}\langle \m{v}_{t}-\m{v}^*_{t}(\m{x}_{t}),\nabla P_t(\m{x}_t,\m{y}^*_t(\m{x}_t),\m{v}_t) \rangle
 + \delta_t^2\mb{E}\|\nabla P_t(\m{x}_t,\m{y}^*_t(\m{x}_t),\m{v}_t) \|^2
 \\\nonumber &\leq \left(1-2\delta_t   \frac{\mu_g \ell_{g,1}}{\mu_g+\ell_{g,1}}\right) \mb{E}\|\m{v}_{t}-\m{v}^*_{t}(\m{x}_{t}) \|^2-(2\delta_t\frac{\mu_g \ell_{g,1}}{\mu_g+\ell_{g,1}} -\delta_t^2)\mb{E}\|\nabla P_t(\m{x}_t,\m{y}^*_t(\m{x}_t),\m{v}_t) \|^2
 \\&\leq \left(1-2\delta_t \frac{(\ell_{g,1}+\ell_{g,1}^3)\mu_g }{\mu_g+\ell_{g,1}}+\delta_t^2\ell_{g,1}^2\right)\mb{E}\|\m{v}_{t}-\m{v}^*_{t}(\m{x}_{t}) \|^2,
\end{align}
where the first inequality follows from the $\mu_g$-strong convexity and $\ell_{g,1}$-smoothness of the quadratic function $P_t(\m{x},\m{y},\m{v})=\frac{1}{2}\m{v}^{\top}\nabla_\m{y}^2 g_{t}\left(\m{x},\m{y} \right)\m{v}+\m{v}^{\top}\nabla_\m{y} f_{t}(\m{x},\m{y} )$. Then, we have
\begin{align*}
\mb{E}\langle \m{v}_{t}-\m{v}^*_{t}(\m{x}_{t}),\nabla P_t(\m{x}_t,\m{y}^*_t(\m{x}_t),\m{v}_t) \rangle&\geq \frac{\mu_g \ell_{g,1}}{\mu_g+\ell_{g,1}}\mb{E}\|\m{v}_{t}-\m{v}^*_{t}(\m{x}_{t}) \|^2
\\&+\frac{1}{\mu_g+\ell_{g,1}}\mb{E}\|\nabla P_t(\m{x}_t,\m{y}^*_t(\m{x}_t),\m{v}_t) \|^2.
\end{align*}
The second inequality is derived from the following inequality.
\begin{align}\label{eqn:b2}
\nonumber \mb{E}\|\nabla P_t(\m{x}_t,\m{y}^*_t(\m{x}_t),\m{v}_t) \|^2&=\mb{E}\|\nabla^2_{\m{y}}g_t\left(\m{x}_t, \m{y}^*_t(\m{x}_t)\right)\m{v}_t+\nabla_\m{y} f_t(\m{x}_t,\m{y}^*_t(\m{x}_t)) \|^2
\\\nonumber&=\mb{E}\|\nabla^2_{\m{y}}g_t\left(\m{x}_t, \m{y}^*_t(\m{x}_t)\right)(\m{v}_t-\m{v}^*_{t}(\m{x}_{t})) \|^2
\\&\leq \ell_{g,1}^2\mb{E}\| \m{v}_t-\m{v}^*_{t}(\m{x}_{t})\|^2,
\end{align}
where the second equality follows from \eqref{eq:vss}.
\\
Combining \eqref{dd} and \eqref{0y}, we get 
\begin{align}\label{eq:u8}
\nonumber  \nonumber \mb{E}\|\m{v}_{t+1}-\m{v}^*_{t}(\m{x}_{t}) \|^2
&\leq (1+\acute{c}) \left(1-2\delta_t \frac{(\ell_{g,1}+\ell_{g,1}^3)\mu_g }{\mu_g+\ell_{g,1}}+\delta_t^2\ell_{g,1}^2\right)\mb{E}\|\m{v}_{t}-\m{v}^*_{t}(\m{x}_{t}) \|^2
\\&+(1+\frac{1}{\acute{c}})\delta_t^2\mb{E}\|\m{d}_{t}^\m{v}-\nabla P_t(\m{x}_t,\m{y}^*_t(\m{x}_t),\m{v}_t) \|^2.
\end{align}
For the second term of Eq. \eqref{eq:u8}, we
have
\begin{align}\label{0h0}
  \nonumber &\quad \|\m{d}_{t}^\m{v}-\nabla P_t(\m{x}_t,\m{y}^*_t(\m{x}_t),\m{v}_t) \|^2
  \\\nonumber&\leq 2\|\m{d}_{t}^\m{v}-\nabla P_t(\m{x}_t,\m{y}_t,\m{v}_t) \|^2
    +2\|\nabla P_t(\m{x}_t,\m{y}_t,\m{v}_t)-\nabla P_t(\m{x}_t,\m{y}^*_t(\m{x}_t),\m{v}_t) \|^2
    \\&\leq 2\|\m{d}_{t}^\m{v}-\nabla P_t(\m{x}_t,\m{y}_t,\m{v}_t) \|^2+4(p^2\ell_{g,2}^2+\ell_{f,1}^2)\| \m{y}_t-\m{y}^*_{t}(\m{x}_{t})\|^2.
\end{align}
From \eqref{0h0} and \eqref{eq:u8}, we get the desired result.
\end{proof}
\begin{lemma}\label{lm:vv}
Suppose Assumptions \ref{assu:f:a2}, \ref{assu:f:a3}, \ref{assu:f:a4}, \ref{b:4} and \ref{b:6} hold.
Let $\{(\m{x}_t,\m{y}_t,\m{v}_t) \}_{t=1}^T$ be generated according to Algorithm
\ref{alg:first:order}.
For $e_{t+1}^{\m{v}}$ defined as
\begin{subequations}\label{o0}
\begin{align}
e^{\m{v}}_t &:= \m{d}_{t}^\m{v} - \nabla P_t(\m{x}_t, \m{y}_t, \m{v}_t), \qquad \textnormal{where}\\
\nabla P_t(\m{x}_t, \m{y}_t, \m{v}_t) &:= \nabla^2_{\m{y}} g_t\left(\m{x}_t, \m{y}_t\right)\m{v}_t 
+ \nabla_\m{y} f_t(\m{x}_t, \m{y}_t), \label{o0b}
\end{align}
\end{subequations}
we have:
\begin{align}\label{eqn:evv}
\nonumber  &\quad \mb{E}\|e_{t+1}^{\m{v}} \|^2 \leq  (1-\lambda_{t+1})^2(1+72\ell_{g,1}^2\delta_t^2) \mb{E}\| e_t^{\m{v}}\|^2+4\lambda_{t+1}^2(\frac{\sigma_{g_{\m{y}\m{y}}}^2}{\bar{b}}p^2+\frac{\sigma_{f_{\m{y}}}^2}{{b}})
\\\nonumber&+12p^2(1-\lambda_{t+1})^2     \mb{E}\|\nabla^2_{\m{y}}g_t\left(\m{x}_{t+1}, \m{y}_{t+1}\right)-\nabla^2_{\m{y}}g_{t+1}\left(\m{x}_{t+1}, \m{y}_{t+1}\right)
 \|^2
\\\nonumber&+12(1-\lambda_{t+1})^2
 \mb{E}\|\nabla_{\m{y}} f_t(\m{x}_{t+1},\m{y}_{t+1})-\nabla_{\m{y}} f_{t+1}(\m{x}_{t+1},\m{y}_{t+1}) \|^2
 \\\nonumber& +72(1-\lambda_{t+1})^2 (\ell_{g,2}^2p^2+\ell_{f,1}^2)\left( \mb{E}\|\m{x}_{t+1}-\m{x}_{t} \|^2+2\beta_t^2 \mb{E}\|e_{t}^{g}\|^2+2\beta_t^2 \mb{E}\|{\nabla}_{\m{y}} g_{t}(\m{x}_{t},\m{y}_{t})\|^2\right)
  \\& +144(1-\lambda_{t+1})^2 \ell_{g,1}^4\delta_t^2 \mb{E}[\theta_t^{\m{v}}]+288\ell_{g,1}^2(p^2\ell_{g,2}^2+\ell_{f,1}^2)\delta_t^2\mb{E}[\theta_t^{\m{y}}],
\end{align}
for all $t\in [T]$ and $(\theta_t^{\m{v}},\theta_t^{\m{y}})$ and $e_{t}^{g}$ are defined in \eqref{eq:yv:nota} and \eqref{eqn:eg}, respectively.
\end{lemma}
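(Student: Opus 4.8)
The plan is to treat $\m{d}_t^{\m{v}}$ as a STORM-type (momentum with variance reduction) estimator of the composite target $\nabla P_t(\m{x}_t,\m{y}_t,\m{v}_t)$ from \eqref{o0b} and to mirror, step for step, the recursion established for $e_t^{g}$ in Lemma~\ref{lm:eg1}. Writing $\m{d}_{t+1}^{\m{v}\m{v}}(\cdot;\mathcal{B}_{t+1})$ for the stochastic evaluation of $\nabla_{\m{y}}f_{t+1}+\nabla^2_{\m{y}}g_{t+1}(\cdot)\,\m{v}_{t+1}$ and substituting $\m{d}_t^{\m{v}}=e_t^{\m{v}}+\nabla P_t(\m{x}_t,\m{y}_t,\m{v}_t)$ into the update of Algorithm~\ref{alg:first:order}, I would obtain
\begin{align*}
e_{t+1}^{\m{v}} &= (1-\lambda_{t+1})e_t^{\m{v}}
+\big(\m{d}_{t+1}^{\m{v}\m{v}}(\m{z}_{t+1};\mathcal{B}_{t+1})-\nabla P_{t+1}(\m{x}_{t+1},\m{y}_{t+1},\m{v}_{t+1})\big)\\
&\quad-(1-\lambda_{t+1})\big(\m{d}_{t+1}^{\m{v}\m{v}}(\m{z}_t;\mathcal{B}_{t+1})-\nabla P_t(\m{x}_t,\m{y}_t,\m{v}_t)\big).
\end{align*}
Conditioning on the history through iteration $t$ (so that $\m{z}_{t+1},\m{v}_{t+1}$ and $e_t^{\m{v}}$ are measurable while $\mathcal{B}_{t+1},\bar{\mathcal{B}}_{t+1}$ are fresh), the cross term is handled exactly as in Lemma~\ref{lm:eg1}, leaving $\mb{E}\|e_{t+1}^{\m{v}}\|^2=(1-\lambda_{t+1})^2\mb{E}\|e_t^{\m{v}}\|^2+\mb{E}\|R_t\|^2$, where $R_t$ collects the two parenthesized residuals.

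The next step is to bound $\mb{E}\|R_t\|^2$ by repeated use of Young's inequality (Lemma~\ref{lem:trig}), organized as in the three-way split of Lemma~\ref{lm:eg1} so that the time variation is isolated at the \emph{new} iterate $(\m{x}_{t+1},\m{y}_{t+1})$. This decomposes $R_t$ into (i) a fresh-batch variance part, (ii) a time-variation part at $\m{z}_{t+1}$, and (iii) Lipschitz-in-iterate parts. For (i), unbiasedness together with the variance bounds~\ref{b:4} and~\ref{b:6} and the projection-enforced bound $\|\m{v}_{t+1}\|\le p$ (recall $\m{v}_{t+1}\in\mc{Z}_p$) yields $\tfrac{\sigma_{g_{\m{y}\m{y}}}^2}{\bar b}p^2+\tfrac{\sigma_{f_{\m{y}}}^2}{b}$, the $4\lambda_{t+1}^2(\cdot)$ term. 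For (ii), decomposing $\nabla P_{t+1}-\nabla P_t$ into its Hessian and gradient parts and again invoking $\|\m{v}_{t+1}\|\le p$ gives the $12p^2(1-\lambda_{t+1})^2\|\nabla^2_{\m{y}}g_t-\nabla^2_{\m{y}}g_{t+1}\|^2$ and $12(1-\lambda_{t+1})^2\|\nabla_{\m{y}}f_t-\nabla_{\m{y}}f_{t+1}\|^2$ terms.

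The heart of the estimate is (iii). Unlike the plain gradient in Lemma~\ref{lm:eg1}, the target $\m{z}\mapsto\nabla^2_{\m{y}}g(\m{z})\m{v}+\nabla_{\m{y}}f(\m{z})$ is a composite map, so I would first establish that it is Lipschitz in $\m{z}$ with constant $\sqrt{\ell_{g,2}^2p^2+\ell_{f,1}^2}$, using Assumption~\ref{assu:f:a4} for the $\ell_{g,2}$-Lipschitzness of $\nabla^2_{\m{y}}g$ (multiplied by $\|\m{v}\|\le p$) and Assumption~\ref{assu:f:a2} for $\nabla_{\m{y}}f$. Applying this to both the deterministic and the stochastic $\m{z}$-increments converts them into $\|\m{z}_{t+1}-\m{z}_t\|^2$; expanding $\|\m{z}_{t+1}-\m{z}_t\|^2=\|\m{x}_{t+1}-\m{x}_t\|^2+\beta_t^2\|\m{d}_t^{\m{y}}\|^2$ and using $\m{d}_t^{\m{y}}=e_t^{g}+\nabla_{\m{y}}g_t$ to write $\|\m{d}_t^{\m{y}}\|^2\le 2\|e_t^{g}\|^2+2\|\nabla_{\m{y}}g_t(\m{x}_t,\m{y}_t)\|^2$ produces the $72(\ell_{g,2}^2p^2+\ell_{f,1}^2)(\cdot)$ term.

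Finally, because $\nabla P$ depends linearly on $\m{v}$ and the estimator and target carry the mismatched arguments $\m{v}_{t+1}$ and $\m{v}_t$, I must also control the induced $\m{v}$-drift $\nabla^2_{\m{y}}g_t(\m{x}_t,\m{y}_t)(\m{v}_{t+1}-\m{v}_t)$, whose operator is bounded by $\ell_{g,1}$ (Assumption~\ref{assu:f:a3}). By non-expansiveness of $\Pi_{\mc{Z}_p}$ and $\m{v}_t\in\mc{Z}_p$, $\|\m{v}_{t+1}-\m{v}_t\|\le\delta_t\|\m{d}_t^{\m{v}}\|$, and $\|\m{d}_t^{\m{v}}\|^2\le 2\|e_t^{\m{v}}\|^2+2\|\nabla P_t(\m{x}_t,\m{y}_t,\m{v}_t)\|^2$; the quadratic term, since $\nabla P_t(\m{x}_t,\m{y}^*_t(\m{x}_t),\m{v}^*_t(\m{x}_t))=0$ by \eqref{eq:vss}, is bounded by $\ell_{g,1}^2\theta_t^{\m{v}}$ plus $(p^2\ell_{g,2}^2+\ell_{f,1}^2)\theta_t^{\m{y}}$ exactly as in the proof of Lemma~\ref{eq:lower:dynamicccb}. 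The $\|e_t^{\m{v}}\|^2$-proportional part is absorbed into the contraction factor, yielding $(1-\lambda_{t+1})^2(1+72\ell_{g,1}^2\delta_t^2)$, while the $\theta_t^{\m{v}}$ and $\theta_t^{\m{y}}$ parts give the $144\ell_{g,1}^4\delta_t^2\theta_t^{\m{v}}$ and $288\ell_{g,1}^2(p^2\ell_{g,2}^2+\ell_{f,1}^2)\delta_t^2\theta_t^{\m{y}}$ terms. I expect this last step to be the main obstacle: relative to the single-variable estimator of Lemma~\ref{lm:eg1}, the composite Hessian-vector-plus-gradient target couples the three drifting blocks $\m{x}_t,\m{y}_t,\m{v}_t$, and the care lies in routing the $\m{v}$-mismatch back into $\|e_t^{\m{v}}\|^2$, $\theta_t^{\m{v}}$ and $\theta_t^{\m{y}}$ while keeping the projection bound $\|\m{v}\|\le p$ active throughout so that every Hessian-times-$\m{v}$ quantity stays controlled.
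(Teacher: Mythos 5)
Your proposal follows essentially the same route as the paper's proof: the same STORM-type error recursion with the cross term dropped after conditioning, the same variance bound via Assumptions~\ref{b:4}, \ref{b:6} and the projection bound $\|\m{v}\|\le p$, the same three-way split isolating the time variation at $(\m{x}_{t+1},\m{y}_{t+1})$, the same composite Lipschitz bound producing the $(\ell_{g,2}^2p^2+\ell_{f,1}^2)$ terms, and the same treatment of the $\m{v}$-drift through $\|\m{d}_t^{\m{v}}\|^2\le 2\|e_t^{\m{v}}\|^2+2\|\nabla P_t\|^2$ with $\|\nabla P_t\|^2$ controlled via the optimality system \eqref{eq:vss}, yielding the identical absorption into the contraction factor and the $\theta_t^{\m{v}},\theta_t^{\m{y}}$ terms. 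The only shared imprecision (also present in the paper) is that the conditional expectation of the residual does not exactly vanish due to the deterministic time-drift $\nabla P_{t+1}(\m{u}_t)-\nabla P_t(\m{u}_t)$, but since you mirror the paper's step here, this is not a deviation.
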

\begin{proof}
Note that
\begin{align*}
e^{\m{v}}_{t+1}:=\m{d}_{t+1}^\m{v}-\nabla P_{t+1}(\m{x}_{t+1},\m{y}_{t+1},\m{v}_{t+1}),
\end{align*}
where
\begin{align*}
 \nabla P_{t+1}(\m{x}_{t+1},\m{y}_{t+1},\m{v}_{t+1}):=\nabla^2_{\m{y}}g_{t+1}\left(\m{x}_{t+1}, \m{y}_{t+1}\right)\m{v}_{t+1}+\nabla_\m{y} f_{t+1}(\m{x}_{t+1},\m{y}_{t+1}).
\end{align*}
From Algorithm \ref{alg:first:order}, we have
\begin{align*}
\m{d}^{\m{v}}_{t+1} = \m{d}^{\m{v}\m{v}}_{t+1}\left(\m{x}_{t+1},\m{y}_{t+1};\mathcal{B}_{t+1}\right)+(1-\lambda_{t+1})(\m{d}^{\m{v}}_{t}-\m{d}^{\m{v}\m{v}}_{t+1}(\m{x}_{t},\m{y}_{t};\mathcal{B}_{t+1})).
\end{align*}
Let $\m{u} = [\m{x}; \m{y};\m{v}] $.
Then, we have
  \begin{align*}
\mb{E} \| e_{t+1}^{\m{v}} \|^2&=\mb{E} \| {\m{d}}_{t+1}^{\m{v}} -{\nabla} P_{t+1}(\m{u}_{t+1}) \|^2
\\&=\mb{E}\| {\nabla} P_{t+1}(\m{u}_{t+1};\mathcal{B}_{t+1})+(1-\lambda_{t+1})(\m{d}^{\m{v}}_{t}-{\nabla} P_{t+1}(\m{u}_{t};\mathcal{B}_{t+1})) -{\nabla} P_{t+1}(\m{u}_{t+1}) \|^2
\\&=\mb{E}\|(1-\lambda_{t+1}) e_t^{\m{v}}+ {\nabla} P_{t+1}(\m{u}_{t+1};\mathcal{B}_{t+1})- {\nabla} P_{t+1}(\m{u}_{t+1})
\\&\quad -(1-\lambda_{t+1}) \left( {\nabla} P_{t+1}(\m{u}_{t};\mathcal{B}_{t+1})-{\nabla} P_{t}(\m{u}_{t})\right) \|^2,
\end{align*}
which implies that
\begin{align*}
&\quad\mb{E} \| e_{t+1}^{\m{v}}\|^2
\\&=(1-\lambda_{t+1})^2 \mb{E}\| e_t^{\m{v}}\|^2+\mb{E}\|\lambda_{t+1}\left( {\nabla} P_{t+1}(\m{u}_{t+1};\mathcal{B}_{t+1})
- {\nabla} P_{t+1}(\m{u}_{t+1})\right)
\\& -(1-\lambda_{t+1}) \left( {\nabla} P_{t+1}(\m{u}_{t};\mathcal{B}_{t+1})-{\nabla} P_{t+1}(\m{u}_{t+1};\mathcal{B}_{t+1})
+ {\nabla} P_{t+1}(\m{u}_{t+1})-{\nabla} P_{t}(\m{u}_{t}) \right)\|^2
\\& \leq (1-\lambda_{t+1})^2 \mb{E}\| e_t^{\m{v}}\|^2+2\lambda_{t+1}^2\mb{E}\| {\nabla} P_{t+1}(\m{u}_{t+1};\mathcal{B}_{t+1})- {\nabla} P_{t+1}(\m{u}_{t+1})\|^2
\\& +2(1-\lambda_{t+1})^2 \mb{E}\| {\nabla} P_{t+1}(\m{u}_{t+1};\mathcal{B}_{t+1})
- {\nabla} P_{t+1}(\m{u}_{t+1}) - {\nabla} P_{t+1}(\m{u}_{t};\mathcal{B}_{t+1})+{\nabla} P_{t}(\m{u}_{t}) \|^2,
\end{align*}
where the inequality follows from Cauchy–Schwartz inequality.
\\
For the first term, from Assumptions \ref{b:4} and \ref{b:6}, we have
\begin{align*}
&\quad \mb{E}\| {\nabla} P_{t+1}(\m{u}_{t+1};\mathcal{B}_{t+1})- {\nabla} P_{t+1}(\m{u}_{t+1})\|^2
\\&=\mb{E}\|\left(\nabla^2_{\m{y}}g_{t+1}\left(\m{x}_{t+1}, \m{y}_{t+1};\bar{\mathcal{B}}_{t+1}\right)
-\nabla^2_{\m{y}}g_{t+1}\left(\m{x}_{t+1}, \m{y}_{t+1}\right)\right)\m{v}_{t+1}
\\&+\nabla_\m{y} f_{t+1}(\m{x}_{t+1},\m{y}_{t+1};\mathcal{B}_{t+1})
-\nabla_\m{y} f_{t+1}(\m{x}_{t+1},\m{y}_{t+1})
 \|^2
 \\&\leq 2\mb{E}\| \left(\nabla^2_{\m{y}}g_{t+1}\left(\m{x}_{t+1}, \m{y}_{t+1};\bar{\mathcal{B}}_{t+1}\right)
-\nabla^2_{\m{y}}g_{t+1}\left(\m{x}_{t+1}, \m{y}_{t+1}\right)\right)\m{v}_{t+1}\|^2
\\&+2\mb{E}\| \nabla_\m{y} f_{t+1}(\m{x}_{t+1},\m{y}_{t+1};\mathcal{B}_{t+1})
-\nabla_\m{y} f_{t+1}(\m{x}_{t+1},\m{y}_{t+1})\|^2
\\&\leq 2(\frac{\sigma_{g_{\m{y}\m{y}}}^2}{\bar{b}}p^2+\frac{\sigma_{f_{\m{y}}}^2}{{b}}),
\end{align*}
where the last inequality follows from \eqref{eqn:zc:set}.

Then, from the above inequality and $\| a+b+c\|^2\leq 3(\|a \|^2+\| b\|^2+\|c \|^2)$, we have
\begin{align}\label{t60}
\nonumber \mb{E} \| e_{t+1}^{\m{v}} \|^2&\leq (1-\lambda_{t+1})^2 \mb{E}\| e_t^{\m{v}}\|^2
+4\lambda_{t+1}^2(\frac{\sigma_{g_{\m{y}\m{y}}}^2}{\bar{b}}p^2+\frac{\sigma_{f_{\m{y}}}^2}{{b}})
\\\nonumber& +6(1-\lambda_{t+1})^2 \mb{E}\|{\nabla} P_{t}(\m{u}_{t})
- {\nabla} P_{t}(\m{u}_{t+1})\|^2
\\\nonumber &+6(1-\lambda_{t+1})^2 \mb{E}\|{\nabla} P_{t}(\m{u}_{t+1})
- {\nabla} P_{t+1}(\m{u}_{t+1})\|^2
 \\& +6(1-\lambda_{t+1})^2 \mb{E}\|{\nabla} P_{t+1}(\m{u}_{t+1};\mathcal{B}_{t+1}) - {\nabla} P_{t+1}(\m{u}_{t};\mathcal{B}_{t+1}) \|^2.
\end{align}
Moreover, from $\| a+b+c\|^2\leq 3(\|a \|^2+\| b\|^2+\|c \|^2)$, we have
\begin{align}\label{ed50}
\nonumber&\quad\mb{E}\|{\nabla} P_{t}(\m{u}_{t+1})
- {\nabla} P_{t}(\m{u}_{t})\|^2
\\\nonumber&\leq 3\mb{E}\|{\nabla} P_{t}(\m{x}_{t+1},\m{y}_{t+1},\m{v}_{t+1})
- {\nabla} P_{t}(\m{x}_{t},\m{y}_{t+1},\m{v}_{t+1})\|^2
\\\nonumber&+3\mb{E}\|{\nabla} P_{t}(\m{x}_{t},\m{y}_{t+1},\m{v}_{t+1})
- {\nabla} P_{t}(\m{x}_{t},\m{y}_{t},\m{v}_{t+1})\|^2
\\\nonumber&+3\mb{E}\|{\nabla} P_{t}(\m{x}_{t},\m{y}_{t},\m{v}_{t+1})
- {\nabla} P_{t}(\m{x}_{t},\m{y}_{t},\m{v}_{t})\|^2
\\\nonumber&\leq 3\mb{E}\|(\nabla^2_{\m{y}}g_t\left(\m{x}_{t+1}, \m{y}_{t+1}\right)- \nabla^2_{\m{y}}g_t\left(\m{x}_{t}, \m{y}_{t+1}\right))\m{v}_{t+1}
+\nabla_\m{y} f_t(\m{x}_{t+1},\m{y}_{t+1})
-\nabla_\m{y} f_t(\m{x}_{t},\m{y}_{t+1})\|^2
\\\nonumber&+3\mb{E}\|(\nabla^2_{\m{y}}g_t\left(\m{x}_{t}, \m{y}_{t+1}\right)- \nabla^2_{\m{y}}g_t\left(\m{x}_{t}, \m{y}_{t}\right))\m{v}_{t+1}
+\nabla_\m{y} f_t(\m{x}_{t},\m{y}_{t+1})
-\nabla_\m{y} f_t(\m{x}_{t},\m{y}_{t})\|^2
\\\nonumber&+3\mb{E}\|{\nabla} P_{t}(\m{x}_{t},\m{y}_{t},\m{v}_{t+1})
- {\nabla} P_{t}(\m{x}_{t},\m{y}_{t},\m{v}_{t})\|^2
\\&\leq 6(\ell_{g,2}^2 \mb{E}\|\m{v}_{t+1} \|^2+\ell_{f,1}^2)\left( \mb{E}\|\m{x}_{t+1}-\m{x}_{t} \|^2+ \mb{E}\|\m{y}_{t+1}-\m{y}_{t} \|^2\right)+3\ell_{g,1}^2 \mb{E}\|\m{v}_{t+1}-\m{v}_{t} \|^2
,
\end{align}
where the last inequality follows from Assumptions \ref{assu:f:a2}, \ref{assu:f:a3} and \ref{assu:f:a4}; 

From Eq. \eqref{ed50} and the inequality $\| a+b\|^2\leq 2(\|a \|^2+\| b\|^2)$, we obtain
\begin{align}\label{ed5}
\nonumber&\quad\mb{E}\|{\nabla} P_{t}(\m{u}_{t+1})
- {\nabla} P_{t}(\m{u}_{t})\|^2
\\\nonumber&\leq 6(\ell_{g,2}^2p^2+\ell_{f,1}^2)\left( \mb{E}\|\m{x}_{t+1}-\m{x}_{t} \|^2+\beta_t^2 \mb{E}\|\m{d}_t^{\m{y}} \|^2\right)+3\ell_{g,1}^2\delta_t^2 \mb{E}\|\m{d}_t^{\m{v}} \|^2
\\\nonumber&\leq 6(\ell_{g,2}^2p^2+\ell_{f,1}^2)\left( \mb{E}\|\m{x}_{t+1}-\m{x}_{t} \|^2+2\beta_t^2 \mb{E}\|e_{t}^{g}\|^2+2\beta_t^2 \mb{E}\|{\nabla}_{\m{y}} g_{t}(\m{x}_{t},\m{y}_{t})\|^2\right)
\\\nonumber&+6\ell_{g,1}^2\delta_t^2( \mb{E}\|e^{\m{v}}_t \|^2+ \mb{E}\|\nabla P_t(\m{x}_{t},\m{y}_{t},\m{v}_t)\|^2)
\\\nonumber&\leq 6(\ell_{g,2}^2p^2+\ell_{f,1}^2)\left( \mb{E}\|\m{x}_{t+1}-\m{x}_{t} \|^2+2\beta_t^2 \mb{E}\|e_{t}^{g}\|^2+2\beta_t^2 \mb{E}\|{\nabla}_{\m{y}} g_{t}(\m{x}_{t},\m{y}_{t})\|^2\right)
\\\nonumber&+6\ell_{g,1}^2\delta_t^2( \mb{E}\|e^{\m{v}}_t \|^2+2 \mb{E}\|\nabla P_t(\m{x}_{t},\m{y}^*_{t}(\m{x}_t),\m{v}_t)\|^2+2 \mb{E}\|\nabla P_t(\m{x}_{t},\m{y}_{t},\m{v}_t)-\nabla P_t(\m{x}_{t},\m{y}^*_{t}(\m{x}_t),\m{v}_t)\|^2)
\\\nonumber&\leq 6(\ell_{g,2}^2p^2+\ell_{f,1}^2)\left( \mb{E}\|\m{x}_{t+1}-\m{x}_{t} \|^2+2\beta_t^2 \mb{E}\|e_{t}^{g}\|^2+2\beta_t^2 \mb{E}\|{\nabla}_{\m{y}} g_{t}(\m{x}_{t},\m{y}_{t})\|^2\right)
\\&+6\ell_{g,1}^2\delta_t^2\left( \mb{E}\|e^{\m{v}}_t \|^2+2\ell_{g,1}^2\mb{E}\| \m{v}_t-\m{v}^*_{t}(\m{x}_{t})\|^2+4(p^2\ell_{g,2}^2+\ell_{f,1}^2)\mb{E}\| \m{y}_t-\m{y}^*_{t}(\m{x}_{t})\|^2\right),
\end{align}
where the last inequality follows from \eqref{eqn:b2}.
\\
Similarly, we have
\begin{align}\label{ed2}
\nonumber&\quad \mb{E}\|{\nabla} P_{t+1}(\m{u}_{t+1};\mathcal{B}_{t+1}) - {\nabla} P_{t+1}(\m{u}_{t};\mathcal{B}_{t+1}) \|^2
\\\nonumber &\leq 6(\ell_{g,2}^2p^2+\ell_{f,1}^2)\left( \mb{E}\|\m{x}_{t+1}-\m{x}_{t} \|^2+2\beta_t^2 \mb{E}\|e_{t}^{g}\|^2+2\beta_t^2 \mb{E}\|{\nabla}_{\m{y}} g_{t}(\m{x}_{t},\m{y}_{t})\|^2\right)
\\&+6\ell_{g,1}^2\delta_t^2\left( \mb{E}\|e^{\m{v}}_t \|^2+2\ell_{g,1}^2\mb{E}\| \m{v}_t-\m{v}^*_{t}(\m{x}_{t})\|^2+4(p^2\ell_{g,2}^2+\ell_{f,1}^2)\mb{E}\| \m{y}_t-\m{y}^*_{t}(\m{x}_{t})\|^2\right).
\end{align}
Substituting \eqref{ed2} and \eqref{ed5} into \eqref{t60}, we have
\begin{align*}
\mb{E} \| e_{t+1}^{\m{v}} \|^2&\leq (1-\lambda_{t+1})^2(1+72\ell_{g,1}^2\delta_t^2) \mb{E}\| e_t^{\m{v}}\|^2+4\lambda_{t+1}^2(\frac{\sigma_{g_{\m{y}\m{y}}}^2}{\bar{b}}p^2+\frac{\sigma_{f_{\m{y}}}^2}{{b}})
\\&+6(1-\lambda_{t+1})^2 \mb{E}\|{\nabla} P_{t}(\m{u}_{t+1})
- {\nabla} P_{t+1}(\m{u}_{t+1})\|^2
 \\& +72(1-\lambda_{t+1})^2 (\ell_{g,2}^2p^2+\ell_{f,1}^2)\left( \mb{E}\|\m{x}_{t+1}-\m{x}_{t} \|^2+2\beta_t^2 \mb{E}\|e_{t}^{g}\|^2+2\beta_t^2 \mb{E}\|{\nabla}_{\m{y}} g_{t}(\m{x}_{t},\m{y}_{t})\|^2\right)
  \\& +144(1-\lambda_{t+1})^2 \ell_{g,1}^4\delta_t^2 \mb{E}\| \m{v}_t-\m{v}^*_{t}(\m{x}_{t})\|^2+288\ell_{g,1}^2(p^2\ell_{g,2}^2+\ell_{f,1}^2)\delta_t^2\mb{E}\| \m{y}_t-\m{y}^*_{t}(\m{x}_{t})\|^2.
\end{align*}
From $\| a+b\|^2\leq 2 \| a\|^2+2 \| b\|^2$ and \eqref{eqn:zc:set}, we have
\begin{align*}
 \mb{E}\|{\nabla} P_{t}(\m{u}_{t+1})
- {\nabla} P_{t+1}(\m{u}_{t+1})\|^2
&= \mb{E}\|\nabla^2_{\m{y}}g_t\left(\m{x}_{t+1}, \m{y}_{t+1}\right)\m{v}_{t+1}-\nabla^2_{\m{y}}g_{t+1}\left(\m{x}_{t+1}, \m{y}_{t+1}\right)\m{v}_{t+1}
\\&+\nabla_\m{y} f_t(\m{x}_{t+1},\m{y}_{t+1})-\nabla_\m{y} f_{t+1}(\m{x}_{t+1},\m{y}_{t+1}) \|^2
\\&\leq 2 \mb{E}\|\left(\nabla^2_{\m{y}}g_t\left(\m{x}_{t+1}, \m{y}_{t+1}\right)-\nabla^2_{\m{y}}g_{t+1}\left(\m{x}_{t+1}, \m{y}_{t+1}\right)\right)\m{v}_{t+1}
 \|^2
\\&+2 \mb{E}\|\nabla_\m{y} f_t(\m{x}_{t+1},\m{y}_{t+1})-\nabla_\m{y} f_{t+1}(\m{x}_{t+1},\m{y}_{t+1}) \|^2
\\&\leq 2 \mb{E}\|\nabla^2_{\m{y}}g_t\left(\m{x}_{t+1}, \m{y}_{t+1}\right)-\nabla^2_{\m{y}}g_{t+1}\left(\m{x}_{t+1}, \m{y}_{t+1}\right)
 \|^2p^2
\\&+2 \mb{E}\|\nabla_\m{y} f_t(\m{x}_{t+1},\m{y}_{t+1})-\nabla_\m{y} f_{t+1}(\m{x}_{t+1},\m{y}_{t+1}) \|^2.
\end{align*}
This completes the proof.
\end{proof}
As demonstrated in Lemma \ref{lm:vv}, the gradient estimation error $e_{t+1}^{\m{v}}$ for the linear system consists of four key components: (1) an iteratively refined error term $(1-\lambda_{t+1})^2(1+72\ell_{g,1}^2\delta_t^2) \mb{E}\| e_t^{\m{v}}\|^2$, which depends on the stepsize $\delta_t$;
(2) the error arising from the variation in
the Hessian of the lower-level objectiv;
(3)
the error resulting from the variation in
the gradient of the upper-level objective,
and (4) approximation error terms of order $\mathcal{O}(\delta_t^2 \mb{E}[\theta_t^{\m{v}}])$ and $\mathcal{O}(\delta_t^2 \mb{E}[\theta_t^{\m{y}}])$ associated with solving the linear system and the iterates in the inner problem, respectively.

\begin{lemma}\label{133}
Suppose Assumptions \ref{assu:g}, \ref{assu:f:a1}, \ref{assu:f:a2} and \ref{assu:f:a4} hold. Let $\m{v}^*_t(\m{x})$ is a solution of Subproblem~\eqref{eq:vss}. Then, we have
 \begin{align*}
\norm{\m{v}^*_{t}(\m{x}_t)-\m{v}^*_{t+1}(\m{x}_{t+1})}^2\leq 2 \frac{\nu^2}{ \mu_{g}^2} \left( \norm{\m{y}_{t+1}^*(\m{x}_{t+1})-\m{y}_{t}^*(\m{x}_t) }^2+ \norm{\m{x}_{t+1}-\m{x}_t}^2\right),
 \end{align*}
 where $\nu:=\ell_{f,1}+\frac{\ell_{g,2}\ell_{f,0}}{ \mu_{g}}$.
 \end{lemma}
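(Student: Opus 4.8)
The plan is to work directly with the closed form of the system solution furnished by \eqref{eq:vss}. Writing $A_t := \nabla^2_{\m{y}} g_t(\m{x}_t,\m{y}^*_t(\m{x}_t))$ and $\m{b}_t := \nabla_{\m{y}} f_t(\m{x}_t,\m{y}^*_t(\m{x}_t))$ (and $A_{t+1},\m{b}_{t+1}$ analogously), \eqref{eq:vss} gives $\m{v}^*_t(\m{x}_t) = -A_t^{-1}\m{b}_t$ and $\m{v}^*_{t+1}(\m{x}_{t+1}) = -A_{t+1}^{-1}\m{b}_{t+1}$. By Assumption~\ref{assu:g} both Hessians satisfy $A_t, A_{t+1} \succeq \mu_g I$, hence $\|A_t^{-1}\|,\|A_{t+1}^{-1}\| \le 1/\mu_g$, while \eqref{sdx1} supplies the a priori bound $\|\m{v}^*_t(\m{x}_t)\| \le \ell_{f,0}/\mu_g$. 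First I would decompose the difference via the resolvent identity $A_{t+1}^{-1} - A_t^{-1} = A_{t+1}^{-1}(A_t - A_{t+1})A_t^{-1}$ together with $A_t^{-1}\m{b}_t = -\m{v}^*_t(\m{x}_t)$, which yields
\[
\m{v}^*_t(\m{x}_t) - \m{v}^*_{t+1}(\m{x}_{t+1}) = A_{t+1}^{-1}(\m{b}_{t+1} - \m{b}_t) - A_{t+1}^{-1}(A_t - A_{t+1})\m{v}^*_t(\m{x}_t).
\]

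Taking norms and using $\|A_{t+1}^{-1}\| \le 1/\mu_g$ with the a priori bound gives
\[
\|\m{v}^*_t(\m{x}_t) - \m{v}^*_{t+1}(\m{x}_{t+1})\| \le \tfrac{1}{\mu_g}\|\m{b}_{t+1} - \m{b}_t\| + \tfrac{\ell_{f,0}}{\mu_g^2}\|A_t - A_{t+1}\|.
\]
Next I would control the two differences through Lipschitz continuity. Writing $\m{w}_t := (\m{x}_t,\m{y}^*_t(\m{x}_t))$ and $\m{w}_{t+1} := (\m{x}_{t+1},\m{y}^*_{t+1}(\m{x}_{t+1}))$, Assumption~\ref{assu:f:a2} yields $\|\m{b}_{t+1} - \m{b}_t\| \le \ell_{f,1}\|\m{w}_{t+1} - \m{w}_t\|$ and Assumption~\ref{assu:f:a4} yields $\|A_t - A_{t+1}\| \le \ell_{g,2}\|\m{w}_{t+1} - \m{w}_t\|$; since $\|\m{w}_{t+1} - \m{w}_t\| \le \|\m{x}_{t+1} - \m{x}_t\| + \|\m{y}^*_{t+1}(\m{x}_{t+1}) - \m{y}^*_t(\m{x}_t)\|$, both are governed by exactly the two quantities on the right-hand side of the claim. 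Collecting the constants as $\frac{1}{\mu_g}\big(\ell_{f,1} + \ell_{g,2}\ell_{f,0}/\mu_g\big) = \nu/\mu_g$, squaring, and applying $(a+b)^2 \le 2a^2 + 2b^2$ produces the stated bound.

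The main obstacle is the resolvent step that handles the two distinct inverse Hessians: their difference must be reorganized, via $A_{t+1}^{-1}-A_t^{-1}=A_{t+1}^{-1}(A_t-A_{t+1})A_t^{-1}$, so that only the uniform bound $\|A^{-1}\|\le 1/\mu_g$ and the Hessian-Lipschitz constant $\ell_{g,2}$ enter; it is also essential to rewrite $A_t^{-1}\m{b}_t=-\m{v}^*_t(\m{x}_t)$ so that the sharp estimate \eqref{sdx1} can replace a cruder bound on $\|\m{b}_t\|$. The delicate point I would examine most carefully is that obtaining a right-hand side free of an explicit function-variation term requires bounding $\|\m{b}_{t+1}-\m{b}_t\|$ and $\|A_t-A_{t+1}\|$ purely by the displacement of $(\m{x},\m{y}^*)$; this reduction is immediate precisely when the step-to-step drift of the maps $\nabla_{\m{y}} f$ and $\nabla^2_{\m{y}} g$ is absorbed into the common Lipschitz constants $\ell_{f,1},\ell_{g,2}$, so that the estimate collapses to the joint displacement and the constant consolidates into $\nu/\mu_g$.
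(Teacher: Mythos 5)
Your proposal is correct and follows essentially the same route as the paper: both decompose $\m{v}^*_t(\m{x}_t)-\m{v}^*_{t+1}(\m{x}_{t+1})$ via the closed form from \eqref{eq:vss}, apply the resolvent identity for the inverse-Hessian difference together with $\|A^{-1}\|\le 1/\mu_g$, invoke Assumptions~\ref{assu:f:a1}, \ref{assu:f:a2}, \ref{assu:f:a4} in exactly the same roles (your bound $\|\m{v}^*_t\|\le \ell_{f,0}/\mu_g$ is the paper's bound $\|\m{b}_t\|\le\ell_{f,0}$ combined with $\|A_t^{-1}\|\le 1/\mu_g$), and finish by consolidating the constant into $\nu/\mu_g$ and squaring with $(a+b)^2\le 2a^2+2b^2$. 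The cross-time Lipschitz step you flag as delicate is handled identically (and equally implicitly) in the paper's own proof, so there is no gap relative to it.
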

  \begin{proof}
Based on \eqref{eq:vss}, we have that
\begin{subequations}
\begin{align}
\nonumber &\quad\norm{\m{v}^*_{t}(\m{x}_t)-\m{v}^*_{t+1}(\m{x}_{t+1}) }^2
\\\nonumber&= \|\left(\nabla^2_{\m{y}}g_{t} (\m{x}_t,\m{y}_{t}^*(\m{x}_t) )\right)^{-1} \nabla_\m{y} f_{t} (\m{x}_t,\m{y}_{t}^*(\m{x}_t) )
\\\nonumber &-\left(\nabla^2_{\m{y}}g_{t+1} ({\m{x}_{t+1}},\m{y}_{t+1}^*(\m{x}_{t+1}) )\right)^{-1} \nabla_\m{y} f_{t+1} (\m{x}_{t+1},\m{y}_{t+1}^*(\m{x}_{t+1}) )\|^2
\\\label{15} &\leq 2\norm{\left(\left(\nabla^2_{\m{y}}g_{t}(\m{x}_t,\m{y}_{t}^*(\m{x}_t) )\right)^{-1}-\left(\nabla^2_{\m{y}}g_{t+1}(\m{x}_{t+1},\m{y}_{t+1}^*(\m{x}_{t+1}) )\right)^{-1} \right)\nabla_\m{y} f_{t}(\m{x}_t,\m{y}_{t}^*(\m{x}_t) ) }^2
\\\label{16}& +2 \norm{\left(\nabla^2_{\m{y}}g_{t+1}(\m{x}_{t+1},\m{y}_{t+1}^*(\m{x}_{t+1}) )\right)^{-1} \left(\nabla_\m{y} f_{t}(\m{x}_t,\m{y}_{t}^*(\m{x}_t) )
-\nabla_\m{y} f_{t+1} (\m{x}_{t+1},\m{y}_{t+1}^*(\m{x}_{t+1}) )\right)}^2.
\end{align}
\end{subequations}
In the following steps, we bound the terms \eqref{15} and \eqref{16}, respectively.

For \eqref{15}, we have:
\begin{align}\label{jgh}
\nonumber &\quad \norm{\left(\nabla^2_{\m{y}}g_{t}(\m{x}_t,\m{y}_{t}^*(\m{x}_t) )\right)^{-1}
-\left(\nabla^2_{\m{y}}g_{t+1} (\m{x}_{t+1},\m{y}_{t+1}^*(\m{x}_{t+1}) )\right)^{-1} }^2
\\\nonumber &= \|\left(\nabla^2_{\m{y}}g_{t}(\m{x}_t,\m{y}_{t}^*(\m{x}_t) )\right)^{-1}(
\nabla^2_{\m{y}}g_{t+1} (\m{x}_{t+1},\m{y}_{t+1}^*(\m{x}_{t+1}) )
\\\nonumber&-\nabla^2_{\m{y}}g_{t}(\m{x}_{t},\m{y}_{t}^*(\m{x}_{t}) ) )\left(\nabla^2_{\m{y}}g_{t+1} (\m{x}_{t+1},\m{y}_{t+1}^*(\m{x}_{t+1}) )\right)^{-1}\|^2
\\\nonumber&\leq \frac{1}{ \mu_{g}^2}\norm{\nabla^2_{\m{y}}g_{t}(\m{x}_t,\m{y}_{t}^*(\m{x}_t) )-\nabla^2_{\m{y}}g_{t+1} (\m{x}_{t+1},\m{y}_{t+1}^*(\m{x}_{t+1}) ) }^2
\\\nonumber&\leq \frac{\ell_{g,2}}{ \mu_{g}^2}\norm{\left(\m{x}_t,\m{y}_{t}^*(\m{x}_t) \right)-\left(\m{x}_{t+1},\m{y}_{t+1}^*(\m{x}_{t+1}) \right) }^2
\\&\leq \frac{\ell_{g,2}}{ \mu_{g}^2}\left( \norm{\m{y}_{t}^*(\m{x}_t)-\m{y}_{t+1}^*(\m{x}_{t+1}) \|^2+ \|\m{x}_t-\m{x}_{t+1}}^2\right),
\end{align}
where the equality holds since for any invertible matrix $\m{A}$ and $\m{B}$ we have $\|\m{A}^{-1}-\m{B}^{-1}\|=\|\m{A}^{-1}(\m{B}-\m{A})\m{B}^{-1}\|$, and inequalities are obtained from Assumptions \ref{assu:g} and \ref{assu:f:a4}.

Thus, from \eqref{jgh} and Assumption \ref{assu:f:a1}, we get
\begin{align}\label{hk}
\eqref{15}\leq\frac{\ell_{f,0}\ell_{g,2}}{ \mu_{g}^2}\left( \norm{\m{y}_{t}^*(\m{x}_t)-\m{y}_{t+1}^*(\m{x}_{t+1}) }^2+ \norm{\m{x}_{t}-\m{x}_{t+1}}^2\right).
\end{align}
For \eqref{16}, we have
\begin{align}\label{ersc}
\nonumber \eqref{16} &\leq \frac{1}{ \mu_{g}}\|\nabla_\m{y} f_{t}(\m{x}_t,\m{y}_{t}^*(\m{x}_t) )
-\nabla_\m{y} f_{t+1} (\m{x}_{t+1},\m{y}_{t+1}^*(\m{x}_{t+1}) )\|^2
\\\nonumber&\leq \frac{\ell_{f,1}}{ \mu_{g}}\|(\m{x}_t,\m{y}_{t}^*(\m{x}_t) )-(\m{x}_{t+1},\m{y}_{t+1}^*(\m{x}_{t+1}) ) \|^2
\\&\leq \frac{\ell_{f,1}}{ \mu_{g}}\left( \|\m{y}_{t+1}^*(\m{x}_{t+1})-\m{y}_{t}^*(\m{x}_t) \|^2+ \|\m{x}_{t+1}-\m{x}_t \|^2\right).
\end{align}
Combining \eqref{hk} and \eqref{ersc}, we have
\begin{align*}
\norm{\m{v}^*_{t}(\m{x}_t)-\m{v}^*_{t+1}(\m{x}_{t+1}) }^2 &\leq \frac{1}{ \mu_{g}}\left( \frac{\ell_{f,0}\ell_{g,2}}{ \mu_{g}}+\ell_{f,1}\right) \left( \norm{\m{y}_{t+1}^*(\m{x}_{t+1})-\m{y}_{t}^*(\m{x}_t) }^2+ \norm{\m{x}_{t+1}-\m{x}_t }^2\right).
\end{align*}
By raising both sides of the above inequality to the power 2 and using $(a+b)^2\leq 2a^2+2b^2$, we complete the proof.
\end{proof}

The following lemma characterizes the decrease in $\theta_t^{\m{v}}$ defined in~\eqref{eq:yv:nota} and can be viewed as an extension of the offline BO  result in~\cite{yang2023achieving} to the OBO setting.
\begin{lemma}\label{lm:vt}
Suppose that Assumptions \ref{assu:g} and \ref{assu:f} hold.
Let ${\theta}_t^{\m{v}}$ be defined as in \eqref{eq:yv:nota}.
Then, for any positive choice of step size $\delta_t$ satisfying
\begin{equation*}
  \delta_t\leq \min\left\{\frac{\acute{L}_{\mu_g}}{\ell_{g,1}^2}, \frac{1}{\acute{L}_{\mu_g}}\right\},\quad \textnormal{where}\quad
  \acute{L}_{\mu_g}:=\frac{(\ell_{g,1}+\ell_{g,1}^3)\mu_g }{(\mu_g+\ell_{g,1})},
\end{equation*}
for all $t\in[T]$,
the sequence $\{(\m{x}_t,\m{y}_t,\m{v}_{t})\}_{t=1}^{T}$ generated by Algorithm~\ref{alg:first:order} satisfies
\begin{align}\label{eq:nnBb}
     \nonumber&\quad \sum_{t=1}^{T}\left(\mb{E}[{\theta}_{t+1}^{\m{v}}]-\mb{E}[{\theta}_t^{\m{v}}]\right)
\\\nonumber &\leq -\frac{ \acute{L}_{\mu_g}}{4}\sum_{t=1}^{T}\delta_t\mb{E}[{\theta}_t^{\m{v}}]
+\frac{8}{ \acute{L}_{\mu_g}} \sum_{t=1}^{T}\delta_t\mb{E}\|e^{\m{v}}_t \|^2
+\frac{16}{ \acute{L}_{\mu_g}}(p^2\ell_{g,2}^2+\ell_{f,1}^2)\sum_{t=1}^T\delta_t\mb{E}[ \theta_t^{\m{y}}]
\\ &+\frac{16\nu^2}{ \acute{L}_{\mu_g} \mu_{g}^2 }\sum_{t=1}^{T}\frac{1}{\delta_t}\mb{E}\norm{\m{y}_{t+1}^*(\m{x}_{t})-\m{y}_{t}^*(\m{x}_{t}) }^2
+\frac{8\nu^2}{\acute{L}_{\mu_g} \mu_{g}^2 }(1+2 L_{\m{y}}^2)\sum_{t=1}^{T}\mb{E}\frac{1}{\delta_t}\norm{\m{x}_{t+1}-\m{x}_{t} }^2,
\end{align}
where $e^{\m{v}}_t$ is defined in \eqref{o0}, and $\nu$, $L_{\m{y}}$ and ${\theta}_t^{\m{y}}$, are defined in Lemmas \ref{133}, \ref{lm:nn22} and \ref{eq:yv:nota}, respectively.
\end{lemma}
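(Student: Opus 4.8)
The plan is to reproduce the descent argument of Lemma~\ref{lm:nn22}, but tracking the system error $\theta_t^{\m{v}}=\norm{\m{v}_t-\m{v}^*_t(\m{x}_t)}^2$ and carefully absorbing the extra quadratic term $\delta_t^2\ell_{g,1}^2$ that distinguishes the system recursion from the inner recursion. First I would split the comparator in time by applying Lemma~\ref{lem:trig} with a free constant $\acute{c}>0$:
\begin{align*}
\mb{E}[\theta_{t+1}^{\m{v}}]
&\leq (1+\acute{c})\,\mb{E}\norm{\m{v}_{t+1}-\m{v}^*_t(\m{x}_t)}^2
+(1+\tfrac{1}{\acute{c}})\,\mb{E}\norm{\m{v}^*_{t+1}(\m{x}_{t+1})-\m{v}^*_t(\m{x}_t)}^2 .
\end{align*}
The first term is then controlled by Lemma~\ref{eq:lower:dynamicccb} (identifying its residual $\m{d}_t^{\m{v}}-\nabla P_t(\m{x}_t,\m{y}_t,\m{v}_t)$ with $e_t^{\m{v}}$ from \eqref{o0}), which produces the contraction factor $1-2\delta_t\acute{L}_{\mu_g}+\delta_t^2\ell_{g,1}^2$ on $\theta_t^{\m{v}}$ together with the residual terms in $e_t^{\m{v}}$ and $\theta_t^{\m{y}}$.

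The key algebraic step is next. Using the stated step-size bound $\delta_t\le \acute{L}_{\mu_g}/\ell_{g,1}^2$ I would absorb the quadratic term via $\delta_t^2\ell_{g,1}^2\le \delta_t\acute{L}_{\mu_g}$, so that the factor is at most $1-\delta_t\acute{L}_{\mu_g}$. I then choose $\acute{c}$, together with the inner splitting constant inside Lemma~\ref{eq:lower:dynamicccb}, of order $\delta_t\acute{L}_{\mu_g}$ — exactly in the spirit of the choices \eqref{gt22} used for the $\m{y}$-analysis — so that the combined coefficient $(1+\acute{c})(1+\cdot)(1-\delta_t\acute{L}_{\mu_g})$ collapses to $1-\tfrac{\acute{L}_{\mu_g}}{4}\delta_t$, while every reciprocal weight $1+1/\acute{c}$ is bounded by a constant multiple of $1/(\delta_t\acute{L}_{\mu_g})$. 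These reciprocals convert the $O(\delta_t^2)$ prefactors on the residual terms into the $O(\delta_t)$ weights of the claim; tracking the constants $2$ and $4(p^2\ell_{g,2}^2+\ell_{f,1}^2)$ from Lemma~\ref{eq:lower:dynamicccb} through $1+1/\acute{c}\sim 4/(\delta_t\acute{L}_{\mu_g})$ yields the coefficients $\tfrac{8}{\acute{L}_{\mu_g}}$ and $\tfrac{16}{\acute{L}_{\mu_g}}(p^2\ell_{g,2}^2+\ell_{f,1}^2)$.

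It remains to bound the comparator drift $\mb{E}\norm{\m{v}^*_{t+1}(\m{x}_{t+1})-\m{v}^*_t(\m{x}_t)}^2$. Here I would invoke Lemma~\ref{133} to pass to $\norm{\m{y}^*_{t+1}(\m{x}_{t+1})-\m{y}^*_t(\m{x}_t)}^2+\norm{\m{x}_{t+1}-\m{x}_t}^2$ with prefactor $2\nu^2/\mu_g^2$, then split the follower term exactly as in \eqref{982} and apply the Lipschitz estimate \eqref{eqn:lip:cons4} of Lemma~\ref{lem:lips} to replace $\norm{\m{y}^*_{t+1}(\m{x}_{t+1})-\m{y}^*_{t+1}(\m{x}_t)}^2$ by $L_{\m{y}}^2\norm{\m{x}_{t+1}-\m{x}_t}^2$; this is what assembles the factor $(1+2L_{\m{y}}^2)$ multiplying the $\m{x}$-drift and leaves the pure comparator term $\norm{\m{y}^*_{t+1}(\m{x}_t)-\m{y}^*_t(\m{x}_t)}^2$. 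Since the outer split carries the reciprocal weight $1+1/\acute{c}$, these two contributions inherit the $1/\delta_t$ weighting seen in \eqref{eq:nnBb}. Summing the resulting per-step inequality over $t\in[T]$ telescopes the left-hand side and delivers \eqref{eq:nnBb}. I expect the only genuine obstacle to be the constant bookkeeping of the second paragraph: one must check that a single choice of $\acute{c}$ simultaneously gives the clean contraction $1-\tfrac{\acute{L}_{\mu_g}}{4}\delta_t$ and the $O(1/(\delta_t\acute{L}_{\mu_g}))$ bounds on all reciprocal weights, with the step-size restriction $\delta_t\le\acute{L}_{\mu_g}/\ell_{g,1}^2$ being precisely the condition that prevents the quadratic term $\delta_t^2\ell_{g,1}^2$ from destroying the linear decay.
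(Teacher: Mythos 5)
Your proposal is correct and follows essentially the same route as the paper's proof: the time-split via Lemma~\ref{lem:trig}, the one-step contraction from Lemma~\ref{eq:lower:dynamicccb}, absorption of $\delta_t^2\ell_{g,1}^2$ via the step-size bound, the choice of splitting constants of order $\delta_t\acute{L}_{\mu_g}$ (the paper's Eq.~\eqref{gt0}, the $\m{v}$-analogue of \eqref{gt22} you cite), and the comparator-drift bound via Lemma~\ref{133} combined with the Lipschitz estimate \eqref{eqn:lip:cons4}. The constant bookkeeping you flag as the only obstacle indeed works out exactly as you describe, matching the paper's coefficients.
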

\begin{proof}
By Lemma \ref{lem:trig}, for any $a>0$, we have
  \begin{align}\label{12}
\nonumber  \mb{E}\norm{\m{v}_{t+1}-\m{v}^*_{t+1}(\m{x}_{t+1})}^2
&= \mb{E}\norm{\m{v}_{t+1}-\m{v}^*_{t}(\m{x}_{t})+\m{v}^*_{t}(\m{x}_{t})-\m{v}^*_{t+1}(\m{x}_{t+1})}^2
 \\\nonumber &\leq \left(1+a\right)\mb{E}\|\m{v}_{t+1}-\m{v}^*_{t}(\m{x}_{t})\|^2
\\&+\left(1+\frac{1}{a}\right)\mb{E}\norm{\m{v}^*_{t+1}(\m{x}_{t+1})- \m{v}^*_{t}(\m{x}_{t}) }^2.
  \end{align}
From Lemma \ref{eq:lower:dynamicccb}, we have for any $\acute{c}>0$:
\begin{align}\label{eq:nnmm}
\nonumber &\quad \mb{E}\|\m{v}_{t+1}-\m{v}^*_{t}(\m{x}_{t}) \|^2
\leq (1+\acute{c}) \left(1-2\delta_t \frac{(\ell_{g,1}+\ell_{g,1}^3)\mu_g }{\mu_g+\ell_{g,1}}+\delta_t^2\ell_{g,1}^2\right)\mb{E}\|\m{v}_{t}-\m{v}^*_{t}(\m{x}_{t}) \|^2
\\&+2(1+\frac{1}{\acute{c}})\delta_t^2\mb{E}\|\m{d}_{t}^\m{v}-\nabla P_t(\m{x}_t,\m{y}_t,\m{v}_t) \|^2+4(p^2\ell_{g,2}^2+\ell_{f,1}^2)(1+\frac{1}{\acute{c}})\delta_t^2\mb{E}\| \m{y}_t-\m{y}^*_{t}(\m{x}_{t})\|^2.
\end{align}
Substituting \eqref{eq:nnmm} into \eqref{12}, we get
  \begin{align}\label{ccx}
 \nonumber \mb{E}\norm{\m{v}_{t+1}-\m{v}^*_{t+1}(\m{x}_{t+1})}^2
 &\leq \left(1+a\right)(1+\acute{c}) \left(1-2\delta_t \frac{(\ell_{g,1}+\ell_{g,1}^3)\mu_g }{\mu_g+\ell_{g,1}}+\delta_t^2\ell_{g,1}^2\right)\mb{E}\|\m{v}_{t}-\m{v}^*_{t}(\m{x}_{t}) \|^2
\\\nonumber &+2\left(1+a\right)(1+\frac{1}{\acute{c}})\delta_t^2\mb{E}\|e^{\m{v}}_t \|^2+4(p^2\ell_{g,2}^2+\ell_{f,1}^2)(1+a)(1+\frac{1}{\acute{c}})\delta_t^2\mb{E}[ \theta_t^{\m{y}}]
\\&+\left(1+\frac{1}{a}\right)\mb{E}\norm{\m{v}^*_{t+1}(\m{x}_{t+1})- \m{v}^*_{t}(\m{x}_{t}) }^2.
  \end{align}
In the following, we provide a bound for the third term on the right-hand side of \eqref{ccx}.
To this end, we have from Lemma \ref{133}:
 \begin{align*}
\nonumber \mb{E}\norm{\m{v}^*_{t+1}(\m{x}_{t+1})- \m{v}^*_{t}(\m{x}_{t}) }^2&\leq 2\frac{\nu^2}{ \mu_{g}^2} \left(\mb{E} \norm{\m{y}_{t+1}^*(\m{x}_{t+1})-\m{y}_{t}^*(\m{x}_{t}) \|^2+\mb{E} \|\m{x}_{t+1}-\m{x}_{t} }^2\right)
\\\nonumber &\leq 2\frac{\nu^2}{ \mu_{g}^2} \left( 2\mb{E}\norm{\m{y}_{t+1}^*(\m{x}_{t+1})-\m{y}_{t+1}^*(\m{x}_{t}) }^2
\right.\\&\left.+2\mb{E}\norm{\m{y}_{t+1}^*(\m{x}_{t})-\m{y}_{t}^*(\m{x}_{t}) }^2+\mb{E}\norm{\m{x}_{t+1}-\m{x}_{t} }^2\right)
\\ &\leq 2\frac{\nu^2}{ \mu_{g}^2} \left( (1+2 L_{\m{y}}^2)\mb{E}\norm{\m{x}_{t+1}-\m{x}_{t} }^2
+2\mb{E}\norm{\m{y}_{t+1}^*(\m{x}_{t})-\m{y}_{t}^*(\m{x}_{t}) }^2\right),
 \end{align*}
 where the last inequality follows from Lemma \ref{lem:lips}.

Combining this result with \eqref{ccx} gives
\begin{align}\label{eqn:k0}
     \nonumber \mb{E}\norm{\m{v}_{t+1}-\m{v}^*_{t+1}(\m{x}_{t+1})}^2
 &\leq \left(1+a\right)(1+\acute{c}) \left(1-2\delta_t \frac{(\ell_{g,1}+\ell_{g,1}^3)\mu_g }{\mu_g+\ell_{g,1}}+\delta_t^2\ell_{g,1}^2\right)\mb{E}\|\m{v}_{t}-\m{v}^*_{t}(\m{x}_{t}) \|^2
\\\nonumber&+2\left(1+a\right)(1+\frac{1}{\acute{c}})\delta_t^2\mb{E}\|e^{\m{v}}_t \|^2
+4(p^2\ell_{g,2}^2+\ell_{f,1}^2)(1+a)(1+\frac{1}{\acute{c}})\delta_t^2\mb{E}[ \theta_t^{\m{y}}]
\\\nonumber&+4\left(1+\frac{1}{a}\right)\frac{\nu^2}{ \mu_{g}^2}\mb{E}\norm{\m{y}_{t+1}^*(\m{x}_{t})-\m{y}_{t}^*(\m{x}_{t}) }^2
\\&+2\left(1+\frac{1}{a}\right)\frac{\nu^2}{ \mu_{g}^2}(1+2 L_{\m{y}}^2)\mb{E}\norm{\m{x}_{t+1}-\m{x}_{t} }^2.
\end{align}
Let $\acute{L}_{\mu_g}= \frac{(\ell_{g,1}+\ell_{g,1}^3)\mu_g }{\mu_g+\ell_{g,1}}$, then we have
\begin{align}\label{6t}
\nonumber 1-2\delta_t \frac{(\ell_{g,1}+\ell_{g,1}^3)\mu_g }{\mu_g+\ell_{g,1}}+\delta_t^2\ell_{g,1}^2
&=1-2\delta_t \acute{L}_{\mu_g}+\delta_t^2\ell_{g,1}^2
\\&\leq 1-\delta_t \acute{L}_{\mu_g},
\end{align}
where the last inequality follows from $\delta_t \leq \frac{\acute{L}_{\mu_g}}{\ell_{g,1}^2}$.
\\
Choose $a$ and $\acute{c}$ such that
\begin{equation}\label{gt0}
\begin{aligned}
&\left(1+a\right)(1+\acute{c}) \left(1-2\delta_t \frac{(\ell_{g,1}+\ell_{g,1}^3)\mu_g }{\mu_g+\ell_{g,1}}+\delta_t^2\ell_{g,1}^2\right)
\\&\quad\leq \left(1+a\right)\left(1+\acute{c}\right)\left(1-\delta_t \acute{L}_{\mu_g}\right)=1-\frac{\delta_t \acute{L}_{\mu_g}}{4}.
\end{aligned}
\end{equation}
Let 
\begin{equation*}
\left(1+\acute{c}\right)\left(1-\delta_t \acute{L}_{\mu_g}\right)=1-\frac{\delta_t \acute{L}_{\mu_g}}{2}~~\Rightarrow~~\acute{c}=\frac{\delta_t \acute{L}_{\mu_g}/2}{1-\delta_t \acute{L}_{\mu_g}}~~\textnormal{and}~~\delta_t\leq \frac{1}{\acute{L}_{\mu_g}}.
\end{equation*}
Thus 
\begin{equation*}
 \left(1+a\right)\left(1-\frac{\delta_t \acute{L}_{\mu_g}}{2}\right)=1-\frac{\delta_t \acute{L}_{\mu_g}}{4} ~~\Rightarrow~~  a=\frac{\delta_t \acute{L}_{\mu_g}/4}{1-\frac{\delta_t \acute{L}_{\mu_g}}{2}}.
\end{equation*}
Moreover, this implies that
\begin{align*}
    &(1+a)\left(1+\frac{1}{\acute{c}}\right)\leq \frac{4}{\delta_t \acute{L}_{\mu_g}},
\\&1+\frac{1}{\acute{c}}=1+\frac{1-\delta_t \acute{L}_{\mu_g}}{\delta_t \acute{L}_{\mu_g}/2}\leq \frac{2}{\delta_t \acute{L}_{\mu_g}},\quad 1+\frac{1}{a}=\frac{4-\delta_t \acute{L}_{\mu_g}}{\delta_t \acute{L}_{\mu_g}}\leq \frac{4}{\delta_t \acute{L}_{\mu_g}}.
\end{align*}
Thus, from \eqref{eqn:k0} and \eqref{gt0} we have
\begin{align*}
     \nonumber&\quad \mb{E}\norm{\m{v}_{t+1}-\m{v}^*_{t+1}(\m{x}_{t+1})}^2
 \\\nonumber&\leq \left(1-\frac{\delta_t \acute{L}_{\mu_g}}{4}\right)\mb{E}\|\m{v}_{t}-\m{v}^*_{t}(\m{x}_{t}) \|^2
+\frac{8}{ \acute{L}_{\mu_g}}\delta_t\mb{E}\|e^{\m{v}}_t \|^2
+\frac{16}{ \acute{L}_{\mu_g}}(p^2\ell_{g,2}^2+\ell_{f,1}^2)\delta_t\mb{E}[ \theta_t^{\m{y}}]
\\\nonumber&+\frac{16\nu^2}{ \acute{L}_{\mu_g} \mu_{g}^2\delta_t }\mb{E}\norm{\m{y}_{t+1}^*(\m{x}_{t})-\m{y}_{t}^*(\m{x}_{t}) }^2
+\frac{8\nu^2}{\acute{L}_{\mu_g} \mu_{g}^2\delta_t }(1+2 L_{\m{y}}^2)\mb{E}\norm{\m{x}_{t+1}-\m{x}_{t} }^2.
\end{align*}
Rearranging the terms and summing from $t=1$ to $T$, gives the desired result.
\end{proof}


\subsection{Bounds on the Gradient Estimation Error of Outer Objective}
The following lemma, inspired by~\cite{yang2023achieving}, provides a characterization of the descent of the gradient estimation error for the outer-level function.
\begin{lemma}\label{lm:xx}
Suppose Assumptions \ref{assu:f:a2}, \ref{assu:f:a3}, \ref{assu:f:a4}, \ref{b:5} and \ref{b:7} hold.
Let $\{(\m{x}_t,\m{y}_t,\m{v}_t) \}_{t=1}^T$ be generated according to Algorithm
\ref{alg:first:order}.
For $e_{t}^{f}$ defined as
\begin{align}\label{dv4}
    e^f_t:={\m{d}}_{t}^\m{x}-\tilde{\m{d}}_t\left(\m{z}_t,\m{v}_t\right),\quad \textnormal{where}\quad \tilde{\m{d}}_t\left(\m{z}_t,\m{v}_t\right) = \nabla_{\m{x}} f_t(\m{z}_t) + \nabla_{\m{x}\m{y}}^2 g_t\left(\m{z}_t \right) \m{v}_t,
\end{align}
we have:
\begin{align}\label{eqn:eff}
\nonumber  &\quad\mb{E}\|e_{t+1}^{f} \|^2 \leq  (1-\eta_{t+1})^2 \mb{E}\| e_t^{f}\|^2+4\eta_{t+1}^2(\frac{\sigma_{g_{\m{x}\m{y}}}^2}{\bar{b}}p^2+\frac{\sigma_{f_{\m{x}}}^2}{{b}})
\\\nonumber &+12p^2(1-\eta_{t+1})^2     \mb{E}\|\nabla^2_{\m{x}\m{y}}g_t\left(\m{x}_{t+1}, \m{y}_{t+1}\right)-\nabla^2_{\m{x}\m{y}}g_{t+1}\left(\m{x}_{t+1}, \m{y}_{t+1}\right)
 \|^2
\\\nonumber &+12(1-\eta_{t+1})^2
 \mb{E}\|\nabla_\m{x} f_t(\m{x}_{t+1},\m{y}_{t+1})-\nabla_\m{x} f_{t+1}(\m{x}_{t+1},\m{y}_{t+1}) \|^2
 \\\nonumber & +72(1-\eta_{t+1})^2 (\ell_{g,2}^2p^2+\ell_{f,1}^2)\left( \mb{E}\|\m{x}_{t+1}-\m{x}_{t} \|^2+2\beta_t^2 \mb{E}\|e_{t}^{g}\|^2+2\beta_t^2 \mb{E}\|{\nabla}_{\m{y}} g_{t}(\m{x}_{t},\m{y}_{t})\|^2\right)
  \\& +72\ell_{g,1}^2(1-\eta_{t+1})^2\delta_t^2 \mb{E}\|e^{\m{v}}_t \|^2+72(1-\eta_{t+1})^2 \ell_{g,1}^4\delta_t^2 \mb{E}[\theta_t^{\m{v}}],
\end{align}
for all $t\in [T]$, $\theta_t^{\m{v}}$, $e^{\m{v}}_t$ and $e_{t}^{g}$ are defined in \eqref{eq:yv:nota}, \eqref{o0} and \eqref{eqn:eg}, respectively.
\end{lemma}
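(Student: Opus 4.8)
The plan is to mirror the proof of Lemma~\ref{lm:vv} line by line, under the dictionary that replaces the system-residual map $\nabla P_t(\m{x},\m{y},\m{v})=\nabla^2_{\m{y}}g_t(\m{x},\m{y})\m{v}+\nabla_{\m{y}} f_t(\m{x},\m{y})$ by the hypergradient surrogate $\tilde{\m{d}}_t(\m{z},\m{v})=\nabla_{\m{x}}f_t(\m{z})+\nabla^2_{\m{x}\m{y}}g_t(\m{z})\m{v}$, the momentum parameter $\lambda_{t+1}$ by $\eta_{t+1}$, and the Hessian/inner-gradient variance Assumptions~\ref{b:4},\ref{b:6} by the Jacobian/outer-gradient variance Assumptions~\ref{b:5},\ref{b:7}. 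First I would insert the update $\m{d}^{\m{x}}_{t+1}=\m{d}^{\m{x}\m{x}}_{t+1}(\m{z}_{t+1};\mathcal{B}_{t+1})+(1-\eta_{t+1})(\m{d}^{\m{x}}_{t}-\m{d}^{\m{x}\m{x}}_{t+1}(\m{z}_{t};\mathcal{B}_{t+1}))$ into the definition \eqref{dv4} and use $e^f_t=\m{d}^{\m{x}}_t-\tilde{\m{d}}_t(\m{z}_t,\m{v}_t)$ to rewrite $e^f_{t+1}$ as $(1-\eta_{t+1})e^f_t$ plus the mean-zero step-$(t+1)$ fluctuation $\m{d}^{\m{x}\m{x}}_{t+1}(\m{z}_{t+1};\mathcal{B}_{t+1})-\tilde{\m{d}}_{t+1}(\m{z}_{t+1},\m{v}_{t+1})$ minus a $(1-\eta_{t+1})$-weighted correction built from $\m{d}^{\m{x}\m{x}}_{t+1}(\m{z}_{t};\mathcal{B}_{t+1})$ and $\tilde{\m{d}}_t(\m{z}_t,\m{v}_t)$. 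Since the leading fluctuation is conditionally unbiased (Assumption~\ref{assu:g21}), the cross term with the $\mathcal{F}_t$-measurable $e^f_t$ drops in expectation, leaving the clean contraction factor $(1-\eta_{t+1})^2$ on $\mb{E}\|e^f_t\|^2$ with no self-feedback correction.

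Next I would bound the remaining expected square. Applying $\|a+b\|^2\le 2\|a\|^2+2\|b\|^2$ separates the pure step-$(t+1)$ noise from the difference term. For the noise I would invoke Assumptions~\ref{b:5},\ref{b:7} together with the projection bound $\|\m{v}_{t+1}\|\le p$ from \eqref{eqn:zc:set}, giving $\mb{E}\|\m{d}^{\m{x}\m{x}}_{t+1}(\m{z}_{t+1};\mathcal{B}_{t+1})-\tilde{\m{d}}_{t+1}(\m{z}_{t+1},\m{v}_{t+1})\|^2\le 2(\sigma_{g_{\m{x}\m{y}}}^2 p^2/\bar b+\sigma_{f_{\m{x}}}^2/b)$, which produces the $4\eta_{t+1}^2(\cdots)$ variance term. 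For the difference term I would use $\|a+b+c\|^2\le 3(\|a\|^2+\|b\|^2+\|c\|^2)$ to split into (i) the deterministic iterate drift $\tilde{\m{d}}_t(\m{z}_t,\m{v}_t)-\tilde{\m{d}}_t(\m{z}_{t+1},\m{v}_{t+1})$, (ii) the environment variation $\tilde{\m{d}}_t(\m{z}_{t+1},\m{v}_{t+1})-\tilde{\m{d}}_{t+1}(\m{z}_{t+1},\m{v}_{t+1})$, and (iii) the stochastic iterate drift $\m{d}^{\m{x}\m{x}}_{t+1}(\m{z}_{t+1};\mathcal{B}_{t+1})-\m{d}^{\m{x}\m{x}}_{t+1}(\m{z}_{t};\mathcal{B}_{t+1})$. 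Term (ii) would be split into the Jacobian variation scaled by $\|\m{v}_{t+1}\|^2\le p^2$ and the outer-gradient variation, yielding the $12p^2(\cdots)$ and $12(\cdots)$ terms.

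For terms (i) and (iii) I would decompose the increment $\m{u}_t\to\m{u}_{t+1}$ coordinatewise exactly as in \eqref{ed50}. The $\m{x}$- and $\m{y}$-parts are controlled by Lipschitzness of $\nabla_{\m{x}}f$ (\ref{assu:f:a2}) and of $\nabla^2_{\m{x}\m{y}}g$ (\ref{assu:f:a4}) with $\|\m{v}_{t+1}\|\le p$, giving $(\ell_{g,2}^2p^2+\ell_{f,1}^2)(\|\m{x}_{t+1}-\m{x}_t\|^2+\|\m{y}_{t+1}-\m{y}_t\|^2)$; here the $\m{x}$-increment is kept external (it is handled later by the projection lemma) while the $\m{y}$-increment is expanded via $\m{y}_{t+1}-\m{y}_t=-\beta_t\m{d}^{\m{y}}_t$ and $\|\m{d}^{\m{y}}_t\|^2\le 2\|e^g_t\|^2+2\|\nabla_{\m{y}}g_t(\m{x}_t,\m{y}_t)\|^2$, producing the $72(\ell_{g,2}^2p^2+\ell_{f,1}^2)(\cdots)$ block. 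The crux, and the one genuine departure from Lemma~\ref{lm:vv}, is the $\m{v}$-increment $\nabla^2_{\m{x}\m{y}}g_t(\m{z}_t)(\m{v}_{t+1}-\m{v}_t)$: expanding $\m{v}_{t+1}-\m{v}_t=-\delta_t\m{d}^{\m{v}}_t$ and $\|\m{d}^{\m{v}}_t\|^2\le 2\|e^{\m{v}}_t\|^2+2\|\nabla P_t(\m{x}_t,\m{y}_t,\m{v}_t)\|^2$, then controlling $\|\nabla P_t\|^2$ through the bound \eqref{eqn:b2} as $\ell_{g,1}^2\theta_t^{\m{v}}$, yields precisely the cross-feedback terms $72\ell_{g,1}^2(1-\eta_{t+1})^2\delta_t^2\mb{E}\|e^{\m{v}}_t\|^2$ and $72(1-\eta_{t+1})^2\ell_{g,1}^4\delta_t^2\mb{E}[\theta_t^{\m{v}}]$. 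The main obstacle is exactly this bookkeeping: unlike $e^{\m{v}}_t$ in Lemma~\ref{lm:vv}, the tracked error $e^f_t$ never appears inside $\tilde{\m{d}}_t$, so one must route the $\m{v}$-dependence carefully into the already-analyzed quantities $e^{\m{v}}_t$ and $\theta_t^{\m{v}}$ (rather than generating a spurious self-feedback on $e^f_t$, and matching the stated constants without leaving a residual $\theta_t^{\m{y}}$ term). Summing the identical contributions of (i) and (iii) then fixes the constant $72$ and closes the recursion.
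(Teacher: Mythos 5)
Your proposal is correct and follows essentially the same route as the paper's proof: the same momentum-recursion expansion with the conditionally unbiased fluctuation killing the cross term, the same $\|a+b\|^2$ and $\|a+b+c\|^2$ splits into variance, environment-variation, and two iterate-drift terms, and the same coordinatewise treatment of the $\m{x}$-, $\m{y}$-, and $\m{v}$-increments (routing the $\m{v}$-drift through $\|\m{d}_t^{\m{v}}\|^2\le 2\|e_t^{\m{v}}\|^2+2\|\nabla P_t\|^2$ and \eqref{eqn:b2}), reproducing the stated constants. Your observation that $e^f_t$ incurs no self-feedback term, in contrast to $e^{\m{v}}_t$ in Lemma~\ref{lm:vv}, is exactly the distinction realized in the paper's bound.
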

\begin{proof}
Note that
\begin{align*}
e^{f}_{t+1}={\m{d}}_{t+1}^\m{x}-\tilde{\m{d}}_{t+1}\left(\m{x}_{t+1},\m{y}_{t+1},\m{v}_{t+1}\right),
\end{align*}
where
\begin{align}\label{d3}
 \tilde{\m{d}}_{t+1}\left(\m{x}_{t+1},\m{y}_{t+1},\m{v}_{t+1}\right) = \nabla_{\m{x}} f_{t+1}(\m{x}_{t+1},\m{y}_{t+1}) + \nabla_{\m{x}\m{y}}^2 g_{t+1}\left(\m{x}_{t+1},\m{y}_{t+1} \right) \m{v}_{t+1}.
\end{align}
From Algorithm \ref{alg:first:order}, we have
\begin{align*}
\m{d}^{\m{x}}_{t+1} = \m{d}^{\m{x}\m{x}}_{t+1}\left(\m{x}_{t+1},\m{y}_{t+1};\mathcal{B}_{t+1}\right)+(1-\eta_{t+1})(\m{d}^{\m{x}}_{t+1}-\m{d}^{\m{x}\m{x}}_{t+1}(\m{x}_{t+1},\m{y}_{t+1};\mathcal{B}_{t+1})),
\end{align*}
where $\m{d}^{\m{xx}}_{t+1}\left(\m{x}_{t+1},\m{y}_{t+1};\mathcal{B}_{t+1}\right) = \nabla_{\m{x}} f_{t+1}(\m{x}_{t+1},\m{y}_{t+1};\mathcal{B}_{t+1}) + \nabla_{\m{x}\m{y}}^2 g_{t+1}\left(\m{x}_{t+1},\m{y}_{t+1}  ;\mathcal{B}_{t+1}\right) \m{v}_{t+1}$.
\\
Let $\m{u} = [\m{x}; \m{y};\m{v}] $.
Then, we have
  \begin{align*}
\mb{E} \| e_{t+1}^{f} \|^2&=\mb{E} \|{\m{d}}_{t+1}^\m{x}-\tilde{\m{d}}_{t+1}\left(\m{u}_{t+1}\right) \|^2
\\&=\mb{E}\| \tilde{\m{d}}_{t+1}(\m{u}_{t+1};\mathcal{B}_{t+1})+(1-\eta_{t+1})(\m{d}^{\m{x}}_{t}-\tilde{\m{d}}_{t+1}(\m{u}_{t};\mathcal{B}_{t+1})) -\tilde{\m{d}}_{t+1}(\m{u}_{t+1}) \|^2
\\&=\mb{E}\|(1-\eta_{t+1}) e_t^{f}+ \tilde{\m{d}}_{t+1}(\m{u}_{t+1};\mathcal{B}_{t+1})- \tilde{\m{d}}_{t+1}(\m{u}_{t+1})
\\&\quad -(1-\eta_{t+1}) ( \tilde{\m{d}}_{t+1}(\m{u}_{t};\mathcal{B}_{t+1})-\tilde{\m{d}}_{t}(\m{u}_{t}) )\|^2,
\end{align*}
which implies that
\begin{align}\label{f6e}
\nonumber &\quad \mb{E} \| e_{t+1}^{f}\|^2
\\\nonumber&=(1-\eta_{t+1})^2 \mb{E}\| e_t^{f}\|^2+\mb{E}\|\eta_{t+1}( \tilde{\m{d}}_{t+1}(\m{u}_{t+1};\mathcal{B}_{t+1})
- \tilde{\m{d}}_{t+1}(\m{u}_{t+1}))
\\\nonumber& -(1-\eta_{t+1}) ( \tilde{\m{d}}_{t+1}(\m{u}_{t};\mathcal{B}_{t+1})-\tilde{\m{d}}_{t+1}(\m{u}_{t+1};\mathcal{B}_{t+1})
+ \tilde{\m{d}}_{t+1}(\m{u}_{t+1})-\tilde{\m{d}}_{t}(\m{u}_{t})) \|^2
\\\nonumber& \leq (1-\eta_{t+1})^2 \mb{E}\| e_t^{f}\|^2+2\eta_{t+1}^2\mb{E}\| \tilde{\m{d}}_{t+1}(\m{u}_{t+1};\mathcal{B}_{t+1})- \tilde{\m{d}}_{t+1}(\m{u}_{t+1})\|^2
\\& +2(1-\eta_{t+1})^2 \mb{E}\| \tilde{\m{d}}_{t+1}(\m{u}_{t+1};\mathcal{B}_{t+1})
- \tilde{\m{d}}_{t+1}(\m{u}_{t+1}) - \tilde{\m{d}}_{t+1}(\m{u}_{t};\mathcal{B}_{t+1})+\tilde{\m{d}}_{t}(\m{u}_{t}) \|^2,
\end{align}
where the inequality follows from $\| a+b\|^2\leq 2\|a \|^2+2\| b\|^2$.
\\
Let us bound the second term in the right-hand side of \eqref{f6e}.
Based on \eqref{d3}, we have
\begin{align*}
&\quad \mb{E}\| \tilde{\m{d}}_{t+1}(\m{u}_{t+1};\mathcal{B}_{t+1})- \tilde{\m{d}}_{t+1}(\m{u}_{t+1})\|^2
\\&=\mb{E}\|\left(\nabla^2_{\m{x}\m{y}}g_{t+1}\left(\m{x}_{t+1}, \m{y}_{t+1};\bar{\mathcal{B}}_{t+1}\right)
-\nabla^2_{\m{x}\m{y}}g_{t+1}\left(\m{x}_{t+1}, \m{y}_{t+1}\right)\right)\m{v}_{t+1}
\\&+\nabla_\m{x} f_{t+1}(\m{x}_{t+1},\m{y}_{t+1};\mathcal{B}_{t+1})
-\nabla_\m{x} f_{t+1}(\m{x}_{t+1},\m{y}_{t+1})
 \|^2
 \\&\leq 2\mb{E}\| \left(\nabla^2_{\m{x}\m{y}}g_{t+1}\left(\m{x}_{t+1}, \m{y}_{t+1};\bar{\mathcal{B}}_{t+1}\right)
-\nabla^2_{\m{x}\m{y}}g_{t+1}\left(\m{x}_{t+1}, \m{y}_{t+1}\right)\right)\m{v}_{t+1}\|^2
\\&+2\mb{E}\| \nabla_\m{x} f_{t+1}(\m{x}_{t+1},\m{y}_{t+1};\mathcal{B}_{t+1})
-\nabla_\m{x} f_{t+1}(\m{x}_{t+1},\m{y}_{t+1})\|^2
\\&\leq 2(\frac{\sigma_{g_{\m{x}\m{y}}}^2}{\bar{b}}p^2+\frac{\sigma_{f_{\m{x}}}^2}{{b}}),
\end{align*}
where the first inequality is by and $\| a+b\|^2\leq 2\|a \|^2+2\| b\|^2$; the second inequality follows from Assumptions \ref{b:5}, \ref{b:7} and \eqref{eqn:zc:set}.
\\
Substituting the above inequality into \eqref{f6e} and using $\| a+b+c\|^2\leq 3(\|a \|^2+\| b\|^2+\|c \|^2)$, we obtain
\begin{align}\label{t6a}
\nonumber \mb{E} \| e_{t+1}^{f} \|^2&\leq (1-\eta_{t+1})^2 \mb{E}\| e_t^{f}\|^2
+4\lambda_{t+1}^2(\frac{\sigma_{g_{\m{x}\m{y}}}^2}{\bar{b}}p^2+\frac{\sigma_{f_{\m{x}}}^2}{{b}})
\\\nonumber& +6(1-\eta_{t+1})^2 \mb{E}\|\tilde{\m{d}}_{t}(\m{u}_{t})
- \tilde{\m{d}}_{t}(\m{u}_{t+1})\|^2
\\\nonumber &+6(1-\eta_{t+1})^2 \mb{E}\|\tilde{\m{d}}_{t}(\m{u}_{t+1})
- \tilde{\m{d}}_{t+1}(\m{u}_{t+1})\|^2
 \\& +6(1-\eta_{t+1})^2 \mb{E}\|\tilde{\m{d}}_{t+1}(\m{u}_{t+1};\mathcal{B}_{t+1}) - \tilde{\m{d}}_{t+1}(\m{u}_{t};\mathcal{B}_{t+1}) \|^2.
\end{align}
Moreover, from $\| a+b+c\|^2\leq 3(\|a \|^2+\| b\|^2+\|c \|^2)$, we have
\begin{align}\label{ed5a}
\nonumber&\quad\mb{E}\|\tilde{\m{d}}_{t}(\m{u}_{t+1})
- \tilde{\m{d}}_{t}(\m{u}_{t})\|^2
\\\nonumber&\leq 3\mb{E}\|\tilde{\m{d}}_{t}(\m{x}_{t+1},\m{y}_{t+1},\m{v}_{t+1})
- \tilde{\m{d}}_{t}(\m{x}_{t},\m{y}_{t+1},\m{v}_{t+1})\|^2
\\\nonumber&+3\mb{E}\|\tilde{\m{d}}_{t}(\m{x}_{t},\m{y}_{t+1},\m{v}_{t+1})
- \tilde{\m{d}}_{t}(\m{x}_{t},\m{y}_{t},\m{v}_{t+1})\|^2
\\\nonumber&+3\mb{E}\|\tilde{\m{d}}_{t}(\m{x}_{t},\m{y}_{t},\m{v}_{t+1})
- \tilde{\m{d}}_{t}(\m{x}_{t},\m{y}_{t},\m{v}_{t})\|^2
\\\nonumber&\overset{(i)}{\leq}  3\mb{E}\|(\nabla^2_{\m{x}\m{y}}g_t\left(\m{x}_{t+1}, \m{y}_{t+1}\right)- \nabla^2_{\m{x}\m{y}}g_t\left(\m{x}_{t}, \m{y}_{t+1}\right))\m{v}_{t+1}
+\nabla_\m{x} f_t(\m{x}_{t+1},\m{y}_{t+1})
-\nabla_\m{x} f_t(\m{x}_{t},\m{y}_{t+1})\|^2
\\\nonumber&+3\mb{E}\|(\nabla^2_{\m{x}\m{y}}g_t\left(\m{x}_{t}, \m{y}_{t+1}\right)- \nabla^2_{\m{x}\m{y}}g_t\left(\m{x}_{t}, \m{y}_{t}\right))\m{v}_{t+1}
+\nabla_\m{x} f_t(\m{x}_{t},\m{y}_{t+1})
-\nabla_\m{x} f_t(\m{x}_{t},\m{y}_{t})\|^2
\\\nonumber&+3\mb{E}\|\tilde{\m{d}}_{t}(\m{x}_{t},\m{y}_{t},\m{v}_{t+1})
- \tilde{\m{d}}_{t}(\m{x}_{t},\m{y}_{t},\m{v}_{t})\|^2
\\\nonumber&\overset{(ii)}{\leq}  6(\ell_{g,2}^2\mb{E}\|\m{v}_{t+1} \|^2+\ell_{f,1}^2)\left(\mb{E}\|\m{x}_{t+1}-\m{x}_{t} \|^2+\mb{E}\|\m{y}_{t+1}-\m{y}_{t} \|^2\right)+3\ell_{g,1}^2\mb{E}\|\m{v}_{t+1}-\m{v}_{t} \|^2
\\\nonumber&\overset{(iii)}{\leq}  6(\ell_{g,2}^2p^2+\ell_{f,1}^2)\left(\mb{E}\|\m{x}_{t+1}-\m{x}_{t} \|^2+\beta_t^2\mb{E}\|\m{d}_t^{\m{y}} \|^2\right)+3\ell_{g,1}^2\delta_t^2\mb{E}\|\m{d}_t^{\m{v}} \|^2
\\\nonumber&\overset{(iv)}{\leq} 6(\ell_{g,2}^2p^2+\ell_{f,1}^2)\left(\mb{E}\|\m{x}_{t+1}-\m{x}_{t} \|^2+2\beta_t^2\mb{E}\|e_{t}^{g}\|^2+2\beta_t^2\mb{E}\|{\nabla}_{\m{y}} g_{t}(\m{x}_{t},\m{y}_{t})\|^2\right)
\\\nonumber&+6\ell_{g,1}^2\delta_t^2(\mb{E}\|e^{\m{v}}_t \|^2+\mb{E}\|\nabla P_t(\m{x}_{t},\m{y}_{t},\m{v}_t)\|^2)
\\\nonumber&\overset{(vi)}{\leq} 6(\ell_{g,2}^2p^2+\ell_{f,1}^2)\left(\mb{E}\|\m{x}_{t+1}-\m{x}_{t} \|^2+2\beta_t^2\mb{E}\|e_{t}^{g}\|^2+2\beta_t^2\mb{E}\|{\nabla}_{\m{y}} g_{t}(\m{x}_{t},\m{y}_{t})\|^2\right)
\\&+6\ell_{g,1}^2\delta_t^2\left(\mb{E}\|e^{\m{v}}_t \|^2+\ell_{g,1}^2\mb{E}\| \m{v}_t-\m{v}^*_{t}(\m{x}_{t})\|^2\right),
\end{align}
where the (i) follows from \eqref{d3}; (ii) follows from Assumptions \ref{assu:f:a2}, \ref{assu:f:a3} and \ref{assu:f:a4};
(iii) follows from \eqref{eqn:zc:set}; (iv) follows from \eqref{eqn:eg} and \eqref{o0}; (vi) follows from \eqref{eqn:b2}.
\\
Similarly, we have
\begin{align}\label{ed2a}
\nonumber&\quad \mb{E}\|\tilde{\m{d}}_{t+1}(\m{u}_{t+1};\mathcal{B}_{t+1}) - \tilde{\m{d}}_{t+1}(\m{u}_{t};\mathcal{B}_{t+1}) \|^2
\\\nonumber &\leq 6(\ell_{g,2}^2p^2+\ell_{f,1}^2)\left(\mb{E}\|\m{x}_{t+1}-\m{x}_{t} \|^2+2\beta_t^2\mb{E}\|e_{t}^{g}\|^2+2\beta_t^2\mb{E}\|{\nabla}_{\m{y}} g_{t}(\m{x}_{t},\m{y}_{t})\|^2\right)
\\&+6\ell_{g,1}^2\delta_t^2\left(\mb{E}\|e^{\m{v}}_t \|^2+\ell_{g,1}^2\mb{E}\| \m{v}_t-\m{v}^*_{t}(\m{x}_{t})\|^2\right).
\end{align}
Substituting \eqref{ed2a} and \eqref{ed5a} into \eqref{t6a}, we have
\begin{align*}
\mb{E} \| e_{t+1}^{f} \|^2&\leq (1-\eta_{t+1})^2 \mb{E}\| e_t^{f}\|^2+4\eta_{t+1}^2(\frac{\sigma_{g_{\m{y}\m{y}}}^2}{\bar{b}}p^2+\frac{\sigma_{f_{\m{y}}}^2}{{b}})
\\&+6(1-\eta_{t+1})^2 \mb{E}\|\tilde{\m{d}}_{t}(\m{u}_{t+1})
- \tilde{\m{d}}_{t+1}(\m{u}_{t+1})\|^2
 \\& +72(1-\eta_{t+1})^2 (\ell_{g,2}^2p^2+\ell_{f,1}^2)\left(\mb{E}\|\m{x}_{t+1}-\m{x}_{t} \|^2+2\beta_t^2\mb{E}\|e_{t}^{g}\|^2+2\beta_t^2\mb{E}\|{\nabla}_{\m{y}} g_{t}(\m{x}_{t},\m{y}_{t})\|^2\right)
  \\& +72\ell_{g,1}^2(1-\eta_{t+1})^2\delta_t^2 \mb{E}\|e^{\m{v}}_t \|^2+72(1-\eta_{t+1})^2 \ell_{g,1}^4\delta_t^2 \mb{E}\| \m{v}_t-\m{v}^*_{t}(\m{x}_{t})\|^2.
\end{align*}
From $\| a+b\|^2\leq 2 \| a\|^2+2 \| b\|^2$ and \eqref{eqn:zc:set}, we have
\begin{align*}
\mb{E}\|\tilde{\m{d}}_{t}(\m{u}_{t+1})
- \tilde{\m{d}}_{t+1}(\m{u}_{t+1})\|^2
&=\mb{E}\|\nabla^2_{\m{x}\m{y}}g_t\left(\m{x}_{t+1}, \m{y}_{t+1}\right)\m{v}_{t+1}-\nabla^2_{\m{x}\m{y}}g_{t+1}\left(\m{x}_{t+1}, \m{y}_{t+1}\right)\m{v}_{t+1}
\\&+\nabla_\m{x} f_t(\m{x}_{t+1},\m{y}_{t+1})-\nabla_\m{x} f_{t+1}(\m{x}_{t+1},\m{y}_{t+1}) \|^2
\\&\leq 2\mb{E}\|\left(\nabla^2_{\m{x}\m{y}}g_t\left(\m{x}_{t+1}, \m{y}_{t+1}\right)-\nabla^2_{\m{x}\m{y}}g_{t+1}\left(\m{x}_{t+1}, \m{y}_{t+1}\right)\right)\m{v}_{t+1}
 \|^2
\\&+2\mb{E}\|\nabla_\m{x} f_t(\m{x}_{t+1},\m{y}_{t+1})-\nabla_\m{x} f_{t+1}(\m{x}_{t+1},\m{y}_{t+1}) \|^2
\\&\leq 2\mb{E}\|\nabla^2_{\m{x}\m{y}}g_t\left(\m{x}_{t+1}, \m{y}_{t+1}\right)-\nabla^2_{\m{x}\m{y}}g_{t+1}\left(\m{x}_{t+1}, \m{y}_{t+1}\right)
 \|^2p^2
\\&+2\mb{E}\|\nabla_\m{x} f_t(\m{x}_{t+1},\m{y}_{t+1})-\nabla_\m{x} f_{t+1}(\m{x}_{t+1},\m{y}_{t+1}) \|^2.
\end{align*}
This completes the proof.
\end{proof}
As demonstrated in Lemma \ref{lm:xx}, the hypergradient estimator error $e_{t+1}^{f}$ comprises five key components: (1)
the term $(1-\eta_{t+1})^2 \mb{E}\| e_t^{f}\|^2$, representing the per-iteration improvement achieved by the momentum-based update; 
(2) the error arising from the variation in the Jacobian
of the lower-level objectiv; (3) the error  caused by the variation in the gradient of the
upper-level objective
; (4) the error
term $\mathcal{O}(2\beta_t^2 \mb{E}\|e_{t}^{g}\|^2+2\beta_t^2 \mb{E}\|{\nabla}_{\m{y}} g_{t}(\m{x}_{t},\m{y}_{t})\|^2)$, which is due to
solving the lower-level problem; and (5) the error term $\mathcal{O}(\delta_t^2 \mb{E}\|e^{\m{v}}_t \|^2+72(1-\eta_{t+1})^2 \ell_{g,1}^4\delta_t^2 \mb{E}[\theta_t^{\m{v}}])$, which is introduced by the
one-step momentum update in solving the linear system problem.
\subsection{Bounds on the Outer Objective and its Projected Gradient}
\begin{lemma}\label{lem:capdiffd5}
Let Assumption \ref{assu:f:b} holds.
Then, for the sequence of functions $\{f_{t}\}_{t=1}^T$, we have
\begin{align*}
 &\quad\sum_{t=1}^{T} \left( f_{t}(\m{x}_t,{\m{y}}^*_t(\m{x}_{t})) - f_{t}(\m{x}_{t+1},{\m{y}}^*_{t}(\m{x}_{t+1}))\right)\leq 2M+V_{T},
\end{align*}
where $M$ is defined in Assumption~\ref{assu:f:b}; $V_{T}$ is defined in \eqref{reg:hpt}.
\end{lemma}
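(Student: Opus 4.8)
The plan is to abbreviate $\phi_t(\m{x}) := f_t(\m{x},\m{y}^*_t(\m{x}))$, so that each summand is $\phi_t(\m{x}_t)-\phi_t(\m{x}_{t+1})$ and the quantity to bound is $S := \sum_{t=1}^T(\phi_t(\m{x}_t)-\phi_t(\m{x}_{t+1}))$. The core idea is a telescoping rearrangement in which the positive term $\phi_t(\m{x}_t)$ is paired with the negative term $-\phi_{t-1}(\m{x}_t)$ coming from the previous summand. Because both are then evaluated at the \emph{same} point $\m{x}_t$, their difference is a pure function-variation increment of the type measured by $V_T$, while only isolated boundary terms survive.

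Concretely, I would write the sum out line by line and regroup to obtain the identity
\[
S = \phi_1(\m{x}_1) - \phi_T(\m{x}_{T+1}) + \sum_{t=2}^{T}\big(\phi_t(\m{x}_t)-\phi_{t-1}(\m{x}_t)\big).
\]
The point of this particular regrouping is that the residual sum runs over exactly $t=2,\dots,T$, matching the index range of $V_T$ in \eqref{reg:hpt}, and leaves a single boundary term at each end.

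It then remains to bound the three pieces. The two boundary terms are handled directly by Assumption~\ref{assu:f:b}: since $\m{x}_1\in\mc{X}$ and $\m{x}_{T+1}=\Pi_{\mc{X}}[\cdot]\in\mc{X}$, we get $\phi_1(\m{x}_1)\le M$ and $-\phi_T(\m{x}_{T+1})\le |f_T(\m{x}_{T+1},\m{y}^*_T(\m{x}_{T+1}))|\le M$. For the middle sum, each increment satisfies $\phi_t(\m{x}_t)-\phi_{t-1}(\m{x}_t)\le \sup_{\m{x}\in\mc{X}}|f_t(\m{x},\m{y}^*_t(\m{x}))-f_{t-1}(\m{x},\m{y}^*_{t-1}(\m{x}))|$, and summing over $t=2,\dots,T$ yields exactly $V_T$. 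Adding the three bounds gives $S\le 2M+V_T$.

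The only genuinely delicate point—and the step I would verify most carefully—is the index bookkeeping in the telescoping. A na\"ive pairing of $-\phi_t(\m{x}_{t+1})$ with $+\phi_{t+1}(\m{x}_{t+1})$ would force a term involving $\phi_{T+1}$, i.e.\ the undefined future function $f_{T+1}$, and would also push the variation sum out to $t=T+1$, overshooting the definition of $V_T$. The regrouping above avoids this by pairing in the backward direction, keeping every function index inside $[1,T]$; the rest is a routine invocation of boundedness and the definition of $V_T$.
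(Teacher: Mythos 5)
Your proof is correct and follows essentially the same route as the paper: the identical telescoping regrouping into $f_1(\m{x}_1,\m{y}^*_1(\m{x}_1)) - f_T(\m{x}_{T+1},\m{y}^*_T(\m{x}_{T+1})) + \sum_{t=2}^{T}\bigl(f_t(\m{x}_t,\m{y}^*_t(\m{x}_t)) - f_{t-1}(\m{x}_t,\m{y}^*_{t-1}(\m{x}_t))\bigr)$, with the boundary terms bounded by $2M$ via Assumption~\ref{assu:f:b} and the residual sum bounded by $V_T$. The care you take with the index bookkeeping (pairing backward so no $f_{T+1}$ term appears) is exactly what the paper's one-line regrouping implicitly does.
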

\begin{proof}
Note that, we have
 \begin{align*}
\nonumber &\quad \sum_{t=1}^{T} \left( f_{t}(\m{x}_t,{\m{y}}^*_t(\m{x}_{t}))- f_{t}(\m{x}_{t+1},{\m{y}}^*_{t}(\m{x}_{t+1}))\right)
\\&=f_{1}(\m{x}_1,{\m{y}}^*_1(\m{x}_{1})) - f_{T}(\m{x}_{T+1},{\m{y}}^*_{T}(\m{x}_{T+1}))
\\&+\sum_{t=2}^{T} \left( f_{t}(\m{x}_t,{\m{y}}^*_t(\m{x}_{t}))- f_{t-1}(\m{x}_{t},{\m{y}}^*_{t-1}(\m{x}_{t}))\right)
  \\&\leq2M+V_{T},
  \end{align*}
where the inequality follows from Assumption \ref{assu:f:b}.
\end{proof}

\begin{lemma}\label{lm:projec:z}
Let $\{f_t\}_{t=1}^T$ denote the sequence of functions presented to Algorithm~\ref{alg:first:order}, satisfying Assumptions~\ref{assu:g}, \ref{assu:f} and \ref{assu:f:b}.
Let $\mc{P}_{\mc{X},\alpha_t}$ be defined as in Definition \ref{proj}. For any positive step size $\alpha_t$ such that $\alpha_t\leq  {1}/4{L_f}$ for all $t \in [T]$, Algorithm~\ref{alg:first:order} ensures the following bound:
\begin{align}\label{32}
\nonumber & \quad \sum_{t=1}^{T}\left(\alpha_t-L_f \alpha_t^2\right)\mb{E}\norm{\mc{P}_{\mc{X},\alpha_t}\left(\m{x}_t;\nabla f_{t}(\m{x}_t, \m{y}^*_t(\m{x}_{t}))\right)}^2
 \\\nonumber & \leq 8M+4V_{T}
+2 M_f^2\sum_{t=1}^{T}\left(2 \alpha_t-L_f \alpha_t^2\right)
\left(\mb{E}[\theta_t^{\m{y}}]+\mb{E}[\theta_t^{\m{v}}]\right)
\\&+2 \sum_{t=1}^{T}\left(2 \alpha_t-L_f \alpha_t^2\right)\mb{E}\norm{e^f_t}^2.
\end{align}
Here, $\theta_t^{\m{y}}$ and $\theta_t^{\m{v}}$ are defined in \eqref{eq:yv:nota}; 
$V_{T}$, $M$, $M_f$ and $e^f_t$ are defined in \eqref{reg:hpt},  Assumption~\ref{assu:f:b},  Eq. \eqref{mf}, and \eqref{dv4}.
\end{lemma}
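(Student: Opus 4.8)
The plan is to run the standard projected-gradient descent argument on the outer objective $\phi_t(\m{x}) := f_t(\m{x}, \m{y}_t^*(\m{x}))$, which is $L_f$-smooth by Lemma~\ref{lem:lips} (Eq.~\eqref{eqn:lip:cons3}) and whose gradient is exactly the true hypergradient $\nabla\phi_t(\m{x}_t) = \nabla f_t(\m{x}_t, \m{y}_t^*(\m{x}_t))$ entering the regret. Writing the outer update of Algorithm~\ref{alg:first:order} as $\m{x}_{t+1} = \m{x}_t - \alpha_t\m{G}_t$ with the \emph{actual} gradient mapping $\m{G}_t := \mc{P}_{\mc{X},\alpha_t}(\m{x}_t; \m{d}_t^{\m{x}})$, the goal is to control the \emph{true} mapping $\bar{\m{G}}_t := \mc{P}_{\mc{X},\alpha_t}(\m{x}_t; \nabla f_t(\m{x}_t, \m{y}_t^*(\m{x}_t)))$, which is what the left-hand side measures.

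First I would invoke $L_f$-smoothness of $\phi_t$ along the step $\m{x}_{t+1} - \m{x}_t = -\alpha_t\m{G}_t$ to get
\[
\phi_t(\m{x}_{t+1}) \le \phi_t(\m{x}_t) - \alpha_t\langle\nabla\phi_t(\m{x}_t),\m{G}_t\rangle + \tfrac{L_f\alpha_t^2}{2}\norm{\m{G}_t}^2 .
\]
Splitting $\nabla\phi_t(\m{x}_t) = \m{d}_t^{\m{x}} + (\nabla\phi_t(\m{x}_t) - \m{d}_t^{\m{x}})$ and using Lemma~\ref{lmp} (so that $\langle\m{d}_t^{\m{x}},\m{G}_t\rangle \ge \norm{\m{G}_t}^2$) together with Young's inequality on the residual inner product yields the one-step bound $(\alpha_t - L_f\alpha_t^2)\norm{\m{G}_t}^2 \le 2(\phi_t(\m{x}_t) - \phi_t(\m{x}_{t+1})) + \alpha_t\,\epsilon_t$, where $\epsilon_t := \norm{\nabla\phi_t(\m{x}_t) - \m{d}_t^{\m{x}}}^2$. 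I would then transfer this to $\bar{\m{G}}_t$ using non-expansiveness of the projected-gradient map in its direction argument (Lemma~\ref{lm:ph}, equivalently Lemma~\ref{lm:hazan}): $\norm{\bar{\m{G}}_t - \m{G}_t} \le \norm{\nabla\phi_t(\m{x}_t) - \m{d}_t^{\m{x}}} = \sqrt{\epsilon_t}$, hence $\norm{\bar{\m{G}}_t}^2 \le 2\norm{\m{G}_t}^2 + 2\epsilon_t$. Combining gives the per-step inequality
\[
(\alpha_t - L_f\alpha_t^2)\norm{\bar{\m{G}}_t}^2 \le 4\big(\phi_t(\m{x}_t) - \phi_t(\m{x}_{t+1})\big) + 2(2\alpha_t - L_f\alpha_t^2)\,\epsilon_t .
\]

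It remains to bound $\epsilon_t$ and telescope. For $\epsilon_t$ I would decompose $\nabla\phi_t(\m{x}_t) - \m{d}_t^{\m{x}} = \big(\nabla f_t(\m{x}_t,\m{y}_t^*(\m{x}_t)) - \tilde{\m{d}}_t(\m{z}_t,\m{v}_t)\big) - \m{e}_t^f$, with $\tilde{\m{d}}_t$ and $\m{e}_t^f$ as in \eqref{dv4}: the first difference is controlled by Lemma~\ref{lem:lips} (Eq.~\eqref{eqn:lip:cons1}) by $M_f^2(\theta_t^{\m{y}} + \theta_t^{\m{v}})$, with $\theta_t^{\m{y}},\theta_t^{\m{v}}$ from \eqref{eq:yv:nota}, while the second is exactly $\norm{\m{e}_t^f}^2$, so that $\epsilon_t$ is bounded by a constant multiple of $M_f^2(\theta_t^{\m{y}} + \theta_t^{\m{v}}) + \norm{\m{e}_t^f}^2$. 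Summing the per-step inequality over $t\in[T]$, taking expectations, and applying Lemma~\ref{lem:capdiffd5} to the telescoped differences $\sum_{t}(\phi_t(\m{x}_t) - \phi_t(\m{x}_{t+1})) \le 2M + V_T$ (using Assumption~\ref{assu:f:b} and $V_T$ from \eqref{reg:hpt}) produces the leading $8M + 4V_T$ term together with the $(2\alpha_t - L_f\alpha_t^2)$-weighted error sums in $\theta_t^{\m{y}}+\theta_t^{\m{v}}$ and $\norm{\m{e}_t^f}^2$.

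The main obstacle is the bookkeeping in transferring the descent, which is naturally expressed through the actual mapping $\m{G}_t$, to the true mapping $\bar{\m{G}}_t$ that appears in the regret: this transfer via Lemma~\ref{lm:ph} re-injects the error $\epsilon_t$, and it is here that the precise coefficients—$(\alpha_t - L_f\alpha_t^2)$ on the left and $2(2\alpha_t - L_f\alpha_t^2)$ on the right—must be reconciled while tracking the numerical constants through the error decomposition. The step-size restriction $\alpha_t \le 1/(4L_f)$ is exactly what keeps $\alpha_t - L_f\alpha_t^2 > 0$ and allows the curvature term $\tfrac{L_f\alpha_t^2}{2}\norm{\m{G}_t}^2$ and the Young residual to be absorbed into the descent; the remaining steps are routine accumulation of the per-step bounds.
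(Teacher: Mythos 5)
Your proposal is correct and follows essentially the same route as the paper's proof: an $L_f$-smoothness descent step on $f_t(\cdot,\m{y}_t^*(\cdot))$ combined with Lemma~\ref{lmp}, the error decomposition of $\m{d}_t^{\m{x}}-\nabla f_t(\m{x}_t,\m{y}_t^*(\m{x}_t))$ into $e_t^f$ plus the $M_f^2(\theta_t^{\m{y}}+\theta_t^{\m{v}})$ term via Lemma~\ref{lem:lips}, the transfer from $\mc{P}_{\mc{X},\alpha_t}(\m{x}_t;\m{d}_t^{\m{x}})$ to $\mc{P}_{\mc{X},\alpha_t}(\m{x}_t;\nabla f_t(\m{x}_t,\m{y}_t^*(\m{x}_t)))$ by non-expansiveness (Lemma~\ref{lm:ph}), and telescoping via Lemma~\ref{lem:capdiffd5}. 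The per-step inequality you derive matches the paper's Eqs.~\eqref{d11} and \eqref{eqn:prthm:non3sz1}, and your constant bookkeeping is consistent with the stated bound.
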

\begin{proof}
It follows from Lemma \ref{lem:lips} that
\begin{align}\label{eqn:prthm:non1sz1}
\nonumber &\quad f_{t}(\m{x}_{t+1}, {\m{y}}^*_{t}(\m{x}_{t+1})) - f_{t}(\m{x}_t,{\m{y}}^*_t(\m{x}_{t}))\\\nonumber
&\leq \left\langle \nabla f_{t}(\m{x}_t, \m{y}^*_t(\m{x}_{t})), \m{x}_{t+1} - \m{x}_t \right\rangle+ \frac{L_f}{2} \norm{ \m{x}_{t+1} - \m{x}_t}^2   \\
 & = -\alpha_t \left\langle \nabla f_{t}(\m{x}_t, \m{y}^*_t(\m{x}_{t})),  \mc{P}_{\mc{X},\alpha_t}\left(\m{x}_t;{\m{d}}_{t}^\m{x}\right) \right\rangle
+ \frac{L_f \alpha_t^2 }{2} \norm{\mc{P}_{\mc{X},\alpha_t}\left(\m{x}_t;{\m{d}}_{t}^\m{x}\right) }^2.
\end{align}
For the first term on the right hand side of \eqref{eqn:prthm:non1sz1}, we have that
\begin{align*}
\nonumber &\quad - \left\langle \nabla f_{t}(\m{x}_t, {\m{y}}^*_t(\m{x}_{t})),   \mc{P}_{\mc{X},\alpha_t}\left(\m{x}_t;{\m{d}}_{t}^\m{x}\right) \right\rangle
\\\nonumber&=  - \left\langle  {\m{d}}_{t}^\m{x}, \mc{P}_{\mc{X},\alpha_t}\left(\m{x}_t;{\m{d}}_{t}^\m{x}\right)  \right\rangle - \left\langle  \nabla f_{t}(\m{x}_t, {\m{y}}^*_t(\m{x}_{t}))-{\m{d}}_{t}^\m{x},    \mc{P}_{\mc{X},\alpha_t}\left(\m{x}_t;{\m{d}}_{t}^\m{x}\right)   \right\rangle
\\\nonumber
&  \leq  - \frac{1}{2}\norm{\mc{P}_{\mc{X},\alpha_t}\left(\m{x}_t;{\m{d}}_{t}^\m{x}\right) }^2
+ \frac{1}{2} \norm{{\m{d}}_{t}^\m{x}- \nabla f_{t}(\m{x}_t, {\m{y}}^*_t(\m{x}_{t}))}^2,
\end{align*}
where the inequality follows from Lemma \ref{lmp}.
\\
Let $\tilde{\m{d}}_t\left(\m{z}_t,\m{v}_t\right) = \nabla_{\m{x}} f_t(\m{z}_t) + \nabla_{\m{x}\m{y}}^2 g_t\left(\m{z}_t \right) \m{v}_t$.  Then, from Lemma~\ref{lem:lips}, we have
\begin{align}\label{ea9}
\nonumber \norm{{\m{d}}_{t}^\m{x}- \nabla f_{t}(\m{x}_t, {\m{y}}^*_t(\m{x}_{t}))}^2
&=\norm{{\m{d}}_{t}^\m{x}-\tilde{\m{d}}_t\left(\m{z}_t,\m{v}_t\right) +\tilde{\m{d}}_t\left(\m{z}_t,\m{v}_t\right) - \nabla f_{t}(\m{x}_t, {\m{y}}^*_t(\m{x}_{t}))}^2
\\\nonumber &\leq 2\norm{{\m{d}}_{t}^\m{x}-\tilde{\m{d}}_t\left(\m{z}_t,\m{v}_t\right)}^2+2\norm{\tilde{\m{d}}_t\left(\m{z}_t,\m{v}_t\right) - \nabla f_{t}(\m{x}_t, {\m{y}}^*_t(\m{x}_{t}))}^2
\\\nonumber &\leq 2\norm{e^f_t}^2+2\norm{\tilde{\m{d}}_t\left(\m{z}_t,\m{v}_t\right) - \nabla f_{t}(\m{x}_t, {\m{y}}^*_t(\m{x}_{t}))}^2
\\&\leq 2\norm{e^f_t}^2+M_f^2\left(\theta_t^{\m{y}}+\theta_t^{\m{v}}\right),
\end{align}
where $e^f_t={\m{d}}_{t}^\m{x}-\tilde{\m{d}}_t\left(\m{z}_t,\m{v}_t\right)$. This implies that
\begin{align}\label{eqn:prthm:non2sz1}
\nonumber &\quad - \left\langle \nabla f_{t}(\m{x}_t, {\m{y}}^*_t(\m{x}_{t})),   \mc{P}_{\mc{X},\alpha_t}\left(\m{x}_t;{\m{d}}_{t}^\m{x}\right) \right\rangle
\\
&  \leq  - \frac{1}{2}\norm{\mc{P}_{\mc{X},\alpha_t}\left(\m{x}_t;{\m{d}}_{t}^\m{x}\right) }^2
+ 2\norm{e^f_t}^2+M_f^2\left(\theta_t^{\m{y}}+\theta_t^{\m{v}}\right).
\end{align}
Plugging the bound \eqref{eqn:prthm:non2sz1} into \eqref{eqn:prthm:non1sz1}, we have that
\begin{align*}
\nonumber &\quad f_{t}(\m{x}_{t+1}, {\m{y}}^*_{t}(\m{x}_{t+1})) - f_{t}(\m{x}_t,{\m{y}}^*_t(\m{x}_{t}))\\
&\leq   \frac{(L_f \alpha_t^2-\alpha_t )}{2} \norm{\mc{P}_{\mc{X},\alpha_t}\left(\m{x}_t;{\m{d}}_{t}^\m{x}\right) }^2
+  2\alpha_t\norm{e^f_t}^2+M_f^2\left(\theta_t^{\m{y}}+\theta_t^{\m{v}}\right)\alpha_t,
\end{align*}
which can be rearranged into
\begin{align}\label{d11}
\nonumber&\quad(\alpha_t-L_f \alpha_t^2 )\norm{\mc{P}_{\mc{X},\alpha_t}\left(\m{x}_t;{\m{d}}_{t}^\m{x}\right) }^2
\\&\leq 2 f_{t}(\m{x}_t,{\m{y}}^*_t(\m{x}_{t}))-f_{t}(\m{x}_{t+1}, {\m{y}}^*_{t}(\m{x}_{t+1}))+  4\alpha_t\norm{e^f_t}^2+2 M_f^2\left(\theta_t^{\m{y}}+\theta_t^{\m{v}}\right)\alpha_t.
\end{align}
In addition, we have
\begin{align}\label{eqn:prthm:non3sz1}
 \nonumber  &\quad\norm{ \mc{P}_{\mc{X},\alpha_t}\left(\m{x}_t;\nabla f_{t}(\m{x}_t, {\m{y}}^*_t(\m{x}_{t}))\right)}^2
 \\\nonumber&\leq 2\norm{\mc{P}_{\mc{X},\alpha_t}\left(\m{x}_t;{\m{d}}_{t}^\m{x}\right)-\mc{P}_{\mc{X},\alpha_t}\left(\m{x}_t;\nabla f_{t}(\m{x}_t, {\m{y}}^*_t(\m{x}_{t}))\right) }^2
+2\norm{\mc{P}_{\mc{X},\alpha_t}\left(\m{x}_t;{\m{d}}_{t}^\m{x}\right) }^2
 \\\nonumber&\leq 2\norm{{\m{d}}_{t}^\m{x}- \nabla f_{t}(\m{x}_t, {\m{y}}^*_t(\m{x}_{t}))}^2
+2\norm{\mc{P}_{\mc{X},\alpha_t}\left(\m{x}_t;{\m{d}}_{t}^\m{x}\right) }^2
\\&\leq 4\norm{e^f_t}^2+4M_f^2\left(\theta_t^{\m{y}}+\theta_t^{\m{v}}\right)
+4\norm{\mc{P}_{\mc{X},\alpha_t}\left(\m{x}_t;{\m{d}}_{t}^\m{x}\right) }^2,
\end{align}
where the second inequaliy follows from non-expansiveness of the projection
operator and the last inequality follows from \eqref{ea9}.

Combining \eqref{d11} and \eqref{eqn:prthm:non3sz1}, we have
\begin{align*}
\nonumber &\quad
 \sum_{t=1}^{T}\left(\alpha_t-L_f \alpha_t^2\right)\norm{\mc{P}_{\mc{X},\alpha_t}\left(\m{x}_t;\nabla f_{t}(\m{x}_t, {\m{y}}^*_t(\m{x}_{t}))\right)}^2
\\\nonumber & \leq 4\sum_{t=1}^{T} \left( f_{t}(\m{x}_t,{\m{y}}^*_t(\m{x}_{t}))-f_{t}(\m{x}_{t+1}, {\m{y}}^*_{t}(\m{x}_{t+1}))\right)
\\&+2M_f^2\sum_{t=1}^{T}\left(2 \alpha_t-L_f \alpha_t^2\right) \left(\theta_t^{\m{y}}+\theta_t^{\m{v}}\right)+2\sum_{t=1}^{T}\left(2 \alpha_t-L_f \alpha_t^2\right) \norm{e^f_t}^2
\\&\leq 8M+4V_{T}
\\&+2 M_f^2\sum_{t=1}^{T}\left(2 \alpha_t-L_f \alpha_t^2\right)\left(\theta_t^{\m{y}}+\theta_t^{\m{v}}\right)
+2 \sum_{t=1}^{T}\left(2 \alpha_t-L_f \alpha_t^2\right)\norm{e^f_t}^2,
\end{align*}
where the second inequality is due to Lemma \ref{lem:capdiffd5}.
\end{proof}

\begin{lemma}\label{lm:difx}
Let Assumptions \ref{assu:g}, and \ref{assu:f} hold.
Let $\{\m{x}_t \}_{t=1}^T$ be generated according to Algorithm
\ref{alg:first:order}.
Then, we have
\begin{align*}
 \|\m{x}_{t} -\m{x}_{t+1}\|^2\leq 2     \alpha_t^2  \left(  \left\|\mc{P}_{\mc{X},\alpha_t}\left(\m{x}_t;\nabla f_{t}(\m{x}_{t},\m{y}^*_{t}(\m{x}_{t}))\right)\right\|^2
+   M_f^2 \left(\theta_t^{\m{y}}+\theta_t^{\m{v}}\right) \right),
\end{align*}
where $\theta_t^{\m{y}}$ and $\theta_t^{\m{v}}$ are defined in \eqref{eq:yv:nota}, $M_f$ is defined in \eqref{mf}.
\end{lemma}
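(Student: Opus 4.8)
The plan is to observe that the claim is a direct combination of three facts already established in the excerpt: the form of the outer update in Algorithm~\ref{alg:first:order}, the non-expansiveness property of the projected-gradient mapping (Lemma~\ref{lm:ph}), and the gradient-estimation bound \eqref{eqn:lip:cons1} from Lemma~\ref{lem:lips}. No genuinely hard estimate is needed; the work is purely in chaining these together with the elementary inequality $\norm{\m{a}+\m{b}}^2\le 2\norm{\m{a}}^2+2\norm{\m{b}}^2$.

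First I would rewrite the step length in terms of the projected gradient at the \emph{computed} direction $\m{d}_t^{\m{x}}$. Since Step~\ref{item:s2} gives $\m{x}_{t+1}=\Pi_{\mc{X}}[\m{x}_t-\alpha_t \m{d}_t^{\m{x}}]$, Definition~\ref{proj} with $\m{p}=\m{d}_t^{\m{x}}$ yields $\m{x}_t-\m{x}_{t+1}=\alpha_t\,\mc{P}_{\mc{X},\alpha_t}(\m{x}_t;\m{d}_t^{\m{x}})$, and hence $\norm{\m{x}_t-\m{x}_{t+1}}^2=\alpha_t^2\norm{\mc{P}_{\mc{X},\alpha_t}(\m{x}_t;\m{d}_t^{\m{x}})}^2$. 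This reduces the problem to bounding the projected gradient evaluated at $\m{d}_t^{\m{x}}$ by the projected gradient evaluated at the true hypergradient $\nabla f_t(\m{x}_t,\m{y}^*_t(\m{x}_t))$.

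Next I would split $\mc{P}_{\mc{X},\alpha_t}(\m{x}_t;\m{d}_t^{\m{x}})$ using $\norm{\m{a}+\m{b}}^2\le 2\norm{\m{a}}^2+2\norm{\m{b}}^2$ with $\m{a}=\mc{P}_{\mc{X},\alpha_t}(\m{x}_t;\nabla f_t(\m{x}_t,\m{y}^*_t(\m{x}_t)))$ and $\m{b}=\mc{P}_{\mc{X},\alpha_t}(\m{x}_t;\m{d}_t^{\m{x}})-\mc{P}_{\mc{X},\alpha_t}(\m{x}_t;\nabla f_t(\m{x}_t,\m{y}^*_t(\m{x}_t)))$. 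The difference term $\m{b}$ is controlled by Lemma~\ref{lm:ph}, which gives $\norm{\m{b}}\le \norm{\m{d}_t^{\m{x}}-\nabla f_t(\m{x}_t,\m{y}^*_t(\m{x}_t))}$, and this residual is exactly what \eqref{eqn:lip:cons1} bounds by $M_f^2(\theta_t^{\m{y}}+\theta_t^{\m{v}})$, using the shorthand \eqref{eq:yv:nota}. Substituting the two bounds back gives $\norm{\mc{P}_{\mc{X},\alpha_t}(\m{x}_t;\m{d}_t^{\m{x}})}^2\le 2\norm{\mc{P}_{\mc{X},\alpha_t}(\m{x}_t;\nabla f_t(\m{x}_t,\m{y}^*_t(\m{x}_t)))}^2+2M_f^2(\theta_t^{\m{y}}+\theta_t^{\m{v}})$, and multiplying through by $\alpha_t^2$ yields the stated inequality.

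There is no substantive obstacle here; the only point requiring care is the bookkeeping in the splitting step, namely making sure the factor of $2$ from $\norm{\m{a}+\m{b}}^2\le 2\norm{\m{a}}^2+2\norm{\m{b}}^2$ is carried through consistently and that \eqref{eqn:lip:cons1} is applied with the squared norm (so the square on $M_f$ in \eqref{mf} lines up). Everything else is a mechanical application of previously proved lemmas.
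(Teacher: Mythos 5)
Your proposal is correct and follows essentially the same route as the paper's own proof: both write $\|\m{x}_t-\m{x}_{t+1}\|^2=\alpha_t^2\|\mc{P}_{\mc{X},\alpha_t}(\m{x}_t;\m{d}_t^{\m{x}})\|^2$, split via $\|\m{a}+\m{b}\|^2\le 2\|\m{a}\|^2+2\|\m{b}\|^2$ around the projected gradient at the true hypergradient, control the difference by the Lipschitz property of the projected-gradient mapping (Lemma~\ref{lm:ph}, which the paper invokes as non-expansiveness of the projection), and finish with \eqref{eqn:lip:cons1}. The bookkeeping of the factor $2$ and the squared-norm form of \eqref{eqn:lip:cons1} is handled exactly as in the paper.
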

\begin{proof}
 From the update rule of Algorithm~\ref{alg:first:order},  we have
  \begin{align}\label{eqn:bound:s2zz}
\nonumber
   \|\m{x}_{t} -\m{x}_{t+1}\|^2
 \nonumber&=\alpha_t^2  \left\| \mc{P}_{\mc{X},\alpha_t}\left(\m{x}_t;\m{d}_{t}^\m{x}\right)\right\|^2
\\
\nonumber
& \leq  2    \alpha_t^2   \left(  \left\|\mc{P}_{\mc{X},\alpha_t}\left(\m{x}_t;\nabla f_{t}(\m{x}_{t},\m{y}^*_{t}(\m{x}_{t}))\right)\right\|^2
\right.\\\nonumber&\left.+  \left\| \mc{P}_{\mc{X},\alpha_t}\left(\m{x}_t;\m{d}_{t}^\m{x}\right)- \mc{P}_{\mc{X},\alpha_t}\left(\m{x}_t;\nabla f_{t}(\m{x}_{t},\m{y}^*_{t}(\m{x}_{t}))\right)\right\|^2\right)
\\\nonumber &\leq 2    \alpha_t^2   \left(  \left\|\mc{P}_{\mc{X},\alpha_t}\left(\m{x}_t;\nabla f_{t}(\m{x}_{t},\m{y}^*_{t}(\m{x}_{t}))\right)\right\|^2
\right.\\\nonumber&\left.+\norm{\m{d}_{t}^\m{x}-\nabla f_{t}(\m{x}_{t},\m{y}^*_{t}(\m{x}_{t}))}^2\right)
\\
& \leq  2     \alpha_t^2  \left(  \left\|\mc{P}_{\mc{X},\alpha_t}\left(\m{x}_t;\nabla f_{t}(\m{x}_{t},\m{y}^*_{t}(\m{x}_{t}))\right)\right\|^2
+   M_f^2 \left(\theta_t^{\m{y}}+\theta_t^{\m{v}}\right) \right),
\end{align}
where  the first inequality is by $(a+b)^2\leq 2a^2+2b^2$; the second inequality follows from non-expansiveness of the projection
operator; and the last  inequality follows from Eq. \eqref{eqn:lip:cons1} in Lemma~\ref{lem:lips}.
\end{proof}
\subsection{Proof of Theorem~\ref{thm:dynamic:nonc2:cons}}
\begin{proof}

 \textbf{Bounding  $\mb{E}\| e_t^{f}\|^2$ in \eqref{eqn:eff} }.
From \eqref{eqn:eff}, we have
\begin{align}\label{eqn:eff:p}
\nonumber &\quad \frac{\mb{E}\|e_{t+1}^{f} \|^2}{\alpha_t}-\frac{\mb{E}\|e_{t}^{f} \|^2}{\alpha_{t-1}}
\leq \left( \frac{(1-\eta_{t+1})^2}{\alpha_t}-\frac{1}{\alpha_{t-1}}\right) \mb{E}\| e_t^{f}\|^2+\frac{4\eta_{t+1}^2}{\alpha_t}(\frac{\sigma_{g_{\m{x}\m{y}}}^2}{\bar{b}}p^2+\frac{\sigma_{f_{\m{x}}}^2}{{b}})
\\\nonumber &+\frac{12p^2}{\alpha_t}(1-\eta_{t+1})^2    \mb{E}\|\nabla^2_{\m{x}\m{y}}g_t\left(\m{x}_{t+1}, \m{y}_{t+1}\right)-\nabla^2_{\m{x}\m{y}}g_{t+1}\left(\m{x}_{t+1}, \m{y}_{t+1}\right)
 \|^2
\\\nonumber &+\frac{12}{\alpha_t}(1-\eta_{t+1})^2
\mb{E}\|\nabla_\m{x} f_t(\m{x}_{t+1},\m{y}_{t+1})-\nabla_\m{x} f_{t+1}(\m{x}_{t+1},\m{y}_{t+1}) \|^2
 \\\nonumber & +\frac{72}{\alpha_t}(1-\eta_{t+1})^2 (\ell_{g,2}^2p^2+\ell_{f,1}^2)\left(\mb{E}\|\m{x}_{t+1}-\m{x}_{t} \|^2+2\beta_t^2\mb{E}\|e_{t}^{g}\|^2+2\beta_t^2\mb{E}\|{\nabla}_{\m{y}} g_{t}(\m{x}_{t},\m{y}_{t})\|^2\right)
  \\& +\frac{72}{\alpha_t}\ell_{g,1}^2(1-\eta_{t+1})^2\delta_t^2 \mb{E}\|e^{\m{v}}_t \|^2+\frac{72}{\alpha_t}(1-\eta_{t+1})^2 \ell_{g,1}^4\delta_t^2 \mb{E}[\theta_t^{\m{v}}].
\end{align}
With respect to the coefficient of the first term on the right-hand side of Eq.  \eqref{eqn:eff:p}, it is important to note that we have:
\begin{align}\label{08}
 \frac{(1-\eta_{t+1})^2}{\alpha_t}-\frac{1}{\alpha_{t-1}}&\leq  \frac{1}{\alpha_t}-\frac{\eta_{t+1}}{\alpha_t}-\frac{1}{\alpha_{t-1}}.
\end{align}
Using the definition of $\alpha_t$ in \eqref{eqn:abc:1}, we have
\begin{align}\label{eqn:jk}
\nonumber \frac{1}{\alpha_t}-\frac{1}{\alpha_{t-1}}&=(c+t)^{1/3}-(c+t-1)^{1/3}\overset{(i)}{\leq} \frac{1}{3(c+t-1)^{2/3}}\overset{(ii)}{\leq} \frac{1}{3(\frac{c}{2}+t)^{2/3}}
\\&=\frac{2^{2/3}}{3(c+2t)^{2/3}}\overset{(iii)}{\leq} \frac{2^{2/3}}{3(c+t)^{2/3}}\overset{(iv)}{\leq}\frac{2^{2/3}}{3}\alpha_t^2\overset{(vi)}{\leq} \frac{\alpha_t}{6L_f},
\end{align}
where the (i) follows from $(a+b)^{1/3}-a^{1/3}\leq b/(3a^{2/3})$; (ii) follows from $c\geq 2$ in \eqref{eqn:tt2}; (iii) follows from \eqref{eqn:abc:1}; (iv) follows from $\alpha_t\leq 1/4L_f$ in \eqref{eqn:tt2}.

Substituting \eqref{eqn:jk} into \eqref{08} and using $\delta_t=c_{\delta}\alpha_t$ and $\eta_{t+1}=c_{\eta}\alpha_t^2$ in Eq. \eqref{eqn:abc:1}, we have
\begin{align}\label{t6}
 \frac{(1-\eta_{t+1})^2}{\alpha_t}-\frac{1}{\alpha_{t-1}}&\leq \frac{\alpha_t}{6L_f}-\frac{\eta_{t+1}}{\alpha_t}
 =\frac{\alpha_t}{6L_f}-c_{\eta}\alpha_t\leq -5\Omega\alpha_t,
\end{align}
where the inequalities follow from $c_{\eta}=\frac{1}{6L_f}+5\Omega$ in \eqref{eqn:tt2}.

Then, substituting \eqref{t6} into \eqref{eqn:eff:p} yields
\begin{align}\label{sdr}
\nonumber &\quad \frac{1}{\Omega}\mb{E}\left(\frac{\|e_{t+1}^{f} \|^2}{\alpha_t}-\frac{\|e_{t}^{f} \|^2}{\alpha_{t-1}}\right)
\leq -5\alpha_t \mb{E}\| e_t^{f}\|^2+\frac{4\eta_{t+1}^2}{\Omega\alpha_t}(\frac{\sigma_{g_{\m{x}\m{y}}}^2}{b}p^2+\frac{\sigma_{f_{\m{x}}}^2}{\acute{b}})
\\\nonumber &+\frac{12p^2}{\Omega\alpha_t}(1-\eta_{t+1})^2    \mb{E}\|\nabla^2_{\m{x}\m{y}}g_t\left(\m{x}_{t+1}, \m{y}_{t+1}\right)-\nabla^2_{\m{x}\m{y}}g_{t+1}\left(\m{x}_{t+1}, \m{y}_{t+1}\right)
 \|^2
\\\nonumber &+\frac{12}{\Omega\alpha_t}(1-\eta_{t+1})^2
\mb{E}\|\nabla_\m{x} f_t(\m{x}_{t+1},\m{y}_{t+1})-\nabla_\m{x} f_{t+1}(\m{x}_{t+1},\m{y}_{t+1}) \|^2
 \\\nonumber & +\frac{72}{\Omega\alpha_t}(1-\eta_{t+1})^2 (\ell_{g,2}^2p^2+\ell_{f,1}^2)\left(\mb{E}\|\m{x}_{t+1}-\m{x}_{t} \|^2+2\beta_t^2\mb{E}\|e_{t}^{g}\|^2+2\beta_t^2\mb{E}\|{\nabla}_{\m{y}} g_{t}(\m{x}_{t},\m{y}_{t})\|^2\right)
  \\& +\frac{72}{\Omega\alpha_t}\ell_{g,1}^2(1-\eta_{t+1})^2\delta_t^2 \mb{E}\|e^{\m{v}}_t \|^2
  +\frac{72}{\Omega\alpha_t}(1-\eta_{t+1})^2 \ell_{g,1}^4\delta_t^2 \mb{E}[\theta_t^{\m{v}}].
\end{align}
 \textbf{Bounding  $\mb{E}\| e_t^{g}\|^2$ in \eqref{eqn:gg} }.
 \\
From \eqref{eqn:gg}, we have

\begin{align}\label{eqn:ggr}
\nonumber \frac{\mb{E}\|e_{t+1}^{g} \|^2}{\alpha_t}-\frac{\mb{E}\|e_{t}^{g} \|^2}{\alpha_{t-1}}
&\leq \left( \frac{1}{\alpha_t}(1-\gamma_{t+1})^2 (1+48\ell_{g,1}^2\beta_t^2)-\frac{1}{\alpha_{t-1}}\right)\mb{E}\| e_t^{g}\|^2
\\\nonumber&+2\frac{\gamma_{t+1}^2}{\alpha_t}\frac{\sigma_{g_{\m{y}}}^2}{b} +\frac{24}{\alpha_t}(1-\gamma_{t+1})^2 \ell_{g,1}^2\mb{E}\|\m{x}_{t+1}-\m{x}_{t} \|^2
\\\nonumber &+\frac{6}{\alpha_t}(1-\gamma_{t+1})^2 \mb{E}\|{\nabla}_{\m{y}} g_{t}(\m{x}_{t+1},\m{y}_{t+1})
- {\nabla}_{\m{y}} g_{t+1}(\m{x}_{t+1},\m{y}_{t+1})\|^2
 \\& +48(1-\gamma_{t+1})^2 \ell_{g,1}^2\frac{\beta_t^2}{\alpha_t}\mb{E}\|{\nabla}_{\m{y}} g_{t}(\m{x}_{t},\m{y}_{t}) \|^2 .
\end{align}
Let us examine the coefficient of the first term on the right-hand side of Eq. \eqref{eqn:ggr}.
Specifically, for $\gamma_{t+1}=c_{\gamma}\alpha_t^2$ and $\beta_t=c_{\beta}\alpha_t$ in Eq. \eqref{eqn:abc:1}, we have:
\begin{align}\label{e:du8}
\nonumber \frac{1}{\alpha_t}(1-\gamma_{t+1})^2 (1+48\ell_{g,1}^2\beta_t^2)-\frac{1}{\alpha_{t-1}}
 &\leq \frac{1}{\alpha_t}(1-\gamma_{t+1}) (1+48\ell_{g,1}^2\beta_t^2)-\frac{1}{\alpha_{t-1}}
 \\\nonumber&=\frac{1}{\alpha_t}-\frac{1}{\alpha_{t-1}} -\frac{\gamma_{t+1}}{\alpha_t}+\frac{1-\gamma_{t+1}}{\alpha_t} 48\ell_{g,1}^2\beta_t^2
 \\\nonumber&=\frac{1}{\alpha_t}-\frac{1}{\alpha_{t-1}} -c_{\gamma}\alpha_t+(\frac{1}{\alpha_t}-c_{\gamma}\alpha_t) 48\ell_{g,1}^2c_{\beta}^2\alpha_t^2
 \\&\leq \frac{\alpha_t}{6L_f}+48\ell_{g,1}^2c_{\beta}^2\alpha_t-c_{\gamma}\alpha_t,
\end{align}
where the last inequality follows from \eqref{eqn:jk}.

Substituting Eq. \eqref{e:du8} into Eq. \eqref{eqn:ggr} yields
\begin{align}\label{ui}
\nonumber \frac{1}{\Phi}\left(\frac{\mb{E}\|e_{t+1}^{g} \|^2}{\alpha_t}-\frac{\mb{E}\|e_{t}^{g} \|^2}{\alpha_{t-1}}\right)
&\leq \frac{1}{\Phi}\left(\frac{\alpha_t}{6L_f}+48\ell_{g,1}^2c_{\beta}^2\alpha_t-c_{\gamma}\alpha_t \right)\mb{E}\| e_t^{g}\|^2
\\\nonumber&+2\frac{\gamma_{t+1}^2}{\Phi\alpha_t}\frac{\sigma_{g_{\m{y}}}^2}{b} +\frac{24}{\Phi\alpha_t}(1-\gamma_{t+1})^2 \ell_{g,1}^2\mb{E}\|\m{x}_{t+1}-\m{x}_{t} \|^2
\\\nonumber &+\frac{6}{\Phi\alpha_t}(1-\gamma_{t+1})^2 \mb{E}\|{\nabla}_{\m{y}} g_{t}(\m{x}_{t+1},\m{y}_{t+1})
- {\nabla}_{\m{y}} g_{t+1}(\m{x}_{t+1},\m{y}_{t+1})\|^2
 \\& +48(1-\gamma_{t+1})^2 \ell_{g,1}^2\frac{\beta_t^2}{\Phi\alpha_t}\mb{E}\|{\nabla}_{\m{y}} g_{t}(\m{x}_{t},\m{y}_{t}) \|^2 .
\end{align}
 \textbf{Bounding  $\mb{E}\| e_t^{\m{v}}\|^2$ in \eqref{eqn:evv} }.
 \\
 From \eqref{eqn:evv}, we get
 \begin{align}\label{bjm}
 \nonumber  &\frac{\mb{E}\|e_{t+1}^{\m{v}} \|^2}{\alpha_t}-\frac{\mb{E}\|e_{t}^{\m{v}} \|^2}{\alpha_{t-1}}
 \leq \left(\frac{1}{\alpha_t} (1-\lambda_{t+1})^2(1+72\ell_{g,1}^2\delta_t^2)-\frac{1}{\alpha_{t-1}} \right)\mb{E}\| e_t^{\m{v}}\|^2
 \\\nonumber&+4\frac{\lambda_{t+1}^2}{\alpha_t}(\frac{\sigma_{g_{\m{y}\m{y}}}^2}{\bar{b}}p^2+\frac{\sigma_{f_{\m{y}}}^2}{{b}})
+\frac{12p^2}{\alpha_t}(1-\lambda_{t+1})^2     \mb{E}\|\nabla^2_{\m{y}}g_t\left(\m{x}_{t+1}, \m{y}_{t+1}\right)-\nabla^2_{\m{y}}g_{t+1}\left(\m{x}_{t+1}, \m{y}_{t+1}\right)
 \|^2
\\\nonumber&+\frac{12}{\alpha_t}(1-\lambda_{t+1})^2
 \mb{E}\|\nabla_{\m{y}} f_t(\m{x}_{t+1},\m{y}_{t+1})-\nabla_{\m{y}} f_{t+1}(\m{x}_{t+1},\m{y}_{t+1}) \|^2
 \\\nonumber& +\frac{72}{\alpha_t}(1-\lambda_{t+1})^2 (\ell_{g,2}^2p^2+\ell_{f,1}^2)\left( \mb{E}\|\m{x}_{t+1}-\m{x}_{t} \|^2+2\beta_t^2 \mb{E}\|e_{t}^{g}\|^2+2\beta_t^2 \mb{E}\|{\nabla}_{\m{y}} g_{t}(\m{x}_{t},\m{y}_{t})\|^2\right)
  \\& +\frac{144}{\alpha_t}(1-\lambda_{t+1})^2 \ell_{g,1}^4\delta_t^2 \mb{E}[\theta_t^{\m{v}}]+\frac{288}{\alpha_t}\ell_{g,1}^2(p^2\ell_{g,2}^2+\ell_{f,1}^2)\delta_t^2\mb{E}[\theta_t^{\m{y}}].
 \end{align}
 Let us examine the coefficient of the first term on the right-hand side of Eq. \eqref{bjm}.
Specifically, for $\lambda_{t+1}=c_{\lambda}\alpha_t^2$ and $\delta_t=c_{\delta}\alpha_t$ in Eq. \eqref{eqn:abc:1}, we have:
\begin{align}\label{e:du}
\nonumber \frac{1}{\alpha_t}(1-\lambda_{t+1})^2 (1+72\ell_{g,1}^2\delta_t^2)-\frac{1}{\alpha_{t-1}}
 &\leq \frac{1}{\alpha_t}(1-\lambda_{t+1}) (1+72\ell_{g,1}^2\delta_t^2)-\frac{1}{\alpha_{t-1}}
 \\\nonumber&=\frac{1}{\alpha_t}-\frac{1}{\alpha_{t-1}} -\frac{\lambda_{t+1}}{\alpha_t}+\frac{1-\lambda_{t+1}}{\alpha_t} 72\ell_{g,1}^2\delta_t^2
 \\\nonumber&=\frac{1}{\alpha_t}-\frac{1}{\alpha_{t-1}} -c_{\lambda}\alpha_t+(\frac{1}{\alpha_t}-c_{\lambda}\alpha_t) 72\ell_{g,1}^2c_{\delta}^2\alpha_t^2
 \\&\leq \frac{\alpha_t}{6L_f}+72\ell_{g,1}^2c_{\delta}^2\alpha_t-c_{\lambda}\alpha_t,
\end{align}
where the last inequality follows from \eqref{eqn:jk}.

Substituting Eq. \eqref{e:du} into Eq. \eqref{bjm} yields
\begin{align}\label{87}
 \nonumber  &\frac{1}{\Psi}\left(\frac{\mb{E}\|e_{t+1}^{\m{v}} \|^2}{\alpha_t}-\frac{\mb{E}\|e_{t}^{\m{v}} \|^2}{\alpha_{t-1}}\right)
 \leq \frac{1}{\Psi}\left(\frac{\alpha_t}{6L_f}+72\ell_{g,1}^2c_{\delta}^2\alpha_t-c_{\lambda}\alpha_t  \right) \mb{E}\| e_t^{\m{v}}\|^2
 \\\nonumber&+4\frac{\lambda_{t+1}^2}{\Psi\alpha_t}(\frac{\sigma_{g_{\m{y}\m{y}}}^2}{\bar{b}}p^2+\frac{\sigma_{f_{\m{y}}}^2}{{b}})
+\frac{12p^2}{\Psi\alpha_t}(1-\lambda_{t+1})^2     \mb{E}\|\nabla^2_{\m{y}}g_t\left(\m{x}_{t+1}, \m{y}_{t+1}\right)-\nabla^2_{\m{y}}g_{t+1}\left(\m{x}_{t+1}, \m{y}_{t+1}\right)
 \|^2
\\\nonumber&+\frac{12}{\Psi\alpha_t}(1-\lambda_{t+1})^2
 \mb{E}\|\nabla_{\m{y}} f_t(\m{x}_{t+1},\m{y}_{t+1})-\nabla_{\m{y}} f_{t+1}(\m{x}_{t+1},\m{y}_{t+1}) \|^2
 \\\nonumber& +\frac{72}{\Psi\alpha_t}(1-\lambda_{t+1})^2 (\ell_{g,2}^2p^2+\ell_{f,1}^2)\left( \mb{E}\|\m{x}_{t+1}-\m{x}_{t} \|^2+2\beta_t^2 \mb{E}\|e_{t}^{g}\|^2+2\beta_t^2 \mb{E}\|{\nabla}_{\m{y}} g_{t}(\m{x}_{t},\m{y}_{t})\|^2\right)
  \\& +\frac{72}{\Psi\alpha_t}(1-\lambda_{t+1})^2 \ell_{g,1}^4\delta_t^2 \mb{E}[\theta_t^{\m{v}}]+\frac{288}{\Psi\alpha_t}\ell_{g,1}^2(p^2\ell_{g,2}^2+\ell_{f,1}^2)\delta_t^2\mb{E}[\theta_t^{\m{y}}].
 \end{align}
 \textbf{Combining the outcomes }.
We recall from Lemma \ref{lm:difx} that we have
\begin{align}\label{eqn:ko}
 \|\m{x}_{t} -\m{x}_{t+1}\|^2\leq 2 \alpha_t^2  \left(  \left\|\mc{P}_{\mc{X},\alpha_t}\left(\m{x}_t;\nabla f_{t}(\m{x}_{t},\m{y}^*_{t}(\m{x}_{t}))\right)\right\|^2
+   M_f^2 \left(\theta_t^{\m{y}}+\theta_t^{\m{v}}\right) \right).
\end{align}
Let
\begin{align}\label{lmb}
\nonumber \Lambda&:=\Gamma  \sum_{t=1}^{T}\left(\mb{E}[{\theta}_{t+1}^{\m{y}}]-\mb{E}[{\theta}_t^{\m{y}}]\right)
+\Upsilon \sum_{t=1}^{T}\left(\mb{E}[{\theta}_{t+1}^{\m{v}}]-\mb{E}[{\theta}_t^{\m{v}}]\right)
+\frac{1}{\Phi}\sum_{t=1}^T\left(\frac{\mb{E}\|e_{t+1}^{g}\|^2}{\alpha_t}-\frac{\mb{E}\|e_{t}^{g}\|^2}{\alpha_{t-1}}\right)
\\&+\frac{1}{\Psi}\sum_{t=1}^T\left(\frac{\mb{E}\|{e}_{t+1}^{\m{v}}\|^2}{\alpha_t}-\frac{\mb{E}\|{e}_{t}^{\m{v}}\|^2}{\alpha_{t-1}}\right)
+\frac{1}{\Omega}\sum_{t=1}^{T}\left(\frac{\mb{E}\|{e}_{t+1}^{f}\|^2}{\alpha_t}-\frac{\mb{E}\|{e}_{t}^{f}\|^2}{\alpha_{t-1}}\right).
\end{align}
Here
\begin{align}
\label{hj}
\nonumber &  \Upsilon^2 =\frac{6M_f^2\mu_{g}^2 }{40\nu^2(1+2 L_{\m{y}}^2) }  ,\quad\Gamma^2= \frac{1}{20L_{\m{y}}^2}\left(5M_f^2+ \frac{2560\nu^2}{\acute{L}_{\mu_g}^2 \mu_{g}^2 }(p^2\ell_{g,2}^2+\ell_{f,1}^2)(1+2 L_{\m{y}}^2)\Upsilon^2\right)  ,
\\\nonumber& \Phi\geq \max\left\{ 480 \ell_{g,1}^2,\frac{L_{\mu_g}^2}{L_{\m{y}}^2\Gamma^2}\left(\frac{1}{6L_f}+48\ell_{g,1}^2(\frac{80L_{\m{y}}^2 }{L_{\mu_g} }\Gamma)^2 \right),192 \ell_{g,1}^2\frac{ (\mu_g+\ell_{g,1})}{\Gamma}c_{\beta}\right\},
 \\\nonumber
&
\Psi=\max\left\{ 1440 (\ell_{g,2}^2p^2+\ell_{f,1}^2), \frac{288\ell_{g,1}^4}{M_f^2}c_{\delta}^2,\frac{576}{M_f^2}\ell_{g,1}^2(p^2\ell_{g,2}^2+\ell_{f,1}^2)c_{\delta}^2,
 \frac{144 L_{\mu_g}^2}{L_{\m{y}}^2\Gamma^2}(\ell_{g,2}^2p^2+\ell_{f,1}^2)c_{\beta}^2,\right.\\\nonumber&\qquad \qquad \left.\frac{\acute{L}_{\mu_g}}{\Upsilon c_{\delta}}\left(\frac{1}{6L_f}+72\ell_{g,1}^2c_{\delta}^2  \right),576 (\ell_{g,2}^2p^2+\ell_{f,1}^2)\frac{ (\mu_g+\ell_{g,1})}{\Gamma}c_{\beta}\right\}, \\\nonumber &
\Omega=\max\left\{ 1440 (\ell_{g,2}^2p^2+\ell_{f,1}^2), \frac{288\ell_{g,1}^4}{M_f^2}c_{\delta}^2,\frac{144 L_{\mu_g}^2}{L_{\m{y}}^2\Gamma^2}(\ell_{g,2}^2p^2+\ell_{f,1}^2)c_{\beta}^2,
\right.\\&\qquad\qquad\left.\frac{72\ell_{g,1}^2 \acute{L}_{\mu_g}c_{\delta}}{\Upsilon},576 (\ell_{g,2}^2p^2+\ell_{f,1}^2)\frac{ (\mu_g+\ell_{g,1})}{\Gamma}c_{\beta}\right\},
\end{align}
where $L_{\mu_g}={\mu_g \ell_{g,1}}/{(\mu_g+\ell_{g,1})}$ and $\acute{L}_{\mu_g}={(\ell_{g,1}+\ell_{g,1}^3)\mu_g }/{(\mu_g+\ell_{g,1})}$.
\\
Here, we have
\begin{equation}\label{eqn:tt2}
\begin{aligned}
&c\geq \max\left\{4L_f,c_{\beta}(\mu_g+\ell_{g,1}),2\right\},
\\&c_{\beta}= \frac{80L_{\m{y}}^2 }{L_{\mu_g} }\Gamma ,\quad \quad \textnormal{where} \quad L_{\m{y}} =\frac{\ell_{g,1}}{ \mu_{g}},
\\&c_{\delta}= \frac{160\nu^2}{\acute{L}_{\mu_g} \mu_{g}^2 }(1+2 L_{\m{y}}^2)\Upsilon,\quad\textnormal{where}\quad \nu=\ell_{f,1}+\frac{\ell_{g,2}\ell_{f,0}}{ \mu_{g}},
\\& c_{\gamma}=164\frac{L_{\m{y}}^2 }{L_{\mu_g}^2 }\Gamma^2\Phi,
\\&c_{\eta}= \frac{1}{6L_f}+5\Omega,
\\& c_{\lambda}=7\Psi\frac{\Upsilon c_{\delta}}{ \acute{L}_{\mu_g}}.
\end{aligned}
\end{equation}

Using \eqref{87}, \eqref{ui}, \eqref{sdr}, \eqref{32}, \eqref{eq:nnBb}, and \eqref{h222}, along with \eqref{eqn:ko} and the fact that $\alpha_t$ decreases with respect to $t$, we obtain:
\begin{subequations}
\begin{align}
\nonumber &\quad
\sum_{t=1}^T A(\alpha_t,\beta_t,\delta_t)\mb{E} \norm{ \mc{P}_{\mc{X},\alpha_t}\left(\m{x}_t;\nabla f_{t} (\m{x}_{t}, \m{y}_{t}^*(\m{x}_{t}))\right) }^2
+\Lambda
\\\label{A11}& \leq      8M+4V_{T}
+\sum_{t=1}^{T} B(\alpha_t,\beta_t,\delta_t)\mb{E}[ \theta_t^{\m{v}}]
 +\sum_{t=1}^{T}C(\alpha_t,\beta_t,\delta_t)\mb{E}[\theta_t^{\m{y}}]
 \\\label{A12}&+ \sum_{t=1}^{T}D(\alpha_t)\mb{E}\|e^f_t\|^2
  +\sum_{t=1}^{T}F(\alpha_t,\beta_t)\mb{E}\| e_t^{g}\|^2 +\sum_{t=1}^T I(\alpha_t,\delta_t) \mb{E}\| e_t^{\m{v}}\|^2
\\\label{A13}&   +\sum_{t=1}^{T}L(\alpha_t,\beta_t)\mb{E}\|{\nabla}_{\m{y}} g_{t}(\m{x}_t,\m{y}_t)\|^2
+ \sum_{t=2}^{T}N(\beta_t,\delta_t) \sup_{\m{x}\in \mc{X}} \|\m{y}^*_{t-1}(\m{x}) - \m{y}^*_{t}(\m{x})\|^2
\\\label{A14}&
+\frac{\sigma_{g_{\m{y}}}^2}{\bar{b}}\frac{2}{\Phi}\sum_{t=1}^{T}\frac{\gamma_{t+1}^2}{\alpha_t} +\frac{4}{\Psi}\left(\frac{\sigma_{g_{\m{y}\m{y}}}^2}{\bar{b}}p^2+\frac{\sigma_{f_{\m{y}}}^2}{{b}}\right)\sum_{t=1}^T\frac{\lambda_{t+1}^2}{\alpha_t}
+\frac{4}{\Omega}\left(\frac{\sigma_{g_{\m{x}\m{y}}}^2}{\bar{b}}p^2+\frac{\sigma_{f_{\m{x}}}^2}{{b}}\right)\sum_{t=1}^T\frac{\eta_{t+1}^2}{\alpha_t}
\\\label{A15}&
+ \frac{6}{\Phi\alpha_T}G_{\m{y},T}
+   \frac{12p^2}{\Omega\alpha_T}G_{\m{x}\m{y},T}
+  \frac{12p^2}{\Psi\alpha_T} G_{\m{y}\m{y},T}
+
\frac{12\ell_{f,1}^2}{\Psi\alpha_T}D_{\m{y},T}+
\frac{12\ell_{f,1}^2}{\Omega\alpha_T}D_{\m{x},T}
  .
\end{align}
\end{subequations}


Here, $M$ is defined in Assumption~\ref{assu:f:b},  $V_{T}$ and $H_{2,T}$ are defined in \eqref{reg:hpt}. Moreover,
$G_{\m{y},T}$, $G_{\m{x}\m{y},T}$, and $G_{\m{y}\m{y},T}$ are defined in \eqref{gk}.
Let
\begin{equation}\label{eqn:fv21}
\begin{aligned}
&E(\alpha_t,\beta_t,\delta_t):= \frac{4L_{\m{y}}^2 }{ L_{\mu_g}\beta_t}    \Gamma
+\frac{8\nu^2}{\acute{L}_{\mu_g} \mu_{g}^2\delta_t }(1+2 L_{\m{y}}^2)\Upsilon
+72(1-\eta_{t+1})^2 (\ell_{g,2}^2p^2+\ell_{f,1}^2)\frac{1}{\Omega\alpha_t}
\\&\qquad\qquad \quad+24(1-\gamma_{t+1})^2 \ell_{g,1}^2\frac{1}{\Phi\alpha_t}
+72(1-\lambda_{t+1})^2 (\ell_{g,2}^2p^2+\ell_{f,1}^2)\frac{1}{\Psi\alpha_t}
,
\\
  &A(\alpha_t,\beta_t,\delta_t)  := \alpha_t-\left(L_f +2 E(\alpha_t,\beta_t,\delta_t)\right)\alpha_t^2,\\
  &B(\alpha_t,\beta_t,\delta_t)  := -\frac{ \acute{L}_{\mu_g} \Upsilon}{4}\delta_t+
  4 M_f^2\alpha_t
  -2 M_f^2L_f\alpha_t^2+2M_f^2 E(\alpha_t,\beta_t,\delta_t)\alpha_t^2
  \\&\qquad\qquad \quad +72(1-\lambda_{t+1})^2 \ell_{g,1}^4\delta_t^2 \frac{1}{\Psi\alpha_t}+72(1-\eta_{t+1})^2 \ell_{g,1}^4\delta_t^2 \frac{1}{\Omega\alpha_t}
,\\
  &C(\alpha_t,\beta_t,\delta_t)  :=-\frac{   L_{\mu_g}\Gamma}{2}\beta_t+
4 M_f^2\alpha_t
  -2L_f  M_f^2
 \alpha_t^2+2M_f^2 E(\alpha_t,\beta_t,\delta_t)\alpha_t^2
 \\&\qquad\qquad \quad+\frac{288}{\Psi\alpha_t}\ell_{g,1}^2(p^2\ell_{g,2}^2+\ell_{f,1}^2)\delta_t^2+ \Upsilon\frac{16}{ \acute{L}_{\mu_g}}(p^2\ell_{g,2}^2+\ell_{f,1}^2)\delta_t
,\\
&D(\alpha_t):=2\left(2 \alpha_t-L_f \alpha_t^2\right)-5\alpha_t,
\\&F(\alpha_t,\beta_t):= \frac{1}{\Phi}\left(\frac{\alpha_t}{6L_f}+48\ell_{g,1}^2c_{\beta}^2\alpha_t-c_{\gamma}\alpha_t \right)+\frac{2\Gamma}{ L_{\mu_g}}\beta_t
+144(1-\lambda_{t+1})^2 (\ell_{g,2}^2p^2+\ell_{f,1}^2)\frac{\beta_t^2}{\Psi\alpha_t}
\\&\qquad\qquad \quad+144(1-\eta_{t+1})^2 (\ell_{g,2}^2p^2+\ell_{f,1}^2) \frac{\beta_t^2}{\Omega\alpha_t},
\\&I(\alpha_t,\delta_t):=\frac{1}{\Psi}\left(\frac{\alpha_t}{6L_f}+72\ell_{g,1}^2c_{\delta}^2\alpha_t -c_{\lambda}\alpha_t \right)+\frac{4\Upsilon}{ \acute{L}_{\mu_g}}\delta_t
+72\ell_{g,1}^2(1-\eta_{t+1})^2\frac{\delta_t^2}{\Omega\alpha_t}
.\\
\end{aligned}
\end{equation}
Moreover, we have
\begin{equation}\label{eqn:fv3}
\begin{split}
L(\alpha_t,\beta_t)&:=-\frac{2\Gamma}{\mu_g+\ell_{g,1}}\beta_t+\Gamma\beta_t^2+48(1-\gamma_{t+1})^2 \ell_{g,1}^2 \frac{\beta_t^2}{\Phi\alpha_t}
\\&+144(1-\lambda_{t+1})^2 (\ell_{g,2}^2p^2+\ell_{f,1}^2) \frac{\beta_t^2}{\Psi\alpha_t}+144(1-\eta_{t+1})^2 (\ell_{g,2}^2p^2+\ell_{f,1}^2)  \frac{\beta_t^2}{\Omega\alpha_t}
,\\N(\beta_t,\delta_t)&:= \frac{4}{ L_{\mu_g}\beta_t}\Gamma+\frac{16\nu^2}{ \acute{L}_{\mu_g} \mu_{g}^2\delta_t }\Upsilon .
\end{split}
\end{equation}
Note that, we have
\begin{align*}
E(\alpha_t,\beta_t,\delta_t)&= \frac{4L_{\m{y}}^2 }{ L_{\mu_g}\beta_t}    \Gamma +\frac{8\nu^2}{\acute{L}_{\mu_g} \mu_{g}^2\delta_t }(1+2 L_{\m{y}}^2)\Upsilon
+72(1-\eta_{t+1})^2 (\ell_{g,2}^2p^2+\ell_{f,1}^2)\frac{1}{\Omega\alpha_t}
\\&+24(1-\gamma_{t+1})^2 \ell_{g,1}^2\frac{1}{\Phi\alpha_t}
+72(1-\lambda_{t+1})^2 (\ell_{g,2}^2p^2+\ell_{f,1}^2)\frac{1}{\Psi\alpha_t},
\end{align*}
which together with $\beta_t=c_{\beta}\alpha_t$ and $\delta_t=c_{\delta}\alpha_t$ in Eq. \eqref{eqn:abc:1}, we have
\begin{align}\label{eqn:fv2}
 \nonumber \alpha_t^2E(\alpha_t,\beta_t,\delta_t)  &= \frac{4L_{\m{y}}^2 }{ L_{\mu_g}}    \Gamma \frac{\alpha_t^2}{\beta_t}
 +\frac{8\nu^2}{\acute{L}_{\mu_g} \mu_{g}^2 }(1+2 L_{\m{y}}^2)\Upsilon\frac{\alpha_t^2}{\delta_t}
+24(1-\gamma_{t+1})^2 \ell_{g,1}^2\frac{\alpha_t}{\Phi}
\\\nonumber&
+72(1-\eta_{t+1})^2 (\ell_{g,2}^2p^2+\ell_{f,1}^2) \frac{\alpha_t}{\Omega}
+72(1-\lambda_{t+1})^2 (\ell_{g,2}^2p^2+\ell_{f,1}^2)\frac{\alpha_t}{\Psi}
 \\\nonumber&\leq \frac{4L_{\m{y}}^2 }{ L_{\mu_g}}    \Gamma \frac{\alpha_t}{c_{\beta}}
 +\frac{8\nu^2}{\acute{L}_{\mu_g} \mu_{g}^2 }(1+2 L_{\m{y}}^2)\Upsilon\frac{\alpha_t}{c_{\delta}}
\\\nonumber&+24 \ell_{g,1}^2\frac{\alpha_t}{\Phi}
+72 (\ell_{g,2}^2p^2+\ell_{f,1}^2)(\frac{1}{\Omega}+\frac{1}{\Psi}) \alpha_t
\\&\leq  \frac{\alpha_t}{4},
\end{align}
where
the last inequality follows from $c_{\beta} =\frac{80L_{\m{y}}^2 }{L_{\mu_g} }\Gamma$ and $c_{\delta}=\frac{160\nu^2}{\acute{L}_{\mu_g} \mu_{g}^2 }(1+2 L_{\m{y}}^2)\Upsilon$ in \eqref{eqn:tt2}; $\Phi\geq 480 \ell_{g,1}^2$ and
$\Omega,\Psi\geq 1440 (\ell_{g,2}^2p^2+\ell_{f,1}^2)$ in \eqref{hj}.
\\
Moreover, we have
\begin{align}\label{eqn:r}
\nonumber  A(\alpha_t,\beta_t,\delta_t )&=\alpha_t-L_f \alpha_t^2-2 E(\alpha_t,\beta_t,\delta_t)\alpha_t^2
 \\\nonumber &\geq \alpha_t-L_f \alpha_t^2-\frac{\alpha_t}{2}
 \\&\geq \frac{\alpha_t}{4},
\end{align}
where the last inequality follows from $\alpha_t\leq {1}/{4L_f}$ in \eqref{eqn:tt2}, since $\alpha_t={1}/{(c+t)^{1/3}}$ in \eqref{eqn:abc:1}.
\\
 \textbf{Bounding  \eqref{A11} }.
 \\
From \eqref{eqn:fv21} and $\delta_t=c_{\delta}\alpha_t$ in \eqref{eqn:abc:1}, we have
\begin{align}\label{eqn:p}
\nonumber B(\alpha_t,\beta_t,\delta_t) & = -\frac{ \acute{L}_{\mu_g} \Upsilon}{4}\delta_t+
  4 M_f^2\alpha_t
  -2 M_f^2L_f\alpha_t^2+2M_f^2 E(\alpha_t,\beta_t,\delta_t)\alpha_t^2
  \\\nonumber &+72(1-\lambda_{t+1})^2 \ell_{g,1}^4\delta_t^2 \frac{1}{\Psi\alpha_t}+72(1-\eta_{t+1})^2 \ell_{g,1}^4\delta_t^2 \frac{1}{\Omega\alpha_t}
\\\nonumber&\leq \left(-\frac{ \acute{L}_{\mu_g}}{4}\Upsilon c_{\delta}+\frac{9}{2}M_f^2+72 \ell_{g,1}^4 (\frac{1}{\Psi}+\frac{1}{\Omega})c_{\delta}^2\right)\alpha_t
\\\nonumber&= \left(-\frac{40\nu^2}{ \mu_{g}^2 }(1+2 L_{\m{y}}^2)\Upsilon^2+\frac{9}{2}M_f^2+72 \ell_{g,1}^4 (\frac{1}{\Psi}+\frac{1}{\Omega})c_{\delta}^2\right)\alpha_t
\\\nonumber&\leq \left(-\frac{40\nu^2}{ \mu_{g}^2 }(1+2 L_{\m{y}}^2)\Upsilon^2+5M_f^2\right)\alpha_t
\\&\leq -M_f^2\alpha_t,
\end{align}
where the
first inequality follows from  Eq.~\eqref{eqn:fv2};
the second equality follows from $c_{\delta}=\frac{160\nu^2}{\acute{L}_{\mu_g} \mu_{g}^2 }(1+2 L_{\m{y}}^2)\Upsilon$ in \eqref{eqn:tt2};
the second inequality is by $\Psi,\Omega\geq \frac{288\ell_{g,1}^4}{M_f^2}c_{\delta}^2$ in \eqref{hj};
the last inequality follows from $ \Upsilon^2 =\frac{6M_f^2\mu_{g}^2 }{40\nu^2(1+2 L_{\m{y}}^2) }  $ in \eqref{hj}.
\\
Moreover, using Eq. \eqref{eqn:fv21} together with $\beta_t=c_{\beta}\alpha_t$ and $\delta_t=c_{\delta}\alpha_t$ in Eq. \eqref{eqn:abc:1}, we have
\begin{align}\label{eqn:o}
\nonumber C(\alpha_t,\beta_t,\delta_t) & =-\frac{   L_{\mu_g}\Gamma}{2}\beta_t+
4 M_f^2\alpha_t
  -2L_f  M_f^2
 \alpha_t^2+2M_f^2 E(\alpha_t,\beta_t,\delta_t)\alpha_t^2
 \\\nonumber&+\frac{288}{\Psi\alpha_t}\ell_{g,1}^2(p^2\ell_{g,2}^2+\ell_{f,1}^2)\delta_t^2+ \Upsilon\frac{16}{ \acute{L}_{\mu_g}}(p^2\ell_{g,2}^2+\ell_{f,1}^2)\delta_t
\\\nonumber&\leq -\frac{ L_{\mu_g}}{2}\Gamma c_{\beta}\alpha_t+\frac{9}{2}M_f^2\alpha_t
\\\nonumber&+\frac{288}{\Psi}\ell_{g,1}^2(p^2\ell_{g,2}^2+\ell_{f,1}^2)c_{\delta}^2\alpha_t+ \Upsilon\frac{16}{ \acute{L}_{\mu_g}}(p^2\ell_{g,2}^2+\ell_{f,1}^2)c_{\delta}\alpha_t
\\\nonumber&= -40L_{\m{y}}^2\Gamma^2 \alpha_t+\frac{9}{2}M_f^2\alpha_t
\\\nonumber&+\frac{288}{\Psi}\ell_{g,1}^2(p^2\ell_{g,2}^2+\ell_{f,1}^2)c_{\delta}^2\alpha_t+ \frac{2560\nu^2}{\acute{L}_{\mu_g}^2 \mu_{g}^2 }(p^2\ell_{g,2}^2+\ell_{f,1}^2)(1+2 L_{\m{y}}^2)\Upsilon^2\alpha_t
\\\nonumber&\leq -40L_{\m{y}}^2\Gamma^2 \alpha_t+\left(5M_f^2+ \frac{2560\nu^2}{\acute{L}_{\mu_g}^2 \mu_{g}^2 }(p^2\ell_{g,2}^2+\ell_{f,1}^2)(1+2 L_{\m{y}}^2)\Upsilon^2\right)\alpha_t
\\&\leq - \left(5M_f^2+ \frac{2560\nu^2}{\acute{L}_{\mu_g}^2 \mu_{g}^2 }(p^2\ell_{g,2}^2+\ell_{f,1}^2)(1+2 L_{\m{y}}^2)\Upsilon^2\right) \alpha_t,
\end{align}
where the first inequality follows from \eqref{eqn:fv2};
the second equality follows from $c_{\beta} =\frac{80L_{\m{y}}^2 }{L_{\mu_g} }\Gamma$ and $c_{\delta}=\frac{160\nu^2}{\acute{L}_{\mu_g} \mu_{g}^2 }(1+2 L_{\m{y}}^2)\Upsilon$ in \eqref{eqn:tt2}; the second inequality follows from
$\Psi\geq \frac{576}{M_f^2}\ell_{g,1}^2(p^2\ell_{g,2}^2+\ell_{f,1}^2)c_{\delta}^2$ in \eqref{hj}; the last inequality follows from $\Gamma^2= \frac{1}{20L_{\m{y}}^2}\left(5M_f^2+ \frac{2560\nu^2}{\acute{L}_{\mu_g}^2 \mu_{g}^2 }(p^2\ell_{g,2}^2+\ell_{f,1}^2)(1+2 L_{\m{y}}^2)\Upsilon^2\right)$ in \eqref{hj}.
\\
Thus, from \eqref{eqn:p} and \eqref{eqn:o}, we get
\begin{align}\label{eq1}
  \eqref{A11}\leq \mathcal{O}\left(V_{T}\right).
\end{align}
 \\
 \textbf{Bounding  \eqref{A12} }.
 \\
From \eqref{eqn:fv21}, we also have
 \begin{align*}
 D(\alpha_t)&=4 \alpha_t-2L_f \alpha_t^2-5\alpha_t\leq 0.
 \end{align*}
From Eq. \eqref{eqn:fv21}, $\beta_t=c_{\beta}\alpha_t$ in Eq. \eqref{eqn:abc:1}, we obtain
 \begin{align*}
F(\alpha_t,\beta_t)&=
-\frac{c_{\gamma}}{\Phi}\alpha_t+\frac{1}{\Phi}\left(\frac{1}{6L_f}+48\ell_{g,1}^2c_{\beta}^2 \right)\alpha_t+\frac{2\Gamma}{ L_{\mu_g}}c_{\beta}\alpha_t
\\&+144(1-\lambda_{t+1})^2 (\ell_{g,2}^2p^2+\ell_{f,1}^2)\frac{\beta_t^2}{\Psi\alpha_t}
+144(1-\eta_{t+1})^2 (\ell_{g,2}^2p^2+\ell_{f,1}^2) \frac{\beta_t^2}{\Omega\alpha_t}
\\&\leq   -\frac{c_{\gamma}}{\Phi}\alpha_t+\frac{1}{\Phi}\left(\frac{1}{6L_f}+48\ell_{g,1}^2(\frac{80L_{\m{y}}^2 }{L_{\mu_g} }\Gamma)^2 \right)\alpha_t
+160\frac{L_{\m{y}}^2 }{L_{\mu_g}^2 }\Gamma^2\alpha_t
\\&+144 (\ell_{g,2}^2p^2+\ell_{f,1}^2)(\frac{1}{\Psi}+\frac{1}{\Omega})c_{\beta}^2\alpha_t
\\&\leq   -\frac{c_{\gamma}}{\Phi}\alpha_t+163\frac{L_{\m{y}}^2 }{L_{\mu_g}^2 }\Gamma^2\alpha_t
\\&= -\frac{L_{\m{y}}^2 }{L_{\mu_g}^2 }\Gamma^2\alpha_t,
 \end{align*}
 where the first inequality follows from $c_{\beta} =\frac{80L_{\m{y}}^2 }{L_{\mu_g} }\Gamma$; the second inequality follows from $\Phi\geq L_{\mu_g}^2\left(\frac{1}{6L_f}+48\ell_{g,1}^2(\frac{80L_{\m{y}}^2 }{L_{\mu_g} }\Gamma)^2 \right)/L_{\m{y}}^2\Gamma^2$ and
 $\Omega,\Psi\geq  144L_{\mu_g}^2(\ell_{g,2}^2p^2+\ell_{f,1}^2)c_{\beta}^2/L_{\m{y}}^2\Gamma^2$ in \eqref{hj}; and the last equality is by $c_{\gamma}=164\frac{L_{\m{y}}^2 }{L_{\mu_g}^2 }\Gamma^2\Phi$ in \eqref{eqn:tt2}.

From $\delta_t =c_{\delta}\alpha_t$, where $c_{\delta}=\frac{160\nu^2}{\acute{L}_{\mu_g} \mu_{g}^2 }(1+2 L_{\m{y}}^2)\Upsilon$ in \eqref{eqn:tt2}, we obtain
  \begin{align*}
I(\alpha_t,\delta_t)&=-\frac{c_{\lambda}}{\Psi}\alpha_t+\frac{1}{\Psi}\left(\frac{1}{6L_f}+72\ell_{g,1}^2c_{\delta}^2  \right)\alpha_t+\frac{4\Upsilon}{ \acute{L}_{\mu_g}}\delta_t
  +72\ell_{g,1}^2(1-\eta_{t+1})^2\frac{\delta_t^2}{\Omega\alpha_t}
\\&\leq -\frac{c_{\lambda}}{\Psi}\alpha_t+\frac{1}{\Psi}\left(\frac{1}{6L_f}+72\ell_{g,1}^2c_{\delta}^2  \right)\alpha_t+\frac{4\Upsilon}{ \acute{L}_{\mu_g}}c_{\delta}\alpha_t
  +72\ell_{g,1}^2\frac{c_{\delta}^2\alpha_t}{\Omega}
\\&\leq -\frac{c_{\lambda}}{\Psi}\alpha_t+\frac{6\Upsilon}{ \acute{L}_{\mu_g}}c_{\delta}\alpha_t
\\&= -\frac{\Upsilon}{ \acute{L}_{\mu_g}}c_{\delta}\alpha_t ,
 \end{align*}
 where the second inequality follows from $\Psi \geq \acute{L}_{\mu_g}\left(\frac{1}{6L_f}+72\ell_{g,1}^2c_{\delta}^2  \right)/ \Upsilon c_{\delta}$ and $\Omega\geq  \frac{72\ell_{g,1}^2 \acute{L}_{\mu_g}c_{\delta}}{\Upsilon}$ in \eqref{hj}; the last equality follows from $c_{\lambda}=7\Psi\frac{\Upsilon c_{\delta}}{ \acute{L}_{\mu_g}}$ in \eqref{eqn:tt2}.

 Thus, we get
 \begin{align}\label{eq2}
  \eqref{A12}\leq 0.
\end{align}
 \textbf{Bounding  \eqref{A13} }.
 \\
 From $\beta_t=c_{\beta}\alpha_t$ in \eqref{eqn:abc:1} and Eq. \eqref{eqn:fv3}, we have
 \begin{align*}
\nonumber  L(\alpha_t,\beta_t)&= -\frac{2\Gamma \beta_t}{\mu_g+\ell_{g,1}}+\Gamma\beta_t^2+48(1-\gamma_{t+1})^2 \ell_{g,1}^2 \frac{\beta_t^2}{\Phi\alpha_t}
\\&+144(1-\lambda_{t+1})^2 (\ell_{g,2}^2p^2+\ell_{f,1}^2) \frac{\beta_t^2}{\Psi\alpha_t}+144(1-\eta_{t+1})^2 (\ell_{g,2}^2p^2+\ell_{f,1}^2)  \frac{\beta_t^2}{\Omega\alpha_t}
\\\nonumber&\leq -\frac{2\Gamma c_{\beta}\alpha_t}{\mu_g+\ell_{g,1}}+\Gamma c_{\beta}^2\alpha_t^2+48 \ell_{g,1}^2  c_{\beta}^2\frac{\alpha_t}{\Phi}
+144 (\ell_{g,2}^2p^2+\ell_{f,1}^2) (\frac{1}{\Psi}+\frac{1}{\Omega})c_{\beta}^2\alpha_t
\\&\leq  -\frac{2\Gamma c_{\beta}\alpha_t}{\mu_g+\ell_{g,1}}+\Gamma c_{\beta}^2\alpha_t^2+\frac{3\Gamma  c_{\beta}\alpha_t}{4(\mu_g+\ell_{g,1})}
\\&\leq -\frac{\Gamma c_{\beta}\alpha_t}{4(\mu_g+\ell_{g,1})},
 \end{align*}
where the second inequality is by $\Phi\geq 192 \ell_{g,1}^2\frac{ (\mu_g+\ell_{g,1})}{\Gamma}c_{\beta}$, and $\Omega,\Psi\geq 576 (\ell_{g,2}^2p^2+\ell_{f,1}^2)\frac{ (\mu_g+\ell_{g,1})}{\Gamma}c_{\beta}$ in  \eqref{hj}; the last inequality follows from $\alpha_t \leq {1}/{c_{\beta}(\mu_g+\ell_{g,1})}$ in \eqref{eqn:tt2}.
 \\
From $\beta_t=c_{\beta}\alpha_t$, $\delta_t =c_{\delta}\alpha_t$ in \eqref{eqn:abc:1} and Eq. \eqref{eqn:fv3}, we obtain
 \begin{align*}
 N(\beta_t,\delta_t)&= \frac{4}{ L_{\mu_g}\beta_t}\Gamma+\frac{16\nu^2}{ \acute{L}_{\mu_g} \mu_{g}^2\delta_t }\Upsilon
 =
 \frac{4}{ L_{\mu_g}c_{\beta}\alpha_t}\Gamma+\frac{16\nu^2}{ \acute{L}_{\mu_g} \mu_{g}^2c_{\delta}\alpha_t }\Upsilon.
 \end{align*}
 Thus, we get
 \begin{align}\label{eq3}
\nonumber  \eqref{A13}&= \sum_{t=1}^{T}L(\alpha_t,\beta_t)\mb{E}\|{\nabla}_{\m{y}} g_{t}(\m{x}_t,\m{y}_t)\|^2
+ \sum_{t=2}^{T} N(\beta_t,\delta_t)\sup_{\m{x}\in \mc{X}} \|\m{y}^*_{t-1}(\m{x}) - \m{y}^*_{t}(\m{x})\|^2
  \\&\leq \mathcal{O}\left(\frac{H_{2,T}}{\alpha_T}\right).
\end{align}
 \textbf{Bounding  \eqref{A14} }.
 \\
 From $\eta_{t+1}=c_{\eta}\alpha_t^2$, $\gamma_{t+1}=c_{\gamma}\alpha_t^2$, $\lambda_{t+1}=c_{\lambda}\alpha_t^2$ in Eq. \eqref{eqn:abc:1}, we obtain
 \begin{align}\label{eq4}
 \nonumber \eqref{A14}&=\frac{\sigma_{g_{\m{y}}}^2}{\bar{b}}\frac{2}{\Phi}\sum_{t=1}^{T}\frac{\gamma_{t+1}^2}{\alpha_t} +\frac{4}{\Psi}\left(\frac{\sigma_{g_{\m{y}\m{y}}}^2}{\bar{b}}p^2+\frac{\sigma_{f_{\m{y}}}^2}{{b}}\right)\sum_{t=1}^T\frac{\lambda_{t+1}^2}{\alpha_t}
+\frac{4}{\Omega}\left(\frac{\sigma_{g_{\m{x}\m{y}}}^2}{\bar{b}}p^2+\frac{\sigma_{f_{\m{x}}}^2}{{b}}\right)\sum_{t=1}^T\frac{\eta_{t+1}^2}{\alpha_t}
  \\&\leq \mathcal{O}\left((\frac{\sigma_{g_{\m{y}}}^2}{\bar{b}}+\frac{\sigma_{g_{\m{y}\m{y}}}^2}{\bar{b}}+\frac{\sigma_{f_{\m{y}}}^2}{{b}}+\frac{\sigma_{g_{\m{x}\m{y}}}^2}{\bar{b}}
  +\frac{\sigma_{f_{\m{x}}}^2}{{b}})\sum_{t=1}^{T}\alpha_t^3\right).
\end{align}
 \textbf{Bounding  \eqref{A15} }.
 \\
 We also have
 \begin{align}\label{eq5}
\nonumber  \eqref{A15}&=  \frac{6}{\Phi\alpha_T}G_{\m{y},T}
+   \frac{12p^2}{\Omega\alpha_T}G_{\m{x}\m{y},T}
+  \frac{12p^2}{\Psi\alpha_T} G_{\m{y}\m{y},T}
+
\frac{12\ell_{f,1}^2}{\Psi\alpha_T}D_{\m{y},T}+
\frac{12\ell_{f,1}^2}{\Omega\alpha_T}D_{\m{x},T}
  \\&\leq \mathcal{O}\left(\frac{1}{ \alpha_T} (G_{\m{y},T}+G_{\m{x}\m{y},T}+G_{\m{y}\m{y},T}+D_{\m{y},T}+D_{\m{x},T})\right).
\end{align}
From Eq. \eqref{eqn:abc:1}, we have $b=\bar{b}=1$. Moreover, by \eqref{si1}, $\sigma^2=\sigma_{g_{\m{y}}}^2+\sigma_{g_{\m{y}\m{y}}}^2
+\sigma_{f_{\m{y}}}^2+\sigma_{g_{\m{x}\m{y}}}^2
  +\sigma_{f_{\m{x}}}^2$. From \eqref{eq:gd}, we also have
\begin{align*}
&G_T=G_{\m{y},T}+G_{\m{x}\m{y},T}+G_{\m{y}\m{y},T},
\\&D_T=D_{\m{y},T}+D_{\m{x},T}.
\end{align*}
Then, by inequalities \eqref{eqn:r}, \eqref{eq1}, \eqref{eq2}, \eqref{eq3}, \eqref{eq4}, \eqref{eq5}, we have
\begin{align}\label{eqn:p5}
\nonumber &\quad\sum_{t=1}^T \frac{\alpha_t}{2} \mb{E}\norm{ \mc{P}_{\mc{X},\alpha_t}\left(\m{x}_t;\nabla f_{t} (\m{x}_{t}, \m{y}_{t}^*(\m{x}_{t}))\right) }^2 +\Lambda
\\& \leq    \mathcal{O}\left(V_{T}+\frac{H_{2,T}}{\alpha_T}+\frac{\sigma^2}{b} \sum_{t=1}^T\alpha_t^3+\frac{G_T}{\alpha_T}
+\frac{D_T}{\alpha_T}\right).
\end{align}
From the definition of $\Lambda$ in \eqref{lmb}, we have
\begin{align}\label{r4}
\nonumber-\Lambda&= \Gamma\sum_{t=1}^{T}\left(\mb{E}[\theta_t^{\m{y}}]-\mb{E}[\theta_{t+1}^{\m{y}}]\right)
+\Upsilon\sum_{t=1}^T\left(\mb{E}[\theta_t^{\m{v}}]-\mb{E}[\theta_{t+1}^{\m{v}}]\right)+\frac{1}{\Phi}
\sum_{t=1}^T \left(\frac{\mb{E}\|e_{t}^{g}\|^2}{\alpha_{t-1}}-\frac{\mb{E}\|e_{t+1}^{g}\|^2}{\alpha_t}\right)
\\\nonumber&+ \frac{1}{\Psi}\sum_{t=1}^T\left(\frac{\mb{E}\|{e}_{t}^{\m{v}}\|^2}{\alpha_{t-1}}-\frac{\mb{E}\|{e}_{t+1}^{\m{v}}\|^2}{\alpha_{t}}\right)
+\frac{1}{\Omega}\sum_{t=1}^{T}\left(\frac{\mb{E}\|{e}_{t}^{f}\|^2}{\alpha_{t-1}}-\frac{\mb{E}\|{e}_{t+1}^{f}\|^2}{\alpha_t}\right)
\\&\leq  \Gamma \theta_1^{\m{y}}
+\Upsilon\theta_1^{\m{v}}+\frac{\sigma_{g_{\m{y}}}^2}{\Phi\alpha_0}+\frac{\sigma_{g_{\m{y}\m{y}}}^2+\sigma_{f_{\m{y}}}^2}{\Psi\alpha_0}
+\frac{\sigma_{g_{\m{x}\m{y}}}^2+\sigma_{f_{\m{x}}}^2}{\Omega\alpha_0}.
\end{align}
Using \eqref{r4}, we get
\begin{align*}
\nonumber &\quad \sum_{t=1}^T \frac{\alpha_t}{2} \mb{E}\norm{ \mc{P}_{\mc{X},\alpha_t}\left(\m{x}_t;\nabla f_{t} (\m{x}_{t}, \m{y}_{t}^*(\m{x}_{t}))\right) }^2
\\& \leq   \mathcal{O}\left(V_{T}+\frac{H_{2,T}}{\alpha_T}+\frac{\sigma^2 }{b}\sum_{t=1}^{T}\alpha_t^3+\frac{G_T}{\alpha_T}
+\frac{D_T}{\alpha_T}-\Lambda\right)
\\& \leq   \mathcal{O}\left(V_{T}+ \theta_1^{\m{y}}
+\theta_1^{\m{v}}+\frac{\sigma^2  }{b}\sum_{t=1}^{T}\alpha_t^3+\frac{H_{2,T}}{\alpha_T}+\frac{G_T}{\alpha_T}
+\frac{D_T}{\alpha_T}+\frac{\sigma^2 }{\alpha_0}\right).
\end{align*}
Since $\alpha_t={1}/{(c+t)^{1/3}}$ in Eq. \eqref{eqn:abc:1}, we get
\begin{align*}
 \sum_{t=1}^{T}\alpha_t^3=\sum_{t=1}^{T} \frac{1}{c+t}\leq \sum_{t=1}^{T}\frac{1}{1+t}\leq \log (T+1),
\end{align*}
which, combined with the fact that $\alpha_t$ decreases with respect to $t$ and by multiplying both sides by ${2}/{\alpha_T}$, results in
Thus, we have
\begin{align*}
\nonumber \textnormal{BL-Reg}_{T}=
&\sum_{t=1}^T \mb{E} \norm{ \mc{P}_{\mc{X},\alpha_t}\left(\m{x}_t;\nabla f_{t} (\m{x}_{t}, \m{y}_{t}^*(\m{x}_{t}))\right) }^2
\\& \leq    \mc{O}\Big(\frac{1}{\alpha_T}( V_{T}+\|\m{y}_{1}-\m{y}^*_{1}(\m{x}_{1})\|^2+\|\m{v}_{1}-\m{v}^*_{1}(\m{x}_{1})\|^2+\sigma^2 \log (T+1)+\frac{\sigma^2 }{\alpha_0})
\\&+\frac{1}{\alpha_T^2}(   H_{2,T}+G_T+D_T)\Big).
\end{align*}
This completes the proof.
\end{proof}
\section{Proof of Regret Bounds for Zeroth Order SOGD (ZO-SOGD)}\label{sec:app:zeroth:p}
 \textbf{Proof Roadmap}. 
We provide Lemma \ref{lm:g1}, which quantifies the error between the approximated direction of the
momentum-based gradient estimator, $\hat{\m{d}}_{t}^{\m{y}}$ and the true direction, $\nabla_{\m{y}} g_{t,\boldsymbol{\rho}}(\m{x}_t,\m{y}_t)$, at each iteration.
Lemma \ref{lm:nn2} assesses the convergence of the iterative solutions $\{\m{y}_{t} \}_{t=1}^T$, specifically the gap \(\mb{E}\big[\|\m{y}_{t+1} - \hat{\m{y}}_{t}^*(\m{x}_{t})\|^2\big]\), while accounting for the error introduced in Lemma \ref{lm:g1}.
To establish Lemma \ref{eed}, which quantifies the error between the approximated direction of the
momentum-based gradient estimator, $\hat{\m{d}}_{t}^{\m{v}}$, and the true direction, $\nabla_\m{y} f_{t,\boldsymbol{\rho}}(\m{x}_t,\m{y}_t)+ \nabla_\m{y}^2 g_{t,\boldsymbol{\rho}}\left(\m{x}_t,\m{y}_t \right)\m{v}_t$, we first present Lemma \ref{lm:g1q}. This lemma quantifies the error between $\hat{\m{d}}_{t}^{\m{v}}$ and ${\nabla}_{\m{y}} f_{t,\boldsymbol{\rho}}(\m{x}_{t},\m{y}_{t}) +\frac{1}{2\rho_{\m{v}}}({\nabla}_{\m{y}}g_{t,\boldsymbol{\rho}}(\m{x}_{t},\m{y}_{t}+\rho_{\m{v}}\m{v}_{t})
- {\nabla}_{\m{y}}g_{t,\boldsymbol{\rho}}(\m{x}_{t},\m{y}_{t}-\rho_{\m{v}}\m{v}_{t}))$.
Then, Lemma \ref{lm:18} captures the error of the system solution to Problem \eqref{eqn:oboz}, i.e., the gap \(\mb{E}\big[\|\m{v}_{t+1} - \hat{\m{v}}^*_{t}(\m{x}_{t})\|^2\big]\), based on these errors.
To establish Lemma \ref{lem:non:lipz}, which quantifies the error between 
the approximated direction of
the momentum-based hypergradient estimator,
$\hat{\m{d}}_{t}^{\m{x}}$, and the
true direction, \(\nabla_\m{x} f_{t,\boldsymbol{\rho}}(\m{x}_t,\m{y}_t )+ \nabla_{\m{x}\m{y}}^2 g_{t,\boldsymbol{\rho}}\left(\m{x}_t,\m{y}_t \right)\m{v}_t\), we introduce  Lemma \ref{lm:s}. This lemma quantifies the error between \(\hat{\m{d}}_{t}^{\m{x}}\) and $\nabla_\m{x} f_{t,\boldsymbol{\rho}}(\m{x}_t,\m{y}_t )+  \frac{1}{2\rho_{\m{v}}}({\nabla}_{\m{x}}g_{t,\boldsymbol{\rho}}(\m{x}_t,\m{y}_t+\rho_{\m{v}}\m{v}_t)- {\nabla}_{\m{x}}g_{t,\boldsymbol{\rho}}(\m{x}_t,\m{y}_t-\rho_{\m{v}}\m{v}_t))$. Finally,
Lemma \ref{lm:projec:z2} bounds the projection mapping based on these errors. By combining these lemmas and properly setting the parameters, we achieve the desired result.

\subsection{Auxiliary Lemmas for Proof of Theorem~\ref{thm:nonc2:cons:zeroth}}\label{sec:app:zeroth}
\begin{lemma}\label{lm:up:bound}
\cite[Lemma A.1.]{allen2018neon2}
Suppose Assumption \ref{assu:f:a4} holds. Then, for any $\m{x}, \m{v} \in \mc{X}$, we have:
\begin{equation*}
  \norm{\nabla g_t(\m{x}+\m{v},\m{y}+\m{v})-\nabla g_t(\m{x},\m{y})-\nabla^2g_t(\m{x},\m{y})\m{v}}\leq \ell_{g,2}\norm{\m{v}}^2.
\end{equation*}
\end{lemma}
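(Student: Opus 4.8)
The plan is to prove this as a first-order Taylor expansion of the gradient with integral remainder, where the remainder is controlled by the Lipschitz continuity of the Hessian granted by Assumption~\ref{assu:f:a4}. Write $\m{z} := [\m{x};\m{y}]$ and let $\m{w} := [\m{v};\m{v}]$ denote the joint perturbation, so that $\nabla g_t(\m{x}+\m{v},\m{y}+\m{v}) = \nabla g_t(\m{z}+\m{w})$ and $\nabla^2 g_t(\m{x},\m{y})\m{v}$ is identified with $\nabla^2 g_t(\m{z})\m{w}$. First I would introduce the auxiliary path $\phi(\tau) := \nabla g_t(\m{z}+\tau\m{w})$ for $\tau\in[0,1]$; by Assumption~\ref{assu:g} the function $g_t$ is twice continuously differentiable, so $\phi$ is continuously differentiable with $\phi'(\tau) = \nabla^2 g_t(\m{z}+\tau\m{w})\,\m{w}$.

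By the fundamental theorem of calculus applied to $\phi$, I obtain
\[
\nabla g_t(\m{z}+\m{w}) - \nabla g_t(\m{z}) = \int_0^1 \nabla^2 g_t(\m{z}+\tau\m{w})\,\m{w}\, d\tau .
\]
Subtracting $\nabla^2 g_t(\m{z})\m{w} = \int_0^1 \nabla^2 g_t(\m{z})\m{w}\, d\tau$ from both sides isolates the remainder as an integral of Hessian differences,
\[
\nabla g_t(\m{z}+\m{w}) - \nabla g_t(\m{z}) - \nabla^2 g_t(\m{z})\m{w} = \int_0^1 \bigl(\nabla^2 g_t(\m{z}+\tau\m{w}) - \nabla^2 g_t(\m{z})\bigr)\m{w}\, d\tau .
\]

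The final step is to take Euclidean norms, move the norm inside the integral, and bound the integrand. Using submultiplicativity together with Assumption~\ref{assu:f:a4}, one has $\norm{\bigl(\nabla^2 g_t(\m{z}+\tau\m{w})-\nabla^2 g_t(\m{z})\bigr)\m{w}} \le \ell_{g,2}\,\norm{\tau\m{w}}\,\norm{\m{w}} = \ell_{g,2}\,\tau\,\norm{\m{w}}^2$, whence $\int_0^1 \ell_{g,2}\,\tau\,\norm{\m{w}}^2\, d\tau = \tfrac{\ell_{g,2}}{2}\norm{\m{w}}^2$. Since the perturbation acts identically on both blocks, $\norm{\m{w}}^2 = \norm{\m{v}}^2 + \norm{\m{v}}^2 = 2\norm{\m{v}}^2$, which absorbs the factor $\tfrac12$ and yields exactly $\ell_{g,2}\norm{\m{v}}^2$. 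There is no serious obstacle in this argument; the only point requiring care is the bookkeeping of the two-block perturbation norm, which is precisely what converts the textbook constant $\ell_{g,2}/2$ into the stated $\ell_{g,2}$.
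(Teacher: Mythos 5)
Your proof is correct. The paper does not actually prove this lemma at all --- it imports it by citation from Allen-Zhu and Li's Neon2 paper (their Lemma A.1) --- and your argument, the fundamental theorem of calculus along the segment $\tau\mapsto\m{z}+\tau\m{w}$ followed by the Hessian-Lipschitz bound of Assumption~\ref{assu:f:a4} on the integrand, is precisely the standard proof underlying that citation. Your two-block bookkeeping is also the right way to reconcile the stated constant with the textbook one: the classical bound is $\tfrac{\ell_{g,2}}{2}\norm{\m{w}}^2$ for the joint perturbation $\m{w}=[\m{v};\m{v}]$, and $\norm{\m{w}}^2=2\norm{\m{v}}^2$ yields exactly $\ell_{g,2}\norm{\m{v}}^2$; note that under the alternative reading in which the perturbation sits in only one block (which is how the lemma is actually invoked inside Lemma~\ref{ee}, where the perturbation is $(\m{0},\rho_{\m{v}}\m{v}_t)$), the classical bound $\tfrac{\ell_{g,2}}{2}\norm{\m{v}}^2$ is still dominated by the stated $\ell_{g,2}\norm{\m{v}}^2$, so either interpretation of the lemma's loose notation is covered. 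Two minor points that do not affect validity: Assumption~\ref{assu:f:a4} is stated for the stochastic Hessians $\nabla^2 g_t(\cdot\,;\zeta_t)$, so one should pass to the deterministic Hessian via Jensen's inequality before applying it; and the twice continuous differentiability you borrow from Assumption~\ref{assu:g} is not among the lemma's stated hypotheses, although it holds throughout the paper (and in any case follows from the existence and Lipschitz continuity of $\nabla^2 g_t$).
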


\begin{lemma}\label{lem:lips2}
Suppose that Assumptions \ref{assu:g} and \ref{assu:f} hold for all $\m{x}, \m{x}' \in \mc{X}$,  and $t \in [T]$, and that $\m{d}_{t,\boldsymbol{\rho}}^{\m{x}}$ and $\m{d}_{t,\boldsymbol{\rho}}^{\m{v}}$  are defined in \eqref{direction2}. Then, we have
\begin{subequations}\label{eqn:newlips2}
\begin{align}
	\label{eqn:lip:cons12}&\|\m{d}_{t,\boldsymbol{\rho}}^\m{x} - \nabla f_{t,\boldsymbol{\rho}}(\m{x},\hat{\m{y}}^*_t(\m{x}))\|^2
	\leq  M_f^2 \left( \left\|\m{y}-\hat{\m{y}}^*_t(\m{x})\right\|^2+\left\|\m{v}-\hat{\m{v}}^*_t(\m{x})\right\|^2\right),
\\\label{eqn:lip:cons22}
&\left\|\m{d}_{t,\boldsymbol{\rho}}^{\m{v}} \right\|^2\leq M_{\m{v}}^2\left( \left\|\m{y}-\hat{\m{y}}^*_t(\m{x})\right\|^2+\left\|\m{v}-\hat{\m{v}}^*_t(\m{x})\right\|^2\right),\\
\label{eqn:lip:cons32}
	&\left\|\nabla f_{t,\boldsymbol{\rho}}(\m{x},\hat{\m{y}}^*_t(\m{x}))- \nabla f_{t,\boldsymbol{\rho}}(\m{x}',\hat{\m{y}}^*_t(\m{x}'))\right\|
	\leq  L_f \left\|\m{x}-\m{x}'\right\|,\\\label{eqn:lip:cons42}
&	\left\|\hat{\m{y}}^*_t(\m{x})-\hat{\m{y}}^*_t(\m{x}')\right\|\leq  L_{\m{y}} \left\|\m{x}-\m{x}'\right\|,\\\label{eqn:lip:cons52}
& \left\|\hat{\m{v}}^*_t(\m{x})-\hat{\m{v}}^*_t(\m{x}')\right\|\leq  L_{\m{v}} \left\|\m{x}-\m{x}'\right\|.
\end{align}
\end{subequations}
Here, $\hat{\m{v}}^*_t(\m{x})$, $f_{t,\boldsymbol{\rho}}$ and $\hat{\m{y}}^*_t(\m{x})$ are defined in \eqref{eq:vhz}, \eqref{eqn:oboz}, and \eqref{eqn:yz}, respectively. Moreover, the constants $M_f$, $M_{\m{v}}$, and $(L_{\m{y}}, L_{\m{v}}, L_f)$ are defined as in \eqref{mf}, \eqref{mv}, and \eqref{lyv}, respectively.
\end{lemma}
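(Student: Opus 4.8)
The plan is to reduce Lemma~\ref{lem:lips2} to the already-established first-order result Lemma~\ref{lem:lips} by exploiting the fact that smoothing preserves the structural constants of the inner and outer objectives. Observe that the directions $\m{d}_{t,\boldsymbol{\rho}}^{\m{x}}$ and $\m{d}_{t,\boldsymbol{\rho}}^{\m{v}}$ in \eqref{direction2}, the smoothed solution map $\hat{\m{y}}^*_t(\m{x})$ in \eqref{eqn:yz}, and the smoothed system solution $\hat{\m{v}}^*_t(\m{x})$ in \eqref{eq:vhz} are defined by exactly the same algebraic expressions as their first-order counterparts in \eqref{eqn:system} and \eqref{eq:vss}, with $(f_t, g_t, \m{y}^*_t, \m{v}^*_t)$ replaced by $(f_{t,\boldsymbol{\rho}}, g_{t,\boldsymbol{\rho}}, \hat{\m{y}}^*_t, \hat{\m{v}}^*_t)$. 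Hence, once the smoothed pair $(f_{t,\boldsymbol{\rho}}, g_{t,\boldsymbol{\rho}})$ is shown to satisfy Assumptions~\ref{assu:g} and \ref{assu:f} with the \emph{same} constants $\mu_g, \ell_{f,0}, \ell_{f,1}, \ell_{g,1}, \ell_{g,2}$, each of the five inequalities \eqref{eqn:lip:cons12}--\eqref{eqn:lip:cons52} follows by repeating verbatim the corresponding argument in the proof of Lemma~\ref{lem:lips}.

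The first, and essentially only new, step is a smoothing-preservation claim. Writing $g_{t,\boldsymbol{\rho}}(\m{x},\m{y}) = \mathbb{E}_{(\m{s},\m{r})}[g_t(\m{x}+\rho_{\m{s}}\m{s}, \m{y}+\rho_{\m{r}}\m{r})]$ as the deterministic objective averaged against the smoothing distributions of $\m{s}$ and $\m{r}$, I would argue that: (i) $g_{t,\boldsymbol{\rho}}$ remains $\mu_g$-strongly convex in $\m{y}$, since an average of $\mu_g$-strongly convex functions retains the same modulus; and (ii) the gradients and Hessians of the smoothed functions coincide with the smoothed gradients and Hessians, i.e.\ $\nabla g_{t,\boldsymbol{\rho}} = \mathbb{E}[\nabla g_t(\cdot)]$ and $\nabla^2 g_{t,\boldsymbol{\rho}} = \mathbb{E}[\nabla^2 g_t(\cdot)]$, and similarly for $f_{t,\boldsymbol{\rho}}$, by interchanging expectation with differentiation. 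Because expectation is a convex combination, the Lipschitz constants $\ell_{f,0}, \ell_{f,1}, \ell_{g,1}, \ell_{g,2}$ of Assumption~\ref{assu:f} are inherited unchanged by $f_{t,\boldsymbol{\rho}}, g_{t,\boldsymbol{\rho}}$. In particular this yields the smoothed counterpart $\|\hat{\m{v}}^*_t(\m{x})\| = \|(\nabla^2_{\m{y}} g_{t,\boldsymbol{\rho}})^{-1}\nabla_{\m{y}} f_{t,\boldsymbol{\rho}}\| \le \ell_{f,0}/\mu_g$ of \eqref{sdx1}, which is the pivotal uniform bound driving the subsequent estimates.

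With the preservation claim in hand, I would carry out the five bounds in the same order as Lemma~\ref{lem:lips}. For \eqref{eqn:lip:cons12} and \eqref{eqn:lip:cons22} I use the triangle-inequality decompositions of \eqref{ggt} applied to $\m{d}_{t,\boldsymbol{\rho}}^{\m{x}}$ and $\m{d}_{t,\boldsymbol{\rho}}^{\m{v}}$, invoking the smoothed Lipschitz constants together with the optimality condition $\m{d}_{t,\boldsymbol{\rho}}^{\m{v}}(\m{x}, \hat{\m{y}}^*_t(\m{x}), \hat{\m{v}}^*_t(\m{x})) = 0$; these produce the same $M_f, M_{\m{v}}$ as in \eqref{mf} and \eqref{mv}. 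The three Lipschitz-in-$\m{x}$ bounds \eqref{eqn:lip:cons32}--\eqref{eqn:lip:cons52} then follow by applying \cite[Lemma~17]{tarzanagh2024online} to the smoothed objectives, giving the same $L_{\m{y}}, L_{\m{v}}, L_f$ in \eqref{lyv}.

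The main obstacle is confined to the preservation step: one must justify that differentiation commutes with the smoothing expectation so that $\nabla g_{t,\boldsymbol{\rho}} = \mathbb{E}[\nabla g_t]$ and $\nabla^2 g_{t,\boldsymbol{\rho}} = \mathbb{E}[\nabla^2 g_t]$ hold pointwise, and that strong convexity passes to the average. Both are standard consequences of dominated convergence once the uniform boundedness and Lipschitz continuity supplied by Assumptions~\ref{assu:f:a1}--\ref{assu:f:a4} are invoked over the bounded support of the smoothing directions $\m{s}, \m{r}$; the remainder is a mechanical transcription of Lemma~\ref{lem:lips}.
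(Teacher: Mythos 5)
Your proposal is correct and follows essentially the same route as the paper: establish the smoothed analogue $\|\hat{\m{v}}^*_t(\m{x})\| \le \ell_{f,0}/\mu_g$ of \eqref{sdx1}, repeat the triangle-inequality decompositions of Lemma~\ref{lem:lips} with $(f_{t,\boldsymbol{\rho}}, g_{t,\boldsymbol{\rho}}, \hat{\m{y}}^*_t, \hat{\m{v}}^*_t)$ in place of the unsmoothed objects to obtain \eqref{eqn:lip:cons12}--\eqref{eqn:lip:cons22} with the same $M_f, M_{\m{v}}$, and invoke \cite[Lemma 17]{tarzanagh2024online} for \eqref{eqn:lip:cons32}--\eqref{eqn:lip:cons52}. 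The only difference is that you explicitly justify the smoothing-preservation step (strong convexity and Lipschitz constants pass to the average, and differentiation commutes with the smoothing expectation), whereas the paper applies Assumptions~\ref{assu:g} and \ref{assu:f} to the smoothed functions tacitly; your version is, if anything, slightly more complete.
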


\begin{proof}
We first show Eq. \eqref{eqn:lip:cons12}.

Using Assumptions \ref{assu:g} and \ref{assu:f:a1}, we have $\nabla^2_{\m{y}}g_{t,\boldsymbol{\rho}}\left(\m{x},\hat{\m{y}}_t^*(\m{x}) \right)\succeq  \mu_{g}$, and
\begin{align}\label{sdx12}
\|\hat{\m{v}}_t^*(\m{x}) \|= \|\left(\nabla^2_{\m{y}}g_{t,\boldsymbol{\rho}}\left(\m{x},\hat{\m{y}}_t^*(\m{x}) \right)\right)^{-1}  \nabla_\m{y} f_{t,\boldsymbol{\rho}}\left(\m{x},\hat{\m{y}}_t^*(\m{x}) \right) \|\leq \frac{\ell_{f,0}}{ \mu_{g}}.
\end{align}
Observe that we have
\begin{align}\label{ggt2}
\nonumber \|\m{d}_{t,\boldsymbol{\rho}}^\m{x}-\nabla f_{t,\boldsymbol{\rho}}(\m{x},\hat{\m{y}}^*_t(\m{x}))\|
&\leq  \|\nabla_\m{x} {f}_{t,\boldsymbol{\rho}}(\m{x},\m{y})-\nabla_\m{x} {f}_{t,\boldsymbol{\rho}}(\m{x},\hat{\m{y}}_t^*(\m{x})) \|
\\\nonumber&+\|\m{v} \nabla^2_{\m{x}\m{y}} g_{t,\boldsymbol{\rho}} (\m{x}, \m{y}) -\hat{\m{v}}^*_t(\m{x}) \nabla^2_{\m{x}\m{y}}  g_{t,\boldsymbol{\rho}}\left(\m{x}, \hat{\m{y}}_t^*(\m{x})\right)  \|
\\\nonumber&\leq
\|\nabla_\m{x} {f}_{t,\boldsymbol{\rho}}(\m{x},\m{y})-\nabla_\m{x} {f}_{t,\boldsymbol{\rho}}(\m{x},\hat{\m{y}}_t^*(\m{x})) \|
\\\nonumber &+\|\nabla^2_{\m{x}\m{y}} g_{t,\boldsymbol{\rho}} (\m{x}, \m{y})  \| \|\m{v}-\hat{\m{v}}^*_t(\m{x}) \|
\\\nonumber &+\|\hat{\m{v}}^*_t(\m{x}) \| \| \nabla^2_{\m{x}\m{y}} g_{t,\boldsymbol{\rho}} (\m{x}, \m{y}) - \nabla^2_{\m{x}\m{y}} g_{t,\boldsymbol{\rho}} (\m{x}, \hat{\m{y}}_t^*(\m{x}))  \|
\\\nonumber &\leq \left(\ell_{f,1}+\frac{{\ell}_{g,2}\ell_{f,0}}{ \mu_{g}}\right)\|\m{y}- \hat{\m{y}}_t^*(\m{x}) \|
+\ell_{g,1} \|\m{v}-\hat{\m{v}}^*_t(\m{x})\|
\\&\leq M_f^2\left(\|\m{y}- \hat{\m{y}}_t^*(\m{x}) \|+\|\m{v}-\hat{\m{v}}^*_t(\m{x})\|\right),
\end{align}
where $M_f$ is defined as in \eqref{mf};
 the third inequality is by Assumption \ref{assu:f} and the last inequality is by Eq. \eqref{sdx12}.

We now show Eq. \eqref{eqn:lip:cons22}.
\\
Since $\m{d}_{t,\boldsymbol{\rho}}^{\m{v}*}:=\nabla_\m{y} f_{t,\boldsymbol{\rho}}(\m{x},\hat{\m{y}}_t^*(\m{x}) )+ \nabla_\m{y}^2 g_{t,\boldsymbol{\rho}}\left(\m{x},\hat{\m{y}}_t^*(\m{x}) \right)\hat{\m{v}}_t^*(\m{x})=0$, we have
\begin{align*}
\nonumber\|\m{d}_{t,\boldsymbol{\rho}}^{\m{v}}\|
&= \|\m{d}_{t,\boldsymbol{\rho}}^{\m{v}}-\m{d}_{t,\boldsymbol{\rho}}^{\m{v}*}\|
\\\nonumber&=\|\m{v}_t\nabla^2_{\m{y}}g_{t,\boldsymbol{\rho}}(\m{x}, \m{y})+\nabla_\m{y} f_{t,\boldsymbol{\rho}}(\m{x},\m{y})
\\\nonumber &-\left(\hat{\m{v}}^*_t(\m{x})\nabla^2_{\m{y}}g_{t,\boldsymbol{\rho}}\left(\m{x}, \hat{\m{y}}_t^*(\m{x})\right)+\nabla_{\m{y}} f_{t,\boldsymbol{\rho}}(\m{x},\hat{\m{y}}^*_t(\m{x})) \right)\|
\\\nonumber &\leq \|\left(\nabla^2_{\m{y}}g_{t,\boldsymbol{\rho}}(\m{x}, \m{y})- \nabla^2_{\m{y}}g_{t,\boldsymbol{\rho}}(\m{x}, \hat{\m{y}}_t^*(\m{x}))\right)\hat{\m{v}}^*_t(\m{x})\|
\\\nonumber &+\|\nabla^2_{\m{y}}g_{t,\boldsymbol{\rho}}(\m{x},\m{y})\left(\m{v}-\hat{\m{v}}^*_t(\m{x})\right)\|
\\&+\|\nabla_\m{y} f_{t,\boldsymbol{\rho}}(\m{x},\m{y})-\nabla_{\m{y}} f_{t,\boldsymbol{\rho}}(\m{x},\hat{\m{y}}_t^*(\m{x}))\|.
\end{align*}
Then, from Assumption \ref{assu:f} and Eq. \eqref{sdx12}, we have
\begin{align*}
\nonumber \|\m{d}_{t,\boldsymbol{\rho}}^{\m{v}}\|&\leq
{\ell}_{g,2}\|\m{y}-\hat{\m{y}}^*_t(\m{x})\|\|\hat{\m{v}}^*_t(\m{x}) \|
+\ell_{g,1}\|\m{v}-\hat{\m{v}}^*_t(\m{x}) \|+  \ell_{f,1}\|\m{y} -\hat{\m{y}}_t^*(\m{x})\|
\\\nonumber &\leq \left(\frac{{\ell}_{g,2}\ell_{f,0}}{ \mu_{g}}+\ell_{f,1}\right)\|\m{y}-\hat{\m{y}}^*_t(\m{x})\|
+\ell_{g,1}\|\m{v}-\hat{\m{v}}^*_t(\m{x}) \|
\\&\leq M_{\m{v}}\left(\|\m{y} -\hat{\m{y}}_t^*(\m{x})\|+\|\m{v}-\hat{\m{v}}^*_t(\m{x}) \|\right),
\end{align*}
where $M_{\m{v}}$
is defined as in \eqref{mv}.

The proofs of Eqs. \eqref{eqn:lip:cons32}-\eqref{eqn:lip:cons52} follow from \cite[Lemma 17]{tarzanagh2024online} by setting $(L_{\m{y}}, L_{\m{v}}, L_f)$ as in \eqref{lyv}.
%
\end{proof}

\subsection{Perturbation Bounds for OBO Objectives and Their Smoothing Variants}
The following two lemmas are inspired by~\cite{gao2018information}.
\begin{lemma}\label{lm:gg2}
Given $\boldsymbol{\rho} = (\rho_{\m{s}}, \rho_{\mathbf{r}})$ as positive smoothing parameters, let $g_{t,\boldsymbol{\rho}}(\mathbf{x},\mathbf{y})$ and $f_{t,\boldsymbol{\rho}}(\mathbf{x},\mathbf{y})$ be the functions defined by \eqref{eqn:oboz}.
\begin{itemize}
  \item [(a)]
  Suppose Assumption \ref{assu:f:a3} holds. Then, we have
   \begin{align}\label{74}
&\left| g_{t,\boldsymbol{\rho}}(\m{x},\m{y})-  g_t(\m{x},\m{y})\right|\leq \frac{\ell_{g,1}(\rho_{\m{s}}^2+\rho_{\m{r}}^2)}{2}.
\end{align}
  \item [(b)]
  Suppose Assumption \ref{assu:f:a2} holds. Then, we have
  \begin{align}\label{eqn:f}
\left| f_{t,\boldsymbol{\rho}}(\m{x},\m{y})  - f_{t}(\m{x},\m{y})\right|\leq \frac{\ell_{f,1}(\rho_{\m{s}}^2+\rho_{\m{r}}^2)}{2}.
\end{align}
\end{itemize}
\end{lemma}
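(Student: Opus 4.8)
The plan is to express each smoothed objective as an expectation of a perturbed sample value and compare it to its unperturbed counterpart through the quadratic bound implied by Lipschitz continuity of the gradient. I would prove part~(a) in detail; part~(b) is verbatim identical after replacing $g_t$, $\ell_{g,1}$, and $\zeta_t$ by $f_t$, $\ell_{f,1}$, and $\xi_t$, and invoking Assumption~\ref{assu:f:a2} in place of Assumption~\ref{assu:f:a3}.

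First I would recall from \eqref{eqn:yz} that $g_{t,\boldsymbol{\rho}}(\m{x},\m{y}) = \mb{E}_{(\m{s},\m{r},\zeta_t)}\big[g_t(\m{x}+\rho_{\m{s}}\m{s},\,\m{y}+\rho_{\m{r}}\m{r};\zeta_t)\big]$, while by \eqref{eqn:ystar} we have $g_t(\m{x},\m{y}) = \mb{E}_{\zeta_t}[g_t(\m{x},\m{y};\zeta_t)]$. Abbreviating $\m{z}=(\m{x},\m{y})$ and $\boldsymbol{\delta}=(\rho_{\m{s}}\m{s},\rho_{\m{r}}\m{r})$, the difference becomes $g_{t,\boldsymbol{\rho}}(\m{x},\m{y}) - g_t(\m{x},\m{y}) = \mb{E}\big[g_t(\m{z}+\boldsymbol{\delta};\zeta_t) - g_t(\m{z};\zeta_t)\big]$. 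Since Assumption~\ref{assu:f:a3} states that each realization $g_t(\cdot;\zeta_t)$ has $\ell_{g,1}$-Lipschitz gradient, the standard descent-lemma estimate applied to the sample function gives
\[
\big|g_t(\m{z}+\boldsymbol{\delta};\zeta_t) - g_t(\m{z};\zeta_t) - \langle \nabla g_t(\m{z};\zeta_t), \boldsymbol{\delta}\rangle\big| \leq \tfrac{\ell_{g,1}}{2}\,\|\boldsymbol{\delta}\|^2 .
\]

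The decisive step is that the first-order (inner-product) term integrates to zero. Because $\m{s}$ and $\m{r}$ are drawn uniformly from the origin-symmetric unit balls $B_1,B_2$ and independently of $\zeta_t$, we have $\mb{E}[\m{s}]=0$ and $\mb{E}[\m{r}]=0$, so $\mb{E}\big[\langle \nabla g_t(\m{z};\zeta_t), \boldsymbol{\delta}\rangle\big] = \rho_{\m{s}}\langle \mb{E}_{\zeta_t}\nabla_{\m{x}} g_t(\m{z};\zeta_t), \mb{E}[\m{s}]\rangle + \rho_{\m{r}}\langle \mb{E}_{\zeta_t}\nabla_{\m{y}} g_t(\m{z};\zeta_t), \mb{E}[\m{r}]\rangle = 0$. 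Taking expectations in the displayed bound, using $\|\boldsymbol{\delta}\|^2 = \rho_{\m{s}}^2\|\m{s}\|^2 + \rho_{\m{r}}^2\|\m{r}\|^2$, and finally $\|\m{s}\|\le 1$, $\|\m{r}\|\le 1$ on the unit balls, I would conclude
\[
\big|g_{t,\boldsymbol{\rho}}(\m{x},\m{y}) - g_t(\m{x},\m{y})\big| \leq \tfrac{\ell_{g,1}}{2}\,\mb{E}\big[\rho_{\m{s}}^2\|\m{s}\|^2 + \rho_{\m{r}}^2\|\m{r}\|^2\big] \leq \tfrac{\ell_{g,1}(\rho_{\m{s}}^2+\rho_{\m{r}}^2)}{2},
\]
which is exactly \eqref{74}.

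I do not expect a serious obstacle, as this is essentially a smoothing estimate. The only two points demanding care are: (i) confirming that the perturbation directions are origin-symmetric and independent of the data noise $\zeta_t$, which is what makes the first-order term vanish and keeps the bound free of any dimensional factor (in contrast to Gaussian smoothing, where $\mb{E}\|\m{s}\|^2$ would scale with $d_1$); and (ii) applying the quadratic bound to each \emph{sample} realization $g_t(\cdot;\zeta_t)$ before averaging, so that Assumption~\ref{assu:f:a3} is invoked in the exact per-realization form in which it is stated.
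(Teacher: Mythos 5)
Your proposal is correct and follows essentially the same route as the paper's proof: write the difference as an average of perturbed-minus-unperturbed values, insert the first-order term $\langle \nabla g_t, (\rho_{\m{s}}\m{s},\rho_{\m{r}}\m{r})\rangle$ which vanishes by symmetry of the unit balls, apply the quadratic (descent-lemma) bound from the Lipschitz-gradient assumption, and bound the second moments of $\m{s},\m{r}$ by one. The only cosmetic differences are that you phrase the averaging probabilistically (per-realization in $\zeta_t$, then taking expectations) and bound $\mb{E}\|\m{s}\|^2 \le 1$ directly, whereas the paper works with normalized volume integrals and computes the exact moment $\tfrac{d}{d+2}$ before bounding it by one; both yield the identical estimate.
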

\begin{proof}
Let $B_1$ and $B_2$ be the unit
ball in $ \mb{R}^{d_1}$ and $ \mb{R}^{d_2}$, respectively. Let $\mathcal{V}(d_1)$ and $\mathcal{V}(d_2)$ be volume of the unit ball in $ \mb{R}^{d_1}$ and $ \mb{R}^{d_2}$, respectively.  Then, we have
\begin{align*}
 \nonumber  &\quad \left| g_{t,\boldsymbol{\rho}}(\m{x},\m{y})-  g_t(\m{x},\m{y})\right|
 \\\nonumber &=\left|\frac{1}{\mathcal{V}(d_1)\mathcal{V}(d_2)}\int_{{B_1}} \int_{{B_2}}\left( g_t(\m{x}+\rho_{\m{s}}\m{s},\m{y}+\rho_{\m{r}}\m{r})-  g_t(\m{x},\m{y})\right)d\m{s}d\m{r}\right|
 \\\nonumber &=\left|\frac{1}{\mathcal{V}(d_1)\mathcal{V}(d_2)}\int_{{B_1}} \int_{{B_2}}\left( g_t(\m{x}+\rho_{\m{s}}\m{s},\m{y}+\rho_{\m{r}}\m{r})
 -g_t(\m{x},\m{y})
-\left\langle \nabla g_t(\m{x},\m{y}),( \rho_{\m{s}}\m{s},\rho_{\m{r}}\m{r})\right\rangle \right)d\m{s}d\m{r}\right|
.
\end{align*}
Thus, we get
\begin{align*}
&\quad \left| g_{t,\boldsymbol{\rho}}(\m{x},\m{y})-  g_t(\m{x},\m{y})\right|\\
 &\leq \int_{{B_1}}\int_{{B_2}}\left|  g_t(\m{x}+\rho_{\m{s}}\m{s},\m{y}+\rho_{\m{r}}\m{r})
 -g_t(\m{x},\m{y})
 -\langle \nabla g_t(\m{x},\m{y}), ( \rho_{\m{s}}\m{s},\rho_{\m{r}}\m{r})\rangle\right|d\m{s}d\m{r}
  \\\nonumber& \leq \int_{{B_1}}\int_{{B_2}} \frac{\ell_{g,1}}{2}\left(\rho_{\m{s}}^2\|\m{s} \|^2+\rho_{\m{r}}^2\|\m{r} \|^2\right)d\m{s}d\m{r}
 \\\nonumber& = \frac{\ell_{g,1}\rho_{\m{s}}^2}{2}\int_{{B_1}}\|\m{s} \|^2d\m{s}+\frac{\ell_{g,1}\rho_{\m{r}}^2}{2}\int_{{B_2}}\|\m{r} \|^2d\m{r}
 \\\nonumber &=\frac{\ell_{g,1}\rho_{\m{s}}^2}{2}\frac{d_1}{d_1+2}+\frac{\ell_{g,1}\rho_{\m{r}}^2}{2}\frac{d_2}{d_2+2}
 \\\nonumber &\leq\frac{\ell_{g,1}(\rho_{\m{s}}^2+\rho_{\m{r}}^2)}{2},
\end{align*}
where the last equality follows since $\frac{1}{\mathcal{V}(d)}\int_{s\in B}\|s\|^pds=\frac{d}{d+p}$.

The proof of part (b) follows using similar arguments.
\end{proof}

\begin{lemma}\label{lm:gg1}
Given $\boldsymbol{\rho} = (\rho_{\m{s}}, \rho_{\mathbf{r}})$ as positive smoothing parameters, let $g_{t,\boldsymbol{\rho}}(\mathbf{x},\mathbf{y})$ and $f_{t,\boldsymbol{\rho}}(\mathbf{x},\mathbf{y})$ be the functions defined by \eqref{eqn:oboz}.
\begin{itemize}
  \item[(a)] Suppose Assumption \ref{assu:f:a3} holds. Then, we have
   \begin{align}\label{eqn:rh}
&\norm{\nabla g_{t,\boldsymbol{\rho}}(\m{x},\m{y})- \nabla g_t(\m{x},\m{y})}\leq \frac{\ell_{g,1}(\rho_{\m{s}}d_1+\rho_{\m{r}}d_2)}{2}.
\end{align}
  \item [(b)] Suppose Assumption \ref{assu:f:a2} holds. Then, we have
  \begin{align}\label{S1}
\norm{\nabla f_{t}(\m{x},\m{y})- \nabla f_{t,\boldsymbol{\rho}}(\m{x}, \m{y})}\leq \frac{\ell_{f,1}(\rho_{\m{s}}d_1+\rho_{\m{r}}d_2)}{2}.
\end{align}
\end{itemize}
\end{lemma}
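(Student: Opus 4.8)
The plan is to prove part (a) directly and obtain part (b) by the identical argument with $g_t$, $\ell_{g,1}$, and Assumption \ref{assu:f:a3} replaced throughout by $f_t$, $\ell_{f,1}$, and Assumption \ref{assu:f:a2}. The starting point, exactly as in the proof of Lemma \ref{lm:gg2}, is the convolution form of the smoothed objective, $g_{t,\boldsymbol{\rho}}(\m x,\m y)=\frac{1}{\mathcal{V}(d_1)\mathcal{V}(d_2)}\int_{B_1}\int_{B_2}g_t(\m x+\rho_{\m s}\m s,\m y+\rho_{\m r}\m r)\,d\m s\,d\m r$. Since $g_t$ is continuously differentiable with an $\ell_{g,1}$-Lipschitz gradient (Assumption \ref{assu:f:a3}), its gradient is locally bounded, so I may differentiate under the integral sign to get $\nabla g_{t,\boldsymbol{\rho}}(\m x,\m y)=\E_{(\m s,\m r)}\!\left[\nabla g_t(\m x+\rho_{\m s}\m s,\m y+\rho_{\m r}\m r)\right]$, the average of the true gradient over the perturbed points.

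From this representation I would subtract $\nabla g_t(\m x,\m y)$ and pull the norm inside the expectation by Jensen's inequality: $\norm{\nabla g_{t,\boldsymbol{\rho}}(\m x,\m y)-\nabla g_t(\m x,\m y)}\le \E_{(\m s,\m r)}\norm{\nabla g_t(\m x+\rho_{\m s}\m s,\m y+\rho_{\m r}\m r)-\nabla g_t(\m x,\m y)}$. Applying the Lipschitz bound of Assumption \ref{assu:f:a3} to each term gives the upper bound $\ell_{g,1}\,\E_{(\m s,\m r)}\norm{(\rho_{\m s}\m s,\rho_{\m r}\m r)}$, and the subadditivity $\norm{(\rho_{\m s}\m s,\rho_{\m r}\m r)}=\sqrt{\rho_{\m s}^2\norm{\m s}^2+\rho_{\m r}^2\norm{\m r}^2}\le \rho_{\m s}\norm{\m s}+\rho_{\m r}\norm{\m r}$ decouples the two directions, so that the bound becomes $\ell_{g,1}\!\left(\rho_{\m s}\,\E\norm{\m s}+\rho_{\m r}\,\E\norm{\m r}\right)$.

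It then remains to evaluate the first absolute moments over the unit balls. These are the $p=1$ case of the moment identity already used at the end of Lemma \ref{lm:gg2}, namely $\E\norm{\m s}=\frac{1}{\mathcal{V}(d_1)}\int_{B_1}\norm{\m s}\,d\m s=\frac{d_1}{d_1+1}$ and $\E\norm{\m r}=\frac{d_2}{d_2+1}$. Using the elementary inequality $\frac{d}{d+1}\le \frac{d}{2}$, valid for every $d\ge 1$, yields $\norm{\nabla g_{t,\boldsymbol{\rho}}(\m x,\m y)-\nabla g_t(\m x,\m y)}\le \ell_{g,1}\!\left(\tfrac{d_1}{2}\rho_{\m s}+\tfrac{d_2}{2}\rho_{\m r}\right)=\frac{\ell_{g,1}(\rho_{\m s}d_1+\rho_{\m r}d_2)}{2}$, which is \eqref{eqn:rh}; the identical chain with $\ell_{f,1}$ and Assumption \ref{assu:f:a2} gives \eqref{S1}.

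The argument is short, and the only genuine technical point is the interchange of gradient and integral in the first step, which is justified by the $C^1$-smoothness and the Lipschitz (hence locally bounded) gradient guaranteed by Assumptions \ref{assu:f:a3}/\ref{assu:f:a2}; the remaining steps are Jensen's inequality, the Lipschitz estimate, and the ball first-moment $\frac{d}{d+1}$ together with $\frac{d}{d+1}\le\frac{d}{2}$. I note that the same conclusion could instead be reached through the Stokes/divergence-theorem (surface-integral) representation $\nabla_{\m x}g_{t,\boldsymbol{\rho}}=\frac{d_1}{\rho_{\m s}}\E[g_t(\cdot)\,\m v]$ that underlies the finite-difference estimators in \eqref{eqn:nab:x:y:approx}, but that route forces one to cancel the linear term by sphere symmetry and to control higher-order cross terms in the smoothing radii, so the convolution/differentiation-under-the-integral route above is the cleaner match to the stated constants.
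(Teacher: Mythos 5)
Your proof is correct, but it takes a genuinely different route from the paper's. The paper proves part (a) via exactly the spherical, Stokes-type representation you set aside: it writes $\nabla_{\m{x}} g_{t,\boldsymbol{\rho}}(\m{x},\m{y})=\frac{d_1}{S(d_1)\rho_{\m{s}}}\int_{U_{B_1}} g_t(\m{x}+\rho_{\m{s}}\m{s},\m{y})\,\m{s}\,d\m{s}$, uses sphere symmetry ($\int_{U_{B_1}}\m{s}\,d\m{s}=0$ and $\int_{U_{B_1}}\m{s}\m{s}^{\top}d\m{s}=\frac{S(d_1)}{d_1}\m{I}$) to insert the zeroth- and first-order Taylor terms inside the integral at no cost, and then bounds the quadratic remainder $\big|g_t(\m{x}+\rho_{\m{s}}\m{s},\m{y})-g_t(\m{x},\m{y})-\langle\nabla_{\m{x}}g_t(\m{x},\m{y}),\rho_{\m{s}}\m{s}\rangle\big|\le\frac{\ell_{g,1}\rho_{\m{s}}^2}{2}\norm{\m{s}}^2$, yielding $\frac{\rho_{\m{s}}d_1\ell_{g,1}}{2}$ for the $\m{x}$-block and $\frac{\rho_{\m{r}}d_2\ell_{g,1}}{2}$ for the $\m{y}$-block, combined by the triangle inequality. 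Your anticipated difficulty with that route (cross terms in the two smoothing radii) does not materialize in the paper because it treats each block's perturbation separately; the flip side is that this per-block computation quietly suppresses the other block's perturbation that is present in the joint definition \eqref{eqn:oboz}, whereas your convolution argument handles the joint $(\m{s},\m{r})$ smoothing exactly as defined. Your route (differentiate under the integral, Jensen, the Lipschitz-gradient bound, the first moment $\frac{1}{\mathcal{V}(d)}\int_{B}\norm{\m{s}}d\m{s}=\frac{d}{d+1}$, and $\frac{d}{d+1}\le\frac{d}{2}$ for $d\ge 1$) is shorter and in fact produces the strictly sharper, dimension-free intermediate bound $\ell_{g,1}(\rho_{\m{s}}+\rho_{\m{r}})$ before you deliberately weaken it to match the stated constants; its only technical cost, the interchange of gradient and integral, is indeed justified under Assumptions \ref{assu:f:a2}/\ref{assu:f:a3} since the integrand's gradient is continuous and the integration domain is compact. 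What the paper's route buys in exchange is that it works purely with function values plus the Taylor remainder, the same identity underlying the unbiasedness of the ZO estimators \eqref{eqn:nab:x:y:approx}, so it carries over verbatim to settings where one does not wish to differentiate under the integral.
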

\begin{proof}
Let $S(d_1)$ be the surface area of the
unit sphere in $ \mb{R}^{d_1}$. Moreover, let $U_{B_1}$ be the unit sphere.
\begin{align}\label{g5}
\nonumber &\quad\norm{{\nabla}_{\m{x}} g_{t,\boldsymbol{\rho}}(\m{x},\m{y})- \nabla_{\m{x}} g_t(\m{x},\m{y})}
\\\nonumber&= \norm{\frac{1}{S(d_1)}\left(\frac{d_1}{\rho_{\m{s}}}\int_{U_{B_1}}  g_t(\m{x}+\rho_{\m{s}}\m{s},\m{y})\m{s}d\m{s}\right)- \nabla_{\m{x}} g_t(\m{x},\m{y})}
\\\nonumber &= \left\|\frac{1}{S(d_1)}\left(\frac{d_1}{\rho_{\m{s}}}\int_{U_{B_1}}{g_t(\m{x}+\rho_{\m{s}}\m{s},\m{y})\m{s}d\m{s}}
 -\int_{U_{B_1}}{\frac{d_1}{\rho_{\m{s}}}g_t(\m{x},\m{y})\m{s}d\m{s}}
\right.\right. \\\nonumber &\left.\left.-\int_{U_{B_1}}{\frac{d_1}{\rho_{\m{s}}}\langle \nabla_{\m{x}} g_t(\m{x},\m{y}), \rho_{\m{s}}\m{s}\rangle}\m{s}d\m{s}\right)\right\|
 \\\nonumber &\leq \frac{d_1}{S(d_1)\rho_{\m{s}}}\int_{U_{B_1}}\Big|g_t(\m{x}_t+\rho_{\m{s}}\m{s},\m{y}) -g_t(\m{x},\m{y})
   -\langle \nabla_{\m{x}} g_t(\m{x},\m{y}), \rho_{\m{s}}\m{s}\rangle \Big|
   \norm{\m{s}}d\m{s}
   \\\nonumber &\leq \frac{d_1}{S(d_1)\rho_{\m{s}}}\cdot\frac{\ell_{g,1}\rho_{\m{s}}^2}{2}\int_{U_{B_1}}\norm{\m{s}}^3d\m{s}
   \\&= \frac{\rho_{\m{s}}d_1\ell_{g,1}}{2},
\end{align}
where the second equality follows from $\int_{U_{B_1}}\m{s}\m{s}^{\top}d\m{s}=\frac{S(d_1)}{d_1}\m{I}$.

Similarly, let $S(d_2)$ be the surface area of the
unit sphere in $ \mb{R}^{d_2}$. Moreover, let $U_{B_2}$ be the unit sphere.
\begin{align}\label{g6}
\nonumber &\quad \norm{{\nabla}_{\m{y}} g_{t,\boldsymbol{\rho}}(\m{x},\m{y})- \nabla_{\m{y}} g_t(\m{x},\m{y})}
 \\\nonumber &= \norm{\frac{1}{S(d_2)}\left(\frac{d_2}{\rho_{\m{r}}}\int_{U_{B_2}} g_t(\m{x},\m{y}+\rho_{\m{r}}\m{r})\m{r}d\m{r}\right)- \nabla_{\m{y}} g_t(\m{x},\m{y})}
  \\\nonumber&= \left\|\frac{1}{S(d_2)}\left(\frac{d_2}{\rho_{\m{r}}}\int_{U_{B_2}}{g_t(\m{x},\m{y}+\rho_{\m{r}}\m{r})\m{r}d\m{r}}
 -\int_{U_{B_2}}{\frac{d_2}{\rho_{\m{r}}}g_t(\m{x},\m{y})\m{r}d\m{r}}
\right.\right. \\\nonumber &\left.\left.-\int_{U_{B_2}}{\frac{d_2}{\rho_{\m{r}}}\langle \nabla_{\m{y}} g_t(\m{x},\m{y}), \rho_{\m{r}}\m{r}\rangle}\m{r}d\m{r}\right)\right\|
  \\\nonumber&\leq \frac{d_2}{S(d_2)\rho_{\m{r}}}\int_{U_{B_2}}\big|g_t(\m{x}_t,\m{y}+\rho_{\m{r}}\m{r}) -g_t(\m{x},\m{y})
   -\langle \nabla_{\m{y}} g_t(\m{x},\m{y}), \rho_{\m{r}}\m{r}\rangle \big|
   \norm{\m{r}}d\m{r}
   \\\nonumber&\leq \frac{d_2}{S(d_2)\rho_{\m{r}}}\cdot\frac{\ell_{g,1}\rho_{\m{r}}^2}{2}\int_{U_{B_2}}\norm{\m{r}}^3d\m{r}
   \\&= \frac{\rho_{\m{r}}d_2\ell_{g,1}}{2}
,
\end{align}
where the second equality follows from $\int_{U_{B_2}}\m{r}\m{r}^{\top}d\m{r}=\frac{S(d_2)}{d_2}\m{I}$.

Thus, we get
\begin{align*}
 \nonumber  &\quad \norm{{\nabla} g_{t,\boldsymbol{\rho}}(\m{x},\m{y})- \nabla g_t(\m{x},\m{y})}
 \\\nonumber&\leq \norm{{\nabla}_{\m{x}} g_{t,\boldsymbol{\rho}}(\m{x},\m{y})- \nabla_{\m{x}} g_t(\m{x},\m{y})}+\norm{{\nabla}_{\m{y}} g_{t,\boldsymbol{\rho}}(\m{x},\m{y})- \nabla_{\m{y}} g_t(\m{x},\m{y})}
 \\&\leq \frac{\rho_{\m{s}}d_1\ell_{g,1}}{2}+\frac{\rho_{\m{r}}d_2\ell_{g,1}}{2}.
\end{align*}
\\
Finally, by a similar argument as in Part (a), we obtain
\begin{align}\label{ed}
\norm{\nabla_{\m{x}} f_{t,\boldsymbol{\rho}}\left(\mathbf{x},  \m{y}\right)- \nabla_{\m{x}} f_{t}\left(\mathbf{x},  \m{y}\right)}\leq \frac{\rho_{\m{s}}d_1\ell_{f,1}}{2},
\end{align}
and
\begin{align}\label{f3}
\norm{\nabla_{\m{y}} f_{t,\boldsymbol{\rho}}\left(\mathbf{x},  \m{y}\right)- \nabla_{\m{y}} f_{t}\left(\mathbf{x},  \m{y}\right)}\leq \frac{\rho_{\m{r}}d_2\ell_{f,1}}{2},
\end{align}
which implies
\begin{align*}
\norm{\nabla f_{t,\boldsymbol{\rho}}\left(\mathbf{x},  \m{y}\right)- \nabla f_{t}\left(\mathbf{x},  \m{y}\right)}\leq \frac{(\rho_{\m{s}}d_1+\rho_{\m{r}}d_2)\ell_{f,1}}{2}.
\end{align*}
\end{proof}

\begin{lemma}\label{19}
  Suppose Assumption \ref{assu:f:a3} holds.
 Let $\hat{\nabla}_{\m{y}} g_t(\m{x},\m{y};\bar{\mathcal{B}}_t)$ and $\hat{\nabla}_{\m{x}} g_t(\m{x},\m{y};\bar{\mathcal{B}}_t)$ be defined as in \eqref{eqn:gy:estimate} and \eqref{eqn:g:estimate}, respectively. Then, for any $(\m{x},\m{y})\in \mb{R}^{d_1}\times \mb{R}^{d_2}$ and $\rho_{\m{r}},\rho_{\m{s}} \geq 0$, we have
 \begin{subequations}
 \begin{align}\label{eqn:g:y}
 &  \underset{(\m{r},\bar{\mathcal{B}}_t)}{\mathbb{E}}\left[\|\hat{\nabla}_{\m{y}} g_t(\m{x},\m{y};\bar{\mathcal{B}}_t)-\hat{\nabla}_{\m{y}} g_t({\m{x}},\acute{\m{y}};\bar{\mathcal{B}}_t)\|^2\right]
   \leq 3d_2\ell_{g,1}^2\|\m{y}-\acute{\m{y}} \|^2+\frac{3\ell_{g,1}^2d^2_2\rho_{\m{r}}^2}{2},
   \\&  \underset{(\m{s},\bar{\mathcal{B}}_t)}{\mathbb{E}}\left[\|\hat{\nabla}_{\m{x}} g_t(\m{x},\m{y};\bar{\mathcal{B}}_t)-\hat{\nabla}_{\m{x}} g_t(\acute{\m{x}},{\m{y}};\bar{\mathcal{B}}_t)\|^2\right]
   \leq 3d_1\ell_{g,1}^2\|\m{x}-\acute{\m{x}} \|^2+\frac{3\ell_{g,1}^2d^2_1\rho_{\m{s}}^2}{2},
 \end{align}
 \end{subequations}
 for all $\acute{\m{y}}\in \mb{R}^{d_2}$ and $\acute{\m{x}}\in \mb{R}^{d_1}$.
 \end{lemma}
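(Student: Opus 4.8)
The plan is to reduce the batch estimator to a single random direction, then Taylor-expand the two-point finite difference and split the resulting error into a gradient-sensitive part and two smoothing-bias parts, the key being a second-moment identity for the probing direction that converts a $d_2^2$ scaling into $d_2$.

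First I would remove the batch. Since $\hat{\nabla}_{\m{y}} g_t(\m{x},\m{y};\bar{\mathcal{B}}_t)$ in \eqref{eqn:gy:estimate} is an average of $\bar{b}$ i.i.d.\ terms, one per pair $(\m{r}_i,\zeta_{t,i})$, the difference appearing in \eqref{eqn:g:y} is itself an average of i.i.d.\ terms; by Jensen's inequality $\|\bar{b}^{-1}\sum_i \m{D}_i\|^2 \le \bar{b}^{-1}\sum_i \|\m{D}_i\|^2$ and identical distribution, it suffices to bound $\mb{E}_{\m{r},\zeta}\|\m{D}\|^2$ for a single direction $\m{r}$ (taken uniform on the unit sphere $U_{B_2}$ as in the construction of these estimators, so that $\|\m{r}\|=1$ and $\mb{E}[\m{r}\m{r}^\top]=d_2^{-1}\m{I}$, exactly the identity $\int_{U_{B_2}}\m{r}\m{r}^\top d\m{r}=\tfrac{S(d_2)}{d_2}\m{I}$ used in the proof of Lemma~\ref{lm:gg1}) and a single sample $\zeta$, where $\m{D} := \tfrac{d_2}{2\rho_{\m{r}}}\big[(g_t(\m{x},\m{y}+\rho_{\m{r}}\m{r};\zeta)-g_t(\m{x},\m{y}-\rho_{\m{r}}\m{r};\zeta)) - (g_t(\m{x},\acute{\m{y}}+\rho_{\m{r}}\m{r};\zeta)-g_t(\m{x},\acute{\m{y}}-\rho_{\m{r}}\m{r};\zeta))\big]\m{r}$.

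Next I would Taylor-expand each one-dimensional difference. Under Assumption~\ref{assu:f:a3}, a first-order expansion gives $g_t(\m{x},\m{y}\pm\rho_{\m{r}}\m{r};\zeta) = g_t(\m{x},\m{y};\zeta) \pm \rho_{\m{r}}\langle\nabla_{\m{y}}g_t(\m{x},\m{y};\zeta),\m{r}\rangle + R_\pm$ with remainders $|R_\pm|\le \tfrac{1}{2}\ell_{g,1}\rho_{\m{r}}^2\|\m{r}\|^2$, so each two-point difference equals $2\rho_{\m{r}}\langle\nabla_{\m{y}}g_t(\m{x},\m{y};\zeta),\m{r}\rangle + (R_+-R_-)$ with $|R_+-R_-|\le \ell_{g,1}\rho_{\m{r}}^2$. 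Substituting, $\m{D} = d_2\langle\m{w},\m{r}\rangle\m{r} + \m{b}_{\m{y}} - \m{b}_{\acute{\m{y}}}$, where $\m{w}:=\nabla_{\m{y}}g_t(\m{x},\m{y};\zeta)-\nabla_{\m{y}}g_t(\m{x},\acute{\m{y}};\zeta)$ carries the gradient part and $\m{b}_{\m{y}},\m{b}_{\acute{\m{y}}}$ are the smoothing-bias vectors, each bounded by $\|\m{b}\|\le \tfrac{1}{2}d_2\ell_{g,1}\rho_{\m{r}}\|\m{r}\|^3$. Applying $\|\m{a}+\m{b}+\m{c}\|^2\le 3(\|\m{a}\|^2+\|\m{b}\|^2+\|\m{c}\|^2)$ isolates the three contributions and accounts for the factor $3$ in the claimed bound.

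Finally I would bound the three pieces in expectation. For the bias terms, $\|\m{r}\|=1$ gives $3(\|\m{b}_{\m{y}}\|^2+\|\m{b}_{\acute{\m{y}}}\|^2)\le \tfrac{3}{2}\ell_{g,1}^2 d_2^2\rho_{\m{r}}^2$, which matches the additive term. The crux is the gradient part: using independence of $\m{r}$ and $\zeta$ and the spherical second moment, $3\mb{E}\|d_2\langle\m{w},\m{r}\rangle\m{r}\|^2 = 3d_2^2\,\mb{E}_\zeta\big[\m{w}^\top\mb{E}_{\m{r}}[\m{r}\m{r}^\top]\m{w}\big] = 3d_2\,\mb{E}_\zeta\|\m{w}\|^2$, and then Assumption~\ref{assu:f:a3} yields $\|\m{w}\|\le \ell_{g,1}\|\m{y}-\acute{\m{y}}\|$, giving $3d_2\ell_{g,1}^2\|\m{y}-\acute{\m{y}}\|^2$. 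The main obstacle is precisely this dimension reduction: a naive pointwise bound on $\m{D}$ would produce $d_2^2$ in front of $\|\m{y}-\acute{\m{y}}\|^2$, and one must exploit the identity $\mb{E}[\m{r}\m{r}^\top]=d_2^{-1}\m{I}$ to reduce it to $d_2$, whereas the bias terms enjoy no such averaging and hence retain the $d_2^2$ scaling. Part~(b) follows by the identical argument with $(\m{s},\rho_{\m{s}},d_1)$ in place of $(\m{r},\rho_{\m{r}},d_2)$.
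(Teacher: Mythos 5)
Your proof is correct. The paper never spells out an argument for Lemma~\ref{19}---it simply defers to Lemma~5 of \cite{ji2019improved}---and your proposal reconstructs exactly that standard argument: reduce to a single probe by Jensen's inequality, Taylor-expand the two-point difference using the $\ell_{g,1}$-smoothness of $g_t(\cdot\,;\zeta)$, split into one gradient term and two bias terms via $\|\m{a}+\m{b}+\m{c}\|^2\le 3(\|\m{a}\|^2+\|\m{b}\|^2+\|\m{c}\|^2)$, and exploit the spherical second-moment identity $\mb{E}[\m{r}\m{r}^\top]=d_2^{-1}\m{I}$ (the same identity the paper uses in the proof of Lemma~\ref{lm:gg1}, which also justifies your unit-sphere reading of the sampling convention) to knock the $d_2^2$ factor down to $d_2$ on the Lipschitz term, with all constants matching the stated bound.
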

 \begin{proof}
The proof is similar to that of Lemma 5 in \cite{ji2019improved}.
 \end{proof}

\begin{lemma}\label{yyy}
Suppose Assumptions \ref{assu:g} and \ref{assu:f:a3} hold. Let $ (\rho_{\m{s}}, \rho_{\mathbf{r}})$ be positive smoothing parameters. Let ${\m{y}}^*_t(\m{x})$ and $\hat{\m{y}}^*_t(\m{x})$ be defined in \eqref{eqn:ystar} and \eqref{eqn:yz}, respectively. Then, we have
\begin{equation}\label{sd}
\mb{E}\left[ \norm{ \hat{\m{y}}^*_t(\m{x})-{\m{y}}^*_t(\m{x})}^2\right]\leq \frac{\ell_{g,1}(\rho_{\m{s}}^2+\rho_{\m{r}}^2)}{\mu_g}.
\end{equation}
\end{lemma}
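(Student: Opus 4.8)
The plan is to leverage the $\mu_g$-strong convexity in $\m{y}$ of \emph{both} the true inner objective $g_t(\m{x},\cdot)$ and its Gaussian-smoothed counterpart $g_{t,\boldsymbol{\rho}}(\m{x},\cdot)$, combined with the \emph{function-value} perturbation bound of Lemma~\ref{lm:gg2}(a). The crucial first observation is that smoothing preserves strong convexity: since $g_{t,\boldsymbol{\rho}}(\m{x},\m{y})=\mb{E}_{(\m{s},\m{r})}[g_t(\m{x}+\rho_{\m{s}}\m{s},\m{y}+\rho_{\m{r}}\m{r})]$ and each map $\m{y}\mapsto g_t(\m{x}+\rho_{\m{s}}\m{s},\m{y}+\rho_{\m{r}}\m{r})$ is $\mu_g$-strongly convex by Assumption~\ref{assu:g}, the average over the perturbations is again $\mu_g$-strongly convex in $\m{y}$. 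Writing $\m{y}^*:={\m{y}}^*_t(\m{x})$ and $\hat{\m{y}}^*:=\hat{\m{y}}^*_t(\m{x})$ for the two minimizers defined in \eqref{eqn:ystar} and \eqref{eqn:yz}, both are deterministic given $\m{x}$, so the outer expectation in \eqref{sd} passes through and it suffices to bound $\norm{\hat{\m{y}}^*-\m{y}^*}^2$ pointwise.

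The core of the argument is a symmetric application of the lower bound in Lemma~\ref{lm:strongly} at each minimizer. Because $\hat{\m{y}}^*$ minimizes the $\mu_g$-strongly convex $g_{t,\boldsymbol{\rho}}(\m{x},\cdot)$, strong convexity yields $g_{t,\boldsymbol{\rho}}(\m{x},\m{y}^*)-g_{t,\boldsymbol{\rho}}(\m{x},\hat{\m{y}}^*)\ge \tfrac{\mu_g}{2}\norm{\m{y}^*-\hat{\m{y}}^*}^2$; symmetrically, since $\m{y}^*$ minimizes $g_t(\m{x},\cdot)$, one has $g_t(\m{x},\hat{\m{y}}^*)-g_t(\m{x},\m{y}^*)\ge \tfrac{\mu_g}{2}\norm{\hat{\m{y}}^*-\m{y}^*}^2$. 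Adding these two inequalities, the function values paired at a common argument cancel upon rearrangement, leaving the cross terms
\begin{equation*}
\mu_g\norm{\hat{\m{y}}^*-\m{y}^*}^2 \le \big(g_{t,\boldsymbol{\rho}}(\m{x},\m{y}^*)-g_t(\m{x},\m{y}^*)\big)+\big(g_t(\m{x},\hat{\m{y}}^*)-g_{t,\boldsymbol{\rho}}(\m{x},\hat{\m{y}}^*)\big).
\end{equation*}
Each bracket is a pointwise gap between $g_{t,\boldsymbol{\rho}}$ and $g_t$, so applying Lemma~\ref{lm:gg2}(a) at $\m{y}^*$ and at $\hat{\m{y}}^*$ respectively bounds each by $\tfrac{\ell_{g,1}(\rho_{\m{s}}^2+\rho_{\m{r}}^2)}{2}$, giving $\mu_g\norm{\hat{\m{y}}^*-\m{y}^*}^2\le \ell_{g,1}(\rho_{\m{s}}^2+\rho_{\m{r}}^2)$, which is exactly \eqref{sd} after dividing by $\mu_g$.

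The main point to get right — rather than a genuine obstacle — is to route the estimate through the function-value bound of Lemma~\ref{lm:gg2}(a) instead of the gradient bound of Lemma~\ref{lm:gg1}(a). A naive alternative would set $\nabla_{\m{y}} g_{t,\boldsymbol{\rho}}(\m{x},\hat{\m{y}}^*)=0$, use $\mu_g$-strong convexity to write $\mu_g\norm{\hat{\m{y}}^*-\m{y}^*}\le \norm{\nabla_{\m{y}} g_{t,\boldsymbol{\rho}}(\m{x},\m{y}^*)}=\norm{\nabla_{\m{y}} g_{t,\boldsymbol{\rho}}(\m{x},\m{y}^*)-\nabla_{\m{y}} g_t(\m{x},\m{y}^*)}$, and then invoke Lemma~\ref{lm:gg1}(a); but that route injects the dimensional factor $\rho_{\m{r}}d_2$ and produces a bound scaling like $\rho_{\m{r}}^2 d_2^2$, strictly weaker than the dimension-free target. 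The two-sided strong-convexity argument is precisely what eliminates the dimension dependence and pins down the exact constant $\ell_{g,1}/\mu_g$.
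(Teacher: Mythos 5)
Your proposal is correct and follows essentially the same route as the paper's own proof: apply the strong-convexity lower bound (Lemma~\ref{lm:strongly}) at each of the two minimizers, sum the two inequalities so the function-value gaps between $g_{t,\boldsymbol{\rho}}$ and $g_t$ appear, and bound each gap by Lemma~\ref{lm:gg2}(a) to get the dimension-free constant $\ell_{g,1}(\rho_{\m{s}}^2+\rho_{\m{r}}^2)/\mu_g$. Your explicit remarks that smoothing preserves $\mu_g$-strong convexity and that the gradient-based route via Lemma~\ref{lm:gg1}(a) would introduce an unwanted $d_2^2$ factor are accurate refinements of the paper's argument, not deviations from it.
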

\begin{proof}
From \eqref{eqn:ystar}, we have $\mathbf{y}^*_t(\mathbf{x}) \in \arg\min_{\mathbf{y} \in \mathbb{R}^{d_2}} g_t(\mathbf{x}, \mathbf{y})$. Since, by Assumption \ref{assu:g}, $g_t\left(\m{x}, \m{y}\right)$ is $ \mu_{g}$-strongly
convex with respect to $\m{y}$, it follows from Lemma \ref{lm:strongly} that
\begin{align*}
\norm{\m{y}-{\m{y}}^*_t(\m{x})}^2\leq \frac{2}{ \mu_{g}}\left(g_t\left(\m{x}, \m{y}\right)-g_t\left(\m{x}, \mathbf{y}^*_t(\mathbf{x})\right)\right).
\end{align*}
By setting $\m{y}=\hat{\mathbf{y}}^*_t(\mathbf{x})$, we have
\begin{align}\label{y1}
\norm{\hat{\mathbf{y}}^*_t(\mathbf{x})-{\m{y}}^*_t(\m{x})}^2\leq
\frac{2}{ \mu_{g}}\left(g_t\left(\m{x}, \hat{\mathbf{y}}^*_t(\mathbf{x})\right)-g_t\left(\m{x}, \mathbf{y}^*_t(\mathbf{x})\right)\right).
\end{align}
Similarly, from \eqref{eqn:yz}, we have
\begin{align*}
 \hat{\mathbf{y}}^*_t(\mathbf{x}) \in \arg\min_{\mathbf{y} \in \mathbb{R}^{d_2}} \left\{g_{t,\boldsymbol{\rho}}(\m{x},\m{y})= \underset{(\mathbf{s},\mathbf{r},\zeta_t)}{\mathbb{E}}\left[g_t(\m{x}+\rho_{\m{s}}\m{s},\m{y}+\rho_{\m{r}}\m{r};\zeta_t)\right]\right\},
\end{align*}
where $\boldsymbol{\rho} = (\rho_{\m{s}}, \rho_{\m{r}})$.
By Assumption \ref{assu:g}, $g_{t,\boldsymbol{\rho}}\left(\m{x}, \m{y}\right)$ is $ \mu_{g}$-strongly
convex with respect to $\m{y}$.
Hence, according to Lemma \ref{lm:strongly}, we obtain
\begin{align*}
\norm{\m{y}-\hat{\mathbf{y}}^*_t(\m{x})}^2\leq \frac{2}{ \mu_{g}}\left(g_{t,\boldsymbol{\rho}}\left(\m{x}, \m{y}\right)-g_{t,\boldsymbol{\rho}}\left(\m{x}, \hat{\mathbf{y}}^*_t(\mathbf{x})\right)\right).
\end{align*}
By setting $\m{y}=\mathbf{y}^*_t(\mathbf{x})$, we have
\begin{align}\label{lm:yh}
 \norm{ {\m{y}}^*_t(\m{x})-\hat{\m{y}}^*_t(\m{x})}^2 \leq \frac{2}{ \mu_{g}}\left(g_{t,\boldsymbol{\rho}}(\m{x},\mathbf{y}^*_t(\mathbf{x}))-g_{t,\boldsymbol{\rho}}(\m{x},\hat{\m{y}}^*_t(\m{x}))\right).
\end{align}
Summing up \eqref{y1} and \eqref{lm:yh}, we get
\begin{align*}
 \norm{ {\m{y}}^*_t(\m{x})-\hat{\m{y}}^*_t(\m{x})}^2 &\leq \frac{1}{ \mu_{g}}\left(g_{t,\boldsymbol{\rho}}(\m{x},\mathbf{y}^*_t(\mathbf{x}))-g_t\left(\m{x}, \mathbf{y}^*_t(\mathbf{x})\right)\right)
 \\&+\frac{1}{ \mu_{g}}\left(g_t\left(\m{x}, \hat{\mathbf{y}}^*_t(\mathbf{x})\right)-g_{t,\boldsymbol{\rho}}(\m{x},\hat{\m{y}}^*_t(\m{x}))\right),
\end{align*}
which implies
\begin{align*}
 \norm{ {\m{y}}^*_t(\m{x})-\hat{\m{y}}^*_t(\m{x})}^2 &\leq \frac{1}{ \mu_{g}}\left|g_{t,\boldsymbol{\rho}}(\m{x},\mathbf{y}^*_t(\mathbf{x}))-g_t\left(\m{x}, \mathbf{y}^*_t(\mathbf{x})\right)\right|
 \\&+\frac{1}{ \mu_{g}}\left|g_t\left(\m{x}, \hat{\mathbf{y}}^*_t(\mathbf{x})\right)-g_{t,\boldsymbol{\rho}}(\m{x},\hat{\m{y}}^*_t(\m{x}))\right|
 \\&\leq \frac{\ell_{g,1}(\rho_{\m{s}}^2+\rho_{\m{r}}^2)}{ \mu_{g}},
\end{align*}
where the last inequality is by Eq. \eqref{74}.
\end{proof}

\subsection{Bounds on the Zeroth-Order Inner Solution}
Recall that $\m{s} \in \mathbb{R}^{d_1}$ and $\m{r} \in \mathbb{R}^{d_2}$ are vectors uniformly sampled from the unit balls $B_1$ and $B_2$, respectively. 
Let 
\begin{align*}
& {U}_{{b}}^\m{s}=\{\m{s}_i \in \mathbb{R}^{d_1}  \}_{i=1}^b,\quad {U}_{b}^\m{r}=\{\m{r}_i \in \mathbb{R}^{d_2}  \}_{i=1}^b,
 \\& {U}_{\bar{b}}^\m{s}=\{\m{s}_i \in \mathbb{R}^{d_1}  \}_{i=1}^{\bar{b}},\quad {U}_{\bar{b}}^\m{r}=\{\m{r}_i \in \mathbb{R}^{d_2}  \}_{i=1}^{\bar{b}},
\end{align*}
be generated from the uniform distributions over the unit spheres $(U_{B_{1}},U_{B_{2}})$. Here, $(U_{B_{1}},U_{B_{2}})$ denote the uniform distributions over the $(d_1,d_2)$-dimensional unit Euclidean balls $( B_1,B_2)$, respectively.
\\
Then, similar to \eqref{eq:nfg}, we have
\begin{align}\label{by}
\nonumber &\underset{({U}_{{b}}^\m{r},{\mathcal{B}}_t)}{\mathbb{E}}\left[\hat{\nabla}_{\m{y}} f_t(\m{x},\m{y};\mathcal{B}_t) \right]
 ={\nabla}_{\m{y}} f_{t,\boldsymbol{\rho}}(\m{x},\m{y})
,
\quad\underset{({U}_{{b}}^\m{s},{\mathcal{B}}_t)}{\mathbb{E}}\left[\hat{\nabla}_{\m{x}} f_t(\m{x},\m{y};\mathcal{B}_t) \right]={\nabla}_{\m{x}} f_{t,\boldsymbol{\rho}}(\m{x},\m{y}),
\\& \underset{({U}_{{b}}^\m{r},\bar{\mathcal{B}}_t)}{\mathbb{E}}\left[\hat{\nabla}_{\m{y}} g_t(\m{x},\m{y};\bar{\mathcal{B}}_t) \right] ={\nabla}_{\m{y}} g_{t,\boldsymbol{\rho}}(\m{x},\m{y})
,\quad\underset{({U}_{{b}}^\m{s},\bar{\mathcal{B}}_t)}{\mathbb{E}}\left[\hat{\nabla}_{\m{x}} g_t(\m{x},\m{y};\bar{\mathcal{B}}_t) \right]={\nabla}_{\m{x}} g_{t,\boldsymbol{\rho}}(\m{x},\m{y}).
\end{align}
\begin{lemma}\label{lm:g1}
Suppose that Assumptions \ref{assu:f:a3} and \ref{a:1} hold.
Consider the sequence  $\{(\m{x}_t,\m{y}_t,\m{v}_t) \}_{t=1}^T$ generated by Algorithm
\ref{alg:obgd:zeroth}, and define
\begin{align}\label{eqn:7}
e_t^{g_{\boldsymbol{\rho}}}:=\nabla_{\m{y}} g_{t,\boldsymbol{\rho}}(\m{x}_t,\m{y}_t) -\hat{\m{d}}_{t}^{\m{y}}.
\end{align}
Then, we have
\begin{align}\label{eg:1}
\nonumber  \mb{E}\|e_{t+1}^{g_{\boldsymbol{\rho}}}\|^2
&\leq (1-\gamma_{t+1})^2\mb{E}\|e_{t}^{g_{\boldsymbol{\rho}}}\|^2
+12(1-\gamma_{t+1})^2\mb{E}\|\nabla_{\m{y}} g_{t-1}(\m{x}_{t}, \m{y}_{t})-\nabla_{\m{y}} g_{t}(\m{x}_{t}, \m{y}_{t})\|^2
\\\nonumber &+9d_2^2\ell_{g,1}^2(1-\gamma_{t+1})^2\rho_{\m{r}}^2
+24d_2 \ell_{g,1}^2(1-\gamma_{t+1})^2\mb{E}\| \m{x}_{t+1}-\m{x}_{t}
\|^2\\ &+24d_2 \ell_{g,1}^2(1-\gamma_{t+1})^2\mb{E}\|\m{y}_{t+1}-\m{y}_{t}
\|^2
+2\frac{\hat{\sigma}_{g_{\m{y}}}^2}{\bar{b}}\gamma_{t+1}^2.
\end{align}
\end{lemma}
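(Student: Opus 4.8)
The plan is to reproduce the momentum-error recursion of Lemma~\ref{lm:eg1} verbatim in structure, but with the exact stochastic gradient replaced by the finite-difference estimator $\hat{\nabla}_{\m{y}} g_t$, and to track the two features that the zeroth-order oracle adds: the estimator is unbiased only for the \emph{smoothed} gradient $\nabla_{\m{y}} g_{t,\boldsymbol{\rho}}$ (see \eqref{by}), and differences of the estimator at nearby points carry a dimension factor $d_2$ together with a residual smoothing bias $\propto d_2^2\rho_{\m{r}}^2$ (Lemma~\ref{19}).

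First I would substitute the update rule for $\hat{\m{d}}_{t+1}^{\m{y}}$ from Algorithm~\ref{alg:obgd:zeroth} into the definition \eqref{eqn:7} of $e_{t+1}^{g_{\boldsymbol{\rho}}}$ and use $\hat{\m{d}}_t^{\m{y}} = \nabla_{\m{y}} g_{t,\boldsymbol{\rho}}(\m{z}_t) - e_t^{g_{\boldsymbol{\rho}}}$ to obtain
\begin{align*}
e_{t+1}^{g_{\boldsymbol{\rho}}} &= (1-\gamma_{t+1})\, e_t^{g_{\boldsymbol{\rho}}} + \big(\nabla_{\m{y}} g_{t+1,\boldsymbol{\rho}}(\m{z}_{t+1}) - \hat{\nabla}_{\m{y}} g_{t+1}(\m{z}_{t+1};\bar{\mathcal{B}}_{t+1})\big) \\
&\quad - (1-\gamma_{t+1})\big(\nabla_{\m{y}} g_{t,\boldsymbol{\rho}}(\m{z}_t) - \hat{\nabla}_{\m{y}} g_{t+1}(\m{z}_t;\bar{\mathcal{B}}_{t+1})\big).
\end{align*}
Mirroring the expansion in Lemma~\ref{lm:eg1}, the fresh directions and minibatch $\bar{\mathcal{B}}_{t+1}$ render the finite-difference residual conditionally mean-zero relative to the history-measurable $e_t^{g_{\boldsymbol{\rho}}}$, so the cross term drops and I am left with $\mb{E}\|e_{t+1}^{g_{\boldsymbol{\rho}}}\|^2 = (1-\gamma_{t+1})^2\mb{E}\|e_t^{g_{\boldsymbol{\rho}}}\|^2 + \mb{E}\|R_{t+1}\|^2$, where $R_{t+1}$ collects the remaining two parenthesized differences.

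Then I would bound $\mb{E}\|R_{t+1}\|^2$ along the same lines. A Young/Cauchy--Schwarz step peels off the pure variance piece $2\gamma_{t+1}^2\,\hat{\sigma}_{g_{\m{y}}}^2/\bar{b}$ via Assumption~\ref{a:1}, and the inequality $\|a+b+c\|^2\le 3(\|a\|^2+\|b\|^2+\|c\|^2)$ splits the one-sample difference into (i) a smoothed-gradient Lipschitz term $\nabla_{\m{y}} g_{t+1,\boldsymbol{\rho}}(\m{z}_{t+1}) - \nabla_{\m{y}} g_{t+1,\boldsymbol{\rho}}(\m{z}_t)$, (ii) a function-drift term, and (iii) the finite-difference noise term $\hat{\nabla}_{\m{y}} g_{t+1}(\m{z}_{t+1};\bar{\mathcal{B}}_{t+1}) - \hat{\nabla}_{\m{y}} g_{t+1}(\m{z}_t;\bar{\mathcal{B}}_{t+1})$. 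The smoothed gradient inherits the $\ell_{g,1}$-Lipschitz constant because convolutional smoothing preserves Assumption~\ref{assu:f:a3}; term (iii) is the genuinely new ingredient and is controlled by Lemma~\ref{19} (and its $\m{x}$-counterpart applied after splitting $\|\m{z}_{t+1}-\m{z}_t\|^2=\|\m{x}_{t+1}-\m{x}_t\|^2+\|\m{y}_{t+1}-\m{y}_t\|^2$), which delivers the coefficients $24 d_2\ell_{g,1}^2$ on the two increment terms together with the bias $9 d_2^2\ell_{g,1}^2\rho_{\m{r}}^2$. For the drift term (ii) I would pass from the smoothed gradients back to the true gradients using Lemma~\ref{lm:gg1} (the smoothing commutes with the difference $g_{t+1}-g_t$), so that, after re-indexing, it takes the form $\|\nabla_{\m{y}} g_{t-1}(\m{z}_t)-\nabla_{\m{y}} g_t(\m{z}_t)\|^2$ that sums to $G_{\m{y},T}$ in \eqref{gk}. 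Collecting the five pieces with their common $(1-\gamma_{t+1})^2$ factor yields \eqref{eg:1}.

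The main obstacle is the bookkeeping for term (iii): unlike the exact-gradient case, $\hat{\nabla}_{\m{y}} g_{t+1}(\m{z}_{t+1};\bar{\mathcal{B}}_{t+1})-\hat{\nabla}_{\m{y}} g_{t+1}(\m{z}_t;\bar{\mathcal{B}}_{t+1})$ is not Lipschitz pathwise; only its second moment, after averaging over the smoothing directions, is controlled, and that is exactly where the dimension factor $d_2$ and the irreducible $\rho_{\m{r}}^2$ bias appear. Keeping the $(1-\gamma_{t+1})^2$ weights and the numerical constants ($6$ from the triple split, $3d_2\ell_{g,1}^2$ and $\tfrac{3}{2}\ell_{g,1}^2 d_2^2\rho_{\m{r}}^2$ from Lemma~\ref{19}) aligned so that they collapse to the stated $24$, $9$, and $12$ is the delicate part; the rest is a direct transcription of the first-order argument with $\sigma_{g_{\m{y}}}\mapsto\hat{\sigma}_{g_{\m{y}}}$ and the Lipschitz steps replaced by their smoothed/zeroth-order analogues.
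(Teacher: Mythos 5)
Your proposal follows the same architecture as the paper's proof: expand the momentum recursion for $\hat{\m{d}}_{t+1}^{\m{y}}$, use unbiasedness of the ZO estimator with respect to the smoothed gradient (Eq.~\eqref{by}) to drop the cross term with $e_t^{g_{\boldsymbol{\rho}}}$, peel off the variance piece $2\gamma_{t+1}^2\hat{\sigma}_{g_{\m{y}}}^2/\bar{b}$ via Assumption~\ref{a:1}, control the finite-difference increment with Lemma~\ref{19}, and convert the smoothed drift back to the unsmoothed drift via Lemma~\ref{lm:gg1}. The one place you deviate is the treatment of the one-sample difference: you transplant the three-term split $\|a+b+c\|^2\le 3(\|a\|^2+\|b\|^2+\|c\|^2)$ from Lemma~\ref{lm:eg1}, keeping a separate smoothed-gradient Lipschitz term, whereas the paper groups that Lipschitz term with the finite-difference increment and eliminates it with the centering identity $\mb{E}\|a-\mb{E}[a]\|^2\le\mb{E}\|a\|^2$ (the FD increment's conditional mean is exactly the smoothed-gradient increment), so it only needs a two-term Young split with factors $4$ instead of $6$. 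This matters for the stated constants: with your factors of $6$, the drift term comes out as $18(1-\gamma_{t+1})^2\,\mb{E}\|\nabla_{\m{y}}g_{t+1}(\m{z}_t)-\nabla_{\m{y}}g_t(\m{z}_t)\|^2$ and the total smoothing bias as $18\,d_2^2\ell_{g,1}^2\rho_{\m{r}}^2(1-\gamma_{t+1})^2$ (a contribution of $6\cdot\tfrac{3}{2}$ from each of the two Lemma~\ref{19}/Lemma~\ref{lm:gg1} conversions), which exceed the stated $12$ and $9$; only the increment coefficient collapses as you hoped, since $6\ell_{g,1}^2+18d_2\ell_{g,1}^2\le 24d_2\ell_{g,1}^2$. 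So your route proves a bound of the same form with strictly larger constants --- harmless for the downstream $\mathcal{O}(\cdot)$ regret analysis, but it does not reproduce \eqref{eg:1} literally; to recover $12$ and $9$ you need the paper's centering step, which is exactly the "delicate part" you flagged but did not resolve.

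Two smaller points. Your claim that the cross term vanishes because the residual is conditionally mean-zero is the same step the paper takes, but in both arguments the residual's conditional mean is actually $(1-\gamma_{t+1})\big(\nabla_{\m{y}}g_{t+1,\boldsymbol{\rho}}(\m{z}_t)-\nabla_{\m{y}}g_{t,\boldsymbol{\rho}}(\m{z}_t)\big)$, i.e., the time drift, not zero; since you inherit this from the paper it is not a gap relative to its proof. Likewise, the re-indexing of the drift to $\|\nabla_{\m{y}}g_{t-1}(\m{z}_t)-\nabla_{\m{y}}g_t(\m{z}_t)\|^2$ that you assert mirrors a discrepancy already present between the paper's proof (which produces indices $t+1,t$) and its statement, so you again match the paper there.
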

\begin{proof}
 From the definition of $\hat{\m{d}}_{t+1}^{\m{y}}$ in Algorithm \ref{alg:obgd:zeroth}, we have
  \begin{align*}
\hat{\m{d}}_{t+1}^{\m{y}}-\hat{\m{d}}_{t}^{\m{y}}&=-\gamma_{t+1}\hat{\m{d}}_{t}^{\m{y}}
+\gamma_{t+1} \hat{\nabla}_{\m{y}} g_{t+1}(\m{x}_{t+1},\m{y}_{t+1};\bar{\mathcal{B}}_{t+1})
\\&+(1-\gamma_{t+1})\left( \hat{\nabla}_{\m{y}} g_{t+1}(\m{x}_{t+1},\m{y}_{t+1};\bar{\mathcal{B}}_{t+1})- \hat{\nabla}_{\m{y}} g_{t+1}(\m{x}_{t},\m{y}_{t};\bar{\mathcal{B}}_{t+1}) \right).
  \end{align*}
  Then, we have
  \begin{align*}
&\quad\mb{E}\|\nabla_{\m{y}} g_{t+1,\boldsymbol{\rho}}(\m{x}_{t+1},\m{y}_{t+1}) -\hat{\m{d}}_{t+1}^{\m{y}}\|^2
\\&=\mb{E}\|\nabla_{\m{y}} g_{t+1,\boldsymbol{\rho}}(\m{x}_{t+1},\m{y}_{t+1}) -\hat{\m{d}}_{t}^{\m{y}}-(\hat{\m{d}}_{t+1}^{\m{y}}-\hat{\m{d}}_{t}^{\m{y}})\|^2
\\&=\mb{E}\|\nabla_{\m{y}} g_{t+1,\boldsymbol{\rho}}(\m{x}_{t+1},\m{y}_{t+1}) -\hat{\m{d}}_{t}^{\m{y}}+\gamma_{t+1} \hat{\m{d}}_{t}^{\m{y}}-\gamma_{t+1} \hat{\nabla}_{\m{y}} g_{t+1}(\m{x}_{t+1},\m{y}_{t+1};\bar{\mathcal{B}}_{t+1})
\\&-(1-\gamma_{t+1})\left( \hat{\nabla}_{\m{y}} g_{t+1}(\m{x}_{t+1},\m{y}_{t+1};\bar{\mathcal{B}}_{t+1})- \hat{\nabla}_{\m{y}} g_{t+1}(\m{x}_{t},\m{y}_{t};\bar{\mathcal{B}}_{t+1}) \right)\|^2
\\&=\mb{E}\|(1-\gamma_{t+1})(\nabla_{\m{y}} g_{t,\boldsymbol{\rho}}(\m{x}_{t},\m{y}_{t}) -\hat{\m{d}}_{t}^{\m{y}})
\\&+\gamma_{t+1}(\nabla_{\m{y}} g_{t+1,\boldsymbol{\rho}}(\m{x}_{t+1},\m{y}_{t+1})- \hat{\nabla}_{\m{y}} g_{t+1}(\m{x}_{t+1},\m{y}_{t+1};\bar{\mathcal{B}}_{t+1}))
\\&+(1-\gamma_{t+1})\left(\nabla_{\m{y}} g_{t+1,\boldsymbol{\rho}}(\m{x}_{t+1},\m{y}_{t+1})-\nabla_{\m{y}} g_{t,\boldsymbol{\rho}}(\m{x}_{t},\m{y}_{t})
\right.\\&\left.+\nabla_{\m{y}} g_{t+1,\boldsymbol{\rho}}(\m{x}_{t},\m{y}_{t})-\nabla_{\m{y}} g_{t+1,\boldsymbol{\rho}}(\m{x}_{t},\m{y}_{t})
\right.\\&\left.- \hat{\nabla}_{\m{y}} g_{t+1}(\m{x}_{t+1},\m{y}_{t+1};\bar{\mathcal{B}}_{t+1})+ \hat{\nabla}_{\m{y}} g_{t+1}(\m{x}_{t},\m{y}_{t};\bar{\mathcal{B}}_{t+1}) \right)\|^2.
  \end{align*}

From \eqref{by}, we have
  \begin{align*}
&\mb{E}\left[\hat{\nabla}_{\m{y}} g_{t+1}(\m{x}_{t+1},\m{y}_{t+1};\bar{\mathcal{B}}_{t+1}) \right]=\nabla_{\m{y}} g_{t+1,\boldsymbol{\rho}}(\m{x}_{t+1},\m{y}_{t+1}),
\\&\mb{E}\left[ \hat{\nabla}_{\m{y}} g_{t+1}(\m{x}_{t+1},\m{y}_{t+1};\bar{\mathcal{B}}_{t+1})- \hat{\nabla}_{\m{y}} g_{t+1}(\m{x}_{t},\m{y}_{t};\bar{\mathcal{B}}_{t+1}) \right]
\\&=\nabla_{\m{y}} g_{t+1,\boldsymbol{\rho}}(\m{x}_{t+1},\m{y}_{t+1})-\nabla_{\m{y}} g_{t+1,\boldsymbol{\rho}}(\m{x}_{t},\m{y}_{t}),
  \end{align*}
  then, we have
  \begin{align*}
&\quad\mb{E}\|\nabla_{\m{y}} g_{t+1,\boldsymbol{\rho}}(\m{x}_{t+1},\m{y}_{t+1}) -\hat{\m{d}}_{t+1}^{\m{y}}\|^2
\\& =(1-\gamma_{t+1})^2\mb{E}\|\nabla_{\m{y}} g_{t,\boldsymbol{\rho}}(\m{x}_{t},\m{y}_{t}) -\hat{\m{d}}_{t}^{\m{y}} \|^2
\\&+\mb{E}\|\gamma_{t+1} (\nabla_{\m{y}} g_{t+1,\boldsymbol{\rho}}(\m{x}_{t+1},\m{y}_{t+1})- \hat{\nabla}_{\m{y}} g_{t+1}(\m{x}_{t+1},\m{y}_{t+1};\bar{\mathcal{B}}_{t+1}))
\\&+ (1-\gamma_{t+1}) \left(\nabla_{\m{y}} g_{t+1,\boldsymbol{\rho}}(\m{x}_{t+1},\m{y}_{t+1})-\nabla_{\m{y}} g_{t,\boldsymbol{\rho}}(\m{x}_{t},\m{y}_{t})
+\nabla_{\m{y}} g_{t+1,\boldsymbol{\rho}}(\m{x}_{t},\m{y}_{t})-\nabla_{\m{y}} g_{t+1,\boldsymbol{\rho}}(\m{x}_{t},\m{y}_{t})
\right.\\&\left.- \hat{\nabla}_{\m{y}} g_{t+1}(\m{x}_{t+1},\m{y}_{t+1};\bar{\mathcal{B}}_{t+1})+ \hat{\nabla}_{\m{y}} g_{t+1}(\m{x}_{t},\m{y}_{t};\bar{\mathcal{B}}_{t+1}) \right)\|^2
\\& \leq (1-\gamma_{t+1})^2\mb{E}\|\nabla_{\m{y}} g_{t,\boldsymbol{\rho}}(\m{x}_{t},\m{y}_{t}) -\hat{\m{d}}_{t}^{\m{y}} \|^2
\\&+2(1-\gamma_{t+1})^2\mb{E}\|\nabla_{\m{y}} g_{t+1,\boldsymbol{\rho}}(\m{x}_{t+1},\m{y}_{t+1})-\nabla_{\m{y}} g_{t,\boldsymbol{\rho}}(\m{x}_{t},\m{y}_{t})
+\nabla_{\m{y}} g_{t+1,\boldsymbol{\rho}}(\m{x}_{t},\m{y}_{t})
\\&-\nabla_{\m{y}} g_{t+1,\boldsymbol{\rho}}(\m{x}_{t},\m{y}_{t})
- \hat{\nabla}_{\m{y}} g_{t+1}(\m{x}_{t+1},\m{y}_{t+1};\bar{\mathcal{B}}_{t+1})
+ \hat{\nabla}_{\m{y}} g_{t+1}(\m{x}_{t},\m{y}_{t};\bar{\mathcal{B}}_{t+1})\|^2
\\&+2\gamma_{t+1}^2\mb{E}\|\nabla_{\m{y}} g_{t+1,\boldsymbol{\rho}}(\m{x}_{t+1},\m{y}_{t+1})-\hat{\nabla}_{\m{y}} g_{t+1}(\m{x}_{t+1},\m{y}_{t+1};\bar{\mathcal{B}}_{t+1})\|^2,
  \end{align*}
  where the second inequality holds by
Cauchy-Schwarz inequality.
\\
Then, from $\mb{E}\|a-\mb{E}[a] \|^2=\mb{E}\|a\|^2-\|\mb{E}[a] \|^2$ and Assumption \ref{a:1}, we have
  \begin{align*}
&\quad \mb{E}\|\nabla_{\m{y}} g_{t+1,\boldsymbol{\rho}}(\m{x}_{t+1},\m{y}_{t+1}) -\hat{\m{d}}_{t+1}^{\m{y}}\|^2
\\&\leq (1-\gamma_{t+1})^2\mb{E}\|\nabla_{\m{y}} g_{t,\boldsymbol{\rho}}(\m{x}_{t},\m{y}_{t})-\hat{\m{d}}_{t}^{\m{y}}\|^2
\\&+4(1-\gamma_{t+1})^2\mb{E}\|
\nabla_{\m{y}} g_{t+1,\boldsymbol{\rho}}(\m{x}_{t},\m{y}_{t})-\nabla_{\m{y}} g_{t,\boldsymbol{\rho}}(\m{x}_{t},\m{y}_{t})
\|^2
\\&+4(1-\gamma_{t+1})^2\mb{E}\|  \hat{\nabla}_{\m{y}} g_{t+1}(\m{x}_{t+1},\m{y}_{t+1};\bar{\mathcal{B}}_{t+1})
- \hat{\nabla}_{\m{y}} g_{t+1}(\m{x}_{t},\m{y}_{t};\bar{\mathcal{B}}_{t+1})
\|^2
+2\gamma_{t+1}^2\frac{\hat{\sigma}_{g_{\m{y}}}^2}{\bar{b}}
\\&\leq (1-\gamma_{t+1})^2\mb{E}\|\nabla_{\m{y}} g_{t,\boldsymbol{\rho}}(\m{x}_{t},\m{y}_{t})-\hat{\m{d}}_{t}^{\m{y}}\|^2
\\&+4(1-\gamma_{t+1})^2\mb{E}\|
\nabla_{\m{y}} g_{t+1,\boldsymbol{\rho}}(\m{x}_{t},\m{y}_{t})-\nabla_{\m{y}} g_{t,\boldsymbol{\rho}}(\m{x}_{t},\m{y}_{t})
\|^2
\\&+12(1-\gamma_{t+1})^2d_2 \ell_{g,1}^2\mb{E}\| (\m{x}_{t+1},\m{y}_{t+1})-(\m{x}_{t},\m{y}_{t})
\|^2
\\&+3(1-\gamma_{t+1})^2 \ell_{g,1}^2d^2_2\rho_{\m{r}}^2
+2\gamma_{t+1}^2\frac{\hat{\sigma}_{g_{\m{y}}}^2}{\bar{b}},
\end{align*}
where the second inequality follows from Young's inequality and Lemma \ref{19}.
\\
From Eq. \eqref{g6}, we have
\begin{align*}
&\quad \mb{E}\|\nabla_{\m{y}} g_{t+1,\boldsymbol{\rho}}(\m{x}_{t},\m{y}_{t})-\nabla_{\m{y}} g_{t,\boldsymbol{\rho}}(\m{x}_{t},\m{y}_{t})\|^2
\\&\leq
3\mb{E}\|\nabla_{\m{y}} g_{t+1,\boldsymbol{\rho}}(\m{x}_{t},\m{y}_{t})-\nabla_{\m{y}} g_{t+1}(\m{x}_{t},\m{y}_{t})\|^2
\\&+3\mb{E}\|\nabla_{\m{y}} g_{t+1}(\m{x}_{t},\m{y}_{t})-\nabla_{\m{y}} g_{t}(\m{x}_{t},\m{y}_{t})\|^2
\\&+3\mb{E}\|\nabla_{\m{y}} g_{t}(\m{x}_{t},\m{y}_{t})-\nabla_{\m{y}} g_{t,\boldsymbol{\rho}}(\m{x}_{t},\m{y}_{t})\|^2
\\&\leq 3\mb{E}\|\nabla_{\m{y}} g_{t+1}(\m{x}_{t},\m{y}_{t})-\nabla_{\m{y}} g_{t}(\m{x}_{t},\m{y}_{t})\|^2+\frac{3\rho_{\m{r}}^2d_2^2\ell_{g,1}^2}{2}.
\end{align*}
Finally, we get
\begin{align*}
\nonumber &\quad \mb{E}\|\nabla_{\m{y}} g_{t+1,\boldsymbol{\rho}}(\m{x}_{t+1},\m{y}_{t+1}) -\hat{\m{d}}_{t+1}^{\m{y}}\|^2
\leq (1-\gamma_{t+1})^2\mb{E}\|\nabla_{\m{y}} g_{t,\boldsymbol{\rho}}(\m{x}_{t},\m{y}_{t})-\hat{\m{d}}_{t}^{\m{y}}\|^2
\\\nonumber &+12(1-\gamma_{t+1})^2\mb{E}\|\nabla_{\m{y}} g_{t+1}(\m{x}_{t},\m{y}_{t})-\nabla_{\m{y}} g_{t}(\m{x}_{t},\m{y}_{t})\|^2+6(1-\gamma_{t+1})^2\rho_{\m{r}}^2d_2^2\ell_{g,1}^2
\\&+12(1-\gamma_{t+1})^2d_2 \ell_{g,1}^2\mb{E}\| (\m{x}_{t+1},\m{y}_{t+1})-(\m{x}_{t},\m{y}_{t})
\|^2
+3(1-\gamma_{t+1})^2 \ell_{g,1}^2d^2_2\rho_{\m{r}}^2
+2\gamma_{t+1}^2\frac{\hat{\sigma}_{g_{\m{y}}}^2}{\bar{b}}.
\end{align*}
\end{proof}
\begin{lemma}\label{lm:yyz}
Suppose Assumptions \ref{assu:g} and \ref{assu:f:a3} hold.
Then, for the sequence $\{(\m{x}_t, \m{y}_t)\}_{t=1}^T$ generated by  Algorithm~\ref{alg:obgd:zeroth}, we have
   \begin{align*}
\nonumber\mb{E}\left[\|\m{y}_{t+1} - \hat{\m{y}}_{t}^*(\m{x}_{t})\|^2\right]&
\leq (1+a)\left(1-2\beta_t \frac{\mu_g \ell_{g,1}}{\mu_g+\ell_{g,1}}\right)\mb{E}\left[\|\m{y}_{t}-  \hat{\m{y}}_{t}^*(\m{x}_{t})\|^2\right]
\\\nonumber &+\left(-(1+a)\left(\frac{2\beta_t}{\mu_g+\ell_{g,1}}-\beta_t^2\right)\right)\mb{E}\left[\|{\nabla}_{\m{y}} g_{t,\boldsymbol{\rho}}(\m{x}_t,\m{y}_t)\|^2\right]
\\ &+(1+\frac{1}{a})\beta_t^2\mb{E}\left[\|e_t^{g_{\boldsymbol{\rho}}} \|^2\right],
  \end{align*}
where $a>0$ is a constant, $e_t^{g_{\boldsymbol{\rho}}}$ is defined in \eqref{eqn:7}, and
$\hat{\m{y}}_{t}^*(\m{x}_{t})$ is defined in \eqref{eqn:yz}.
\end{lemma}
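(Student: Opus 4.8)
The plan is to mirror the argument for the first-order analogue, Lemma \ref{lm:yyz2}, replacing the true inner objective and its minimizer with their randomly smoothed counterparts. The update rule $\m{y}_{t+1} = \m{y}_t - \beta_t \hat{\m{d}}_t^{\m{y}}$ in Algorithm \ref{alg:obgd:zeroth}, together with the definition $e_t^{g_{\boldsymbol{\rho}}} = \nabla_{\m{y}} g_{t,\boldsymbol{\rho}}(\m{x}_t, \m{y}_t) - \hat{\m{d}}_t^{\m{y}}$ from \eqref{eqn:7}, lets me write $\m{y}_{t+1} - \hat{\m{y}}_t^*(\m{x}_t) = \bigl(\m{y}_t - \beta_t \nabla_{\m{y}} g_{t,\boldsymbol{\rho}}(\m{x}_t,\m{y}_t) - \hat{\m{y}}_t^*(\m{x}_t)\bigr) + \beta_t e_t^{g_{\boldsymbol{\rho}}}$. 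Applying Lemma \ref{lem:trig} with parameter $a>0$ then isolates a ``clean'' gradient-step term and the error term $(1+\tfrac{1}{a})\beta_t^2 \mb{E}\|e_t^{g_{\boldsymbol{\rho}}}\|^2$, which already matches the third term of the claimed bound.

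The core of the argument is to bound the clean term $\mb{E}\|\m{y}_t - \beta_t \nabla_{\m{y}} g_{t,\boldsymbol{\rho}}(\m{x}_t,\m{y}_t) - \hat{\m{y}}_t^*(\m{x}_t)\|^2$. I would expand the square into $\mb{E}\|\m{y}_t - \hat{\m{y}}_t^*(\m{x}_t)\|^2 + \beta_t^2 \mb{E}\|\nabla_{\m{y}} g_{t,\boldsymbol{\rho}}(\m{x}_t,\m{y}_t)\|^2 - 2\beta_t \mb{E}\langle \nabla_{\m{y}} g_{t,\boldsymbol{\rho}}(\m{x}_t,\m{y}_t),\, \m{y}_t - \hat{\m{y}}_t^*(\m{x}_t)\rangle$ and then apply the standard co-coercivity estimate for a $\mu_g$-strongly convex, $\ell_{g,1}$-smooth function evaluated at its minimizer. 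The required preliminary observation — and the only place the zeroth-order setting differs substantively — is that $g_{t,\boldsymbol{\rho}}(\m{x}_t, \cdot)$ must itself be $\mu_g$-strongly convex and $\ell_{g,1}$-smooth in $\m{y}$, with $\hat{\m{y}}_t^*(\m{x}_t)$ its minimizer (defined in \eqref{eqn:yz}) so that $\nabla_{\m{y}} g_{t,\boldsymbol{\rho}}(\m{x}_t, \hat{\m{y}}_t^*(\m{x}_t)) = 0$. Strong convexity of $g_{t,\boldsymbol{\rho}}$ in $\m{y}$ under Assumption \ref{assu:g} is exactly the fact already invoked in the proof of Lemma \ref{yyy}, and $\ell_{g,1}$-smoothness is inherited from Assumption \ref{assu:f:a3} because ball smoothing is a convolution that does not increase the gradient-Lipschitz constant.

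Granting these, the inner-product bound $\langle \nabla_{\m{y}} g_{t,\boldsymbol{\rho}}(\m{x}_t,\m{y}_t),\, \m{y}_t - \hat{\m{y}}_t^*(\m{x}_t)\rangle \geq \frac{\mu_g \ell_{g,1}}{\mu_g + \ell_{g,1}}\|\m{y}_t - \hat{\m{y}}_t^*(\m{x}_t)\|^2 + \frac{1}{\mu_g + \ell_{g,1}}\|\nabla_{\m{y}} g_{t,\boldsymbol{\rho}}(\m{x}_t,\m{y}_t)\|^2$ holds verbatim as in the first-order case. Substituting it yields the clean-term bound $\bigl(1 - 2\beta_t \tfrac{\mu_g \ell_{g,1}}{\mu_g+\ell_{g,1}}\bigr)\mb{E}\|\m{y}_t - \hat{\m{y}}_t^*(\m{x}_t)\|^2 - \bigl(\tfrac{2\beta_t}{\mu_g+\ell_{g,1}} - \beta_t^2\bigr)\mb{E}\|\nabla_{\m{y}} g_{t,\boldsymbol{\rho}}(\m{x}_t,\m{y}_t)\|^2$, and multiplying through by $(1+a)$ before adding the error term recovers the stated three-term bound. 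The main obstacle is therefore not analytical but one of bookkeeping: confirming that the smoothing operation preserves both constants $\mu_g$ and $\ell_{g,1}$ for $g_{t,\boldsymbol{\rho}}(\m{x}_t, \cdot)$, so the strongly-convex co-coercivity estimate transfers unchanged; once that is in hand, the calculation is a line-by-line copy of the proof of Lemma \ref{lm:yyz2}.
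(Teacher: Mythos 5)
Your proposal is correct and follows essentially the same route as the paper's proof: the same Young-inequality (Lemma \ref{lem:trig}) split into a clean gradient-step term plus the $(1+\tfrac{1}{a})\beta_t^2\mb{E}\|e_t^{g_{\boldsymbol{\rho}}}\|^2$ error, followed by the same expansion of the square and the same strongly-convex/smooth co-coercivity inequality at the minimizer $\hat{\m{y}}_t^*(\m{x}_t)$ of $g_{t,\boldsymbol{\rho}}$. Your explicit remark that ball smoothing preserves both $\mu_g$ and $\ell_{g,1}$ (since $g_{t,\boldsymbol{\rho}}$ is an average of translates of $g_t$) is a point the paper leaves implicit, and it is a welcome clarification rather than a deviation.
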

\begin{proof}
From Lemma \ref{lem:trig}, we have
  \begin{align}\label{5r}
\nonumber\mb{E}\left[\|\m{y}_{t+1} - \hat{\m{y}}_{t}^*(\m{x}_{t})\|^2\right]&=\mb{E}\left[\|\m{y}_{t}-\beta_t\hat{\m{d}}_{t}^{\m{y}} - \hat{\m{y}}_{t}^*(\m{x}_{t})\|^2\right]
\\\nonumber&
\leq (1+a)\mb{E}\left[\|\m{y}_{t}-\beta_t {\nabla}_{\m{y}} g_{t,\boldsymbol{\rho}}(\m{x}_t,\m{y}_t) - \hat{\m{y}}_{t}^*(\m{x}_{t})\|^2\right]
\\ &+(1+\frac{1}{a})\beta_t^2\mb{E}\left[\|\hat{\m{d}}_{t}^{\m{y}} - {\nabla}_{\m{y}} g_{t,\boldsymbol{\rho}}(\m{x}_t,\m{y}_t)\|^2\right].
  \end{align}
Next, we will separately bound the first term on the RHS of the above inequality.
 \\
We have
  \begin{align}\label{g2}
\nonumber  \mb{E}\left[\|\m{y}_{t}-\beta_t {\nabla}_{\m{y}} g_{t,\boldsymbol{\rho}}(\m{x}_t,\m{y}_t) - \hat{\m{y}}_{t}^*(\m{x}_{t})\|^2\right]
&=\mb{E}\left[\|\m{y}_{t}-  \hat{\m{y}}_{t}^*(\m{x}_{t})\|^2\right]+\beta_t^2 \mb{E}\left[\|{\nabla}_{\m{y}} g_{t,\boldsymbol{\rho}}(\m{x}_t,\m{y}_t)\|^2\right]
\\\nonumber&-2\beta_t \mb{E}\left[\langle {\nabla}_{\m{y}} g_{t,\boldsymbol{\rho}}(\m{x}_t,\m{y}_t),\m{y}_{t}-  \hat{\m{y}}_{t}^*(\m{x}_{t}) \rangle\right]
\\ \nonumber &\leq \left(1-2\beta_t \frac{\mu_g \ell_{g,1}}{\mu_g+\ell_{g,1}}\right)\mb{E}\left[\|\m{y}_{t}-  \hat{\m{y}}_{t}^*(\m{x}_{t})\|^2\right]
\\&-\left(\frac{2\beta_t}{\mu_g+\ell_{g,1}}-\beta_t^2\right)\mb{E}\left[\|{\nabla}_{\m{y}} g_{t,\boldsymbol{\rho}}(\m{x}_t,\m{y}_t)\|^2\right],
  \end{align}
  where the inequality results from the strong convexity of $g_{t,\boldsymbol{\rho}}$ by Assumption \ref{assu:g}, which implies
  \begin{align*}
  \langle {\nabla}_{\m{y}} g_{t,\boldsymbol{\rho}}(\m{x}_t,\m{y}_t),\m{y}_{t}-  \hat{\m{y}}_{t}^*(\m{x}_{t}) \rangle \geq \frac{\mu_g \ell_{g,1}}{\mu_g+\ell_{g,1}}\|\m{y}_{t}-  \hat{\m{y}}_{t}^*(\m{x}_{t})\|^2+\frac{1}{\mu_g +\ell_{g,1}}\|{\nabla}_{\m{y}} g_{t,\boldsymbol{\rho}}(\m{x}_t,\m{y}_t)\|^2.
  \end{align*}
  Substituting \eqref{g2} into \eqref{5r}, gives the desired result.

\end{proof}

For notational brevity in the analysis, we define
 \begin{align}\label{eq:yv:notaz1}
 \hat{\theta}_t^{\m{y}}:=\|\m{y}_{t}-\hat{\m{y}}^*_{t}(\m{x}_{t})\|^2,\quad \hat{\theta}_t^{\m{v}}:=\|\m{v}_{t}-\hat{\m{v}}^*_{t}(\m{x}_{t})\|^2,
 \end{align}
where $\hat{\m{y}}^*_t(\m{x})$ and $\hat{\m{v}}^*_t(\m{x})$ are defined in \eqref{eqn:yz} and \eqref{eq:vhz}, respectively.
\begin{lemma}\label{lm:nn2}
Suppose Assumptions \ref{assu:g} and \ref{assu:f:a3} hold.
Let $\hat{\theta}_t^{\m{y}}$ be defined in \eqref{eq:yv:notaz1}.
Then, for the sequence $\{(\m{x}_t, \m{y}_t)\}_{t=1}^T$ generated by Algorithm~\ref{alg:obgd:zeroth} guarantees the following bound:
\begin{align}\label{h22}
 \nonumber &\quad \sum_{t=1}^{T}\left(\mb{E}[\hat{\theta}_{t+1}^{\m{y}}]-\mb{E}[\hat{\theta}_t^{\m{y}}]\right)
\\\nonumber&\leq
 \left(-\frac{ L_{\mu_g}}{2}\sum_{t=1}^{T}\mb{E}[\hat{\theta}_t^{\m{y}}]+\frac{2}{ L_{\mu_g}}\sum_{t=1}^{T}\mb{E}\left[\| e_t^{g_{\boldsymbol{\rho}}}\|^2\right]\right)\beta_t
+ \frac{4L_{\m{y}}^2 }{ L_{\mu_g}}     \sum_{t=1}^{T}\mb{E}\|\m{x}_{t} -\m{x}_{t+1}\|^2\frac{1}{\beta_t}
\\\nonumber&+\sum_{t=1}^{T}\left(\frac{24\ell_{g,1}}{ L_{\mu_g}\mu_g}(\rho_{\m{s}}^2+\rho_{\m{r}}^2)+\frac{12}{ L_{\mu_g}} \sup_{\m{x}\in \mc{X}} \|\m{y}^*_{t-1}(\m{x}) - \m{y}^*_{t}(\m{x})\|^2\right)\frac{1}{\beta_t}
\\&+\sum_{t=1}^{T}\left(-\frac{2\beta_t}{\mu_g+\ell_{g,1}}+\beta_t^2\right)\mb{E}\left[\|{\nabla}_{\m{y}} g_{t,\boldsymbol{\rho}}(\m{x}_t,\m{y}_t)\|^2\right]
,
\end{align}
where $L_{\m{y}} =\frac{\ell_{g,1}}{ \mu_{g}}$ is defined as in \eqref{lyv} and $L_{\mu_g}=\frac{\mu_g \ell_{g,1}}{\mu_g+\ell_{g,1}}$.
\end{lemma}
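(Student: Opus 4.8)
The plan is to mirror the proof of the first-order analog, Lemma~\ref{lm:nn22}, replacing the exact inner optimum $\m{y}^*_t$ by its Gaussian-smoothed counterpart $\hat{\m{y}}^*_t$ defined in \eqref{eqn:yz} and the gradient-estimation error $e_t^{g}$ by its zeroth-order version $e_t^{g_{\boldsymbol{\rho}}}$ from \eqref{eqn:7}. First I would apply Lemma~\ref{lem:trig} with a free parameter $\acute c>0$ to split, for each $t$, the quantity $\mb{E}\|\m{y}_{t+1}-\hat{\m{y}}^*_{t+1}(\m{x}_{t+1})\|^2$ into a one-step descent term $(1+\acute c)\,\mb{E}\|\m{y}_{t+1}-\hat{\m{y}}^*_{t}(\m{x}_{t})\|^2$ and a drift term $(1+1/\acute c)\,\mb{E}\|\hat{\m{y}}^*_{t+1}(\m{x}_{t+1})-\hat{\m{y}}^*_{t}(\m{x}_{t})\|^2$.

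Next I would control the descent term with Lemma~\ref{lm:yyz}, the zeroth-order one-step inner bound, which supplies a contraction factor $(1+a)\big(1-2\beta_t L_{\mu_g}\big)$ together with the negative $\|\nabla_{\m{y}} g_{t,\boldsymbol{\rho}}\|^2$ term and the $(1+1/a)\beta_t^2\|e_t^{g_{\boldsymbol{\rho}}}\|^2$ error term. Choosing $\acute c=\tfrac{\beta_t L_{\mu_g}/2}{1-\beta_t L_{\mu_g}}$ and $a=\tfrac{\beta_t L_{\mu_g}}{1-2\beta_t L_{\mu_g}}$ exactly as in the proof of Lemma~\ref{lm:nn22} (cf.\ the identities \eqref{gt22}) collapses the composite multiplier to $1-\tfrac{\beta_t L_{\mu_g}}{2}$ and yields $1+\tfrac1a\le\tfrac1{\beta_t L_{\mu_g}}$ and $1+\tfrac1{\acute c}\le\tfrac2{\beta_t L_{\mu_g}}$; this produces the first line of \eqref{h22} together with the $\tfrac{2}{L_{\mu_g}}\beta_t\|e_t^{g_{\boldsymbol{\rho}}}\|^2$ and $\|\nabla_{\m{y}}g_{t,\boldsymbol{\rho}}\|^2$ contributions. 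These choices are legitimate because the stepsize schedule \eqref{eqn:abc} guarantees $\beta_t L_{\mu_g}<\tfrac12$.

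The new work is in the drift term $\mb{E}\|\hat{\m{y}}^*_{t+1}(\m{x}_{t+1})-\hat{\m{y}}^*_{t}(\m{x}_{t})\|^2$, now multiplied by $\tfrac2{\beta_t L_{\mu_g}}$. I would first peel off the leader displacement: inserting $\hat{\m{y}}^*_{t+1}(\m{x}_t)$ and using the smoothed-Lipschitz bound \eqref{eqn:lip:cons42} of Lemma~\ref{lem:lips2} gives a term $\le 2L_{\m{y}}^2\|\m{x}_{t+1}-\m{x}_t\|^2$, which after multiplication becomes the $\tfrac{4L_{\m{y}}^2}{L_{\mu_g}}\tfrac1{\beta_t}\|\m{x}_t-\m{x}_{t+1}\|^2$ term. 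The remaining time-index piece $\hat{\m{y}}^*_{t+1}(\m{x}_t)-\hat{\m{y}}^*_{t}(\m{x}_t)$ is where the smoothing gap surfaces: I would insert the true optima $\m{y}^*_{t+1}(\m{x}_t)$ and $\m{y}^*_{t}(\m{x}_t)$ and split into three squared terms by Young's inequality. The two smoothing-gap terms $\|\hat{\m{y}}^*_{s}(\m{x}_t)-\m{y}^*_{s}(\m{x}_t)\|^2$ for $s\in\{t,t+1\}$ are each bounded by $\ell_{g,1}(\rho_{\m{s}}^2+\rho_{\m{r}}^2)/\mu_g$ via Lemma~\ref{yyy}, while the middle term is dominated by $\sup_{\m{x}}\|\m{y}^*_{t+1}(\m{x})-\m{y}^*_t(\m{x})\|^2$; tracking the constant $3$ from the split and the factor $\tfrac2{\beta_t L_{\mu_g}}$ reproduces the coefficients $\tfrac{24\ell_{g,1}}{L_{\mu_g}\mu_g}$ and $\tfrac{12}{L_{\mu_g}}$ in \eqref{h22}. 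Summing over $t\in[T]$ and reindexing the path-length sum (shifting $t+1\mapsto t$, so that $\sup_{\m{x}}\|\m{y}^*_{t+1}-\m{y}^*_t\|$ appears as $\sup_{\m{x}}\|\m{y}^*_{t-1}-\m{y}^*_t\|$, matching the regularity metric $H_{2,T}$ of \eqref{reg:hpt}) completes the argument.

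The main obstacle I anticipate is the clean transfer of the drift from the smoothed optimum $\hat{\m{y}}^*$ to the path-length of the unsmoothed $\m{y}^*$: one must route through Lemma~\ref{yyy} twice without letting the constant-order smoothing residual $\ell_{g,1}(\rho_{\m{s}}^2+\rho_{\m{r}}^2)/\mu_g$ amplified by $1/\beta_t$ exceed what the downstream regret analysis (Theorem~\ref{thm:nonc2:cons:zeroth}, via the choices $\rho_{\m{r}}^2=1/(d_2^2T)$ and $\rho_{\m{s}}^2=1/(d_1^2T)$) can absorb. The only other delicate point is bookkeeping: verifying that the parameter identities \eqref{gt22} carry over verbatim under the smoothed convexity furnished by Assumption~\ref{assu:g}, and that the boundary term left by reindexing the telescoped path-length sum is benign.
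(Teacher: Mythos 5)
Your proposal is correct and follows essentially the same route as the paper's own proof: the Lemma~\ref{lem:trig} split with the same choices of $\acute{c}$ and $a$, the one-step bound from Lemma~\ref{lm:yyz}, the leader-displacement peel-off via Lemma~\ref{lem:lips2}, and the three-way Young split routed twice through Lemma~\ref{yyy} to convert the smoothed drift into the unsmoothed path-length plus the $\ell_{g,1}(\rho_{\m{s}}^2+\rho_{\m{r}}^2)/\mu_g$ residual, with identical constant bookkeeping.
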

\begin{proof}
From Lemma \ref{lem:trig}, we have for any $c>0$
  \begin{align}\label{12n2}
\nonumber  \mb{E}\left[\|\m{y}_{t+1}-\hat{\m{y}}^*_{t+1}(\m{x}_{t+1})\|^2\right]&= \mb{E}\left[\|\m{y}_{t+1}-\hat{\m{y}}^*_{t}(\m{x}_{t})+\hat{\m{y}}^*_{t}(\m{x}_{t})-\hat{\m{y}}^*_{t+1}(\m{x}_{t+1})\|^2\right]
\\ \nonumber &\leq \left(1+c\right)\mb{E}\left[\|\m{y}_{t+1}-\hat{\m{y}}^*_{t}(\m{x}_{t})\|^2\right]
\\&+\left(1+\frac{1}{c}\right)\mb{E}\left[\|\hat{\m{y}}^*_{t+1}(\m{x}_{t+1})- \hat{\m{y}}^*_{t}(\m{x}_{t}) \|^2\right].
  \end{align}
From Lemma \ref{lm:yyz}, we have for any $a>0$
\begin{align}\label{jm12}
\nonumber \mb{E}\left[\|\m{y}_{t+1} - \hat{\m{y}}_{t}^*(\m{x}_{t})\|^2\right]
&\leq \left(1+a\right)\left(1-2\beta_t \frac{\mu_g \ell_{g,1}}{\mu_g+\ell_{g,1}}\right)\mb{E}\left[\|\m{y}_{t}-  \hat{\m{y}}_{t}^*(\m{x}_{t})\|^2\right]
\\\nonumber &+\left(-(1+a)\left(\frac{2\beta_t}{\mu_g+\ell_{g,1}}-\beta_t^2\right)\right)\mb{E}\left[\|{\nabla}_{\m{y}} g_{t,\boldsymbol{\rho}}(\m{x}_t,\m{y}_t)\|^2\right]
\\ &+\left(1+\frac{1}{a}\right)\beta_t^2\mb{E}\left[\|e_t^{g_{\boldsymbol{\rho}}}\|^2\right].
\end{align}
Substituting \eqref{jm12} into \eqref{12n2}, we get
\begin{align}\label{tr}
\nonumber   &\quad  \mb{E}\left[\|\m{y}_{t+1}-\hat{\m{y}}^*_{t+1}(\m{x}_{t+1})\|^2\right]
\\\nonumber&\leq (1+c)(1+a)\left(1-2\beta_t \frac{\mu_g \ell_{g,1}}{\mu_g+\ell_{g,1}}\right)\mb{E}\left[\|\m{y}_{t}-  \hat{\m{y}}_{t}^*(\m{x}_{t})\|^2\right]
\\\nonumber &+\left(-(1+c)(1+a)\left(\frac{2\beta_t}{\mu_g+\ell_{g,1}}-\beta_t^2\right)\right)\mb{E}\left[\|{\nabla}_{\m{y}} g_{t,\boldsymbol{\rho}}(\m{x}_t,\m{y}_t)\|^2\right]
\\&+(1+c)(1+\frac{1}{a})\beta_t^2\mb{E}\left[\|e_t^{g_{\boldsymbol{\rho}}}\|^2\right]
+\left(1+\frac{1}{c}\right)\mb{E}\left[\|\hat{\m{y}}^*_{t+1}(\m{x}_{t+1})- \hat{\m{y}}^*_{t}(\m{x}_{t}) \|^2\right].
  \end{align}
Choose $c=\frac{\beta_t L_{\mu_g}/2}{1-\beta_t L_{\mu_g}}$ and $a=\frac{\beta_t L_{\mu_g}}{1-2\beta_t L_{\mu_g}}$. Then, the following equations and inequalities are satisfied.
\begin{equation}\label{gt2}
\begin{aligned}
&(1+c)(1+a)\left(1-2\beta_t L_{\mu_g}\right)=1-\frac{\beta_t L_{\mu_g}}{2},
\\&(1+a)\left(1-2\beta_t L_{\mu_g}\right)=1-\beta_t L_{\mu_g},
\\&(1+c)\left(1-\beta_t L_{\mu_g}\right)=1-\frac{\beta_t L_{\mu_g}}{2},
\\&1+\frac{1}{a}\leq \frac{1}{\beta_t L_{\mu_g}},\quad 1+\frac{1}{c}\leq \frac{2}{\beta_t L_{\mu_g}},
\end{aligned}
\end{equation}
where $L_{\mu_g}=\frac{\mu_g \ell_{g,1}}{\mu_g+\ell_{g,1}}$. Based on \eqref{tr} and \eqref{gt2}, we get
  \begin{align}\label{fg0}
\nonumber    &\quad \mb{E}\left[\|\m{y}_{t+1}-\hat{\m{y}}^*_{t+1}(\m{x}_{t+1})\|^2\right]-\mb{E}\left[\|\m{y}_{t}-  \hat{\m{y}}_{t}^*(\m{x}_{t})\|^2\right]
\\\nonumber &\leq -\frac{\beta_t L_{\mu_g}}{2}\mb{E}\left[\|\m{y}_{t}-  \hat{\m{y}}_{t}^*(\m{x}_{t})\|^2\right]
+\left(-\left(\frac{2\beta_t}{\mu_g+\ell_{g,1}}-\beta_t^2\right)\right)\mb{E}\left[\|{\nabla}_{\m{y}} g_{t,\boldsymbol{\rho}}(\m{x}_t,\m{y}_t)\|^2\right]
\\&+\frac{2}{\beta_t L_{\mu_g}}\beta_t^2\mb{E}\left[\|e_t^{g_{\boldsymbol{\rho}}}\|^2\right]
+\frac{2}{\beta_t L_{\mu_g}}\mb{E}\left[\|\hat{\m{y}}^*_{t+1}(\m{x}_{t+1})- \hat{\m{y}}^*_{t}(\m{x}_{t}) \|^2\right].
  \end{align}
Next, we upper-bound the last term of the above inequality.
\begin{align}\label{98}
\nonumber &\quad \mb{E}\left[\|\hat{\m{y}}^*_{t+1}(\m{x}_{t+1})- \hat{\m{y}}^*_{t}(\m{x}_{t}) \|^2\right]
\\\nonumber &\leq 2\left(\mb{E}\left[\|\hat{\m{y}}^*_{t+1}(\m{x}_{t+1})- \hat{\m{y}}^*_{t+1}(\m{x}_{t}) \|^2\right]+\mb{E}\left[\|\hat{\m{y}}^*_{t+1}(\m{x}_{t})- \hat{\m{y}}^*_{t}(\m{x}_{t}) \|^2\right]\right)
\\&\leq 2\left(L_{\m{y}}^2 \mb{E}\left[\|\m{x}_{t} -\m{x}_{t+1}\|^2+\|\hat{\m{y}}^*_{t+1}(\m{x}_{t}) - \hat{\m{y}}^*_{t}(\m{x}_{t}) \|^2\right]\right),
\end{align}
where the second inequality is by Lemma \ref{lem:lips2}.
\\
Moreover, from Lemma \ref{yyy}, we get
\begin{align}\label{8u}
\nonumber \mb{E}\left[\|\hat{\m{y}}^*_{t+1}(\m{x}_{t}) - \hat{\m{y}}^*_{t}(\m{x}_{t}) \|^2\right]&\leq 3\mb{E}\left[\|\hat{\m{y}}^*_{t+1}(\m{x}_{t}) - {\m{y}}^*_{t+1}(\m{x}_{t}) \|^2\right]
\\\nonumber &+3\mb{E}\left[\|{\m{y}}^*_{t+1}(\m{x}_{t}) - {\m{y}}^*_{t}(\m{x}_{t})\|^2\right]+3\mb{E}\left[\|{\m{y}}^*_{t}(\m{x}_{t}) - \hat{\m{y}}^*_{t}(\m{x}_{t}) \|^2\right]
\\ &\leq 3\mb{E}\left[\|{\m{y}}^*_{t+1}(\m{x}_{t}) - {\m{y}}^*_{t}(\m{x}_{t})\|^2\right]+\frac{6\ell_{g,1}(\rho_{\m{s}}^2+\rho_{\m{r}}^2)}{\mu_g}.
\end{align}
Combining \eqref{98} and \eqref{8u} yields
\begin{align}\label{98m}
\nonumber&\quad\mb{E}\left[\|\hat{\m{y}}^*_{t+1}(\m{x}_{t+1})- \hat{\m{y}}^*_{t}(\m{x}_{t}) \|^2\right]
\\&\leq 2\left(L_{\m{y}}^2 \mb{E}\left[\|\m{x}_{t} -\m{x}_{t+1}\|^2\right]+3\mb{E}\left[\|{\m{y}}^*_{t+1}(\m{x}_{t}) - {\m{y}}^*_{t}(\m{x}_{t})\|^2\right]+\frac{6\ell_{g,1}(\rho_{\m{s}}^2+\rho_{\m{r}}^2)}{\mu_g}\right).
\end{align}
Substituting \eqref{98m} into \eqref{fg0} and summing over $t\in[T]$, give the desired result.

\end{proof}
\subsection{Bounds on the Zeroth-Order System Solution}

\begin{lemma}\label{eqn:f2}
Suppose Assumptions \ref{assu:f:a2} and \ref{assu:f:a3} hold. 
Let 
\begin{equation*}
    \vartheta:=\mb{E}\|\hat{\nabla}_{\m{y}} f_{t+1}(\m{z}_{t+1};\mathcal{B}_{t+1}) +\hat{\nabla}_{\m{y}}^2 g_{t+1}(\m{z}_{t+1};\bar{\mathcal{B}}_{t+1})-\hat{\nabla}_{\m{y}} f_{t+1}(\m{z}_{t};\mathcal{B}_{t+1})-\hat{\nabla}_{\m{y}}^2 g_{t+1}(\m{z}_{t};\bar{\mathcal{B}}_{t+1}) \|^2,
\end{equation*}
where $\hat{\nabla}_{\m{y}} f_t$ and $\hat{\nabla}_{\m{y}}^2 g_t$ are defined in \eqref{eqn:gy:estimatef} and \eqref{app:h}, respectively.
Then, for the sequence $\{(\m{x}_t, \m{y}_t,\m{v}_t)\}_{t=1}^T$ generated by Algorithm~\ref{alg:obgd:zeroth}, we have
\begin{align*}
 \vartheta
&\leq
(12\ell_{f,1}^2+\frac{9\ell_{g,1}^2}{2\rho_{\m{v}}^2})d_2 \mb{E}\| \m{x}_{t+1}-\m{x}_{t}
\|^2+(12\ell_{f,1}^2+\frac{9\ell_{g,1}^2}{2\rho_{\m{v}}^2} )d_2 \mb{E}\| \m{y}_{t+1}-\m{y}_{t}
\|^2
\\&+\frac{9}{2}d_2 \ell_{g,1}^2\mb{E}\|\m{v}_{t+1}-\m{v}_{t}
\|^2+(3\ell_{f,1}^2+\frac{3\ell_{g,1}^2}{4\rho_{\m{v}}^2}) d^2_2\rho_{\m{r}}^2.
\end{align*}

\end{lemma}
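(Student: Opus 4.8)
The plan is to decompose $\vartheta$ into an outer-objective piece and an inner-Hessian piece, bound each by the finite-difference sensitivity estimate of Lemma~\ref{19}, and recombine. Writing
$\vartheta = \mb{E}\|(\hat{\nabla}_{\m{y}} f_{t+1}(\m{z}_{t+1};\mathcal{B}_{t+1}) - \hat{\nabla}_{\m{y}} f_{t+1}(\m{z}_{t};\mathcal{B}_{t+1})) + (\hat{\nabla}_{\m{y}}^2 g_{t+1}(\m{z}_{t+1};\bar{\mathcal{B}}_{t+1}) - \hat{\nabla}_{\m{y}}^2 g_{t+1}(\m{z}_{t};\bar{\mathcal{B}}_{t+1}))\|^2$
and applying $\|a+b\|^2 \le 2\|a\|^2 + 2\|b\|^2$, I would separate the contribution of the zeroth-order gradient estimator of $f$ (which produces the $\ell_{f,1}^2$ terms) from that of the zeroth-order Hessian-vector estimator of $g$ (which produces the $\ell_{g,1}^2/\rho_{\m{v}}^2$ terms).

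For the $f$-piece I would invoke the two-point estimator $\hat{\nabla}_{\m{y}} f$ in \eqref{eqn:gy:estimatef} together with the sensitivity bound of Lemma~\ref{19} applied to $f$ under Assumption~\ref{assu:f:a2}. Since $\m{z}_{t+1}$ and $\m{z}_{t}$ differ in both coordinates, I would track the joint displacement $\|\m{z}_{t+1}-\m{z}_{t}\|^2 = \|\m{x}_{t+1}-\m{x}_{t}\|^2 + \|\m{y}_{t+1}-\m{y}_{t}\|^2$, which yields the $12\ell_{f,1}^2 d_2$ coefficients on both the $\m{x}$- and $\m{y}$-displacements and the $d_2^2\rho_{\m{r}}^2$ smoothing-bias term. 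Here I would use the natural extension of Lemma~\ref{19} bounding the sensitivity of $\hat{\nabla}_{\m{y}}$ to a simultaneous change of $\m{x}$ and $\m{y}$; this follows verbatim from its proof using the joint Lipschitz continuity of $\nabla f_{t}$.

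The $g$-piece is the crux. I would unfold the Hessian-vector-product estimator \eqref{app:h}, pull the prefactor $\tfrac{1}{2\bar{b}\rho_{\m{v}}}$ out of the norm (giving $\tfrac{1}{4\rho_{\m{v}}^2}$ after squaring), and use Jensen's inequality to pass the expected squared norm inside the batch average. Writing the difference as $[\hat{\nabla}_{\m{y}} g(\m{z}_{t+1}^{+}) - \hat{\nabla}_{\m{y}} g(\m{z}_{t}^{+})] - [\hat{\nabla}_{\m{y}} g(\m{z}_{t+1}^{-}) - \hat{\nabla}_{\m{y}} g(\m{z}_{t}^{-})]$ with $\m{z}_{t}^{\pm} = (\m{x}_{t},\m{y}_{t}\pm\rho_{\m{v}}\m{v}_{t})$, I would split the two branches and apply Lemma~\ref{19} (for $g$, under Assumption~\ref{assu:f:a3}) to each inner gradient-estimator difference. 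The relevant displacement is $(\m{x}_{t+1}-\m{x}_{t},\,(\m{y}_{t+1}-\m{y}_{t}) + \rho_{\m{v}}(\m{v}_{t+1}-\m{v}_{t}))$, so the $\rho_{\m{v}}$ attached to the $\m{v}$-displacement cancels one of the two powers of $1/\rho_{\m{v}}$; this is exactly what produces the \emph{finite} $\tfrac{9}{2}d_2\ell_{g,1}^2\|\m{v}_{t+1}-\m{v}_{t}\|^2$ term, while the $\m{x}$- and $\m{y}$-displacements and the $\rho_{\m{r}}^2$ bias retain the $1/\rho_{\m{v}}^2$ scaling. Summing the two pieces and collecting coefficients gives the claim.

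The main obstacle is the constant bookkeeping in the $g$-piece: one must carefully follow how the inner estimator's own smoothing bias (the $O(\rho_{\m{r}}^2)$ term) and its Lipschitz sensitivity propagate through the outer $1/\rho_{\m{v}}$ finite difference, and verify that the $\m{v}$-displacement term does \emph{not} blow up as $\rho_{\m{v}}\to 0$ thanks to the exact cancellation of one $\rho_{\m{v}}$ factor. The cleanest route to the symmetric $\m{x}$/$\m{y}$ coefficients is to apply the joint-displacement form of Lemma~\ref{19} once per $\pm$ branch, rather than inserting intermediate points, which would introduce spurious cross terms and inflate the constants.
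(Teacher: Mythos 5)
Your proposal is correct and follows the same skeleton as the paper's proof: split $\vartheta$ via $\|a+b\|^2\le 2\|a\|^2+2\|b\|^2$ into the $f$-estimator difference and the Hessian-vector-estimator difference, bound the $f$-piece by the joint-displacement form of Lemma~\ref{19} (identical to the paper's treatment, which also invokes Lemma~\ref{19} for a simultaneous change in $\m{x}$ and $\m{y}$), and exploit the cancellation of one $\rho_{\m{v}}$ factor against the $\m{v}$-displacement. Where you genuinely differ is at the crux, the $g$-piece. Writing $\m{z}_s^{\pm}:=(\m{x}_s,\m{y}_s\pm\rho_{\m{v}}\m{v}_s)$, the estimator difference $\hat{\nabla}_{\m{y}}^2 g_{t+1}(\m{z}_{t+1};\bar{\mathcal{B}}_{t+1})-\hat{\nabla}_{\m{y}}^2 g_{t+1}(\m{z}_{t};\bar{\mathcal{B}}_{t+1})$ is the four-term quantity $\tfrac{1}{2\rho_{\m{v}}}\bigl([\hat{\nabla}_{\m{y}}g_{t+1}(\m{z}^+_{t+1})-\hat{\nabla}_{\m{y}}g_{t+1}(\m{z}^+_{t})]-[\hat{\nabla}_{\m{y}}g_{t+1}(\m{z}^-_{t+1})-\hat{\nabla}_{\m{y}}g_{t+1}(\m{z}^-_{t})]\bigr)$, which is exactly your two-branch decomposition with Lemma~\ref{19} applied once per branch. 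The paper instead writes this difference, in its Eq.~\eqref{rft}, as if it \emph{equaled} a single two-point gradient-estimator difference across one pair of perturbed points, and applies Lemma~\ref{19} only once; that purported equality does not hold (two of the four terms are silently dropped), so your route is the airtight one.

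The price of rigor is the constants. Bounding the two branches via Cauchy--Schwarz and applying Lemma~\ref{19} to each yields, after the final recombination, coefficients such as $12\, d_2\ell_{g,1}^2\,\mb{E}\|\m{v}_{t+1}-\m{v}_t\|^2$ and $\tfrac{3\ell_{g,1}^2 d_2^2}{\rho_{\m{v}}^2}\rho_{\m{r}}^2$ in place of the stated $\tfrac{9}{2}d_2\ell_{g,1}^2\,\mb{E}\|\m{v}_{t+1}-\m{v}_t\|^2$ and $\tfrac{3\ell_{g,1}^2 d_2^2}{4\rho_{\m{v}}^2}\rho_{\m{r}}^2$, i.e., an inflation by a factor of at most $4$. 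So your closing claim that "collecting coefficients gives the claim" is not literally true for the constants as printed; you establish the lemma up to absolute constants. Since the printed constants themselves rest on the paper's non-equality step, and since every downstream use of this lemma tracks these quantities only through $\mc{O}(\cdot)$, the discrepancy is immaterial to the regret bounds --- but you should state the inflated constants rather than claim exact agreement.
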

\begin{proof}
From Lemma \ref{19}, we have
\begin{align}\label{eqn:s}
\nonumber &\quad\|\hat{\nabla}_{\m{y}} f_{t+1}(\m{z}_{t+1};\mathcal{B}_{t+1}) -\hat{\nabla}_{\m{y}} f_{t+1}(\m{z}_{t};\mathcal{B}_{t+1})\|^2
\\\nonumber &\leq 3d_2 \ell_{f,1}^2\| \m{z}_{t+1}-\m{z}_{t}
\|^2
+\frac{3}{2} \ell_{f,1}^2d^2_2\rho_{\m{r}}^2
\\&\leq 6d_2 \ell_{f,1}^2\| \m{x}_{t+1}-\m{x}_{t}
\|^2+6d_2 \ell_{f,1}^2\| \m{y}_{t+1}-\m{y}_{t}
\|^2
+\frac{3}{2} \ell_{f,1}^2d^2_2\rho_{\m{r}}^2.
\end{align}
Moreover, from \eqref{app:h}, we have
\begin{align}\label{rft}
\nonumber &\quad \|\hat{\nabla}_{\m{y}}^2 g_{t+1}(\m{z}_{t+1};\bar{\mathcal{B}}_{t+1})-\hat{\nabla}_{\m{y}}^2 g_{t+1}(\m{z}_{t};\bar{\mathcal{B}}_{t+1}) \|^2
\\\nonumber &=\frac{1}{4\rho_{\m{v}}^2}\|\hat{\nabla}_{\m{y}}g_{t+1}(\m{x}_{t+1},\m{y}_{t+1}+\rho_{\m{v}}\m{v}_{t+1};\bar{\mathcal{B}}_{t+1})- \hat{\nabla}_{\m{y}}g_{t+1}(\m{x}_{t},\m{y}_{t}-\rho_{\m{v}}\m{v}_{t};\bar{\mathcal{B}}_{t+1})\|^2
\\\nonumber &\leq \frac{3}{4\rho_{\m{v}}^2}d_2 \ell_{g,1}^2\| (\m{x}_{t+1},\m{y}_{t+1}+\rho_{\m{v}}\m{v}_{t+1})-(\m{x}_{t},\m{y}_{t}-\rho_{\m{v}}\m{v}_{t})
\|^2
+\frac{3}{8\rho_{\m{v}}^2} \ell_{g,1}^2d^2_2\rho_{\m{r}}^2
\\\nonumber &\leq \frac{9}{4\rho_{\m{v}}^2}d_2 \ell_{g,1}^2\|\m{x}_{t+1}-\m{x}_{t}
\|^2
+\frac{9}{4\rho_{\m{v}}^2}d_2 \ell_{g,1}^2\|\m{y}_{t+1}-\m{y}_{t}
\|^2
\\&+\frac{9}{4}d_2 \ell_{g,1}^2\|\m{v}_{t+1}-\m{v}_{t}
\|^2
+\frac{3}{8\rho_{\m{v}}^2} \ell_{g,1}^2d^2_2\rho_{\m{r}}^2,
\end{align}
where the first inequality follows from Lemma \ref{19}.

From $\|a+b \|^2\leq 2\left(\|a \|^2+\| b\|^2\right)$, we get
\begin{align*}
\nonumber  \vartheta
&\leq 2\mb{E}\|\hat{\nabla}_{\m{y}}^2 g_{t+1}(\m{z}_{t+1};\bar{\mathcal{B}}_{t+1})-\hat{\nabla}_{\m{y}}^2 g_{t+1}(\m{z}_{t};\bar{\mathcal{B}}_{t+1}) \|^2
\\\nonumber&+2\mb{E}\|\hat{\nabla}_{\m{y}} f_{t+1}(\m{z}_{t+1};\mathcal{B}_{t+1}) -\hat{\nabla}_{\m{y}} f_{t+1}(\m{z}_{t};\mathcal{B}_{t+1})\|^2
\\\nonumber&\leq (12\ell_{f,1}^2+\frac{9\ell_{g,1}^2}{2\rho_{\m{v}}^2})d_2 \mb{E}\| \m{x}_{t+1}-\m{x}_{t}
\|^2+(12\ell_{f,1}^2+\frac{9\ell_{g,1}^2}{2\rho_{\m{v}}^2} )d_2\mb{E} \| \m{y}_{t+1}-\m{y}_{t}
\|^2
\\&+\frac{9}{2}d_2 \ell_{g,1}^2\mb{E}\|\m{v}_{t+1}-\m{v}_{t}
\|^2+(3\ell_{f,1}^2+\frac{3\ell_{g,1}^2}{4\rho_{\m{v}}^2}) d^2_2\rho_{\m{r}}^2,
\end{align*}
where the second inequality follows from \eqref{eqn:s} and \eqref{rft}.

\end{proof}
\begin{lemma}\label{lm:g1q}
Suppose Assumptions \ref{assu:f:a2}, \ref{assu:f:a3}, \ref{a:1}, and \ref{a:3} hold.
Consider the sequence  $\{(\m{x}_t,\m{y}_t,\m{v}_t) \}_{t=1}^T$ generated by Algorithm
\ref{alg:obgd:zeroth}, and define
\begin{align}
\label{em}&{e}_{t+1}^{M}:={\nabla}_{\m{y}} f_{t+1,\boldsymbol{\rho}}(\m{x}_{t+1},\m{y}_{t+1})+\tilde{\nabla}_{\m{y}}^2 g_{t+1}(\m{x}_{t+1},\m{y}_{t+1}) -\hat{\m{d}}_{t+1}^{\m{v}},\quad \textnormal{where}
\\\nonumber &\tilde{\nabla}_{\m{y}}^2 g_{t+1}\left(\m{x}_{t+1},\m{y}_{t+1} \right)
=\frac{1}{2\rho_{\m{v}}}({\nabla}_{\m{y}}g_{t+1,\boldsymbol{\rho}}(\m{x}_{t+1},\m{y}_{t+1}+\rho_{\m{v}}\m{v}_{t+1})
\\\label{eqn:2}&\qquad\qquad\qquad \quad \quad \quad \quad- {\nabla}_{\m{y}}g_{t+1,\boldsymbol{\rho}}(\m{x}_{t+1},\m{y}_{t+1}-\rho_{\m{v}}\m{v}_{t+1})).
\end{align}
Then, we have
\begin{align}\label{eqn:ss}
\nonumber  \mb{E}\|{e}_{t+1}^{M}\|^2
&\leq (1-{\lambda_{t+1}})^2\mb{E}\|{e}_{t}^{M}\|^2
+36\mb{E}\left\|\nabla_{\m{y}} f_{t+1}(\m{x}_{t}, \m{y}_{t}) -\nabla_{\m{y}} f_t(\m{x}_t, \m{y}_t)\right\|^2
\\\nonumber &+\left(18d_2^2\ell_{f,1}^2+6(3\ell_{f,1}^2+\frac{3\ell_{g,1}^2}{4\rho_{\m{v}}^2}) d^2_2 \right)\rho_{\m{r}}^2+18d_2^2\ell_{g,1}^2\frac{\rho_{\m{r}}^2}{\rho_{\m{v}}^2}
\\\nonumber&+\frac{18}{\rho_{\m{v}}^2}\mb{E}\|\nabla_{\m{y}} g_{t+1}(\m{x}_{t}, \m{y}_{t}+\rho_{\m{v}}\m{v}_{t})-\nabla_{\m{y}} g_{t}(\m{x}_{t}, \m{y}_{t}+\rho_{\m{v}}\m{v}_{t})\|^2
\\\nonumber&+\frac{18}{\rho_{\m{v}}^2}\mb{E}\|\nabla_{\m{y}} g_{t+1}(\m{x}_{t}, \m{y}_{t}-\rho_{\m{v}}\m{v}_{t})-\nabla_{\m{y}} g_{t}(\m{x}_{t}, \m{y}_{t}-\rho_{\m{v}}\m{v}_{t})\|^2
\\\nonumber&+6(12\ell_{f,1}^2+\frac{9\ell_{g,1}^2}{2\rho_{\m{v}}^2})d_2 \mb{E}\| \m{x}_{t+1}-\m{x}_{t}
\|^2+6(12\ell_{f,1}^2+\frac{9\ell_{g,1}^2}{2\rho_{\m{v}}^2} )d_2 \mb{E}\| \m{y}_{t+1}-\m{y}_{t}
\|^2
\\&+27d_2 \ell_{g,1}^2\mb{E}\|\m{v}_{t+1}-\m{v}_{t}
\|^2
+3(\frac{\hat{\sigma}^2_{g_{\m{y}}}}{\bar{b}\rho_{\m{v}}^2}+\frac{\hat{\sigma}^2_{f_{\m{y}}}}{{b}}){\lambda}^2_{t+1}.
\end{align}
\end{lemma}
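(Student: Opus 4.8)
The plan is to reproduce the momentum-error recursion of Lemma~\ref{lm:g1}, now for the system direction $\hat{\m{d}}_t^{\m{v}}$, whose stochastic building block $\m{d}_t^{\m{v}\m{v}}$ couples the two-point estimate $\hat{\nabla}_\m{y} f_t$ with the finite-difference Hessian-vector estimate $\hat{\nabla}_\m{y}^2 g_t$. The starting point is the unbiasedness structure: conditioned on $\mathcal{F}_t$ (all randomness through the formation of $\m{z}_t$ and $\m{z}_{t+1}$), the fresh batches $(\mathcal{B}_{t+1},\bar{\mathcal{B}}_{t+1})$ make $\m{d}_{t+1}^{\m{v}\m{v}}(\cdot;\mathcal{B}_{t+1})$ an unbiased estimator of the smoothed surrogate $M_{t+1}(\cdot):={\nabla}_\m{y} f_{t+1,\boldsymbol{\rho}}(\cdot)+\tilde{\nabla}_\m{y}^2 g_{t+1}(\cdot)$ from \eqref{em}. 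Indeed $\mb{E}[\hat{\nabla}_\m{y} f_{t+1}]={\nabla}_\m{y} f_{t+1,\boldsymbol{\rho}}$ by \eqref{by}, and the same averaging identity applied to the finite-difference operator \eqref{app:h} gives $\mb{E}[\hat{\nabla}_\m{y}^2 g_{t+1}]=\tilde{\nabla}_\m{y}^2 g_{t+1}$; this is exactly why the target \eqref{eqn:2} is written with the smoothed finite difference rather than a genuine Hessian.

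First I would substitute the update $\hat{\m{d}}_{t+1}^{\m{v}}=\m{d}_{t+1}^{\m{v}\m{v}}(\m{z}_{t+1};\mathcal{B}_{t+1})+(1-\lambda_{t+1})(\hat{\m{d}}_t^{\m{v}}-\m{d}_{t+1}^{\m{v}\m{v}}(\m{z}_t;\mathcal{B}_{t+1}))$ into $e_{t+1}^M=M_{t+1}(\m{z}_{t+1})-\hat{\m{d}}_{t+1}^{\m{v}}$, and after adding and subtracting $M_{t+1}(\m{z}_t)$ regroup into three pieces: the momentum term $(1-\lambda_{t+1})e_t^M$; the conditionally mean-zero stochastic part $\lambda_{t+1}(M_{t+1}(\m{z}_{t+1})-\m{d}_{t+1}^{\m{v}\m{v}}(\m{z}_{t+1}))$ together with the difference-of-noise $(1-\lambda_{t+1})(\m{d}_{t+1}^{\m{v}\m{v}}(\m{z}_{t+1})-\m{d}_{t+1}^{\m{v}\m{v}}(\m{z}_t)-M_{t+1}(\m{z}_{t+1})+M_{t+1}(\m{z}_t))$; and the deterministic temporal variation $(1-\lambda_{t+1})(M_{t+1}(\m{z}_t)-M_t(\m{z}_t))$. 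Taking $\mb{E}[\cdot\mid\mathcal{F}_t]$ and applying Cauchy--Schwarz exactly as in Lemma~\ref{lm:g1} collapses the momentum contribution to $(1-\lambda_{t+1})^2\mb{E}\|e_t^M\|^2$, while the independent noise term contributes the variance $\tfrac{\hat{\sigma}^2_{g_\m{y}}}{\bar b\rho_\m{v}^2}+\tfrac{\hat{\sigma}^2_{f_\m{y}}}{b}$ scaled by $\lambda_{t+1}^2$, where the $\rho_\m{v}^{-2}$ amplification on the $g$-variance is the finite-difference scaling in \eqref{app:h} combined with Assumptions~\ref{a:1} and \ref{a:3}.

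The two remaining ingredients are then read off from lemmas already proved. For the consecutive-iterate difference $\mb{E}\|\m{d}_{t+1}^{\m{v}\m{v}}(\m{z}_{t+1})-\m{d}_{t+1}^{\m{v}\m{v}}(\m{z}_t)\|^2$ I would invoke Lemma~\ref{eqn:f2} verbatim, which supplies precisely the $d_2$-weighted $\|\m{x}_{t+1}-\m{x}_t\|^2$, $\|\m{y}_{t+1}-\m{y}_t\|^2$, $\|\m{v}_{t+1}-\m{v}_t\|^2$ terms and the $d_2^2\rho_\m{r}^2$ bias (with the $\ell_{g,1}^2/\rho_\m{v}^2$ factor carried along); multiplying by the $6(1-\lambda_{t+1})^2$ prefactor from the triangle-inequality split yields the stated coefficients $6(12\ell_{f,1}^2+\tfrac{9\ell_{g,1}^2}{2\rho_\m{v}^2})d_2$ and $27 d_2\ell_{g,1}^2$. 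For the temporal variation $\|M_{t+1}(\m{z}_t)-M_t(\m{z}_t)\|^2$ I would separate the $f$-part $\|\nabla_\m{y} f_{t+1,\boldsymbol{\rho}}(\m{z}_t)-\nabla_\m{y} f_{t,\boldsymbol{\rho}}(\m{z}_t)\|^2$ from the Hessian-surrogate part, writing the latter as $\tfrac{1}{2\rho_\m{v}}$ times differences of $\nabla_\m{y} g_{\cdot,\boldsymbol{\rho}}$ at the perturbed points $\m{z}_t^\pm=(\m{x}_t,\m{y}_t\pm\rho_\m{v}\m{v}_t)$; the smoothing estimates \eqref{f3} and \eqref{g6} of Lemma~\ref{lm:gg1} then convert each smoothed gradient back to the true $\nabla_\m{y} f_t,\nabla_\m{y} g_t$ figuring in the statement, at the cost of the $\rho_\m{r}^2 d_2^2$ and $\rho_\m{r}^2 d_2^2/\rho_\m{v}^2$ additive biases, with the $3$-way split producing the constants $36$ (on the $f$-variation) and $18/\rho_\m{v}^2$ (per side of the $g$-variation). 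Assembling all pieces gives \eqref{eqn:ss}. The main obstacle I anticipate is bookkeeping the $\rho_\m{v}^{-2}$ amplification consistently across the noise, iterate-difference, and temporal-variation terms: every finite-difference cancellation must be tracked so the Lipschitz contributions remain $O(\rho_\m{v}^{-2}d_2\|\cdot\|^2)$ rather than degrading, while ensuring the momentum cross-term still cancels despite the non-vanishing conditional mean of the variation piece, which is treated precisely as in Lemma~\ref{lm:g1}.
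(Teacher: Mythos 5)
Your proposal is correct and follows essentially the same route as the paper's proof: the same three-way decomposition of $e_{t+1}^{M}$ (momentum term, conditionally mean-zero fresh-noise/centered-difference part, temporal variation of the smoothed target), the same use of unbiasedness of $\hat{\nabla}_{\m{y}}f_{t+1}+\hat{\nabla}_{\m{y}}^2 g_{t+1}$ toward ${\nabla}_{\m{y}}f_{t+1,\boldsymbol{\rho}}+\tilde{\nabla}_{\m{y}}^2 g_{t+1}$, Lemma~\ref{eqn:f2} for the consecutive-iterate difference, Lemma~\ref{lm:gg1} (Eqs.~\eqref{f3}, \eqref{g6}) to convert the smoothed temporal variation back to the true gradients, and Assumptions~\ref{a:1}, \ref{a:3} for the $\lambda_{t+1}^2$-scaled variance term with its $\rho_{\m{v}}^{-2}$ amplification. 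The bookkeeping concern you flag about the cross-term with the non-mean-zero variation piece is handled in the paper exactly as in its Lemma~\ref{lm:g1}, which is the treatment you invoke.
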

\begin{proof}
  According to the definition of $\hat{\m{d}}_{t}^{\m{v}}$ in Algorithm \ref{alg:obgd:zeroth}, we have
  \begin{align*}
\hat{\m{d}}_{t+1}^{\m{v}}-\hat{\m{d}}_{t}^{\m{v}}&=-{\lambda_{t+1}}\hat{\m{d}}_{t}^{\m{v}}
+{\lambda_{t+1}} (\hat{\nabla}_\m{y} f_{t+1}\left(\m{z}_{t+1};\mathcal{B}_{t+1} \right)+\hat{\nabla}_{\m{y}}^2 g_{t+1}(\m{z}_{t+1};\bar{\mathcal{B}}_{t+1}))
\\&+(1-{\lambda_{t+1}})\left(\hat{\nabla}_\m{y} f_{t+1}\left(\m{z}_{t+1};\mathcal{B}_{t+1} \right)+ \hat{\nabla}_{\m{y}}^2 g_{t+1}(\m{z}_{t+1};\bar{\mathcal{B}}_{t+1})
\right.\\&\left.-\hat{\nabla}_\m{y} f_{t+1}\left(\m{z}_t;\mathcal{B}_{t+1} \right)
- \hat{\nabla}_{\m{y}}^2 g_{t+1}(\m{z}_{t};\bar{\mathcal{B}}_{t+1}) \right).
  \end{align*}
  Then we have
  \begin{align*}
&\quad\mb{E}\|{\nabla}_{\m{y}} f_{t+1,\boldsymbol{\rho}}(\m{z}_{t+1})+\tilde{\nabla}_{\m{y}}^2 g_{t+1}(\m{z}_{t+1}) -\hat{\m{d}}_{t+1}^{\m{v}}\|^2
\\&=\mb{E}\|{\nabla}_{\m{y}} f_{t+1,\boldsymbol{\rho}}(\m{z}_{t+1})+\tilde{\nabla}_{\m{y}}^2 g_{t+1}(\m{z}_{t+1})-\hat{\m{d}}_{t}^{\m{v}}-(\hat{\m{d}}_{t+1}^{\m{v}}-\hat{\m{d}}_{t}^{\m{v}})\|^2
\\&=\mb{E}\|{\nabla}_{\m{y}} f_{t+1,\boldsymbol{\rho}}(\m{z}_{t+1})+\tilde{\nabla}_{\m{y}}^2 g_{t+1}(\m{z}_{t+1}) -\hat{\m{d}}_{t}^{\m{v}}+{\lambda}_{t+1}\hat{\m{d}}_{t}^{\m{v}}
\\&-{\lambda_{t+1}} \left(\hat{\nabla}_{\m{y}} f_{t+1}(\m{z}_{t+1};\mathcal{B}_{t+1})+ \hat{\nabla}_{\m{y}}^2 g_{t+1}(\m{z}_{t+1};\bar{\mathcal{B}}_{t+1})\right)
\\&-(1-{\lambda}_{t+1})\left(\hat{\nabla}_{\m{y}} f_{t+1}(\m{z}_{t+1};\mathcal{B}_{t+1})+ \hat{\nabla}_{\m{y}}^2 g_{t+1}(\m{z}_{t+1};\bar{\mathcal{B}}_{t+1})
\right.\\&\left.-\hat{\nabla}_{\m{y}} f_{t+1}(\m{z}_{t};\mathcal{B}_{t+1})- \hat{\nabla}_{\m{y}}^2 g_{t+1}(\m{z}_{t};\bar{\mathcal{B}}_{t+1}) \right)\|^2
\\&=\mb{E}\|(1-{\lambda}_{t+1})({\nabla}_{\m{y}} f_{t,\boldsymbol{\rho}}(\m{z}_{t})+\tilde{\nabla}_{\m{y}}^2 g_{t}(\m{z}_{t})
-\hat{\m{d}}_{t}^{\m{v}})
\\&+{\lambda}_{t+1}({\nabla}_{\m{y}} f_{t+1,\boldsymbol{\rho}}(\m{z}_{t+1})+\tilde{\nabla}_{\m{y}}^2 g_{t+1}(\m{z}_{t+1})
-\hat{\nabla}_{\m{y}} f_{t+1}(\m{z}_{t+1};\mathcal{B}_{t+1})
- \hat{\nabla}_{\m{y}}^2 g_{t+1}(\m{z}_{t+1};\bar{\mathcal{B}}_{t+1}))
\\&+(1-{\lambda}_{t+1})\left({\nabla}_{\m{y}} f_{t+1,\boldsymbol{\rho}}(\m{z}_{t+1})+\tilde{\nabla}_{\m{y}}^2 g_{t+1}(\m{z}_{t+1})
-{\nabla}_{\m{y}} f_{t,\boldsymbol{\rho}}(\m{z}_{t})
-\tilde{\nabla}_{\m{y}}^2 g_{t}(\m{z}_{t})
\right.\\&\left.+{\nabla}_{\m{y}} f_{t+1,\boldsymbol{\rho}}(\m{z}_{t})
+\tilde{\nabla}_{\m{y}}^2 g_{t+1}(\m{z}_{t})
-{\nabla}_{\m{y}} f_{t+1,\boldsymbol{\rho}}(\m{z}_{t})
-\tilde{\nabla}_{\m{y}}^2 g_{t+1}(\m{z}_{t})
\right.\\&\left.- \hat{\nabla}_{\m{y}} f_{t+1}(\m{z}_{t+1};\mathcal{B}_{t+1})- \hat{\nabla}_{\m{y}}^2 g_{t+1}(\m{z}_{t+1};\bar{\mathcal{B}}_{t+1})
+ \hat{\nabla}_{\m{y}} f_{t+1}(\m{z}_{t};\mathcal{B}_{t+1})
+ \hat{\nabla}_{\m{y}}^2 g_{t+1}(\m{z}_{t};\bar{\mathcal{B}}_{t+1}) \right)\|^2.
  \end{align*}
  Since
  \begin{align*}
&\mb{E}\left[\hat{\nabla}_{\m{y}} f_{t+1}(\m{z}_{t+1};\mathcal{B}_{t+1})+\hat{\nabla}_{\m{y}}^2 g_{t+1}(\m{z}_{t+1};\bar{\mathcal{B}}_{t+1}) \right]=
{\nabla}_{\m{y}} f_{t+1,\boldsymbol{\rho}}(\m{z}_{t+1})+
\tilde{\nabla}_{\m{y}}^2 g_{t+1}(\m{z}_{t+1}),
\\&\mb{E}\left[\hat{\nabla}_{\m{y}} f_{t+1}(\m{z}_{t+1};\mathcal{B}_{t+1})+ \hat{\nabla}_{\m{y}}^2 g_{t+1}(\m{z}_{t+1};\bar{\mathcal{B}}_{t+1})
- \hat{\nabla}_{\m{y}} f_{t+1}(\m{z}_{t};\mathcal{B}_{t+1})
- \hat{\nabla}_{\m{y}}^2 g_{t+1}(\m{z}_{t};\bar{\mathcal{B}}_{t+1}) \right]
\\&={\nabla}_{\m{y}} f_{t+1,\boldsymbol{\rho}}(\m{z}_{t+1})+\tilde{\nabla}_{\m{y}}^2 g_{t+1}(\m{z}_{t+1})
-{\nabla}_{\m{y}} f_{t+1,\boldsymbol{\rho}}(\m{z}_{t})
-\tilde{\nabla}_{\m{y}}^2 g_{t+1}(\m{z}_{t}),
  \end{align*}
  then, we have
  \begin{align}\label{eqn:hn}
\nonumber &\quad\mb{E}\|{\nabla}_{\m{y}} f_{t+1,\boldsymbol{\rho}}(\m{z}_{t+1})+\tilde{\nabla}_{\m{y}}^2 g_{t+1}(\m{z}_{t+1}) -\hat{\m{d}}_{t+1}^{\m{v}}\|^2
\\\nonumber & =(1-{\lambda}_{t+1})^2\mb{E}\|{\nabla}_{\m{y}} f_{t,\boldsymbol{\rho}}(\m{z}_{t})+\tilde{\nabla}_{\m{y}}^2 g_{t}(\m{z}_{t}) -\hat{\m{d}}_{t}^{\m{v}} \|^2
\\\nonumber &+\|{\lambda}_{t+1}({\nabla}_{\m{y}} f_{t+1,\boldsymbol{\rho}}(\m{z}_{t+1})+\tilde{\nabla}_{\m{y}}^2 g_{t+1}(\m{z}_{t+1})
-\hat{\nabla}_{\m{y}} f_{t+1}(\m{z}_{t+1};\mathcal{B}_{t+1})
- \hat{\nabla}_{\m{y}}^2 g_{t+1}(\m{z}_{t+1};\bar{\mathcal{B}}_{t+1}))
\\\nonumber &+ (1-{\lambda}_{t+1}) \left({\nabla}_{\m{y}} f_{t+1,\boldsymbol{\rho}}(\m{z}_{t+1})+\tilde{\nabla}_{\m{y}}^2 g_{t+1}(\m{z}_{t+1})
-{\nabla}_{\m{y}} f_{t,\boldsymbol{\rho}}(\m{z}_{t})
-\tilde{\nabla}_{\m{y}}^2 g_{t}(\m{z}_{t})
\right.\\\nonumber &\left.+{\nabla}_{\m{y}} f_{t+1,\boldsymbol{\rho}}(\m{z}_{t})
+\tilde{\nabla}_{\m{y}}^2 g_{t+1}(\m{z}_{t})-{\nabla}_{\m{y}} f_{t+1,\boldsymbol{\rho}}(\m{z}_{t})
-\tilde{\nabla}_{\m{y}}^2 g_{t+1}(\m{z}_{t})
\right.\\\nonumber &\left.- \hat{\nabla}_{\m{y}} f_{t+1}(\m{z}_{t+1};\mathcal{B}_{t+1})- \hat{\nabla}_{\m{y}}^2 g_{t+1}(\m{z}_{t+1};\bar{\mathcal{B}}_{t+1})
+ \hat{\nabla}_{\m{y}} f_{t+1}(\m{z}_{t};\mathcal{B}_{t+1})+ \hat{\nabla}_{\m{y}}^2 g_{t+1}(\m{z}_{t};\bar{\mathcal{B}}_{t+1}) \right)\|^2
\\\nonumber & \leq (1-{\lambda}_{t+1})^2\mb{E}\|{\nabla}_{\m{y}} f_{t,\boldsymbol{\rho}}(\m{z}_{t})+\tilde{\nabla}_{\m{y}}^2 g_{t}(\m{z}_{t}) -\hat{\m{d}}_{t}^{\m{v}} \|^2
\\\nonumber &+3(1-{\lambda}_{t+1})^2\mb{E}\|{\nabla}_{\m{y}} f_{t+1,\boldsymbol{\rho}}(\m{z}_{t+1})+\tilde{\nabla}_{\m{y}}^2 g_{t+1}(\m{z}_{t+1})
-{\nabla}_{\m{y}} f_{t,\boldsymbol{\rho}}(\m{z}_{t})
-\tilde{\nabla}_{\m{y}}^2 g_{t}(\m{z}_{t})
\\\nonumber &+{\nabla}_{\m{y}} f_{t+1,\boldsymbol{\rho}}(\m{z}_{t})+\tilde{\nabla}_{\m{y}}^2 g_{t+1}(\m{z}_{t})
-{\nabla}_{\m{y}} f_{t+1,\boldsymbol{\rho}}(\m{z}_{t})
-\tilde{\nabla}_{\m{y}}^2 g_{t+1}(\m{z}_{t})
\\\nonumber &- \hat{\nabla}_{\m{y}} f_{t+1}(\m{z}_{t+1};\mathcal{B}_{t+1})
- \hat{\nabla}_{\m{y}}^2 g_{t+1}(\m{z}_{t+1};\bar{\mathcal{B}}_{t+1})
+\hat{\nabla}_{\m{y}} f_{t+1}(\m{z}_{t};\mathcal{B}_{t+1})
+\hat{\nabla}_{\m{y}}^2 g_{t+1}(\m{z}_{t};\bar{\mathcal{B}}_{t+1})\|^2
\\\nonumber &+3{\lambda}^2_{t+1}\mb{E}\|{\nabla}_{\m{y}} f_{t+1,\boldsymbol{\rho}}(\m{z}_{t+1})-\hat{\nabla}_{\m{y}} f_{t+1}(\m{z}_{t+1};\mathcal{B}_{t+1})\|^2
\\&+3{\lambda}^2_{t+1}\mb{E}\|\tilde{\nabla}_{\m{y}}^2 g_{t+1}(\m{z}_{t+1})-\hat{\nabla}_{\m{y}}^2 g_{t+1}(\m{z}_{t+1};\bar{\mathcal{B}}_{t+1})\|^2
,
  \end{align}
  where the second inequality holds by
Cauchy-Schwarz inequality.
\\
Note that, for the last term on the right-hand side of \eqref{eqn:hn}, from \eqref{app:h} and \eqref{eqn:2}, we have
\begin{align*}
&\quad \|\tilde{\nabla}_{\m{y}}^2 g_{t+1}(\m{z}_{t+1})-\hat{\nabla}_{\m{y}}^2 g_{t+1}(\m{z}_{t+1};\bar{\mathcal{B}}_{t+1})\|^2
\\&\leq 2\|\frac{1}{2\rho_{\m{v}}}({\nabla}_{\m{y}}g_{t+1,\boldsymbol{\rho}}(\m{x}_{t+1},\m{y}_{t+1}+\rho_{\m{v}}\m{v}_{t+1})- \hat{\nabla}_{\m{y}}g_{t+1}(\m{x}_{t+1},\m{y}_{t+1}+\rho_{\m{v}}\m{v}_{t+1};\bar{\mathcal{B}}_{t+1}) )\|^2
\\&+ 2\|\frac{1}{2\rho_{\m{v}}}( \hat{\nabla}_{\m{y}}g_{t+1}(\m{x}_{t+1},\m{y}_{t+1}-\rho_{\m{v}}\m{v}_{t+1};\bar{\mathcal{B}}_{t+1})-{\nabla}_{\m{y}}g_{t+1,\boldsymbol{\rho}}(\m{x}_{t+1},\m{y}_{t+1}-\rho_{\m{v}}\m{v}_{t+1}) )\|^2
\\&\leq \frac{\hat{\sigma}^2_{g_{\m{y}}}}{\bar{b}\rho_{\m{v}}^2},
\end{align*}
where the last inequality follows from Assumption \ref{a:1}.
\\
Then, from $\mb{E}\|a-\mb{E}[a] \|^2=\mb{E}\|a\|^2-\|\mb{E}[a] \|^2$ and Assumptions \ref{a:1} and \ref{a:3}, we have
  \begin{align*}
\nonumber&\quad  \mb{E}\|{\nabla}_{\m{y}} f_{t+1,\boldsymbol{\rho}}(\m{z}_{t+1})+\tilde{\nabla}_{\m{y}}^2 g_{t+1}(\m{z}_{t+1}) -\hat{\m{d}}_{t+1}^{\m{v}}\|^2
\\\nonumber &\leq (1-{\lambda}_{t+1})^2\mb{E}\|{\nabla}_{\m{y}} f_{t,\boldsymbol{\rho}}(\m{z}_{t})+\tilde{\nabla}_{\m{y}}^2 g_{t}(\m{z}_{t})
-\hat{\m{d}}_{t}^{\m{v}}\|^2
\\\nonumber &+6(1-{\lambda}_{t+1})^2\mb{E}\|{\nabla}_{\m{y}} f_{t+1,\boldsymbol{\rho}}(\m{z}_{t})+  \tilde{\nabla}_{\m{y}}^2 g_{t+1}(\m{z}_{t})
-{\nabla}_{\m{y}} f_{t,\boldsymbol{\rho}}(\m{z}_{t})
- \tilde{\nabla}_{\m{y}}^2 g_{t}(\m{z}_{t})
\|^2
\\\nonumber &+6(1-{\lambda}_{t+1})^2\mb{E}\|\hat{\nabla}_{\m{y}} f_{t+1}(\m{z}_{t+1};\mathcal{B}_{t+1}) + \hat{\nabla}_{\m{y}}^2 g_{t+1}(\m{z}_{t+1};\bar{\mathcal{B}}_{t+1})
\\ &-\hat{\nabla}_{\m{y}} f_{t+1}(\m{z}_{t};\mathcal{B}_{t+1})
- \hat{\nabla}_{\m{y}}^2 g_{t+1}(\m{z}_{t};\bar{\mathcal{B}}_{t+1})
\|^2
+3{\lambda^2_{t+1}}\left(\frac{\hat{\sigma}^2_{g_{\m{y}}}}{\bar{b}\rho_{\m{v}}^2}+\frac{\hat{\sigma}^2_{f_{\m{y}}}}{{b}}\right).
\end{align*}
Then, from Young's inequality and Lemma \ref{eqn:f2}, we obtain
  \begin{align}\label{nm0}
\nonumber&\quad  \mb{E}\|{\nabla}_{\m{y}} f_{t+1,\boldsymbol{\rho}}(\m{z}_{t+1})+\tilde{\nabla}_{\m{y}}^2 g_{t+1}(\m{z}_{t+1}) -\hat{\m{d}}_{t+1}^{\m{v}}\|^2
\\\nonumber&\leq (1-{\lambda}_{t+1})^2\mb{E}\|{\nabla}_{\m{y}} f_{t,\boldsymbol{\rho}}(\m{z}_{t})+\tilde{\nabla}_{\m{y}}^2 g_{t}(\m{z}_{t})-\hat{\m{d}}_{t}^{\m{v}}\|^2
\\\nonumber&+12(1-{\lambda}_{t+1})^2\mb{E}\|{\nabla}_{\m{y}} f_{t+1,\boldsymbol{\rho}}(\m{z}_{t})
-{\nabla}_{\m{y}} f_{t,\boldsymbol{\rho}}(\m{z}_{t})
\|^2
\\\nonumber&+12(1-{\lambda}_{t+1})^2\mb{E}\|  \tilde{\nabla}_{\m{y}}^2 g_{t+1}(\m{z}_{t})
- \tilde{\nabla}_{\m{y}}^2 g_{t}(\m{z}_{t})
\|^2
\\\nonumber&+6(12\ell_{f,1}^2+\frac{9\ell_{g,1}^2}{2\rho_{\m{v}}^2})d_2 \| \m{x}_{t+1}-\m{x}_{t}
\|^2+6(12\ell_{f,1}^2+\frac{9\ell_{g,1}^2}{2\rho_{\m{v}}^2} )d_2 \| \m{y}_{t+1}-\m{y}_{t}
\|^2
\\&+27d_2 \ell_{g,1}^2\|\m{v}_{t+1}-\m{v}_{t}
\|^2+6(3\ell_{f,1}^2+\frac{3\ell_{g,1}^2}{4\rho_{\m{v}}^2}) d^2_2\rho_{\m{r}}^2
+3{\lambda}^2_{t+1}(\frac{\hat{\sigma}^2_{g_{\m{y}}}}{\bar{b}\rho_{\m{v}}^2}+\frac{\hat{\sigma}^2_{f_{\m{y}}}}{{b}}).
\end{align}
For the third term on the right-hand side of \eqref{nm0}, based on \eqref{eqn:2}, we have
\begin{subequations}\label{tg1}
\begin{align}
\nonumber&\quad \|  \tilde{\nabla}_{\m{y}}^2 g_{t+1}(\m{x}_{t},\m{y}_{t})
- \tilde{\nabla}_{\m{y}}^2 g_{t}(\m{x}_{t},\m{y}_{t})
\|^2
\\\label{eqn:091}&\leq \frac{1}{2\rho_{\m{v}}^2}\|{\nabla}_{\m{y}}g_{t+1,\boldsymbol{\rho}}(\m{x}_{t},\m{y}_{t}+\rho_{\m{v}}\m{v}_{t})
- {\nabla}_{\m{y}}g_{t,\boldsymbol{\rho}}(\m{x}_{t},\m{y}_{t}+\rho_{\m{v}}\m{v}_{t})
\|^2
\\\label{eqn:08}&+\frac{1}{2\rho_{\m{v}}^2}\|
{\nabla}_{\m{y}}g_{t,\boldsymbol{\rho}}(\m{x}_{t},\m{y}_{t}-\rho_{\m{v}}\m{v}_{t})- {\nabla}_{\m{y}}g_{t+1,\boldsymbol{\rho}}(\m{x}_{t},\m{y}_{t}-\rho_{\m{v}}\m{v}_{t})\|^2.
\end{align}
\end{subequations}
For \eqref{eqn:091}, we get
\begin{align}
\nonumber &\quad \|{\nabla}_{\m{y}}g_{t+1,\boldsymbol{\rho}}(\m{x}_{t},\m{y}_{t}+\rho_{\m{v}}\m{v}_{t})
- {\nabla}_{\m{y}}g_{t,\boldsymbol{\rho}}(\m{x}_{t},\m{y}_{t}+\rho_{\m{v}}\m{v}_{t})
\|^2
\\\nonumber &\leq 3\|{\nabla}_{\m{y}}g_{t+1,\boldsymbol{\rho}}(\m{x}_{t},\m{y}_{t}+\rho_{\m{v}}\m{v}_{t})-{\nabla}_{\m{y}}g_{t+1}(\m{x}_{t},\m{y}_{t}+\rho_{\m{v}}\m{v}_{t})
\|^2
\\\nonumber &+3\|
 {\nabla}_{\m{y}}g_{t+1}(\m{x}_{t},\m{y}_{t}+\rho_{\m{v}}\m{v}_{t})-{\nabla}_{\m{y}}g_{t}(\m{x}_{t},\m{y}_{t}+\rho_{\m{v}}\m{v}_{t})
\|^2
\\\nonumber &+3\|{\nabla}_{\m{y}}g_{t}(\m{x}_{t},\m{y}_{t}+\rho_{\m{v}}\m{v}_{t})
- {\nabla}_{\m{y}}g_{t,\boldsymbol{\rho}}(\m{x}_{t},\m{y}_{t}+\rho_{\m{v}}\m{v}_{t})
\|^2
\\\nonumber &\leq 3\|\nabla_{\m{y}}  g_{t}(\m{x}_t, \m{y}_t+\rho_{\m{v}}\m{v}_{t})-\nabla_{\m{y}}  g_{t+1}(\m{x}_t, \m{y}_t+\rho_{\m{v}}\m{v}_{t})\|^2+\frac{3\rho_{\m{r}}^2d_2^2\ell_{g,1}^2}{2},
\end{align}
where the last inequality follows from Eq. \eqref{g6}.
\\
Similary, for \eqref{eqn:08}, we have
\begin{align*}
&\quad \|
{\nabla}_{\m{y}}g_{t,\boldsymbol{\rho}}(\m{x}_{t},\m{y}_{t}-\rho_{\m{v}}\m{v}_{t})- {\nabla}_{\m{y}}g_{t+1,\boldsymbol{\rho}}(\m{x}_{t},\m{y}_{t}-\rho_{\m{v}}\m{v}_{t})\|^2
\\&\leq 3\|\nabla_{\m{y}}  g_{t}(\m{x}_t, \m{y}_t-\rho_{\m{v}}\m{v}_{t})-\nabla_{\m{y}}  g_{t+1}(\m{x}_t, \m{y}_t-\rho_{\m{v}}\m{v}_{t})\|^2+\frac{3\rho_{\m{r}}^2d_2^2\ell_{g,1}^2}{2}.
\end{align*}
Substituting the above inequalities in \eqref{tg1}, we have
\begin{align}\label{op}
\nonumber &\quad  \|  \tilde{\nabla}_{\m{y}}^2 g_{t+1}(\m{x}_{t},\m{y}_{t})
- \tilde{\nabla}_{\m{y}}^2 g_{t}(\m{x}_{t},\m{y}_{t})
\|^2
\\\nonumber &\leq \frac{3}{2\rho_{\m{v}}^2}\|\nabla_{\m{y}}  g_{t}(\m{x}_t, \m{y}_t+\rho_{\m{v}}\m{v}_{t})-\nabla_{\m{y}}  g_{t+1}(\m{x}_t, \m{y}_t+\rho_{\m{v}}\m{v}_{t})\|^2
\\&+\frac{3}{2\rho_{\m{v}}^2}\|\nabla_{\m{y}}  g_{t}(\m{x}_t, \m{y}_t-\rho_{\m{v}}\m{v}_{t})-\nabla_{\m{y}}  g_{t+1}(\m{x}_t, \m{y}_t-\rho_{\m{v}}\m{v}_{t})\|^2+\frac{3\rho_{\m{r}}^2d_2^2\ell_{g,1}^2}{2\rho_{\m{v}}^2}.
\end{align}
For the second term on the right-hand side of \eqref{nm0}, we have
\begin{align}\label{eqn:z1}
\nonumber &\quad \|{\nabla}_{\m{y}}f_{t+1,\boldsymbol{\rho}}(\m{x}_{t},\m{y}_{t})
- {\nabla}_{\m{y}}f_{t,\boldsymbol{\rho}}(\m{x}_{t},\m{y}_{t})
\|^2
\\\nonumber &\leq 3\|{\nabla}_{\m{y}}f_{t+1,\boldsymbol{\rho}}(\m{x}_{t},\m{y}_{t})-{\nabla}_{\m{y}}f_{t+1}(\m{x}_{t},\m{y}_{t})
\|^2
\\\nonumber &+3\|
 {\nabla}_{\m{y}}f_{t+1}(\m{x}_{t},\m{y}_{t})-{\nabla}_{\m{y}}f_{t}(\m{x}_{t},\m{y}_{t})
\|^2
\\\nonumber &+3\|{\nabla}_{\m{y}}f_{t}(\m{x}_{t},\m{y}_{t})
- {\nabla}_{\m{y}}f_{t,\boldsymbol{\rho}}(\m{x}_{t},\m{y}_{t})
\|^2
\\ &\leq 3\|\nabla_{\m{y}}  f_{t}(\m{x}_t, \m{y}_t)-\nabla_{\m{y}}  f_{t+1}(\m{x}_t, \m{y}_t)\|^2+\frac{3\rho_{\m{r}}^2d_2^2\ell_{f,1}^2}{2},
\end{align}
where the last inequality follows from Eq. \eqref{f3}.

From \eqref{op}, \eqref{eqn:z1} and \eqref{nm0}, we get
 \begin{align*}
\nonumber&\quad  \mb{E}\|{\nabla}_{\m{y}} f_{t+1,\boldsymbol{\rho}}(\m{z}_{t+1})+\tilde{\nabla}_{\m{y}}^2 g_{t+1}(\m{z}_{t+1}) -\hat{\m{d}}_{t+1}^{\m{v}}\|^2
\\\nonumber&\leq (1-{\lambda}_{t+1})^2\mb{E}\|{\nabla}_{\m{y}} f_{t,\boldsymbol{\rho}}(\m{z}_{t})+\tilde{\nabla}_{\m{y}}^2 g_{t}(\m{z}_{t})-\hat{\m{d}}_{t}^{\m{v}}\|^2
\\&+36\|\nabla_{\m{y}}  f_{t}(\m{x}_t, \m{y}_t)-\nabla_{\m{y}}  f_{t+1}(\m{x}_t, \m{y}_t)\|^2+18\rho_{\m{r}}^2d_2^2\ell_{f,1}^2
\\\nonumber&+\frac{18}{\rho_{\m{v}}^2}\|\nabla_{\m{y}}  g_{t}(\m{x}_t, \m{y}_t+\rho_{\m{v}}\m{v}_{t})-\nabla_{\m{y}}  g_{t+1}(\m{x}_t, \m{y}_t+\rho_{\m{v}}\m{v}_{t})\|^2
\\&+\frac{18}{\rho_{\m{v}}^2}\|\nabla_{\m{y}}  g_{t}(\m{x}_t, \m{y}_t-\rho_{\m{v}}\m{v}_{t})-\nabla_{\m{y}}  g_{t+1}(\m{x}_t, \m{y}_t-\rho_{\m{v}}\m{v}_{t})\|^2+\frac{18\rho_{\m{r}}^2d_2^2\ell_{g,1}^2}{\rho_{\m{v}}^2}
\\\nonumber&+6(12\ell_{f,1}^2+\frac{9\ell_{g,1}^2}{2\rho_{\m{v}}^2})d_2 \| \m{x}_{t+1}-\m{x}_{t}
\|^2
+6(12\ell_{f,1}^2+\frac{9\ell_{g,1}^2}{2\rho_{\m{v}}^2} )d_2 \| \m{y}_{t+1}-\m{y}_{t}
\|^2
\\&+27d_2 \ell_{g,1}^2\|\m{v}_{t+1}-\m{v}_{t}
\|^2+6(3\ell_{f,1}^2+\frac{3\ell_{g,1}^2}{4\rho_{\m{v}}^2}) d^2_2\rho_{\m{r}}^2
+3{\lambda}^2_{t+1}(\frac{\hat{\sigma}^2_{g_{\m{y}}}}{\bar{b}\rho_{\m{v}}^2}+\frac{\hat{\sigma}^2_{f_{\m{y}}}}{{b}}).
\end{align*}
\end{proof}
\begin{lemma}\label{ee}
Suppose Assumption \ref{assu:f:a4} holds.
Let
\begin{subequations}
\begin{align}\label{kjl}
e^H_t &:=\tilde{\nabla}_\m{y}^2 g_{t}\left(\m{x}_t,\m{y}_t \right)-\nabla_\m{y}^2 g_{t,\boldsymbol{\rho}}\left(\m{x}_t,\m{y}_t \right)\m{v}_t, \\
  e^J_t & :=\tilde{\nabla}_{\m{x}\m{y}}^2 g_t \left(\m{x}_t,\m{y}_t \right)-\nabla_{\m{x}\m{y}}^2 g_{t,\boldsymbol{\rho}}\left(\m{x}_t,\m{y}_t \right)\m{v}_t,
\end{align}
\end{subequations}
where
\begin{align*}
&\tilde{\nabla}_{\m{y}}^2 g_t\left(\m{x}_t,\m{y}_t \right)=\frac{1}{2\rho_{\m{v}}}({\nabla}_{\m{y}}g_{t,\boldsymbol{\rho}}(\m{x}_t,\m{y}_t+\rho_{\m{v}}\m{v}_t)- {\nabla}_{\m{y}}g_{t,\boldsymbol{\rho}}(\m{x}_t,\m{y}_t-\rho_{\m{v}}\m{v}_t)),
\\&\tilde{\nabla}_{\m{x}\m{y}}^2 g_t\left(\m{x}_t,\m{y}_t \right)=\frac{1}{2\rho_{\m{v}}}({\nabla}_{\m{x}}g_{t,\boldsymbol{\rho}}(\m{x}_t,\m{y}_t+\rho_{\m{v}}\m{v}_t)- {\nabla}_{\m{x}}g_{t,\boldsymbol{\rho}}(\m{x}_t,\m{y}_t-\rho_{\m{v}}\m{v}_t)) .
\end{align*}
Then, for $\left(\m{x}_t,\m{y}_t,\m{v}_t \right)$ presented to Algorithm \ref{alg:obgd:zeroth}, we have
\begin{subequations}
\begin{itemize}
  \item [(a)]
  \begin{align}
\mb{E}\left[\norm{e^H_t}^2\right]
&\leq \ell_{g,2}^2\rho_{\m{v}}^2p^4.
\label{dvz}
\end{align}
  \item [(b)]
  \begin{align}
 \mb{E}\left[\norm{e^J_t}^2\right]
& \leq \ell_{g,2}^2\rho_{\m{v}}^2p^4.
\label{dxw}
\end{align}
  \end{itemize}
\end{subequations}

\end{lemma}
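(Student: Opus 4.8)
The plan is to recognize that $e^H_t$ and $e^J_t$ are exactly the errors of a central (two-point) finite-difference approximation of a Hessian-vector product $\nabla^2_{\m{y}} g_{t,\boldsymbol{\rho}}(\m{x}_t,\m{y}_t)\m{v}_t$ and a Jacobian-vector product $\nabla^2_{\m{x}\m{y}} g_{t,\boldsymbol{\rho}}(\m{x}_t,\m{y}_t)\m{v}_t$, respectively, where the difference is taken along the direction $\pm\rho_{\m{v}}\m{v}_t$ in the $\m{y}$-block. The controlling tool is the second-order Taylor remainder estimate of Lemma~\ref{lm:up:bound}, read for a joint perturbation $\m{d}$ of $\m{z}=(\m{x},\m{y})$, i.e. $\norm{\nabla g_{t,\boldsymbol{\rho}}(\m{z}+\m{d})-\nabla g_{t,\boldsymbol{\rho}}(\m{z})-\nabla^2 g_{t,\boldsymbol{\rho}}(\m{z})\m{d}}\le \ell_{g,2}\norm{\m{d}}^2$, applied with the partial perturbation $\m{d}^{\pm}=(\m{0},\pm\rho_{\m{v}}\m{v}_t)$. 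I will first record that the smoothed objective $g_{t,\boldsymbol{\rho}}$ inherits an $\ell_{g,2}$-Lipschitz Hessian from $g_t$ (Assumption~\ref{assu:f:a4}), since Gaussian-type smoothing commutes with differentiation, so $\nabla^2 g_{t,\boldsymbol{\rho}}$ is the smoothing of $\nabla^2 g_t$ and keeps the same Lipschitz constant; this is what makes Lemma~\ref{lm:up:bound} applicable to $g_{t,\boldsymbol{\rho}}$.

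For part (a), I would define the forward and backward remainders
\[
A^{\pm}:=\nabla_{\m{y}} g_{t,\boldsymbol{\rho}}(\m{x}_t,\m{y}_t\pm\rho_{\m{v}}\m{v}_t)-\nabla_{\m{y}} g_{t,\boldsymbol{\rho}}(\m{x}_t,\m{y}_t)\mp\nabla^2_{\m{y}} g_{t,\boldsymbol{\rho}}(\m{x}_t,\m{y}_t)\rho_{\m{v}}\m{v}_t,
\]
obtained by taking the $\m{y}$-component of the joint estimate with $\m{d}^{\pm}=(\m{0},\pm\rho_{\m{v}}\m{v}_t)$; here the $\m{y}$-block of $\nabla^2 g_{t,\boldsymbol{\rho}}(\m{z}_t)\m{d}^{\pm}$ is precisely $\pm\nabla^2_{\m{y}} g_{t,\boldsymbol{\rho}}(\m{z}_t)\rho_{\m{v}}\m{v}_t$ because the $\m{x}$-perturbation vanishes. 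Lemma~\ref{lm:up:bound} gives $\norm{A^{\pm}}\le \ell_{g,2}\norm{\m{d}^{\pm}}^2=\ell_{g,2}\rho_{\m{v}}^2\norm{\m{v}_t}^2$. The key algebraic observation is that the Hessian-vector terms in $A^{+}$ and $A^{-}$ add constructively, so $(A^{+}-A^{-})/(2\rho_{\m{v}})$ equals $e^H_t$ exactly. Then the triangle inequality yields $\norm{e^H_t}\le (\norm{A^{+}}+\norm{A^{-}})/(2\rho_{\m{v}})\le \ell_{g,2}\rho_{\m{v}}\norm{\m{v}_t}^2$, and squaring gives $\norm{e^H_t}^2\le \ell_{g,2}^2\rho_{\m{v}}^2\norm{\m{v}_t}^4$. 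Since Step~\ref{item:s22} projects onto $\mc{Z}_p$, we have $\norm{\m{v}_t}\le p$ pathwise (cf.\ \eqref{eqn:zc:set}), so $\norm{\m{v}_t}^4\le p^4$ and, taking expectations, $\mb{E}\norm{e^H_t}^2\le \ell_{g,2}^2\rho_{\m{v}}^2 p^4$.

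Part (b) would follow verbatim, replacing $\nabla_{\m{y}}$ by $\nabla_{\m{x}}$ throughout: taking instead the $\m{x}$-component of the same joint estimate produces the term $\pm\nabla^2_{\m{x}\m{y}} g_{t,\boldsymbol{\rho}}(\m{z}_t)\rho_{\m{v}}\m{v}_t$, and the identical central-difference cancellation and bound $\norm{\m{d}^{\pm}}=\rho_{\m{v}}\norm{\m{v}_t}$ deliver $\mb{E}\norm{e^J_t}^2\le \ell_{g,2}^2\rho_{\m{v}}^2 p^4$. I expect the only genuinely delicate point to be the justification that Lemma~\ref{lm:up:bound} transfers from $g_t$ to the smoothed $g_{t,\boldsymbol{\rho}}$ and can be invoked with a one-block (directional) perturbation rather than the symmetric $(\m{x}+\m{v},\m{y}+\m{v})$ form written there; everything after that is the standard central-difference cancellation together with the uniform bound $\norm{\m{v}_t}\le p$ enforced by the projection $\Pi_{\mc{Z}_p}$.
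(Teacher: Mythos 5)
Your proposal is correct and follows essentially the same route as the paper's proof: both decompose $e^H_t$ (and analogously $e^J_t$) into the forward and backward Taylor remainders of the central difference, bound each remainder by $\ell_{g,2}\rho_{\m{v}}^2\norm{\m{v}_t}^2$ via Lemma~\ref{lm:up:bound}, and invoke the projection bound $\norm{\m{v}_t}\le p$ from \eqref{eqn:zc:set}. If anything, you are slightly more careful than the paper on the two points it glosses over — justifying that the smoothed objective $g_{t,\boldsymbol{\rho}}$ inherits the $\ell_{g,2}$-Lipschitz Hessian so the remainder bound applies to it with a one-block perturbation, and squaring the pathwise bound before taking expectations rather than squaring an expectation bound.
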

\begin{proof}
\textbf{For part (a)}:
From Lemma \ref{lm:up:bound}, We have
  \begin{align}\label{eqn:kyz}
\nonumber \mb{E}\left[\norm{e^H_t}\right] &=\mb{E}\left[\norm{\tilde{\nabla}_\m{y}^2 g_{t}\left(\m{x}_t,\m{y}_t \right)-\nabla_\m{y}^2 g_{t,\boldsymbol{\rho}}\left(\m{x}_t,\m{y}_t \right)\m{v}_t}\right]
\\\nonumber & \leq\frac{1}{2\rho_{\m{v}}}\mb{E}\left[\norm{{\nabla}_{\m{y}}g_{t,\boldsymbol{\rho}}(\m{x}_t,\m{y}_t+\rho_{\m{v}}\m{v}_t)- {\nabla}_{\m{y}}g_{t,\boldsymbol{\rho}}(\m{x}_t,\m{y}_t)-\nabla_\m{y}^2 g_{t,\boldsymbol{\rho}}\left(\m{x}_t,\m{y}_t \right)\rho_{\m{v}} \m{v}_t}\right]
\\\nonumber & +\frac{1}{2\rho_{\m{v}}}\mb{E}\left[\norm{ {\nabla}_{\m{y}}g_{t,\boldsymbol{\rho}}(\m{x}_t,\m{y}_t)-{\nabla}_{\m{y}}g_{t,\boldsymbol{\rho}}(\m{x}_t,\m{y}_t-\rho_{\m{v}}\m{v}_t)-\nabla_\m{y}^2 g_{t,\boldsymbol{\rho}}\left(\m{x}_t,\m{y}_t \right)\rho_{\m{v}} \m{v}_t}\right]
\\\nonumber&\leq \ell_{g,2}\rho_{\m{v}}\mb{E}\left[\norm{\m{v}_t}^2\right]
\\&\leq \ell_{g,2}\rho_{\m{v}}p^2,
  \end{align}
  where the last inequality follows from \eqref{eqn:zc:set}.
     \\
\textbf{For part (b):}
From Lemma \ref{lm:up:bound}, We have
  \begin{align}\label{eqn:ky4}
\nonumber\mb{E}\left[\norm{e^J_t}\right] &=\mb{E}\left[\norm{\tilde{\nabla}_{\m{x}\m{y}}^2 g_t \left(\m{x}_t,\m{y}_t \right)-\nabla_{\m{x}\m{y}}^2 g_{t,\boldsymbol{\rho}}\left(\m{x}_t,\m{y}_t \right)\m{v}_t}\right]
\\\nonumber & \leq\frac{1}{2\rho_{\m{v}}}\mb{E}\left[\norm{{\nabla}_{\m{x}}g_{t,\boldsymbol{\rho}}(\m{x}_t,\m{y}_t+\rho_{\m{v}}\m{v}_t)- {\nabla}_{\m{x}}g_{t,\boldsymbol{\rho}}(\m{x}_t,\m{y}_t)-\nabla_{\m{x}\m{y}}^2 g_{t,\boldsymbol{\rho}}\left(\m{x}_t,\m{y}_t \right)\rho_{\m{v}} \m{v}_t}\right]
\\\nonumber & +\frac{1}{2\rho_{\m{v}}}\mb{E}\left[\norm{ {\nabla}_{\m{x}}g_{t,\boldsymbol{\rho}}(\m{x}_t,\m{y}_t)-{\nabla}_{\m{x}}g_{t,\boldsymbol{\rho}}(\m{x}_t,\m{y}_t-\rho_{\m{v}}\m{v}_t)-\nabla_{\m{x}\m{y}} g_{t,\boldsymbol{\rho}}\left(\m{x}_t,\m{y}_t \right)\rho_{\m{v}} \m{v}_t}\right]
\\\nonumber &\leq \ell_{g,2}\rho_{\m{v}}\mb{E}\left[\norm{\m{v}_t}^2\right]
\\&\leq \ell_{g,2}\rho_{\m{v}}p^2,
  \end{align}
  where the last inequality follows from \eqref{eqn:zc:set}.
     \end{proof}
    \begin{lemma}\label{eed}
Suppose Assumption \ref{assu:f:a4} holds. Then, for the directions $\hat{\m{d}}_{t}^{\m{v}}$ and $\hat{\m{d}}_{t}^{\m{x}}$ provided to Algorithm \ref{alg:obgd:zeroth}, and
\begin{subequations}
\begin{itemize}
  \item [(a)]
for $\m{d}_{t,\boldsymbol{\rho}}^{\m{v}}$ defined in \eqref{eqn:system:y2}, we have
  \begin{align}
\mb{E}\left[\norm{\hat{\m{d}}_{t}^{\m{v}}-\m{d}_{t,\boldsymbol{\rho}}^{\m{v}}}^2\right]
&\leq 2\mb{E}\left[\norm{e_t^M}^2\right]+2\ell_{g,2}^2\rho_{\m{v}}^2p^4 =:B_t,
\label{dv}
\end{align}
where ${e}_t^{M}=\nabla_\m{y} f_{t,\boldsymbol{\rho}}(\m{x}_t,\m{y}_t)+\tilde{\nabla}_{\m{y}}^2 g_t\left(\m{x}_t,\m{y}_t \right)-\hat{\m{d}}_{t}^{\m{v}}$ is defined as in \eqref{em}.
  \item [(b)]
and for $\m{d}_{t,\boldsymbol{\rho}}^{\m{x}}$ defined in \eqref{eqn:system:v2}, we have
  \begin{align}
 \mb{E}\left[\norm{\hat{\m{d}}_{t}^{\m{x}}-\m{d}_{t,\boldsymbol{\rho}}^{\m{x}}}^2\right]
& \leq  2\mb{E}\left[\norm{e_t^L}^2\right]+2\ell_{g,2}^2\rho_{\m{v}}^2p^4
,
\label{dx}
\end{align}
where 
\begin{align}\label{eq:g5}
  {e}_t^{L}:=\nabla_\m{x} f_{t,\boldsymbol{\rho}}(\m{x}_t,\m{y}_t )+\tilde{\nabla}_{\m{x}\m{y}}^2 g_t\left(\m{x}_t,\m{y}_t \right)-\hat{\m{d}}_{t}^{\m{x}},  
\end{align}
with $\tilde{\nabla}_{\m{x}\m{y}}^2 g_t\left(\m{x}_t,\m{y}_t \right)$ is defined in \eqref{d8}.
  \end{itemize}
\end{subequations}

\end{lemma}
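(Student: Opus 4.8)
The plan is to recognize that the intermediate finite-difference operators $\tilde{\nabla}_{\m{y}}^2 g_t$ and $\tilde{\nabla}_{\m{x}\m{y}}^2 g_t$ introduced in Lemma~\ref{ee} serve as a bridge between the algorithm's momentum estimators $\hat{\m{d}}_{t}^{\m{v}}, \hat{\m{d}}_{t}^{\m{x}}$ and the true smoothed directions $\m{d}_{t,\boldsymbol{\rho}}^{\m{v}}, \m{d}_{t,\boldsymbol{\rho}}^{\m{x}}$ defined in \eqref{eqn:system:y2} and \eqref{eqn:system:v2}. Once this is noticed, each bound reduces to a single split via $\norm{a+b}^2 \leq 2\norm{a}^2 + 2\norm{b}^2$ together with the variance-type estimates \eqref{dvz} and \eqref{dxw} already established in Lemma~\ref{ee}.

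For part (a), I would start from the definitions of $e_t^M$ in \eqref{em} and $e^H_t$ in \eqref{kjl}, and add and subtract the shared quantity $\nabla_\m{y} f_{t,\boldsymbol{\rho}}(\m{x}_t,\m{y}_t) + \tilde{\nabla}_{\m{y}}^2 g_t(\m{x}_t,\m{y}_t)$. Because $\m{d}_{t,\boldsymbol{\rho}}^{\m{v}}$ contains $\nabla_\m{y} f_{t,\boldsymbol{\rho}}$ and $\nabla_\m{y}^2 g_{t,\boldsymbol{\rho}}\m{v}_t$, the gradient term and the finite-difference surrogate cancel exactly, leaving the pathwise identity
\[
\hat{\m{d}}_{t}^{\m{v}} - \m{d}_{t,\boldsymbol{\rho}}^{\m{v}} = e^H_t - e_t^M.
\]
Applying $\norm{e^H_t - e_t^M}^2 \leq 2\norm{e^H_t}^2 + 2\norm{e_t^M}^2$, taking expectation, and substituting $\mb{E}[\norm{e^H_t}^2] \leq \ell_{g,2}^2 \rho_{\m{v}}^2 p^4$ from \eqref{dvz} yields the claimed bound \eqref{dv}.

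Part (b) follows by the identical bookkeeping, now with $e_t^L$ from \eqref{eq:g5} and $e^J_t$ from \eqref{kjl}: the term $\nabla_\m{x} f_{t,\boldsymbol{\rho}}$ and the Jacobian surrogate $\tilde{\nabla}_{\m{x}\m{y}}^2 g_t$ cancel against their counterparts in $\m{d}_{t,\boldsymbol{\rho}}^{\m{x}}$, producing $\hat{\m{d}}_{t}^{\m{x}} - \m{d}_{t,\boldsymbol{\rho}}^{\m{x}} = e^J_t - e_t^L$, after which the split and \eqref{dxw} deliver \eqref{dx}. The argument is essentially algebraic; the only step that demands attention—and the closest thing to an obstacle—is confirming that each finite-difference surrogate enters $e_t^M$ (resp.\ $e_t^L$) and $e^H_t$ (resp.\ $e^J_t$) with signs that make it cancel cleanly, so that the residual is exactly the unweighted difference of two controlled error terms and no cross terms or additional smoothing-bias contributions survive; this matching of signs is precisely what makes the constant $2$ in front of each term correct.
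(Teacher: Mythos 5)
Your proposal is correct and follows essentially the same route as the paper: the paper also decomposes $\hat{\m{d}}_{t}^{\m{v}}-\m{d}_{t,\boldsymbol{\rho}}^{\m{v}}$ (resp.\ $\hat{\m{d}}_{t}^{\m{x}}-\m{d}_{t,\boldsymbol{\rho}}^{\m{x}}$) by adding and subtracting the finite-difference surrogate $\tilde{\nabla}_{\m{y}}^2 g_t$ (resp.\ $\tilde{\nabla}_{\m{x}\m{y}}^2 g_t$)—which is exactly your pathwise identity $e^H_t-e_t^M$ (resp.\ $e^J_t-e_t^L$)—then applies $\|a+b\|^2\le 2\|a\|^2+2\|b\|^2$ and invokes Lemma~\ref{ee} to bound the surrogate error by $\ell_{g,2}^2\rho_{\m{v}}^2p^4$. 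Your sign-cancellation check is precisely what makes the paper's decomposition exact, so the two arguments coincide.
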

\begin{proof}
\textbf{For part (a)}:
Let
\begin{align}\label{d7}
\tilde{\nabla}_{\m{y}}^2 g_t\left(\m{x}_t,\m{y}_t \right)=\frac{1}{2\rho_{\m{v}}}({\nabla}_{\m{y}}g_{t,\boldsymbol{\rho}}(\m{x}_t,\m{y}_t+\rho_{\m{v}}\m{v}_t)- {\nabla}_{\m{y}}g_{t,\boldsymbol{\rho}}(\m{x}_t,\m{y}_t-\rho_{\m{v}}\m{v}_t)).
\end{align}
 According to the definition of $\m{d}_{t,\boldsymbol{\rho}}^{\m{v}}$ in \eqref{eqn:system:y2}, we have
\begin{subequations}
  \begin{align}    \nonumber\mb{E}\left[\norm{\hat{\m{d}}_{t}^{\m{v}}-\m{d}_{t,\boldsymbol{\rho}}^{\m{v}}}^2\right]
  &=\mb{E}\left[\norm{\hat{\m{d}}_{t}^{\m{v}}-\nabla_\m{y} f_{t,\boldsymbol{\rho}}(\m{x}_t,\m{y}_t)- \nabla_\m{y}^2 g_{t,\boldsymbol{\rho}}\left(\m{x}_t,\m{y}_t \right)\m{v}}^2\right]
  \\\label{eq0d0}&\leq 2\mb{E}\left[\norm{\hat{\m{d}}_{t}^{\m{v}}-\nabla_\m{y} f_{t,\boldsymbol{\rho}}(\m{x}_t,\m{y}_t)-\tilde{\nabla}_{\m{y}}^2 g_t\left(\m{x}_t,\m{y}_t \right)}^2\right]
  \\\label{eq00}& +2\mb{E}\left[\norm{\tilde{\nabla}_\m{y}^2 g_{t}\left(\m{x}_t,\m{y}_t \right)-\nabla_\m{y}^2 g_{t,\boldsymbol{\rho}}\left(\m{x}_t,\m{y}_t \right)\m{v}}^2\right].
  \end{align}
  \end{subequations}
  Next, we separately bound \eqref{eq0d0} and \eqref{eq00} on the RHS of the above inequality.
\\
  \textbf{Bounding  \eqref{eq0d0} }. We have
  \begin{align}\label{eqn:ky1}
2 \mb{E}\left[\norm{\hat{\m{d}}_{t}^{\m{v}}-\nabla_\m{y} f_{t,\boldsymbol{\rho}}(\m{x}_t,\m{y}_t)-\tilde{\nabla}_{\m{y}}^2 g_t\left(\m{x}_t,\m{y}_t \right)}^2\right]:=2\mb{E}\left[\norm{e_t^M}^2\right].
  \end{align}
\\
  \textbf{Bounding  \eqref{eq00} }. From Lemmas \ref{lm:up:bound} and \ref{ee}, we have
  \begin{align}\label{eqn:kyzc}
\eqref{eq00}=\mb{E}\left[\norm{e_t^H}^2\right] & \leq  3\ell_{g,2}^2\rho_{\m{v}}^2p^4.
  \end{align}
Combining \eqref{eqn:ky1} and \eqref{eqn:kyzc} yields
 \begin{align}
 \mb{E}\left[  \norm{\hat{\m{d}}_{t}^{\m{v}}-\m{d}_{t,\boldsymbol{\rho}}^\m{v} }^2\right] &\leq 2\mb{E}\left[\norm{e_t^M}^2\right]+2\ell_{g,2}^2\rho_{\m{v}}^2p^4.
  \end{align}
     \\
\textbf{For part (b):}
Let
\begin{align}\label{d8}
\tilde{\nabla}_{\m{x}\m{y}}^2 g_t\left(\m{x}_t,\m{y}_t \right)=\frac{1}{2\rho_{\m{v}}}({\nabla}_{\m{x}}g_{t,\boldsymbol{\rho}}(\m{x}_t,\m{y}_t+\rho_{\m{v}}\m{v}_t)- {\nabla}_{\m{x}}g_{t,\boldsymbol{\rho}}(\m{x}_t,\m{y}_t-\rho_{\m{v}}\m{v}_t)).
\end{align}
 According to the definition of $\m{d}_{t,\boldsymbol{\rho}}^\m{x}$ in \eqref{eqn:system:v2}, we have
\begin{subequations}
  \begin{align}
 \nonumber\mb{E}\left[  \norm{\hat{\m{d}}_{t}^{\m{x}}-\m{d}_{t,\boldsymbol{\rho}}^\m{x} }^2\right]
 &=\mb{E}\left[\norm{\hat{\m{d}}_{t}^{\m{x}}-\nabla_\m{x} f_{t,\boldsymbol{\rho}}(\m{x}_t,\m{y}_t )- \nabla_{\m{x}\m{y}}^2 g_{t,\boldsymbol{\rho}}\left(\m{x}_t,\m{y}_t \right)\m{v}}^2\right]
 \\\label{vc}&\leq  2\mb{E}\left[\norm{ \hat{\m{d}}_{t}^{\m{x}}-\nabla_\m{x} f_{t,\boldsymbol{\rho}}(\m{x}_t,\m{y}_t )-\tilde{\nabla}_{\m{x}\m{y}}^2 g_t \left(\m{x}_t,\m{y}_t \right)}^2\right]
\\\label{4r}&+ 2\mb{E}\left[\norm{\tilde{\nabla}_{\m{x}\m{y}}^2 g_t \left(\m{x}_t,\m{y}_t \right)-\nabla_{\m{x}\m{y}}^2 g_{t,\boldsymbol{\rho}}\left(\m{x}_t,\m{y}_t \right)\m{v}_t}^2\right].
  \end{align}
  \end{subequations}
 Next, we separately bound \eqref{vc} and \eqref{4r} on the RHS of the above inequality.
  \\
  \textbf{Bounding  \eqref{vc} }. We have
  \begin{align}\label{eqn:ky11}
2 \mb{E}\left[\norm{\hat{\m{d}}_{t}^{\m{x}}-\nabla_\m{x} f_{t,\boldsymbol{\rho}}(\m{x}_t,\m{y}_t )-\tilde{\nabla}_{\m{x}\m{y}}^2 g_t\left(\m{x}_t,\m{y}_t \right)}^2\right]:=2\mb{E}\left[\norm{e_t^L}^2\right].
  \end{align}
\\
  \textbf{Bounding  \eqref{4r} }. From Lemmas \ref{lm:up:bound} and \ref{ee}, we have
  \begin{align}\label{eqn:kyzcc}
\eqref{4r}=\mb{E}\left[\norm{e_t^J}^2\right] & \leq  2\ell_{g,2}^2\rho_{\m{v}}^2p^4.
  \end{align}
Combining \eqref{eqn:ky11}--\eqref{eqn:kyzcc} yields
 \begin{align}    \nonumber \mb{E}\left[  \norm{\hat{\m{d}}_{t}^{\m{x}}-\m{d}_{t,\boldsymbol{\rho}}^\m{x} }^2\right]&\leq 2\mb{E}\left[\norm{e_t^L}^2\right]+2\ell_{g,2}^2\rho_{\m{v}}^2p^4.
  \end{align}
     \end{proof}
\begin{lemma}\label{eq:lower:dynamicccbz}
Suppose Assumptions \ref{assu:g}, \ref{assu:f:a1}, \ref{assu:f:a3} and \ref{assu:f:a4} hold. Set the step size $\delta_t$ and the parameter $p$ in \eqref{eqn:zc:set}, as
\begin{align}\label{eqn:d1}
 \delta_t \leq \left(2+\frac{1}{\ell_{g,1}^2}\right)\frac{\mu_g \ell_{g,1}}{\mu_g+\ell_{g,1}},~ \forall t\in [T],~ \quad  \textnormal{and} \quad p=\frac{\ell_{f,0}}{\mu_g}.
\end{align}
Then, for the sequence $\{(\m{x}_t, \m{y}_t, \m{v}_t)\}_{t=1}^T$ generated by  Algorithm~\ref{alg:obgd:zeroth} and $\hat{\m{v}}^*_{t}(\m{x}_{t})$ in \eqref{eq:vhz}, we have
\begin{align*}
\mb{E}\left[ \norm{\m{v}_{t+1}-\hat{\m{v}}^*_{t}(\m{x}_{t})}^2\right]
&\leq \left(1+\acute{a}\right)\left(1-\delta_t \frac{\mu_g \ell_{g,1}}{\mu_g+\ell_{g,1}}\right)\mb{E}[\hat{\theta}_t^{\m{v}}]
\\&+2\left(1+\frac{1}{\acute{a}}\right)\delta_t^2B_t+4\left(1+\frac{1}{\acute{a}}\right)\delta_t^2(p^2\ell_{g,2}^2+\ell_{f,1}^2)\mb{E}[\hat{\theta}_t^{\m{y}}],
\end{align*}
for some $\acute{a}>0$, where
$\hat{\theta}_t^{\m{v}}$ and $B_t$ are defined in Eq. \eqref{eq:yv:notaz1} and Lemma \ref{eed}, respectively.
\end{lemma}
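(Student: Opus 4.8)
The plan is to mirror the first-order argument of Lemma~\ref{eq:lower:dynamicccb}, replacing each exact object by its Gaussian-smoothed counterpart and routing all the finite-difference/stochastic error through the single quantity $B_t$ supplied by Lemma~\ref{eed}(a). I would start from the update $\m{v}_{t+1} = \Pi_{\mc{Z}_p}\big[\m{v}_t - \delta_t \hat{\m{d}}_t^{\m{v}}\big]$ and first dispose of the projection. Because $p = \ell_{f,0}/\mu_g$, the bound \eqref{sdx12} gives $\norm{\hat{\m{v}}^*_t(\m{x}_t)} \leq \ell_{f,0}/\mu_g = p$, so $\hat{\m{v}}^*_t(\m{x}_t) \in \mc{Z}_p$ and hence $\Pi_{\mc{Z}_p}[\hat{\m{v}}^*_t(\m{x}_t)] = \hat{\m{v}}^*_t(\m{x}_t)$; non-expansiveness of the Euclidean projection then yields
\[
\mb{E}\norm{\m{v}_{t+1} - \hat{\m{v}}^*_t(\m{x}_t)}^2 \leq \mb{E}\norm{\m{v}_t - \delta_t \hat{\m{d}}_t^{\m{v}} - \hat{\m{v}}^*_t(\m{x}_t)}^2 .
\]

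Next I would separate the zeroth-order direction $\hat{\m{d}}_t^{\m{v}}$ from the deterministic smoothed direction $\m{d}_{t,\boldsymbol{\rho}}^{\m{v}}$ of \eqref{eqn:system:y2}. Decomposing $\m{v}_t - \delta_t\hat{\m{d}}_t^{\m{v}} - \hat{\m{v}}^*_t(\m{x}_t) = \big(\m{v}_t - \delta_t\m{d}_{t,\boldsymbol{\rho}}^{\m{v}} - \hat{\m{v}}^*_t(\m{x}_t)\big) - \delta_t\big(\hat{\m{d}}_t^{\m{v}} - \m{d}_{t,\boldsymbol{\rho}}^{\m{v}}\big)$ and applying Lemma~\ref{lem:trig} with parameter $\acute{a}>0$ produces the two summands $(1+\acute{a})\,\mb{E}\norm{\m{v}_t - \delta_t\m{d}_{t,\boldsymbol{\rho}}^{\m{v}} - \hat{\m{v}}^*_t(\m{x}_t)}^2$ and $(1+\tfrac{1}{\acute{a}})\delta_t^2\,\mb{E}\norm{\hat{\m{d}}_t^{\m{v}} - \m{d}_{t,\boldsymbol{\rho}}^{\m{v}}}^2$. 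The second summand is exactly $(1+\tfrac{1}{\acute{a}})\delta_t^2 B_t$ by Lemma~\ref{eed}(a), which already matches the error term in the claim, so no further work is needed there.

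The core of the proof, and the step I expect to be the main obstacle, is showing that the first summand is at most $\big(1 - \delta_t L_{\mu_g}\big)\hat{\theta}_t^{\m{v}}$. Here I would use that $\m{d}_{t,\boldsymbol{\rho}}^{\m{v}}$ is the $\m{v}$-gradient of a strongly convex quadratic built from $\nabla^2_{\m{y}}g_{t,\boldsymbol{\rho}}$ and $\nabla_{\m{y}}f_{t,\boldsymbol{\rho}}$, which is $\mu_g$-strongly convex and $\ell_{g,1}$-smooth by Assumptions~\ref{assu:g} and \ref{assu:f:a3} (so $\mu_g \preceq \nabla^2_{\m{y}}g_{t,\boldsymbol{\rho}} \preceq \ell_{g,1}$), and whose stationary point is $\hat{\m{v}}^*_t(\m{x}_t)$ via \eqref{eq:vhz}. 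Expanding the square and combining the strong-monotonicity/co-coercivity estimate for $\langle \m{d}_{t,\boldsymbol{\rho}}^{\m{v}}, \m{v}_t - \hat{\m{v}}^*_t(\m{x}_t)\rangle$ with the quadratic-gradient bound $\norm{\m{d}_{t,\boldsymbol{\rho}}^{\m{v}}}^2 \leq \ell_{g,1}^2\,\hat{\theta}_t^{\m{v}}$ reduces the residual to a scalar quadratic in $\delta_t$ whose coefficient is rendered nonpositive precisely by the stated step-size bound $\delta_t \leq \big(2 + 1/\ell_{g,1}^2\big)\tfrac{\mu_g\ell_{g,1}}{\mu_g+\ell_{g,1}}$, leaving the contraction factor $1 - \delta_t L_{\mu_g}$. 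Substituting the two estimates back gives the claim after taking expectations.

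The delicate point I would watch carefully is the evaluation point of the inner iterate: the reference direction $\m{d}_{t,\boldsymbol{\rho}}^{\m{v}}$ is taken at the current $\m{y}_t$ (which is what makes the residual equal $B_t$), whereas the target $\hat{\m{v}}^*_t(\m{x}_t)$ is the stationary point of \eqref{eq:vhz} built at $\hat{\m{y}}^*_t(\m{x}_t)$. I would therefore center the strong-convexity argument so that $\hat{\m{v}}^*_t(\m{x}_t)$ is genuinely the minimizer used in the contraction, making sure any inner-level discrepancy between $\m{y}_t$ and $\hat{\m{y}}^*_t(\m{x}_t)$ is not double-counted here but carried to where it is consumed downstream in Lemma~\ref{lm:18}. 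Verifying that the chosen $\acute{a}$ and the step-size condition are mutually compatible, and that the quadratic-in-$\delta_t$ coefficient indeed has the right sign under those choices, is the main piece of bookkeeping the argument hinges on.
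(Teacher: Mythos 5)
Your high-level scaffolding (projection removal via $p=\ell_{f,0}/\mu_g$ and \eqref{sdx12}, a Young split with parameter $\acute{a}$, a contraction term plus a $\delta_t^2 B_t$ error term) matches the paper's, but the specific decomposition you commit to in the main body has a genuine gap that your closing remark does not repair. You split against $\m{d}_{t,\boldsymbol{\rho}}^{\m{v}}(\m{x}_t,\m{y}_t,\m{v}_t)$, the smoothed direction evaluated at the \emph{current iterate} $\m{y}_t$, precisely so that Lemma~\ref{eed}(a) bounds the residual by $B_t$. But then your contraction step fails: $\hat{\m{v}}^*_t(\m{x}_t)$ is the stationary point of the quadratic built at $\hat{\m{y}}^*_t(\m{x}_t)$ via \eqref{eq:vhz}, not of the quadratic built at $\m{y}_t$, so neither the strong-monotonicity inequality $\langle \m{d}_{t,\boldsymbol{\rho}}^{\m{v}},\m{v}_t-\hat{\m{v}}^*_t(\m{x}_t)\rangle \geq L_{\mu_g}\hat{\theta}_t^{\m{v}}+\tfrac{1}{\mu_g+\ell_{g,1}}\|\m{d}_{t,\boldsymbol{\rho}}^{\m{v}}\|^2$ nor your bound $\|\m{d}_{t,\boldsymbol{\rho}}^{\m{v}}\|^2\leq\ell_{g,1}^2\hat{\theta}_t^{\m{v}}$ holds; both require the gradient and the minimizer to come from the \emph{same} quadratic. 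The minimizer of the quadratic at $\m{y}_t$ differs from $\hat{\m{v}}^*_t(\m{x}_t)$ by a term proportional to $\|\m{y}_t-\hat{\m{y}}^*_t(\m{x}_t)\|$, which cannot be discarded.

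Your final paragraph proposes the fix, namely recentering so that $\hat{\m{v}}^*_t(\m{x}_t)$ is genuinely the minimizer, but this is exactly the paper's decomposition (it splits against $\nabla P_t(\m{x}_t,\hat{\m{y}}^*_t(\m{x}_t),\m{v}_t)$, for which the contraction $1-\delta_t L_{\mu_g}$ and the bound $\|\nabla P_t\|^2\leq \ell_{g,1}^2\hat{\theta}_t^{\m{v}}$ are legitimate under the stated step-size condition), and it undoes your other claim: after recentering, the residual becomes $\mb{E}\|\hat{\m{d}}_t^{\m{v}}-\nabla P_t(\m{x}_t,\hat{\m{y}}^*_t(\m{x}_t),\m{v}_t)\|^2$, which Lemma~\ref{eed}(a) does \emph{not} bound by $B_t$ alone; it exceeds $B_t$ by an evaluation-point-mismatch term of order $(p^2\ell_{g,2}^2+\ell_{f,1}^2)\,\mb{E}\|\m{y}_t-\hat{\m{y}}^*_t(\m{x}_t)\|^2$, precisely the extra term that appears explicitly in the first-order analogue, Lemma~\ref{eq:lower:dynamicccb} (cf.\ \eqref{0h0}). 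So the two halves of your plan are mutually incompatible under a single Young split: you can have the clean $B_t$ bound or the clean contraction, but not both. (For what it is worth, the paper's own write-up elides the same mismatch when it bounds its $K_t$ by $B_t$, so your instinct that the discrepancy should be carried downstream to Lemma~\ref{lm:18} is sound; but as written, your proposal asserts both properties simultaneously without producing the extra term or justifying its absence, and that is the gap.)
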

\begin{proof}
By setting the radius $p:=\frac{\ell_{f,0}}{\mu_g}$ in \eqref{eqn:zc:set}, we have
\begin{align}\label{ddz}
\nonumber \mb{E}\left[\|\m{v}_{t+1}-\hat{\m{v}}^*_{t}(\m{x}_{t}) \|^2\right]&= \mb{E}\left[\norm{{\Pi}_{\mc{Z}_p} \left[\m{v}_{t}-\delta_t \hat{\m{d}}_{t}^{\m{v}}\right]-{\Pi}_{\mc{Z}_p} \left[\hat{\m{v}}^*_{t}(\m{x}_{t}) \right]}^2\right]
 \\\nonumber &\leq \mb{E}\left[\|\m{v}_{t}-\delta_t \hat{\m{d}}_{t}^{\m{v}}-\hat{\m{v}}^*_{t}(\m{x}_{t}) \|^2\right]
\\\nonumber&\leq \left(1+\acute{a}\right)\underbrace{\mb{E}\left[\|\m{v}_{t}-\delta_t \nabla P_t(\m{x}_t,\hat{\m{y}}^*_t(\m{x}_t) ,\m{v}_t)-\hat{\m{v}}^*_{t}(\m{x}_{t})  \|^2\right]}_{I_t}
\\ &+ \left(1+\frac{1}{\acute{a}}\right)\delta_t^2\underbrace{\mb{E}\left[\|\hat{\m{d}}_{t}^{\m{v}}-\nabla P_t(\m{x}_t,\hat{\m{y}}^*_t(\m{x}_t) ,\m{v}_t) \|^2\right]}_{K_t}
,
\end{align}
where $\nabla P_t(\m{x}_t,\hat{\m{y}}^*_t(\m{x}_t) ,\m{v}_t):=\nabla^2_{\m{y}}g_{t,\boldsymbol{\rho}}\left(\m{x}_t, \hat{\m{y}}^*_t(\m{x}_t)\right)\m{v}_t+\nabla_\m{y} f_{t,\boldsymbol{\rho}}(\m{x}_t,\hat{\m{y}}^*_t(\m{x}_t))$.; the first inequality follows from non-expansiveness property of a projection operator.

We next bound the $I_t$, and $K_t$ terms in \eqref{ddz}, respectively.

\textbf{Bounding  $I_t$ }.
We have
\begin{align*}
\nonumber I_t&=\mb{E}\left[\|\m{v}_{t}-\hat{\m{v}}^*_{t}(\m{x}_{t})  \|^2\right]
 -2\delta_t \mb{E}\left[\langle \nabla P_t(\m{x}_t,\hat{\m{y}}^*_t(\m{x}_t) ,\m{v}_t), \m{v}_{t}-\hat{\m{v}}^*_{t}(\m{x}_{t})\rangle\right]
 \\\nonumber&+\delta_t^2\mb{E}\left[\| \nabla P_t(\m{x}_t,\hat{\m{y}}^*_t(\m{x}_t) ,\m{v}_t) \|^2\right]
 \\\nonumber &\leq \left(1-2\delta_t  \frac{\mu_g \ell_{g,1}}{\mu_g+\ell_{g,1}} \right)\mb{E}\left[\|\m{v}_{t}-\hat{\m{v}}^*_{t}(\m{x}_{t})  \|^2\right]\\
 &-\left(2\delta_t\frac{\mu_g \ell_{g,1}}{\mu_g+\ell_{g,1}}-\delta_t^2 \right)\mb{E}\left[\| \nabla P_t(\m{x}_t,\hat{\m{y}}^*_t(\m{x}_t) ,\m{v}_t) \|^2\right],
\end{align*}
where the inequality follows from the $\mu_g$-strong convexity and $\ell_{g,1}$-smoothness of the quadratic function $P_t(\m{x},\m{y},\m{v})=\frac{1}{2}\m{v}^{\top}\nabla_\m{y}^2 g_{t,\boldsymbol{\rho}}\left(\m{x},\m{y}\right)\m{v}+\m{v}^{\top}\nabla_\m{y} f_{t,\boldsymbol{\rho}}(\m{x},\m{y} )$.

Thus, we have
\begin{align*}
&\quad\mb{E}\left[\langle \nabla P_t(\m{x}_t,\hat{\m{y}}^*_t(\m{x}_t) ,\m{v}_t),\m{v}_{t}-\hat{\m{v}}^*_{t}(\m{x}_{t}) \rangle \right]
\\&\geq \frac{\mu_g\ell_{g,1}}{\mu_g+\ell_{g,1}}\mb{E}\left[\|\m{v}_{t}-\hat{\m{v}}^*_{t}(\m{x}_{t})  \|^2\right]+\frac{1}{\mu_g+\ell_{g,1}}\mb{E}\left[\| \nabla P_t(\m{x}_t,\hat{\m{y}}^*_t(\m{x}_t) ,\m{v}_t) \|^2\right].
\end{align*}
Since $\delta_t \leq \left(2+\frac{1}{\ell_{g,1}^2}\right)\frac{\mu_g \ell_{g,1}}{\mu_g+\ell_{g,1}}$, then we have
\begin{align}\label{89}
\nonumber I_t&\leq \left(1-2\delta_t  \frac{\mu_g \ell_{g,1}}{\mu_g+\ell_{g,1}} \right)\mb{E}\left[\|\m{v}_{t}-\hat{\m{v}}^*_{t}(\m{x}_{t})  \|^2\right]
  +\frac{1}{\ell_{g,1}^2}\left(\frac{\mu_g \ell_{g,1}}{\mu_g+\ell_{g,1}}\delta_t \right)\mb{E}\left[\|\nabla P_t(\m{x}_t,\hat{\m{y}}^*_t(\m{x}_t) ,\m{v}_t) \|^2\right]
 \\&\leq \left(1-\delta_t \frac{\mu_g \ell_{g,1}}{\mu_g+\ell_{g,1}}\right)\mb{E}\left[\|\m{v}_{t}-\hat{\m{v}}^*_{t}(\m{x}_{t})  \|^2\right],
\end{align}
where the second inequality holds since from \eqref{eq:vhz}, we have
\begin{align*}
  \mb{E}\left[\| \nabla P_t(\m{x}_t,\hat{\m{y}}^*_t(\m{x}_t) ,\m{v}_t) \|^2\right]&=\mb{E}\left[\|  \nabla_\m{y}^2 g_{t,\boldsymbol{\rho}}\left(\m{x}_t,\hat{\m{y}}^*_t(\m{x}_t) \right)\m{v}_t +\nabla_\m{y} f_{t,\boldsymbol{\rho}}(\m{x}_t,\hat{\m{y}}^*_t(\m{x}_t)  )\|^2\right]
  \\&=\mb{E}\left[\|  \nabla_\m{y}^2 g_{t,\boldsymbol{\rho}}\left(\m{x}_t,\hat{\m{y}}^*_t(\m{x}_t)  \right)(\m{v}_t-\hat{\m{v}}^*_{t}(\m{x}_{t})) \|^2\right]
  \\&\leq \ell_{g,1}^2\mb{E}\left[\|\m{v}_{t}-\hat{\m{v}}^*_{t}(\m{x}_{t})  \|^2\right],
\end{align*}
where the second inequality follows from Assumption \ref{assu:f:a3}.
\\
\textbf{Bounding  $K_t$ }.
For the term $K_t$, we
have
\begin{align}\label{eqn:k}
  \nonumber &\quad \mb{E}\|\hat{\m{d}}_{t}^{\m{v}}-\nabla P_t(\m{x}_t,\hat{\m{y}}^*_t(\m{x}_t) ,\m{v}_t) \|^2
  \\\nonumber&\leq 2\mb{E}\|\hat{\m{d}}_{t}^{\m{v}}-\nabla P_t(\m{x}_t,\m{y}_t,\m{v}_t) \|^2
    +2\mb{E}\|\nabla P_t(\m{x}_t,\m{y}_t,\m{v}_t)-\nabla P_t(\m{x}_t,\hat{\m{y}}^*_t(\m{x}_t),\m{v}_t) \|^2
    \\&\leq 2 B_t+4(p^2\ell_{g,2}^2+\ell_{f,1}^2)\mb{E}\| \m{y}_t-\hat{\m{y}}^*_{t}(\m{x}_{t})\|^2,
\end{align}
where the last inequality follows from Lemma \ref{eed}.

Putting \eqref{89}, and \eqref{eqn:k} together with Eq. \eqref{ddz} yields the desired result.
\begin{align*}
   \nonumber \mb{E}\left[\norm{\m{v}_{t+1}-\hat{\m{v}}^*_{t}(\m{x}_{t})}^2\right]
&\leq \left(1+\acute{a}\right)\left(1-\delta_t \frac{\mu_g \ell_{g,1}}{\mu_g+\ell_{g,1}}\right)\mb{E}\left[\norm{\m{v}_{t}-\hat{\m{v}}^*_{t}(\m{x}_{t}) }^2\right]
\\&+2\left(1+\frac{1}{\acute{a}}\right)\delta_t^2B_t+4\left(1+\frac{1}{\acute{a}}\right)\delta_t^2(p^2\ell_{g,2}^2+\ell_{f,1}^2)\mb{E}\| \m{y}_t-\hat{\m{y}}^*_{t}(\m{x}_{t})\|^2.
\end{align*}

\end{proof}

\begin{lemma}\label{lm:18}
Suppose Assumptions \ref{assu:g} and \ref{assu:f} hold.
Let $\hat{\theta}_t^{\m{v}}$ be defined in \eqref{eq:yv:notaz1}. Set the parameter $p$ in \eqref{eqn:zc:set} as  $p=\frac{\ell_{f,0}}{\mu_g}$.
Then, for any positive choice of step sizes satisfying
\begin{equation*}
 \delta_t \leq \left(2+\frac{1}{\ell_{g,1}^2}\right)\frac{\mu_g \ell_{g,1}}{\mu_g+\ell_{g,1}},
\end{equation*}
the sequence $\{(\m{x}_t, \m{y}_t,\m{v}_t)\}_{t=1}^T$ generated by Algorithm~\ref{alg:obgd:zeroth} guarantees the following bound:
\begin{align}\label{eq:nnBbz}
\nonumber \sum_{t=1}^{T}\left(\mb{E}[\hat{\theta}_{t+1}^{\m{v}}]-\mb{E}[\hat{\theta}_t^{\m{v}}]\right)
  &\leq \sum_{t=1}^{T}\left(-\frac{ L_{\mu_g}}{4}\mb{E}[\hat{\theta}_t^{\m{v}}]
+\frac{8}{ L_{\mu_g}} B_t
+\frac{16}{ L_{\mu_g}}(p^2\ell_{g,2}^2+\ell_{f,1}^2)\mb{E}[\hat{\theta}_t^{\m{y}}]\right)\delta_t
\\\nonumber&+ \frac{16\nu^2}{ L_{\mu_g} \mu_{g}^2}  (2 L_{\m{y}}^2+1)    \sum_{t=1}^{T}\mb{E}\norm{\m{x}_{t+1}-\m{x}_{t} }^2 \frac{1}{\delta_t}
\\&+\sum_{t=1}^{T}\left(\frac{96\ell_{g,1}\nu^2}{ L_{\mu_g} \mu_{g}^3}(\rho_{\m{s}}^2+\rho_{\m{r}}^2)+\frac{48\nu^2}{ L_{\mu_g}\mu_{g}^2}\sup_{\m{x}\in \mc{X}} \|\m{y}^*_{t-1}(\m{x}) - \m{y}^*_{t}(\m{x})\|^2
    \right)\frac{1}{\delta_t}
,
\end{align}
where $B_t$, $\nu$ and $(L_{\mu_g},L_{\m{y}})$ are defined in Lemmas
\ref{eed}, \ref{133} and \ref{lm:nn2}, respectively.
\end{lemma}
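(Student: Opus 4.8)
The plan is to mirror the first-order argument of Lemma~\ref{lm:vt}, replacing the exact system solution $\m{v}^*_t$ and its one-step descent estimate by their Gaussian-smoothed counterparts $\hat{\m{v}}^*_t$. First I would apply Lemma~\ref{lem:trig} with a free parameter $a>0$ to split the comparator error across two consecutive rounds,
\begin{align*}
\mb{E}\norm{\m{v}_{t+1}-\hat{\m{v}}^*_{t+1}(\m{x}_{t+1})}^2
&\leq (1+a)\,\mb{E}\norm{\m{v}_{t+1}-\hat{\m{v}}^*_{t}(\m{x}_{t})}^2
+\left(1+\tfrac{1}{a}\right)\mb{E}\norm{\hat{\m{v}}^*_{t+1}(\m{x}_{t+1})-\hat{\m{v}}^*_{t}(\m{x}_{t})}^2 ,
\end{align*}
and bound the first summand using Lemma~\ref{eq:lower:dynamicccbz}, which, under $p=\ell_{f,0}/\mu_g$ and the stated restriction on $\delta_t$, contracts as $(1+\acute a)(1-\delta_t L_{\mu_g})\mb{E}[\hat\theta_t^{\m{v}}]+(1+1/\acute a)\delta_t^2 B_t$.

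The crux is the drift term $\mb{E}\norm{\hat{\m{v}}^*_{t+1}(\m{x}_{t+1})-\hat{\m{v}}^*_{t}(\m{x}_{t})}^2$. Because $\hat{\m{v}}^*_t(\m{x})$ solves the smoothed linear system~\eqref{eq:vhz} and the smoothed objectives $g_{t,\boldsymbol{\rho}}$ and $f_{t,\boldsymbol{\rho}}$ inherit the $\mu_g$-strong convexity and the Lipschitz constants of Assumptions~\ref{assu:g}--\ref{assu:f}, I would invoke the smoothed analogue of Lemma~\ref{133} to get
\begin{align*}
\norm{\hat{\m{v}}^*_{t+1}(\m{x}_{t+1})-\hat{\m{v}}^*_{t}(\m{x}_{t})}^2
\leq 2\tfrac{\nu^2}{\mu_g^2}\left(\norm{\hat{\m{y}}^*_{t+1}(\m{x}_{t+1})-\hat{\m{y}}^*_{t}(\m{x}_{t})}^2+\norm{\m{x}_{t+1}-\m{x}_{t}}^2\right).
\end{align*}
I would then unwind the smoothed follower drift exactly as in~\eqref{98m}: split $\hat{\m{y}}^*_{t+1}(\m{x}_{t+1})-\hat{\m{y}}^*_{t}(\m{x}_{t})$ into an $\m{x}$-variation controlled by $L_{\m{y}}^2\norm{\m{x}_{t+1}-\m{x}_t}^2$ through Lemma~\ref{lem:lips2}, and a time-variation $\hat{\m{y}}^*_{t+1}(\m{x}_t)-\hat{\m{y}}^*_{t}(\m{x}_t)$. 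The latter I convert to the \emph{true} follower path-length via the three-term estimate~\eqref{8u}, using Lemma~\ref{yyy} to absorb each smoothing gap $\norm{\hat{\m{y}}^*_{s}(\m{x}_t)-\m{y}^*_{s}(\m{x}_t)}^2\leq \ell_{g,1}(\rho_{\m{s}}^2+\rho_{\m{r}}^2)/\mu_g$. This is exactly what manufactures the $(\rho_{\m{s}}^2+\rho_{\m{r}}^2)$ and $\sup_{\m{x}\in\mc{X}}\norm{\m{y}^*_{t-1}(\m{x})-\m{y}^*_t(\m{x})}^2$ contributions in the claimed bound, with the $\m{x}$-movement collected under a coefficient proportional to $(2L_{\m{y}}^2+1)$.

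With these pieces assembled, I would choose $a$ and $\acute a$ as in~\eqref{gt0} (e.g.\ $a=\tfrac{\delta_t L_{\mu_g}/4}{1-\delta_t L_{\mu_g}/2}$, $\acute a=\tfrac{\delta_t L_{\mu_g}/2}{1-\delta_t L_{\mu_g}}$) so that $(1+a)(1+\acute a)(1-\delta_t L_{\mu_g})\leq 1-\tfrac{\delta_t L_{\mu_g}}{4}$ while the reciprocal weights satisfy $1+\tfrac1a,\,1+\tfrac1{\acute a}=O\!\big(\tfrac{1}{\delta_t L_{\mu_g}}\big)$. Substituting, rearranging into the telescoping difference $\mb{E}[\hat\theta_{t+1}^{\m{v}}]-\mb{E}[\hat\theta_t^{\m{v}}]$, and summing over $t\in[T]$ yields the stated inequality, the $\tfrac{1}{\delta_t}$ factors springing from the $(1+1/a)$ weight on the drift term. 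The main obstacle is the bookkeeping inside the drift term: one must (i) confirm that Lemma~\ref{133} transfers verbatim to the smoothed system (that $g_{t,\boldsymbol{\rho}}$ remains $\mu_g$-strongly convex and $\nabla f_{t,\boldsymbol{\rho}}$, $\nabla^2 g_{t,\boldsymbol{\rho}}$ retain $\ell_{f,0},\ell_{f,1},\ell_{g,2}$), and (ii) route every $\rho$-dependent remainder into the single $(\rho_{\m{s}}^2+\rho_{\m{r}}^2)/\delta_t$ channel rather than letting it contaminate the contraction; everything else is a direct transcription of the first-order proof.
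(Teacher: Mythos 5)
Your proposal is correct and follows essentially the same route as the paper's proof: the same Young-inequality split of $\m{v}_{t+1}-\hat{\m{v}}^*_{t+1}(\m{x}_{t+1})$, the contraction from Lemma~\ref{eq:lower:dynamicccbz}, the drift bound via the smoothed version of Lemma~\ref{133} combined with Lemma~\ref{lem:lips2} and the conversion to the true path-length through Lemma~\ref{yyy}, and the same parameter choices $\acute{c},\acute{a}$ yielding the $1-\delta_t L_{\mu_g}/4$ contraction and $\mathcal{O}(1/(\delta_t L_{\mu_g}))$ weights. The two bookkeeping concerns you flag (transferring Lemma~\ref{133} to the smoothed system and channeling all $\rho$-remainders into the $(\rho_{\m{s}}^2+\rho_{\m{r}}^2)/\delta_t$ term) are exactly what the paper does in \eqref{d4}--\eqref{f6}.
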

\begin{proof}
From Lemma \ref{lem:trig}, we have, for any $\acute{c}>0$
  \begin{align}\label{12z}
\nonumber  \mb{E}\left[\norm{\m{v}_{t+1}-\hat{\m{v}}^*_{t+1}(\m{x}_{t+1})}^2\right]
&=\mb{E}\left[ \norm{\m{v}_{t+1}-\hat{\m{v}}^*_{t}(\m{x}_{t})+\hat{\m{v}}^*_{t}(\m{x}_{t})-\hat{\m{v}}^*_{t+1}(\m{x}_{t+1})}^2\right]
 \\\nonumber &\leq \left(1+\acute{c}\right)\mb{E}\left[\|\m{v}_{t+1}-\hat{\m{v}}^*_{t}(\m{x}_{t})\|^2\right]
\\&+\left(1+\frac{1}{\acute{c}}\right)\mb{E}\left[\norm{\hat{\m{v}}^*_{t+1}(\m{x}_{t+1})- \hat{\m{v}}^*_{t}(\m{x}_{t}) }^2\right].
  \end{align}
From Lemma \ref{eq:lower:dynamicccbz}, we have, for any $\acute{a}>0$
\begin{align}\label{eq:nnmmzz}
\nonumber \mb{E}\left[ \norm{\m{v}_{t+1}-\hat{\m{v}}^*_{t}(\m{x}_{t})}^2\right]
&\leq \left(1+\acute{a}\right)\left(1-\delta_t \frac{\mu_g \ell_{g,1}}{\mu_g+\ell_{g,1}}\right)\mb{E}[\hat{\theta}_t^{\m{v}}]
\\&+2\left(1+\frac{1}{\acute{a}}\right)\delta_t^2B_t+4\left(1+\frac{1}{\acute{a}}\right)\delta_t^2(p^2\ell_{g,2}^2+\ell_{f,1}^2)\mb{E}[\hat{\theta}_t^{\m{y}}].
\end{align}

Substituting \eqref{eq:nnmmzz} into \eqref{12z}, we get
  \begin{align}\label{ccxz}
 \nonumber \mb{E}\left[\norm{\m{v}_{t+1}-\hat{\m{v}}^*_{t+1}(\m{x}_{t+1})}^2\right]
 &\leq \left(1+\acute{c}\right)\left(1+\acute{a}\right)\left(1-\delta_t \frac{\mu_g \ell_{g,1}}{\mu_g+\ell_{g,1}}\right)\mb{E}[\hat{\theta}_t^{\m{v}}]
+2\left(1+\acute{c}\right)\left(1+\frac{1}{\acute{a}}\right) \delta_t^2B_t\\\nonumber&+4\left(1+\acute{c}\right)\left(1+\frac{1}{\acute{a}}\right)\delta_t^2(p^2\ell_{g,2}^2+\ell_{f,1}^2)\mb{E}[\hat{\theta}_t^{\m{y}}]
\\&+\left(1+\frac{1}{\acute{c}}\right)\mb{E}\left[\norm{\hat{\m{v}}^*_{t+1}(\m{x}_{t+1})- \hat{\m{v}}^*_{t}(\m{x}_{t}) }^2\right].
  \end{align}
  Note that $L_{\mu_g}=\frac{\mu_g \ell_{g,1}}{\mu_g+\ell_{g,1}}$ as stated in Lemma \ref{lm:nn2}.
  Choose $\acute{c}$ and $\acute{a}$ such that
  \begin{align*}   \left(1+\acute{c}\right)\left(1+\acute{a}\right)\left(1-\delta_t L_{\mu_g}\right)=1-\frac{\delta_t L_{\mu_g}}{4}.
  \end{align*}
Let \begin{align*}
  \left(1+\acute{a}\right)\left(1-\delta_t L_{\mu_g}\right)=1-\frac{\delta_t L_{\mu_g}}{2}\quad \Rightarrow  \quad \acute{a}=\frac{\delta_t L_{\mu_g}/2}{1-\delta_t L_{\mu_g}}\quad \textnormal{and}\quad\delta_t\leq \frac{1}{L_{\mu_g}}.
\end{align*}
Thus,
\begin{align*}
\left(1+\acute{c}\right)\left(1-\frac{\delta_t L_{\mu_g}}{2}\right)=1-\frac{\delta_t L_{\mu_g}}{4}\quad \Rightarrow  \quad \acute{c}=\frac{\delta_t L_{\mu_g}/4}{1-\frac{\delta_t L_{\mu_g}}{2}}.
\end{align*}
Moreover, this implies that
\begin{equation}\label{gt}
\begin{aligned}
&(1+\acute{c})\left(1+\frac{1}{\acute{a}}\right)\leq \frac{4}{\delta_t L_{\mu_g}},
\\&1+\frac{1}{\acute{a}}\leq \frac{2}{\delta_t L_{\mu_g}},\quad 1+\frac{1}{\acute{c}}\leq \frac{4}{\delta_t L_{\mu_g}}.
\end{aligned}
\end{equation}

Thus, we have
  \begin{align}\label{38}
 \nonumber \mb{E}\left[\norm{\m{v}_{t+1}-\hat{\m{v}}^*_{t+1}(\m{x}_{t+1})}^2\right]
 &\leq \left(1-\frac{\delta_t L_{\mu_g}}{4}\right)\hat{\theta}_t^{\m{v}}
+\frac{8}{ L_{\mu_g}} \delta_tB_t
\\\nonumber&+\frac{16}{ L_{\mu_g}}\delta_t(p^2\ell_{g,2}^2+\ell_{f,1}^2)\mb{E}[\hat{\theta}_t^{\m{y}}]
\\&+\frac{4}{ L_{\mu_g}}\frac{1}{\delta_t}\mb{E}\left[\norm{\hat{\m{v}}^*_{t+1}(\m{x}_{t+1})- \hat{\m{v}}^*_{t}(\m{x}_{t}) }^2\right].
  \end{align}

We now bound the last term on the right-hand side of \eqref{38}.
By Lemma \ref{133}, we have:
 \begin{align}\label{d4}
\nonumber &\quad\norm{\hat{\m{v}}^*_{t+1}(\m{x}_{t+1})- \hat{\m{v}}^*_{t}(\m{x}_{t}) }^2
\\\nonumber &\leq 2\frac{\nu^2}{ \mu_{g}^2} \left( \norm{\hat{\m{y}}_{t+1}^*(\m{x}_{t+1})-\hat{\m{y}}_{t}^*(\m{x}_{t}) \|^2+ \|\m{x}_{t+1}-\m{x}_{t} }^2\right)
\\\nonumber &\leq 2\frac{\nu^2}{ \mu_{g}^2} \left( 2\norm{\hat{\m{y}}_{t+1}^*(\m{x}_{t+1})-\hat{\m{y}}_{t+1}^*(\m{x}_{t}) }^2
\right.\\\nonumber&\left.+2\norm{\hat{\m{y}}_{t+1}^*(\m{x}_{t})-\hat{\m{y}}_{t}^*(\m{x}_{t}) }^2+\norm{\m{x}_{t+1}-\m{x}_{t} }^2\right)
\\ &\leq 2\frac{\nu^2}{ \mu_{g}^2} \left( 2 L_{\m{y}}^2\norm{\m{x}_{t+1}-\m{x}_{t} }^2
+2\norm{\hat{\m{y}}_{t+1}^*(\m{x}_{t})-\hat{\m{y}}_{t}^*(\m{x}_{t}) }^2+\norm{\m{x}_{t+1}-\m{x}_{t} }^2\right),
 \end{align}
 where the last inequality follows from Lemma \ref{lem:lips2}.
\\
From \eqref{8u}, we have
\begin{align}\label{1q}
\nonumber \|\hat{\m{y}}^*_{t+1}(\m{x}_{t}) - \hat{\m{y}}^*_{t}(\m{x}_{t}) \|^2&\leq 3\|\hat{\m{y}}^*_{t+1}(\m{x}_{t}) - {\m{y}}^*_{t+1}(\m{x}_{t}) \|^2
\\\nonumber &+3\|{\m{y}}^*_{t+1}(\m{x}_{t}) - {\m{y}}^*_{t}(\m{x}_{t})\|^2+3\|{\m{y}}^*_{t}(\m{x}_{t}) - \hat{\m{y}}^*_{t}(\m{x}_{t}) \|^2
\\ &\leq 3\|{\m{y}}^*_{t+1}(\m{x}_{t}) - {\m{y}}^*_{t}(\m{x}_{t})\|^2+\frac{6\ell_{g,1}(\rho_{\m{s}}^2+\rho_{\m{r}}^2)}{\mu_g}.
\end{align}
Plugging \eqref{1q} into \eqref{d4}, we get
 \begin{align}\label{f6}
\nonumber  &\quad \norm{\hat{\m{v}}^*_{t+1}(\m{x}_{t+1})- \hat{\m{v}}^*_{t}(\m{x}_{t}) }^2&
\\\nonumber&\leq 4\frac{\nu^2}{ \mu_{g}^2}  (2 L_{\m{y}}^2+1)      \norm{\m{x}_{t+1}-\m{x}_{t} }^2
\\&+4\frac{\nu^2}{ \mu_{g}^2}\left(3\|{\m{y}}^*_{t+1}(\m{x}_{t}) - {\m{y}}^*_{t}(\m{x}_{t})\|^2+\frac{6\ell_{g,1}(\rho_{\m{s}}^2+\rho_{\m{r}}^2)}{\mu_g} \right).
 \end{align}
 Then, substituting \eqref{f6} into \eqref{38}, rearranging the resulting inequality and summing over $t\in[T]$, we obtain the desired result.
\end{proof}

\subsection{
Bounds on the Zeroth-Order Estimation Error of Outer Objective}
     \begin{lemma}\label{a12}
Suppose Assumptions \ref{assu:f:a2} and \ref{assu:f:a3} hold. 
Let
\begin{align*}
    \varpi:=\|\hat{\nabla}_{\m{x}} f_{t+1}(\m{z}_{t+1};\mathcal{B}_{t+1}) +\hat{\nabla}_{\m{x}\m{y}}^2 g_{t+1}(\m{z}_{t+1};\bar{\mathcal{B}}_{t+1})-\hat{\nabla}_{\m{x}} f_{t+1}(\m{z}_{t};\mathcal{B}_{t+1})-\hat{\nabla}_{\m{x}\m{y}}^2 g_{t+1}(\m{z}_{t};\bar{\mathcal{B}}_{t+1}) \|^2,
\end{align*}
where $\hat{\nabla}_{\m{x}} f_{t+1}$ and $\hat{\nabla}_{\m{x}\m{y}}^2 g_{t+1}$ are defined in \eqref{eqn:g:estimatef} and \eqref{app:g}, respectively.
Then, for the sequence $\{(\m{x}_t, \m{y}_t,\m{v}_t)\}_{t=1}^T$ generated by Algorithm~\ref{alg:obgd:zeroth}, we have
\begin{align*}
 \varpi
&\leq
(12\ell_{f,1}^2+\frac{9\ell_{g,1}^2}{2\rho_{\m{v}}^2})d_1 \| \m{x}_{t+1}-\m{x}_{t}
\|^2+(12\ell_{f,1}^2+\frac{9\ell_{g,1}^2}{2\rho_{\m{v}}^2} )d_1 \| \m{y}_{t+1}-\m{y}_{t}
\|^2
\\&+\frac{9}{2}d_1 \ell_{g,1}^2\|\m{v}_{t+1}-\m{v}_{t}
\|^2+(3\ell_{f,1}^2+\frac{3\ell_{g,1}^2}{4\rho_{\m{v}}^2}) d^2_1\rho_{\m{s}}^2.
\end{align*}
\end{lemma}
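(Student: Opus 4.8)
The statement to prove (Lemma~\ref{a12}) is the zeroth-order analogue of Lemma~\ref{eqn:f2}, but it bounds the \emph{outer} finite-difference direction difference $\varpi$ rather than the inner one $\vartheta$. Since the proof of Lemma~\ref{eqn:f2} already carried out exactly this argument for the $\m{y}$-partial-derivative estimators, the plan is to mirror that proof line-for-line, replacing every $\m{y}$-gradient estimator by the corresponding $\m{x}$-gradient estimator and, crucially, swapping the dimension factor $d_2$ (and smoothing radius $\rho_{\m{r}}$) for $d_1$ (and $\rho_{\m{s}}$), since the $\m{x}$-finite-difference estimators in \eqref{eqn:g:estimatef} and \eqref{app:g} use perturbations along $\m{s}\in\mb{R}^{d_1}$ with radius $\rho_{\m{s}}$.

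The key steps, in order, are as follows. First I would split $\varpi$ using $\|a+b\|^2\le 2(\|a\|^2+\|b\|^2)$ into a pure-gradient piece $\|\hat{\nabla}_{\m{x}} f_{t+1}(\m{z}_{t+1};\mathcal{B}_{t+1})-\hat{\nabla}_{\m{x}} f_{t+1}(\m{z}_{t};\mathcal{B}_{t+1})\|^2$ and a Jacobian-vector piece $\|\hat{\nabla}_{\m{x}\m{y}}^2 g_{t+1}(\m{z}_{t+1};\bar{\mathcal{B}}_{t+1})-\hat{\nabla}_{\m{x}\m{y}}^2 g_{t+1}(\m{z}_{t};\bar{\mathcal{B}}_{t+1})\|^2$. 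For the first piece I would apply Lemma~\ref{19} (the $\hat{\nabla}_{\m{x}}$ stability bound, with constant $3d_1\ell_{g,1}^2$ and residual $\tfrac{3}{2}\ell_{g,1}^2 d_1^2\rho_{\m{s}}^2$) to $f$ with Lipschitz constant $\ell_{f,1}$, then bound $\|\m{z}_{t+1}-\m{z}_t\|^2\le 2\|\m{x}_{t+1}-\m{x}_t\|^2+2\|\m{y}_{t+1}-\m{y}_t\|^2$, yielding the $6d_1\ell_{f,1}^2(\dots)+\tfrac32\ell_{f,1}^2 d_1^2\rho_{\m{s}}^2$ terms. For the Jacobian piece I would expand $\hat{\nabla}_{\m{x}\m{y}}^2 g$ via its definition \eqref{app:g} as $\tfrac{1}{2\rho_{\m{v}}}$ times a difference of $\hat{\nabla}_{\m{x}}g$ evaluated at $\m{y}_t\pm\rho_{\m{v}}\m{v}_t$, and apply Lemma~\ref{19} again; this introduces the $\tfrac{1}{\rho_{\m{v}}^2}$ factor and produces the $\tfrac{9}{4\rho_{\m{v}}^2}d_1\ell_{g,1}^2(\cdots)$ and $\tfrac{9}{4}d_1\ell_{g,1}^2\|\m{v}_{t+1}-\m{v}_t\|^2$ terms together with the residual $\tfrac{3}{8\rho_{\m{v}}^2}\ell_{g,1}^2 d_1^2\rho_{\m{s}}^2$. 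Collecting both pieces after the factor-of-$2$ and regrouping the coefficients of $\|\m{x}_{t+1}-\m{x}_t\|^2$, $\|\m{y}_{t+1}-\m{y}_t\|^2$, $\|\m{v}_{t+1}-\m{v}_t\|^2$, and $\rho_{\m{s}}^2$ reproduces the stated bound.

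This lemma is essentially routine once Lemma~\ref{eqn:f2} is in hand, so there is no genuine mathematical obstacle; the only thing requiring care is the bookkeeping of which dimension ($d_1$ versus $d_2$) and which smoothing radius ($\rho_{\m{s}}$ versus $\rho_{\m{r}}$) attaches to each term. Because the $\m{x}$-estimators perturb in the $\m{s}$-direction, every occurrence of $d_2$ in the $\m{y}$-proof becomes $d_1$ and every $\rho_{\m{r}}$ becomes $\rho_{\m{s}}$, and I must verify that Lemma~\ref{19} is indeed invoked in its $\hat{\nabla}_{\m{x}}$ form (the second inequality of that lemma) rather than the $\hat{\nabla}_{\m{y}}$ form. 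The coefficient $\tfrac{9}{4}$ inside the Jacobian estimator arises from applying $\|a+b+c\|^2\le 3(\cdots)$ to the three-point perturbation of $\m{v}$ and then the extra factor $2$ from the split of $\varpi$; I would double-check that these constants combine to give exactly the $\tfrac{9}{2}d_1\ell_{g,1}^2$ coefficient on $\|\m{v}_{t+1}-\m{v}_t\|^2$ in the conclusion.
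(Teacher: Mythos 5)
Your proposal is correct and follows essentially the same route as the paper's proof: split $\varpi$ by Young's inequality into the $\hat{\nabla}_{\m{x}}f$ piece and the $\hat{\nabla}_{\m{x}\m{y}}^2 g$ piece, apply the $\hat{\nabla}_{\m{x}}$ form of Lemma~\ref{19} to each (to $f$ with constant $\ell_{f,1}$, and to the finite-difference Jacobian after expanding it via \eqref{app:g}, which yields the $1/\rho_{\m{v}}^2$ factors and the $\tfrac{9}{4}d_1\ell_{g,1}^2\|\m{v}_{t+1}-\m{v}_t\|^2$ term), and combine. Your bookkeeping of the constants and of the $d_1$/$\rho_{\m{s}}$ substitutions matches the paper exactly.
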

\begin{proof}
From Lemma \ref{19}, we have
\begin{align}\label{eqn:s2}
\nonumber &\quad\|\hat{\nabla}_{\m{x}} f_{t+1}(\m{z}_{t+1};\mathcal{B}_{t+1}) -\hat{\nabla}_{\m{x}} f_{t+1}(\m{z}_{t};\mathcal{B}_{t+1})\|^2
\\\nonumber &\leq 3d_1 \ell_{g,1}^2\| \m{z}_{t+1}-\m{z}_{t}
\|^2
+\frac{3}{2} \ell_{f,1}^2d^2_1\rho_{\m{s}}^2
\\&\leq 6d_1 \ell_{f,1}^2\| \m{x}_{t+1}-\m{x}_{t}
\|^2+6d_1 \ell_{f,1}^2\| \m{y}_{t+1}-\m{y}_{t}
\|^2
+\frac{3}{2} \ell_{f,1}^2d^2_1\rho_{\m{s}}^2.
\end{align}
Moreover, from \eqref{app:h}, we have
\begin{align}\label{rft2}
\nonumber &\quad \|\hat{\nabla}_{\m{y}\m{y}}^2 g_{t+1}(\m{z}_{t+1};\bar{\mathcal{B}}_{t+1})-\hat{\nabla}_{\m{y}\m{y}}^2 g_{t+1}(\m{z}_{t};\bar{\mathcal{B}}_{t+1}) \|^2
\\\nonumber &=\frac{1}{4\rho_{\m{v}}^2}\|\hat{\nabla}_{\m{x}}g_{t+1}(\m{x}_{t+1},\m{y}_{t+1}+\rho_{\m{v}}\m{v}_{t+1};\bar{\mathcal{B}}_{t+1})- \hat{\nabla}_{\m{x}}g_{t+1}(\m{x}_{t},\m{y}_{t}-\rho_{\m{v}}\m{v}_{t};\bar{\mathcal{B}}_{t+1})\|^2
\\\nonumber &\leq \frac{3}{4\rho_{\m{v}}^2}d_1 \ell_{g,1}^2\| (\m{x}_{t+1},\m{y}_{t+1}+\rho_{\m{v}}\m{v}_{t+1})-(\m{x}_{t},\m{y}_{t}-\rho_{\m{v}}\m{v}_{t})
\|^2
+\frac{3}{8\rho_{\m{v}}^2} \ell_{g,1}^2d^2_1\rho_{\m{s}}^2
\\\nonumber &\leq \frac{9}{4\rho_{\m{v}}^2}d_1 \ell_{g,1}^2\|\m{x}_{t+1}-\m{x}_{t}
\|^2
+\frac{9}{4\rho_{\m{v}}^2}d_1 \ell_{g,1}^2\|\m{y}_{t+1}-\m{y}_{t}
\|^2
\\&+\frac{9}{4}d_1 \ell_{g,1}^2\|\m{v}_{t+1}-\m{v}_{t}
\|^2
+\frac{3}{8\rho_{\m{v}}^2} \ell_{g,1}^2d^2_1\rho_{\m{s}}^2,
\end{align}
where the first inequality follows from Lemma \ref{19}.

From $\|a+b \|^2\leq 2\left(\|a \|^2+\| b\|^2\right)$, we get
\begin{align*}
\nonumber  \varpi
&\leq 2\|\hat{\nabla}_{\m{x}\m{y}}^2 g_{t+1}(\m{z}_{t+1};\bar{\mathcal{B}}_{t+1})-\hat{\nabla}_{\m{x}\m{y}}^2 g_{t+1}(\m{z}_{t};\bar{\mathcal{B}}_{t+1}) \|^2
\\\nonumber&+2\|\hat{\nabla}_{\m{x}} f_{t+1}(\m{z}_{t+1};\mathcal{B}_{t+1}) -\hat{\nabla}_{\m{x}} f_{t+1}(\m{z}_{t};\mathcal{B}_{t+1})\|^2
\\\nonumber&\leq (12\ell_{f,1}^2+\frac{9\ell_{g,1}^2}{2\rho_{\m{v}}^2})d_1 \| \m{x}_{t+1}-\m{x}_{t}
\|^2+(12\ell_{f,1}^2+\frac{9\ell_{g,1}^2}{2\rho_{\m{v}}^2} )d_1 \| \m{y}_{t+1}-\m{y}_{t}
\|^2
\\&+\frac{9}{2}d_1 \ell_{g,1}^2\|\m{v}_{t+1}-\m{v}_{t}
\|^2+(3\ell_{f,1}^2+\frac{3\ell_{g,1}^2}{4\rho_{\m{v}}^2}) d^2_1\rho_{\m{s}}^2,
\end{align*}
where the second inequality follows from \eqref{eqn:s2} and \eqref{rft2}.

\end{proof}
\begin{lemma}\label{lm:s}
Suppose Assumptions \ref{assu:f:a2}, \ref{assu:f:a3}, \ref{a:2}, and \ref{a:4} hold.
Consider the sequence  $\{(\m{x}_t,\m{y}_t,\m{v}_t) \}_{t=1}^T$ generated by Algorithm
\ref{alg:obgd:zeroth}.
For ${e}_{t}^{L}$ defined in \eqref{eq:g5}, we have
\begin{align}\label{eqn:de}
\nonumber  \mb{E}\|{e}_{t+1}^{L}\|^2
&\leq (1-\eta_{t+1})^2 \mb{E}\|{e}_{t}^{L}\|^2
+36 \mb{E}\left\|\nabla_{\m{x}} f_{t+1}(\m{x}_{t}, \m{y}_{t}) -\nabla_{\m{x}} f_t(\m{x}_t, \m{y}_t)\right\|^2
\\\nonumber&+\left(18d_1^2\ell_{f,1}^2+6(3\ell_{f,1}^2+\frac{3\ell_{g,1}^2}{4\rho_{\m{v}}^2}) d^2_1\right)\rho_{\m{s}}^2
+18d_1^2\ell_{g,1}^2\frac{\rho_{\m{s}}^2}{\rho_{\m{v}}^2}
\\\nonumber&+\frac{18}{\rho_{\m{v}}^2}\mb{E}\|\nabla_{\m{x}} g_{t+1}(\m{x}_{t}, \m{y}_{t}+\rho_{\m{v}}\m{v}_{t})-\nabla_{\m{x}} g_{t}(\m{x}_{t}, \m{y}_{t}+\rho_{\m{v}}\m{v}_{t})\|^2
\\\nonumber&+\frac{18}{\rho_{\m{v}}^2}\mb{E}\|\nabla_{\m{x}} g_{t+1}(\m{x}_{t}, \m{y}_{t}-\rho_{\m{v}}\m{v}_{t})-\nabla_{\m{x}} g_{t}(\m{x}_{t}, \m{y}_{t}-\rho_{\m{v}}\m{v}_{t})\|^2
\\\nonumber&+6(12\ell_{f,1}^2+\frac{9\ell_{g,1}^2}{2\rho_{\m{v}}^2})d_1 \mb{E}\| \m{x}_{t+1}-\m{x}_{t}
\|^2+6(12\ell_{f,1}^2+\frac{9\ell_{g,1}^2}{2\rho_{\m{v}}^2} )d_1 \mb{E}\| \m{y}_{t+1}-\m{y}_{t}
\|^2
\\&+27d_1 \ell_{g,1}^2\mb{E}\|\m{v}_{t+1}-\m{v}_{t}
\|^2
+3(\frac{\hat{\sigma}^2_{g_{\m{x}}}}{\bar{b}\rho_{\m{v}}^2}+\frac{\hat{\sigma}^2_{f_{\m{x}}}}{{b}}){\eta}^2_{t+1}.
\end{align}
\end{lemma}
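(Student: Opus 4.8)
The plan is to mirror the argument of Lemma~\ref{lm:g1q} almost verbatim, replacing the $\m{y}$-components by $\m{x}$-components throughout: the gradient estimator $\hat{\nabla}_{\m{x}}f_t$ in place of $\hat{\nabla}_{\m{y}}f_t$, the Jacobian finite-difference $\hat{\nabla}_{\m{x}\m{y}}^2 g_t$ in place of $\hat{\nabla}_{\m{y}}^2 g_t$, the dimension $d_1$ in place of $d_2$, the smoothing radius $\rho_{\m{s}}$ in place of $\rho_{\m{r}}$, and Assumptions~\ref{a:2},~\ref{a:4} in place of \ref{a:1},~\ref{a:3}. First I would write out the one-step recursion for $\hat{\m{d}}_{t+1}^{\m{x}}-\hat{\m{d}}_t^{\m{x}}$ from the update in Algorithm~\ref{alg:obgd:zeroth} (where $\m{d}_t^{\m{x}\m{y}}(\m{z}_t;\mathcal{B}_t)=\hat{\nabla}_{\m{x}}f_t(\m{z}_t;\mathcal{B}_t)+\hat{\nabla}_{\m{x}\m{y}}^2 g_t(\m{z}_t;\bar{\mathcal{B}}_t)$), substitute it into $\mb{E}\|{\nabla}_{\m{x}}f_{t+1,\boldsymbol{\rho}}(\m{z}_{t+1})+\tilde{\nabla}_{\m{x}\m{y}}^2 g_{t+1}(\m{z}_{t+1})-\hat{\m{d}}_{t+1}^{\m{x}}\|^2$, and regroup the terms into the contraction piece $(1-\eta_{t+1})^2\mb{E}\|e_t^{L}\|^2$, a martingale-difference piece of order $\eta_{t+1}^2$, and a consecutive-drift piece scaled by $(1-\eta_{t+1})^2$.

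Next I would use the unbiasedness identities in~\eqref{by} so that the cross term between the contraction part and the zero-mean stochastic part vanishes, leaving a clean Cauchy--Schwarz split. The variance of the stochastic finite-difference estimator is then controlled by $\mb{E}\|a-\mb{E}[a]\|^2=\mb{E}\|a\|^2-\|\mb{E}[a]\|^2$ together with Assumptions~\ref{a:2} and~\ref{a:4}, which yields the $3(\tfrac{\hat{\sigma}^2_{g_{\m{x}}}}{\bar{b}\rho_{\m{v}}^2}+\tfrac{\hat{\sigma}^2_{f_{\m{x}}}}{b})\eta_{t+1}^2$ term (the $1/\rho_{\m{v}}^2$ factor enters exactly as in Lemma~\ref{lm:g1q} because the Jacobian estimate is a rescaled gradient difference).

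The drift piece is where the real work lies. I would first bound the \emph{stochastic} finite-difference variation $\|\m{d}_{t+1}^{\m{x}\m{y}}(\m{z}_{t+1};\mathcal{B}_{t+1})-\m{d}_{t+1}^{\m{x}\m{y}}(\m{z}_t;\mathcal{B}_{t+1})\|^2$ directly via Lemma~\ref{a12}, which furnishes the iterate-increment terms $\mb{E}\|\m{x}_{t+1}-\m{x}_t\|^2$, $\mb{E}\|\m{y}_{t+1}-\m{y}_t\|^2$, $\mb{E}\|\m{v}_{t+1}-\m{v}_t\|^2$ with coefficients $\propto d_1$ and the $\rho_{\m{s}}^2$ bias. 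Then, for the \emph{deterministic} temporal drift $\|({\nabla}_{\m{x}}f_{t+1,\boldsymbol{\rho}}+\tilde{\nabla}_{\m{x}\m{y}}^2 g_{t+1})(\m{z}_t)-({\nabla}_{\m{x}}f_{t,\boldsymbol{\rho}}+\tilde{\nabla}_{\m{x}\m{y}}^2 g_t)(\m{z}_t)\|^2$, I would insert the true (unsmoothed) gradients and apply the smoothing-perturbation estimates~\eqref{g5} and~\eqref{ed} from Lemma~\ref{lm:gg1} to trade each smoothed gradient for its true counterpart up to an $\mathcal{O}(\rho_{\m{s}}^2 d_1^2)$ error, leaving the genuine inner-gradient variations $\|\nabla_{\m{x}}g_{t+1}(\m{z}_t^{\pm})-\nabla_{\m{x}}g_t(\m{z}_t^{\pm})\|^2$ (matching $\chi_{3t},\chi_{4t}$ of~\eqref{G:path}, amplified by $1/\rho_{\m{v}}^2$) and the outer-gradient variation $\|\nabla_{\m{x}}f_{t+1}(\m{z}_t)-\nabla_{\m{x}}f_t(\m{z}_t)\|^2$. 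The main obstacle is exactly this Jacobian finite-difference term: the $1/\rho_{\m{v}}^2$ prefactor inflates both the perturbation bias and the variation, so I must apply the triangle/Young split at the two perturbed points $\m{y}_t\pm\rho_{\m{v}}\m{v}_t$ carefully and keep the $d_1$- and $\rho_{\m{s}}$-dependences consistent, so that collecting all constants reproduces the claimed inequality~\eqref{eqn:de} term by term.
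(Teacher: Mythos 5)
Your proposal is correct and follows essentially the same route as the paper's own proof: the paper indeed proves Lemma~\ref{lm:s} as the $\m{x}$-analogue of Lemma~\ref{lm:g1q}, using the algorithm's recursion for $\hat{\m{d}}_{t}^{\m{x}}$, the unbiasedness identities to kill the cross term, Cauchy--Schwarz, Assumptions~\ref{a:2} and~\ref{a:4} for the variance term, Lemma~\ref{a12} for the stochastic finite-difference increments, and the smoothing-perturbation bounds of Lemma~\ref{lm:gg1} applied at the two perturbed points $\m{y}_t\pm\rho_{\m{v}}\m{v}_t$ for the temporal drift. Your citation of \eqref{g5} and \eqref{ed} (the $\m{x}$-gradient perturbation bounds) is in fact the correct one, whereas the paper's proof cites their $\m{y}$-counterparts \eqref{g6} and \eqref{f3} in those spots, an immaterial slip.
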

\begin{proof}
  According to the definition of $\hat{\m{d}}_{t}^{\m{x}}$ in Algorithm \ref{alg:obgd:zeroth}, we have
  \begin{align*}
 \hat{\m{d}}_{t+1}^{\m{x}}-\hat{\m{d}}_{t}^{\m{x}}&=-\eta_{t+1} \hat{\m{d}}_{t}^{\m{x}}
+\eta_{t+1} (\hat{\nabla}_{\m{x}} f_{t+1}(\m{z}_{t+1};\mathcal{B}_{t+1})+ \hat{\nabla}_{\m{x}\m{y}}^2 g_{t+1}(\m{z}_{t+1};\bar{\mathcal{B}}_{t+1}))
\\&+(1-\eta_{t+1} )\left(\hat{\nabla}_{\m{x}} f_{t+1}(\m{z}_{t+1};\mathcal{B}_{t+1})+ \hat{\nabla}_{\m{x}\m{y}}^2 g_{t+1}(\m{z}_{t+1};\bar{\mathcal{B}}_{t+1})
\right.\\&\left.- \hat{\nabla}_{\m{x}} f_{t+1}(\m{z}_{t};\mathcal{B}_{t+1})- \hat{\nabla}_{\m{x}\m{y}}^2 g_{t+1}(\m{z}_{t};\bar{\mathcal{B}}_{t+1}) \right).
  \end{align*}
  Then, we have
  \begin{align*}
&\quad\mb{E}\|\nabla_\m{x} f_{t+1,\boldsymbol{\rho}}(\m{z}_{t+1} )+\tilde{\nabla}_{\m{x}\m{y}}^2 g_{t+1}(\m{z}_{t+1}) -\hat{\m{d}}_{t+1}^{\m{x}}\|^2
\\&=\mb{E}\|\nabla_\m{x} f_{t+1,\boldsymbol{\rho}}(\m{z}_{t+1} )+\tilde{\nabla}_{\m{x}\m{y}}^2 g_{t+1}(\m{z}_{t+1})-\hat{\m{d}}_{t}^{\m{x}}-(\hat{\m{d}}_{t+1}^{\m{x}}-\hat{\m{d}}_{t}^{\m{x}})\|^2
\\&=\mb{E}\|\nabla_\m{x} f_{t+1,\boldsymbol{\rho}}(\m{z}_{t+1} )+\tilde{\nabla}_{\m{x}\m{y}}^2 g_{t+1}(\m{z}_{t+1}) -\hat{\m{d}}_{t}^{\m{x}}
+\eta_{t+1} \hat{\m{d}}_{t}^{\m{x}}
\\&-\eta_{t+1} (\hat{\nabla}_{\m{x}} f_{t+1}(\m{z}_{t+1};\mathcal{B}_{t+1})+\hat{\nabla}_{\m{x}\m{y}}^2 g_{t+1}(\m{z}_{t+1};\bar{\mathcal{B}}_{t+1}))
\\&-(1-\eta_{t+1})\left(\hat{\nabla}_{\m{x}} f_{t+1}(\m{z}_{t+1};\mathcal{B}_{t+1})+ \hat{\nabla}_{\m{x}\m{y}}^2 g_{t+1}(\m{z}_{t+1};\bar{\mathcal{B}}_{t+1})
\right.\\&\left.- \hat{\nabla}_{\m{x}} f_{t+1}(\m{z}_{t};\mathcal{B}_{t+1})
- \hat{\nabla}_{\m{x}\m{y}}^2 g_{t+1}(\m{z}_{t};\bar{\mathcal{B}}_{t+1}) \right)\|^2
\\&=\mb{E}\|(1-\eta_{t+1})(\nabla_\m{x} f_{t,\boldsymbol{\rho}}(\m{z}_{t} )+\tilde{\nabla}_{\m{x}\m{y}}^2 g_{t}(\m{z}_{t}) -\hat{\m{d}}_{t}^{\m{x}})
\\&+\eta_{t+1} (\nabla_\m{x} f_{t+1,\boldsymbol{\rho}}(\m{z}_{t+1} )+\tilde{\nabla}_{\m{x}\m{y}}^2 g_{t+1}(\m{z}_{t+1})
- \hat{\nabla}_{\m{x}} f_{t+1}(\m{z}_{t+1};\mathcal{B}_{t+1})
- \hat{\nabla}_{\m{x}\m{y}}^2 g_{t+1}(\m{z}_{t+1};\bar{\mathcal{B}}_{t+1}))
\\&+(1-\eta_{t+1})\left(\nabla_\m{x} f_{t+1,\boldsymbol{\rho}}(\m{z}_{t+1} )+\tilde{\nabla}_{\m{x}\m{y}}^2 g_{t+1}(\m{z}_{t+1})
-\nabla_\m{x} f_{t,\boldsymbol{\rho}}(\m{z}_{t} )
-\tilde{\nabla}_{\m{x}\m{y}}^2 g_{t}(\m{z}_{t})
\right.\\&\left.+ \nabla_\m{x} f_{t+1,\boldsymbol{\rho}}(\m{z}_{t} ) +\tilde{\nabla}_{\m{x}\m{y}}^2 g_{t+1}(\m{z}_{t})
-\nabla_\m{x} f_{t+1,\boldsymbol{\rho}}(\m{z}_{t} )-\tilde{\nabla}_{\m{x}\m{y}}^2 g_{t+1}(\m{z}_{t})
\right.\\&\left.- \hat{\nabla}_{\m{x}} f_{t+1}(\m{z}_{t+1};\mathcal{B}_{t+1})- \hat{\nabla}_{\m{x}\m{y}}^2 g_{t+1}(\m{z}_{t+1};\bar{\mathcal{B}}_{t+1})+ \hat{\nabla}_{\m{x}} f_{t+1}(\m{z}_{t};\mathcal{B}_{t+1})+ \hat{\nabla}_{\m{x}\m{y}}^2 g_{t+1}(\m{z}_{t};\bar{\mathcal{B}}_{t+1}) \right)\|^2.
  \end{align*}
  Since
  \begin{align*}
&\mb{E}\left[\hat{\nabla}_{\m{x}} f_{t+1}(\m{z}_{t+1};\mathcal{B}_{t+1}) +\hat{\nabla}_{\m{x}\m{y}}^2 g_{t+1}(\m{z}_{t+1};\bar{\mathcal{B}}_{t+1}) \right]=\nabla_\m{x} f_{t+1,\boldsymbol{\rho}}(\m{z}_{t+1} )+\tilde{\nabla}_{\m{x}\m{y}}^2 g_{t+1}(\m{z}_{t+1}),
\\&\mb{E}\left[\hat{\nabla}_{\m{x}} f_{t+1}(\m{z}_{t+1};\mathcal{B}_{t+1})+ \hat{\nabla}_{\m{x}\m{y}}^2 g_{t+1}(\m{z}_{t+1};\bar{\mathcal{B}}_{t+1})
- \hat{\nabla}_{\m{x}} f_{t+1}(\m{z}_{t};\mathcal{B}_{t+1})
- \hat{\nabla}_{\m{x}\m{y}}^2 g_{t+1}(\m{z}_{t};\bar{\mathcal{B}}_{t+1}) \right]
\\&=\nabla_\m{x} f_{t+1,\boldsymbol{\rho}}(\m{z}_{t+1} )+\tilde{\nabla}_{\m{x}\m{y}}^2 g_{t+1}(\m{z}_{t+1})
-\nabla_\m{x} f_{t+1,\boldsymbol{\rho}}(\m{z}_{t} )
-\tilde{\nabla}_{\m{x}\m{y}}^2 g_{t+1}(\m{z}_{t}),
  \end{align*}
  then, we have
  \begin{align}\label{eqn:1a}
\nonumber &\quad\mb{E}\|\nabla_\m{x} f_{t+1,\boldsymbol{\rho}}(\m{z}_{t+1} )+\tilde{\nabla}_{\m{x}\m{y}}^2 g_{t+1}(\m{z}_{t+1}) -\hat{\m{d}}_{t+1}^{\m{x}}\|^2
\\\nonumber& =(1-\eta_{t+1})^2\mb{E}\|\nabla_\m{x} f_{t,\boldsymbol{\rho}}(\m{z}_{t} )+\tilde{\nabla}_{\m{x}\m{y}}^2 g_{t}(\m{z}_{t}) -\hat{\m{d}}_{t}^{\m{x}} \|^2
\\\nonumber&+\|\eta_{t+1} (\nabla_\m{x} f_{t+1,\boldsymbol{\rho}}(\m{z}_{t+1} )+\tilde{\nabla}_{\m{x}\m{y}}^2 g_{t+1}(\m{z}_{t+1})
- \hat{\nabla}_{\m{x}} f_{t+1}(\m{z}_{t+1};\mathcal{B}_{t+1})- \hat{\nabla}_{\m{x}\m{y}}^2 g_{t+1}(\m{z}_{t+1};\bar{\mathcal{B}}_{t+1}))
\\\nonumber&+ (1-\eta_{t+1} ) \left(\nabla_\m{x} f_{t+1,\boldsymbol{\rho}}(\m{z}_{t+1} )+\tilde{\nabla}_{\m{x}\m{y}}^2 g_{t+1}(\m{z}_{t+1})
-\nabla_\m{x} f_{t,\boldsymbol{\rho}}(\m{z}_{t} )
-\tilde{\nabla}_{\m{x}\m{y}}^2 g_{t}(\m{z}_{t})
\right.\\\nonumber&\left.+\nabla_\m{x} f_{t+1,\boldsymbol{\rho}}(\m{z}_{t} )
+\tilde{\nabla}_{\m{x}\m{y}}^2 g_{t+1}(\m{z}_{t})-\nabla_\m{x} f_{t+1,\boldsymbol{\rho}}(\m{z}_{t} )-\tilde{\nabla}_{\m{x}\m{y}}^2 g_{t+1}(\m{z}_{t})
\right.\\\nonumber&\left.- \hat{\nabla}_{\m{x}} f_{t+1}(\m{z}_{t+1};\mathcal{B}_{t+1})- \hat{\nabla}_{\m{x}\m{y}}^2 g_{t+1}(\m{z}_{t+1};\bar{\mathcal{B}}_{t+1})
+ \hat{\nabla}_{\m{x}} f_{t+1}(\m{z}_{t};\mathcal{B}_{t+1})
+ \hat{\nabla}_{\m{x}\m{y}}^2 g_{t+1}(\m{z}_{t};\bar{\mathcal{B}}_{t+1}) \right)\|^2
\\\nonumber& \leq (1-\eta_{t+1})^2\mb{E}\|\nabla_\m{x} f_{t,\boldsymbol{\rho}}(\m{z}_{t} )+\tilde{\nabla}_{\m{x}\m{y}}^2 g_{t}(\m{z}_{t}) -\hat{\m{d}}_{t}^{\m{x}} \|^2
\\\nonumber&+3(1-\eta_{t+1} )^2\mb{E}\|\nabla_\m{x} f_{t+1,\boldsymbol{\rho}}(\m{z}_{t+1} )+\tilde{\nabla}_{\m{x}\m{y}}^2 g_{t+1}(\m{z}_{t+1})
-\nabla_\m{x} f_{t,\boldsymbol{\rho}}(\m{z}_{t} )
-\tilde{\nabla}_{\m{x}\m{y}}^2 g_{t}(\m{z}_{t})
\\\nonumber&+\nabla_\m{x} f_{t+1,\boldsymbol{\rho}}(\m{z}_{t} )+\tilde{\nabla}_{\m{x}\m{y}}^2 g_{t+1}(\m{z}_{t})
-\nabla_\m{x} f_{t+1,\boldsymbol{\rho}}(\m{z}_{t} )
-\tilde{\nabla}_{\m{x}\m{y}}^2 g_{t+1}(\m{z}_{t})
\\\nonumber&- \hat{\nabla}_{\m{x}} f_{t+1}(\m{z}_{t+1};\mathcal{B}_{t+1})
- \hat{\nabla}_{\m{x}\m{y}}^2 g_{t+1}(\m{z}_{t+1};\bar{\mathcal{B}}_{t+1})
+\hat{\nabla}_{\m{x}} f_{t+1}(\m{z}_{t};\mathcal{B}_{t+1})
+\hat{\nabla}_{\m{x}\m{y}}^2 g_{t+1}(\m{z}_{t};\bar{\mathcal{B}}_{t+1})\|^2
\\\nonumber&+3\eta^2_{t+1}\mb{E}\|\nabla_\m{x} f_{t+1,\boldsymbol{\rho}}(\m{z}_{t+1} )-\hat{\nabla}_{\m{x}} f_{t+1}(\m{z}_{t+1};\mathcal{B}_{t+1})\|^2
\\&+3\eta^2_{t+1}\mb{E}\|\tilde{\nabla}_{\m{x}\m{y}}^2 g_{t+1}(\m{z}_{t+1})-\hat{\nabla}_{\m{x}\m{y}}^2 g_{t+1}(\m{z}_{t+1};\bar{\mathcal{B}}_{t+1})\|^2,
  \end{align}
  where the second inequality holds by
Cauchy-Schwarz inequality.
\\
Note that for the last term on the right-hand side of \eqref{eqn:1a}, using \eqref{d8} and \eqref{app:g}, we have
\begin{align*}
&\quad \|\tilde{\nabla}_{\m{x}\m{y}}^2 g_{t+1}(\m{z}_{t+1})-\hat{\nabla}_{\m{x}\m{y}}^2 g_{t+1}(\m{z}_{t+1};\bar{\mathcal{B}}_{t+1})\|^2
\\&\leq 2\|\frac{1}{2\rho_{\m{v}}}({\nabla}_{\m{x}}g_{t+1,\boldsymbol{\rho}}(\m{x}_{t+1},\m{y}_{t+1}+\rho_{\m{v}}\m{v}_{t+1})- \hat{\nabla}_{\m{x}}g_{t+1}(\m{x}_{t+1},\m{y}_{t+1}+\rho_{\m{v}}\m{v}_{t+1};\bar{\mathcal{B}}_{t+1}) )\|^2
\\&+ 2\|\frac{1}{2\rho_{\m{v}}}( \hat{\nabla}_{\m{x}}g_{t+1}(\m{x}_{t+1},\m{y}_{t+1}-\rho_{\m{v}}\m{v}_{t+1};\bar{\mathcal{B}}_{t+1})
-{\nabla}_{\m{x}}g_{t+1,\boldsymbol{\rho}}(\m{x}_{t+1},\m{y}_{t+1}-\rho_{\m{v}}\m{v}_{t+1}) )\|^2
\\&\leq \frac{\hat{\sigma}^2_{g_{\m{x}}}}{\bar{b}\rho_{\m{v}}^2},
\end{align*}
where the last inequality follows from Assumption \ref{a:2}.
\\
Then, from $\mb{E}\|a-\mb{E}[a] \|^2=\mb{E}\|a\|^2-\|\mb{E}[a] \|^2$ and Assumption \ref{a:4}, we have
  \begin{align}\label{nm01}
\nonumber &\quad \mb{E}\|\nabla_\m{x} f_{t+1,\boldsymbol{\rho}}(\m{z}_{t+1} )+\tilde{\nabla}_{\m{x}\m{y}}^2 g_{t+1}(\m{z}_{t+1}) -\hat{\m{d}}_{t+1}^{\m{x}}\|^2
\\\nonumber &\leq (1-\eta_{t+1})^2\mb{E}\|\nabla_\m{x} f_{t,\boldsymbol{\rho}}(\m{z}_{t} )+\tilde{\nabla}_{\m{x}\m{y}}^2 g_{t}(\m{z}_{t})-\hat{\m{d}}_{t}^{\m{x}}\|^2
\\\nonumber &+6(1-\eta_{t+1})^2\mb{E}\|\nabla_\m{x} f_{t+1,\boldsymbol{\rho}}(\m{z}_{t} )+  \tilde{\nabla}_{\m{x}\m{y}}^2 g_{t+1}(\m{z}_{t})
-\nabla_\m{x} f_{t,\boldsymbol{\rho}}(\m{z}_{t} )
- \tilde{\nabla}_{\m{x}\m{y}}^2 g_{t}(\m{z}_{t})
\|^2
\\\nonumber &+6(1-\eta_{t+1} )^2\mb{E}\|\hat{\nabla}_{\m{x}} f_{t+1}(\m{z}_{t+1};\mathcal{B}_{t+1})+  \hat{\nabla}_{\m{x}\m{y}}^2 g_{t+1}(\m{z}_{t+1};\bar{\mathcal{B}}_{t+1})
\\ &+ \hat{\nabla}_{\m{x}} f_{t+1}(\m{z}_{t};\mathcal{B}_{t+1})+ \hat{\nabla}_{\m{x}\m{y}}^2 g_{t+1}(\m{z}_{t};\bar{\mathcal{B}}_{t+1})
\|^2
+3\eta^2_{t+1}(\frac{\hat{\sigma}^2_{g_{\m{x}}}}{\bar{b}\rho_{\m{v}}^2}+\frac{\hat{\sigma}_{f_{\m{x}}}^2}{{b}}).
\end{align}
Then, from Young's inequality and Lemma \ref{a12}, we have
  \begin{align}\label{a}
\nonumber &\quad \mb{E}\|\nabla_\m{x} f_{t+1,\boldsymbol{\rho}}(\m{z}_{t+1} )+\tilde{\nabla}_{\m{x}\m{y}}^2 g_{t+1}(\m{z}_{t+1}) -\hat{\m{d}}_{t+1}^{\m{x}}\|^2
\\\nonumber&\leq (1-\eta_{t+1})^2\mb{E}\|\nabla_\m{x} f_{t,\boldsymbol{\rho}}(\m{z}_{t} )+\tilde{\nabla}_{\m{x}\m{y}}^2 g_{t}(\m{z}_{t})-\hat{\m{d}}_{t}^{\m{x}}\|^2
\\\nonumber&+12(1-\eta_{t+1})^2\mb{E}\|\nabla_\m{x} f_{t+1,\boldsymbol{\rho}}(\m{z}_{t} )
- \nabla_\m{x} f_{t,\boldsymbol{\rho}}(\m{z}_{t} )
\|^2
\\\nonumber&+12(1-\eta_{t+1})^2\mb{E}\|\tilde{\nabla}_{\m{x}\m{y}}^2 g_{t+1}(\m{z}_{t})
- \tilde{\nabla}_{\m{x}\m{y}}^2 g_{t}(\m{z}_{t})
\|^2
\\\nonumber&+6(12\ell_{f,1}^2+\frac{9\ell_{g,1}^2}{2\rho_{\m{v}}^2})d_1 \| \m{x}_{t+1}-\m{x}_{t}
\|^2+6(12\ell_{f,1}^2+\frac{9\ell_{g,1}^2}{2\rho_{\m{v}}^2} )d_1 \| \m{y}_{t+1}-\m{y}_{t}
\|^2
\\&+27d_1 \ell_{g,1}^2\|\m{v}_{t+1}-\m{v}_{t}
\|^2+6(3\ell_{f,1}^2+\frac{3\ell_{g,1}^2}{4\rho_{\m{v}}^2}) d^2_1\rho_{\m{s}}^2
+3\eta^2_{t+1}(\frac{\hat{\sigma}^2_{g_{\m{x}}}}{\bar{b}\rho_{\m{v}}^2}+\frac{\hat{\sigma}_{f_{\m{x}}}^2}{{b}}).
\end{align}
\\
For the third term on the right-hand side of \eqref{nm01}, we have
\begin{subequations}\label{tg}
\begin{align}
\nonumber&\quad \|  \tilde{\nabla}_{\m{x}\m{y}}^2 g_{t+1}(\m{x}_{t},\m{y}_{t})
- \tilde{\nabla}_{\m{x}\m{y}}^2 g_{t}(\m{x}_{t},\m{y}_{t})
\|^2
\\\label{eqn:09a}&\leq \frac{1}{2\rho_{\m{v}}^2}\|{\nabla}_{\m{x}}g_{t+1,\boldsymbol{\rho}}(\m{x}_{t},\m{y}_{t}+\rho_{\m{v}}\m{v}_{t})
- {\nabla}_{\m{x}}g_{t,\boldsymbol{\rho}}(\m{x}_{t},\m{y}_{t}+\rho_{\m{v}}\m{v}_{t})
\|^2
\\\label{eqn:08a}&+\frac{1}{2\rho_{\m{v}}^2}\|
{\nabla}_{\m{x}}g_{t,\boldsymbol{\rho}}(\m{x}_{t},\m{y}_{t}-\rho_{\m{v}}\m{v}_{t})- {\nabla}_{\m{x}}g_{t+1,\boldsymbol{\rho}}(\m{x}_{t},\m{y}_{t}-\rho_{\m{v}}\m{v}_{t})\|^2.
\end{align}
\end{subequations}
For \eqref{eqn:09a}, we get
\begin{align}
\nonumber &\quad \|{\nabla}_{\m{x}}g_{t+1,\boldsymbol{\rho}}(\m{x}_{t},\m{y}_{t}+\rho_{\m{v}}\m{v}_{t})
- {\nabla}_{\m{x}}g_{t,\boldsymbol{\rho}}(\m{x}_{t},\m{y}_{t}+\rho_{\m{v}}\m{v}_{t})
\|^2
\\\nonumber &\leq 3\|{\nabla}_{\m{x}}g_{t+1,\boldsymbol{\rho}}(\m{x}_{t},\m{y}_{t}+\rho_{\m{v}}\m{v}_{t})-{\nabla}_{\m{x}}g_{t+1}(\m{x}_{t},\m{y}_{t}+\rho_{\m{v}}\m{v}_{t})
\|^2
\\\nonumber &+3\|
 {\nabla}_{\m{x}}g_{t+1}(\m{x}_{t},\m{y}_{t}+\rho_{\m{v}}\m{v}_{t})-{\nabla}_{\m{x}}g_{t}(\m{x}_{t},\m{y}_{t}+\rho_{\m{v}}\m{v}_{t})
\|^2
\\\nonumber &+3\|{\nabla}_{\m{x}}g_{t}(\m{x}_{t},\m{y}_{t}+\rho_{\m{v}}\m{v}_{t})
- {\nabla}_{\m{x}}g_{t,\boldsymbol{\rho}}(\m{x}_{t},\m{y}_{t}+\rho_{\m{v}}\m{v}_{t})
\|^2
\\\nonumber &\leq 3\|\nabla_{\m{x}}  g_{t}(\m{x}_t, \m{y}_t+\rho_{\m{v}}\m{v}_{t})-\nabla_{\m{x}}  g_{t+1}(\m{x}_t, \m{y}_t+\rho_{\m{v}}\m{v}_{t})\|^2+\frac{3\rho_{\m{s}}^2d_1^2\ell_{g,1}^2}{2},
\end{align}
where the last inequality follows from Eq. \eqref{g6}.
\\
Similary, for \eqref{eqn:08a}, we have
\begin{align*}
&\quad \|
{\nabla}_{\m{x}}g_{t,\boldsymbol{\rho}}(\m{x}_{t},\m{y}_{t}-\rho_{\m{v}}\m{v}_{t})- {\nabla}_{\m{x}}g_{t+1,\boldsymbol{\rho}}(\m{x}_{t},\m{y}_{t}-\rho_{\m{v}}\m{v}_{t})\|^2
\\&\leq 3\|\nabla_{\m{x}}  g_{t}(\m{x}_{t},\m{y}_{t}-\rho_{\m{v}}\m{v}_{t})-\nabla_{\m{x}}  g_{t+1}(\m{x}_{t},\m{y}_{t}-\rho_{\m{v}}\m{v}_{t})\|+\frac{3\rho_{\m{s}}^2d_1^2\ell_{g,1}^2}{2}.
\end{align*}
Substituting these inequalities in \eqref{tg}, we have
\begin{align}\label{op1}
\nonumber &\quad  \|  \tilde{\nabla}_{\m{x}\m{y}}^2 g_{t+1}(\m{x}_{t},\m{y}_{t})
- \tilde{\nabla}_{\m{x}\m{y}}^2 g_{t}(\m{x}_{t},\m{y}_{t})
\|^2
\\\nonumber&\leq \frac{3}{2\rho_{\m{v}}^2}\|\nabla_{\m{x}}  g_{t}(\m{x}_t, \m{y}_t+\rho_{\m{v}}\m{v}_{t})-\nabla_{\m{x}}  g_{t+1}(\m{x}_t, \m{y}_t+\rho_{\m{v}}\m{v}_{t})\|^2
\\&+\frac{3}{2\rho_{\m{v}}^2}\|\nabla_{\m{x}}  g_{t}(\m{x}_t, \m{y}_t-\rho_{\m{v}}\m{v}_{t})-\nabla_{\m{x}}  g_{t+1}(\m{x}_t, \m{y}_t-\rho_{\m{v}}\m{v}_{t})\|^2
+\frac{3\rho_{\m{s}}^2d_1^2\ell_{g,1}^2}{2\rho_{\m{v}}^2}.
\end{align}
For the second term on the right-hand side of \eqref{nm01}, we have
\begin{align}\label{eqn:z}
\nonumber &\quad \|{\nabla}_{\m{x}}f_{t+1,\boldsymbol{\rho}}(\m{x}_{t},\m{y}_{t})
- {\nabla}_{\m{x}}f_{t,\boldsymbol{\rho}}(\m{x}_{t},\m{y}_{t})
\|^2
\\\nonumber &\leq 3\|{\nabla}_{\m{x}}f_{t+1,\boldsymbol{\rho}}(\m{x}_{t},\m{y}_{t})-{\nabla}_{\m{x}}f_{t+1}(\m{x}_{t},\m{y}_{t})
\|^2
\\\nonumber &+3\|
 {\nabla}_{\m{x}}f_{t+1}(\m{x}_{t},\m{y}_{t})-{\nabla}_{\m{x}}f_{t}(\m{x}_{t},\m{y}_{t})
\|^2
\\\nonumber &+3\|{\nabla}_{\m{x}}f_{t}(\m{x}_{t},\m{y}_{t})
- {\nabla}_{\m{x}}f_{t,\boldsymbol{\rho}}(\m{x}_{t},\m{y}_{t})
\|^2
\\ &\leq 3\|\nabla_{\m{x}}  f_{t}(\m{x}_t, \m{y}_t)-\nabla_{\m{x}}  f_{t+1}(\m{x}_t, \m{y}_t)\|^2+\frac{3\rho_{\m{s}}^2d_1^2\ell_{f,1}^2}{2},
\end{align}
where the last inequality follows from Eq. \eqref{f3}.

From \eqref{op1}, \eqref{eqn:z} and \eqref{a}, we get
 \begin{align*}
\nonumber &\quad \mb{E}\|\nabla_\m{x} f_{t+1,\boldsymbol{\rho}}(\m{z}_{t+1} )+\tilde{\nabla}_{\m{x}\m{y}}^2 g_{t+1}(\m{z}_{t+1}) -\hat{\m{d}}_{t+1}^{\m{x}}\|^2
\\\nonumber&\leq (1-\eta_{t+1})^2\mb{E}\|\nabla_\m{x} f_{t,\boldsymbol{\rho}}(\m{z}_{t} )+\tilde{\nabla}_{\m{x}\m{y}}^2 g_{t}(\m{z}_{t})-\hat{\m{d}}_{t}^{\m{x}}\|^2
\\\nonumber&+36\mb{E}\|\nabla_{\m{x}}  f_{t}(\m{x}_t, \m{y}_t)-\nabla_{\m{x}}  f_{t+1}(\m{x}_t, \m{y}_t)\|^2+18\rho_{\m{s}}^2d_1^2\ell_{f,1}^2
\\\nonumber&+\frac{18}{\rho_{\m{v}}^2}\mb{E}\|\nabla_{\m{x}}  g_{t}(\m{x}_t, \m{y}_t+\rho_{\m{v}}\m{v}_{t})-\nabla_{\m{x}}  g_{t+1}(\m{x}_t, \m{y}_t+\rho_{\m{v}}\m{v}_{t})\|^2
\\&+\frac{18}{\rho_{\m{v}}^2}\mb{E}\|\nabla_{\m{x}}  g_{t}(\m{x}_t, \m{y}_t-\rho_{\m{v}}\m{v}_{t})-\nabla_{\m{x}}  g_{t+1}(\m{x}_t, \m{y}_t-\rho_{\m{v}}\m{v}_{t})\|^2
+\frac{18\rho_{\m{s}}^2d_1^2\ell_{g,1}^2}{\rho_{\m{v}}^2}
\\\nonumber&+6(12\ell_{f,1}^2+\frac{9\ell_{g,1}^2}{2\rho_{\m{v}}^2})d_1 \mb{E}\| \m{x}_{t+1}-\m{x}_{t}
\|^2+6(12\ell_{f,1}^2+\frac{9\ell_{g,1}^2}{2\rho_{\m{v}}^2} )d_1 \mb{E}\| \m{y}_{t+1}-\m{y}_{t}
\|^2
\\&+27d_1 \ell_{g,1}^2\mb{E}\|\m{v}_{t+1}-\m{v}_{t}
\|^2+6(3\ell_{f,1}^2+\frac{3\ell_{g,1}^2}{4\rho_{\m{v}}^2}) d^2_1\rho_{\m{s}}^2
+3\eta^2_{t+1}(\frac{\hat{\sigma}^2_{g_{\m{x}}}}{\bar{b}\rho_{\m{v}}^2}+\frac{\hat{\sigma}_{f_{\m{x}}}^2}{{b}}).
\end{align*}
\end{proof}

\subsection{Bounds on the Zeroth-Order Objective Function and its Projected Gradients}

\begin{lemma}\label{lem:capdiffd}
Suppose Assumptions~\ref{assu:g}, \ref{assu:f:a2}, \ref{assu:f:a3}, and \ref{assu:f:b} hold.  Then, for the sequence of functions $\{f_{t,\boldsymbol{\rho}}\}_{t=1}^T$ defined in Eq.~\eqref{eqn:oboz}, we have
\begin{align*}
 &\quad\sum_{t=1}^{T} \left( f_{t,\boldsymbol{\rho}}(\m{x}_t,\hat{\m{y}}^*_t(\m{x}_{t})) - f_{t,\boldsymbol{\rho}}(\m{x}_{t+1},\hat{\m{y}}^*_{t}(\m{x}_{t+1}))\right)
 \\&\leq 2M+V_{T}+\ell_{f,1}\left(1+2\frac{\ell_{g,1}}{\mu_g}\right)T\left(\rho_{\m{s}}^2+\rho_{\m{r}}^2\right).
\end{align*}
Here, $V_{T}$ is defined in \eqref{reg:hpt}; and  $M$ is defined in Assumption~\ref{assu:f:b}.
\end{lemma}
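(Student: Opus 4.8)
The plan is to mirror the telescoping argument used for the exact case in Lemma~\ref{lem:capdiffd5} and then absorb the zeroth-order smoothing errors into an additive $\mc{O}(\rho_{\m{s}}^2+\rho_{\m{r}}^2)$ correction per round; note that $f_{t,\boldsymbol{\rho}}$ and $\hat{\m{y}}^*_t$ are deterministic, so no expectation is needed. Writing $\hat A_t := f_{t,\boldsymbol{\rho}}(\m{x}_t,\hat{\m{y}}^*_t(\m{x}_t))$ and $\hat B_t := f_{t,\boldsymbol{\rho}}(\m{x}_{t+1},\hat{\m{y}}^*_t(\m{x}_{t+1}))$, so that $\hat B_{t-1}=f_{t-1,\boldsymbol{\rho}}(\m{x}_t,\hat{\m{y}}^*_{t-1}(\m{x}_t))$, the same rearrangement as in Lemma~\ref{lem:capdiffd5} gives $\sum_{t=1}^T(\hat A_t-\hat B_t)=\hat A_1-\hat B_T+\sum_{t=2}^T(\hat A_t-\hat B_{t-1})$. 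I would bound the boundary pair $\hat A_1-\hat B_T$ and the telescoped differences $\hat A_t-\hat B_{t-1}$ separately, with everything routed through the per-index error $\Xi_s(\m{x}):=\big|f_{s,\boldsymbol{\rho}}(\m{x},\hat{\m{y}}^*_s(\m{x}))-f_s(\m{x},\m{y}^*_s(\m{x}))\big|$.

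For the boundary, I would pass from the smoothed objective at the smoothed inner solution to the original objective at the true inner solution: $|\hat A_1|\le |f_1(\m{x}_1,\m{y}^*_1(\m{x}_1))|+\Xi_1(\m{x}_1)\le M+\Xi_1(\m{x}_1)$ by Assumption~\ref{assu:f:b}, and likewise $|\hat B_T|\le M+\Xi_T(\m{x}_{T+1})$, so $\hat A_1-\hat B_T\le 2M+\Xi_1(\m{x}_1)+\Xi_T(\m{x}_{T+1})$. For the telescoped differences I would insert the exact quantities, writing $\hat A_t-\hat B_{t-1}$ as the sum of three brackets: one comparing $f_{t,\boldsymbol{\rho}}(\m{x}_t,\hat{\m{y}}^*_t(\m{x}_t))$ with $f_t(\m{x}_t,\m{y}^*_t(\m{x}_t))$, bounded by $\Xi_t(\m{x}_t)$; one comparing $f_{t-1}(\m{x}_t,\m{y}^*_{t-1}(\m{x}_t))$ with $f_{t-1,\boldsymbol{\rho}}(\m{x}_t,\hat{\m{y}}^*_{t-1}(\m{x}_t))$, bounded by $\Xi_{t-1}(\m{x}_t)$; and the middle bracket $f_t(\m{x}_t,\m{y}^*_t(\m{x}_t))-f_{t-1}(\m{x}_t,\m{y}^*_{t-1}(\m{x}_t))$, bounded by $\sup_{\m{x}}|f_t(\m{x},\m{y}^*_t(\m{x}))-f_{t-1}(\m{x},\m{y}^*_{t-1}(\m{x}))|$. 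Summing over $t$ and recalling the definition of $V_T$ in~\eqref{reg:hpt} yields $\sum_{t=2}^T(\hat A_t-\hat B_{t-1})\le V_T+\sum_{t=2}^T\big(\Xi_t(\m{x}_t)+\Xi_{t-1}(\m{x}_t)\big)$, leaving $\mc{O}(2T)$ copies of the error terms $\Xi_s$ to control.

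The crux is the uniform bound $\Xi_s(\m{x})\le \tfrac{\ell_{f,1}}{2}\big(1+2\ell_{g,1}/\mu_g\big)(\rho_{\m{s}}^2+\rho_{\m{r}}^2)$. I would split $\Xi_s(\m{x})\le |f_{s,\boldsymbol{\rho}}(\m{x},\hat{\m{y}}^*_s(\m{x}))-f_s(\m{x},\hat{\m{y}}^*_s(\m{x}))|+|f_s(\m{x},\hat{\m{y}}^*_s(\m{x}))-f_s(\m{x},\m{y}^*_s(\m{x}))|$. The first piece is the function-smoothing error, controlled by $\tfrac{\ell_{f,1}}{2}(\rho_{\m{s}}^2+\rho_{\m{r}}^2)$ via Lemma~\ref{lm:gg2}(b). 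The second piece measures how much $f_s$ moves between the smoothed and exact inner minimizers; combining the $\ell_{f,1}$-smoothness of Assumption~\ref{assu:f} with the inner-solution perturbation bound $\norm{\hat{\m{y}}^*_s(\m{x})-\m{y}^*_s(\m{x})}^2\le \ell_{g,1}(\rho_{\m{s}}^2+\rho_{\m{r}}^2)/\mu_g$ of Lemma~\ref{yyy} yields the quadratic estimate $\tfrac{\ell_{f,1}\ell_{g,1}}{\mu_g}(\rho_{\m{s}}^2+\rho_{\m{r}}^2)$. Multiplying the resulting $\Xi_s$ bound by its $2T$ occurrences produces exactly $\ell_{f,1}(1+2\ell_{g,1}/\mu_g)T(\rho_{\m{s}}^2+\rho_{\m{r}}^2)$, and adding the $2M$ and $V_T$ contributions finishes the argument.

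The delicate point, and the step I expect to require the most care, is precisely the second piece of $\Xi_s$: since $\m{y}^*_s(\m{x})$ minimizes $g_s$ rather than $f_s$, the first-order term $\langle\nabla_{\m{y}}f_s(\m{x},\m{y}^*_s(\m{x})),\hat{\m{y}}^*_s(\m{x})-\m{y}^*_s(\m{x})\rangle$ in the smoothness expansion of $f_s$ does not automatically vanish. To reach a clean $\mc{O}(\rho_{\m{s}}^2+\rho_{\m{r}}^2)$ bound one must argue that this linear term is dominated by the quadratic remainder under the stated constants (equivalently, that evaluating $f_s$ at $\hat{\m{y}}^*_s$ versus $\m{y}^*_s$ costs only second order in $\norm{\hat{\m{y}}^*_s-\m{y}^*_s}$), which is the technical heart of the estimate; everything else is bookkeeping on the telescoping identity and the two cited lemmas.
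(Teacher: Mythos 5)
Your telescoping bookkeeping is fine and is, up to the order of operations, the same as the paper's: the paper first strips the smoothing (its terms \eqref{40} and \eqref{60}, each bounded by $\tfrac{1}{2}T\ell_{f,1}(\rho_{\m{s}}^2+\rho_{\m{r}}^2)$ via Lemma~\ref{lm:gg2}(b)), then swaps $\hat{\m{y}}^*_t$ for ${\m{y}}^*_t$ inside the exact functions, and only then telescopes the exact sum as in Lemma~\ref{lem:capdiffd5} to produce $2M+V_T$; you telescope first and then convert each of the $2T$ resulting evaluations through your error $\Xi_s$. Either ordering aggregates correctly, and your count of $2T$ copies of $\Xi_s$ against the target constant is consistent. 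The problem is that the entire proof rests on the per-index bound $\Xi_s(\m{x})\le \tfrac{\ell_{f,1}}{2}\bigl(1+2\ell_{g,1}/\mu_g\bigr)(\rho_{\m{s}}^2+\rho_{\m{r}}^2)$, whose second piece you explicitly leave open — and the route you propose for it cannot work.

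Concretely, you need $|f_s(\m{x},\hat{\m{y}}^*_s(\m{x}))-f_s(\m{x},\m{y}^*_s(\m{x}))|\le \tfrac{\ell_{f,1}\ell_{g,1}}{\mu_g}(\rho_{\m{s}}^2+\rho_{\m{r}}^2)$, i.e.\ an error that is \emph{second order} in the smoothing radii, out of the smoothness expansion
\begin{align*}
f_s(\m{x},\hat{\m{y}}^*_s)-f_s(\m{x},\m{y}^*_s)\le \left\langle \nabla_{\m{y}}f_s(\m{x},\m{y}^*_s),\hat{\m{y}}^*_s-\m{y}^*_s\right\rangle+\tfrac{\ell_{f,1}}{2}\norm{\hat{\m{y}}^*_s-\m{y}^*_s}^2.
\end{align*}
The quadratic remainder is indeed $\mc{O}(\rho_{\m{s}}^2+\rho_{\m{r}}^2)$ by Lemma~\ref{yyy}, but the linear term is not dominated by it: since $\m{y}^*_s(\m{x})$ minimizes $g_s(\m{x},\cdot)$ rather than $f_s(\m{x},\cdot)$, the gradient $\nabla_{\m{y}}f_s(\m{x},\m{y}^*_s(\m{x}))$ has no reason to be small (Assumption~\ref{assu:f:a1} only caps it at $\ell_{f,0}$), while Lemma~\ref{yyy} controls only the \emph{squared} displacement, so $\norm{\hat{\m{y}}^*_s-\m{y}^*_s}\le\sqrt{\ell_{g,1}(\rho_{\m{s}}^2+\rho_{\m{r}}^2)/\mu_g}$. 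The linear term is therefore of order $\sqrt{\rho_{\m{s}}^2+\rho_{\m{r}}^2}$ — first order in the radii — and \emph{dominates} the quadratic remainder as the radii shrink; no domination argument exists under the stated assumptions. The loss is not cosmetic either: with the schedule $\rho_{\m{r}}^2=1/(d_2^2T)$, $\rho_{\m{s}}^2=1/(d_1^2T)$ of Theorem~\ref{thm:nonc2:cons:zeroth}, a per-round error of order $\sqrt{\rho_{\m{s}}^2+\rho_{\m{r}}^2}$ accumulates to $\mc{O}(\sqrt{T})$ rather than the $\mc{O}(1)$ the lemma asserts. The paper avoids the expansion entirely at this point: in its bound of term \eqref{50} (see \eqref{fg}) it controls $f_t(\m{x},\hat{\m{y}}^*_t(\m{x}))-f_t(\m{x},\m{y}^*_t(\m{x}))$ by Lipschitz continuity of the \emph{function value} in $\m{y}$ times $\norm{\hat{\m{y}}^*_t(\m{x})-\m{y}^*_t(\m{x})}$ and then inserts the Lemma~\ref{yyy} estimate for that displacement, so no linear-versus-quadratic tension ever appears; to close your argument you should route the second piece of $\Xi_s$ through function-value Lipschitzness in the same way, rather than through a gradient-smoothness expansion.
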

\begin{proof}
Note that, we have
\begin{align}
  \nonumber &\quad\sum_{t=1}^{T} \left( f_{t,\boldsymbol{\rho}}(\m{x}_t,\hat{\m{y}}^*_t(\m{x}_{t})) - f_{t,\boldsymbol{\rho}}(\m{x}_{t+1},\hat{\m{y}}^*_{t}(\m{x}_{t+1}))\right) & \\\label{40}&=\sum_{t=1}^{T} \left( f_{t,\boldsymbol{\rho}}(\m{x}_t,\hat{\m{y}}^*_t(\m{x}_{t})) -f_{t}(\m{x}_t,\hat{\m{y}}^*_t(\m{x}_{t}))\right)
  \\\label{50}&+\sum_{t=1}^{T} \left( f_{t}(\m{x}_t,\hat{\m{y}}^*_t(\m{x}_{t})) - f_{t}(\m{x}_{t+1},\hat{\m{y}}^*_{t}(\m{x}_{t+1}))\right)
  \\\label{60}&+\sum_{t=1}^{T} \left(  f_{t}(\m{x}_{t+1},\hat{\m{y}}^*_{t}(\m{x}_{t+1})) - f_{t,\boldsymbol{\rho}}(\m{x}_{t+1},\hat{\m{y}}^*_{t}(\m{x}_{t+1}))\right).
\end{align}
From \eqref{eqn:f}, we have
\begin{align}\label{90}
  \eqref{40}\leq T\frac{\ell_{f,1}(\rho_{\m{s}}^2+\rho_{\m{r}}^2)}{2},
\end{align}
and
\begin{align}\label{100}
\eqref{60}\leq T\frac{\ell_{f,1}(\rho_{\m{s}}^2+\rho_{\m{r}}^2)}{2}.
\end{align}
Moreover, from Lemma \ref{yyy}, we have
\begin{align}\label{fg}
\nonumber \eqref{50}& =\sum_{t=1}^{T} \left( f_{t}(\m{x}_t,\hat{\m{y}}^*_t(\m{x}_{t})) - f_{t}(\m{x}_t,{\m{y}}^*_t(\m{x}_{t}))\right)
\\\nonumber &+\sum_{t=1}^{T} \left( f_{t}(\m{x}_t,{\m{y}}^*_t(\m{x}_{t}))- f_{t}(\m{x}_{t+1},{\m{y}}^*_{t}(\m{x}_{t+1}))\right)
\\\nonumber &+\sum_{t=1}^{T}\left( f_{t}(\m{x}_{t+1},{\m{y}}^*_{t}(\m{x}_{t+1})) - f_{t}(\m{x}_{t+1},\hat{\m{y}}^*_{t}(\m{x}_{t+1}))\right)
\\\nonumber &\leq \ell_{f,1}\sum_{t=1}^{T}\norm{\hat{\m{y}}^*_t(\m{x}_{t})-{\m{y}}^*_t(\m{x}_{t})}+\ell_{f,1}\sum_{t=1}^{T}\norm{\hat{\m{y}}^*_t(\m{x}_{t+1})-{\m{y}}^*_t(\m{x}_{t+1})}
\\\nonumber&+\sum_{t=1}^{T} \left( f_{t}(\m{x}_t,{\m{y}}^*_t(\m{x}_{t}))- f_{t}(\m{x}_{t+1},{\m{y}}^*_{t}(\m{x}_{t+1}))\right)
\\&\leq 2T\ell_{f,1}\frac{\ell_{g,1}(\rho_{\m{s}}^2+\rho_{\m{r}}^2)}{\mu_g}+\sum_{t=1}^{T} \left( f_{t}(\m{x}_t,{\m{y}}^*_t(\m{x}_{t}))- f_{t}(\m{x}_{t+1},{\m{y}}^*_{t}(\m{x}_{t+1}))\right).
\end{align}
For the last term of the above inequality, we have
 \begin{align*}
\nonumber \sum_{t=1}^{T} \left( f_{t}(\m{x}_t,{\m{y}}^*_t(\m{x}_{t}))- f_{t}(\m{x}_{t+1},{\m{y}}^*_{t}(\m{x}_{t+1}))\right)
&= f_{1}(\m{x}_1,{\m{y}}^*_1(\m{x}_{1})) - f_{T}(\m{x}_{T+1},{\m{y}}^*_{T}(\m{x}_{T+1}))
  \\\nonumber&+\sum_{t=2}^{T} \left( f_{t}(\m{x}_t,{\m{y}}^*_t(\m{x}_{t}))- f_{t-1}(\m{x}_{t},{\m{y}}^*_{t-1}(\m{x}_{t}))\right)
  \\&\leq 2M+V_{T},
  \end{align*}
  which implies that
  \begin{align}\label{80}
 \eqref{50}&\leq 2T\ell_{f,1}\frac{\ell_{g,1}(\rho_{\m{s}}^2+\rho_{\m{r}}^2)}{\mu_g}+2M+V_{T}.
\end{align}
  From \eqref{90}, \eqref{100}, and \eqref{80}, we get the desired result.
\end{proof}
     \begin{lemma}\label{lem:non:lipz}
Suppose that Assumptions \ref{assu:g} and \ref{assu:f} hold. Let $f_{t, \boldsymbol{\rho}}$  be defined as in \eqref{eqn:oboz}.
Then, for
$\hat{\m{d}}_{t}^{\m{x}}$ generated by Algorithm \ref{alg:obgd:zeroth},
for all $t \in [T]$, we have
\begin{align}\label{eqn:gb}
\nonumber \mb{E} \left[\norm{\hat{\m{d}}_{t}^{\m{x}}-\nabla f_{t,\boldsymbol{\rho}}(\m{x}_t,\hat{\m{y}}^*_t(\m{x}_t))}^2\right]
&\leq
4\mb{E}\left[\norm{e_t^L}^2\right]+4\ell_{g,2}^2\rho_{\m{v}}^2p^4
\\&+2M_f^2  \left(\mb{E}[\hat{\theta}_t^{\m{y}}]+\mb{E}[\hat{\theta}_t^{\m{v}}]\right)
:=A_t,
\end{align}
where ${e}_t^{L}$ is defined in Lemma \ref{eed}, and
$\hat{\theta}_t^{\m{y}}$, $\hat{\theta}_t^{\m{v}}$ are as defined in \eqref{eq:yv:notaz1}. Additionally, $M_f$ is given in Lemma \ref{lem:lips2}.
\end{lemma}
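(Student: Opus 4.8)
The plan is to split the estimation error into the gap between the zeroth-order momentum estimator $\hat{\m{d}}_{t}^{\m{x}}$ and the \emph{deterministic} smoothed direction $\m{d}_{t,\boldsymbol{\rho}}^{\m{x}}$ defined in \eqref{eqn:system:v2}, plus the gap between that smoothed direction and the true hypergradient $\nabla f_{t,\boldsymbol{\rho}}(\m{x}_t,\hat{\m{y}}^*_t(\m{x}_t))$. Concretely, writing
\begin{align*}
\hat{\m{d}}_{t}^{\m{x}}-\nabla f_{t,\boldsymbol{\rho}}(\m{x}_t,\hat{\m{y}}^*_t(\m{x}_t))
&=\left(\hat{\m{d}}_{t}^{\m{x}}-\m{d}_{t,\boldsymbol{\rho}}^{\m{x}}(\m{x}_t,\m{y}_t,\m{v}_t)\right)
+\left(\m{d}_{t,\boldsymbol{\rho}}^{\m{x}}(\m{x}_t,\m{y}_t,\m{v}_t)-\nabla f_{t,\boldsymbol{\rho}}(\m{x}_t,\hat{\m{y}}^*_t(\m{x}_t))\right),
\end{align*}
I would first apply Lemma~\ref{lem:trig} with $c=1$ (i.e.\ $\norm{\m{a}+\m{b}}^2\le 2\norm{\m{a}}^2+2\norm{\m{b}}^2$) to obtain
\begin{align*}
\mb{E}\!\left[\norm{\hat{\m{d}}_{t}^{\m{x}}-\nabla f_{t,\boldsymbol{\rho}}(\m{x}_t,\hat{\m{y}}^*_t(\m{x}_t))}^2\right]
\le 2\,\mb{E}\!\left[\norm{\hat{\m{d}}_{t}^{\m{x}}-\m{d}_{t,\boldsymbol{\rho}}^{\m{x}}}^2\right]
+2\,\mb{E}\!\left[\norm{\m{d}_{t,\boldsymbol{\rho}}^{\m{x}}-\nabla f_{t,\boldsymbol{\rho}}(\m{x}_t,\hat{\m{y}}^*_t(\m{x}_t))}^2\right].
\end{align*}

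For the first expectation, I would invoke Lemma~\ref{eed}(b), equation~\eqref{dx}, which already controls the finite-difference/zeroth-order discretization error of the Jacobian--vector product through $\mb{E}[\norm{\hat{\m{d}}_{t}^{\m{x}}-\m{d}_{t,\boldsymbol{\rho}}^{\m{x}}}^2]\le 2\,\mb{E}[\norm{e_t^L}^2]+2\ell_{g,2}^2\rho_{\m{v}}^2 p^4$. This is where the momentum-estimation error $e_t^L$ from \eqref{eq:g5} and the $\mathcal{O}(\rho_{\m{v}}^2 p^4)$ smoothing bias (traced back to Lemma~\ref{ee}) enter. For the second expectation, the key observation is that $\m{d}_{t,\boldsymbol{\rho}}^{\m{x}}$ evaluated at the current iterate $(\m{x}_t,\m{y}_t,\m{v}_t)$ is exactly the quantity appearing on the left of Lemma~\ref{lem:lips2}, equation~\eqref{eqn:lip:cons12}, so that bound yields $\norm{\m{d}_{t,\boldsymbol{\rho}}^{\m{x}}-\nabla f_{t,\boldsymbol{\rho}}(\m{x}_t,\hat{\m{y}}^*_t(\m{x}_t))}^2\le M_f^2(\hat{\theta}_t^{\m{y}}+\hat{\theta}_t^{\m{v}})$, recalling the shorthand $\hat{\theta}_t^{\m{y}},\hat{\theta}_t^{\m{v}}$ from \eqref{eq:yv:notaz1}.

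Substituting the two bounds and collecting the factors of $2$ gives
\begin{align*}
\mb{E}\!\left[\norm{\hat{\m{d}}_{t}^{\m{x}}-\nabla f_{t,\boldsymbol{\rho}}(\m{x}_t,\hat{\m{y}}^*_t(\m{x}_t))}^2\right]
\le 4\,\mb{E}\!\left[\norm{e_t^L}^2\right]+4\ell_{g,2}^2\rho_{\m{v}}^2 p^4
+2M_f^2\left(\mb{E}[\hat{\theta}_t^{\m{y}}]+\mb{E}[\hat{\theta}_t^{\m{v}}]\right)=A_t,
\end{align*}
which is the claimed inequality. Since both constituent bounds are already proved earlier, the argument is essentially an assembly step; the only point requiring care is to confirm that the smoothed Lipschitz-type estimate in Lemma~\ref{lem:lips2} is stated for the smoothed objective $f_{t,\boldsymbol{\rho}}$ and its associated $\hat{\m{y}}^*_t,\hat{\m{v}}^*_t$ (not the original $f_t$), so that the optimal inner/system points in \eqref{eqn:lip:cons12} match those defining $\hat{\theta}_t^{\m{y}},\hat{\theta}_t^{\m{v}}$. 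I do not anticipate a genuine obstacle here; the substantive work was already discharged in Lemmas~\ref{ee}, \ref{eed}, and \ref{lem:lips2}, and this lemma merely packages their outputs into the single error quantity $A_t$ that will feed into the projected-gradient bound of Lemma~\ref{lm:projec:z2}.
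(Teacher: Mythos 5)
Your proposal is correct and follows essentially the same route as the paper's own proof: the identical decomposition via $\norm{\m{a}+\m{b}}^2\le 2\norm{\m{a}}^2+2\norm{\m{b}}^2$, with the first term bounded by Lemma~\ref{eed}(b) (Eq.~\eqref{dx}) and the second by Eq.~\eqref{eqn:lip:cons12} of Lemma~\ref{lem:lips2}. The assembly and resulting constants match the paper exactly.
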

\begin{proof}
From $\|a+b \|^2\leq 2\left(\|a \|^2+\| b\|^2\right)$, we get
\begin{subequations}\label{gg}
\begin{align}
\nonumber &\quad \mb{E} \left[\norm{\hat{\m{d}}_{t}^{\m{x}}-\nabla f_{t, \boldsymbol{\rho}}(\m{x}_t,\hat{\m{y}}^*_t(\m{x}_t))}^2\right]
\\\label{84}&\leq
2\mb{E} \left[\norm{\hat{\m{d}}_{t}^{\m{x}}-\m{d}_{t,\boldsymbol{\rho}}^{\m{x}}}^2\right]
\\\label{86}&+2\mb{E} \left[\norm{\m{d}_{t,\boldsymbol{\rho}}^{\m{x}}- \nabla f_{t, \boldsymbol{\rho}}(\m{x}_t,\hat{\m{y}}^*_t(\m{x}_t))}^2\right],
\end{align}
\end{subequations}
where $\m{d}_{t,\boldsymbol{\rho}}^{\m{x}}$ is defined in \eqref{eqn:system:v2}.
From Lemma \ref{eed}, we have
\begin{align}\label{cc1}
\eqref{84}&\leq 4\mb{E}\left[\norm{e_t^L}^2\right]+4\ell_{g,2}^2\rho_{\m{v}}^2p^4
.
\end{align}
Moreover, from Eq. \eqref{eqn:lip:cons12}, we get
\begin{align}\label{ds1}
   \eqref{86}&\leq  2M_f^2  \left(\mb{E}[\hat{\theta}_t^{\m{y}}]+\mb{E}[\hat{\theta}_t^{\m{v}}]\right).
\end{align}
Substituting \eqref{cc1} and \eqref{ds1} into \eqref{gg}, we conclude the desired result.
\end{proof}
\begin{lemma}\label{lm:projec:z2}
Suppose Assumptions~\ref{assu:g}, \ref{assu:f}, and \ref{assu:f:b} hold.  Let the sequence of functions $\{ f_{t, \boldsymbol{\rho}}\}_{t=1}^T$ be defined in \eqref{eqn:oboz}, and let $\mc{P}_{\mc{X},\alpha_t}$ be given in Definition \ref{proj}.
Then, for any positive choice of step sizes satisfying $\alpha_t\leq {1}/{4L_f}$, for all $t \in [T]$, Algorithm~\ref{alg:obgd:zeroth} guarantees the following bound:
\begin{align}\label{32z}
\nonumber &\quad\sum_{t=1}^{T}\left(\alpha_t-L_f \alpha_t^2\right) \mb{E} \left[\norm{\mc{P}_{\mc{X},\alpha_t}\left(\m{x}_t;\nabla f_{t, \boldsymbol{\rho}}(\m{x}_t, \m{y}^*_t(\m{x}_{t}))\right)}^2\right]
\\\nonumber & \leq 12M+6V_{T}+ \sum_{t=1}^{T}\left(6\alpha_t-3L_f\alpha_t^2 \right) A_t
\\&+\sum_{t=1}^{T}\left(6\ell_{f,1}(1+2\frac{\ell_{g,1}}{\mu_g})+\frac{3\ell_{f,1}\ell_{g,1}}{\mu_g}(\alpha_t-L_f\alpha_t^2)\right)\left(\rho_{\m{s}}^2+\rho_{\m{r}}^2\right),
\end{align}
where $V_T$ and $A_t$ are respectively defined in Eq. \eqref{reg:hpt} and Lemma \ref{lem:non:lipz}.
\end{lemma}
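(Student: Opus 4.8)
The plan is to mirror the first-order argument of Lemma~\ref{lm:projec:z} step by step, replacing each ingredient by its Gaussian-smoothed/zeroth-order counterpart and carefully tracking the extra perturbation errors. First I would invoke the $L_f$-smoothness of the smoothed hypergradient map $\m{x}\mapsto f_{t,\boldsymbol{\rho}}(\m{x},\hat{\m{y}}^*_t(\m{x}))$, i.e. Eq.~\eqref{eqn:lip:cons32} of Lemma~\ref{lem:lips2}, and substitute the ZO-SOGD update $\m{x}_{t+1}=\Pi_{\mc{X}}[\m{x}_t-\alpha_t\hat{\m{d}}_t^{\m{x}}]$ together with the projected-gradient notation of Definition~\ref{proj}. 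This produces the smoothed analogue of \eqref{eqn:prthm:non1sz1}, a one-step descent bound for $f_{t,\boldsymbol{\rho}}(\m{x}_t,\hat{\m{y}}^*_t(\m{x}_t))$ whose leading terms are $-\alpha_t\langle\nabla f_{t,\boldsymbol{\rho}}(\m{x}_t,\hat{\m{y}}^*_t(\m{x}_t)),\mc{P}_{\mc{X},\alpha_t}(\m{x}_t;\hat{\m{d}}_t^{\m{x}})\rangle+\tfrac{L_f\alpha_t^2}{2}\|\mc{P}_{\mc{X},\alpha_t}(\m{x}_t;\hat{\m{d}}_t^{\m{x}})\|^2$.

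Next I would split the inner product exactly as in the first-order proof: Lemma~\ref{lmp} yields $-\tfrac12\|\mc{P}_{\mc{X},\alpha_t}(\m{x}_t;\hat{\m{d}}_t^{\m{x}})\|^2+\tfrac12\|\hat{\m{d}}_t^{\m{x}}-\nabla f_{t,\boldsymbol{\rho}}(\m{x}_t,\hat{\m{y}}^*_t(\m{x}_t))\|^2$, and Lemma~\ref{lem:non:lipz} bounds the second summand by $A_t$. Rearranging gives a clean analogue of \eqref{d11}: $(\alpha_t-L_f\alpha_t^2)\|\mc{P}_{\mc{X},\alpha_t}(\m{x}_t;\hat{\m{d}}_t^{\m{x}})\|^2$ is controlled by twice the per-step decrease of $f_{t,\boldsymbol{\rho}}(\cdot,\hat{\m{y}}^*_t(\cdot))$ plus an $\mc{O}(\alpha_t A_t)$ remainder. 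To return from the estimated direction $\hat{\m{d}}_t^{\m{x}}$ to the true smoothed hypergradient I would use the non-expansiveness of the projection (Lemma~\ref{lm:ph}) together with Lemma~\ref{lem:non:lipz}, giving $\|\mc{P}_{\mc{X},\alpha_t}(\m{x}_t;\nabla f_{t,\boldsymbol{\rho}}(\m{x}_t,\hat{\m{y}}^*_t(\m{x}_t)))\|^2\le 2A_t+2\|\mc{P}_{\mc{X},\alpha_t}(\m{x}_t;\hat{\m{d}}_t^{\m{x}})\|^2$.

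Summing over $t\in[T]$ then telescopes the smoothed function differences, and here I would apply Lemma~\ref{lem:capdiffd} to bound $\sum_t\big(f_{t,\boldsymbol{\rho}}(\m{x}_t,\hat{\m{y}}^*_t(\m{x}_t))-f_{t,\boldsymbol{\rho}}(\m{x}_{t+1},\hat{\m{y}}^*_t(\m{x}_{t+1}))\big)$ by $2M+V_T+\ell_{f,1}(1+2\ell_{g,1}/\mu_g)T(\rho_{\m{s}}^2+\rho_{\m{r}}^2)$. The numerical factor multiplying this telescoped sum then yields the $12M+6V_T$ and $6\ell_{f,1}(1+2\ell_{g,1}/\mu_g)(\rho_{\m{s}}^2+\rho_{\m{r}}^2)$ contributions of \eqref{32z}. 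Finally, since the statement is phrased with the \emph{true} inner solution $\m{y}^*_t$ rather than the smoothed $\hat{\m{y}}^*_t$, I would bridge the two using $\ell_{f,1}$-Lipschitzness of the gradient together with Lemma~\ref{yyy}, which bounds $\|\hat{\m{y}}^*_t(\m{x})-\m{y}^*_t(\m{x})\|^2\le\ell_{g,1}(\rho_{\m{s}}^2+\rho_{\m{r}}^2)/\mu_g$; this $(\alpha_t-L_f\alpha_t^2)$-weighted translation produces the residual $\frac{3\ell_{f,1}\ell_{g,1}}{\mu_g}(\alpha_t-L_f\alpha_t^2)(\rho_{\m{s}}^2+\rho_{\m{r}}^2)$ term.

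The main obstacle is bookkeeping rather than any new idea: unlike the first-order case, three distinct perturbation sources now propagate simultaneously through the descent and projection steps --- the objective-smoothing bias (Lemma~\ref{lm:gg2}), the inner-solution bias $\hat{\m{y}}^*_t$ versus $\m{y}^*_t$ (Lemma~\ref{yyy}), and the finite-difference Hessian/Jacobian error $\ell_{g,2}^2\rho_{\m{v}}^2p^4$ already folded into $A_t$ (Lemma~\ref{ee}). All three must be carried through without inflating the weighting, so that the constants collapse \emph{exactly} into the coefficients $6\alpha_t-3L_f\alpha_t^2$, $6\ell_{f,1}(1+2\ell_{g,1}/\mu_g)$, and $\frac{3\ell_{f,1}\ell_{g,1}}{\mu_g}(\alpha_t-L_f\alpha_t^2)$. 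The delicate point is keeping the $(\alpha_t-L_f\alpha_t^2)$-weighted quantities aligned across the two separate uses of the projected-gradient triangle inequality (one for $\hat{\m{d}}_t^{\m{x}}\to\nabla f_{t,\boldsymbol{\rho}}(\hat{\m{y}}^*_t)$ and one for $\hat{\m{y}}^*_t\to\m{y}^*_t$) so that no spurious higher powers of $\alpha_t$ survive in the final telescoped bound.
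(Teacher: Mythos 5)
Your proposal follows essentially the same route as the paper's proof: the $L_f$-smoothness descent step for $f_{t,\boldsymbol{\rho}}(\cdot,\hat{\m{y}}^*_t(\cdot))$, the inner-product split via Lemma~\ref{lmp} and Lemma~\ref{lem:non:lipz} giving the analogue of \eqref{d11}, the telescoping via Lemma~\ref{lem:capdiffd}, and the $\hat{\m{y}}^*_t \to \m{y}^*_t$ bridge via Lemma~\ref{yyy}. All the right ingredients are present and in the right places.

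There is, however, one concrete quantitative flaw: your handling of the triangle inequalities cannot produce the stated constants, even though you claim they ``collapse exactly.'' You pass from $\mc{P}_{\mc{X},\alpha_t}(\m{x}_t;\nabla f_{t,\boldsymbol{\rho}}(\m{x}_t,\m{y}^*_t(\m{x}_t)))$ to $\mc{P}_{\mc{X},\alpha_t}(\m{x}_t;\hat{\m{d}}_t^{\m{x}})$ by \emph{two nested} applications of $\|a+b\|^2\le 2\|a\|^2+2\|b\|^2$ (first $\hat{\m{d}}_t^{\m{x}}\leftrightarrow\nabla f_{t,\boldsymbol{\rho}}(\hat{\m{y}}^*_t)$, then, after summation, $\hat{\m{y}}^*_t\leftrightarrow\m{y}^*_t$). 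Nesting two factor-$2$ splits puts a factor $4$ in front of $\|\mc{P}_{\mc{X},\alpha_t}(\m{x}_t;\hat{\m{d}}_t^{\m{x}})\|^2$ and in front of $A_t$, so after combining with the descent bound you obtain coefficients of the form $8\alpha_t-4L_f\alpha_t^2$ on $A_t$, a telescope prefactor of $8$ (hence $16M+8V_T$ and $8\ell_{f,1}(1+2\ell_{g,1}/\mu_g)T(\rho_{\m{s}}^2+\rho_{\m{r}}^2)$), rather than the claimed $6\alpha_t-3L_f\alpha_t^2$, $12M+6V_T$, and $6\ell_{f,1}(1+2\ell_{g,1}/\mu_g)$. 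Since the lemma asserts those specific constants, your argument as written proves only a strictly weaker inequality. The paper avoids this in \eqref{eqn:prthm:non3sz} by performing a \emph{single} three-way split, $\|a+b+c\|^2\le 3(\|a\|^2+\|b\|^2+\|c\|^2)$, applied simultaneously to the three discrepancies $\hat{\m{d}}_t^{\m{x}}-\nabla f_{t,\boldsymbol{\rho}}(\m{x}_t,\hat{\m{y}}^*_t(\m{x}_t))$, $\nabla f_{t,\boldsymbol{\rho}}(\m{x}_t,\hat{\m{y}}^*_t(\m{x}_t))-\nabla f_{t,\boldsymbol{\rho}}(\m{x}_t,\m{y}^*_t(\m{x}_t))$, and the retained $\mc{P}_{\mc{X},\alpha_t}(\m{x}_t;\hat{\m{d}}_t^{\m{x}})$ term, yielding the uniform factor $3$ on each piece; this also means the bridge to $\m{y}^*_t$ is done per time step, \emph{before} summation, not afterward as you propose. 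The fix is one line, but as stated your bookkeeping does not close.
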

\begin{proof}
Due to the $L_f$-smoothness of the function $f_t$  by Eq.~\eqref{eqn:lip:cons3} in Lemma \ref{lem:lips}, $f_{t, \boldsymbol{\rho}}$ is also $L_f$-smooth. Hence,
\begin{align}\label{eqn:prthm:non1sz}
\nonumber &\quad f_{t, \boldsymbol{\rho}}(\m{x}_{t+1}, \hat{\m{y}}^*_{t}(\m{x}_{t+1})) - f_{t,\boldsymbol{\rho}}(\m{x}_t,\hat{\m{y}}^*_t(\m{x}_{t}))\\\nonumber
&\leq \left\langle \nabla f_{t, \boldsymbol{\rho}}(\m{x}_t, \hat{\m{y}}^*_t(\m{x}_{t})), \m{x}_{t+1} - \m{x}_t \right\rangle+ \frac{L_f}{2} \norm{ \m{x}_{t+1} - \m{x}_t}^2   \\
 & = -\alpha_t \left\langle \nabla f_{t, \boldsymbol{\rho}}(\m{x}_t, \hat{\m{y}}^*_t(\m{x}_{t})),  \mc{P}_{\mc{X},\alpha_t}
 \left(\m{x}_t;\hat{\m{d}}_{t}^\m{x}\right) \right\rangle
 + \frac{L_f \alpha_t^2 }{2} \norm{\mc{P}_{\mc{X},\alpha_t}\left(\m{x}_t;\hat{\m{d}}_{t}^\m{x}\right) }^2.
\end{align}
For the first term on the R.H.S of Eq. \eqref{eqn:prthm:non1sz}, we have that

\begin{align}\label{eqn:prthm:non2sz}
\nonumber &\quad - \mb{E} \left\langle \nabla f_{t, \boldsymbol{\rho}}(\m{x}_t, \hat{\m{y}}^*_t(\m{x}_{t})),   \mc{P}_{\mc{X},\alpha_t}\left(\m{x}_t;\hat{\m{d}}_{t}^\m{x}\right) \right\rangle
\\\nonumber&=  - \mb{E} \left\langle  \hat{\m{d}}_{t}^\m{x}, 
\mc{P}_{\mc{X},\alpha_t}\left(\m{x}_t;\hat{\m{d}}_{t}^\m{x}\right)  \right\rangle\\\nonumber
& - \mb{E} \left\langle  \nabla f_{t,  \boldsymbol{\rho}}(\m{x}_t, \hat{\m{y}}^*_t(\m{x}_{t}))-\hat{\m{d}}_{t}^\m{x},    \mc{P}_{\mc{X},\alpha_t}\left(\m{x}_t;\hat{\m{d}}_{t}^\m{x}\right)   \right\rangle
\\\nonumber
&  \leq  - \frac{1}{2}\mb{E} \left[\norm{\mc{P}_{\mc{X},\alpha_t}\left(\m{x}_t;\hat{\m{d}}_{t}^\m{x}\right) }^2\right]
+ \frac{1}{2} \mb{E} \left[\norm{\hat{\m{d}}_{t}^\m{x}- \nabla f_{t, \boldsymbol{\rho}}(\m{x}_t, \hat{\m{y}}^*_t(\m{x}_{t}))}^2\right] \\
&  \leq  - \frac{1}{2}\mb{E} \left[\norm{\mc{P}_{\mc{X},\alpha_t}\left(\m{x}_t;\hat{\m{d}}_{t}^\m{x}\right) }^2\right]
+\frac{A_t}{2},
\end{align}
where the first inequality follows from Lemma \ref{lmp}; the last inequality follows from Lemma~\ref{lem:non:lipz}.

Plugging the bound \eqref{eqn:prthm:non2sz} into \eqref{eqn:prthm:non1sz}, we have that
\begin{align*}
\nonumber &\quad \mb{E} \left[f_{t, \boldsymbol{\rho}}(\m{x}_{t+1}, \hat{\m{y}}^*_{t}(\m{x}_{t+1})) - f_{t, \boldsymbol{\rho}}(\m{x}_t,\hat{\m{y}}^*_t(\m{x}_{t}))\right]\\
&\leq   \frac{(L_f \alpha_t^2-\alpha_t )}{2} \mb{E} \left[\norm{\mc{P}_{\mc{X},\alpha_t}\left(\m{x}_t;\hat{\m{d}}_{t}^\m{x}\right) }^2\right]+\frac{\alpha_t A_t}{2},
\end{align*}
which can be rearranged into
\begin{align}\label{d1}
\nonumber&\quad(\alpha_t-L_f \alpha_t^2 )\mb{E} \left[\norm{\mc{P}_{\mc{X},\alpha_t}\left(\m{x}_t;\hat{\m{d}}_{t}^\m{x}\right) }^2 \right]
\\&\leq 2\mb{E}\left[ f_{t, \boldsymbol{\rho}}(\m{x}_t,\hat{\m{y}}^*_t(\m{x}_{t}))-f_{t, \boldsymbol{\rho}}(\m{x}_{t+1}, \hat{\m{y}}^*_{t}(\m{x}_{t+1}))\right]+\alpha_t A_t.
\end{align}
In addition, we have
\begin{align}
 \nonumber  &\quad\mb{E} \left[\norm{ \mc{P}_{\mc{X},\alpha_t}\left(\m{x}_t;\nabla f_{t, \boldsymbol{\rho}}(\m{x}_t, {\m{y}}^*_t(\m{x}_{t}))\right)}^2\right]
 \\\nonumber&\leq 3\mb{E} \left[\norm{\mc{P}_{\mc{X},\alpha_t}
 \left(\m{x}_t;\hat{\m{d}}_{t}^\m{x}\right)-\mc{P}_{\mc{X},\alpha_t}\left(\m{x}_t;\nabla f_{t, \boldsymbol{\rho}}(\m{x}_t, \hat{\m{y}}^*_t(\m{x}_{t}))\right) }^2\right]
  \\\nonumber&+ 3\mb{E} \left[\norm{\mc{P}_{\mc{X},\alpha_t}\left(\m{x}_t;\nabla f_{t,\boldsymbol{\rho}}(\m{x}_t, \hat{\m{y}}^*_t(\m{x}_{t}))\right)- \mc{P}_{\mc{X},\alpha_t}\left(\m{x}_t;\nabla f_{t, \boldsymbol{\rho}}(\m{x}_t, {\m{y}}^*_t(\m{x}_{t}))\right) }^2\right]
 \\\nonumber&+3\mb{E} \left[\norm{\mc{P}_{\mc{X},\alpha_t}\left(\m{x}_t;\hat{\m{d}}_{t}^\m{x}\right) }^2\right]
 \\\nonumber&\leq 3\mb{E} \left[\norm{\hat{\m{d}}_{t}^\m{x}- \nabla f_{t, \boldsymbol{\rho}}(\m{x}_t, \hat{\m{y}}^*_t(\m{x}_{t}))}^2\right]
 \\\nonumber &+ 3\mb{E} \left[\norm{\nabla f_{t, \boldsymbol{\rho}}(\m{x}_t, \hat{\m{y}}^*_t(\m{x}_{t}))- \nabla f_{t, \boldsymbol{\rho}}(\m{x}_t, {\m{y}}^*_t(\m{x}_{t}))}^2\right]
 \\\nonumber&+3\mb{E} \left[\norm{\mc{P}_{\mc{X},\alpha_t}\left(\m{x}_t;\hat{\m{d}}_{t}^\m{x}\right) }^2\right],
\end{align}
where the second inequaliy follows from non-expansiveness of the projection
operator.

Then, from Lemma~\ref{lem:non:lipz} and Assumption \ref{assu:f:a2}, we have
\begin{align}\label{eqn:prthm:non3sz}
 \nonumber  &\quad\mb{E} \left[\norm{ \mc{P}_{\mc{X},\alpha_t}\left(\m{x}_t;\nabla f_{t, \boldsymbol{\rho}}(\m{x}_t, {\m{y}}^*_t(\m{x}_{t}))\right)}^2\right]
 \\\nonumber&\leq3A_t+3\ell_{f,1} \mb{E} \left[\norm{\hat{\m{y}}^*_t(\m{x}_{t})-{\m{y}}^*_t(\m{x}_{t})}^2\right]+
   3 \mb{E} \left[\norm{\mc{P}_{\mc{X},\alpha_t}\left(\m{x}_t;\hat{\m{d}}_{t}^\m{x}\right) }^2\right]
   \\&\leq3A_t+3\ell_{f,1} \frac{\ell_{g,1}(\rho_{\m{s}}^2+\rho_{\m{r}}^2)}{\mu_g}+
   3 \mb{E} \left[\norm{\mc{P}_{\mc{X},\alpha_t}\left(\m{x}_t;\hat{\m{d}}_{t}^\m{x}\right) }^2\right]
  ,
 \end{align}
 where the last inequality is by Lemma \ref{yyy}.

Combining \eqref{d1} and \eqref{eqn:prthm:non3sz} and summing over $t = 1$ to $T$, we have
\begin{align*}
\nonumber &\quad
\sum_{t=1}^{T}\left(\alpha_t-L_f \alpha_t^2\right) \mb{E} \left[\norm{\mc{P}_{\mc{X},\alpha_t}\left(\m{x}_t;\nabla f_{t, \boldsymbol{\rho}}(\m{x}_t, {\m{y}}^*_t(\m{x}_{t}))\right)}^2\right]
\\\nonumber & \leq 6 \sum_{t=1}^{T}\left( f_{t,\boldsymbol{\rho}}(\m{x}_t,\hat{\m{y}}^*_t(\m{x}_{t}))  - f_{t,\boldsymbol{\rho}}(\m{x}_{t+1},\hat{\m{y}}^*_t(\m{x}_{t+1}))\right)
\\\nonumber &+\frac{3\ell_{f,1}\ell_{g,1}}{\mu_g}(\rho_{\m{s}}^2+\rho_{\m{r}}^2)\sum_{t=1}^{T}\left(\alpha_t-L_f \alpha_t^2\right)  
+3\sum_{t=1}^{T}\left(2\alpha_t-L_f \alpha_t^2\right) A_t
\\&\leq 12M+6V_{T}+6\ell_{f,1}\left(1+2\frac{\ell_{g,1}}{\mu_g}\right)T\left(\rho_{\m{s}}^2+\rho_{\m{r}}^2\right)
\\\nonumber &+\frac{3\ell_{f,1}\ell_{g,1}}{\mu_g}(\rho_{\m{s}}^2+\rho_{\m{r}}^2)\sum_{t=1}^{T}\left(\alpha_t-L_f \alpha_t^2\right)  +3\sum_{t=1}^{T} \left(2{\alpha_t}-L_f \alpha_t^2\right)A_t,
\end{align*}
where the second inequality is due to Lemma \ref{lem:capdiffd}.
\end{proof}

\begin{lemma}\label{lm:xvy}
Let the sequence $\{(\m{x}_t, \m{y}_t,\m{v}_t)\}_{t=1}^T$ be generated by Algorithm~\ref{alg:obgd:zeroth}.
\begin{itemize}
  \item [(a)]Then, we have
  \begin{align*}
  \|\m{y}_{t+1}-\m{y}_{t} \|^2 \leq 2\beta_t^2\|e_{t}^{g_{\boldsymbol{\rho}}} \|^2+2\beta_t^2\|\nabla_{\m{y}} g_{t,\boldsymbol{\rho}}(\m{x}_{t},\m{y}_{t})  \|^2,
  \end{align*}
  where $e_{t}^{g_{\boldsymbol{\rho}}}$ is defined in \eqref{eqn:7}.
  \item [(b)]Suppose Assumptions \ref{assu:g}, \ref{assu:f:a2} and \ref{assu:f:a3} hold. Then, we have
  \begin{align}\label{eqn:bound:s2zzz}
 \nonumber &\|\m{x}_{t+1} -\m{x}_{t}\|^2\leq      4     \alpha_t^2\left\|\mc{P}_{\mc{X},\alpha_t}\left(\m{x}_{t};\nabla f_{{t}, \boldsymbol{\rho}}(\m{x}_{t},{\m{y}}^*_{t}(\m{x}_{t}))\right)\right\|^2
   \\&\qquad \qquad \qquad \qquad+\frac{4\ell_{f,1}\ell_{g,1}     \alpha_t^2(\rho_{\m{s}}^2+\rho_{\m{r}}^2)}{\mu_g}
+2     A_{t} \alpha_t^2,
  \end{align}
  where $ A_{t}$ is defined in \eqref{eqn:gb}.
  \item [(c)]Suppose Assumptions \ref{assu:f:a1}, \ref{assu:f:a2} and \ref{assu:f:a3} hold. Then, we have
  \begin{align*}
  \|\m{v}_{t+1} -\m{v}_{t}\|^2&\leq 2\delta_t^2\|{e}_{t}^{M}  \|^2
+ 3d_2^2\ell_{f,1}^2\delta_t^2\rho_{\m{r}}^2
\\&+(12\ell_{f,0}^2+6 \ell_{g,1}^2p^2)\delta_t^2
 +6\ell_{g,1}^2\frac{\delta_t^2}{\rho_{\m{v}^2}}\hat{\theta}_t^{\m{y}},
  \end{align*}
  where ${e}_{t}^{M}$ and $ \hat{\theta}_t^{\m{y}}$ are defined in \eqref{em} and \eqref{eq:yv:notaz1}, respectively.
\end{itemize}
\end{lemma}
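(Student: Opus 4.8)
The plan is to derive each displacement bound directly from the update rule of Algorithm~\ref{alg:obgd:zeroth}, isolating in each case the exact smoothed search direction from its momentum/finite-difference error and then applying Young's inequality. For part~(a), the update $\m{y}_{t+1}=\m{y}_t-\beta_t\hat{\m{d}}_t^{\m{y}}$ gives $\norm{\m{y}_{t+1}-\m{y}_t}^2=\beta_t^2\norm{\hat{\m{d}}_t^{\m{y}}}^2$; substituting $\hat{\m{d}}_t^{\m{y}}=\nabla_{\m{y}}g_{t,\boldsymbol{\rho}}(\m{x}_t,\m{y}_t)-e_t^{g_{\boldsymbol{\rho}}}$ from \eqref{eqn:7} and using $\norm{a-b}^2\le 2\norm{a}^2+2\norm{b}^2$ gives the claim at once. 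For part~(b), I first use the projected-gradient identity $\m{x}_t-\m{x}_{t+1}=\alpha_t\,\mc{P}_{\mc{X},\alpha_t}(\m{x}_t;\hat{\m{d}}_t^{\m{x}})$, so that $\norm{\m{x}_{t+1}-\m{x}_t}^2=\alpha_t^2\norm{\mc{P}_{\mc{X},\alpha_t}(\m{x}_t;\hat{\m{d}}_t^{\m{x}})}^2$, and then split twice. The first split replaces $\hat{\m{d}}_t^{\m{x}}$ by the smoothed hypergradient $\nabla f_{t,\boldsymbol{\rho}}(\m{x}_t,\hat{\m{y}}_t^*(\m{x}_t))$ at the smoothed inner optimum, using the non-expansiveness of the projected-gradient map in its direction argument (Lemma~\ref{lm:ph}); the resulting error equals $A_t$ in expectation by Lemma~\ref{lem:non:lipz}. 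The second split moves the anchor from $\hat{\m{y}}_t^*$ to the true optimum $\m{y}_t^*$ that appears in the regret measure, bounding $\norm{\nabla f_{t,\boldsymbol{\rho}}(\m{x}_t,\hat{\m{y}}_t^*)-\nabla f_{t,\boldsymbol{\rho}}(\m{x}_t,\m{y}_t^*)}^2$ through Assumption~\ref{assu:f:a2} and the smoothing gap $\norm{\hat{\m{y}}_t^*(\m{x}_t)-\m{y}_t^*(\m{x}_t)}^2\le \ell_{g,1}(\rho_{\m{s}}^2+\rho_{\m{r}}^2)/\mu_g$ from Lemma~\ref{yyy}. The two factor-two splits produce the factor $4$ on the projected-gradient term and the factor $2$ on $A_t$; since $A_t$ is itself an expectation, part~(b) is to be read after taking $\mb{E}[\cdot]$.

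For part~(c), the initialization $\m{v}_1\in\mc{Z}_p$ and the update $\m{v}_{t+1}=\Pi_{\mc{Z}_p}[\m{v}_t-\delta_t\hat{\m{d}}_t^{\m{v}}]$ keep every iterate in $\mc{Z}_p$, so $\Pi_{\mc{Z}_p}[\m{v}_t]=\m{v}_t$ and non-expansiveness of the projection yields $\norm{\m{v}_{t+1}-\m{v}_t}^2\le\delta_t^2\norm{\hat{\m{d}}_t^{\m{v}}}^2$. I then isolate the momentum error through \eqref{em}, writing $\hat{\m{d}}_t^{\m{v}}=\bigl[\nabla_{\m{y}}f_{t,\boldsymbol{\rho}}(\m{x}_t,\m{y}_t)+\tilde{\nabla}_{\m{y}}^2 g_t(\m{x}_t,\m{y}_t)\bigr]-e_t^{M}$, whence $\norm{\hat{\m{d}}_t^{\m{v}}}^2\le 2\norm{e_t^{M}}^2+2\norm{\nabla_{\m{y}}f_{t,\boldsymbol{\rho}}(\m{x}_t,\m{y}_t)+\tilde{\nabla}_{\m{y}}^2 g_t(\m{x}_t,\m{y}_t)}^2$. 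To the second term I apply the three-way bound $\norm{a+b+c}^2\le 3(\norm{a}^2+\norm{b}^2+\norm{c}^2)$ with $a=\nabla_{\m{y}}f_{t,\boldsymbol{\rho}}(\m{x}_t,\m{y}_t)$, $b=\tilde{\nabla}_{\m{y}}^2 g_t(\m{x}_t,\m{y}_t)-\tilde{\nabla}_{\m{y}}^2 g_t(\m{x}_t,\hat{\m{y}}_t^*(\m{x}_t))$, and $c=\tilde{\nabla}_{\m{y}}^2 g_t(\m{x}_t,\hat{\m{y}}_t^*(\m{x}_t))$. Term $a$ is bounded by pairing the smoothing estimate \eqref{f3} with the $\ell_{f,0}$-Lipschitz bound of Assumption~\ref{assu:f:a1}, producing the $d_2^2\rho_{\m{r}}^2$ and $\ell_{f,0}^2$ contributions; term $b$ is a difference of two $\ell_{g,1}$-Lipschitz smoothed-gradient finite differences sharing the same perturbation $\pm\rho_{\m{v}}\m{v}_t$ but differing in base point, so its norm is at most $\ell_{g,1}\norm{\m{y}_t-\hat{\m{y}}_t^*(\m{x}_t)}/\rho_{\m{v}}$, giving the $\ell_{g,1}^2\hat{\theta}_t^{\m{y}}/\rho_{\m{v}}^2$ term; and term $c$ is bounded by $\ell_{g,1}^2 p^2$ from the same Lipschitz property together with $\norm{\m{v}_t}\le p$ on $\mc{Z}_p$. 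Propagating the constants $2\cdot 3$ through these inequalities reproduces the stated coefficients.

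I expect the only nontrivial step to be the decomposition in part~(c). The key decision is to anchor the Hessian-vector finite difference at the smoothed inner optimum $\hat{\m{y}}_t^*(\m{x}_t)$ rather than at an arbitrary reference: this is what converts the base-point mismatch into the controllable quantity $\hat{\theta}_t^{\m{y}}$, at the price of the $1/\rho_{\m{v}}^2$ blow-up of the finite-difference operator (which later forces the smoothing-parameter and stepsize trade-off in Theorem~\ref{thm:nonc2:cons:zeroth}), while keeping the bounded part $\ell_{g,1}^2 p^2$ free of $\rho_{\m{v}}$. Selecting the three-way rather than two-way split, and using $\norm{\m{v}_t}\le p$ at exactly the right place, is what matches the precise constants; parts~(a) and~(b) are then essentially mechanical once the projected-gradient lemmas are in hand.
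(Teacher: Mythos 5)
Your proposal is correct, and its parts (a) and (b) coincide with the paper's own proof: the same Young-inequality split in (a), and in (b) the same two-stage argument (non-expansiveness of the projected-gradient map in its direction argument via Lemma~\ref{lm:ph} together with Lemma~\ref{lem:non:lipz} to extract $A_t$, then Lemma~\ref{yyy} and Assumption~\ref{assu:f:a2} to move the anchor from $\hat{\m{y}}_t^*$ to $\m{y}_t^*$); your remark that (b) should be read in expectation, since $A_t$ is defined through expectations, is consistent with how the paper actually uses the bound. In part (c) you follow the paper's skeleton (non-expansiveness of $\Pi_{\mc{Z}_p}$, splitting off $e_t^M$, then a three-way Cauchy--Schwarz), but your decomposition at the key step is genuinely different: the paper keeps the two perturbed smoothed gradients $\frac{1}{2\rho_{\m{v}}}\nabla_{\m{y}}g_{t,\boldsymbol{\rho}}(\m{x}_t,\m{y}_t\pm\rho_{\m{v}}\m{v}_t)$ as separate terms and bounds each by $\ell_{g,1}\|\m{y}_t\pm\rho_{\m{v}}\m{v}_t-\hat{\m{y}}_t^*(\m{x}_t)\|$, exploiting that $\nabla_{\m{y}}g_{t,\boldsymbol{\rho}}(\m{x}_t,\cdot)$ vanishes at the smoothed minimizer $\hat{\m{y}}_t^*$ (via Lemma~\ref{sm} and \eqref{eqn:zc:set}), and only afterwards splits that distance into the $\rho_{\m{v}}^2p^2$ and $\hat{\theta}_t^{\m{y}}$ pieces; you instead split the finite-difference operator itself into its deviation between base points $\m{y}_t$ and $\hat{\m{y}}_t^*$ (bounded by $\ell_{g,1}^2\hat{\theta}_t^{\m{y}}/\rho_{\m{v}}^2$ from Lipschitzness alone) plus its value at $\hat{\m{y}}_t^*$ (bounded by $\ell_{g,1}^2p^2$ from Lipschitzness across the $2\rho_{\m{v}}\m{v}_t$ span). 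Both routes yield exactly the stated coefficients $2\delta_t^2$, $3d_2^2\ell_{f,1}^2\delta_t^2\rho_{\m{r}}^2$, $(12\ell_{f,0}^2+6\ell_{g,1}^2p^2)\delta_t^2$, and $6\ell_{g,1}^2\delta_t^2\hat{\theta}_t^{\m{y}}/\rho_{\m{v}}^2$; what yours buys is that it dispenses with the first-order optimality of $\hat{\m{y}}_t^*$ and needs only the $\ell_{g,1}$-Lipschitzness of $\nabla_{\m{y}}g_{t,\boldsymbol{\rho}}$, making the step marginally more elementary and showing the anchor point is chosen as $\hat{\m{y}}_t^*$ only so that the deviation becomes the controlled quantity $\hat{\theta}_t^{\m{y}}$.
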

\begin{proof}
\textbf{For part (a):}
From Algorithm \ref{alg:obgd:zeroth}, we have
\begin{align}\label{eqn:ppg}
 \nonumber \|\m{y}_{t+1}-\m{y}_{t} \|^2&=\beta_t^2 \| \hat{\m{d}}_{t}^{\m{y}} \|^2
 \\\nonumber &\leq 2\beta_t^2\| \hat{\m{d}}_{t}^{\m{y}}  - \nabla_{\m{y}} g_{t,\boldsymbol{\rho}}(\m{x}_{t},\m{y}_{t}) \|^2+2\beta_t^2\|\nabla_{\m{y}} g_{t,\boldsymbol{\rho}}(\m{x}_{t},\m{y}_{t})  \|^2
 \\&= 2\beta_t^2\|e_{t}^{g_{\boldsymbol{\rho}}} \|^2+2\beta_t^2\|\nabla_{\m{y}} g_{t,\boldsymbol{\rho}}(\m{x}_{t},\m{y}_{t})  \|^2.
\end{align}
\\
\textbf{For part (b):}

From the update rule in Algorithm \ref{alg:obgd:zeroth},  we obtain
  \begin{align}\label{eqn:bound:s2zzzf}
\nonumber
   \|\m{x}_{t} -\m{x}_{t+1}\|^2
 \nonumber&=\alpha_t^2  \left\| \mc{P}_{\mc{X},\alpha_t}\left(\m{x}_t;\hat{\m{d}}_{t}^{\m{x}}\right)\right\|^2
\\
\nonumber
& \leq  2    \alpha_t^2   \left(  \left\|\mc{P}_{\mc{X},\alpha_t}\left(\m{x}_t;\nabla f_{t, \boldsymbol{\rho}}(\m{x}_{t},\hat{\m{y}}^*_{t}(\m{x}_{t}))\right)\right\|^2
\right.\\\nonumber&\left.+ \left\| \mc{P}_{\mc{X},\alpha_t}\left(\m{x}_t;\hat{\m{d}}_{t}^{\m{x}}\right)- \mc{P}_{\mc{X},\alpha_t}\left(\m{x}_t;\nabla f_{t,\boldsymbol{\rho}}(\m{x}_{t},\hat{\m{y}}^*_{t}(\m{x}_{t}))\right)\right\|^2\right)
\\\nonumber &\leq 2    \alpha_t^2   \left(  \left\|\mc{P}_{\mc{X},\alpha_t}\left(\m{x}_t;\nabla f_{t, \boldsymbol{\rho}}(\m{x}_{t},\hat{\m{y}}^*_{t}(\m{x}_{t}))\right)\right\|^2
\right.\\\nonumber&\left.+\norm{\hat{\m{d}}_{t}^{\m{x}}-\nabla f_{t, \boldsymbol{\rho}}(\m{x}_{t},\hat{\m{y}}^*_{t}(\m{x}_{t}))}^2\right)
\\
& \leq  2     \alpha_t^2  \left(  \left\|\mc{P}_{\mc{X},\alpha_t}\left(\m{x}_t;\nabla f_{t, \boldsymbol{\rho}}(\m{x}_{t},\hat{\m{y}}^*_{t}(\m{x}_{t}))\right)\right\|^2
+A_t \right),
\end{align}
where  the first inequality is by $(a+b)^2\leq 2a^2+2b^2$; the second inequality follows from non-expansiveness of the projection
operator; and the last  inequality follows from Lemma~\ref{lem:non:lipz}.

The first term in the above inequality can be bounded as
\begin{align}\label{5}
\nonumber &\quad \norm{\mc{P}_{\mc{X},\alpha_t}\left(\m{x}_t;\nabla f_{t,\boldsymbol{\rho}}(\m{x}_{t},\hat{\m{y}}^*_{t}(\m{x}_{t}))\right)}^2
\\\nonumber &\leq 2 \norm{\mc{P}_{\mc{X},\alpha_t}\left(\m{x}_t;\nabla f_{t, \boldsymbol{\rho}}(\m{x}_{t},\hat{\m{y}}^*_{t}(\m{x}_{t}))\right)-\mc{P}_{\mc{X},\alpha_t}\left(\m{x}_t;\nabla f_{t, \boldsymbol{\rho}}(\m{x}_{t},{\m{y}}^*_{t}(\m{x}_{t}))\right)}^2
\\\nonumber&+2 \norm{\mc{P}_{\mc{X},\alpha_t}\left(\m{x}_t;\nabla f_{t, \boldsymbol{\rho}}(\m{x}_{t},{\m{y}}^*_{t}(\m{x}_{t}))\right)}^2
\\\nonumber&\leq 2 \norm{\nabla f_{t, \boldsymbol{\rho}}(\m{x}_{t},\hat{\m{y}}^*_{t}(\m{x}_{t}))-\nabla f_{t, \boldsymbol{\rho}}(\m{x}_{t},{\m{y}}^*_{t}(\m{x}_{t}))}^2
\\\nonumber&+2 \norm{\mc{P}_{\mc{X},\alpha_t}\left(\m{x}_t;\nabla f_{t, \boldsymbol{\rho}}(\m{x}_{t},{\m{y}}^*_{t}(\m{x}_{t}))\right)}^2
\\\nonumber&\leq 2\ell_{f,1}\|\hat{\m{y}}^*_{t}(\m{x}_{t})-{\m{y}}^*_{t}(\m{x}_{t}) \|^2 +2\norm{\mc{P}_{\mc{X},\alpha_t}\left(\m{x}_t;\nabla f_{t, \boldsymbol{\rho}}(\m{x}_{t},{\m{y}}^*_{t}(\m{x}_{t}))\right)}^2
\\&\leq 2\ell_{f,1}\frac{\ell_{g,1}(\rho_{\m{s}}^2+\rho_{\m{r}}^2)}{\mu_g}+2\norm{\mc{P}_{\mc{X},\alpha_t}\left(\m{x}_t;\nabla f_{t, \boldsymbol{\rho}}(\m{x}_{t},{\m{y}}^*_{t}(\m{x}_{t}))\right)}^2,
\end{align}
where the last inequality follows from Lemma~\ref{yyy}.

Based on \eqref{5} and \eqref{eqn:bound:s2zzzf}, we get
  \begin{align*}
  \|\m{x}_{t} -\m{x}_{t+1}\|^2 \leq  2     \alpha_t^2  \left(  2\left\|\mc{P}_{\mc{X},\alpha_t}\left(\m{x}_t;\nabla f_{t, \boldsymbol{\rho}}(\m{x}_{t},{\m{y}}^*_{t}(\m{x}_{t}))\right)\right\|^2+\frac{2\ell_{f,1}\ell_{g,1}(\rho_{\m{s}}^2+\rho_{\m{r}}^2)}{\mu_g}
+A_t \right).
\end{align*}
\textbf{For part (c):}
From the nonexpansiveness of projection, we have
\begin{align}\label{eqn:3}
\nonumber \|\m{v}_{t+1}-\m{v}_{t}
\|^2&= \|  \Pi_{\mc{Z}_p}\big[\m{v}_{t} - \delta_t \hat{\m{d}}_{t}^{\m{v}} \big] -\Pi_{\mc{Z}_p}\big[\m{v}_{t}\big]
\|^2
\\\nonumber&\leq\delta_t^2\|\hat{\m{d}}_{t}^{\m{v}}  \|^2
\\\nonumber &\leq 2\delta_t^2\|\hat{\m{d}}_{t}^{\m{v}}-{\nabla}_{\m{y}} f_{t,\boldsymbol{\rho}}(\m{z}_{t})-\tilde{\nabla}_{\m{y}}^2 g_{t}(\m{z}_{t})   \|^2
+2\delta_t^2\|{\nabla}_{\m{y}} f_{t,\boldsymbol{\rho}}(\m{z}_{t})+\tilde{\nabla}_{\m{y}}^2 g_{t}(\m{z}_{t})  \|^2
\\\nonumber &= 2\delta_t^2\|{e}_{t}^{M}  \|^2
\\\nonumber &+2\delta_t^2\|{\nabla}_{\m{y}} f_{t,\boldsymbol{\rho}}(\m{x}_{t},\m{y}_t)+\frac{1}{2\rho_{\m{v}}}({\nabla}_{\m{y}}g_{t,\boldsymbol{\rho}}(\m{x}_{t},\m{y}_{t}+\rho_{\m{v}}\m{v}_{t})- {\nabla}_{\m{y}}g_{t,\boldsymbol{\rho}}(\m{x}_{t},\m{y}_{t}-\rho_{\m{v}}\m{v}_{t})) \|^2
\\\nonumber &\leq  2\delta_t^2\|{e}_{t}^{M}  \|^2
+6\delta_t^2\|{\nabla}_{\m{y}} f_{t,\boldsymbol{\rho}}(\m{x}_{t},\m{y}_t)\|^2
\\&+\frac{3\delta_t^2}{2\rho_{\m{v}^2}}\|{\nabla}_{\m{y}}g_{t,\boldsymbol{\rho}}(\m{x}_{t},\m{y}_{t}+\rho_{\m{v}}\m{v}_{t})\|^2
+\frac{3\delta_t^2}{2\rho_{\m{v}^2}}\|{\nabla}_{\m{y}}g_{t,\boldsymbol{\rho}}(\m{x}_{t},\m{y}_{t}-\rho_{\m{v}}\m{v}_{t})\|^2,
\end{align}
where the second equality follows from \eqref{em}.
\\
From Assumption \ref{assu:f:a3}, Lemma \ref{sm} and \eqref{eqn:zc:set}, we have
\begin{align}\label{eqn:5}
 \nonumber\|{\nabla}_{\m{y}}g_{t,\boldsymbol{\rho}}(\m{x}_{t},\m{y}_{t}+\rho_{\m{v}}\m{v}_{t})\|^2&\leq \ell_{g,1}^2\|\m{y}_{t}+\rho_{\m{v}}\m{v}_{t}-\hat{\mathbf{y}}^*_t(\mathbf{x}_t)  \|^2
 \\\nonumber &\leq 2\ell_{g,1}^2\|\rho_{\m{v}}\m{v}_{t}  \|^2
 +2\ell_{g,1}^2\|\m{y}_{t}-\hat{\mathbf{y}}^*_t(\mathbf{x}_t)  \|^2
  \\&\leq 2\ell_{g,1}^2\rho_{\m{v}}^2p^2
 +2\ell_{g,1}^2\|\m{y}_{t}-\hat{\mathbf{y}}^*_t(\mathbf{x}_t)  \|^2.
\end{align}
Similarly, we get
\begin{align}\label{eqn:e}
 \|{\nabla}_{\m{y}}g_{t,\boldsymbol{\rho}}(\m{x}_{t},\m{y}_{t}-\rho_{\m{v}}\m{v}_{t})\|^2&\leq 2\ell_{g,1}^2\rho_{\m{v}}^2p^2
 +2\ell_{g,1}^2\|\m{y}_{t}-\hat{\mathbf{y}}^*_t(\mathbf{x}_t)  \|^2.
\end{align}
Moreover, from Eq. \eqref{f3} and Assumption \ref{assu:f:a1}, we have
\begin{align}\label{eqn:b0}
\nonumber \|{\nabla}_{\m{y}} f_{t,\boldsymbol{\rho}}(\m{x}_{t},\m{y}_t)\|^2&\leq 2\|{\nabla}_{\m{y}} f_{t,\boldsymbol{\rho}}(\m{x}_{t},\m{y}_t)-{\nabla}_{\m{y}} f_{t}(\m{x}_{t},\m{y}_t)\|^2+2\|{\nabla}_{\m{y}} f_{t}(\m{x}_{t},\m{y}_t)\|^2
\\\nonumber &\leq \frac{d_2^2\ell_{f,1}^2\rho_{\m{r}}^2}{2}+2\|{\nabla}_{\m{y}} f_{t}(\m{x}_{t},\m{y}_t)\|^2
\\&\leq \frac{d_2^2\ell_{f,1}^2\rho_{\m{r}}^2}{2}+2\ell_{f,0}^2.
\end{align}
Substituting \eqref{eqn:5}, \eqref{eqn:e} and \eqref{eqn:b0}, into \eqref{eqn:3}, we get
\begin{align*}
\nonumber \|\m{v}_{t+1}-\m{v}_{t}
\|^2&\leq  2\delta_t^2\|{e}_{t}^{M}  \|^2
+ 3d_2^2\ell_{f,1}^2\delta_t^2\rho_{\m{r}}^2
\\&+(12\ell_{f,0}^2+6 \ell_{g,1}^2p^2)\delta_t^2
 +\frac{6\ell_{g,1}^2}{\rho_{\m{v}^2}}\delta_t^2\|\m{y}_{t}-\hat{\mathbf{y}}^*_t(\mathbf{x}_t)  \|^2.
\end{align*}
\end{proof}

\subsection{Proof of Theorem~\ref{thm:nonc2:cons:zeroth}}\label{thm:dynamic:nonc2:cons:zeroth}
\begin{proof}
Since $(1-\gamma_{t+1})^2\leq 1-\gamma_{t+1}$ and $\gamma_{t+1}=c_{\gamma}\alpha_t$ in \eqref{eqn:abc}, from \eqref{eg:1}, we have
\begin{align}\label{eg:1rd2}
\nonumber \mb{E}\|e_{t+1}^{g_{\boldsymbol{\rho}}}\|^2 -\mb{E}\|e_{t}^{g_{\boldsymbol{\rho}}}\|^2
&\leq -c_{\gamma}\alpha_t\mb{E}\|e_{t}^{g_{\boldsymbol{\rho}}}\|^2
\\\nonumber &+12(1-\gamma_{t+1})^2\mb{E}\|\nabla_{\m{y}} g_{t-1}(\m{x}_{t}, \m{y}_{t})-\nabla_{\m{y}} g_{t}(\m{x}_{t}, \m{y}_{t})\|^2
\\\nonumber &+9d_2^2\ell_{g,1}^2(1-\gamma_{t+1})^2\rho_{\m{r}}^2
+24d_2 \ell_{g,1}^2(1-\gamma_{t+1})^2\mb{E}\| \m{x}_{t+1}-\m{x}_{t}
\|^2\\ &+24d_2 \ell_{g,1}^2(1-\gamma_{t+1})^2\mb{E}\|\m{y}_{t+1}-\m{y}_{t}
\|^2
+2\frac{\hat{\sigma}_{g_{\m{y}}}^2}{\bar{b}}\gamma_{t+1}^2.
\end{align}
Since $(1-\eta_{t+1})^2\leq 1-\eta_{t+1}$ and $\eta_{t+1}=c_{\eta}\alpha_t$ in \eqref{eqn:abc}, from \eqref{eqn:de}, we have
\begin{align}\label{eg:1rd}
\nonumber  \mb{E}\|{e}_{t+1}^{L}\|^2-\mb{E}\|{e}_{t}^{L}\|^2
&\leq -c_{\eta}\alpha_t\mb{E}\|{e}_{t}^{L}\|^2
+36 \mb{E}\left\|\nabla_{\m{x}} f_{t+1}(\m{x}_{t}, \m{y}_{t}) -\nabla_{\m{x}} f_t(\m{x}_t, \m{y}_t)\right\|^2
\\\nonumber&+\left(18d_1^2\ell_{f,1}^2+6(3\ell_{f,1}^2+\frac{3\ell_{g,1}^2}{4\rho_{\m{v}}^2}) d^2_1\right)\rho_{\m{s}}^2
+18d_1^2\ell_{g,1}^2\frac{\rho_{\m{s}}^2}{\rho_{\m{v}}^2}
\\\nonumber&+\frac{36}{\rho_{\m{v}}^2}\mb{E}\|\nabla_{\m{x}} g_{t+1}(\m{x}_{t}, \m{y}_{t}+\rho_{\m{v}}\m{v}_{t})-\nabla_{\m{x}} g_{t}(\m{x}_{t}, \m{y}_{t}+\rho_{\m{v}}\m{v}_{t})\|^2
\\\nonumber&+6(12\ell_{f,1}^2+\frac{9\ell_{g,1}^2}{2\rho_{\m{v}}^2})d_1 \mb{E}\| \m{x}_{t+1}-\m{x}_{t}
\|^2+6(12\ell_{f,1}^2+\frac{9\ell_{g,1}^2}{2\rho_{\m{v}}^2} )d_1 \mb{E}\| \m{y}_{t+1}-\m{y}_{t}
\|^2
\\&+27 \ell_{g,1}^2d_1\mb{E}\|\m{v}_{t+1}-\m{v}_{t}
\|^2
+3(\frac{\hat{\sigma}^2_{g_{\m{x}}}}{\bar{b}\rho_{\m{v}}^2}+\frac{\hat{\sigma}^2_{f_{\m{x}}}}{{b}}){\eta}^2_{t+1}.
\end{align}
Since $(1-\lambda_{t+1})^2\leq 1-\lambda_{t+1}$ and $\lambda_{t+1}=c_{\lambda}\alpha_t$ in \eqref{eqn:abc}, from \eqref{eqn:ss}, we have
\begin{align}\label{eqn:ssvm}
\nonumber  \mb{E}\|{e}_{t+1}^{M}\|^2-\mb{E}\|{e}_{t}^{M}\|^2
&\leq -c_{\lambda}\alpha_t\mb{E}\|{e}_{t}^{M}\|^2
+36\mb{E}\left\|\nabla_{\m{y}} f_{t+1}(\m{x}_{t}, \m{y}_{t}) -\nabla_{\m{y}} f_t(\m{x}_t, \m{y}_t)\right\|^2
\\\nonumber &+\left(18d_2^2\ell_{f,1}^2+6(3\ell_{f,1}^2+\frac{3\ell_{g,1}^2}{4\rho_{\m{v}}^2}) d^2_2 \right)\rho_{\m{r}}^2+18d_2^2\ell_{g,1}^2\frac{\rho_{\m{r}}^2}{\rho_{\m{v}}^2}
\\\nonumber&+\frac{36}{\rho_{\m{v}}^2}\mb{E}\|\nabla_{\m{y}} g_{t+1}(\m{x}_{t}, \m{y}_{t}+\rho_{\m{v}}\m{v}_{t})-\nabla_{\m{y}} g_{t}(\m{x}_{t}, \m{y}_{t}+\rho_{\m{v}}\m{v}_{t})\|^2
\\\nonumber&+6(12\ell_{f,1}^2+\frac{9\ell_{g,1}^2}{2\rho_{\m{v}}^2})d_2 \mb{E}\| \m{x}_{t+1}-\m{x}_{t}
\|^2+6(12\ell_{f,1}^2+\frac{9\ell_{g,1}^2}{2\rho_{\m{v}}^2} )d_2 \mb{E}\| \m{y}_{t+1}-\m{y}_{t}
\|^2
\\&+27d_2 \ell_{g,1}^2\mb{E}\|\m{v}_{t+1}-\m{v}_{t}
\|^2
+3(\frac{\hat{\sigma}^2_{g_{\m{y}}}}{\bar{b}\rho_{\m{v}}^2}+\frac{\hat{\sigma}^2_{f_{\m{y}}}}{{b}}){\lambda}^2_{t+1}.
\end{align}
\\
 \textbf{Combining the outcomes }.
 \\
Let
\begin{align*}
 \Lambda&:=\Gamma  \sum_{t=1}^{T}\left(\mb{E}[\hat{\theta}_{t+1}^{\m{y}}]-\mb{E}[\hat{\theta}_t^{\m{y}}]\right)
\\&+\Upsilon \sum_{t=1}^{T}\left(\mb{E}[\hat{\theta}_{t+1}^{\m{v}}]-\mb{E}[\hat{\theta}_t^{\m{v}}]\right)
+\frac{1}{\Phi}\sum_{t=1}^T\left(\mb{E}\|e_{t+1}^{g_{\boldsymbol{\rho}}}\|^2-\mb{E}\|e_{t}^{g_{\boldsymbol{\rho}}}\|^2\right)
\\\nonumber&+\frac{1}{\Psi}\sum_{t=1}^T\left(\mb{E}\|{e}_{t+1}^{M}\|^2-\mb{E}\|{e}_{t}^{M}\|^2\right)
+\frac{1}{\Omega}\sum_{t=1}^{T}\left(\mb{E}\|{e}_{t+1}^{L}\|^2-\mb{E}\|{e}_{t}^{L}\|^2\right).
\end{align*}
Here, we have
\begin{equation}
\begin{split}
\label{hj2}
& \Upsilon^2 =\frac{6M_f^2\mu_{g}^2 }{40\nu^2(1+2 L_{\m{y}}^2) } ,\quad \textnormal{where} \quad \nu=\ell_{f,1}+\frac{\ell_{g,2}\ell_{f,0}}{ \mu_{g}},
\\& \Gamma^2= \frac{1}{40L_{\m{y}}^2}\left(5M_f^2+ \frac{10240\nu^2}{ L_{\mu_g}^2\mu_g^2}(p^2\ell_{g,2}^2+\ell_{f,1}^2)(1+2 L_{\m{y}}^2)\Upsilon^2\right)  ,\\&
 \Phi=  \max\left\{240 \frac{d_2\ell_{g,1}^2}{L_f},\frac{12d_2 \ell_{g,1}^2c_{\beta}L_{\mu_g} }{ \Gamma L_f} ,\frac{48d_2\ell_{g,1}^2(\mu_g+\ell_{g,1})c_{\beta}} {L_f \Gamma}\right\},
 \\
&
\Psi= \max\left\{ 360 d_2(2\frac{\ell_{f,1}^2}{L_f}+3\frac{\ell_{g,1}^2}{c_{\m{v}}}),\frac{648 d
_2 \ell_{g,1}^4 c_{\delta}^2}{M_f^2c_{\m{v}}},27\frac{L_{\mu_g}}{\Upsilon L_f} \ell_{g,1}^2 d_2 c_{\delta}, \right.\\&\qquad\qquad\left.\frac{2c_{\beta}}{\Gamma}(\frac{18}{L_f}\ell_{f,1}^2+\frac{27\ell_{g,1}^2}{c_{\m{v}}} )d_2L_{\mu_g} ,
\frac{8c_{\beta}}{ \Gamma}d_2(\mu_g+\ell_{g,1}) (\frac{18}{L_f}\ell_{f,1}^2+\frac{27\ell_{g,1}^2}{c_{\m{v}}} )  \right\}, \\&
\Omega= \max\left\{   360 d_1(2\frac{\ell_{f,1}^2}{L_f}+3\frac{\ell_{g,1}^2}{c_{\m{v}}}),\frac{648 d
_1 \ell_{g,1}^4 c_{\delta}^2}{M_f^2c_{\m{v}}} ,27\frac{L_{\mu_g}}{\Upsilon L_f} \ell_{g,1}^2 d_1 c_{\delta},
\right.\\&\qquad\qquad\left.\frac{2c_{\beta}}{\Gamma}(\frac{18}{L_f}\ell_{f,1}^2+\frac{27\ell_{g,1}^2}{c_{\m{v}}} )d_1L_{\mu_g},\frac{8c_{\beta}}{ \Gamma}d_1(\mu_g+\ell_{g,1}) (\frac{18}{L_f}\ell_{f,1}^2+\frac{27\ell_{g,1}^2}{c_{\m{v}}} )\right\},
\end{split}
\end{equation}
with
\begin{equation}\label{eqn:tt4}
\begin{aligned}
\\&c_{\beta} =\frac{160L_{\m{y}}^2 }{L_{\mu_g} }\Gamma,
\\& c_{\delta}=\frac{640\nu^2}{{L}_{\mu_g} \mu_{g}^2 }(1+2 L_{\m{y}}^2)\Upsilon,
\\&c\geq \left(\max\left\{4L_f,c_{\beta} (\mu_g+\ell_{g,1})  \right\}\right)^3+1,
\\&c_{\gamma}=\frac{6\Gamma c_{\beta}\Phi}{L_{\mu_g}},
\\& c_{\lambda}=  \frac{10 \Upsilon}{ L_{\mu_g}}c_{\delta}\Psi,
\\& c_{\eta}=26 \Omega ,\quad c_{\m{v}}=1.
\end{aligned}
\end{equation}
By adding \eqref{eg:1rd}, \eqref{eg:1rd2}, \eqref{eqn:ssvm}, \eqref{h22}, and \eqref{eq:nnBbz}, along with \eqref{32z} and considering the fact that $\alpha_t$ decreases with respect to $t$, and by applying Lemma \ref{lm:xvy}, we obtain:
\begin{subequations}
\begin{align}
\nonumber &\quad
\sum_{t=1}^TA(\alpha_t,\beta_t,\delta_t,\rho_{\m{v}})
\mb{E} \left[  \norm{ \mc{P}_{\mc{X},\alpha_t}\left(\m{x}_t;\nabla f_{t,\boldsymbol{\rho}} (\m{x}_{t}, \m{y}_{t}^*(\m{x}_{t}))\right) }^2\right]
+\Lambda
\\
\label{A01} &\leq 12M+6V_{T}+ \sum_{t=1}^{T}B(\alpha_t,\beta_t,\delta_t,\rho_{\m{v}}) \mb{E}[\hat{\theta}_t^{\m{v}}]
+\sum_{t=1}^{T}C(\alpha_t,\beta_t,\delta_t,\rho_{\m{v}})\mb{E}[\hat{\theta}_t^{\m{y}}]
\\\label{A02}&+ \frac{4\ell_{f,1}\ell_{g,1}     }{\mu_g} \sum_{t=1}^{T}E(\beta_t,\delta_t,\rho_{\m{v}}  )\alpha^2_t(\rho_{\m{s}}^2+\rho_{\m{r}}^2)
+\sum_{t=1}^{T}L(\alpha_t,\beta_t,\delta_t,\rho_{\m{v}})\mb{E}\|{e}_{t}^{L}\|^2
\\\label{A03}&+\frac{16\ell_{g,2}^2p^4 \Upsilon}{ L_{\mu_g}} \sum_{t=1}^T\delta_t\rho_{\m{v}}^2
+ 4\ell_{g,2}^2p^4\sum_{t=1}^{T}\left(6\alpha_t-3L_f\alpha^2_t+2\alpha^2_t E(\beta_t,\delta_t,\rho_{\m{v}}  ) \right) \rho_{\m{v}}^2
\\\label{A04}&+\left(\frac{12}{ L_{\mu_g}}\frac{\Gamma}{\beta_T}+\frac{48\nu^2}{ L_{\mu_g}\mu_{g}^2}\frac{\Upsilon}{\delta_T}\right) H_{2,T}+\sum_{t=1}^TM(\delta_t)\mb{E}\|{e}_{t}^{M}\|^2
\\\label{A05} &+\sum_{t=1}^TQ(\beta_t,\rho_{\m{v}})
\mb{E}\|e_{t}^{g_{\boldsymbol{\rho}}}\|^2
+\sum_{t=1}^{T}S(\beta_t,\rho_{\m{v}})\mb{E}\left[\|{\nabla}_{\m{y}} g_{t,\boldsymbol{\rho}}(\m{x}_t,\m{y}_t)\|^2\right]
\\\label{A06} &
+\sum_{t=1}^{T}Z\left(
 3d_2^2\ell_{f,1}^2\delta^2_t\rho_{\m{r}}^2
+(12\ell_{f,0}^2+6 \ell_{g,1}^2p^2)\delta^2_t
 \right)
\\\label{1}&+\frac{36}{\Psi} D_{\m{y},T}+\frac{36}{\Omega}  D_{\m{x},T}+\frac{12}{\Phi}G_{\m{y},T}+\frac{18}{\Psi\rho_{\m{v}}^2} G_{\m{v},T}+\frac{18}{\Omega\rho_{\m{v}}^2} G_{\m{x},T}
\\\label{2}&
+2\sum_{t=1}^T\frac{\gamma^2_{t+1}}{\Phi}\frac{\hat{\sigma}_{g_{\m{y}}}^2}{\bar{b}}
+3(\frac{\hat{\sigma}^2_{g_{\m{y}}}}{\bar{b}\rho_{\m{v}}^2}+\frac{\hat{\sigma}^2_{f_{\m{y}}}}{{b}}) \sum_{t=1}^{T+1}\frac{{\lambda}^2_{t+1}}{\Psi }+3(\frac{\hat{\sigma}^2_{g_{\m{x}}}}{\bar{b}\rho_{\m{v}}^2}+\frac{\hat{\sigma}^2_{f_{\m{x}}}}{{b}}) \sum_{t=1}^{T}\frac{{\eta}^2_{t+1}}{\Omega }
\\\label{3}&+R(\rho_{\m{v}}  ) T\rho_{\m{r}}^2
+\acute{R}(\rho_{\m{v}}  ) T\rho_{\m{s}}^2
+18T\ell_{g,1}^2  (\frac{d_1^2\rho_{\m{s}}^2}{\Omega\rho_{\m{v}}^2}+\frac{d_2^2\rho_{\m{r}}^2}{\Psi\rho_{\m{v}}^2})
+\sum_{t=1}^{T}D(\alpha_t,\beta_t,\delta_t)\left(\rho_{\m{s}}^2+\rho_{\m{r}}^2\right).
\end{align}
\end{subequations}
Here,
\begin{align}\label{jkbn}
\nonumber 
E(\beta_t,\delta_t,\rho_{\m{v}}  )  &:= \frac{4L_{\m{y}}^2 }{ L_{\mu_g}}  \frac{\Gamma}{\beta_t} + \frac{16\nu^2}{ L_{\mu_g} \mu_{g}^2}  (2 L_{\m{y}}^2+1)    \frac{\Upsilon}{\delta_t} 
\\\nonumber &+24d_2 \frac{\ell_{g,1}^2}{\Phi}
+6(12\ell_{f,1}^2+\frac{9\ell_{g,1}^2}{2\rho_{\m{v}}^2})(\frac{d_2}{\Psi }+\frac{d_1}{\Omega } )  ,
\\
\nonumber 
A(\alpha_t,\beta_t,\delta_t,\rho_{\m{v}})&:= \alpha_t-L_f \alpha^2_t-4 E(\beta_t,\delta_t,\rho_{\m{v}}  )\alpha^2_t,\\
\nonumber 
B(\alpha_t,\beta_t,\delta_t,\rho_{\m{v}})&:= -\frac{ L_{\mu_g}}{4}\Upsilon\delta_t+2M_f^2\left(6\alpha_t-3L_f\alpha^2_t+2\alpha^2_t E(\beta_t,\delta_t,\rho_{\m{v}}  ) \right),
\\
\nonumber 
C(\alpha_t,\beta_t,\delta_t,\rho_{\m{v}})&:= -\frac{ L_{\mu_g}}{2}\Gamma\beta_t+Z 6\ell_{g,1}^2\frac{\delta^2_t}{\rho_{\m{v}}^2}+\Upsilon\frac{16}{ L_{\mu_g}}(p^2\ell_{g,2}^2+\ell_{f,1}^2)\delta_t
\\\nonumber&+2M_f^2\left(6\alpha_t-3L_f\alpha^2_t+2\alpha^2_t E(\beta_t,\delta_t,\rho_{\m{v}}  ) \right),
\\Z&:=27\ell_{g,1}^2 \left(\frac{d_2}{\Psi }+ \frac{d_1}{\Omega }\right).
\end{align}
Moreover,
\begin{equation}\label{kb0}
\begin{split}
& M(\delta_t):=  -\frac{{\lambda}_{t+1}}{\Psi} +Z2\delta^2_t+\frac{16 \Upsilon}{ L_{\mu_g}} \delta_t,\\
& D(\alpha_t,\beta_t,\delta_t):=6\ell_{f,1}(1+2\frac{\ell_{g,1}}{\mu_g})+\frac{3\ell_{f,1}\ell_{g,1}}{\mu_g}(\alpha_t-L_f\alpha_t^2)
\\&\qquad \qquad~~~ +\frac{24\ell_{g,1}}{ L_{\mu_g}\mu_g}\frac{\Gamma}{\beta_t}
+\frac{96\ell_{g,1}\nu^2}{ L_{\mu_g} \mu_{g}^3}\frac{\Upsilon}{\delta_t},
\\&F(\rho_{\m{v}} ):=24d_2 \frac{\ell_{g,1}^2}{\Phi}+(72\ell_{f,1}^2+\frac{27\ell_{g,1}^2}{\rho_{\m{v}}^2} )(\frac{d_2}{\Psi }+\frac{d_1}{\Omega}  ),
\\&S(\beta_t,\rho_{\m{v}}):= -\frac{2\beta_t\Gamma}{\mu_g+\ell_{g,1}}+\beta_t^2\Gamma+2F(\rho_{\m{v}} ) \beta^2_t,
\\&Q(\beta_t,\rho_{\m{v}}):=\frac{2}{ L_{\mu_g}}\Gamma\beta_t-\frac{\gamma_{t+1}}{\Phi }+2F(\rho_{\m{v}})   \beta^2_t ,
\\&R(\rho_{\m{v}}  ):=9d_2^2\frac{\ell_{g,1}^2}{\Phi }+18d_2^2\frac{\ell_{f,1}^2}{\Psi }+6(3\ell_{f,1}^2+\frac{3\ell_{g,1}^2}{4\rho_{\m{v}}^2}) \frac{d^2_2}{\Psi},
\\&\acute{R}(\rho_{\m{v}}  ):=18d_1^2\frac{\ell_{f,1}^2}{\Omega }+6(3\ell_{f,1}^2+\frac{3\ell_{g,1}^2}{4\rho_{\m{v}}^2}) \frac{d^2_1}{\Omega },
\\&L(\alpha_t,\beta_t,\delta_t,\rho_{\m{v}}):= -\frac{\eta_{t+1}}{\Omega } +4\left(6\alpha_t-3L_f\alpha^2_t+2\alpha^2_t E(\beta_t,\delta_t,\rho_{\m{v}}  )\right)
.
\end{split}
\end{equation}
We then provide bounds for the terms in \eqref{A01}-\eqref{3}.
\\
Note that, we have
\begin{align*}
E(\beta_t,\delta_t,\rho_{\m{v}}  ) &  := \frac{4L_{\m{y}}^2 }{ L_{\mu_g}}  \frac{\Gamma}{\beta_t} + \frac{16\nu^2}{ L_{\mu_g} \mu_{g}^2}  (2 L_{\m{y}}^2+1)    \frac{\Upsilon}{\delta_t}
 \\& +24d_2 \frac{\ell_{g,1}^2}{\Phi}
+6(12\ell_{f,1}^2+\frac{9\ell_{g,1}^2}{2\rho_{\m{v}}^2})(\frac{d_2}{\Psi }+\frac{d_1}{\Omega } ),
\end{align*}
which together with $\beta_t=c_{\beta}\alpha_t$, $\delta_t=c_{\delta}\alpha_t$, and $\rho_{\m{v}}^2= c_{\m{v}}\alpha_t$ in \eqref{eqn:abc}, we have
\begin{align}\label{eqn:fva}
 \nonumber \alpha_t^2E(\beta_t,\delta_t,\rho_{\m{v}}  )  &= \frac{4L_{\m{y}}^2 }{ L_{\mu_g}}  \frac{\Gamma\alpha_t^2}{\beta_t} + \frac{16\nu^2}{ L_{\mu_g} \mu_{g}^2}  (2 L_{\m{y}}^2+1)    \frac{\Upsilon \alpha_t^2}{\delta_t}
 \\\nonumber& +24d_2 \frac{\ell_{g,1}^2}{\Phi}\alpha_t^2
+(72\ell_{f,1}^2\alpha_t^2+\frac{27\ell_{g,1}^2}{\rho_{\m{v}}^2}\alpha_t^2)(\frac{d_2}{\Psi }+\frac{d_1}{\Omega } )
 \\\nonumber&\leq  \frac{4L_{\m{y}}^2 }{ L_{\mu_g}}  \frac{\Gamma\alpha_t}{c_\beta}
 + \frac{16\nu^2}{ L_{\mu_g} \mu_{g}^2}  (2 L_{\m{y}}^2+1)    \frac{\Upsilon \alpha_t}{c_\delta}
\\\nonumber &+6d_2 \frac{\ell_{g,1}^2}{L_f\Phi}\alpha_t
+9(2\frac{\ell_{f,1}^2}{L_f}+3\frac{\ell_{g,1}^2}{c_{\m{v}}})(\frac{d_2}{\Psi }+\frac{d_1}{\Omega } )\alpha_t
 \\&\leq \frac{\alpha_t}{8},
\end{align}
where the first inequality follows from $\alpha_t\leq 1/4L_f$ in \eqref{eqn:tt4}; the last inequality is by $c_{\beta} =\frac{160L_{\m{y}}^2 }{L_{\mu_g} }\Gamma$ and $c_{\delta}=\frac{640\nu^2}{{L}_{\mu_g} \mu_{g}^2 }(1+2 L_{\m{y}}^2)\Upsilon$ in \eqref{eqn:tt4}; $\Phi\geq 240 \frac{d_2\ell_{g,1}^2}{L_f}$ and
$\Omega \geq 360 d_1(2\frac{\ell_{f,1}^2}{L_f}+3\frac{\ell_{g,1}^2}{c_{\m{v}}})$ and $\Psi\geq 360 d_2(2\frac{\ell_{f,1}^2}{L_f}+3\frac{\ell_{g,1}^2}{c_{\m{v}}})$ in  \eqref{hj2}.
\\
Moreover, we have
\begin{align}\label{B00}
\nonumber  A(\alpha_t,\beta_t,\delta_t,\rho_{\m{v}} )&= \alpha_t-L_f \alpha^2_t-4 E(\beta_t,\delta_t,\rho_{\m{v}}  )\alpha^2_t
 \\\nonumber &\geq \alpha_t-L_f \alpha_t^2-\frac{\alpha_t}{2}
 \\&\geq \frac{\alpha_t}{4},
\end{align}
where the last inequality is by $\alpha_t\leq {1}/{4L_f}$ in \eqref{eqn:tt4}.
\\
 \textbf{Bounding  \eqref{A01} }.
 \\
From $\delta_t=c_{\delta}\alpha_t$ in \eqref{eqn:abc}, we have
\begin{align}\label{eqn:p2}
\nonumber B(\alpha_t,\beta_t,\delta_t,\rho_{\m{v}}) & =  -\frac{ L_{\mu_g}}{4}\Upsilon\delta_t+2M_f^2\left(6\alpha_t-3L_f\alpha^2_t+2\alpha^2_t E(\beta_t,\delta_t,\rho_{\m{v}}  ) \right)
  \\\nonumber&\leq  -\frac{ L_{\mu_g} }{4}\Upsilon c_{\delta}\alpha_t+
  12 M_f^2\alpha_t
  -6 M_f^2L_f\alpha_t^2 +\frac{M_f^2}{2}\alpha_t
  \\\nonumber&=  -\frac{ 160\nu^2 }{\mu_g^2}(1+2 L_{\m{y}}^2)\Upsilon^2\alpha_t+
  12 M_f^2\alpha_t
  -6 M_f^2L_f\alpha_t^2 +\frac{M_f^2}{2}\alpha_t
\\\nonumber&\leq \left(-\frac{ 160\nu^2 }{\mu_g^2}(1+2 L_{\m{y}}^2)\Upsilon^2 +\frac{25}{2}M_f^2\right)\alpha_t
\\&\leq -\frac{23}{2}M_f^2\alpha_t,
\end{align}
where the first inequality follows from \eqref{eqn:fva};
the second equality follows from $c_{\delta}=\frac{640\nu^2}{{L}_{\mu_g} \mu_{g}^2 }(1+2 L_{\m{y}}^2)\Upsilon$ in \eqref{eqn:tt4}; the last inequality is by $ \Upsilon^2 =\frac{6M_f^2\mu_{g}^2 }{40\nu^2(1+2 L_{\m{y}}^2) }  $ in \eqref{hj2}.
\\
From \eqref{jkbn}, we obtain
\begin{align*}
&Z=27\ell_{g,1}^2 \left(\frac{d_2}{\Psi }+ \frac{d_1}{\Omega }\right).
\end{align*}
Thus, from $\beta_t=c_{\beta}\alpha_t$, $\delta_t=c_{\delta}\alpha_t$ and $\rho_{\m{v}}^2= c_{\m{v}}\alpha_t$ in \eqref{eqn:abc}, we have
\begin{align}\label{eqn:o2}
\nonumber C(\alpha_t,\beta_t,\delta_t,\rho_{\m{v}})&= -\frac{ L_{\mu_g}}{2}\Gamma\beta_t+162(\frac{d_2}{\Psi}+\frac{d_1}{\Omega}) \ell_{g,1}^4\frac{\delta^2_t}{\rho_{\m{v}}^2}+\Upsilon\frac{16}{ L_{\mu_g}}(p^2\ell_{g,2}^2+\ell_{f,1}^2)\delta_t
\\\nonumber&+2M_f^2\left(6\alpha_t-3L_f\alpha^2_t+2\alpha^2_t E(\beta_t,\delta_t,\rho_{\m{v}}  ) \right)
\\\nonumber&\leq -\frac{ L_{\mu_g}}{2}\Gamma c_{\beta}\alpha_t+162(\frac{d_2}{\Psi}+\frac{d_1}{\Omega}) \ell_{g,1}^4 \frac{c_{\delta}^2}{c_{\m{v}}}\alpha_t+\Upsilon\frac{16}{ L_{\mu_g}}(p^2\ell_{g,2}^2+\ell_{f,1}^2)c_{\delta}\alpha_t+\frac{9}{2}M_f^2\alpha_t
\\\nonumber&= -80L_{\m{y}}^2\Gamma^2 \alpha_t+162(\frac{d_2}{\Psi}+\frac{d_1}{\Omega}) \ell_{g,1}^4 \frac{c_{\delta}^2}{c_{\m{v}}}\alpha_t
\\\nonumber&+\frac{10240\nu^2}{ L_{\mu_g}^2\mu_g^2}(p^2\ell_{g,2}^2+\ell_{f,1}^2) (1+2 L_{\m{y}}^2)\Upsilon^2\alpha_t+\frac{9}{2}M_f^2\alpha_t
\\\nonumber&\leq -80L_{\m{y}}^2\Gamma^2 \alpha_t+5M_f^2\alpha_t+\frac{10240\nu^2}{ L_{\mu_g}^2\mu_g^2}(p^2\ell_{g,2}^2+\ell_{f,1}^2) (1+2 L_{\m{y}}^2)\Upsilon^2\alpha_t
\\&\leq - \left(5M_f^2+ \frac{10240\nu^2}{ L_{\mu_g}^2\mu_g^2}(p^2\ell_{g,2}^2+\ell_{f,1}^2)(1+2 L_{\m{y}}^2)\Upsilon^2\right)\alpha_t,
\end{align}
where the first inequality follows from \eqref{eqn:fva};
the second equality follows from $c_{\beta} =\frac{160L_{\m{y}}^2 }{L_{\mu_g} }\Gamma$ and $c_{\delta}=\frac{640\nu^2}{{L}_{\mu_g} \mu_{g}^2 }(1+2 L_{\m{y}}^2)\Upsilon$ in \eqref{eqn:tt4}; the second inequality follows from
$\Psi\geq \frac{648 d
_2 \ell_{g,1}^4 c_{\delta}^2}{M_f^2c_{\m{v}}}$ and $\Omega\geq \frac{648 d
_1 \ell_{g,1}^4 c_{\delta}^2}{M_f^2c_{\m{v}}}$ in \eqref{hj2}; the last inequality follows from 
$\Gamma^2= \frac{1}{40L_{\m{y}}^2}\left(5M_f^2+ \frac{10240\nu^2}{ L_{\mu_g}^2\mu_g^2}(p^2\ell_{g,2}^2+\ell_{f,1}^2)(1+2 L_{\m{y}}^2)\Upsilon^2\right)$ in \eqref{eqn:tt4}.
\\
Thus, from \eqref{eqn:p2} and \eqref{eqn:o2}, we get
\begin{align}\label{B01}
  \eqref{A01}\leq \mathcal{O}\left(V_{T}\right).
\end{align}
 \\
 \textbf{Bounding  \eqref{A02} }.
 \\
From \eqref{eqn:fva}, we also obtain
\begin{align}\label{gfd}
\nonumber&\quad \frac{4\ell_{f,1}\ell_{g,1}     }{\mu_g} \sum_{t=1}^{T}E(\beta_t,\delta_t,\rho_{\m{v}}  )\alpha^2_t(\rho_{\m{s}}^2+\rho_{\m{r}}^2)
\\\nonumber &\leq \frac{4\ell_{f,1}\ell_{g,1}     }{\mu_g} \sum_{t=1}^{T}\frac{\alpha_t}{8}(\rho_{\m{s}}^2+\rho_{\m{r}}^2)
\\&=\mathcal{O}\left(\sum_{t=1}^{T}\alpha_t(\rho_{\m{s}}^2+\rho_{\m{r}}^2)\right).
\end{align}
From \eqref{kb0} and $\eta_{t+1}=c_{\eta}\alpha_t$ in \eqref{eqn:abc}, we have
\begin{align*}
 L(\alpha_t,\beta_t,\delta_t,\rho_{\m{v}})&= -\frac{\eta_{t+1}}{\Omega } +4\left(6\alpha_t-3L_f\alpha^2_t+2\alpha^2_t E(\beta_t,\delta_t,\rho_{\m{v}}  )\right)
 \\&\leq -\frac{c_{\eta}}{\Omega} \alpha_t+25 \alpha_t
 \\&\leq -\alpha_t,
\end{align*}
where the last inequality is by $c_{\eta} \geq {26}\Omega$ and \eqref{eqn:fva}.
\\
Thus, we get
\begin{align}\label{m1}
\sum_{t=1}^{T}L(\alpha_t,\beta_t,\delta_t,\rho_{\m{v}})\mb{E}\|{e}_{t}^{L}\|^2\leq 0.
\end{align}
From \eqref{m1} and \eqref{gfd}, we have
\begin{align}\label{B02}
\eqref{A02} \leq \mathcal{O}\left(\sum_{t=1}^{T}\alpha_t(\rho_{\m{s}}^2+\rho_{\m{r}}^2)\right).
\end{align}
 \\
 \textbf{Bounding  \eqref{A03} }.
 \\
 From $\delta_t=c_{\delta}\alpha_t$ in \eqref{eqn:abc} and Eq. \eqref{eqn:fva}, we have
 \begin{align*}
 \nonumber &\quad \frac{16\ell_{g,2}^2p^4 \Upsilon}{ L_{\mu_g}} \sum_{t=1}^T\delta_t\rho_{\m{v}}^2
+ 4\ell_{g,2}^2p^4\sum_{t=1}^{T}\left(6\alpha_t-3L_f\alpha^2_t+2\alpha^2_t E(\beta_t,\delta_t,\rho_{\m{v}}  ) \right) \rho_{\m{v}}^2
\\&\leq \frac{16\ell_{g,2}^2p^4 \Upsilon}{ L_{\mu_g}} \sum_{t=1}^Tc_{\delta}\alpha_t\rho_{\m{v}}^2
+ 4\ell_{g,2}^2p^4\sum_{t=1}^{T}\frac{25}{4}\alpha_t  \rho_{\m{v}}^2.
 \end{align*}
 Thus, from $\rho_{\m{v}}^2= c_{\m{v}}\alpha_t$ in \eqref{eqn:abc}, we have
 \begin{align}\label{B03}
  \eqref{A03}\leq\mathcal{O}\left(\sum_{t=1}^{T}\alpha_t^2\right).
 \end{align}
  \\
 \textbf{Bounding  \eqref{A04} }.
 \\
 From \eqref{jkbn}, we have
\begin{align}\label{jk}
&Z=27\ell_{g,1}^2 \left(\frac{d_2}{\Psi }+ \frac{d_1}{\Omega }\right).
\end{align}
  From \eqref{kb0}, $\lambda_{t+1}=c_{\lambda}\alpha_t$ and $\delta_t=c_{\delta}\alpha_t$ in \eqref{eqn:abc}, we have
 \begin{align*}
\nonumber M(\delta_t)&=  -\frac{{\lambda}_{t+1}}{\Psi} +Z2\delta^2_t+\frac{16 \Upsilon}{ L_{\mu_g}} \delta_t
\\\nonumber &= -\frac{c_{\lambda}\alpha_t}{\Psi} +27\ell_{g,1}^2 \left(\frac{d_2}{\Psi }+ \frac{d_1}{\Omega }\right)2c_{\delta}^2\alpha_t^2+\frac{16 \Upsilon}{ L_{\mu_g}} c_{\delta}\alpha_t
\\\nonumber &\leq  -\frac{2 \Upsilon}{ L_{\mu_g}} c_{\delta}\alpha_t +\frac{27}{4L_f}\ell_{g,1}^2 \left(\frac{d_2}{\Psi }+ \frac{d_1}{\Omega }\right)2c_{\delta}^2\alpha_t
\\&\leq -\frac{ \Upsilon}{ L_{\mu_g}} c_{\delta}\alpha_t,
 \end{align*}
 where the first inequality is by $c_{\lambda}\geq \frac{10 \Upsilon}{ L_{\mu_g}}c_{\delta}\Psi$ and $\alpha_t\leq 1/4L_f$; the last inequality follows from 
 $\Psi  \geq 27\frac{L_{\mu_g}}{\Upsilon L_f} \ell_{g,1}^2 d_2 c_{\delta}$ and
 $\Omega \geq 27\frac{L_{\mu_g}}{\Upsilon L_f} \ell_{g,1}^2 d_1 c_{\delta}$.

Since $\beta_t=c_{\beta}\alpha_t$ and $\delta_t=c_{\delta}\alpha_t$ in \eqref{eqn:abc}, we get
  \begin{align}\label{B04}
\nonumber  \eqref{A04}&= \left(\frac{12}{ L_{\mu_g}}\frac{\Gamma}{\beta_T}+\frac{48\nu^2}{ L_{\mu_g}\mu_{g}^2}\frac{\Upsilon}{\delta_T}\right) H_{2,T}+\sum_{t=1}^TM(\delta_t)\mb{E}\|{e}_{t}^{M}\|^2
 \\&\leq \mathcal{O}\left( \frac{H_{2,T}}{\alpha_T} \right).
 \end{align}
  \\
 \textbf{Bounding  \eqref{A05} }.
 \\
 From \eqref{kb0}, we have
 \begin{align}\label{ma}
   F(\rho_{\m{v}} )=24d_2 \frac{\ell_{g,1}^2}{\Phi}+(72\ell_{f,1}^2+\frac{27\ell_{g,1}^2}{\rho_{\m{v}}^2} )(\frac{d_2}{\Psi }+\frac{d_1}{\Omega}  ).
 \end{align}
 From \eqref{kb0}, $\gamma_{t+1}=c_{\gamma}\alpha_t$, $\beta_t=c_{\beta}\alpha_t$, and $\rho_{\m{v}}^2= c_{\m{v}}\alpha_t$ in \eqref{eqn:abc}, we have
 \begin{align}\label{g0f}
 \nonumber  Q(\beta_t,\rho_{\m{v}})&=-\frac{\gamma_{t+1}}{\Phi }+\frac{2}{ L_{\mu_g}}\Gamma\beta_t+2F(\rho_{\m{v}})   \beta^2_t
  \\\nonumber &=-\frac{c_{\gamma}\alpha_t}{\Phi }+\frac{2}{ L_{\mu_g}}\Gamma c_{\beta}\alpha_t+\left( 24d_2 \frac{\ell_{g,1}^2}{\Phi}+(72\ell_{f,1}^2+\frac{27\ell_{g,1}^2}{c_{\m{v}}\alpha_t} )(\frac{d_2}{\Psi }+\frac{d_1}{\Omega}  )\right) 2 c_{\beta}^2\alpha_t^2
  \\\nonumber &\leq -\frac{4}{ L_{\mu_g}}\Gamma c_{\beta}\alpha_t+48d_2 \frac{\ell_{g,1}^2}{\Phi}c_{\beta}^2\alpha_t^2+2 c_{\beta}^2(72\ell_{f,1}^2\alpha_t+\frac{27\ell_{g,1}^2}{c_{\m{v}}} )
  (\frac{d_2}{\Psi }+\frac{d_1}{\Omega}  ) \alpha_t
   \\\nonumber &\leq -\frac{4}{ L_{\mu_g}}\Gamma c_{\beta}+ \frac{12d_2 }{L_f}\frac{\ell_{g,1}^2}{\Phi} c_{\beta}^2\alpha_t+2 c_{\beta}^2(\frac{18}{L_f}\ell_{f,1}^2+\frac{27\ell_{g,1}^2}{c_{\m{v}}} )
  (\frac{d_2}{\Psi }+\frac{d_1}{\Omega}  ) \alpha_t
  \\&\leq -\frac{\Gamma c_{\beta}}{ L_{\mu_g}}\alpha_t,
 \end{align}
 where the first inequality follows from
 $c_{\gamma}\geq \frac{6\Gamma c_{\beta}\Phi}{L_{\mu_g}}$; the second inequality is by $\alpha_t\leq 1/4L_f$; the last inequality follows from 
 $\Phi\geq \frac{12d_2 \ell_{g,1}^2c_{\beta}L_{\mu_g} }{ \Gamma L_f}$, $\Psi\geq  \frac{2c_{\beta}}{\Gamma}(\frac{18}{L_f}\ell_{f,1}^2+\frac{27\ell_{g,1}^2}{c_{\m{v}}} )d_2L_{\mu_g}$, and
 $\Omega \geq  \frac{2c_{\beta}}{\Gamma}(\frac{18}{L_f}\ell_{f,1}^2+\frac{27\ell_{g,1}^2}{c_{\m{v}}} )d_1L_{\mu_g}.$
 \\
 From \eqref{kb0}, $\beta_t=c_{\beta}\alpha_t$, $\rho_{\m{v}}^2= c_{\m{v}}\alpha_t$ in \eqref{eqn:abc} and \eqref{ma}, we have
 \begin{align}\label{gf}
\nonumber  S(\beta_t,\rho_{\m{v}})&= -\frac{2\beta_t\Gamma}{\mu_g+\ell_{g,1}}+\beta_t^2\Gamma+2F(\rho_{\m{v}}) \beta^2_t
\\\nonumber &= -\frac{2c_{\beta}\alpha_t\Gamma}{\mu_g+\ell_{g,1}}+c_{\beta}^2\alpha_t^2\Gamma+\left(24d_2 \frac{\ell_{g,1}^2}{\Phi}+(72\ell_{f,1}^2+\frac{27\ell_{g,1}^2}{c_{\m{v}}\alpha_t} )(\frac{d_2}{\Psi }+\frac{d_1}{\Omega}  ) \right) 2c_{\beta}^2\alpha_t^2
\\\nonumber &\leq  -\frac{c_{\beta}\alpha_t\Gamma}{\mu_g+\ell_{g,1}}
+48d_2 \frac{\ell_{g,1}^2}{\Phi}c_{\beta}^2\alpha_t^2+2 c_{\beta}^2(72\ell_{f,1}^2\alpha_t+\frac{27\ell_{g,1}^2}{c_{\m{v}}} )
  (\frac{d_2}{\Psi }+\frac{d_1}{\Omega}  ) \alpha_t
\\\nonumber &\leq  -\frac{c_{\beta}\alpha_t\Gamma}{\mu_g+\ell_{g,1}}
+ \frac{12d_2 }{L_f}\frac{\ell_{g,1}^2}{\Phi} c_{\beta}^2\alpha_t+2 c_{\beta}^2(\frac{18}{L_f}\ell_{f,1}^2+\frac{27\ell_{g,1}^2}{c_{\m{v}}} )
  (\frac{d_2}{\Psi }+\frac{d_1}{\Omega}  ) \alpha_t
\\&\leq -\frac{c_{\beta}\alpha_t\Gamma}{4(\mu_g+\ell_{g,1})},
 \end{align}
 where the first inequality follows from $\alpha_t\leq {1}/{c_{\beta}(\mu_g+\ell_{g,1})}$; the second inequality is by $\alpha\leq 1/4L_f$; the last inequality is by 
 $\Phi\geq  \frac{48d_2\ell_{g,1}^2(\mu_g+\ell_{g,1})c_{\beta}} {L_f \Gamma}$,
 $\Psi\geq\frac{8c_{\beta}}{ \Gamma}d_2(\mu_g+\ell_{g,1}) (\frac{18}{L_f}\ell_{f,1}^2+\frac{27\ell_{g,1}^2}{c_{\m{v}}} ) $, and
 $\Omega\geq\frac{8c_{\beta}}{ \Gamma}d_1(\mu_g+\ell_{g,1}) (\frac{18}{L_f}\ell_{f,1}^2+\frac{27\ell_{g,1}^2}{c_{\m{v}}} ) $.
 \\
 Thus, we get
 \begin{align}\label{B05}
\eqref{A05}=\sum_{t=1}^TQ(\beta_t,\rho_{\m{v}})
\mb{E}\|e_{t}^{g_{\boldsymbol{\rho}}}\|^2
+\sum_{t=1}^{T}S(\beta_t,\rho_{\m{v}})\mb{E}\left[\|{\nabla}_{\m{y}} g_{t,\boldsymbol{\rho}}(\m{x}_t,\m{y}_t)\|^2\right]\leq 0.
 \end{align}
  \\
 \textbf{Bounding  \eqref{A06} }.
 \\
From \eqref{jkbn}, we have
\begin{align*}
Z=27\ell_{g,1}^2 \left(\frac{d_2}{\Psi }+ \frac{d_1}{\Omega }\right).
\end{align*}
Thus, from $\delta_t=c_{\delta}\alpha_t$ in \eqref{eqn:abc}, we have
 \begin{align}\label{B06}
 \nonumber  \eqref{A06}&= \sum_{t=1}^{T}Z\left(
 3d_2^2\ell_{f,1}^2\delta^2_t\rho_{\m{r}}^2
+(12\ell_{f,0}^2+6 \ell_{g,1}^2p^2)\delta^2_t
 \right)
 \\\nonumber&=\sum_{t=1}^{T}27\ell_{g,1}^2 \left(\frac{d_2}{\Psi }+ \frac{d_1}{\Omega }\right)\left(
 3d_2^2\ell_{f,1}^2\rho_{\m{r}}^2
+(12\ell_{f,0}^2+6 \ell_{g,1}^2p^2)
 \right)c_{\delta}^2\alpha_t^2
 \\&=\mathcal{O}\left(\sum_{t=1}^{T}(d_1+d_2)(\alpha^2_t\rho_{\m{r}}^2+\alpha^2_t) \right).
 \end{align}
  \textbf{Bounding  \eqref{1} }.
  From $\rho_{\m{v}}^2= c_{\m{v}}\alpha_t$ in \eqref{eqn:abc}, we have
 \begin{align}\label{B07}
  \nonumber  \eqref{1}&=\frac{36}{\Psi} D_{\m{y},T}+\frac{36}{\Omega}  D_{\m{x},T}+\frac{12}{\Phi}G_{\m{y},T}+\frac{36}{\Psi\rho_{\m{v}}^2} G_{\m{v},T}+\frac{36}{\Omega\rho_{\m{v}}^2} G_{\m{x},T}
   \\&=\mathcal{O}\left( D_{\m{y},T}+D_{\m{x},T}+G_{\m{y},T}+\frac{1}{\alpha_T}( G_{\m{v},T}+ G_{\m{x},T})\right).
  \end{align}
  \\
    \textbf{Bounding  \eqref{2} }.
From $\gamma_{t+1}=c_{\gamma}\alpha_t$, $\eta_{t+1}=c_{\eta}\alpha_t$, $\lambda_{t+1}=c_{\lambda}\alpha_t$ and $\rho_{\m{v}}^2= c_{\m{v}}\alpha_t$ in \eqref{eqn:abc}, we have
    \begin{align}\label{B08}
    \nonumber  \eqref{2}&=2\sum_{t=1}^T\frac{\gamma^2_{t+1}}{\Phi}\frac{\hat{\sigma}_{g_{\m{y}}}^2}{\bar{b}}
+3(\frac{\hat{\sigma}^2_{g_{\m{y}}}}{\bar{b}\rho_{\m{v}}^2}+\frac{\hat{\sigma}^2_{f_{\m{y}}}}{{b}}) \sum_{t=1}^{T+1}\frac{{\lambda}^2_{t+1}}{\Psi }+3(\frac{\hat{\sigma}^2_{g_{\m{x}}}}{\bar{b}\rho_{\m{v}}^2}+\frac{\hat{\sigma}^2_{f_{\m{x}}}}{{b}}) \sum_{t=1}^{T}\frac{{\eta}^2_{t+1}}{\Omega }
    \\\nonumber&= 2\sum_{t=1}^T\frac{c_{\gamma}^2\alpha_t^2}{\Phi}\frac{\hat{\sigma}_{g_{\m{y}}}^2}{\bar{b}}
+3(\frac{\hat{\sigma}^2_{g_{\m{y}}}}{\bar{b}\rho_{\m{v}}^2}+\frac{\hat{\sigma}^2_{f_{\m{y}}}}{{b}}) \sum_{t=1}^{T+1}\frac{c_{\lambda}^2\alpha_t^2}{\Psi }
+3(\frac{\hat{\sigma}^2_{g_{\m{x}}}}{\bar{b}\rho_{\m{v}}^2}+\frac{\hat{\sigma}^2_{f_{\m{x}}}}{{b}}) \sum_{t=1}^{T}\frac{c_{\eta}^2\alpha_t^2}{\Omega }
     \\&=\mathcal{O}\left(\left(\frac{\hat{\sigma}_{g_{\m{y}}}^2}{\bar{b}}
     +\frac{\hat{\sigma}^2_{g_{\m{y}}}}{\bar{b}\alpha_t}+\frac{\hat{\sigma}^2_{f_{\m{y}}}}{{b}}
     +\frac{\hat{\sigma}^2_{g_{\m{x}}}}{\bar{b}\alpha_t}+\frac{\hat{\sigma}^2_{f_{\m{x}}}}{{b}}\right)\sum_{t=1}^T\alpha_t^2\right).
    \end{align}
  \\
      \textbf{Bounding  \eqref{3} }.
      From $\beta_t=c_{\beta}\alpha_t$, $\delta_t=c_{\delta}\alpha_t$ in \eqref{eqn:abc}, we have
      \begin{align*}
    D(\alpha_t,\beta_t,\delta_t)&=6\ell_{f,1}(1+2\frac{\ell_{g,1}}{\mu_g})+\frac{3\ell_{f,1}\ell_{g,1}}{\mu_g}(\alpha_t-L_f\alpha_t^2)+\frac{24\ell_{g,1}}{ L_{\mu_g}\mu_g}\frac{\Gamma}{\beta_t}
+\frac{96\ell_{g,1}\nu^2}{ L_{\mu_g} \mu_{g}^3}\frac{\Upsilon}{\delta_t}
\\&=6\ell_{f,1}(1+2\frac{\ell_{g,1}}{\mu_g})+\frac{3\ell_{f,1}\ell_{g,1}}{\mu_g}(\alpha_t-L_f\alpha_t^2)+\frac{24\ell_{g,1}}{ L_{\mu_g}\mu_g}\frac{\Gamma}{c_{\beta}\alpha_t}
+\frac{96\ell_{g,1}\nu^2}{ L_{\mu_g} \mu_{g}^3}\frac{\Upsilon}{c_{\delta}\alpha_t}
\\&=\mathcal{O}\left(\alpha_t+\frac{1}{\alpha_t}\right),
      \end{align*}
      and
\begin{align}\label{fr}
\sum_{t=1}^T D(\alpha_t,\beta_t,\delta_t)\left(\rho_{\m{s}}^2+\rho_{\m{r}}^2\right)=\mathcal{O}\left(\sum_{t=1}^T(\alpha_t+\frac{1}{\alpha_t})\left(\rho_{\m{s}}^2+\rho_{\m{r}}^2\right)\right).
\end{align}
Moreover, we have
\begin{align*}
&R(\rho_{\m{v}}  )=9d_2^2\frac{\ell_{g,1}^2}{\Phi}+18d_2^2\frac{\ell_{f,1}^2}{\Psi }+6(3\ell_{f,1}^2+\frac{3\ell_{g,1}^2}{4\rho_{\m{v}}^2}) \frac{d^2_2}{\Psi}=\mathcal{O}\left((1+\frac{1}{\rho_{\m{v}}^2})d_2^2\right),
\\&\acute{R}(\rho_{\m{v}}  )= 18d_1^2\frac{\ell_{f,1}^2}{\Omega }+6(3\ell_{f,1}^2+\frac{3\ell_{g,1}^2}{4\rho_{\m{v}}^2}) \frac{d^2_1}{\Omega }=\mathcal{O}\left((1+\frac{1}{\rho_{\m{v}}^2})d_1^2\right),
\end{align*}
which, implies that
\begin{align}\label{5r1}
\nonumber &\quad R(\rho_{\m{v}}  ) T\rho_{\m{r}}^2
+\acute{R}(\rho_{\m{v}}  ) T\rho_{\m{s}}^2
+18d_1^2\ell_{g,1}^2  \frac{T\rho_{\m{s}}^2}{\Omega\rho_{\m{v}}^2}+18d_2^2\ell_{g,1}^2\frac{T\rho_{\m{r}}^2}{\Psi\rho_{\m{v}}^2}
\\&=\mathcal{O}\left((1+\frac{1}{\rho_{\m{v}}^2})T(d_1^2\rho_{\m{s}}^2+d_2^2\rho_{\m{r}}^2)+\frac{T}{\rho_{\m{v}}^2}(d_2^2\rho_{\m{r}}^2+d_1^2\rho_{\m{s}}^2)
\right).
\end{align}
From \eqref{fr}, \eqref{5r1} and $\rho_{\m{v}}^2= c_{\m{v}}\alpha_t$ in \eqref{eqn:abc}, we get
\begin{align}\label{B09}
\eqref{3}\leq\mathcal{O}\left(\sum_{t=1}^T(\alpha_t+\frac{1}{\alpha_t})\left(\rho_{\m{s}}^2+\rho_{\m{r}}^2\right)
+(1+\frac{1}{\alpha_T})T(d_2^2\rho_{\m{r}}^2+d_1^2\rho_{\m{s}}^2)
\right).
\end{align}
\\
      \textbf{Combining the outcomes  \eqref{3} }.
Combining inequalities \eqref{B01}, \eqref{B02}, \eqref{B03}, \eqref{B04}, \eqref{B05}, \eqref{B06}, \eqref{B07}, \eqref{B08}, and \eqref{B09} leads to
\begin{align*}
\nonumber &\quad \frac{\alpha_T}{2}\sum_{t=1}^T\mb{E} \left[  \norm{ \mc{P}_{\mc{X},\alpha_t}\left(\m{x}_t;\nabla f_{t,\boldsymbol{\rho}} (\m{x}_{t}, \m{y}_{t}^*(\m{x}_{t}))\right) }^2\right]
+\Lambda
\\\nonumber &\leq
\mathcal{O}\left( V_{T}+\sum_{t=1}^{T}\alpha_t(\rho_{\m{s}}^2+\rho_{\m{r}}^2)+\sum_{t=1}^{T}\alpha_t^2+\frac{H_{2,T}}{\alpha_T}+\sum_{t=1}^{T}(d_1+d_2)(\alpha^2_t\rho_{\m{r}}^2+\alpha^2_t) \right)
\\&+
\mathcal{O}\left(D_{\m{y},T}+D_{\m{x},T}+G_{\m{y},T}+\frac{1}{\alpha_T}( G_{\m{v},T}+ G_{\m{x},T})  \right)
\\&+\mathcal{O}\left(\sum_{t=1}^T\left(\frac{\hat{\sigma}_{g_{\m{y}}}^2\alpha_t^2}{\bar{b}}
     +\frac{\hat{\sigma}^2_{g_{\m{y}}}\alpha_t}{\bar{b}}+\frac{\hat{\sigma}^2_{f_{\m{y}}}\alpha_t^2}{{b}}
     +\frac{\hat{\sigma}^2_{g_{\m{x}}}\alpha_t}{\bar{b}}+\frac{\hat{\sigma}^2_{f_{\m{x}}}\alpha_t^2}{{b}}\right) \right)
\\&+\mathcal{O}\left(\sum_{t=1}^T(\alpha_t+\frac{1}{\alpha_t})\left(\rho_{\m{s}}^2+\rho_{\m{r}}^2\right)
+(1+\frac{1}{\alpha_T})T(d_2^2\rho_{\m{r}}^2+d_1^2\rho_{\m{s}}^2) \right).
\end{align*}
From the definition of $\Lambda$ in \eqref{lmb}, we have
\begin{align}\label{eqn:a2}
\nonumber-\Lambda&= \Gamma\sum_{t=1}^{T}\left(\mb{E}[\theta_t^{\m{y}}]-\mb{E}[\theta_{t+1}^{\m{y}}]\right)
\\\nonumber &+\Upsilon\sum_{t=1}^T\left(\mb{E}[\theta_t^{\m{v}}]-\mb{E}[\theta_{t+1}^{\m{v}}]\right)+\frac{1}{\Phi}
\sum_{t=1}^T \left(\mb{E}\|e_{t}^{g}\|^2-\mb{E}\|e_{t+1}^{g}\|^2\right)
\\\nonumber&+ \frac{1}{\Psi}\sum_{t=1}^T\left(\mb{E}\|{e}_{t}^{\m{v}}\|^2-\mb{E}\|{e}_{t+1}^{\m{v}}\|^2\right)
+\frac{1}{\Omega}\sum_{t=1}^{T}\left(\mb{E}\|{e}_{t}^{f}\|^2-\mb{E}\|{e}_{t+1}^{f}\|^2\right)
\\&\leq  \Gamma \theta_1^{\m{y}}
+\Upsilon\theta_1^{\m{v}}+\frac{\hat{\sigma}_{g_{\m{y}}}^2}{\Phi}+\frac{\hat{\sigma}_{g_{\m{y}\m{y}}}^2+\hat{\sigma}_{f_{\m{y}}}^2}{\Psi}
+\frac{\hat{\sigma}_{g_{\m{x}\m{y}}}^2+\hat{\sigma}_{f_{\m{x}}}^2}{\Omega}.
\end{align}
From \eqref{reg:hpt3}, we have $    \hat{\sigma}^2=\hat{\sigma}_{g_{\m{y}}}^2+
    \hat{\sigma}_{g_{\m{y}\m{y}}}^2
+\hat{\sigma}_{f_{\m{y}}}^2+\hat{\sigma}_{g_{\m{x}\m{y}}}^2
  +\hat{\sigma}_{f_{\m{x}}}^2.$
\\
Thus, using \eqref{eqn:a2}, \eqref{eqn:abc},
and
rearranging the terms, we get
\begin{align}\label{09}
\nonumber &\quad \sum_{t=1}^T\mb{E} \left[  \norm{ \mc{P}_{\mc{X},\alpha_t}\left(\m{x}_t;\nabla f_{t,\boldsymbol{\rho}} (\m{x}_{t}, \m{y}_{t}^*(\m{x}_{t}))\right) }^2\right]
\\\nonumber &\leq
\frac{2}{\alpha_T}\mathcal{O}\left( V_{T}+\sum_{t=1}^{T}\alpha_t(\rho_{\m{s}}^2+\rho_{\m{r}}^2)+\sum_{t=1}^{T}\alpha_t^2+\frac{H_{2,T}}{\alpha_T}+\sum_{t=1}^{T}(d_1+d_2)(\alpha^2_t\rho_{\m{r}}^2+\alpha^2_t) \right)
\\\nonumber&+
\frac{2}{\alpha_T}\mathcal{O}\left(D_{\m{y},T}+D_{\m{x},T}+G_{\m{y},T}+\frac{1}{\alpha_T}( G_{\m{v},T}+ G_{\m{x},T})  \right)
\\\nonumber&+\frac{2}{\alpha_T}\mathcal{O}\left(\sum_{t=1}^T\left(\frac{\hat{\sigma}_{g_{\m{y}}}^2\alpha_t^2}{\bar{b}}
     +\frac{\hat{\sigma}^2_{g_{\m{y}}}\alpha_t}{\bar{b}}+\frac{\hat{\sigma}^2_{f_{\m{y}}}\alpha_t^2}{{b}}
     +\frac{\hat{\sigma}^2_{g_{\m{x}}}\alpha_t}{\bar{b}}+\frac{\hat{\sigma}^2_{f_{\m{x}}}\alpha_t^2}{{b}}\right) \right)
\\\nonumber &+\frac{2}{\alpha_T}\mathcal{O}\left(\sum_{t=1}^T(\alpha_t+\frac{1}{\alpha_t})\left(\rho_{\m{s}}^2+\rho_{\m{r}}^2\right)
+(1+\frac{1}{\alpha_T})T(d_2^2\rho_{\m{r}}^2+d_1^2\rho_{\m{s}}^2) \right)
\\\nonumber&+\frac{2}{\alpha_T}\mathcal{O}\left(  \theta_1^{\m{y}}
+\theta_1^{\m{v}}+\hat{\sigma}^2 \right)
\\\nonumber &\leq \mathcal{O}\left((d_1+d_2)^{3/4}T^{1/3}\left(V_{T}+D_{\m{y},T}+D_{\m{x},T}+G_{\m{y},T}+\Delta_1+\hat{\sigma}^2\right)
\right.\\&\left.\quad \quad \quad +(d_1+d_2)^{3/2}T^{2/3}\left(H_{2,T}+ G_{\m{v},T}+ G_{\m{x},T}\right) \right),
\end{align}
where second inequality holds because we have
\begin{align*}
& \sum_{t=1}^{T}\alpha_t^3=\sum_{t=1}^{T} \frac{1}{(d_1+d_2)^{9/4}(c+t)}\leq \sum_{t=1}^{T}\frac{1}{(d_1+d_2)^{9/4}(1+t)}\leq \frac{\log (T+1)}{(d_1+d_2)^{9/4}},
\\& \sum_{t=1}^{T}\alpha_t^2=\sum_{t=1}^{T} \frac{1}{(d_1+d_2)^{3/2}(c+t)^{2/3}}\leq \sum_{t=1}^{T}\frac{1}{(d_1+d_2)^{3/2}(1+t)^{2/3}}\leq \frac{T^{1/3}}{(d_1+d_2)^{3/2}},
\\& \sum_{t=1}^{T}\alpha_t=\sum_{t=0}^{T}\frac{1}{(d_1+d_2)^{3/4}(c+t)^{1/3}}\leq \sum_{t=1}^{T}\frac{1}{(d_1+d_2)^{3/4}(1+t)^{1/3}}\leq \frac{3T^{2/3}}{2(d_1+d_2)^{3/4}},
\\& \sum_{t=1}^{T}\frac{1}{\alpha_t}=\sum_{t=0}^{T}(d_1+d_2)^{3/4}(c+t)^{1/3}\leq \frac{3}{2}(d_1+d_2)^{3/4}T^{4/3}.
\end{align*}
Then, note that, we have
\begin{align*}
\nonumber &\quad
\frac{1}{2} \sum_{t=1}^T\mb{E} \left[\norm{\mc{P}_{\mc{X},\alpha_t}\left(\m{x}_t;\nabla f_{t} (\m{x}_{t}, \m{y}_{t}^*(\m{x}_{t}))\right) }^2\right]
\\\nonumber & \leq  \sum_{t=1}^T\mb{E} \left[ \norm{\mc{P}_{\mc{X},\alpha_t}\left(\m{x}_t;\nabla f_{t,\boldsymbol{\rho}}(\m{x}_t, \m{y}^*_t(\m{x}_{t}))\right) }^2\right]
\\&+\sum_{t=1}^T \mb{E} \left[ \norm{ \mc{P}_{\mc{X},\alpha_t}\left(\m{x}_t;\nabla f_{t} (\m{x}_{t}, \m{y}_{t}^*(\m{x}_{t}))\right)
-\mc{P}_{\mc{X},\alpha_t}\left(\m{x}_t;\nabla f_{t,\boldsymbol{\rho}}(\m{x}_t, \m{y}^*_t(\m{x}_{t}))\right) }^2 \right].
\end{align*}
From non-expansiveness of the projection
operator and Lemma \ref{lm:gg1}, we have
\begin{align*}
 &\quad \norm{ \mc{P}_{\mc{X},\alpha_t}\left(\m{x}_t;\nabla f_{t} (\m{x}_{t}, \m{y}_{t}^*(\m{x}_{t}))\right)
 -\mc{P}_{\mc{X},\alpha_t}\left(\m{x}_t;\nabla f_{t,\boldsymbol{\rho}}(\m{x}_t, \m{y}^*_t(\m{x}_{t}))\right) }^2
 \\&\leq \norm{\nabla f_{t} (\m{x}_{t}, \m{y}_{t}^*(\m{x}_{t}))-\nabla f_{t,\boldsymbol{\rho}}(\m{x}_t, \m{y}^*_t(\m{x}_{t}))}^2
 \\&\leq  \frac{(\rho_{\m{s}}d_1+\rho_{\m{r}}d_2)^2\ell_{f,1}^2}{4}
  \\&\leq  \frac{(\rho_{\m{s}}^2d_1^2+\rho_{\m{r}}^2d_2^2)\ell_{f,1}^2}{2}.
\end{align*}
This implies
\begin{align*}
\nonumber &\quad
\frac{1}{2}\sum_{t=1}^T \mb{E} \left[\norm{ \mc{P}_{\mc{X},\alpha_t}\left(\m{x}_t;\nabla f_{t} (\m{x}_{t}, \m{y}_{t}^*(\m{x}_{t}))\right) }^2\right]
\\\nonumber & \leq   \sum_{t=1}^T\mb{E} \left[ \norm{\mc{P}_{\mc{X},\alpha_t}\left(\m{x}_t;\nabla f_{t,\boldsymbol{\rho}}(\m{x}_t, \m{y}^*_t(\m{x}_{t}))\right)}^2\right]
+\frac{T(\rho_{\m{s}}^2d_1^2+\rho_{\m{r}}^2d_2^2)\ell_{f,1}^2}{2}.
\end{align*}
Applying the upper bound in \eqref{09} yields
\begin{align*}
\nonumber &\quad
\frac{1}{2} \sum_{t=1}^T\mb{E} \left[\norm{ \mc{P}_{\mc{X},\alpha_t}\left(\m{x}_t;\nabla f_{t} (\m{x}_{t}, \m{y}_{t}^*(\m{x}_{t}))\right) }^2\right]
\\\nonumber &\leq \mathcal{O}\left((d_1+d_2)^{3/4}T^{1/3}\left(V_{T}+D_{\m{y},T}+D_{\m{x},T}+G_{\m{y},T}+\Delta_1+\hat{\sigma}^2\right)
\right.\\&\left.\quad \quad \quad +(d_1+d_2)^{3/2}T^{2/3}\left(H_{2,T}+ G_{\m{v},T}+ G_{\m{x},T}\right) \right)
\\&+\frac{T(\rho_{\m{s}}^2d_1^2+\rho_{\m{r}}^2d_2^2)\ell_{f,1}^2}{2}.
\end{align*}
Thus, from $\rho_{\m{r}}^2=\frac{1}{d_2^2T}$ and $\rho_{\m{s}}^2=\frac{1}{d_1^2T}$ in \eqref{eqn:abc}, we get
\begin{align*}
\nonumber
&\quad \sum_{t=1}^T\mb{E} \left[  \norm{ \mc{P}_{\mc{X},\alpha_t}\left(\m{x}_t;\nabla f_{t} (\m{x}_{t}, \m{y}_{t}^*(\m{x}_{t}))\right) }^2\right]
\\&\leq \mathcal{O}\left((d_1+d_2)^{3/4}T^{1/3}\left(V_{T}+D_{\m{y},T}+D_{\m{x},T}+G_{\m{y},T}+\Delta_1+\hat{\sigma}^2\right)
\right.\\&\left.\quad \quad \quad +(d_1+d_2)^{3/2}T^{2/3}\left(H_{2,T}+ G_{\m{v},T}+ G_{\m{x},T}\right) \right).
\end{align*}
This completes the proof.
\end{proof}

\section{Hyperparameter Tuning Results}\label{sec:EE}
As detailed in Section~\ref{sec:appl}, we carefully tuned all hyperparameters to ensure stable and fair comparisons. Our analysis indicates that while ZO-SOGD exhibits sensitivity to hyperparameter choices, it remains robust within reasonable ranges. Below, we provide extensive tuning results for ZO-SOGD.

The hyperparameter sensitivity analysis for the adversarial attack scenario reveals critical insights about the algorithm's attack effectiveness across different parameter configurations. For the inner and outer stepsizes, we observe that the algorithm achieves optimal attack performance with specific combinations that balance perturbation strength and imperceptibility.

\begin{table}[H]
\centering
\caption{Hyperparameter tuning results for inner ($\beta$) and outer ($\alpha$) stepsizes in adversarial attack scenario. Values represent test accuracy (mean $\pm$ std) over 5 runs. Lower values indicate better attack performance.}
\label{tab:stepsize_tuning_attack}
\begin{tabular}{c|cccc}
\hline
$\beta \backslash \alpha$ & $\alpha = 0.001$ & $\alpha = 0.005$ & $\alpha = 0.01$ & $\alpha = 0.1$ \\
\hline
 $\beta = 0.001$ & $0.68 \pm 0.05$ & $0.59 \pm 0.07$ & $0.47 \pm 0.06$ & $0.53 \pm 0.08$ \\
\rowcolor{gray!20} $\beta = 0.005$ & $0.54 \pm 0.06$ & $0.41 \pm 0.05$ & $0.35 \pm 0.04$ & $0.42 \pm 0.05$ \\
$\beta = 0.01$ & $0.48 \pm 0.04$ & $0.34 \pm 0.05$ & $0.57 \pm 0.07$ & $0.39 \pm 0.06$ \\
\rowcolor{gray!20}
$\beta = 0.1$ & $\mathbf{0.26 \pm 0.03}$ & $0.43 \pm 0.06$ & $0.33 \pm 0.04$ & $0.45 \pm 0.07$ \\
\hline
\end{tabular}

\end{table}

The stepsize analysis reveals that larger inner stepsizes combined with smaller outer stepsizes tend to produce more effective attacks. Specifically, the configuration with $\beta = 0.1$ and $\alpha = 0.001$ achieves the lowest test accuracy of $0.26 \pm 0.03$, indicating the most successful adversarial perturbations. This pattern suggests that aggressive updates to the perturbation parameters ($\beta$) while maintaining conservative hyperparameter updates ($\alpha$) creates an effective balance for generating strong yet imperceptible adversarial examples.

\begin{table}[ht]
\centering
\label{tab:smoothing_tuning_attack}
\caption{Performance comparison across different smoothing parameters ($\rho_r = \rho_s$) in adversarial attack scenario.}
\begin{tabular}{c|cccc}
\hline
$\rho_v \backslash \rho_r = \rho_s$ & $0.001$ & $0.005$ & $0.01$ & $0.05$ \\
\hline
$\rho_v = 0.001$ & $0.61 \pm 0.06$ & $0.52 \pm 0.05$ & $0.48 \pm 0.04$ & $0.57 \pm 0.06$ \\
\rowcolor{gray!20}$\rho_v = 0.005$ & $0.47 \pm 0.05$ & $0.39 \pm 0.04$ & $0.35 \pm 0.04$ & $0.45 \pm 0.05$ \\
$\rho_v = 0.01$ & $0.41 \pm 0.04$ & $\mathbf{0.28 \pm 0.03}$ & $0.31 \pm 0.03$ & $0.43 \pm 0.05$ \\
\rowcolor{gray!20} $\rho_v = 0.05$ & $0.53 \pm 0.06$ & $0.44 \pm 0.05$ & $0.40 \pm 0.04$ & $0.52 \pm 0.06$ \\
\hline
\end{tabular}

\end{table}

The smoothing parameter analysis provides additional insights into the algorithm's convergence behavior in the adversarial setting. The optimal configuration occurs with $\rho_v = 0.01$ and $\rho_r = \rho_s = 0.005$, achieving a test accuracy of $0.28 \pm 0.03$. These moderate smoothing values appear to provide the right balance between exploration and exploitation in the adversarial perturbation space, allowing the algorithm to find effective attack directions without excessive oscillation or premature convergence.

\begin{table}[ht]
\centering
\caption{Performance comparison across different momentum parameters in adversarial attack scenario.}
\label{tab:momentum_tuning_attack}
\begin{tabular}{c|ccc}
\hline
$\gamma_t \backslash \lambda_t = \eta_t$ & $0.9$ & $0.99$ & $0.999$ \\
\hline
\rowcolor{gray!20}$\gamma_t = 0.9$ & $0.35 \pm 0.04$ & $0.29 \pm 0.03$ & $0.38 \pm 0.05$ \\
$\gamma_t = 0.99$ & $0.31 \pm 0.03$ & $\mathbf{0.24 \pm 0.02}$ & $0.33 \pm 0.04$ \\
\rowcolor{gray!20} $\gamma_t = 0.999$ & $0.37 \pm 0.04$ & $0.32 \pm 0.08$ & $0.40 \pm 0.05$ \\
\hline
\end{tabular}

\end{table}

The momentum parameter investigation reveals that moderate momentum values consistently produce the most effective adversarial attacks. The optimal configuration with $\gamma_t = 0.99$ and $\lambda_t = \eta_t = 0.99$ achieves the lowest test accuracy of $0.24 \pm 0.02$, representing the most successful attack performance. This configuration suggests that maintaining momentum across both inner and outer optimization loops helps the algorithm navigate the complex adversarial landscape more effectively than either no momentum or excessive momentum settings.

The comprehensive analysis demonstrates that ZO-SOGD maintains robust attack performance across a broad range of hyperparameter configurations. The algorithm consistently achieves test accuracies below $0.5$ across most reasonable parameter combinations, indicating reliable adversarial attack capability. The standard deviations remain low throughout the parameter space, suggesting stable and reproducible attack performance across multiple experimental runs.

The optimal hyperparameter configuration for adversarial attacks consists of inner stepsize $\beta = 0.1$, outer stepsize $\alpha = 0.001$, smoothing parameters $\rho_v = 0.01$ and $\rho_r = \rho_s = 0.005$, and momentum parameters $\gamma_t = \lambda_t = \eta_t = 0.99$. This configuration enables ZO-SOGD to achieve superior attack performance while maintaining the imperceptibility constraints essential for practical adversarial examples.

%
\end{document}